\documentclass[10pt]{article}
\usepackage{fullpage}
\usepackage{amsmath,amsfonts,bm}

\def\eqref#1{equation~\ref{#1}}
\def\ceil#1{\lceil #1 \rceil}
\def\floor#1{\lfloor #1 \rfloor}
\def\1{\bm{1}}

\def\bzero{{\bm{0}}}

\def\balpha{{\boldsymbol{\alpha}}}

\def\bdelta{{\boldsymbol{\delta}}}

\def\bzeta{{\boldsymbol{\zeta}}}

\def\blambda{{\boldsymbol{\lambda}}}

\def\bnu{{\boldsymbol{\nu}}}
\def\bxi{{\boldsymbol{\xi}}}
\def\bpi{{\boldsymbol{\pi}}}

\def\btau{{\boldsymbol{\tau}}}

\def\ba{{\bm{a}}}
\def\bb{{\bm{b}}}
\def\bc{{\bm{c}}}

\def\be{{\bm{e}}}
\def\bg{{\bm{g}}}
\def\bh{{\bm{h}}}

\def\bq{{\bm{q}}}
\def\br{{\bm{r}}}

\def\bu{{\bm{u}}}
\def\bv{{\bm{v}}}
\def\bw{{\bm{w}}}
\def\bx{{\bm{x}}}
\def\by{{\bm{y}}}
\def\bz{{\bm{z}}}

\def\bA{{\bm{A}}}
\def\bB{{\bm{B}}}

\def\bE{{\bm{E}}}

\def\bG{{\bm{G}}}

\def\bI{{\bm{I}}}
\def\bJ{{\bm{J}}}

\def\bV{{\bm{V}}}
\def\bW{{\bm{W}}}
\def\bX{{\bm{X}}}
\def\bY{{\bm{Y}}}
\def\bZ{{\bm{Z}}}

\DeclareMathAlphabet{\mathsfit}{\encodingdefault}{\sfdefault}{m}{sl}
\SetMathAlphabet{\mathsfit}{bold}{\encodingdefault}{\sfdefault}{bx}{n}

\def\gA{{\mathcal{A}}}
\def\gB{{\mathcal{B}}}
\def\gC{{\mathcal{C}}}
\def\gD{{\mathcal{D}}}
\def\gE{{\mathcal{E}}}

\def\gG{{\mathcal{G}}}
\def\gH{{\mathcal{H}}}
\def\gI{{\mathcal{I}}}
\def\gJ{{\mathcal{J}}}
\def\gK{{\mathcal{K}}}
\def\gL{{\mathcal{L}}}
\def\gM{{\mathcal{M}}}
\def\gN{{\mathcal{N}}}

\def\gP{{\mathcal{P}}}
\def\gQ{{\mathcal{Q}}}

\def\gS{{\mathcal{S}}}
\def\gT{{\mathcal{T}}}
\def\gU{{\mathcal{U}}}

\def\sS{{\mathbb{S}}}

\def\od{{\mathrm{d}}}

\def\oD{{\mathrm{D}}}

\def\sfM{{\mathsf{M}}}

\def\sfW{{\mathsf{W}}}

\newcommand{\E}{{\mathbb{E}}}
\newcommand{\N}{{\mathbb{N}}}

\newcommand{\R}{{\mathbb{R}}}

\newcommand{\KL}{\mathrm{KL}}

\newcommand{\vol}{\mathrm{vol}}

\DeclareMathOperator*{\argmin}{arg\,min}

\newcommand{\dist}{\mathrm{dist}}

\DeclareMathOperator{\ind}{\mathds{1}}

\newcommand{\supp}{\mathsf{supp}}

\newcommand{\odt}{{\mathrm{d}t}}

\newcommand{\Z}{{\mathbb{Z}}}

\newcommand{\nn}{{\mathcal{NN}}}
\newcommand{\ReLU}{{\mathsf{ReLU}}}

\newcommand{\TV}{{\mathsf{TV}}}

\newcommand{\Ord}{{\mathcal{O}}}

\newcommand{\tup}{{\overline{t}}}
\newcommand{\tdown}{{\underline{t}}}
\newcommand{\Rem}{{\mathrm{Rem}}}

\newcommand{\polylog}{{\mathrm{Polylog}}}

\usepackage[utf8]{inputenc} % allow utf-8 input
\usepackage[T1]{fontenc}    % use 8-bit T1 fonts
\usepackage{hyperref}       % hyperlinks
\usepackage{url}            % simple URL typesetting
\usepackage{booktabs}       % professional-quality tables
\usepackage{amsfonts}       % blackboard math symbols
\usepackage{nicefrac}       % compact symbols for 1/2, etc.
\usepackage{xcolor}         % colors

\definecolor{cite_color}{HTML}{114083}
\definecolor{link_color}{RGB}{153,0,0}  %  red
\definecolor{url_color}{RGB}{153,102,0}
\definecolor{emp_color}{RGB}{0,0,255}
\hypersetup{
	colorlinks,
	citecolor=cite_color,
	linkcolor=link_color,
	urlcolor=url_color}

\usepackage{amsmath}
\usepackage{amssymb}
\usepackage{amsthm}
\usepackage{amscd}
\usepackage{mathrsfs}
\usepackage{dsfont}
\usepackage{graphicx}
\usepackage{microtype, wasysym, comment,mathtools}
\usepackage{multicol}
\usepackage{multirow}
\usepackage{diagbox}
\usepackage{enumerate}
\usepackage{bm}
\usepackage{bbm}
\usepackage{natbib}
\usepackage{multirow}
\usepackage{fancyhdr}

\usepackage[capitalize,noabbrev]{cleveref}

\theoremstyle{plain}
\newtheorem{theorem}{Theorem}[section]
\newtheorem{proposition}[theorem]{Proposition}
\newtheorem{lemma}[theorem]{Lemma}
\newtheorem{corollary}[theorem]{Corollary}
\theoremstyle{definition}
\newtheorem{definition}[theorem]{Definition}
\newtheorem{assumption}{Assumption}
\theoremstyle{remark}

\crefname{section}{Sec.}{Sec.}
\crefname{theorem}{Theorem}{Theorems}
\crefname{lemma}{Lemma}{Lemmas}
\crefname{equation}{}{}
\crefname{proposition}{Proposition}{Propositions}
\crefname{claim}{Claim}{Claims}
\crefname{remark}{Remark}{Remarks}
\crefname{assumption}{Assumption}{Assumptions}
\crefname{definition}{Definition}{Definitions}
\crefname{appendix}{Appendix}{Appendices}
\crefname{algorithm}{Algorithm}{Algorithms}
\crefname{figure}{Figure}{Figures}
\crefname{table}{Table}{Tables}
\crefname{example}{Example}{Examples}

\usepackage{etoc}
\etocdepthtag.toc{mtchapter}
\etocsettagdepth{mtchapter}{subsection}
\etocsettagdepth{mtappendix}{none}

\title{Intrinsic Wasserstein Rates for Score-Based Generative Models on Smooth Manifolds}

\author{%
  Guoji Fu\textsuperscript{1,2}\thanks{Work partially completed during a research internship at the University of Tokyo} 
  \quad
  Taiji Suzuki\textsuperscript{2,3} 
  \quad
  Wee Sun Lee\textsuperscript{1} 
  \quad
  Atsushi Nitanda\textsuperscript{4,5} \\
  \vspace{0.1cm} \\
  \textsuperscript{1}{National University of Singapore} \\
  \textsuperscript{2}{The University of Tokyo} \\
  \textsuperscript{3}{RIKEN Center for Advanced Intelligence Project} \\
  \textsuperscript{4}{Agency for Science, Technology and Research (A$\star$STAR)} \\
  \textsuperscript{5}{Nanyang Technological University}
}

\begin{document}

\maketitle

\begin{abstract}
Score-based generative models are trained in high-dimensional ambient spaces, yet many data distributions are supported on low-dimensional nonlinear structures.  
We prove that, for compact $d$-dimensional smooth manifolds $\mathcal{M} \subset [0,1]^D$ with $d > 2$ and $\beta$-H\"older densities strictly positive on $\mathcal{M}$, a variance-preserving SGM estimator attains the intrinsic
Wasserstein--1 sample exponent $\tilde{\mathcal{O}}(D^{\mathcal{O}_\beta(d)}n^{-(\beta+1)/(d+2\beta)})$, up to logarithmic factors and explicit geometry and density factors.  
The full nonasymptotic bound explicitly isolates the finite-order geometry envelope, H\"older radius, density lower bound, ambient dependence, and finite-order correction terms.  
The analysis separates score approximation into a large-noise tangent-cell regime and a small-noise projection-centered, de-Gaussianized Laplace regime.  
The key technical ingredient is a ReLU implementation of nearest-projection coordinates via finite intrinsic anchors and Gauss--Newton iterations, rather than approximating the manifold projection as a black-box high-dimensional smooth map.  
Consequently, for families with polynomially controlled geometry and density lower bounds, the constructed score-network parameters have polynomial ambient dependence.
\end{abstract}
% \begin{abstract}
% Score-based generative models are trained in high-dimensional ambient spaces, yet many target laws are supported on low-dimensional nonlinear structures.  
% We prove that, for compact $d$-dimensional smooth manifolds $\gM\subset[0,1]^D$ with $d>2$ and $\beta$-H\"older densities bounded below on $\gM$, a regular variance-preserving SGM estimator attains the intrinsic Wasserstein--1 sample exponent $n^{-(\beta+1)/(d+2\beta)}$ up to logarithmic factors, explicit geometry and density factors, and a polynomial ambient prefactor
% $D^{\Ord_\beta(d)}$.  
% The full nonasymptotic bound displays the finite-order geometry envelope, H\"older radius, density lower bound, ambient dependence, and finite-order correction terms.  The analysis separates score approximation into a large-noise tangent-cell regime and a small-noise projection-centered, de-Gaussianized Laplace regime.  
% The key technical ingredient is a ReLU implementation of nearest-projection coordinates via finite intrinsic anchors and Gauss--Newton iterations, avoiding black-box high-order $D$-variate approximation of $\Pi_{\gM}$.  
% Consequently, for families with polynomially controlled geometry and density lower bounds, the constructed score-network parameters have polynomial ambient dependence.
% \end{abstract}

\section{Introduction}
Score-based Generative Models (SGMs), also known as diffusion models~\citep{sohl2015deep,song2019generative,song2021score,ho2020denoising,vahdat2021score}, are foundational to high-dimensional generative modeling across domains such as imagery~\citep{dhariwal2021diffusion}, video~\citep{ho2020denoising}, audio~\citep{ChenZZWNC21,KongPHZC21}, and biological structures~\citep{watson2023novo}. However, their statistical behavior remains insufficiently understood when the ambient dimension $D$ is large but the data exhibit an intrinsic dimension $d\ll D$. The manifold hypothesis~\citep{fefferman2016testing} postulates that the relevant sample complexity should scale with $d$, independent of the ambient nonparametric dimension.

This paper provides a rigorous quantitative foundation for the manifold hypothesis within the context of diffusion models.
Assuming the data distribution is supported on a compact $d$-dimensional smooth manifold $\gM \subset [0,1]^D$ and admits a $\beta$-H\"older density, bounded below with respect to the volume measure on $\gM$, we prove a nonasymptotic $\sfW_1$ guarantee for a variance-preserving SGM with explicit ambient, density, and finite-order geometric dependence.  In the fixed-geometry regime with $d>2$ and the canonical finite-order choice, the bound attains the intrinsic sample exponent $n^{-(\beta+1)/(d+2\beta)}$ up to logarithmic and displayed polynomial ambient factors.

A primary theoretical challenge arises because manifold-supported distributions are singular with respect to the Lebesgue measure in $\R^D$. At time zero, the ambient score is undefined. At any positive time, Gaussian smoothing induces a score characterized by normal localization toward the manifold tube and tangential transport along the manifold surface. These distinct phenomena require different stable approximation strategies. The large-noise regime interacts with tangent patches and can be analyzed using intrinsic cell counts $\sigma_t^{-d}$. Conversely, the small-noise regime necessitates a projection-centered ratio expansion because the score's denominator contains a singular normal Gaussian factor. This dichotomy motivates the two-regime analytical framework developed throughout this paper.

While ambient-space SGM theory offers critical insights, it does not directly resolve this question. Convergence analyses have evolved from requiring Log-Sobolev assumptions~\citep{lee2022convergence,wibisono2022convergence} to accommodating weaker moment or Fisher-information conditions~\citep{benton2024linear,conforti2023score}. Concurrently, score-estimation theories now yield sharp ambient rates for Besov and H\"older densities~\citep{oko2023diffusion,dou2024optimal,stephanovitch2025generalization}, with recent advances even relaxing the need for density lower bounds~\citep{zhang2024minimax,fu2025approximation}. Because these results are tailored to full-dimensional densities in $\R^D$, they do not by themselves explain why the distribution-learning rate should depend on $d$ for singular laws on nonlinear manifolds.

Other complementary perspectives study manifold detection and score singularities~\citep{pidstrigach2022score,lu2023mathematical,li2025scores} or broad $\sfW_p$ generalization through Wasserstein dimension~\citep{chakraborty2026generalization}. These address objectives different from smooth-density distribution learning, and therefore do not yield the dimension-explicit intrinsic $\sfW_1$ rate targeted here. 
% The most relevant manifold-aware statistical guarantees are summarized in \cref{tab:comparison}.
\begin{table*}
    % \vspace{-10pt}
    % \centering
    \caption{Comparison of recent theoretical results on diffusion models for manifold data.}
    \label{tab:comparison}
    \resizebox{\linewidth}{!}{
    \begin{tabular}{c|c|c|c|c}
        \hline
        Paper & Manifold assumption & Density regularity & Metric & Convergence rate \\
        \hline
        \multirow{2}{*}{\citep{oko2023diffusion}} & $\bX = \bA\bZ$ & Besov class density & \multirow{2}{*}{$\sfW_1$} & \multirow{2}{*}{$\tilde{\Ord}(n^{-\frac{\beta+1-\delta}{d+2\beta}})$} \\
        & $\supp(\bZ) = [-1, 1]^d$ & lower bounded density & & \\
        \hline
        \multirow{2}{*}{\citep{chen2023score}} & $\bX = \bA\bZ$ & \multirow{2}{*}{Lipschitz on-support score} & \multirow{2}{*}{TV} & \multirow{2}{*}{$\tilde{\Ord}\bigl(Dn^{-\frac{1-\delta}{d+5}})\bigr)$} \\
        & sub-Gaussian $P_{\bZ}$ & & & \\
        \hline
        \multirow{2}{*}{\citep{tang2024adaptivity}} & compact smooth manifold & H\"older class density & \multirow{2}{*}{$\sfW_1$} & \multirow{2}{*}{$\tilde{\Ord}(e^Dn^{-\frac{\beta+1}{d+2\beta}})$} \\
        & positive reach & lower bounded density & & \\
        \hline
        \multirow{2}{*}{\citep{azangulov2024convergence}} & compact smooth manifold & H\"older class density & \multirow{2}{*}{$\sfW_1$} & \multirow{2}{*}{$\tilde{\Ord}(\sqrt{D}n^{-\frac{\beta+1}{d+2\beta}+\delta})$} \\
        & positive reach & lower bounded density & & \\
        \hline
        \multirow{2}{*}{\citep{yakovlev2025generalization}} & $\bX = g^*(\bZ) + \bxi$ & \multirow{2}{*}{H\"older class $g^*$} & \multirow{2}{*}{TV} & \multirow{2}{*}{$\tilde{\Ord}(D^{6+\binom{d+\beta}{d}}n^{-\frac{\beta}{6\beta+2d}})$} \\
        & $\bZ \sim \gU([0, 1]^d)$ & & & \\
        \hline
        \multirow{2}{*}{\citep{zhang2026diffusion}} & compact smooth manifold & H\"older class density & \multirow{2}{*}{$\sfW_1$} & \multirow{2}{*}{$\tilde{\Ord}\!\left(D^{\Ord_\beta(d)}n^{-\frac{\beta+1}{d+2\beta}}\right)^\dagger$} \\
        & positive reach & lower bounded density & & \\
        \hline
        \multirow{2}{*}{This paper} & compact smooth manifold & H\"older class density & \multirow{2}{*}{$\sfW_1$} & \multirow{2}{*}{$\tilde{\Ord}\bigl(D^{\Ord_\beta(d)}n^{-\frac{\beta+1}{d+2\beta}}\bigr)^\ddagger$} \\
        & positive reach & lower bounded density & & \\
        \hline
    \end{tabular}
    }
    {\footnotesize
    $\dagger$ The displayed rate in \citep{zhang2026diffusion} has the intrinsic sample exponent, but the dimension-explicit constants depend on how the projection step is instantiated. $\ddagger$ Our formal theorem first yields the bound with the explicit cap term
    $n^{-1/2}\mathfrak r_n^{-d/(2q_\star)}$ and the two small-noise regime conditions; the table entry records the $d>2$ simplification after that term is absorbed.}
    % \vspace{-10pt}
\end{table*} 

\textbf{Comparison with prior manifold analyses.}
\cref{tab:comparison} highlights prevailing theoretical gaps. 
Linear-support models recover intrinsic rates but bypass curvature challenges~\citep{oko2023diffusion,chen2023score}. 
For nonlinear manifolds, earlier analyses either incur exponential-in-$D$ prefactors~\citep{tang2024adaptivity} or derive slower rates with high-dimensional approximation factors~\citep{yakovlev2025generalization}. 
A recent support-aware analysis obtains a sharper displayed ambient prefactor, roughly $\sqrt D$~\citep{azangulov2024convergence}, but it does so for a more structured pipeline: the estimator first builds sample-dependent local low-rank models of the support, its main score-learning target is weighted by $\sigma_t^2$, and the final rate carries an $n^\delta$ slack. 
Our theorem targets a different constructive problem: it certifies an explicit ambient score-network realization of the projection-centered small-noise score, without preprocessing the manifold into local affine subproblems, and attains the exact intrinsic sample exponent up to logarithms. 
See \cref{sec:app:comparison_azangulov} for the theorem-level comparison.

\textbf{Comparison with the concurrent work~\citep{zhang2026diffusion}.}
The dagger marks their displayed theorem rate in \cref{tab:comparison}. 
They obtain the same intrinsic sample exponent as our result, and the distinction is not the exponent but the dimension-explicit construction behind it.  
Their proof invokes a generic ambient smooth-map approximation step for the nearest projection on the reach tube.  Under a standard $D$-variate implementation, that step entails a high-order ambient approximation audit for $\Pi_{\gM}$.  
In contrast, our proof adopts a finite-anchor Gauss--Newton network in intrinsic coordinates, % together with a de-Gaussianized denominator lower bound, 
yielding a constructive polynomial ambient audit of the score network under the stated polynomial-geometry regime. See \cref{sec:app:comparison_zhang} for the detailed comparison.

\vspace{-5pt}
\subsection{Our contributions}
\vspace{-5pt}
In the displayed bounds below, $\Gamma_{\gM,q_{\rm geom}}$ denotes the finite-order geometry envelope defined in \cref{def:finite-order-geometry-constants}.  It records reach, volume, and finite-order atlas smoothness factors explicitly; these factors are not hidden by $\tilde{\Ord}(\cdot)$.
The primary contributions are:
\vspace{-10pt}
\begin{enumerate}[(i)]
    \setlength{\itemsep}{0pt}
    \item \emph{Intrinsic Wasserstein estimation under explicit geometric assumptions:} 
        Under~\cref{assump:manifold:exact,assump:manifold:density,assump:manifold:density-lower}, the polynomial control $D\vee p_{\min}^{-1}\le n^{a_0}$, and the explicit small-noise reach and denominator-resolution conditions, we prove
        $$
            \E\,\sfW_1(P_0,\widehat P_{t_0}^{\rm lb})
            \le
            \tilde{\Ord}\!\Bigl(
            \Gamma_{\gM,q_{\rm geom}}^C B_0^C
            p_{\min}^{-C}
            (D\vee\log n)^{\mathfrak p_{\beta,d}}
            \bigl[\mathfrak r_n+n^{-1/2}+n^{-1/2}\mathfrak r_n^{-d/(2q_\star)}\bigr]
            \Bigr),
        $$
        where $\mathfrak r_n=n^{-(\beta+1)/(d+2\beta)}$.  For $d>2$ and the canonical finite-order choice $q_{\rm geom}=q_{\rm opt}(\beta)$, the finite-order cap term is $\Ord(\mathfrak r_n)$, yielding the intrinsic exponent in the fixed-geometry sense.

    \item \emph{Explicit ambient dependence and admissible growth regime:} 
        The constructed score networks have leading width and sparsity $n^{d/(d+2\beta)}$ and an explicit ambient factor $(D\vee\log n)^{\mathfrak p_{\beta,d}}$, where $\mathfrak p_{\beta,d}$ is displayed in \cref{cor:w1-rate-density-lower} and satisfies $\mathfrak p_{\beta,d}=\Ord_\beta(d)$ after the $d>2$ finite-order choice.  For fixed geometry and fixed density lower bound, \cref{cor:fixed-geometry-ambient-growth} records a sufficient ambient-growth condition
        $D\vee\log n\le c\,n^{a_{\beta,d}}$ under which the two small-noise checks hold.  
        For growing-ambient families, the polynomial-in-$D$ interpretation additionally requires polynomial control of the finite-order geometry envelope and density factors.  \Cref{sec:app:poly-geometry-examples} verifies this geometry regime for affine spheres, flat product tori, and finite-order/reach-controlled closed graph-type embeddings. 

 \item \emph{Constructive two-regime score approximation:} 
        The large-noise branch uses tangent-cell approximation with intrinsic cell counts, while the small-noise branch uses projection-centered de-Gaussianized Laplace expansions, a finite-anchor Gauss--Newton network for the projection center, affine chart-coordinate extraction, and a polynomial denominator lower bound before reciprocal-network evaluation.  Both branches are verified on the forward tube and then passed through clipped score-estimation and slabwise $\sfW_1$ perturbation arguments.
\end{enumerate}
% \vspace{-5pt}

\textbf{Technical overview.}
The proof separates the Gaussian footprint into two geometric regimes.  At large noise, the smoothed score can be approximated by tangent-cell expansions with intrinsic cell counts, so the construction never needs an ambient grid.  At small noise, the proof recenters at the nearest point on $\gM$, factors out the common normal Gaussian term, and approximates only the tangential Laplace coefficients and the de-Gaussianized ratio.  The dimension-explicit step is a finite-anchor Gauss--Newton ReLU network for the projection center, followed by affine chart-coordinate extraction; this replaces a black-box high-order $D$-variate approximation of $\Pi_{\gM}$.  The detailed two-regime proof architecture is given in \cref{sec:score:approx:manifold}, with appendix audit maps in \cref{sec:app:two-regime-score-map,sec:app:proof-audit-map}.
% \vspace{-5pt}

\section{Background}
% \vspace{-5pt}
\subsection{Notation}\label{sec:notation}
% \vspace{-5pt}

Let $\R_+ \coloneqq \{x \in \R \mid x \geq 0\}$ denote the space of non-negative real numbers, $\N \coloneqq \{0, 1, 2, \dots\}$ the set of natural numbers, and $\N_+ \coloneqq \N \setminus \{0\}$. 
We let $\gN(\bzero, \sigma^2\bI_D)$ represent the Gaussian distribution on $\R^D$ with mean vector $\bzero$ and covariance matrix $\sigma^2\bI_D$.
For real numbers $a$ and $b$, we write $a \vee b \coloneqq \max\{a, b\}$ and $a \wedge b \coloneqq \min\{a, b\}$. 
We use standard asymptotic notation: $a = \Ord(b)$ implies $a \leq Cb$ for a universal constant $C > 0$, while $a = \Ord_\beta(b)$ indicates $a \le C_\beta b$, where the constant $C_\beta$ may depend on the fixed smoothness parameter $\beta$ but is independent of $d, D, n, \sigma_t, p_{\min}$, and any numerical geometry constants. The notation $\tilde{\Ord}(\cdot)$ hides logarithmic factors, and $\lesssim$ suppresses constants independent of the displayed variables in the surrounding statement. 
% Where logarithmic factors are displayed explicitly, we write $D \vee \log n$. 
Unadorned constants $C,c>0$ are universal and may change from line to line; named or subscripted constants may depend only on the quantities explicitly stated in the surrounding lemma, theorem, or convention.
For any $\beta > 0$, $\floor{\beta}$ denotes the largest integer strictly less than $\beta$ (equivalently, $\floor{\beta} = \lceil\beta\rceil - 1$). 
For a multi-index $\balpha = (\alpha_1, \dots, \alpha_k) \in \N^k$ and a vector $\bv = (v_1, \dots, v_k) \in \R^k$, we define $\bv^{\balpha} \coloneqq v_1^{\alpha_1}v_2^{\alpha_2} \cdots v_k^{\alpha_k}$ and $|\balpha| \coloneqq \alpha_1 + \dots + \alpha_k$. 

The main theorems use a finite geometry order chosen once and then held fixed:
\begin{equation}
    q_{\rm geom} \in \N,
    \qquad
    q_{\rm geom} \ge q_{\min}(\beta)
    :=
    \left\lceil\max\{\beta+4, 2(\beta\vee 1)+2\}\right\rceil .
    \label{eq:def:qgeom-main}
\end{equation}
This order controls the finite Taylor expansions used in the tangent-cell proof.
For the $d>2$ optimal $\sfW_1$ theorem we additionally choose it so that
$q_{\rm geom}-2\ge d(\beta+1)/(d-2)$.  Since $d$ is an integer and $d>2$
implies $d/(d-2)\le3$, this can be enforced uniformly by taking the canonical
choice
\begin{equation}
    q_{\rm geom} = q_{\rm opt}(\beta)
    :=
    \left\lceil\max\{q_{\min}(\beta), 2+3(\beta+1)\}\right\rceil.
    \label{eq:def:qgeom-opt}
\end{equation}
Equivalently, any fixed order bounded by a constant depending only on $\beta$
and satisfying the same lower bounds gives the same ambient exponent.  
Thus the finite order is $\Ord_\beta(1)$ and is never allowed to grow with $n$, $D$, or $d$.
In the appendix, our bookkeeping collects direct ambient powers into an exponent $\mathfrak{p}_{\beta,d}$ after this finite order has been fixed.  
Under the preceding $d>2$ choice, the explicit bookkeeping gives $\mathfrak{p}_{\beta,d}=\Ord_\beta(d)$.  
Within the main text, these rates are summarized as $(D\vee\log n)^{\Ord_\beta(d)}$ or, after hiding logarithms, as $D^{\Ord_\beta(d)}$.
We also fix the target $\sfW_1$ sample rate and the large-noise accuracy-cap exponent:
\begin{equation}
    \mathfrak r_n:=n^{-(\beta+1)/(d+2\beta)},
    \qquad
    q_\star:=q_{\rm geom}-2 .
    \label{eq:def-main-rate}
\end{equation}
The symbol $\mathfrak r_n$ is reserved for this final sample rate.  
Uppercase $R$-symbols are used only for local radii, cutoffs, or chartwise integral labels.
Additional proof-bookkeeping conventions restricted to the appendix are collected in \cref{sec:app:notation-conventions}.

\textbf{Wasserstein distance.} 
For $P, Q$ two probability distributions on $\R^D$, their Wasserstein-$1$ distance is % defined as
% \vspace{-2pt}
\begin{equation*}
    \sfW_1(P, Q) 
    \coloneqq
    \inf_{\gamma \in \Gamma(P, Q)}
    \int
      \|\bx - \by\|_2
    \od\gamma(\bx, \by), 
\end{equation*}
and $\Gamma(P, Q)$ is the set of all couplings of $P$ and $Q$.
We use $\sfW_1$ as it remains meaningful for singular or disjoint manifold-supported measures and respects the ambient geometry of the sampling space.
% \vspace{-2pt}

% Following the usual approximation analysis with neural networks~\citep{yarotsky2017error,schmidt2020nonparametric,suzuki2019adaptivity,oko2023diffusion}, we consider the hypothesis class $\nn$ in score matching as a class of deep neural network with the ReLU activation $\ReLU(x) = \max\{0, x\}$ (operated element-wise for a vector)~\citep{nair2010rectified} with a sparsity constraint (on the number of non-zero parameters. 
% The score network is a function from $(\bx, t) \in \R^D \times \R_+$ to $\by \in \R^D$. 
\textbf{Neural network class.} 
Following standard neural network approximation theory~\citep{yarotsky2017error,schmidt2020nonparametric}, we model the score function using deep ReLU neural networks with explicit constraints on depth, width, parameter sparsity, and magnitude. 
Specifically, we define the score network $s(\bx, t)$ as a function mapping the concatenated input $(\bx, t) \in \R^D \times \R_+$ to the score vector in $\R^D$.
\begin{definition}[Sparse ReLU neural network]
    Let $\rho(u)=\max\{u,0\}$ act coordinatewise. 
    For a width vector $\bW=(W_0,W_1,\dots,W_L)$, the class $\nn(L,\bW,S,B)$ consists of all maps $f: \R^{W_0} \to \R^{W_L}$ obtained from the recursion $\bz_0=\bz, \bz_\ell=\rho\!\bigl(\bA^{(\ell)}\bz_{\ell-1}+\bb^{(\ell)}\bigr), 1 \le \ell \le L-1, f(\bz)=\bA^{(L)}\bz_{L-1}+\bb^{(L)}$, where $\bA^{(\ell)}\in\R^{W_\ell\times W_{\ell-1}}$ and
    $\bb^{(\ell)}\in\R^{W_\ell}$ for $\ell=1,\dots,L$, and $\sum_{\ell=1}^L\bigl(\|\bA^{(\ell)}\|_0+\|\bb^{(\ell)}\|_0\bigr) \le S, \max_{1 \le \ell \le L}\bigl(\|\bA^{(\ell)}\|_\infty \vee \|\bb^{(\ell)}\|_\infty\bigr) \le B$.
    For score networks, we take $W_0=D+1$ and $W_L=D$.
\end{definition}

\subsection{Score-based Generative Models (SGMs)}
% \vspace{-5pt}

\textbf{Forward process.}
As a forward process $(\bX_t)_{0 \leq t \leq T}$ in $\R^D$, we consider the Variance Preserving (VP) process: 
\begin{equation}
    \od\bX_t = -\alpha(t)\bX_t \odt + \sqrt{2\alpha(t)}\od\bB_t, 
    \quad (0 \leq t \leq T)
    \quad \bX_0 \sim P_0, 
    \label{eq:vp:forward}
\end{equation}
where $(\bB_t)_{0 \leq t \leq T}$ denotes an independent $D$-dimensional standard Brownian motion and we have that $\bX_t|\bX_0 \sim \gN(m_t\bX_0, \sigma_t^2\bI_D)$, where $m_t = \exp(-\int_0^t\alpha(\tau)\od\tau), \sigma_t^2 =1-\exp(-2\int_0^t\alpha(\tau)\od\tau)$. 
% Notably, setting $\alpha(t) \equiv 1$, \cref{eq:vp:forward} recovers the standard Ornstein–Uhlenbeck (OU) process: 
% \vspace{-2pt}
% \begin{equation}
%     \od\bX_t = -\bX_t\odt + \sqrt{2}\od\bB_t, 
%     \quad
%     \bX_0 \sim P_0.
%     \label{eq:vp:ou}
%     % \vspace{-5pt}
% \end{equation}

\textbf{Reverse process.}
The VP process~\cref{eq:vp:forward} has a reverse process $(\bY_t)_{0 \leq t \leq T}$, where $\bY_t = \bX_{T-t}$ satisfies 
\begin{align*} 
    &
    \od\bY_t \!=\! \alpha(T \!-\! t)(\bY_t +\! 2\nabla \log p_{T-t}(\bY_t))\odt \!+\! \sqrt{2\alpha(T \!-\! t)}\od\bB_t', 
    \quad
    (0 \leq t \leq T)
    \qquad
    \bY_0 \sim P_T, 
\end{align*}
where $(\bB_t')_{0 \leq t \leq T}$ is another independent $D$-dimensional standard Brownian motion and $\nabla\log p_t(\cdot)$ is called the score function. 
However, the noise distribution $P_T$ and score function $\nabla\log p_t(\cdot)$ are unknown.
It has been shown that the VP process converges exponentially to the standard Gaussian distribution $\gN(\bzero, \bI_D)$~\citep{chen2023sampling} and for sufficiently large $T$, we can replace $P_T$ by $\gN(\bzero, \bI_D)$. 

\textbf{Score matching loss.}
To estimate the true score function $s^*(\bx, t) \coloneqq \nabla \log p_t(\bx)$, we learn a score estimator $s: \R^D \times [t_0, T] \to \R^D$ through minimizing the score matching loss:
\begin{equation}
    \gL_{[t_0,T]}(s) \coloneqq 
    \int_{t_0}^T \E_{\bX_t}[\|s(\bX_t, t) - \nabla \log p_t(\bX_t)\|_2^2] \odt. \label{eq:loss:sm}
\end{equation}
% \vspace{-5pt}
Because this exact objective is computationally inaccessible, we optimize the equivalent \textit{denoising score matching} loss $\E_{\bX_0}[\ell(s, \bX_0)]$ \citep{hyvarinen2005estimation,vincent2011connection}, where:
% \vspace{-2pt}
\begin{align*}
    \ell(s, \bX_0) 
    \!\coloneqq\!\!
    \int_{t_0}^T\!\!
      \E_{\bX_t|\bX_0}[\|s(\bX_t, t) \!-\! \nabla\log p_t(\bX_t|\bX_0)\|_2^2]
    \odt. 
    \!
    % \label{eq:loss:dsm}
    % \vspace{-5pt}
\end{align*}
It is well established that $\gL_{[t_0,T]}(s) - \gL_{[t_0,T]}(s^*) = \E_{\bX_0}[\ell(s, \bX_0) - \ell(s^*, \bX_0)]$. 

\textbf{Empirical risk minimization (ERM).}
Given $n$ i.i.d training samples $\bx^{(1)}, \dots, \bx^{(n)} \sim P_0$, we learn the score estimator $\widehat{s}$ over the hypothesis class $\nn$ via the ERM:  
% \vspace{-2pt}
\begin{equation}
    \widehat{s} \in \argmin_{s \in \nn}\frac{1}{n}\sum_{i=1}^n\ell(s, \bx^{(i)}). 
    \label{eq:erm}
\end{equation}
% \vspace{-5pt}

\textbf{Early stopping.}
As $t \to 0$, the underlying score function $\nabla \log p_t(\cdot)$ diverges, causing the reverse SDE to become highly ill-conditioned. 
To ensure numerical stability, we truncate the reverse dynamics, terminating the simulation at a small positive time threshold $t_0 > 0$.

\textbf{Generating new samples.}
Using the trained score estimator $\widehat{s}$ across the interval $[t_0, T]$, we construct an approximate reverse process $(\widehat{\bY}_t)_{0 \leq t \leq T-t_0}$ and draw new samples $\bX' \coloneqq \widehat{\bY}_{T-t_0} \sim \widehat{P}_{t_0}$ via:
\begin{equation*}
% \begin{aligned}
    % & 
    \od\widehat{\bY}_t \!=\! \alpha(T \!-\! t)(\widehat{\bY}_t + 2\widehat{s}(\widehat{\bY}_t, T \!-\! t))\odt +\! \sqrt{2\alpha(T\!-\!t)}\od\bB_t, 
    \quad (0 \leq t \leq T-t_0)
    % \\ 
    % &
    \qquad
    \widehat{\bY}_0 \sim \mathcal{N}(\bzero, \bI_D). 
% \end{aligned}
% \vspace{-5pt}
\end{equation*}

% \vspace{-5pt}
\subsection{Manifold Hypothesis}
% \vspace{-5pt}

\begin{assumption}[Compact smooth manifold]\label{assump:manifold:exact}
    The predictor $\bX_0 \sim P_0$ is supported on $\gM \subset [0, 1]^D$, where $\gM$ is a compact, $C^\infty$, $d$-dimensional Riemannian manifold without boundary isometrically embedded in $\R^D$, with $1 \le d \ll D$ and reach $\kappa > 0$ and Riemannian volume $S_{\gM}:=\vol_{\gM}(\gM)$.
\end{assumption}

Throughout the approximation bounds below, $d$ is treated as the intrinsic dimension and its dependence is kept separate from the ambient-dimension dependence on $D$.

The reach controls curvature and self-avoidance of the normal tube, while $S_{\gM}$ controls finite chart counts.  
Their precise contribution to constants is recorded in the finite-order geometry envelope in \cref{def:finite-order-geometry-constants}.  
Concrete families satisfying the resulting polynomial-geometry regime, including affine spheres, flat product tori, and closed graph-type embeddings with finite-order derivative and reach control, are given in \cref{sec:app:poly-geometry-examples}.

\begin{definition}[H\"older functions on a Riemannian manifold]\label{def:manifold:Holder-smooth}
    Fix a finite $C^{\ceil{\beta}}$ atlas $\{(U_i, \phi_i)\}_{i \in \gA}$ on the compact manifold $\gM$, with coordinate domains $\phi_i(U_i) \subset \R^d$, and use this atlas in the definition below.
    For $\beta>0, B_0 \geq 1$, a function $f: \gM \to \R$ belongs to $\gH^\beta(\gM, B_0)$ if, for every chart $(U_i, \phi_i)$ in this fixed atlas, with $f^{(i)} := f \circ \phi_i^{-1}: \phi_i(U_i) \to \R$, the following hold: 
    \begin{enumerate}[(i)]
        \item All $\partial^{\balpha}f^{(i)}$ with $\|\balpha\|_1 \leq \floor{\beta}$ exist continuously and $\max_{\|\balpha\|_1 \leq \floor{\beta}} \|\partial^{\balpha}f^{(i)}\|_{L^\infty(\phi_i(U_i))} \leq B_0$. 
        
        \item For every $\|\balpha\|_1=\floor{\beta}$,
        % \begin{equation*}
        $
            \sup_{\bu \neq \bv \in \phi_i(U_i)}
            \frac{|\partial^{\balpha}f^{(i)}(\bu)-\partial^{\balpha}f^{(i)}(\bv)|}{\|\bu-\bv\|_2^{\beta-\floor{\beta}}}
            \leq B_0.
        $
        % \end{equation*}
    \end{enumerate}
    The radius $B_0$ is therefore a controlled chartwise radius, not a supremum over arbitrary smooth reparametrizations.  
    Passing from this atlas to the reach-scale projection atlas used in the proof only changes the radius by finite transition-map constants; see \cref{lem:holder-transfer-projection-atlas}.
\end{definition}

\begin{assumption}[H\"older smooth density on manifold]\label{assump:manifold:density}
    The law $P_0$ of $\bX_0$ is absolutely continuous w.r.t the Riemannian volume measure $\vol_{\gM}$ on $\gM$. 
    We write $\od P_0(\by)=p_0(\by)\od\vol_{\gM}(\by)$, where $p_0: \gM \to [0, \infty)$ satisfies $p_0 \in \gH^\beta(\gM,B_0)$ w.r.t the fixed controlled finite atlas for a radius $B_0 \ge 1$ and $\int_{\gM}p_0(\by)\od\vol_{\gM}(\by)=1$.
\end{assumption}

\begin{assumption}[Positive density lower bound]\label{assump:manifold:density-lower}
    There exists a constant $p_{\min} \in (0,1]$ such that $p_0(\by) \ge p_{\min}, \forall \by \in \gM$.
\end{assumption}

The appendix recalls the standard chart, atlas, and partition-of-unity conventions and constructs the finite projection atlas, reach-volume tube bound, and finite-order geometry envelope $\mathcal K_{\gM,q}$ used to bound approximation constants; see \cref{sec:app:standard-manifold-conventions,sec:app:geometry-conventions,sec:app:approx:manifold:setup}. 
These are bookkeeping devices for localization and network-size controls, whereas the fundamental statistical assumptions governing the main theorem are stated explicitly here.
% \vspace{-5pt}

\section{Score Approximation on Smooth Manifolds}\label{sec:score:approx:manifold}
% \vspace{-5pt}

This section states the deterministic comparison networks later inserted into the clipped ERM oracle and the $\sfW_1$ perturbation argument. 
Recall $s^*(\bx,t) = \nabla\log p_t(\bx)$ and $\bX_t = m_t\bX_0 + \sigma_t\bZ$, with $\bX_0$ supported on $\gM\subset[0,1]^D$.  
The small-noise class has leading intrinsic capacity $n^{d/(d+2\beta)}$, while each large-noise slab has capacity $\sigma_{t_{k-1}}^{-d}+r_{\star,n}^{-d}$.  
The cap is kept explicit below and, under the final $d>2$ choice of $q_\star$, is no larger than the assembled leading capacity up to displayed geometry and logarithmic factors.  
No statistical estimation is performed in this section.

% \vspace{-5pt}
\subsection{Tube Localization}
% \vspace{-5pt}

For any ambient coordinate $\bx \in \R^D$, define the Euclidean distance to the rescaled manifold as:
$$
    \dist(\bx,m_t\gM) \coloneqq \inf_{\by \in \gM}\|\bx-m_t\by\|_2.
$$
For any fixed off-tube precision exponent $A>0$, choose $C_\varrho=C_\varrho(A)>0$ sufficiently large and define the active tube parameters:
\begin{equation}
    \varrho_t \coloneqq C_\varrho\sigma_t\sqrt{D\vee\log n},
    \qquad
    A_t \coloneqq \{\bx\in\R^D : \dist(\bx,m_t\gM)\le\varrho_t\}.
    \label{eq:def:main-near-manifold-tube}
\end{equation}
The radius matches the Gaussian footprint up to a logarithmic tail margin, so the core approximation problem is localized to $A_t$; outside $A_t$, clipping and Gaussian concentration suffice.

\begin{lemma}[Informal version of \cref{lem:approx:score:off-manifold}]
    Under \cref{assump:manifold:exact}, fix any $A>0$ and choose the tube constant $C_\varrho=C_\varrho(A)$ sufficiently large.  
    If an arbitrary comparison score $\widetilde s$ satisfies the uniform clipping bound $\sup_{\bx\in\R^D}\|\widetilde s(\bx,t)\|_\infty \le C\sigma_t^{-1}(D\vee\log n)^{1/2}$, then the off-tube risk is bounded by:
    $$
        \int_{A_t^c} \|\widetilde s(\bx,t)-s^*(\bx,t)\|_2^2 p_t(\bx)\od\bx \le C_A\sigma_t^{-2}(D\vee\log n)^C n^{-A},
    $$
    where the constant $C_A$ may depend on the prescribed off-tube exponent $A$.
\end{lemma}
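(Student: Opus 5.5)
We split the integrand with $(a+b)^2\le 2a^2+2b^2$ and bound the two pieces separately:
\[
\int_{A_t^c}\|\widetilde s-s^*\|_2^2p_t\,\od\bx\le 2\int_{A_t^c}\|\widetilde s\|_2^2p_t\,\od\bx+2\int_{A_t^c}\|s^*\|_2^2p_t\,\od\bx .
\]
The key observation is that both pieces are controlled by the Gaussian tail $\Pr(\bX_t\in A_t^c)$. We write $\bX_t=m_t\bX_0+\sigma_t\bZ$ with $\bX_0\in\gM$. Then $\dist(\bX_t,m_t\gM)\le\sigma_t\|\bZ\|_2$, so $\bX_t\in A_t^c$ forces $\|\bZ\|_2>C_\varrho\sqrt{D\vee\log n}$. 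By chi-square (Laurent--Massart) concentration,
\[
\Pr(\|\bZ\|_2>C_\varrho\sqrt{D\vee\log n})\le\exp(-cC_\varrho^2(D\vee\log n))\le n^{-cC_\varrho^2}
\]
once $C_\varrho\ge 2$. Only compactness of $\gM$ enters here; the reach is not needed for this lemma.

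\textbf{First term.} The clipping bound gives $\|\widetilde s\|_2^2\le D\|\widetilde s\|_\infty^2\le C\sigma_t^{-2}(D\vee\log n)^2$. Hence the first term is at most $C\sigma_t^{-2}(D\vee\log n)^2n^{-cC_\varrho^2}$.

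\textbf{Second term.} We use the posterior (Tweedie) representation
\[
s^*(\bx,t)=-\sigma_t^{-1}\,\E[\bZ\mid\bX_t=\bx],
\]
which follows from differentiating $p_t(\bx)=\int(2\pi\sigma_t^2)^{-D/2}e^{-\|\bx-m_t\by\|^2/2\sigma_t^2}\,\od P_0(\by)$ under the integral. This is legitimate for every $t>0$ because the Gaussian kernel is smooth and $P_0$ is compactly supported. Conditional Jensen then gives
\[
\int_{A_t^c}\|s^*\|_2^2p_t\le\sigma_t^{-2}\E\bigl[\|\bZ\|_2^2\mathbf 1\{\bX_t\in A_t^c\}\bigr]\le\sigma_t^{-2}\E\bigl[\|\bZ\|_2^2\mathbf 1\{\|\bZ\|_2>C_\varrho\sqrt{D\vee\log n}\}\bigr].
\]
The event inclusion is legitimate because the indicator is $\sigma(\bX_t)$-measurable. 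By Cauchy--Schwarz, $\E\|\bZ\|^4\le 3D^2$, and the tail bound above, this is at most $C\sigma_t^{-2}D\,n^{-cC_\varrho^2/2}$. Alternatively one can integrate the explicit tail $\E[\|\bZ\|^2\mathbf 1\{\|\bZ\|>r\}]$ directly.

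\textbf{Choice of constant.} We choose $C_\varrho(A)$ so that $cC_\varrho^2/2\ge A$, which yields the claim with $C$ a fixed power such as $2$.

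\textbf{Main obstacle.} The only delicate point is the second term, because $s^*$ may be huge off the tube when $t$ is small. A naive pointwise bound $\|s^*(\bx,t)\|\le\sigma_t^{-2}(\|\bx\|+m_t\sqrt D)$ would introduce $\sigma_t^{-4}$ and $\|\bx\|$ moments. Working through the conditional-expectation form of the score avoids this and keeps the clean $\sigma_t^{-2}$ prefactor. We should also check that $C_A$ can absorb any residual constants uniformly in $t\in[t_0,T]$; the argument above gives this directly, since no estimate depends on $t$ except through the prefactor $\sigma_t^{-2}$.
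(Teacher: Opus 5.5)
Your proof is correct and follows essentially the paper's route: the same two-term split, the clipped term bounded by $D\|\widetilde s\|_\infty^2$ times the off-tube probability, and the true-score term handled by Tweedie, conditional Jensen, and Cauchy--Schwarz with $\E\|\bZ\|_2^4\lesssim D^2$. The only difference is in the tail bound: you use the pointwise domination $\dist(\bX_t,m_t\gM)\le\sigma_t\|\bZ\|_2$ together with concentration of the Gaussian norm, while the paper applies Gaussian Lipschitz concentration to the distance function itself with $\E h\le\sigma\sqrt D$; the two give the same estimate.
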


% \vspace{-5pt}
\subsection{Near-Tube Integral Representation}
% \vspace{-5pt}

On $A_t$, a projection atlas and partition of unity reduce the smoothed manifold density to chartwise Gaussian integrals; the full construction is in \cref{sec:app:approx:manifold:setup}. 
Throughout this section we use that finite projection atlas, denoted $\{(U_i,\phi_i)\}_{i=1}^{C_{\gM}}$, where $\phi_i: U_i \to \phi_i(U_i) \subset \R^d$ is the affine tangent-projection chart, together with a subordinate partition of unity $\{\rho_i\}_{i=1}^{C_{\gM}}$ with compact supports $\gS_i \Subset U_i$.
We write $\bz_i := \phi_i^{-1}$ on $\phi_i(U_i)$, $\bJ_i(\bu) := \nabla\bz_i(\bu) \in \R^{D \times d}$, and $\bG_i(\bu) := \bJ_i(\bu)^\top\bJ_i(\bu)$.
Let $\gQ(\bx,t)$ be the Gaussian-weighted manifold density and $\gP(\bx,t)$ its first moment.
Aggregating them isolates the centered numerator:
\begin{equation*}
    \gU(\bx,t) \coloneqq \frac{m_t\gP(\bx,t)-\bx\gQ(\bx,t)}{\sigma_t}.
\end{equation*}
This directly yields the ambient density and target score representation:
$$
    p_t(\bx) = (2\pi\sigma_t^2)^{-D/2}\sigma_t^d\gQ(\bx,t),
    \qquad
    \sigma_t s^*(\bx,t) = \frac{\gU(\bx,t)}{\gQ(\bx,t)}.
$$
Thus, the stable approximation target is the scaled, centered score $\sigma_t s^*(\bx,t)$, avoiding a separate approximation of $m_t\gP$ and $\bx\gQ$ followed by cancellation and division by $\sigma_t$.

% \vspace{-5pt}
\subsection{Proof Architecture for Score Approximation}
% \vspace{-5pt}

The score approximation problem is dictated by the geometry of the Gaussian
footprint around the scaled manifold $m_t\gM$.  
On the active tube $A_t$, the scaled score has the centered-ratio representation $\sigma_t s^*(\bx,t)=\gU(\bx,t)/\gQ(\bx,t)$ displayed above.  
The two approximation mechanisms are:
% \vspace{-10pt}
\begin{center}
    {\small
    \setlength{\tabcolsep}{4pt}
    \resizebox{\linewidth}{!}{
    \begin{tabular}{p{0.13\linewidth}|p{0.2\linewidth}|p{0.28\linewidth}|p{0.26\linewidth}}
    \hline
    Regime & Geometry mapped by Gaussian & Primary obstruction & Approximation mechanism \\
    \hline
    Large noise regime $[t_1, T]$ &
    Tangent-scale manifold patch &
    Avoiding an ambient spatial grid while integrating over the forward tube &
    Tangent-cell approximation at radius $c_0\sigma_{t_{k-1}}\wedge r_{\star,n}$ \\
    \hline
    Small noise regime $[t_0, t_1]$ &
    Normal fiber intersecting the nearest projection point &
    A $\sigma_t$-mesh is statistically prohibitive, and the raw denominator contains an infinitesimal normal Gaussian factor &
    Projection-centered, de-Gaussianized Laplace expansion \\
    \hline
    \end{tabular}
    }
    }
\end{center}
Thus, large noise is handled by intrinsic tangent cells, while small noise is handled by centering at $\Pi_{\gM}(\bx/m_t)$, factoring out the common normal Gaussian, and expanding only in $d$ tangential coordinates.  
The former keeps the cell count at $\sigma_{t_{k-1}}^{-d}+r_{\star,n}^{-d}$; the latter avoids the prohibitive $\sigma_{t_0}^{-d}$ mesh and avoids treating $\Pi_{\gM}$ as a generic $D$-variate smooth function.  
The key dimension-explicit ingredient in the small-noise branch is the projection network: it computes the center $\Pi_{\gM}(\bx/m_t)$ by finite intrinsic anchors and Gauss--Newton iterations, and obtains chart coordinates by affine projection charts.  In contrast, the large-noise branch uses affine tangent-plane linearization centers for local integration, not nearest projection.

\subsection{Large Noise: Tangent-Cell Approximation}
% \vspace{-5pt}

On a large-noise slab $I_k=[t_{k-1},t_k]\subset[t_1,T]$, partition each chart box into coordinate cells of side length
$h_k=c_0\sigma_{t_{k-1}}\wedge r_{\star,n}$, with $r_{\star,n}$ defined in
\cref{prop:large-noise-oracle-learned-score}.
The resulting complexity is the intrinsic cell count $\sigma_{t_{k-1}}^{-d}+r_{\star,n}^{-d}$ rather than an ambient grid. For a cell center $\bu_\nu$, set $\by_\nu=\bz_i(\bu_\nu)$ and define the affine tangent-plane center
$$
    \mathsf T_\nu(\bx,t) \coloneqq m_t\by_\nu + \mathsf P_\nu^{\rm tan}(\bx-m_t\by_\nu).
$$
% This $\mathsf T_\nu$ is a cellwise linearization device, not the nearest-point projection $\Pi_{\gM}$ used in the small-noise branch.
For $\by=\bz_i(\bu)$ in the cell,
$$
    \|\bx-m_t\by\|_2^2
    =
    \|\bx-\mathsf T_\nu(\bx,t)\|_2^2
    + 2\langle\bx-\mathsf T_\nu(\bx,t),\mathsf T_\nu(\bx,t)-m_t\by\rangle
    + \|\mathsf T_\nu(\bx,t)-m_t\by\|_2^2
$$
decouples the phase into a tangent Gaussian envelope plus higher-order curvature corrections.  The denominator satisfies the tube floor
$$
    \gQ(\bx,t)
    \ge
    C^{-1}\Gamma_{\gM,q_{\rm geom}}^{-C}p_{\min}
    \exp\{-C\Gamma_{\gM,q_{\rm geom}}^C(D\vee\log n)\},
    \qquad
    \bx\in A_t.
$$
The network approximates these cellwise envelopes and their stored chart moments, then constructs the reciprocal on the displayed denominator range.

\begin{proposition}[Informal version of \cref{thm:large-noise-hd-score-global}]
    Under \cref{assump:manifold:exact,assump:manifold:density,assump:manifold:density-lower} and the polynomial-growth regime $D\vee p_{\min}^{-1}\le n^{a_0}$, consider the dyadic large-noise grid with capped cell radius
    $h_k=c_0\sigma_{t_{k-1}}\wedge r_{\star,n}$.  
    The cap $r_{\star,n}$ is chosen in \cref{prop:large-noise-oracle-learned-score} precisely so that the phase-smallness and fixed-order geometry conditions in \cref{eq:large-noise-density-lower-geometry-condition} hold on every large-noise slab.  
    % On this verified capped grid, 
    Then, for every $I_k$, there exists a clipped ReLU network $s_k^{\rm lg} \in \nn(L_k, \bW_k, S_k, B_k)$ such that, for all $\bx\in A_t$ and $t\in I_k$: 
    $$
        \|s_k^{\rm lg}(\bx,t)-s^*(\bx,t)\|_2^2
        \le
        C\Gamma_{\gM,q_{\rm geom}}^C B_0^C
        p_{\min}^{-C}
        \sigma_t^{-2} n^{-2(\beta+1)/(d+2\beta)} (D\vee\log n)^{\mathfrak p_{\beta,d}}.
    $$
    Moreover, $\sup_{\bx\in\R^D}\|s_k^{\rm lg}(\bx,t)\|_\infty \le C\sigma_t^{-1}(D\vee\log n)^{1/2}$, with capacity bounded by 
    $L_k\vee\log B_k \lesssim \Gamma_{\gM,q_{\rm geom}}^C B_0^C(D\vee\log n)^{\mathfrak p_{\beta,d}}$ and $\|\bW_k\|_\infty\vee S_k \lesssim \Gamma_{\gM,q_{\rm geom}}^C B_0^C p_{\min}^{-C}(\sigma_{t_{k-1}}^{-d}+r_{\star,n}^{-d}) (D\vee\log n)^{\mathfrak p_{\beta,d}}$. 
\end{proposition}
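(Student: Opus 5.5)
The plan is to approximate, on each slab $I_k$, the scaled score $\sigma_t s^*=\gU/\gQ$ by approximating the numerator $\gU$ and the denominator $\gQ$ separately, each to relative accuracy $\epsilon_n\approx\mathfrak r_n$ up to polylogarithmic and geometry factors. We then compose with a ReLU reciprocal on the denominator range given by the tube floor, and finally clip coordinatewise at level $C(D\vee\log n)^{1/2}$, dividing by $\sigma_t$ through a time-indexed network.

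\textbf{Step 1: localization to cells.} Using the partition of unity $\{\rho_i\}$, write $\gQ=\sum_i\sum_\nu \gQ_{i,\nu}$ and $\gU=\sum_i\sum_\nu\gU_{i,\nu}$. Each term is an integral over a coordinate cell of side $h_k=c_0\sigma_{t_{k-1}}\wedge r_{\star,n}$, with integrand
$$\exp\{-\|\bx-m_t\bz_i(\bu)\|^2/(2\sigma_t^2)\}\,(\rho_ip_0\sqrt{\det\bG_i})(\bu).$$
For $\bx\in A_t$, only the cells within distance $\Ord(\varrho_t)$ of the projection contribute non-negligibly. The remaining cells are bounded by Gaussian tails of size $n^{-A}$ relative to the floor on $\gQ$. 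This gives at most $(\sigma_{t_{k-1}}^{-d}+r_{\star,n}^{-d})\Gamma^C$ cells in total, and $(D\vee\log n)^{\Ord(d)}$ active cells per point.

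\textbf{Step 2: cellwise expansion.} Apply the phase decomposition through $\mathsf T_\nu$. The tangent Gaussian envelope $\exp\{-\|\bx-\mathsf T_\nu\|^2/2\sigma_t^2\}$ is a function of a single ambient quadratic form, which a ReLU network approximates with polylog size via squaring and exponential subnetworks. The remaining factor combines two pieces: the cross term and the tangent quadratic, which involve $\bu-\bu_\nu$ through a Taylor expansion of $\bz_i$ to order $q_{\rm geom}$, and the density, which enters through its degree-$\floor\beta$ Taylor polynomial in $\bu$. Because $h_k\lesssim\sigma_t$ and the cap $r_{\star,n}$ enforces the phase-smallness condition, $\exp(\text{curvature correction})$ can be expanded to finite order. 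Each cell integral then becomes a finite sum of stored chart moments multiplied by polynomials in the $d$-dimensional tangent coordinates $\mathsf P^{\rm tan}_\nu(\bx-m_t\by_\nu)/\sigma_t$ and the normal offset. The Hölder remainder contributes relative error $(h_k/\sigma_t)^{\beta}$-type terms, and for the score the $\sigma_t$-weighted first moment gains one power, matching $\sigma_t^{-2}\mathfrak r_n^2$ after the choice of $r_{\star,n}$. The truncation of the curvature series contributes $(h_k)^{q_\star}$-type terms, and this is exactly why $q_\star$ is used in the cap.

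\textbf{Step 3: assembly, and the main obstacle.} Sum the products using ReLU multiplication gadgets, whose depth is polylogarithmic and whose size is $\mathrm{poly}(D)$ because they involve $D$-dimensional inner products. Then invoke the reciprocal network on $[\text{floor},C]$. The floor is $p_{\min}e^{-C\Gamma^C(D\vee\log n)}$, so the reciprocal requires $\log(1/\text{floor})$ depth and magnitude, and relative errors must be set at $\epsilon_n\cdot\text{floor}$. This is the step I expect to be the main obstacle. The absolute approximation errors of the cell sums must be made relative to a denominator that can be exponentially small in $D$; if they are not, the ratio error blows up. I would handle this by factoring out the common envelope $e^{-\dist^2/2\sigma_t^2}$, or equivalently by working with the log-sum representation. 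If that is not feasible, I would instead choose the accuracy $\epsilon_n e^{-C(D\vee\log n)}$ and absorb it through $\log(1/\epsilon)$ dependence into $L_k$ and $\log B_k$, which grow polynomially in $D\vee\log n$. Clipping then preserves the error bound, since the true score obeys the same bound on $A_t$.
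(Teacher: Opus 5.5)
Your Step~2 has a genuine gap: the way the density enters the cell integrals cannot deliver the rate. You replace the amplitude $a_i^{(\gQ)}=\bar\rho_i\sqrt{\det\bG_i}\,(p_0\circ\bz_i)$ on each cell by its degree-$\floor{\beta}$ Taylor polynomial.

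\begin{itemize}
\item \textbf{Size of the remainder.} On a cell of side $h_k$ the H\"older remainder is of size $B_0h_k^\beta$ in absolute terms, i.e.\ relative size $B_0h_k^\beta/p_{\min}$. It is not $(h_k/\sigma_t)^\beta$, and that quantity would anyway be of order $c_0^\beta$ when $h_k\asymp\sigma_{t_{k-1}}$.
\item \textbf{No extra power.} $\gU/\gQ$ is a posterior mean of the centered factor, whose tangential posterior spread is of order one. A relative amplitude error $\eta$ therefore moves $\sigma_ts^*$ by a posterior covariance that is generically of size $\|\eta\|_\infty$. There is no extra power of $\sigma_t$ or $h_k$ to gain, so you only get $\sigma_t\|s-s^*\|\lesssim h_k^\beta$.
\item \textbf{First slab.} This gives $n^{-\beta/(d+2\beta)}$ rather than $\mathfrak r_n=n^{-(\beta+1)/(d+2\beta)}$.
\item \textbf{Order-one slabs.} Here $h_k\asymp r_{\star,n}$. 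Even a gained power would leave $r_{\star,n}^{\beta+1}\approx\mathfrak r_n^{(\beta+1)/q_\star}\gg\mathfrak r_n$, because $q_\star>\beta+1$. The cap only controls errors that scale like $r_k^{q_\star}$.
\item \textbf{Shrinking cells does not help.} Taking $h_k^\beta\lesssim\mathfrak r_n$ would make the stochastic term $\mathfrak r_n^{-d/\beta}/n$ exceed $\mathfrak r_n^2$.
\end{itemize}

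The missing idea is that the large-noise branch never approximates the density. The paper stores the exact moments $\int_{Q_\nu}(\bu-\bu_\nu)^{\balpha}a_i^{(\gQ)}(\bu)\,\od\bu$ as network weights. For the centered numerator it stores the same moments against the polynomialized coordinates $z_{i,\nu,k}^{(q)}$. Then $B_0$ enters only through weight magnitudes. The only non-arithmetic errors are the fixed-order geometric replacement, of relative size $\Gamma_{\gM,q_{\rm geom}}^Cr_k^{q_{\rm geom}-2}H^C$, and the scalar exponential truncations. This is exactly what $r_{\star,n}\approx\mathfrak r_n^{1/q_\star}$ is calibrated to control.

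Two further points need repair.

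First, the tangent-quadratic factor $\exp\{-m_t^2D_\nu/(2\sigma_t^2)\}$ is not a small correction: its exponent is up to order $H=D\vee\log n$ on gated cells. So it cannot be expanded to fixed order. The paper expands it to order $J\asymp H$ and pushes the absolute remainder $e^{-cJ}$ below the floor $q_{\rm lb}$. This is why each cell costs $H^C$ atoms.

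Second, your fallback of demanding absolute accuracy $\epsilon_ne^{-C(D\vee\log n)}$ is affordable only for pieces whose cost is logarithmic in the accuracy. These are the gates, exponential, reciprocal, products, and that $J$-order remainder. Applied to the cell-truncation error, it would force $r_k^{q_\star}\lesssim e^{-CD}$ and hence exponentially many cells.

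What makes the ratio step work is that the geometric-replacement and $T_\nu$-Taylor errors multiply the same Gaussian weight as the true cell integrand. This yields the relative bound $\varepsilon_{\rm rel}(\gQ_i+q_{\rm floor}/C_{\gM})$ with $q_{\rm floor}=q_{\rm lb}/8$. Then $\widehat\gQ\vee C_{\rm den}\ge\gQ/2$ on $A_t$, and the ratio error is $C(1+R_{\rm lb})\varepsilon_{\rm rel}$. The exponentially small floor enters only through $\log q_{\rm lb}^{-1}$ in the depth and weight bounds.
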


% \vspace{-5pt}
\subsection{Small Noise: Projection-Centered Laplace Approximation}
% \vspace{-5pt}

At small noise, extending tangent cells to $t_0$ would be statistically too costly, so we center at the nearest projection.  For $\bx\in A_t$ and $t\in[t_0,t_1]$, the reach condition makes $\Pi_{\gM}(\bx/m_t)$ unique.  Let $H \coloneqq D\vee\log n$ and define
$$
    \bpi(\bx,t) \coloneqq \Pi_{\gM}(\bx/m_t),
    \qquad
    \bu_i^\Pi \coloneqq \phi_i(\bpi(\bx,t)),
    \qquad
    \br \coloneqq \bx-m_t\bpi(\bx,t),
    \qquad
    \bnu \coloneqq \br/\sigma_t.
$$
Then $\nabla\bz_i(\bu_i^\Pi)^\top\br = 0$, so the normal residual is separated before the tangential Laplace expansion.

By introducing the local parameterization $\bu=\bu_i^\Pi+\sigma_t\bw$, the phase expansion rigorously yields:
$$
    \frac{\|\bx-m_t\bz_i(\bu_i^\Pi+\sigma_t\bw)\|_2^2-\|\br\|_2^2}{2\sigma_t^2}
    =
    \frac{m_t^2}{2}\bw^\top\bG_i(\bu_i^\Pi)\bw
    +
    \sum_{\ell=1}^{\floor{\beta}}\sigma_t^\ell \Psi_{i,\ell}(\bu_i^\Pi,\bnu,\bw,t)
    +
    \Ord(\sigma_t^\beta).
$$
The leading term is a $d$-dimensional Gaussian in $\bw$.  
To remove the unstable normal factor, define
$$
    \bar\gQ \coloneqq e^{\|\br\|_2^2/(2\sigma_t^2)}\gQ,
    \qquad
    \bar\gU \coloneqq e^{\|\br\|_2^2/(2\sigma_t^2)}\gU,
    \qquad
    \frac{\bar\gU}{\bar\gQ} = \frac{\gU}{\gQ}.
$$
The denominator now has the polynomial floor
$\bar\gQ(\bx,t) \ge C^{-1}\Gamma_{\gM,q_{\rm geom}}^{-C}p_{\min}(D\vee\log n)^{-d/4}$.  The remaining geometric task is to realize the projection center and hence the coordinate $\bu_i^\Pi(\bx,t)$ without treating $\Pi_{\gM}$ as a black-box $D$-variate smooth map.

\begin{lemma}[Informal projection-network primitive; see \cref{thm:global-proj-NN,cor:H-active-components}]
    On the active reach tube used in the formal small-noise theorem, for every accuracy $\varepsilon_\Pi\in(0,1]$, there are ReLU networks approximating $\Pi_{\gM}(\bx/m_t)$ and its chart coordinates $\bu_i^\Pi(\bx,t)=\phi_i(\Pi_{\gM}(\bx/m_t))$. 
    The construction uses only intrinsic nonlinear approximation: it fixes finitely many anchors in each $d$-dimensional chart box, unrolls approximate Gauss--Newton steps
    $$
        \bu\mapsto \Pi_i^{({\rm box})}\!\left(\bu+\bA_i(\bu)(\bx/m_t-\bz_i(\bu))\right),
        \qquad
        \bA_i(\bu)=(\bJ_i(\bu)^\top\bJ_i(\bu))^{-1}\bJ_i(\bu)^\top,
    $$
    and then combines all chart-anchor candidates by a ReLU objective gate based on $\|\bx/m_t-\bz_i(\bu)\|_2^2$.
    If $A_{\gM}$ is the finite intrinsic-anchor count and $C_{\rm proj}$ is the geometry constant from \cref{thm:global-proj-NN}, the dominant width/sparsity cost is
    $$
        A_{\gM}D
        \bigl(C_{\rm proj}D\varepsilon_\Pi^{-2}\bigr)^{d/(2(\beta\vee1))}
        \polylog(C_{\rm proj}D\varepsilon_\Pi^{-1}),
    $$
    with depth and log-weight polylogarithmic in $C_{\rm proj}D\varepsilon_\Pi^{-1}$. 
\end{lemma}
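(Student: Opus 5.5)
The plan has three layers. First, a deterministic convergence argument for the Gauss--Newton map in intrinsic coordinates. Second, a ReLU emulation of each ingredient of that map, using only $d$-variate approximations. Third, a gate that selects among the chart-anchor candidates.

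\textbf{Contraction of the Gauss--Newton map.} Fix a chart $(U_i,\phi_i)$ of the projection atlas and a point $\by=\bx/m_t$ in the reach tube, at distance at most $\kappa/2$ from $\gM$. The map $\bu\mapsto\bu+\bA_i(\bu)(\by-\bz_i(\bu))$ is a Gauss--Newton step for $F(\bu)=\tfrac12\|\by-\bz_i(\bu)\|_2^2$. Its fixed point in the chart is $\bu_i^\Pi=\phi_i(\Pi_\gM(\by))$, since there $\bJ_i^\top(\by-\bz_i)=0$.

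Standard Gauss--Newton analysis gives a local estimate with a residual-curvature term:
\[
\|\bu^+-\bu_i^\Pi\|\le C_{\rm geom}\bigl(\|\br\|/\kappa\bigr)\|\bu-\bu_i^\Pi\|+C_{\rm geom}\|\bu-\bu_i^\Pi\|^2 .
\]
Here the constants come from $\|\bG_i^{-1}\|$ and second derivatives of $\bz_i$, both controlled by the envelope $\Gamma_{\gM,q_{\rm geom}}$. On a tube of radius $c\kappa$ this is a contraction with ratio at most $1/2$ on a basin of intrinsic radius $r_0\asymp\Gamma^{-C}$.

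I would therefore place anchors on a grid of mesh $r_0/2$ in each $d$-dimensional chart box, so $A_\gM\lesssim C_\gM r_0^{-d}$. Some anchor in the correct chart then lies in the basin. After $K\asymp\log(1/\varepsilon_\Pi)$ steps the iterate is within $\varepsilon_\Pi$ of $\bu_i^\Pi$. The box projection $\Pi_i^{(\rm box)}$ is a coordinatewise clip, which is exactly ReLU-representable and nonexpansive, so it keeps the iterates in the chart domain without harming the contraction.

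\textbf{ReLU emulation and error propagation.} Each step needs three pieces.
\begin{itemize}
\item The maps $\bu\mapsto\bz_i(\bu)\in\R^D$ and $\bu\mapsto\bA_i(\bu)\in\R^{d\times D}$ are $d$-variate smooth functions with $D$ or $dD$ output coordinates. Yarotsky-type approximation to accuracy $\eta$ costs $D\,\eta^{-d/(\beta\vee1)}\polylog$ per output block. The smoothness used is the finite order controlled by $q_{\rm geom}$; I would take the effective order to be $\beta\vee1$, matching the displayed exponent.
\item The product $\bA_i(\bu)(\by-\bz_i(\bu))$ is formed with approximate multiplication gadgets, costing $dD\polylog(1/\eta)$ parameters.
\end{itemize}
Since the exact iteration contracts, per-step errors of size $C\sqrt D\,\eta$ accumulate to $O(\sqrt D\,\eta)$ after $K$ steps. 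Choosing $\eta\asymp\varepsilon_\Pi^2/(C_{\rm proj}D)$ leaves slack for the gate below and gives the stated cost
\[
A_\gM D\,(C_{\rm proj}D\varepsilon_\Pi^{-2})^{d/(2(\beta\vee1))}\polylog .
\]
The depth is $K$ times the polylogarithmic depth of each block.

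\textbf{Selection gate (main obstacle).} For each chart-anchor candidate $\hat\bu_{i,a}$, compute the objective $\|\by-\bz_i(\hat\bu_{i,a})\|_2^2$ with a squaring network. Then pick the minimizer by a soft argmin: ReLU threshold weights that are nonzero only for candidates whose objective is within $\tau$ of the minimum, followed by normalization.

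The difficulty is that near-minimal candidates must themselves be near $\Pi_\gM(\by)$. I would derive this from a reach-based quadratic growth inequality: for $\bz\in\gM$ and $\dist(\by,\gM)\le\kappa/2$,
\[
\|\by-\bz\|^2-\|\by-\Pi_\gM(\by)\|^2\ge c\|\bz-\Pi_\gM(\by)\|^2 .
\]
With $\tau\asymp\varepsilon_\Pi^2$, any selected candidate is then $O(\varepsilon_\Pi)$-close. This is why the target accuracy enters as $\varepsilon_\Pi^{-2}$.

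The chart coordinates $\bu_i^\Pi$ follow because $\phi_i$ is an affine tangent projection applied to the approximated point. Charts containing no basin anchor are handled by the same gate, or by a partition-of-unity cutoff evaluated on the approximated point, as in \cref{cor:H-active-components}. Making this gate continuous and exact enough, with only polynomial loss in $D$, is the step I expect to need the most care.
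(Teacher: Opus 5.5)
Your construction follows the paper's. It uses a box-clipped intrinsic Gauss--Newton contraction, a finite anchor grid per buffered chart box, and unrolled approximate steps whose errors are absorbed by the contraction. Candidates are selected by an objective gate with threshold $\tau\asymp\varepsilon_\Pi^2$, certified by the reach-tube quadratic growth, which applies to every candidate because clipping keeps $\bz_i(\hat\bu_{i,a})\in\gM$. Chart coordinates then come from the affine chart map. The gate step you flagged as delicate is actually easy. The candidate attaining the ReLU minimum always receives weight exactly $\tau$, so the normalizer lies in $[\tau,N_{\rm cand}\tau]$ and the reciprocal network costs only logarithmic factors.

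The step that fails as written is the size count. With smoothness order $\beta\vee1$ and accuracy $\eta\asymp\varepsilon_\Pi^2/(C_{\rm proj}D)$, your own estimate gives $\eta^{-d/(\beta\vee1)}=(C_{\rm proj}D\varepsilon_\Pi^{-2})^{d/(\beta\vee1)}$. That is the square of the displayed factor, not the displayed factor.

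The paper's \cref{lem:proj-one-step-NN} instead approximates the scalar components of $\bz_i$ and $\bA_i$ as $\gH^{2(\beta\vee1)}$ functions on $\Box_i$. This is legitimate because these maps are $C^\infty$ on the compact box, and $q_{\rm geom}\ge 2(\beta\vee1)+2$ keeps the relevant norms inside the finite-order envelope. The doubled order exactly offsets the $\varepsilon_\Pi^{-2}$ forced by the gate.

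The discrepancy matters downstream. In \cref{cor:approx:weighted-A-coeff} the coordinate network is called at accuracy $\varepsilon_u\asymp\varepsilon_{A,q}^{1/\alpha_q}$. With your exponent its cost would scale like $\varepsilon^{-2d/\beta}$ already for $q=0$, rather than the intrinsic $\varepsilon^{-d/\beta}$. That would destroy the $n^{d/(d+2\beta)}$ capacity of the small-noise class.

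A minor point: summing $D$ multiplication-gadget errors in $\bA_i(\bu)(\by-\bz_i(\bu))$ gives a per-step error of $O(D\eta)$ rather than $O(\sqrt D\,\eta)$, unless the gadget accuracy is rescaled. This only shifts a polynomial factor of $D$ into the accuracy scale.
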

Thus, the projection step has intrinsic exponent and explicit polynomial ambient dependence under polynomial-geometry control, rather than a generic $\varepsilon_\Pi^{-D/\tilde\beta}$ ambient smooth-function cost.

\begin{proposition}[Informal version of \cref{cor:small-noise-density-lower-approx}]
    Under \cref{assump:manifold:exact,assump:manifold:density,assump:manifold:density-lower}, the polynomial-growth condition $D \vee p_{\min}^{-1} \le n^{a_0}$, and the small-noise reach and denominator-resolution checks \cref{eq:small-noise-final-reach-condition,eq:small-noise-denominator-resolution-condition}, there exists a clipped ReLU network $s_{\rm sm}^{\rm lb} \in \nn(L_{\rm sm}, \bW_{\rm sm}, S_{\rm sm}, B_{\rm sm})$ active over $[t_0,t_1]$ such that:
    $$
        \|s_{\rm sm}^{\rm lb}(\bx,t)-s^*(\bx,t)\|_2^2
        \le
        C\Gamma_{\gM,q_{\rm geom}}^C B_0^C
        p_{\min}^{-C} n^{-2\beta/(d+2\beta)} (D\vee\log n)^{\mathfrak p_{\beta,d}}\sigma_t^{-2}, 
        \;\; 
        \bx \in A_t, \; t \in [t_0,t_1]. 
    $$
    Moreover, $\sup_{\bx\in\R^D}\|s_{\rm sm}^{\rm lb}(\bx,t)\|_\infty \le C\sigma_t^{-1}(D\vee\log n)^{1/2}$, $L_{\rm sm} \vee \log B_{\rm sm} \lesssim \Gamma_{\gM,q_{\rm geom}}^C B_0^C p_{\min}^{-C} (D\vee\log n)^{\mathfrak p_{\beta,d}}$ and $\|\bW_{\rm sm}\|_\infty\vee S_{\rm sm} \lesssim \Gamma_{\gM,q_{\rm geom}}^C B_0^C p_{\min}^{-C} n^{d/(d+2\beta)} (D\vee\log n)^{\mathfrak p_{\beta,d}}$. 
\end{proposition}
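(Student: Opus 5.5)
We work throughout with the scaled target $\sigma_t s^*=\bar\gU/\bar\gQ$ on $A_t$, build separate networks for the de-Gaussianized numerator and denominator, and then divide. The squared error of this ratio transfers to $\|s-s^*\|_2^2$ with the factor $\sigma_t^{-2}$ shown in the statement.

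\textbf{Step 1 (chartwise expansion).} Fix $\bx\in A_t$ and $t\in[t_0,t_1]$. The small-noise reach condition \cref{eq:small-noise-final-reach-condition} makes $\bpi(\bx,t)$ unique and places it in the interior of some chart support $\gS_i$. Substitute $\bu=\bu_i^\Pi+\sigma_t\bw$ and restrict $\bw$ to a ball of radius $C\sqrt{H}$; the remaining contribution is $O(n^{-A})$ relative to the polynomial floor of $\bar\gQ$. Use the phase expansion displayed above and Taylor expand $p_0\circ\bz_i$, $\rho_i$ and $\sqrt{\det\bG_i}$ to order $\floor{\beta}$. This gives
\[
\bar\gQ=\sum_i\sum_{\ell\le\floor{\beta}}\sigma_t^\ell\,a_{i,\ell}(\bu_i^\Pi,\bnu,t)+O(\Gamma^CB_0\,\sigma_t^\beta H^C),
\]
and an analogous expansion for $\bar\gU$. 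Each coefficient $a_{i,\ell}$ is a Gaussian moment in $\bw$ of a polynomial in $\bnu$ (degree $\le 2\ell$), multiplied by derivatives of $p_0\circ\bz_i$ and of the geometry at $\bu_i^\Pi$. The normal cross term in $\bar\gU$ is exactly $\bnu\bar\gQ$ up to these tangential corrections. Taking $t_1$ with $\sigma_{t_1}\asymp n^{-1/(d+2\beta)}$, the remainder is $n^{-\beta/(d+2\beta)}$.

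\textbf{Step 2 (network realization).} Three ingredients are combined.
\begin{itemize}
\item Apply \cref{thm:global-proj-NN,cor:H-active-components} with $\varepsilon_\Pi=n^{-\beta/(d+2\beta)}H^{-C}$ to obtain $\bpi$, $\bu_i^\Pi$, $\br$ and $\bnu$. Because $\nabla\bz_i(\bu_i^\Pi)^\top\br=0$, the map $\bx\mapsto\bu_i^\Pi$ is Lipschitz with constant $\Gamma^C/m_t$ on the tube, so its error propagates only polynomially. Its cost is $A_{\gM}D(C_{\rm proj}D\varepsilon_\Pi^{-2})^{d/(2(\beta\vee1))}$. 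For $\beta\ge1$ this is at most $n^{d/(d+2\beta)}$ times polynomial factors; for $\beta<1$ we instead use $\varepsilon_\Pi$ with the Lipschitz-only dependence, which keeps the same exponent.
\item Approximate the intrinsic $d$-variate coefficient maps $\bu\mapsto\partial^\balpha(p_0\circ\bz_i)(\bu)$ by standard H\"older ReLU approximation on the chart box. This uses $n^{d/(d+2\beta)}$ parameters at accuracy $n^{-\beta/(d+2\beta)}$. Before that, the radius is transferred to the projection atlas via \cref{lem:holder-transfer-projection-atlas}.
\item The geometric coefficients are smooth of order $q_{\rm geom}$ and are cheaper. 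Polynomials in $\bnu$ are realized by product networks on $\|\bnu\|\le C\sqrt H$, and $\sigma_t^\ell$ by a time network.
\end{itemize}

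\textbf{Step 3 (division and clipping).} The floor $\bar\gQ\ge C^{-1}\Gamma^{-C}p_{\min}H^{-d/4}$ ensures the approximated denominator stays above half this value. This holds as soon as its error is below the floor, which is the denominator-resolution check \cref{eq:small-noise-denominator-resolution-condition}. We then apply a reciprocal network on $[c\,p_{\min}\Gamma^{-C}H^{-d/4},\,CH^C]$, followed by a product. The ratio error is at most $p_{\min}^{-C}\Gamma^CH^C$ times the numerator and denominator errors, which gives the stated $p_{\min}^{-C}$ factor. Finally, clip coordinatewise at $C\sqrt H$ and multiply by $\sigma_t^{-1}$ (using a network for $1/\sigma_t$ on $[\sigma_{t_0},\sigma_{t_1}]$). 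Clipping does not hurt on $A_t$, since there $|\sigma_ts^*|\lesssim\sqrt H$.

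\textbf{Main obstacle.} The hardest part is error propagation through the projection center. Near the reach boundary and at chart overlaps, the partition-of-unity gate and the Gauss--Newton anchor selection may pick inconsistent charts. I would first use the objective-gate soft selection together with the fact that $\bar\gQ$ and $\bar\gU$ are chart-independent sums, so a locally misassigned chart only produces errors of size $\varepsilon_\Pi\cdot\mathrm{Lip}$. A second obstacle is confirming that every depth and $\log B$ factor stays within $(D\vee\log n)^{\mathfrak p_{\beta,d}}$ once the $H$-dependent moment polynomials are included.
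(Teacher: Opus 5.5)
Your architecture is the paper's: de-Gaussianize at the global projection residual, use the projection-centered Laplace expansion with the numerator centered directly, build the anchor/Gauss--Newton projection network, apply the polynomial floor for $\bar\gQ$ together with the denominator-resolution check, then take a reciprocal and clip. However, the network-realization step has a concrete hole at inactive charts. The chart coordinates from \cref{cor:H-active-components} are certified only on $\gK_i^{(3c_\star,H)}$. Elsewhere on $A_t$, the chart-$i$ coordinate network (e.g.\ $\Pi_i^{(\rm box)}\circ\phi_i^{\rm amb}\circ\widetilde\Pi$) returns the affine tangent projection of a point that need not lie in $U_i$. Tangent projection is not globally injective: on a round sphere, points near the antipode of $\by_i$ are sent near $\phi_i(\by_i)$. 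So this coordinate can land inside $\phi_i(\gS_i)$. After de-Gaussianization the chart-$i$ expansion $\sum_q\sigma_t^q\sum_{\blambda}A_{i,q,\blambda}^{(R)}\bnu^{\blambda}$ carries no decaying Gaussian factor. The network's chart-$i$ term is then $O(1)$, while the true $\bar\gQ_i$ is exponentially small in $H$. Because the network sums every chart's term, the chart-independence of $\bar\gQ=\sum_i\bar\gQ_i$ does not help, and a misassigned chart does not produce an error of size $\varepsilon_\Pi\cdot\mathrm{Lip}$. What is missing is the per-chart certified gate of \cref{lem:H-certified-chart-objective-gate}. It compares the objectives $\|\bx/m_t-\bz_i(\widetilde\bu_{i,\ba})\|_2^2$ of the chart-$i$ Gauss--Newton candidates with the global minimum over all candidates. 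The gate equals $1$ on $\gK_i^{(3c_\star,H)}$ and is positive only when some chart-$i$ candidate is near-optimal. By \cref{lem:reach-tube-objective-gap}, near-optimality forces $\Pi_{\gM}(\bx/m_t)\in U_i$ and certifies the coordinate. Combined with the support-preserving coefficient extensions of \cref{lem:A-coeff-extension}, this makes the gated chart-$i$ output small off the active set (Steps~5--6 of \cref{lem:H-localized-degaussianized-small-noise-chart}).

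Second, your coefficient budget breaks the width bound. The coefficient multiplying $\sigma_t^q$ involves $\partial^{\balpha}(p_0\circ\bz_i)$ with $|\balpha|=q$, which lies only in $\gH^{\beta-q}$. Approximating it uniformly to accuracy $n^{-\beta/(d+2\beta)}$ costs $n^{\beta d/((\beta-q)(d+2\beta))}$, which exceeds $n^{d/(d+2\beta)}$ for every $q\ge1$. You must spend the prefactor $\sigma_t^q\le\sigma_{t_1}^q\asymp n^{-q/(d+2\beta)}$ (and $|\bnu^{\blambda}|\lesssim H^{q/2}$) to relax the tolerance to $\varepsilon_{A,q}\asymp n^{-(\beta-q)/(d+2\beta)}H^{-C}$, as in \cref{cor:approx:weighted-A-coeff}. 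The projection-coordinate accuracy must then be $\varepsilon_u\asymp\varepsilon_{A,q}^{1/((\beta-q)\wedge1)}$. This is dictated by the H\"older exponent of the coefficient, not by the Lipschitz constant of $\bx\mapsto\bu_i^\Pi$. For $\beta<1$ it means $\varepsilon_u\asymp n^{-1/(d+2\beta)}$ rather than your $n^{-\beta/(d+2\beta)}$, and the projection cost is still $n^{d/(d+2\beta)}$.

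Finally, your Step~1 remainder $O(\sigma_t^\beta H^C)$ does not follow from Taylor-expanding the correction $\exp(-\sum_\ell\sigma_t^\ell\Psi_{i,\ell})$ on your window. There $\|\bnu\|_2$ can be as large as $C\sqrt H$ and $\|\bw\|_2\lesssim\sqrt H$, so the bound $|\sigma_t\Psi_{i,1}|\le C\sigma_t(1+\|\bnu\|_2)(1+\|\bw\|_2^3)$ is not made small by the reach condition $\sigma_t\sqrt H\le c$ alone. The paper instead expands in a path parameter $s\in[0,\sigma_t]$. It uses the coercivity $\Theta_s\ge c\|\bw\|_2^2$, obtained from the reach-tube objective gap along the normal segment, to bound all $s$-derivatives by $H^C(1+\|\bw\|_2)^Ce^{-c\|\bw\|_2^2}$.
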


The two approximation branches are assembled statistically, rather than by a separate deterministic theorem in the main text.  
In \cref{sec:score-est}, the small-noise projection--Laplace network is trained once on $[t_0,t_1]$, while the large-noise tangent-cell networks are trained on dyadic slabs over $[t_1,T]$ and combined by ReLU time switches.  
The formal learned-score assembly is \cref{cor:small-noise-density-lower-estimation,prop:large-noise-oracle-learned-score}, and the final $\sfW_1$ telescoping step is \cref{cor:w1-rate-density-lower}.
% \vspace{-5pt}

\section{Score Estimation on Smooth Manifolds}\label{sec:score-est}
% \vspace{-5pt}

Recall the rate notation $\mathfrak r_n$ and $q_\star$ from \cref{eq:def-main-rate}; the large-noise accuracy radius $r_{\star,n}$ is defined in \cref{prop:large-noise-oracle-learned-score}.
This section turns the comparison networks from \cref{sec:score:approx:manifold} into learned-score bounds.  The two displays below are the main-text informal versions of \cref{cor:small-noise-density-lower-estimation,prop:large-noise-oracle-learned-score}.
On $[t_0,t_1]$, clipped ERM over the projection--Laplace class gives
$$
    \E\int_{t_0}^{t_1}
      \E_{\bX_t}\|\widehat s_{\rm sm}^{\rm lb}(\bX_t,t)-s^*(\bX_t,t)\|_2^2\odt
    \le
    \tilde{\Ord}\!\Bigl(
	      \Gamma_{\gM,q_{\rm geom}}^C
	      B_0^C
	      p_{\min}^{-C}
	      n^{-2\beta/(d+2\beta)}
	      D^{\Ord_\beta(d)}
    \Bigr).
$$
On each large-noise slab $I_k=[t_{k-1},t_k]$, clipped ERM over the tangent-cell class gives
$$
    \E\int_{I_k}
      \E_{\bX_t}\|\widehat s_k^{\rm lg}(\bX_t,t)-s^*(\bX_t,t)\|_2^2\odt
    \le
    \tilde{\Ord}\!\Bigl(
	      \Gamma_{\gM,q_{\rm geom}}^C
	      B_0^C
	      p_{\min}^{-C}
	      \Bigl[
        n^{-2(\beta+1)/(d+2\beta)}
        +
        \frac{\sigma_{t_{k-1}}^{-d}+\mathfrak r_n^{-d/q_\star}}{n}
      \Bigr]
      D^{\Ord_\beta(d)}
    \Bigr).
$$
These are the learned-score inputs used in \cref{sec:main:w1-from-slabwise-score}.

\textbf{Clipped ERM oracle and covering number bounds.}
The formal proof is in \cref{sec:app:score-estimation}. 
We minimize the interval-restricted denoising score-matching loss, control the true smoothed risk $\gL_I$, and clip learned scores at the tube-scale envelope $\sigma_t^{-1}(D\vee\log n)^{1/2}$.

For a slab $I=[\tdown,\tup]$, the notation $\widetilde{\nn}(L,\bW,S,B)$ denotes the following clipped, compact-input version of the sparse ReLU class.  
First apply the coordinatewise affine truncation $(\bx,t)\mapsto(\Pi_{[-B_x,B_x]^D}\bx,\Pi_{[\tdown,\tup]}t)$, with $B_x=(D\vee\log n)^C\sigma_{\tdown}^{-1}$; then rescale this compact rectangle to $[0,1]^{D+1}$, apply a network in $\nn(L,\bW,S,B)$, and finally clip each output coordinate to $C\sigma_t^{-1}(D\vee\log n)^{1/2}$. 
The off-tube estimates make the discarded input tails negligible, and the compact rescaling is exactly the one used in the covering-number step of the oracle proof.

The clipped ERM oracle inequality (\cref{thrm:est:score:oracle}) balances approximation and complexity:
$$
    \E\gL_I(\widehat s) \lesssim \inf_{s\in\mathcal F}\gL_I(s) + \frac{ (D\vee\log n)^2 \int_I\sigma_t^{-2}\odt\, \log\bigl(\gN(n^{-\Ord(1)},\mathcal F,\|\cdot\|_\infty)\vee n\bigr) }{n} + n^{-A}.
$$
For sparse ReLU networks with structural size $M$, the concrete entropy evaluation is: whenever
$L\vee\log B\le C_{\mathcal F}(D\vee\log n)^C$ and
$S\vee\|\bW\|_\infty\le C_{\mathcal F}M(D\vee\log n)^C$, \cref{thrm:est:covering-number} gives
\begin{equation}
    \log\bigl(\gN(n^{-\Ord(1)},\widetilde\nn(L,\bW,S,B),\|\cdot\|_\infty)\vee n\bigr)
    \le
    C C_{\mathcal F} M (D\vee\log n)^C .
    \label{eq:score-class-entropy-evaluation}
\end{equation}
On our grids, $\int_I\sigma_t^{-2}\odt$ is logarithmic or bounded, so the stochastic capacity penalty is $\tilde{\Ord}(M/n)$ up to the displayed geometry factors in $C_{\mathcal F}$.

\textbf{Global noise grid assembly.}
The small-noise estimator is trained once on $[t_0,t_1]$ with entropy size $M_{\rm sm}=n^{d/(d+2\beta)}$. 
The large-noise estimators are trained on dyadic slabs with radius $r_k=c_0\sigma_{t_{k-1}}\wedge r_{\star,n}$ and then merged by ReLU time switches.  The $\Ord(\log n)$ switches add only logarithmic overhead, while the assembled score keeps both $\sigma_{t_{k-1}}^{-d}$ and $r_{\star,n}^{-d}$ explicit.

\section{Distribution Estimation in $\sfW_1$ Distance}
\label{sec:main:w1-from-slabwise-score}
% \vspace{-5pt}

This section converts the learned-score bounds from \cref{sec:score-est} into the final distributional guarantee. 
The formal slabwise $\sfW_1$ perturbation and telescoping argument are in \cref{sec:app:dist-est}.

Set $t_0 \asymp n^{-2(\beta+1)/(d+2\beta)}, t_1 \asymp n^{-2/(d+2\beta)}, T \asymp \log n$, and use the dyadic large-noise grid from \cref{eq:dyadic-grid-condition}.  
Then $\sqrt{t_0}\asymp \mathfrak r_n$, $\sigma_{t_1}\asymp n^{-1/(d+2\beta)}$, and the terminal Gaussian initialization error is negligible. 
Let $\widehat s^{\rm lb}$ be the switched clipped estimator obtained by combining the small-noise and large-noise ERM estimators, and let $\widehat P_{t_0}^{\rm lb}$ be the law at time $t_0$ of the reverse VP process driven by $\widehat s^{\rm lb}$.

The rate assembly has three terms.  Early stopping gives $\Ord(\sqrt{D t_0})$, which is absorbed by the displayed ambient polynomial times $\mathfrak r_n$, while the terminal error is $\Ord(\sqrt{D}e^{-cT})$ and is made negligible by choosing $T\asymp\log n$ with a sufficiently large constant.  
On the small-noise interval, the score risk $\tilde{\Ord}(n^{-2\beta/(d+2\beta)})$ is
multiplied by the perturbation width $\sqrt{t_1}$, giving
\[
    \sqrt{t_1}\,n^{-\beta/(d+2\beta)}
    \asymp
    n^{-(\beta+1)/(d+2\beta)}
    =
    \mathfrak r_n .
\]
On the large-noise slabs, the stochastic score term contains $(\sigma_{t_{k-1}}^{-d}+\mathfrak r_n^{-d/q_\star})/n$. 
The dyadic part contributes
$$
    n^{-1/2}
    \sum_{k:\,t_k\le1}
      t_k^{1/2}\sigma_{t_{k-1}}^{-d/2}
    \lesssim
    n^{-1/2}\bigl(1+\sigma_{t_1}^{1-d/2}\bigr)
    \lesssim
    n^{-(\beta+1)/(d+2\beta)}+n^{-1/2}.
$$
The order-one-time slabs add only another $\tilde{\Ord}(n^{-1/2})$ term, and the
finite-order accuracy cap contributes $n^{-1/2}\mathfrak r_n^{-d/(2q_\star)}$.

\textbf{Regime for the main guarantee.}
For the statement below, assume the manifold and density conditions \cref{assump:manifold:exact,assump:manifold:density,assump:manifold:density-lower}, the
polynomial control $D\vee p_{\min}^{-1}\le n^{a_0}$, the time grid above, the canonical
finite-order choice $q_{\rm geom}=q_{\rm opt}(\beta)$, and the two small-noise sample-size checks \cref{eq:small-noise-final-reach-condition,eq:small-noise-denominator-resolution-condition}.
\Cref{sec:app:controlled-regime-checklist} records these conditions in checklist form
and explains that, for a fixed well-conditioned manifold and fixed $p_{\min}>0$, the
small-noise checks hold for all sufficiently large $n$.  
The formal version, including network-size bounds, is \cref{cor:w1-rate-density-lower}.

\begin{theorem}[Main $\sfW_1$ guarantee; informal version of \cref{cor:w1-rate-density-lower}]\label{thm:main-simplified-w1}
    Under the preceding regime, the switched clipped VP-SGM estimator satisfies
    $$
        \E\,\sfW_1(P_0,\widehat P_{t_0}^{\rm lb})
        \le
        \tilde{\Ord}\!\bigl(
        \Gamma_{\gM,q_{\rm geom}}^C B_0^C
        p_{\min}^{-C}
        (D\vee\log n)^{\mathfrak p_{\beta,d}}
        \bigl[
          \mathfrak r_n+n^{-1/2}+n^{-1/2}\mathfrak r_n^{-d/(2q_\star)}
        \bigr]
        \bigr),
    $$
    where $q_\star$ and $\mathfrak r_n$ are from \cref{eq:def-main-rate}.  
    If $d>2$, then $q_\star\ge d(\beta+1)/(d-2)$, both bracketed correction terms are $\Ord(\mathfrak r_n)$, and $\mathfrak p_{\beta,d}=\Ord_\beta(d)$, so the bound simplifies to
    $$
        \E\,\sfW_1(P_0,\widehat P_{t_0}^{\rm lb})
        \le
        \tilde{\Ord}\!\bigl(
        \Gamma_{\gM,q_{\rm geom}}^C B_0^C
        p_{\min}^{-C}
        D^{\Ord_\beta(d)}
        n^{-(\beta+1)/(d+2\beta)}
        \bigr).
    $$
\end{theorem}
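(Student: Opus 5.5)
The plan is to decompose the $\sfW_1$ error into three parts: early stopping, terminal initialization, and score error accumulated along the reverse dynamics. Concretely, by the triangle inequality,
\[
    \sfW_1(P_0,\widehat P^{\rm lb}_{t_0})\le \sfW_1(P_0,P_{t_0})+\sfW_1(P_{t_0},\widehat P^{\rm lb}_{t_0}).
\]
The coupling $\bX_{t_0}=m_{t_0}\bX_0+\sigma_{t_0}\bZ$ gives the early-stopping bound
\[
    \sfW_1(P_0,P_{t_0})\le (1-m_{t_0})\sqrt D+\sigma_{t_0}\sqrt D\lesssim \sqrt{Dt_0}\asymp \sqrt D\,\mathfrak r_n,
\]
using $\gM\subset[0,1]^D$. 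For the second term I would use the slabwise perturbation argument of \cref{sec:app:dist-est}. The reverse process is split at the grid $t_0<t_1<\dots<t_K=T$, and on each slab $I_k$ one telescopes the laws obtained by switching from the true score to $\widehat s^{\rm lb}$. Each replacement is controlled by the Girsanov/KL-to-TV bound $\sqrt{\int_{I_k}\E\|\widehat s-s^*\|^2}$ times the effective transport radius on that slab. That radius is $\sqrt{t_k}$ for $t_k\le1$ (the contraction of the VP flow to time $t_0$ localizes the transported mass within $\Ord(\sigma_{t_k}\sqrt{D\vee\log n})$ of $m\gM$), and $\Ord(\sqrt D)$ for $t_k\ge 1$. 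The terminal mismatch $P_T$ versus $\gN(\bzero,\bI_D)$ contributes $\Ord(\sqrt D e^{-cT})$, which is $\le n^{-1}$ for $T\asymp\log n$ with a large constant.

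Next I would insert the learned-score bounds from \cref{sec:score-est}. On $[t_0,t_1]$, \cref{cor:small-noise-density-lower-estimation} gives risk $\tilde\Ord(\Gamma^CB_0^Cp_{\min}^{-C}n^{-2\beta/(d+2\beta)}D^{\mathfrak p})$. Its square root times $\sqrt{t_1}\asymp n^{-1/(d+2\beta)}$ yields $\mathfrak r_n$. On each large-noise slab, \cref{prop:large-noise-oracle-learned-score} gives risk
\[
    n^{-2(\beta+1)/(d+2\beta)}+\bigl(\sigma_{t_{k-1}}^{-d}+\mathfrak r_n^{-d/q_\star}\bigr)/n .
\]
Taking square roots and summing over the $\Ord(\log n)$ dyadic slabs weighted by $t_k^{1/2}$ gives three contributions:
\begin{itemize}
\item $\log n\cdot\mathfrak r_n$ from the approximation part;
\item $n^{-1/2}\sum_k t_k^{1/2}\sigma_{t_{k-1}}^{-d/2}$, a geometric sum dominated by its smallest-time term. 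Using $\sigma_{t_{k-1}}\asymp\sigma_{t_k}$ on dyadic slabs, this is $\lesssim n^{-1/2}(1+\sigma_{t_1}^{1-d/2})\lesssim n^{-1/2}+\mathfrak r_n$ since $\sigma_{t_1}^{1-d/2}n^{-1/2}=n^{-(1-d/2)/(d+2\beta)-1/2}=\mathfrak r_n$;
\item $n^{-1/2}\mathfrak r_n^{-d/(2q_\star)}$ times logs from the cap.
\end{itemize}
Collecting the geometry, density, and ambient factors gives the first display, with $\mathfrak p_{\beta,d}$ taken as the maximum of the exponents from the two learned-score bounds plus the $\sqrt D$ factors.

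For the $d>2$ simplification, $q_{\rm opt}(\beta)$ guarantees $q_\star=q_{\rm geom}-2\ge 3(\beta+1)\ge d(\beta+1)/(d-2)$. It then suffices to check
\[
    \tfrac12-\tfrac{d(\beta+1)}{2q_\star(d+2\beta)}\ge\tfrac{\beta+1}{d+2\beta},
\]
which is equivalent to $q_\star\bigl(d+2\beta-2(\beta+1)\bigr)\ge d(\beta+1)$, i.e.\ $q_\star(d-2)\ge d(\beta+1)$. The $n^{-1/2}$ term is $\le\mathfrak r_n$ exactly when $(\beta+1)/(d+2\beta)\le 1/2$, i.e.\ $d\ge2$. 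Finally $\mathfrak p_{\beta,d}=\Ord_\beta(d)$ follows from the bookkeeping cited after \cref{eq:def:qgeom-opt}, and $(D\vee\log n)^{\mathfrak p}$ becomes $D^{\Ord_\beta(d)}$ up to logs.

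The main obstacle I expect is the perturbation step. A naive Girsanov/TV bound multiplies by the full diameter $\sqrt D$ and would lose the $\sqrt{t_1}$ gain that converts $n^{-\beta/(d+2\beta)}$ into $\mathfrak r_n$. I would first try the localized version: modify the true reverse process only on $I_k$, run it with the exact score afterwards, and bound the $\sfW_1$ displacement of the final law by the TV distance times the conditional spread of $\E[\bX_{t_0}\mid\bX_{t_k}]$-type quantities. That spread is $\Ord(\sigma_{t_k}\sqrt{D\vee\log n})$ by the tube localization lemma, plus an $n^{-A}$ off-tube tail handled by clipping.
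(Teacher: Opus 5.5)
Your proposal follows the paper's proof of \cref{cor:w1-rate-density-lower} essentially step for step. It uses the same early-stopping coupling and terminal-initialization bounds, the same slabwise telescoping with $\sqrt{t_k}$-weighted square roots of the learned-score risks from \cref{cor:small-noise-density-lower-estimation,prop:large-noise-oracle-learned-score}, the same dyadic bound $n^{-1/2}(1+\sigma_{t_1}^{1-d/2})$, and the same exponent check $q_\star(d-2)\ge d(\beta+1)$ for $d>2$. The localized perturbation step you flag as the main obstacle is not reproved in the paper; it is imported as \cref{thrm:est:dist:W1-SM} (after \citep{oko2023diffusion}, Lemma~D.7), which supplies exactly the clipped-score hypothesis and the $(D\vee\log n)^{a_{\rm W}}\sqrt{t_k\log n}$ transport factor you anticipate.
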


The exponent $\mathfrak p_{\beta,d}$ and the non-log geometry and density prefactors are made explicit in \cref{cor:w1-rate-density-lower}. 
For fixed geometry and fixed density lower bound, the $d>2$ display matches the intrinsic minimax $\sfW_1$ sample exponent in the fixed-geometry sense.  
A sufficient ambient-growth condition for the two small-noise checks is given in \cref{cor:fixed-geometry-ambient-growth}.

\section{Conclusion}
% \vspace{-5pt}

This paper proves an intrinsic $\sfW_1$ guarantee for a VP-SGM estimator on compact smooth manifolds by separating the score approximation into large-noise tangent cells and small-noise projection-centered Laplace expansions.  The large-noise branch has explicit capacity $\sigma_{t_{k-1}}^{-d}+r_{\star,n}^{-d}$; the small-noise branch avoids a $\sigma_{t_0}$-scale mesh by de-Gaussianizing the ratio and computing projection coordinates through finite-anchor Gauss--Newton networks in local $d$-dimensional frames.

Under the stated reach, density, denominator, and finite-geometry conditions, the learned reverse process obeys $n^{-(\beta+1)/(d+2\beta)} + n^{-1/2} n^{\frac{d(\beta+1)}{2q_\star(d+2\beta)}}$ up to logarithmic, geometry, density, and explicit ambient factors.  For $d>2$, the canonical finite-order choice makes the cap $\Ord(n^{-(\beta+1)/(d+2\beta)})$, so the sample exponent matches the intrinsic minimax $\sfW_1$ exponent in the fixed-geometry sense.  
The proof remains conditional on exact support, a positive density lower bound, and explicit small-noise admissibility conditions; it also studies an idealized slabwise estimator rather than shared-parameter diffusion training.  
Open problems include relaxing those support and density assumptions, adding ambient observational noise, and analyzing optimization and discretization error.

% \clearpage

\section*{Acknowledgment}

TS was partially supported by JSPS KAKENHI (24K02905) and JST CREST (JPMJCR2015).
WSL is supported by the Ministry of Education, Singapore, under its Academic Research Fund Tier 1 (A-8001814-00-00). 
This research is supported by the National Research Foundation, Singapore and the Ministry of Digital Development and Information under the AI Visiting Professorship Programme (award number AIVP-2024-004). Any opinions, findings and conclusions or recommendations expressed in this material are those of the author(s) and do not reflect the views of National Research Foundation, Singapore and the Ministry of Digital Development and Information.

\clearpage

\newpage

\bibliography{reference}
\bibliographystyle{alpha}

%%%%%%%%%%%%%%%%%%%%%%%%%%%%%%%%%%%%%%%%%%%%%%%%%%%%%%%%%%%%

\newpage

\appendix

\tableofcontents

\newpage
\section{More Discussions of Related Work}

\subsection{Detailed Comparison with Concurrent Work}\label{sec:app:comparison_zhang}

Although the concurrent study by \citep{zhang2026diffusion} shares our objective of establishing curvature-driven statistical bounds for diffusion models on nonlinear manifolds, the technical difference mainly lies in how the nearest-point projection $\Pi_{\gM}$ is made dimension-explicit inside the proof.

Specifically, \citep[Lemma D.18]{zhang2026diffusion} invokes a generic ambient smooth-function approximation step for $\Pi_{\gM}$ on the reach tube.  Our contribution is to replace that step by a constructive intrinsic-coordinate implementation.  The comparison below therefore concerns the ambient audit of one standard black-box instantiation of the smooth-map step, not an impossibility claim for every realization of the concurrent theorem.

To see the ambient bookkeeping issue, suppose one approximates $\Pi_{\gM}$ by a standard $D$-variate $C^{\tilde\beta}$ ReLU approximation theorem at the balancing scale needed to recover intrinsic complexity.  In the relevant regime, the effective order is roughly $\tilde\beta\gtrsim D\beta/d$, up to logarithmic and auxiliary-accuracy factors.  In a local Taylor formulation, approximating order $\tilde\beta$ requires a combinatorial number of ambient mixed partial derivatives:
$$
    \binom{D+\lceil D\beta/d\rceil}{\lceil D\beta/d\rceil}
    =
    \exp\!\left(D\{(1+\beta/d)\log(1+\beta/d)-(\beta/d)\log(\beta/d)\}+\Ord(\log D)\right).
$$
At the balancing scale $\tilde\beta\asymp D\beta/d$, this monomial count already scales exponentially as $\exp(c_{\beta,d}D)$ for fixed intrinsic parameters $d$ and $\beta$. 

Furthermore, if the projection map's derivative envelope grows proportionally with the differentiation order (for example, if the $\tilde\beta$-th derivative is bounded by $C^{\tilde\beta}$), this black-box route can incur an additional exponential-in-$D$ capacity penalty.  Our point is therefore constructive: a generic ambient smooth-function approximation of $\Pi_{\gM}$ comes with a nontrivial dimension-explicit audit, whereas our proof avoids that route entirely. 

By contrast, our architecture avoids high-order ambient Taylor expansions for the projection map. We evaluate the projection center dynamically using finite-anchor Gauss--Newton iterations restricted entirely to $d$-dimensional local chart coordinates, and then obtain active chart coordinates by affine projection charts. Coupled with an analytical de-Gaussianization step that enforces a polynomial lower bound prior to the reciprocal network evaluation, our method keeps the ambient dependence of the controlled network parameters explicit and polynomial in $D$ in the polynomial-geometry regime covered by our theorems. 
These distinctions affect the dimension-explicit prefactors rather than the sample exponent itself, and are summarized in \cref{tab:comp-zhang}.

\begin{table}[htp]
\caption{Comparison of our work to the concurrent study~\citep{zhang2026diffusion}.}
\label{tab:comp-zhang}
{\small
\setlength{\tabcolsep}{4pt}
\resizebox{\linewidth}{!}{
\begin{tabular}{p{0.14\linewidth}|p{0.4\linewidth}|p{0.38\linewidth}}
% \hline
\textbf{Technical Challenge} & \textbf{\citep{zhang2026diffusion}} & \textbf{This Paper} \\
\hline
Projection Mechanism &
Uses a generic ambient smooth-function approximation step for $\Pi_{\gM}$ on the reach tube. &
Computes $\Pi_{\gM}$ dynamically via finite-anchor optimization restricted to $d$-dimensional intrinsic charts. \\
\hline
Approximation Strategy &
The published proof leaves the projection-step ambient audit to an ambient smooth-function approximation; under a standard black-box implementation, the relevant order is roughly $\tilde\beta \gtrsim D\beta/d$, up to logarithmic factors. &
Resolves projection candidates via Gauss--Newton iterations in $\R^d$ and obtains chart coordinates by affine maps, naturally achieving the intrinsic $\varepsilon^{-d/\beta}$ covering complexity. \\
\hline
Dimensionality Bottleneck &
A standard high-order ambient implementation would involve a combinatorial number of mixed partial derivatives: $\binom{D+\tilde\beta}{\tilde\beta}$, which is $\exp(c_{\beta,d}D)$ when $\tilde\beta\asymp D\beta/d$. &
Utilizes strictly finite-order geometry (with $q_{\rm geom}$ fixed before the limit); isolates ambient powers into $(D\vee\log n)^{\mathfrak p_{\beta,d}}$, summarized as $D^{\Ord_\beta(d)}$ in the main $d>2$ regime. \\
\hline
Denominator Stability &
Handles the sharp normal Gaussian singularity through the same projection-centered smooth-map step. &
Explicitly de-Gaussianizes the integrals, guaranteeing a stable polynomial lower bound prior to evaluating the reciprocal network. \\
\hline
Complexity Guarantee &
Displays the intrinsic sample exponent, while the dimension-explicit prefactors depend on how the projection step is instantiated. &
Keeps network parameters polynomial in $D$ under the polynomial-geometry conditions of the formal theorem. \\
% \hline
\end{tabular}
}
}
\end{table}

\subsection{Comparison with the Support-aware \texorpdfstring{$\sqrt{D}$}{sqrt(D)} Result}\label{sec:app:comparison_azangulov}

The recent paper \citep{azangulov2024convergence} proves a $\tilde{\Ord}(\sqrt{D}n^{-(\alpha+1)/(2\alpha+d)+\gamma})$ Wasserstein bound under the manifold hypothesis.  This is a strong ambient factor, but it is obtained for a different estimator architecture from the one studied here.

Their first key step is the score-learning target itself.  In Theorem~5 of \citep{azangulov2024convergence}, the main bound is for the weighted quantity
$
\int \sigma_t^2 \E\|\widehat s_t-s_t\|_2^2\,dt
$,
rather than the unweighted score loss that enters our slabwise $\sfW_1$ perturbation argument.  This weighting downplays the most singular part of the small-noise regime.  The displayed $\sqrt{D}$ factor then appears in their Corollary~6 through the final early-stopping conversion from score error to $\sfW_1$.

Their second key step is structural.  They first estimate the support by a sample-dependent piecewise polynomial manifold surrogate $M^\ast$, then use Proposition~21 to show that each local polynomial piece lies in an affine span of dimension $\Ord(\log n)$.  Section~3.1 of \citep{azangulov2024convergence} explicitly states that this reduces the ambient score problem to $N$ simpler $\Ord(\log n)$-dimensional subproblems.  The resulting estimator class is therefore support-aware: the local centers and subspaces are extracted from the data before the score network is fit.  Indeed, their construction notes that these geometric quantities are found directly from the sample rather than optimized jointly within a generic ambient network.

This distinction matters for the ambient-dimension comparison.  Our theorem does not preprocess the manifold into sample-dependent low-rank affine spans.  Instead, it analyzes a fully explicit ambient score network that directly computes the nearest-projection center, its active chart coordinates, and the de-Gaussianized denominator in the small-noise regime.  That is a harder constructive problem, so one should not expect the same ambient factor.  On the other hand, the result here gives a theorem-level audit of the entire projection-centered score construction and attains the exact intrinsic sample exponent without an extra $n^\gamma$ slack.

The difference is also visible inside \citep{azangulov2024convergence} itself.  Their Theorem~7(i) proves the dimension-light rate for the structured support-aware class, whereas Theorem~7(ii) shows that when this estimator is flattened into a more standard ambient ReLU architecture, an explicit factor of $D$ reappears in the score-learning bound.  We therefore view the two analyses as complementary: \citep{azangulov2024convergence} obtains the sharper displayed ambient prefactor for a support-aware pipeline, while our theorem certifies a less structured ambient-network realization of the manifold score.

\subsection{Additional Related Work}
While our work focuses on the statistical guarantees of \textit{learning} the score function, a broad body of literature addresses other theoretical aspects of diffusion models. 
\vspace{-2pt}

\textbf{Convergence analysis on ambient space.}
Early theoretical works focused on establishing convergence guarantees for data distributions supported on the full ambient space $\R^D$. 
Initial analyses often relied on stringent structural assumptions, such as Log-Sobolev Inequalities (LSI) or log-concavity, to derive polynomial convergence rates~\citep{lee2022convergence,wibisono2022convergence,bruno2023diffusion}. 
Subsequent research sought to relax these conditions to better model complex, multi-modal data. \citep{chen2023sampling,lee2023convergence} demonstrated that $L^2$-accurate score estimation is sufficient for polynomial convergence without requiring functional inequalities, extending guarantees to distributions with bounded support or decaying tails. 
More recently, the focus has shifted to minimal assumptions and faster rates. \citep{conforti2023score} showed that finite Fisher information is sufficient for convergence without early stopping. 
\citep{benton2024linear} utilized stochastic localization to prove that convergence rates scale linearly with the ambient dimension $D$ (up to logarithmic factors), while \citep{li2025d,li2023towards} provided faster non-asymptotic convergence rates (e.g., $\Ord(D/T)$) for both ODE and SDE-based samplers. 
\citep{jiao2025optimal} further refined these results for DDPMs, establishing near-optimal rates under relaxed smoothness conditions.
Despite these advances, ambient space analyses inherently depend on $D$, often yielding vacuous bounds when applied to singular measures supported on low-dimensional manifolds ($d \ll D$), as they do not exploit the intrinsic geometry of the data.
\vspace{-2pt}

\textbf{Convergence analysis on low-dimensional structures.}
% A significant line of research assumes the score function is already known (or accurately estimated) and focuses on the algorithmic convergence of the sampling process. 
Recently, \citep{huang2024denoising} and \citep{potaptchik2025linear} concurrently proved that the iteration complexity of Denoising Diffusion Probabilistic Models (DDPM) scales linearly with the intrinsic dimension $d$, rather than the ambient dimension $D$, establishing the optimal adaptivity of stochastic samplers. 
Similarly, for deterministic samplers, \citep{tang2025adaptivity} and \citep{liang2025low} showed that Probability Flow ODEs and DDIMs achieve dimension-free convergence rates in TV distance. 
\citep{george2025analysis} further analyzed the dynamics of these processes, identifying phase transitions (speciation and collapse) dependent on the dimension ratios. 
Our work complements these findings by addressing the upstream problem: statistically learning the score function from finite data.
\vspace{-2pt}

\textbf{Intrinsic-dimension statistical guarantees.}
Two very recent works develop complementary intrinsic-dimensional viewpoints.
\citep{chakraborty2026generalization} studies score-matching diffusion models for general low-dimensional measures through a $(p,q)$-Wasserstein dimension, building on empirical-measure Wasserstein rates such as \citep{weed2019sharp}.  Their theory yields broad $\sfW_p$ bounds of order roughly $n^{-1/d^\star_{p,q}(\mu)}$ up to logarithmic and algorithmic approximation terms, and it applies beyond smooth compact manifolds under finite-moment conditions.  This generality is different from our smooth-density setting: for a $d$-dimensional compact manifold with $\beta$-H\"older density and $d>2$, the smoothness-adaptive minimax $\sfW_1$ exponent is $(\beta+1)/(d+2\beta)$, which is faster than the empirical-measure exponent $1/d$.  \citep{lyu2025resolving} also studies finite-sample manifold diffusion from empirical data and uses an inertial correction to avoid memorization, obtaining an ambient-dimension-free $\sfW_1$ guarantee for $C^2$ manifolds.  Our analysis instead learns a population score over neural-network classes for H\"older densities and audits the concrete projection--Laplace construction, including the de-Gaussianized small-noise denominator and the nearest-projection network.
\vspace{-2pt}

\textbf{Diffusion dynamics and schedules.}
The choice of the forward diffusion process and its noise schedule plays a central role in the performance of score-based generative models. 
% The choice of the forward diffusion process is critical to performance. 
\citep{strasman2024analysis} provided error bounds explicitly depending on the noise schedule, highlighting the impact of schedule tuning. 
\citep{li2024accelerating,zhang2025sublinear} explored algorithmic accelerations, achieving faster convergence rates through higher-order solvers and randomized midpoint methods.
Regarding the optimal process choice, \citep{brevsar2025nonasymptotic} analyzed the forward diffusion and argued that the canonical OU process is hard to beat in terms of convergence to the invariant measure. 
% \textbf{Our distinction:}
% We challenge the notion that the standard OU process is universally optimal by shifting attention to the reverse-time generative dynamics on manifold-supported data.
% Our analysis demonstrates that the OU process is minimax optimal when $d \leq 2$, whereas for $d > 2$ the fixed OU drift fails to attain the minimax rate, due to singular behavior in the low-noise regime near the data manifold.
% This motivates the dimension-adaptive drift schedule we propose, which restores minimax-optimal convergence across all intrinsic dimensions $d \geq 1$.  
% \textit{Our distinction:} We present a counterpoint to the view that the standard OU process is universally optimal. In the context of the \textit{reverse} generative process on manifolds, we demonstrate a phase transition: while OU is optimal for low intrinsic dimensions ($d \leq 2$), it faces theoretical limitations for $d > 2$. This necessitates the dimension-adaptive drift schedule we propose, which ensures minimax optimality across all dimensions.
\vspace{-2pt}

\textbf{Geometric interpretations.}
Beyond statistical rates, recent works have explored how SGMs encode geometry. \citep{pidstrigach2022score} gave conditions under which SGMs detect the underlying data manifold, even when the on-manifold density is not recovered accurately.  \citep{stanczuk2024diffusion} showed that diffusion models approximate the normal bundle of the data manifold, enabling intrinsic dimension estimation. 
\citep{wang2024diffusion} framed diffusion training as subspace clustering, showing that optimizing the diffusion loss effectively learns mixtures of low-rank Gaussians. 
\citep{lu2023mathematical} and \citep{liu2025improving} analyze the singular structure of Euclidean diffusion scores near embedded manifolds, including the normal--tangential scale separation that motivates modified sampling or training strategies. 
\citep{li2025scores} gives a particularly sharp small-noise perspective: geometric information in the score appears at order $\sigma^{-2}$, whereas density information appears at order one.  Their results show that manifold concentration or uniform geometric sampling can tolerate substantially larger score errors than exact distributional recovery.  Relatedly, \citep{de2025provable} studies maximum-entropy exploration over a diffusion-defined data manifold via fine-tuning.  Our theorem addresses the harder distribution-learning target; correspondingly, the small-noise projection--Laplace branch controls the lower-order density-dependent terms rather than only the leading normal projection field. 
Finally, \citep{li2024understanding} observed that diffusion denoisers exhibit increasing linearity during the generalization phase, approximating optimal linear denoisers. 

\clearpage

% ======================================================================
% Appendix roadmap and conventions
% ======================================================================

\section{Appendix Roadmap and Conventions}\label{sec:app:roadmap}

\subsection{Roadmap}

The appendix is organized so that the main technical results can be read before the
supporting algebra.  A first pass may focus on the theorem and corollary statements in
the right column, then return to the indicated support modules as needed.
\begin{center}
{\small
\setlength{\tabcolsep}{4pt}
\resizebox{\linewidth}{!}{
\begin{tabular}{p{0.2\linewidth}|p{0.32\linewidth}|p{0.34\linewidth}}
Appendix module & Purpose & Main result or use \\
\hline
\cref{sec:app:approx} &
Off-tube Gaussian localization and Fisher control &
Restricts score approximation and estimation to the forward tube \\
\hline
\cref{sec:app:approx:score:regularized:near-manifold} &
Projection atlas, chart integrals, centered ratio, and noise-regime notation &
Defines $\gQ,\gP,\gU$ and the common geometric objects used by both branches \\
\hline
\cref{sec:app:approx:PQ:large-noise-branch} &
Tangent-cell approximation at capped slab-scale radius
$r_k=c_0\sigma_{t_{k-1}}\wedge r_{\star,n}$ &
\cref{thm:large-noise-hd-score-global} \\
\hline
\cref{sec:app:small-noise-score-approx} &
Small-noise active geometry, finite-anchor projection networks, projection-centered
Laplace expansion, and final ratio-network assembly &
\cref{thm:global-proj-NN,cor:small-noise-density-lower-approx} \\
\hline
\cref{sec:app:score-estimation} &
Clipped ERM oracle and branchwise learned-score bounds &
\cref{prop:large-noise-oracle-learned-score,cor:small-noise-density-lower-estimation} \\
\hline
\cref{sec:app:dist-est} &
Early stopping, terminal initialization, and slabwise $\sfW_1$ transfer &
\cref{cor:w1-rate-density-lower} \\
\hline
\cref{sec:app:toolkit} &
Reusable ReLU approximation, concentration, and covering tools &
Black-box support lemmas used throughout the appendix
\end{tabular}
}
}
\end{center}

\subsection{Two-regime score-approximation map}\label{sec:app:two-regime-score-map}

The central score-approximation proof splits according to the effective Gaussian
footprint.  The main text gives the reader-facing version in
\cref{sec:score:approx:manifold}; the appendix proof uses the following dependency map.
\begin{center}
{\small
\setlength{\tabcolsep}{4pt}
\resizebox{\linewidth}{!}{
\begin{tabular}{p{0.14\linewidth}|p{0.22\linewidth}|p{0.25\linewidth}|p{0.25\linewidth}}
Regime & Geometric center & Technical device & Formal output \\
\hline
Large noise $[t_1,T]$ &
Cellwise tangent-plane centers at intrinsic radius
$c_0\sigma_{t_{k-1}}\wedge r_{\star,n}$ &
Chart-cell phase expansion, stored chart moments, and the unrefactored denominator
tube floor &
\cref{thm:large-noise-hd-score-global}, then
\cref{prop:large-noise-oracle-learned-score} \\
\hline
Small noise $[t_0,t_1]$ &
Nearest projection point $\Pi_{\gM}(\bx/m_t)$ and its chart coordinates &
Finite-anchor Gauss--Newton projection networks, de-Gaussianized Laplace expansion,
and polynomial denominator lower bound &
\cref{thm:global-proj-NN,cor:small-noise-density-lower-approx}, then
\cref{cor:small-noise-density-lower-estimation} \\
\hline
Assembly &
Time switches over the early and dyadic slabs &
Clipped learned scores and slabwise $\sfW_1$ perturbation &
\cref{cor:w1-rate-density-lower}
\end{tabular}
}
}
\end{center}

\subsection{Proof audit map}\label{sec:app:proof-audit-map}

The table below is intended as an audit
checklist: each row names the statement where a possible failure would affect the final
rate, what should be verified there, and where the statement is used.
\begin{center}
{\small
\setlength{\tabcolsep}{3pt}
\resizebox{\linewidth}{!}{
\begin{tabular}{p{0.2\linewidth}|p{0.15\linewidth}|p{0.34\linewidth}|p{0.16\linewidth}}
Checkpoint & Statement & What to verify & Used in \\
\hline
Tube localization &
\cref{lem:approx:score:off-manifold} &
Off-tube score risk is negligible after clipping at
$\sigma_t^{-1}(D\vee\log n)^{1/2}$ &
Approximation and ERM \\
\hline
Centered ratio &
\cref{eq:def:score:centered-decomp} &
The scaled score is represented as $\gU/\gQ$ with the cancellation already centered in
the numerator &
Both approximation branches \\
\hline
Large-noise branch &
\cref{thm:large-noise-hd-score-global} &
One slab has intrinsic leading size $\sigma_{t_{k-1}}^{-d}+r_{\star,n}^{-d}$ and scaled score error
$n^{-(\beta+1)/(d+2\beta)}$ &
Large-noise ERM \\
\hline
Projection network &
\cref{thm:global-proj-NN} &
Projection centers are computed by intrinsic anchor and Gauss--Newton networks, without
high-order ambient approximation of $\Pi_{\gM}$ &
Small-noise branch \\
\hline
Small-noise denominator &
\cref{lem:small-noise-degaussianized-Q-lower} &
After removing the normal Gaussian factor, the denominator has a polynomial lower bound
before the reciprocal network is applied &
Small-noise reciprocal \\
\hline
Small-noise branch &
\cref{cor:small-noise-density-lower-approx} &
The projection--Laplace class has leading size $n^{d/(d+2\beta)}$ and scaled score error
$n^{-\beta/(d+2\beta)}$ &
Small-noise ERM \\
\hline
Score ERM &
\cref{thrm:est:score:oracle} &
Approximation plus sparse-network entropy gives the stated learned integrated score risks &
Score estimation \\
\hline
$\sfW_1$ transfer &
\cref{thrm:est:dist:W1-SM} &
Integrated score risk transfers slabwise to sampling error with the displayed time factor &
Final theorem
\end{tabular}
}
}
\end{center}

The dependency graph is
\[
\text{tube localization}
\longrightarrow
\begin{cases}
\text{large-noise tangent cells},\\
\text{small-noise projection--Laplace},
\end{cases}
\longrightarrow
\text{clipped ERM}
\longrightarrow
\text{slabwise }\sfW_1\text{ transfer}.
\]

\subsection{Standard manifold conventions}
\label{sec:app:standard-manifold-conventions}

We recall the elementary differential-geometric conventions used throughout the
appendix, mainly to make the proof self-contained for readers who do not use this
notation daily.

\begin{definition}[Chart]\label{def:chart}
    Let $\gM$ be a $d$-dimensional Riemannian manifold isometrically embedded in
    $\R^D$.  A chart on $\gM$ is a pair $(U,\phi)$, where $U\subset\gM$ is open and
    $\phi:U\to\R^d$ is a homeomorphism onto its image.
\end{definition}

Two charts $(U,\phi)$ and $(V,\psi)$ are $C^k$ compatible if the transition maps
$\phi\circ\psi^{-1}$ and $\psi\circ\phi^{-1}$ are $C^k$ on their common coordinate
domains.

\begin{definition}[$C^k$ atlas]\label{def:atlas}
    A $C^k$ atlas for $\gM$ is a collection of pairwise $C^k$ compatible charts
    $\{(U_i,\phi_i)\}_{i\in\gA}$ such that $\bigcup_{i\in\gA}U_i=\gM$.
\end{definition}

\begin{definition}[Smooth manifold]\label{def:smooth-manifold}
    A smooth manifold is a manifold equipped with a $C^\infty$ atlas.
\end{definition}

\begin{definition}[Partition of unity]\label[definition]{def:manifold:partition}
    A $C^\infty$ partition of unity on $\gM$ is a collection of nonnegative
    $C^\infty$ functions $\{\rho_i:\gM\to\R_+\}_{i\in\gA}$ such that the supports are
    locally finite and $\sum_{i\in\gA}\rho_i=1$ on $\gM$.
\end{definition}

Every smooth manifold admits smooth partitions of unity subordinate to open covers
\citep[Theorem~13.7]{tu2011manifolds}.  In the proof we use a finite partition
subordinate to the reach-scale projection atlas constructed in
\cref{sec:app:approx:manifold:setup}; multiplying by these cutoffs localizes manifold
integrals to single coordinate charts while preserving the finite smoothness needed in
the H\"older and Taylor estimates.

\subsection{Notation and bookkeeping conventions}
\label{sec:app:notation-conventions}

The conventions here supplement the global notation in \cref{sec:notation} only for
proof bookkeeping.  Unadorned constants $C,c>0$ remain
universal.  Named or subscripted constants, such as $C_{\rm proj}$, $C_Q$, or
$C_{\rm size,sm}$, may depend on the fixed objects named in the statement where they are
introduced, but not on $n,D,\sigma_t$ or a tolerance parameter unless that dependence is
displayed.  The notation $\tilde{\Ord}(\cdot)$ hides logarithmic factors in displayed
sample-size, ambient, time-cutoff, and tolerance scales; it does not hide the density
lower bound, reach, volume, or finite-order geometry factors unless this is stated
explicitly.

Local approximation and estimation tolerances use descriptive subscripts; no global
sample-size tolerance is introduced.  We write the smallest large-noise noise scale as
$\sigma_{t_1}\asymp n^{-1/(d+2\beta)}$ and the centered-residual scale as
$\mathfrak r_n=n^{-(\beta+1)/(d+2\beta)}$.  The large-noise intrinsic cell radius is not
identified with $\sigma_{t_1}$: on a slab $I_k=[t_{k-1},t_k]$ it is
$r_k=c_0\sigma_{t_{k-1}}\wedge r_{\star,n}$, where $r_{\star,n}$ is the
accuracy-limited radius defined in \cref{prop:large-noise-oracle-learned-score}.  Thus,
when the cap is active, the complexity records both $\sigma_{t_{k-1}}^{-d}$ and
$r_{\star,n}^{-d}$ explicitly.

The symbols $\delta$, $\Delta$, and $\eta$ are reserved for geometric quantities such as
active radii, displacements, and chart/projection margins.  The tube radius
$\varrho_t$, the generic physical density floor $\rho_t$, and the partition-of-unity
weights $\rho_i$ are distinct.  Under the density lower bound, denominator floors used
inside reciprocal networks are denoted locally.

In intermediate appendix estimates, $\mathfrak a_D,\mathfrak a_{\log}\ge1$ denote fixed
finite bookkeeping exponents for direct ambient powers and shell/covering factors, while
$\mathfrak a_{\rm geom}\ge1$ denotes a fixed exponent for the displayed geometry factors
$1\vee S_{\gM}$, $1\vee\kappa^{-1}$, and $\mathcal K_{\gM,q_{\rm geom}}$.  These
exponents never depend on $D,n,\sigma_t,\varepsilon_{\rm rel},p_{\min}$ or on the
numerical values of the geometry constants.  The appendix analysis interval is denoted by
$[\tdown,\tup]$.  When a proof localizes to the high-probability tube, $H$ or $H_n$
denotes $D\vee\log n$ unless another meaning is explicitly declared.
On a generic compact time interval, $\sigma_{\tdown}$ always means
$\inf_{t\in[\tdown,\tup]}\sigma_t$ rather than a formal endpoint value; the corresponding
schedule lower- and upper-bound convention is stated explicitly in
\cref{eq:def:sigma-tdown-lower}.

\subsection{Finite-order geometry and tube-volume conventions}
\label{sec:app:geometry-conventions}

\begin{definition}[Finite-order geometry convention]
\label{def:finite-order-geometry-constants}
Fix the reach-scale projection atlas used below.  For each $q\ge1$, let
$\mathcal K_{\gM,q}\ge e$ denote a finite high-order atlas envelope controlling the
$C^{q+4}$ norms of the inverse projection charts, transition maps, partition cutoffs,
Jacobian factors, the maps $\bA_i=\bG_i^{-1}\bJ_i^\top$, and the nearest-projection
coordinate and residual maps on the compact reach tubes where they are used.  The finite
geometry constant $\Gamma_{\gM,q}$ in the approximation proofs may be chosen so that
\[
    \Gamma_{\gM,q}
    \le
    C_{d,q}
    (1\vee S_{\gM})^{\mathfrak a_{\rm geom}(q)}
    (1\vee\kappa^{-1})^{\mathfrak a_{\rm geom}(q)}
    \mathcal K_{\gM,q}^{\mathfrak a_{\rm geom}(q)},
\]
where $\mathfrak a_{\rm geom}(q)$ depends only on $d$ and $q$.  In final statements,
$\mathfrak a_{\rm geom}$ denotes this exponent at $q=q_{\rm geom}$, enlarged to absorb
the finite network-arithmetic powers of $\Gamma_{\gM,q_{\rm geom}}$.  Reach alone does not
control $\mathcal K_{\gM,q}$, since the approximation uses finitely many derivatives of
charts and projection maps above second order.
\end{definition}

\textbf{Polynomial-geometry regime.}
The phrase ``polynomial ambient dependence'' in the main paper is always conditional on
the displayed instance-dependent factors being polynomially controlled.  Concretely, for
the fixed order $q_{\rm geom}$ used in the final theorem, if there is an exponent
$a_{\rm geom}$ independent of $n$ and $D$ such that
\[
    S_{\gM}\vee \kappa^{-1}\vee \mathcal K_{\gM,q_{\rm geom}}
    \le D^{a_{\rm geom}},
\]
then $\Gamma_{\gM,q_{\rm geom}}\le D^{C_{\beta,d,a_{\rm geom}}}$ and every network-size
bound displayed below has polynomial ambient dependence.  This covers, for example,
families admitting reach-scale atlases with polynomially many uniformly well-conditioned
charts and polynomially bounded derivatives up to the fixed order $q_{\rm geom}+4$.
If these finite-order geometry envelopes grow exponentially with $D$, our statements
display that growth rather than hiding it in $\tilde{\Ord}(\cdot)$ or in universal
constants.

\subsection{Examples satisfying the polynomial-geometry regime}
\label{sec:app:poly-geometry-examples}

We record concrete geometric families for which the preceding polynomial regime is
verified.  The statements below are only about the geometric factors
$S_{\gM},\kappa^{-1}$, and $\mathcal K_{\gM,q}$.  The density assumptions in the main
theorem can then be imposed independently; for example, the uniform density on any of
the following manifolds has
$p_{\min}^{-1}\le 1\vee S_{\gM}$ and $B_0\le 1\vee S_{\gM}^{-1}$, so the density
factors are polynomially controlled in the examples below as well; the radius or
pseudoinverse hypotheses give the needed polynomial lower bound on $S_{\gM}$.

\begin{proposition}[Basic polynomial-geometry examples]
\label{prop:poly-geometry-examples}
Fix $d\ge1$, a finite order $q\ge1$, and set $q_{\rm emb}=q+6$.  The following
families satisfy the polynomial-geometry regime for the reach-scale projection atlas in
\cref{def:finite-order-geometry-constants}.  For each displayed manifold write
$\kappa_D=\operatorname{reach}(\gM_D)$.  In (iii)--(iv), $C^k$ norms of
ambient-valued maps use Euclidean target and operator norms in the fixed finite atlas.
\begin{enumerate}[(i)]
    \item \emph{Affine spheres.}
    Let $D\ge d+1$, let $E_D=\R^{d+1}\times\{0\}^{D-d-1}$, let
    $\bc_D=(1/2,\dots,1/2)$, and let $0<r_D\le1/4$.  Set
    \[
        \gM_D
        =
        \{\bc_D+r_D\bu:\bu\in E_D,\ \|\bu\|_2=1\}
        \subset[0,1]^D .
    \]
    If $r_D^{-1}\le D^a$, then
    \[
        S_{\gM_D}\vee \kappa_D^{-1}\vee \mathcal K_{\gM_D,q}
        \le C_{d,q,a}D^{C_{d,q}a}.
    \]

    \item \emph{Flat product tori.}
    Let $D\ge2d$, let $0<r_{\min,D}\le r_{j,D}\le1/4$ for
    $j=1,\dots,d$, define
    \[
        \mathcal C_{j,D}
        :=
        \{(1/2+r_{j,D}\cos\theta,1/2+r_{j,D}\sin\theta):\theta\in[0,2\pi)\},
    \]
    and set
    $\gM_D=\mathcal C_{1,D}\times\cdots\times\mathcal C_{d,D}
    \times\{1/2\}^{D-2d}$.
    If $r_{\min,D}^{-1}\le D^a$, then
    \[
        S_{\gM_D}\vee \kappa_D^{-1}\vee \mathcal K_{\gM_D,q}
        \le C_{d,q,a}D^{C_{d,q}a}.
    \]

    \item \emph{A finite-order embedding criterion.}
    Let $N$ be a fixed compact smooth $d$-manifold without boundary, equipped with a
    fixed finite atlas.  Let $\Phi_D:N\to[0,1]^D$ be a $C^{q_{\rm emb}}$ embedding
    and set $\gM_D=\Phi_D(N)$.  Suppose that, in the fixed atlas,
    \[
        \|\Phi_D\|_{C^{q_{\rm emb}}(N)}
        \vee
        \|(\od\Phi_D)^\dagger\|_{C^{q_{\rm emb}-1}(N)}
        \vee
        \operatorname{reach}(\gM_D)^{-1}
        \le D^a,
    \]
    where $(\od\Phi_D)^\dagger$ is the Moore--Penrose inverse of the differential in
    local coordinates.  Then
    \[
        S_{\gM_D}\vee \kappa_D^{-1}\vee \mathcal K_{\gM_D,q}
        \le C_{N,d,q,a}D^{C_{N,d,q}a}.
    \]

    \item \emph{Closed graph-type manifolds.}
    As a special case of (iii), let $N$ be a fixed closed $d$-manifold and let
    $\iota:N\to\R^{m_0}$ be a fixed smooth embedding.  For $D\ge m_0$, let
    $F_D:N\to\R^{D-m_0}$ be a $C^{q_{\rm emb}}$ map.  Suppose
    $\|F_D\|_{C^{q_{\rm emb}}(N)}\le D^a$ and, for some $\btau_D\in\R^D$,
    $\Phi_D=\lambda_D(\iota,F_D)+\btau_D$ maps $N$ into $[0,1]^D$ with
    $D^{-a}\le\lambda_D\le1$ and
    \[
        \operatorname{reach}(\Phi_D(N))^{-1}\le D^a .
    \]
    Then $\gM_D=\Phi_D(N)$ satisfies the bound in (iii), with possibly larger constants.
    Thus closed graph manifolds are covered when finite-order derivatives and reach are
    polynomially controlled.  A mere $C^2$ curvature bound, without finite-order control
    and a reach lower bound, is not sufficient for the present theorem.
\end{enumerate}
\end{proposition}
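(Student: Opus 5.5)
The plan is to reduce (i), (ii) and (iv) to the general criterion (iii), and to prove (iii) by bounding each ingredient of $\Gamma_{\gM,q}$ in \cref{def:finite-order-geometry-constants} polynomially in the three displayed quantities
\[
    \|\Phi_D\|_{C^{q_{\rm emb}}},\qquad
    \|(\od\Phi_D)^\dagger\|_{C^{q_{\rm emb}-1}},\qquad
    \operatorname{reach}^{-1}.
\]

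\textbf{Volume.} The volume is $S_{\gM_D}=\int_N\sqrt{\det((\od\Phi_D)^\top\od\Phi_D)}$. This is bounded by $C_N\|\Phi_D\|_{C^1}^d\le C_ND^{da}$.

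\textbf{Reach-scale projection atlas.} I would take charts $\phi_i(\by)=\bP_i^{\rm tan}(\by-\by_i)$ centred at points $\by_i$ forming a $c\kappa_D$-net of $\gM_D$. The number of charts is at most $S_{\gM_D}(c\kappa_D)^{-d}$, which is polynomial. The inverse chart is $\bz_i=\Phi_D\circ(\bP_i^{\rm tan}\circ\Phi_D)^{-1}$. Its $C^{q+4}$ norm follows from the inverse function theorem applied to $\bP_i^{\rm tan}\circ\Phi_D$.

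This inverse function step is the main obstacle. The $k$-th derivative of an inverse map is a universal polynomial, of degree $\Ord_k(1)$, in the derivatives of the forward map up to order $k$ and in the inverse of its differential. Control of $(\bP_i^{\rm tan}\od\Phi_D)^{-1}$ requires that the tangent plane at $\by_i$ stays well conditioned on the chart. This holds on a ball of radius $\asymp\kappa_D$: by the reach property, the angle between tangent spaces satisfies $\angle(T_{\by},T_{\by_i})\lesssim\|\by-\by_i\|/\kappa_D$. Together with the bound on $(\od\Phi_D)^\dagger$, this gives an inverse bound polynomial in $D$. The Faà di Bruno/inverse-function formula then yields $\|\bz_i\|_{C^{q+4}}\le C_{d,q}D^{C_{d,q}a}$, provided $q+4\le q_{\rm emb}$.

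\textbf{Remaining components of $\mathcal K_{\gM,q}$.}
\begin{itemize}
\item Transition maps $\phi_j\circ\bz_i$ are affine maps composed with $\bz_i$, so they inherit the same bound.
\item Jacobian factors $\sqrt{\det\bG_i}$ and $\bA_i=\bG_i^{-1}\bJ_i^\top$ are rational in $\bJ_i$. Their denominators are bounded below because $\bJ_i$ is well conditioned, so they are again polynomially bounded.
\item Partition cutoffs are $\psi(\|\phi_i(\by)\|/c\kappa_D)$ normalised by their sum. The sum is bounded below because the net covers $\gM_D$. Each derivative costs a factor $\kappa_D^{-1}$.
\item The nearest-projection map $\Pi_{\gM}$ on the tube $\{\dist<\kappa_D/2\}$ is handled via the implicit function theorem for $F(\bx,\bu)=\bJ_i(\bu)^\top(\bx-\bz_i(\bu))=0$. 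Here $\partial_{\bu}F=-\bG_i+\langle\bx-\bz_i,\nabla^2\bz_i\rangle$, whose smallest eigenvalue is at least $\tfrac12\lambda_{\min}(\bG_i)$ on the half-reach tube. This is the standard reach argument. Implicit differentiation up to order $q+4$ then uses derivatives of $\bz_i$ up to order $q+6=q_{\rm emb}$, which explains the choice of $q_{\rm emb}$. The residual $\bx-\Pi_{\gM}(\bx)$ follows immediately.
\end{itemize}
Collecting these bounds gives $\mathcal K_{\gM_D,q}\le C D^{Ca}$, which proves (iii).

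\textbf{(iv) Closed graph-type manifolds.} Here $\od\Phi_D=\lambda_D(\od\iota,\od F_D)$, and $(\od\Phi_D)^\top\od\Phi_D\succeq\lambda_D^2(\od\iota)^\top\od\iota$. Since $\iota$ is a fixed embedding, its Gram matrix is bounded below. Hence $\|(\od\Phi_D)^\dagger\|\le C\lambda_D^{-1}\le CD^a$. Its higher derivatives are rational in $\od\Phi_D$ with this denominator, so they are polynomial in $D$. Also $\|\Phi_D\|_{C^{q_{\rm emb}}}\le C(1+D^a)$. Reach is assumed, so (iii) applies.

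\textbf{(i) Affine spheres and (ii) flat product tori.} These can be done directly or via (iii).
\begin{itemize}
\item For the sphere, $\kappa_D=r_D$ and $S=r_D^d|\sS^d|\le C$. Using the standard sphere atlas scaled by $r_D$, all derivatives of the charts, their inverses and $\Pi_{\gM}(\bx)=\bc_D+r_D\bP_E(\bx-\bc_D)/\|\bP_E(\bx-\bc_D)\|$ are bounded by powers of $r_D^{-1}\le D^a$.
\item For the torus, the reach equals $r_{\min,D}$. The projection acts factorwise on circles, so the same explicit bounds hold.
\end{itemize}
Both manifolds lie in $[0,1]^D$ since all radii are at most $1/4$.
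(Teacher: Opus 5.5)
Your proposal is correct and follows essentially the same route as the paper: a reach-scale net of tangent-projection charts, a quantitative inverse function theorem driven by the tangent-variation bound and the pseudoinverse hypothesis, the implicit function theorem for $\bJ_i(\bu)^\top(\bxi-\bz_i(\bu))=0$ to control the projection maps, the Gram lower bound $(\od\Phi_D)^\top\od\Phi_D\succeq\lambda_D^2(\od\iota)^\top\od\iota$ for (iv), and explicit projection formulas for (i)--(ii). The small deviations do not affect correctness: you treat transition maps as affine maps composed with $\bz_i$, you invoke $\|\mathrm{II}\|\le\kappa_D^{-1}$ for invertibility, and you omit the lower bound on $S_{\gM_D}$, which the statement does not require.
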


\begin{proof}
For (i), the reach is exactly $r_D$ and
$S_{\gM_D}=r_D^d\vol(\sS^d)$.  The reach-scale tangent-projection charts are, up to
ambient rotations, the usual graph charts of a round sphere over tangent balls of radius
$cr_D$; differentiating the graph formula shows that their inverse maps, transition
maps, and reach-scale cutoffs have $C^{q+4}$ norms bounded by powers of $r_D^{-1}$.  On
the $r_D/2$-tube,
\[
    \Pi_{\gM_D}(\bx)
    =
    \bc_D+r_D\frac{P_{E_D}(\bx-\bc_D)}{\|P_{E_D}(\bx-\bc_D)\|_2},
\]
and the denominator is bounded below by $r_D/2$.  Differentiating the displayed formula
up to order $q+4$ gives
$\mathcal K_{\gM_D,q}\le C_{d,q}r_D^{-C_{d,q}}$.

For (ii), the reach is bounded below by $c_dr_{\min,D}$, and the nearest projection in
the corresponding product tube is obtained by normalizing each coordinate pair.  The
volume is
$S_{\gM_D}=(2\pi)^d\prod_{j=1}^d r_{j,D}$.  The reach-scale tangent-projection charts
are smooth changes of the product angular charts on angular intervals of length bounded
below by a constant multiple of $r_{\min,D}/r_{j,D}$, and quantitative inversion gives
finite-order chart, transition, and cutoff bounds by powers of $r_{\min,D}^{-1}$.  The
coordinatewise projection formula
\[
    (x_{2j-1},x_{2j})
    \mapsto
    (1/2,1/2)
    +
    r_{j,D}
    \frac{(x_{2j-1}-1/2,x_{2j}-1/2)}
    {\|(x_{2j-1}-1/2,x_{2j}-1/2)\|_2},
\]
has denominators bounded below by a constant multiple of $r_{\min,D}$.  Hence
$\mathcal K_{\gM_D,q}\le C_{d,q}r_{\min,D}^{-C_{d,q}}$.

For (iii), the volume element in a fixed base chart is
$\det((\od\Phi_D)^\top\od\Phi_D)^{1/2}$.  The $C^1$ bound on $\Phi_D$ gives a
polynomial upper bound on every singular value of $\od\Phi_D$, while the pseudoinverse
bound gives a polynomial lower bound on the smallest singular value.  Since the base
atlas on $N$ is fixed, integrating over a fixed positive coordinate volume gives
$S_{\gM_D}\le C_ND^{C_da}$ and $S_{\gM_D}^{-1}\le C_ND^{C_da}$.  The reach assumption gives
$\kappa_D^{-1}\le D^a$.

It remains to bound $\mathcal K_{\gM_D,q}$.  Choose a maximal
$c(1\wedge\kappa_D)$-separated set of centers on $\gM_D$.  Standard reach-volume
estimates bound the number of centers by a fixed power of
$(1\vee S_{\gM_D})(1\vee\kappa_D^{-1})$.  At a center
$\by=\Phi_D(u_0)$, use the tangent-projection map
\[
    u\mapsto P_{T_{\by}\gM_D}(\Phi_D(u)-\by)
\]
in a base chart.  Its derivative at $u_0$ has smallest singular value at least
$\|(\od\Phi_D)^\dagger\|^{-1}$, and its derivatives up to order $q_{\rm emb}$ are
polynomially bounded.  The reach chart lemma \cref{lem:projection-chart-reach} supplies the
reach-scale tangent-projection domains.  On those domains, the tangent variation bound
in \cref{eq:tangent-projection-singular-values} and the pseudoinverse hypothesis give a
polynomial lower bound on the singular values of the coordinate differential.  The
quantitative inverse-function theorem and the standard inverse-derivative formulas then
give coordinate maps, inverse maps, and transition maps with $C^{q+4}$ norms bounded by
$D^{C_{N,d,q}a}$.  A scaled bump construction on the same cover gives partition cutoffs
with the same type of bound.  Jacobian factors, Gram matrices, inverse Gram matrices,
and $\bA_i=\bG_i^{-1}\bJ_i^\top$ obey the same type of bound by repeated
differentiation of determinant and matrix-inverse formulas.

On the corresponding reach tube, the nearest projection is unique.  In a tangent graph
chart $\bz_i$, its local coordinate $\bu^\Pi(\bxi)$ solves
\[
    \bJ_i(\bu)^\top(\bxi-\bz_i(\bu))=0.
\]
The derivative of this equation with respect to $\bu$ equals the negative Gram matrix
plus a curvature-normal term.  The tangent graph construction and the restriction to a
fixed numerical fraction of the reach tube make this derivative uniformly invertible,
with inverse norm bounded by $D^{C_{N,d,q}a}$.  Applying the quantitative implicit
function theorem once more gives $C^{q+4}$ bounds, polynomial in $D$, for the
projection-coordinate and residual maps.  Therefore
$\mathcal K_{\gM_D,q}\le C_{N,d,q,a}D^{C_{N,d,q}a}$.

For (iv), the fixed embedding $\iota$ gives a fixed lower singular-value bound for
$\od\iota$.  Since
$\|\lambda_D(\od\iota,\od F_D)v\|_2\ge \lambda_D\|\od\iota\,v\|_2$ and
$\lambda_D\ge D^{-a}$, the pseudoinverse norm and its derivatives are polynomially
controlled by the $C^{q_{\rm emb}}$ bound on $F_D$ and matrix-inverse differentiation.
Thus the hypotheses of (iii) hold.
\end{proof}

\textbf{Global tube-volume bound.}
We use the following standard reach-volume consequence only for localization and geometry
bookkeeping.  With $\omega_k=\pi^{k/2}/\Gamma(k/2+1)$ and $\omega_0=1$, there is a
constant
$C_{\rm tube}\le C_d(1\vee S_{\gM})(1\vee\kappa^{-1})^d$
such that, for all $m\in(0,1]$ and $r>0$,
\begin{equation}
    \vol((m\gM)_r)
    \le
    C_{\rm tube}(1+r)^d\omega_{D-d}r^{D-d}.
    \label{eq:global-tube-volume-bound}
\end{equation}
This is the global tube bound used in the off-tube Gaussian tail estimates; the radius is
not required to be below the reach.

\clearpage

\section{Off-Tube Score Localization}\label{sec:app:approx}

% ======================================================================
% Score approximation on off-manifold region
% ======================================================================

\subsection{Off-manifold Gaussian tail and Fisher bounds}
\label{sec:app:approx:score:off-manifold}

\begin{lemma}[Gaussian smoothing off-manifold deviation bound]\label{lem:off-manifold:deviation-lemma}
    Let $\bY \sim P$ be supported on a $d$-dimensional smooth manifold $\gM \subseteq [0, 1]^D$.
    Fix $m > 0$ and $\sigma > 0$, and define $\bX = m\bY + \sigma\bZ$, where $\bZ \sim \gN(\bzero, \bI_D)$.
    For every $A>0$, there is a constant $C_{\rm off}(A)>0$ such that
    \begin{equation*}
        \Pr[\dist(\bX, m\gM) \geq C_{\rm off}(A)\sigma\sqrt{D\vee\log n}]
        \leq n^{-A}. 
    \end{equation*}
\end{lemma}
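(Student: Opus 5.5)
The bound follows from a pointwise domination of the distance by the Gaussian norm, followed by a chi-square tail bound. Since $\bY\in\gM$ almost surely, the point $m\bY$ lies in $m\gM$. Hence
\[
\dist(\bX,m\gM)\le\|\bX-m\bY\|_2=\sigma\|\bZ\|_2 .
\]
The event in question is therefore contained in $\{\|\bZ\|_2\ge C_{\rm off}(A)\sqrt{D\vee\log n}\}$. This event no longer involves $P$, $m$, or the geometry of $\gM$. In particular, neither smoothness nor reach is needed; only $\operatorname{supp}P\subseteq\gM$ is used.

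Next I would bound the norm of $\bZ\sim\gN(\bzero,\bI_D)$ by Gaussian concentration. The map $\bz\mapsto\|\bz\|_2$ is $1$-Lipschitz, and $\E\|\bZ\|_2\le\sqrt{D}$. Therefore, for every $s\ge0$,
\[
\Pr\bigl[\|\bZ\|_2\ge\sqrt D+s\bigr]\le e^{-s^2/2}.
\]
Alternatively, the Laurent--Massart $\chi^2_D$ bound $\Pr[\|\bZ\|_2^2\ge D+2\sqrt{Dx}+2x]\le e^{-x}$ gives the same conclusion.

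Set $H=D\vee\log n$ and take $s=\sqrt{2A\log n}$. Then $\sqrt D+s\le(1+\sqrt{2A})\sqrt H$, so the choice $C_{\rm off}(A):=1+\sqrt{2A}$ yields
\[
\Pr\bigl[\|\bZ\|_2\ge C_{\rm off}(A)\sqrt H\bigr]\le e^{-A\log n}=n^{-A}.
\]
Combining this with the inclusion from the first step proves the lemma. The constant depends only on $A$, as claimed.

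There is no genuine obstacle here. The only point needing care is that $\dist$ is defined as an infimum over $\gM$. The first-step inequality uses just the single feasible point $m\bY$, so no measurability subtlety of the infimum interferes. If $\dist$ were defined via a closure, compactness of $\gM$ makes the infimum attained anyway. I would also remark that $n\ge2$ is implicitly assumed so that $\log n>0$; when $\log n$ is tiny, the $\sqrt D$ term still dominates and the bound remains valid.
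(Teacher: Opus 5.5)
Your proof is correct and is essentially the paper's argument: both rest on Gaussian Lipschitz concentration combined with the domination $\dist(\bX,m\gM)\le\sigma\|\bZ\|_2$. The only difference is the order of the steps. You apply the domination pointwise and then concentrate $\|\bZ\|_2$, whereas the paper concentrates the $\sigma$-Lipschitz map $\bz\mapsto\dist(m\by+\sigma\bz,m\gM)$ for fixed $\by$ and uses the domination only to bound its mean. Your ordering yields the explicit constant $C_{\rm off}(A)=1+\sqrt{2A}$ and needs no conditioning on $\bY$.
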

\begin{proof}
    In the following, we apply the Gaussian concentration inequality for Lipschitz functions to upper bound $\Pr[\dist(\bx, m\gM) \geq C_{\rm off}(A)\sigma\sqrt{D\vee\log n}]$.
    Specifically, let the random point be $\bx = m\by + \sigma\bz$, where $\by \in \gM$ and $\bz \sim \gN(\bzero, \bI_D)$.
    We can define a function of the standard Gaussian vector $\bz$ by:
    \begin{equation*} 
        h(\bz) = \dist(m\by + \sigma\bz, m\gM). 
    \end{equation*}
    
    Notice that the distance function $\dist(\cdot, \gM)$ is $1$-Lipschitz.
    Then for any $\bz_1, \bz_2 \sim \gN(\bzero, \bI_D)$, it holds that
    \begin{align*} 
        |h(\bz_1) - h(\bz_2)| 
        &= 
        |\dist(m\by + \sigma\bz_1, m\gM) - \dist(m\by + \sigma\bz_2, m\gM)|
        \\
        &\leq
        \|(m\by + \sigma\bz_1) - (m\by + \sigma\bz_2)\|_2 
        \\
        &=
        \|\sigma(\bz_1 - \bz_2)\|_2 
        = 
        \sigma\|\bz_1 - \bz_2\|_2, 
    \end{align*}
    which implies that $h(\bz)$ is $\sigma$-Lipschitz.
    Applying the Gaussian Concentration inequality to the $\sigma$-Lipschitz function $h(\bz)$, we get:
    \begin{equation*}
        \Pr[h(\bz) - \E[h(\bz)] \geq t] 
        \leq 
        \exp\Bigl(
          -\frac{t^2}{2\sigma^2}
        \Bigr).     
    \end{equation*}

    Since $m\by \in m\gM$, the distance from $m\by + \sigma\bz$ to the manifold is upper bounded by the distance to $m\by$ itself:
    \begin{equation*}
        h(\bz) = \dist(m\by + \sigma\bz, m\gM) \leq \|(m\by + \sigma\bz) - m\by\|_2 = \sigma\|\bz\|_2.
    \end{equation*}
    Using the standard bound for the expected norm of a Gaussian vector, $\E[\|\bz\|_2] \leq \sqrt{D}$, we obtain:
    \begin{equation*}
        \E[h(\bz)] \leq \sigma\sqrt{D}.
    \end{equation*}

    Set $r = C_{\rm off}(A)\sigma\sqrt{D\vee\log n}$.
    Taking $C_{\rm off}(A)$ sufficiently large depending only on $A$, we have $t := r - \E[h(\bz)] \geq c_{\rm off}(A)\sigma\sqrt{D\vee\log n}$ for a constant $c_{\rm off}(A)$ with $c_{\rm off}(A)^2/2\ge A$.
    Substituting this into the concentration bound, we obtain:
    \begin{equation*}
        \Pr[h(\bz) \geq C_{\rm off}(A)\sigma\sqrt{D\vee\log n}] 
        \le
        \exp\Bigl(
          -\frac{c_{\rm off}(A)^2(D\vee\log n)}{2}
        \Bigr)
        \leq  
        n^{-A}. 
    \end{equation*}
\end{proof}

\begin{lemma}[Off-manifold Fisher information bound]\label{lem:bound:off-manifold:fisher-info}
    Let $\bY \sim P$ be supported on a $d$-dimensional manifold $\gM \subseteq [0, 1]^D$.
    Fix $m>0$ and $\sigma>0$, and define $\bX=m\bY+\sigma\bZ$, where $\bZ\sim\gN(\bzero, \bI_D)$.
    Let $p_\sigma:\R^D\to\R_+$ be the density of $\bX$.
    For every desired polynomial tail exponent $A>0$, there is a threshold constant $C_{\rm off}(A)>0$ such that
    \begin{align*}
        &
        \int_{\dist(\bx, m\gM) \geq C_{\rm off}(A)\sigma\sqrt{D\vee\log n}}
          \Big\|
            \frac{\nabla p_\sigma(\bx)}{p_\sigma(\bx)}
          \Big\|_2^2
          p_\sigma(\bx)
        \od\bx
        \lesssim 
        \frac{D}{\sigma^2}
        n^{-A}. 
    \end{align*}
\end{lemma}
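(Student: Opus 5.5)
The plan is to write the score through Tweedie's formula and then apply Cauchy--Schwarz together with the deviation bound of \cref{lem:off-manifold:deviation-lemma}. Since $\bX=m\bY+\sigma\bZ$ with $\bZ$ independent of $\bY$, we have
\[
\frac{\nabla p_\sigma(\bx)}{p_\sigma(\bx)}=\frac{m\,\E[\bY\mid\bX=\bx]-\bx}{\sigma^2}=-\frac{1}{\sigma}\E[\bZ\mid\bX=\bx].
\]
Writing $E:=\{\dist(\bX,m\gM)\ge C_{\rm off}(A)\sigma\sqrt{D\vee\log n}\}$, the integral to be bounded equals
\[
\E\bigl[\|\nabla\log p_\sigma(\bX)\|_2^2\,\1_E\bigr]=\sigma^{-2}\,\E\bigl[\|\E[\bZ\mid\bX]\|_2^2\,\1_E\bigr].
\]

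First, because $E$ is $\bX$-measurable, conditional Jensen gives $\|\E[\bZ\mid\bX]\|_2^2\1_E\le\E[\|\bZ\|_2^2\1_E\mid\bX]$. Taking expectations, the quantity is at most $\sigma^{-2}\E[\|\bZ\|_2^2\1_E]$. This step removes the unknown manifold law entirely.

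Second, Cauchy--Schwarz gives $\E[\|\bZ\|_2^2\1_E]\le(\E\|\bZ\|_2^4)^{1/2}\Pr[E]^{1/2}$. We have $\E\|\bZ\|_2^4=D^2+2D\le 3D^2$. Applying \cref{lem:off-manifold:deviation-lemma} with exponent $2A$ in place of $A$ gives $\Pr[E]\le n^{-2A}$, once the threshold constant is set to $C_{\rm off}(A):=C_{\rm off}^{\rm dev}(2A)$. Combining,
\[
\E\bigl[\|\nabla\log p_\sigma(\bX)\|_2^2\1_E\bigr]\le\sqrt3\,D\sigma^{-2}n^{-A},
\]
which is the claim.

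The only delicate point is the justification of Tweedie's formula and the integrability. The density $p_\sigma$ is a Gaussian convolution of a compactly supported measure, so it is smooth and strictly positive, and differentiation under the integral is justified because the support is bounded. The deviation lemma was stated for a single $\by$, but its proof bounds $\E h\le\sigma\sqrt D$ uniformly in $\by\in\gM$. It therefore applies conditionally on $\bY$ and then integrates over $P$, so it holds for the mixture as stated.

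There is a less wasteful alternative. Integrating the Gaussian tail $\E[\|\bZ\|_2^2\1\{\|\bZ\|_2\ge C\sqrt{D\vee\log n}\}]$ directly, using $E\subset\{\sigma\|\bZ\|_2\ge\text{threshold}\}$, avoids the exponent doubling. Either route gives the stated $\lesssim D\sigma^{-2}n^{-A}$.
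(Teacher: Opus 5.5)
Your proof is correct and follows the paper's argument step for step: Tweedie's formula writes the score as $-\sigma^{-1}\E[\bZ\mid\bX]$, then come conditional Jensen, Cauchy--Schwarz with $\E\|\bZ\|_2^4=D^2+2D$, and the deviation lemma invoked at exponent $2A$ to get $\Pr[E]^{1/2}\le n^{-A}$. Two of your remarks are valid refinements that the paper does not spell out: applying the deviation lemma conditionally on $\bY$ and then to the mixture, and the direct Gaussian-tail route via $E\subset\{\|\bZ\|_2\ge C_{\rm off}(A)\sqrt{D\vee\log n}\}$.
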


\begin{proof}
    Fix $A>0$ and let $C_{\rm off}(A)$ be chosen at the end of the proof.
    By $\bX = m\bY + \sigma\bZ$, we can apply Tweedie's formula to express the true score:
    \begin{equation*}
        \frac{\nabla p_\sigma(\bx)}{p_\sigma(\bx)}
        =
        \frac{1}{\sigma^2} \E[m\bY - \bX | \bX = \bx]
        =
        -\frac{1}{\sigma} \E[\bZ | \bX = \bx].
    \end{equation*}
    
    Consequently, the squared norm evaluates to:
    \begin{align}
        {} &
        \int_{\dist(\bx, m\gM) \geq C_{\rm off}(A)\sigma\sqrt{D\vee\log n}}
          \Bigl\|
            \frac{\nabla p_\sigma(\bx)}{p_\sigma(\bx)}
          \Bigr\|_2^2
          p_\sigma(\bx)
        \od\bx 
        \notag \\
        &=
        \int_{\dist(\bx, m\gM) \geq C_{\rm off}(A)\sigma\sqrt{D\vee\log n}}
          \frac{1}{\sigma^2}
          \bigl\| \E[\bZ | \bX = \bx] \bigr\|_2^2
          p_\sigma(\bx)
        \od\bx 
        \tag{by Tweedie's formula} \\
        &=
        \frac{1}{\sigma^2}
        \E\Bigl[
          \|\E[\bZ|\bX]\|_2^2
          \ind\{\dist(\bX, m\gM) \geq C_{\rm off}(A)\sigma\sqrt{D\vee\log n}\}
        \Bigr] 
        \tag{by definition of expectation} \\
        &\leq
        \frac{1}{\sigma^2}
        \E\Bigl[
          \E[\|\bZ\|_2^2|\bX]
          \ind\{\dist(\bX, m\gM) \geq C_{\rm off}(A)\sigma\sqrt{D\vee\log n}\}
        \Bigr]
        \tag{by Jensen's inequality} \\
        &\leq
        \frac{1}{\sigma^2}
        \sqrt{\E[\|\bZ\|_2^4]
        \Pr[\dist(\bX, m\gM) \geq C_{\rm off}(A)\sigma\sqrt{D\vee\log n}]
        }
        \tag{by Cauchy-Schwarz} \\
        &\leq
        \frac{1}{\sigma^2}
        \sqrt{(D^2 + 2D)
        \Pr[\dist(\bX, m\gM) \geq C_{\rm off}(A)\sigma\sqrt{D\vee\log n}]
        }
        \tag{by $\E[\|\bZ\|_2^4] = D^2 + 2D$} \\
        &\lesssim
        \frac{D}{\sigma^2}
        \sqrt{
        \Pr[\dist(\bX, m\gM) \geq C_{\rm off}(A)\sigma\sqrt{D\vee\log n}]
        }.
        \label{eq:tail:manifold:1}
    \end{align}
    
    Given the exponent $A>0$ in the statement, choose the threshold $C_{\rm off}(A)$ large enough so that \cref{lem:off-manifold:deviation-lemma} applies with exponent $2A$.
    Then
    \[
        \Pr[\dist(\bX,m\gM)\ge C_{\rm off}(A)\sigma\sqrt{D\vee\log n}]
        \le n^{-2A}.
    \]
    Combining this with \cref{eq:tail:manifold:1} gives
    \begin{equation*}
        \int_{\dist(\bx, m\gM) \geq C_{\rm off}(A)\sigma\sqrt{D\vee\log n}}
          \Bigl\|
            \frac{\nabla p_\sigma(\bx)}{p_\sigma(\bx)}
          \Bigr\|_2^2
          p_\sigma(\bx)
        \od\bx
        \lesssim 
        \frac{D}{\sigma^2}
        n^{-A}.
    \end{equation*}
\end{proof}

\begin{lemma}[Off-manifold clipped-score error bound]\label{lem:approx:score:off-manifold}
    Suppose \cref{assump:manifold:exact} holds.
    Fix $\ell_{\rm off}\ge1$.
    For every $A>0$, let $\varrho_t=C_\varrho(A)\sigma_t\sqrt{D\vee\log n}$, with
    $C_\varrho(A)$ chosen large enough for polynomial tail exponent $A$.
    For any measurable function $\widetilde{s}: \R^D \times [\tdown, \tup] \to \R^D$ satisfying
    \[
        \sup_{\bx\in\R^D}\|\widetilde{s}(\bx,t)\|_\infty
        \lesssim
        \sigma_t^{-1}\sqrt{D\ell_{\rm off}},
    \]
    it holds that
    \[
        \int_{\dist(\bx_t,m_t\gM)\ge\varrho_t}
          \|\widetilde{s}(\bx_t,t)-\nabla\log p_t(\bx_t)\|_2^2
          p_t(\bx_t)\od\bx_t
        \lesssim
        \bigl(D+D^2\ell_{\rm off}\bigr)\sigma_t^{-2}n^{-A} .
    \]
\end{lemma}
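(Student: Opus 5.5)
We split the squared error with $\|a-b\|_2^2\le 2\|a\|_2^2+2\|b\|_2^2$. This gives two off-tube integrals over $\{\dist(\bx_t,m_t\gM)\ge\varrho_t\}$:
\[
    2\int_{\dist\ge\varrho_t}\|\widetilde s(\bx_t,t)\|_2^2\,p_t(\bx_t)\od\bx_t
    \quad\text{and}\quad
    2\int_{\dist\ge\varrho_t}\|\nabla\log p_t(\bx_t)\|_2^2\,p_t(\bx_t)\od\bx_t .
\]
We bound each by the Gaussian tail machinery already proved, with $\bX_t=m_t\bX_0+\sigma_t\bZ$ playing the role of $\bX=m\bY+\sigma\bZ$, so that $m=m_t$ and $\sigma=\sigma_t$.

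\textbf{Clipped-estimator term.} The uniform $\ell_\infty$ clip gives
\[
    \|\widetilde s(\bx,t)\|_2^2\le D\|\widetilde s(\bx,t)\|_\infty^2\lesssim D^2\ell_{\rm off}\,\sigma_t^{-2}
\]
for every $\bx$. Hence the first integral is at most $D^2\ell_{\rm off}\sigma_t^{-2}\Pr[\dist(\bX_t,m_t\gM)\ge\varrho_t]$. We choose $C_\varrho(A)\ge C_{\rm off}(A)$ as in \cref{lem:off-manifold:deviation-lemma}, so that this probability is at most $n^{-A}$. The first integral is therefore $\lesssim D^2\ell_{\rm off}\sigma_t^{-2}n^{-A}$.

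\textbf{True-score term.} This is exactly the quantity controlled by \cref{lem:bound:off-manifold:fisher-info}, with $p_\sigma=p_t$. There Tweedie's formula $\nabla\log p_t=-\sigma_t^{-1}\E[\bZ\mid\bX_t]$, followed by Jensen and Cauchy--Schwarz, gives $\lesssim D\sigma_t^{-2}\sqrt{\Pr[\cdot]}$. We take $C_\varrho(A)$ to be at least the threshold that lemma requires for exponent $A$ (internally it uses the deviation lemma at exponent $2A$). The second integral is then $\lesssim D\sigma_t^{-2}n^{-A}$.

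\textbf{Choice of threshold.} We set $C_\varrho(A)$ to the maximum of the two thresholds. Because the tail region shrinks as the threshold grows, both bounds remain valid at this common threshold. Adding them gives $(D+D^2\ell_{\rm off})\sigma_t^{-2}n^{-A}$.

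The only point needing care is that the threshold is uniform in $t\in[\tdown,\tup]$. This holds because both lemmas have constants depending only on $A$, not on $m$ or $\sigma$. There is no real obstacle; the argument is a direct combination of the two preceding lemmas.
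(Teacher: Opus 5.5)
Your proposal is correct and follows the paper's proof essentially step for step. Like the paper, you split with $\|a-b\|_2^2\le 2\|a\|_2^2+2\|b\|_2^2$. You bound the clipped-estimator term by $D\sup_{\bx}\|\widetilde s(\bx,t)\|_\infty^2$ times the tail probability from \cref{lem:off-manifold:deviation-lemma}, and you bound the true-score term directly by \cref{lem:bound:off-manifold:fisher-info}. Your explicit choice of $C_\varrho(A)$ as the maximum of the two thresholds, valid uniformly in $t$, spells out what the paper leaves implicit.
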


\begin{proof}
Let $\ell_{\rm off}\ge1$ be the logarithmic factor in the coordinatewise bound of the lemma.
Fix $\varrho_t = C_\varrho(A)\sigma_t\sqrt{D\vee\log n}$ with $C_\varrho(A)$ chosen large enough for the polynomial tail exponent required below.
By \cref{lem:bound:off-manifold:fisher-info} it holds that
\begin{align*}
    &
    \int_{\dist(\bx_t, m_t\gM) \geq \varrho_t}
      \Big\|
        \frac{\nabla p_t(\bx_t)}{p_t(\bx_t)}
      \Big\|_2^2
      p_t(\bx_t)
    \od\bx_t
    \lesssim
    \frac{D}{\sigma_t^2}
    n^{-A}. 
\end{align*}

By \cref{lem:off-manifold:deviation-lemma} and the coordinatewise uniform assumption $\sup_{\bx\in\R^D}\|\widetilde{s}(\bx, t)\|_\infty \lesssim \sigma_t^{-1}\sqrt{D\ell_{\rm off}}$, we have
\begin{align*}
    \int_{\dist(\bx_t, m_t\gM) \geq \varrho_t}
      \|\widetilde{s}(\bx_t, t)\|_2^2
      p_t(\bx_t)
    \od\bx_t
    &\leq
    D\Bigl(\sup_{\bx\in\R^D}\|\widetilde{s}(\bx, t)\|_\infty\Bigr)^2
    \cdot
    \Pr\bigl[
      \dist(\bX_t, m_t\gM) \geq \varrho_t
    \bigr]
    \\
    &\lesssim  
    \sigma_t^{-2}D^2\ell_{\rm off}
    n^{-A}. 
\end{align*}

Using the triangle inequality $\|a - b\|_2^2 \leq 2\|a\|_2^2 + 2\|b\|_2^2$, we combine these bounds:
\begin{align*}
    {} &
    \int_{\dist(\bx_t, m_t\gM) \geq \varrho_t}
      \|\nabla \log p_t(\bx_t) - \widetilde{s}(\bx_t, t)\|_2^2
      p_t(\bx_t)
    \od\bx_t
    \\
    &\lesssim
    \int_{\dist(\bx_t, m_t\gM) \geq \varrho_t}
      \Big\|
        \frac{\nabla p_t(\bx_t)}{p_t(\bx_t)}
      \Big\|_2^2
      p_t(\bx_t)
    \od\bx_t
    +
    \int_{\dist(\bx_t, m_t\gM) \geq \varrho_t}
      \|\widetilde{s}(\bx_t, t)\|_2^2
      p_t(\bx_t)
    \od\bx_t
    \notag \\
    &\lesssim
    \frac{D}{\sigma_t^2} n^{-A}
    +
    \frac{D^2\ell_{\rm off}}{\sigma_t^2} n^{-A}
    \lesssim 
    \frac{D+D^2\ell_{\rm off}}{\sigma_t^2}
    n^{-A}. 
\end{align*}
\end{proof}

\clearpage

% ======================================================================
% Centered score approximations on near-manifold
% ======================================================================

\section{Setup for Branchwise Score Approximation}\label{sec:app:approx:score:regularized:near-manifold}

To approximate the score near the forward-smeared manifold under a density lower bound, we resolve the geometry of the underlying data manifold $\gM$ and approximate the centered numerator ratio.
The setup section fixes the noise regimes, geometric charts, amplitudes, and chart-integral notation.
The large-noise component uses the tangent-cell construction in \cref{sec:app:approx:PQ:large-noise-branch}.
The small-noise component uses the unified projection--Laplace pipeline in
\cref{sec:app:small-noise-score-approx}; its internal subsections isolate active
geometry, finite-anchor projection networks, the Laplace expansion, and final
ratio-network assembly.
The learned-score section combines these two deterministic branches directly.
For readability, this appendix is best read in three passes.  First read the definitions
of $\gQ,\gP,\gU$ in \cref{sec:app:approx:manifold:integrals} and the final branch
statements \cref{thm:large-noise-hd-score-global,cor:small-noise-density-lower-approx}.
Second read the proof-architecture paragraphs at the starts of
\cref{sec:app:approx:PQ:large-noise-branch,sec:app:small-noise-score-approx}.
Finally consult the supporting lemmas for the particular denominator, projection, or
network-size estimate used in the main text.
For the VP forward process considered here, $\bX_t=m_t\bX_0+\sigma_t\bZ$, the deterministic schedules satisfy $m_t\le1$ and $\sigma_t\le1$.
On the truncated time interval $[\tdown, \tup]$ we assume a positive lower bound for $m_t$, and we use the exact infimum of the noise schedule as the noise lower bound:
\begin{equation}
    0 < \underline{m} \leq m_t \leq \overline{m}\leq 1,
    \qquad
    \sigma_{\tdown}:=\inf_{t\in[\tdown,\tup]}\sigma_t,
    \qquad
    0<\sigma_{\tdown}\le\sigma_t\le\overline{\sigma}\le1.
    \label{eq:def:sigma-tdown-lower}
\end{equation}
If the noise schedule is monotone increasing, this notation agrees with the endpoint value $\sigma(\tdown)$.
All schedule-approximation size bounds below are stated for the fixed compact time interval $[\tdown,\tup]$; logarithmic factors in $\tdown^{-1}$ are absorbed into the constants unless explicitly displayed.

\subsection{Noise regimes and branch-scale validity}
\label{sec:app:approx:schedule-regimes}

\begin{definition}[Branch-scale small- and large-noise validity regions]
\label{def:noise-regimes}
Fix constants $C_{\rm sm}\in(0,1]$ and $c_{\rm lg}>0$ satisfying
\[
    4c_{\rm lg}\le C_{\rm sm}.
\]
Their final numerical values are chosen after the uniform Laplace remainder constant is fixed; see \cref{eq:choice:C-sm-Laplace}.
For every $0<\varepsilon<1$, define the small-noise validity region by
\begin{equation}
    \gI_{\rm sm}(\varepsilon)
    :=
    \{t \in [\tdown, \tup]:
    \sigma_t \leq C_{\rm sm}\varepsilon^{1/\beta}\}.
    \label{eq:def:small-noise-region}
\end{equation}
and the large-noise validity region by
\begin{equation}
    \gI_{\rm lg}(\varepsilon)
    :=
    \{t\in[\tdown,\tup]:
      \sigma_t\ge c_{\rm lg}\varepsilon^{1/\beta}\}.
    \label{eq:def:large-noise-region}
\end{equation}
These auxiliary validity regions need not be disjoint.
The parameter $\varepsilon$ is a branch-scale parameter; target tolerances are named by role, such as coordinate accuracy $\varepsilon_u$, block accuracy $\varepsilon_{\rm blk}$, and scaled score accuracy $\varepsilon_{\rm sc}$.
In the final large-noise application, the branch scale is $n^{-\beta/(d+2\beta)}$, so the validity condition is $\sigma_t\gtrsim n^{-1/(d+2\beta)}$.
\end{definition}

\subsection{Branchwise approximation roadmap}
\label{sec:app:approx:score:proof-strategy}

There are two approximation mechanisms under \cref{assump:manifold:density-lower}.
On large-noise slabs, the denominator $\gQ$ has an explicit lower bound on the near-manifold tube, so the tangent-cell moment approximation can be used directly and the network size is driven by the intrinsic cell count.
On the small-noise interval, the denominator is controlled after de-Gaussianization, and the proof uses a projection-centered Laplace expansion together with neural approximation of nearest-point projection maps.

For navigation, the formal dependency structure is:
\begin{center}
{\small
\setlength{\tabcolsep}{4pt}
\begin{tabular}{p{0.22\linewidth}|p{0.7\linewidth}}
Component & Formal role \\
\hline
\cref{sec:app:approx:manifold:setup} &
Build the projection atlas, partition of unity, chart integrals $\gQ,\gP,\gU$, and the centered ratio $s^*=\sigma_t^{-1}\gU/\gQ$. \\
\hline
\cref{sec:app:approx:PQ:large-noise-branch} &
Approximate $\gQ$ and $\gU$ on tangent cells at radius $r_k=c_0\sigma_{t_{k-1}}\wedge r_{\star,n}$ under a direct tube lower bound for $\gQ$, with $r_{\star,n}$ defined in \cref{prop:large-noise-oracle-learned-score}. \\
\hline
\cref{thm:large-noise-hd-score-global} &
Convert the large-noise chart-cell approximation and denominator floor into a global clipped score approximation on each large-noise slab. \\
\hline
\cref{sec:app:small-noise-score-approx} &
Develop the active projection geometry, Gauss--Newton projection networks, Laplace expansion, and final ratio-network assembly needed at small noise. \\
\hline
\cref{lem:small-noise-degaussianized-Q-lower,lem:H-localized-degaussianized-small-noise-chart} &
Remove the common normal Gaussian factor, prove a polynomial denominator floor, and approximate the de-Gaussianized chart integrals. \\
\hline
\cref{cor:small-noise-density-lower-approx} &
Assemble the small-noise projection--Laplace networks into the final clipped score approximation on $[t_0,t_1]$. \\
\hline
\cref{sec:app:score-estimation,sec:app:dist-est} &
Insert the deterministic comparison networks into the clipped ERM oracle and then into the slabwise $\sfW_1$ perturbation argument.
\end{tabular}
}
\end{center}

\subsection{Projection atlas and chart-integral decomposition}\label{sec:app:approx:manifold:setup}

\subsubsection{Positive-reach projection charts}\label{sec:app:approx:manifold:covering-ball}

Let $\gB(\by, r)$ denote the open Euclidean ball with radius $r > 0$ and center $\by \in \R^D$.
We will use projection charts whose validity follows from positive reach.

\begin{lemma}[Projection charts from positive reach]\label{lem:projection-chart-reach}
    Let \cref{assump:manifold:exact} hold.
    There is a numerical constant $c_{\rm ch}\in(0,1/8]$, depending only on $d$,
    such that the chart radius may be chosen as
    \[
        r_{\rm ch}:=c_{\rm ch}(1\wedge\kappa).
    \]
    This radius has the following property.
    For every $\by\in\gM$, if $\bV_{\by}\in\R^{D\times d}$ has orthonormal columns spanning $T_{\by}\gM$, then there is an open neighborhood $U_{\by}\subset\gM$ satisfying
    \begin{equation*}
        \gM\cap\gB(\by,r_{\rm ch}/2)
        \subset
        U_{\by}
        \subset
        \gM\cap\gB(\by,r_{\rm ch}),
    \end{equation*}
    such that the tangent projection
    \begin{equation*}
        \pi_{\by}:U_{\by}\to \bV_{\by}^{\top}(U_{\by}-\by)\subset\R^d,
        \qquad
        \pi_{\by}(\bz):=\bV_{\by}^{\top}(\bz-\by),
    \end{equation*}
    is a $C^\infty$ diffeomorphism onto its image.
    Moreover, for every $\bz\in U_{\by}$ and every $\bv\in T_{\bz}\gM$,
    \begin{equation}
        (1-2r_{\rm ch}/\kappa)\|\bv\|_2^2
        \leq
        \|\bV_{\by}^{\top}\bv\|_2^2
        \leq
        \|\bv\|_2^2 .
        \label{eq:tangent-projection-singular-values}
    \end{equation}
\end{lemma}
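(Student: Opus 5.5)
This is the standard reach-based tangent-graph argument; I would take the classical Federer/Niyogi--Smale--Weinberger estimates for granted. Fix $\by\in\gM$ and write $\kappa':=1\wedge\kappa\le\kappa$. Everything rests on two reach inequalities. First, for $\bz,\bz'\in\gM$ with $\|\bz-\bz'\|_2$ small relative to $\kappa$, the angle between $T_{\bz}\gM$ and $T_{\bz'}\gM$ satisfies
\[
\cos\angle(T_{\bz}\gM,T_{\bz'}\gM)\ \ge\ \sqrt{1-2\|\bz-\bz'\|_2/\kappa}.
\]
Second, Federer's bound gives $\|P_{T_{\by}\gM}^{\perp}(\bz-\by)\|_2\le\|\bz-\by\|_2^2/(2\kappa)$.

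\textbf{Step 1: singular-value bound.} For $\bz\in\gM\cap\gB(\by,r_{\rm ch})$ and $\bv\in T_{\bz}\gM$, the upper inequality in \cref{eq:tangent-projection-singular-values} holds trivially because $\bV_{\by}^\top$ is a partial isometry. For the lower inequality, the angle bound gives $\|P_{T_{\by}}\bv\|_2^2\ge(1-2\|\bz-\by\|_2/\kappa)\|\bv\|_2^2\ge(1-2r_{\rm ch}/\kappa)\|\bv\|_2^2$. Since $c_{\rm ch}\le1/8$, we have $2r_{\rm ch}/\kappa\le1/4$. Consequently $\od\pi_{\by}$ is injective, with singular values in $[\sqrt{3}/2,1]$, on the whole ball piece $\gM\cap\gB(\by,r_{\rm ch})$.

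\textbf{Step 2: define $U_{\by}$.} Let $U_{\by}$ be the connected component containing $\by$ of $\gM\cap\gB(\by,r_{\rm ch})\cap\pi_{\by}^{-1}(B^d(0,\tfrac34 r_{\rm ch}))$. This set is open in $\gM$. By Step 1, $\pi_{\by}$ restricted to it is a local diffeomorphism.

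\textbf{Step 3: injectivity.} This is the main obstacle. Suppose $\bz\ne\bz'\in U_{\by}$ satisfy $\pi_{\by}(\bz)=\pi_{\by}(\bz')$. Then $\bz-\bz'$ is normal to $T_{\by}\gM$. On the other hand, Federer's bound applied at $\bz$ gives
\[
\|P^\perp_{T_{\bz}}(\bz'-\bz)\|_2\le\|\bz-\bz'\|_2^2/(2\kappa)\le r_{\rm ch}\|\bz-\bz'\|_2/\kappa.
\]
So $\bz'-\bz$ is almost tangent at $\bz$. Combined with the small angle between $T_{\bz}\gM$ and $T_{\by}\gM$, this forces $\|P_{T_{\by}}(\bz'-\bz)\|_2\ge c\|\bz-\bz'\|_2>0$ once $c_{\rm ch}$ is small, which is a contradiction. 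This is where $c_{\rm ch}$ must be fixed numerically, for instance by requiring $\sqrt{1-2c_{\rm ch}}-2c_{\rm ch}>0$. Injectivity together with Step 1 and the inverse function theorem make $\pi_{\by}$ a $C^\infty$ diffeomorphism onto its open image.

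\textbf{Step 4: containment of the half ball.} The inclusion $U_{\by}\subset\gB(\by,r_{\rm ch})$ holds by construction. For the reverse inclusion, take $\bz\in\gM\cap\gB(\by,r_{\rm ch}/2)$. Then $\|\pi_{\by}(\bz)\|_2<r_{\rm ch}/2<\tfrac34r_{\rm ch}$, so it remains to show that $\bz$ lies in the same component as $\by$. Use the inverse graph map $\psi=\pi_{\by}^{-1}$ on $U_{\by}$. By Step 1 it is Lipschitz with constant at most $2/\sqrt3$, and by Federer its normal part is quadratically small. Hence $\psi$ extends along radial segments in the disc $B^d(0,\tfrac34r_{\rm ch})$ while staying inside $\gB(\by,r_{\rm ch})$, since $\|\psi(\bu)-\by\|_2\le\|\bu\|_2(1+O(r_{\rm ch}/\kappa))<r_{\rm ch}$. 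A continuation argument then shows that $\pi_{\by}(U_{\by})$ is the full disc $B^d(0,\tfrac34r_{\rm ch})$.

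The continuation works as follows. The image of $U_{\by}$ is open. It is also relatively closed in the disc, by compactness of $\gM$ together with the uniform Lipschitz bound, so it equals the whole disc. Now $\bz$ and $\psi(\pi_{\by}(\bz))$ both lie in $\gM\cap\gB(\by,r_{\rm ch})$, have the same projection, and are each joined to $\by$ within the set. Repeating the Step 3 injectivity argument on all of $\gM\cap\gB(\by,r_{\rm ch})$, which is valid because Step 3 used only membership in that ball, gives $\bz=\psi(\pi_{\by}(\bz))\in U_{\by}$. This completes the proof of the inclusion $\gM\cap\gB(\by,r_{\rm ch}/2)\subset U_{\by}$.
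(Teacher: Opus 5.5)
Your proof is correct, and it differs from the paper's mainly in what is proved rather than cited. The paper invokes the local-graph theorem for positive-reach manifolds (Federer's tubular-neighborhood and quadratic distance-to-tangent bounds, the Boissonnat--Lieutier--Wintraecken tangent-variation estimate, and the inverse function theorem) to assert that $\gM\cap\gB(\by,r_{\rm ch})$ is a graph over $T_{\by}\gM$. It then derives \cref{eq:tangent-projection-singular-values} from the projector bound $\|P_{T_{\bz}\gM}-P_{T_{\by}\gM}\|_{\rm op}\le 2\|\bz-\by\|_2/\kappa$, using $\|P_{(T_{\by}\gM)^\perp}\bv\|_2^2\le\|P_{T_{\bz}\gM}-P_{T_{\by}\gM}\|_{\rm op}$ for unit $\bv$.

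You instead get the singular-value bound directly from the Niyogi--Smale--Weinberger cosine bound, which gives the factor $1-2r_{\rm ch}/\kappa$ without the ``operator norm at most one'' step. You also actually prove the graph property. Your Step~3 injectivity argument, Federer's global quadratic bound at $\bz$ combined with the tangent-angle bound, is sound. Writing $c:=r_{\rm ch}/\kappa\le c_{\rm ch}\le 1/8$, it gives $\|P_{T_{\by}\gM}(\bz'-\bz)\|_2\ge\bigl[\sqrt{1-2c}\,(1-c)-c\bigr]\|\bz'-\bz\|_2$. This is automatically positive, so any $c_{\rm ch}\le1/8$ works and no further numerical restriction is needed. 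Your route is therefore more self-contained than the paper's, modulo the two classical reach inequalities.

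You can shorten it considerably. As you note in Step~4, Step~3 uses only $\bz,\bz'\in\gM\cap\gB(\by,r_{\rm ch})$, so $\pi_{\by}$ is injective on that whole set. Together with Step~1, it is then an injective map between $d$-manifolds with everywhere invertible differential, hence a $C^\infty$ diffeomorphism onto an open image. So $U_{\by}:=\gM\cap\gB(\by,r_{\rm ch})$ already satisfies every claim, and the sandwich inclusion is trivial. The connected-component definition, the radial continuation, and the surjectivity onto $B^d(0,\tfrac34 r_{\rm ch})$ in Steps~2 and~4 can be dropped.

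If you keep them, note one imprecision: the global $2/\sqrt3$-Lipschitz claim for $\psi$ does not follow from a derivative bound before you know its domain is convex. It is also not needed, since the bootstrap estimate $\|\psi(\bu)-\by\|_2\le(1+\Ord(r_{\rm ch}/\kappa))\|\bu\|_2$ together with compactness of $\gM$ already gives the closedness step.
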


\begin{proof}
    We use the standard local-graph theorem for manifolds of positive reach.
    For a closed embedded $C^2$ manifold of reach at least $\kappa$, Federer \citep[Theorem~4.8(12)]{federer1959curvature} gives the tubular-neighborhood property for normal segments of length below $\kappa$, hence the usual tangent-ball exclusion, while \citep[Theorem~4.8(7)]{federer1959curvature} gives the quadratic distance-to-tangent-plane bound.
    The tangent variation estimate recorded for instance in \citep[Corollary~3]{boissonnat2019reach}, together with the local inverse theorem, then gives a numerical $c_{\rm ch}\le 1/8$, depending only on $d$, such that $\gM\cap\gB(\by,c_{\rm ch}\kappa)$ is a $C^\infty$ graph over $T_{\by}\gM$.
    If $\kappa\ge1$, the smaller radius $c_{\rm ch}$ is also admissible.
    Thus $r_{\rm ch}=c_{\rm ch}(1\wedge\kappa)$ is admissible uniformly in $\by$.
    Equivalently, the orthogonal projection onto $T_{\by}\gM$ is one-to-one on this patch and is a $C^\infty$ diffeomorphism onto its image.
    Thus there is an open set $U_{\by}\subset\gM$ with
    \[
        \gM\cap\gB(\by,r_{\rm ch}/2)\subset U_{\by}
        \subset \gM\cap\gB(\by,r_{\rm ch})
    \]
    such that $\pi_{\by}$ is a $C^\infty$ diffeomorphism onto its image.

    It remains to record the quantitative derivative bound used below.
    A standard consequence of reach is the tangent-space variation estimate \citep[Corollary~3]{boissonnat2019reach}
    \begin{equation*}
        \|P_{T_{\bz}\gM}-P_{T_{\by}\gM}\|_{\rm op}
        \leq
        \frac{2\|\bz-\by\|_2}{\kappa},
        \qquad
        \|\bz-\by\|_2<\kappa/2,
    \end{equation*}
    where $P_T$ denotes the orthogonal projection onto a subspace $T$.
    If $\bz\in U_{\by}$ and $\bv\in T_{\bz}\gM$ with $\|\bv\|_2=1$, then
    \[
        \|P_{(T_{\by}\gM)^\perp}\bv\|_2
        =
        \|(P_{T_{\bz}\gM}-P_{T_{\by}\gM})\bv\|_2
        \le
        \|P_{T_{\bz}\gM}-P_{T_{\by}\gM}\|_{\rm op}.
    \]
    Because $r_{\rm ch}\le\kappa/8$, this last quantity is at most $1$.
    Hence
    \begin{align*}
        \|\bV_{\by}^{\top}\bv\|_2^2
        &=
        \|P_{T_{\by}\gM}\bv\|_2^2
        =
        1-\|P_{(T_{\by}\gM)^\perp}\bv\|_2^2
        \\
        &\geq
        1-\|P_{T_{\bz}\gM}-P_{T_{\by}\gM}\|_{\rm op}
        \geq
        1-\frac{2r_{\rm ch}}{\kappa}.
    \end{align*}
    The upper bound in \cref{eq:tangent-projection-singular-values} is immediate because $\bV_{\by}^{\top}$ is an orthogonal projection.
    Homogeneity gives the bound for general $\bv$.
\end{proof}

By compactness, choose points $\by_1,\dots,\by_{C_{\gM}}\in\gM$ such that
\begin{equation*}
    \gM
    \subset
    \bigcup_{i=1}^{C_{\gM}}
    \bigl(\gM\cap\gB(\by_i,r_{\rm ch}/2)\bigr).
\end{equation*}
Let $U_i:=U_{\by_i}$ and let $\bV_i:=\bV_{\by_i}$.
The covering number may be chosen to satisfy the standard reach-volume bound
\begin{equation}
    C_{\gM}
    \leq
    C_d\,S_{\gM}\,r_{\rm ch}^{-d}
    \le
    C_d\,S_{\gM}\,(1\vee\kappa^{-1})^d,
    \label{eq:manifold:CM}
\end{equation}
where $C_d$ depends only on $d$.
Indeed, take a maximal $r_{\rm ch}/4$-separated subset of $\gM$; the balls $\gM\cap \gB(\by_i,r_{\rm ch}/2)$ cover $\gM$, while the intrinsic volumes of the disjoint smaller balls are bounded below by $c_d r_{\rm ch}^d$ because the second fundamental form is bounded by $\kappa^{-1}$ on the reach scale.

For any $\by \in U_i$, define the chart by applying the tangent projection from \cref{lem:projection-chart-reach} and then rescaling and translating the resulting coordinate image:
\begin{equation}
    \phi_i(\by) := \eta_i\bigl(\bV_i^\top(\by - \by_i) + \bc_i\bigr), 
    \label{eq:approx:proj:phi-i}
\end{equation}
where $\eta_i \in (0, 1]$ and $\bc_i \in \R^d$ are chosen so that $\phi_i(U_i) \subseteq [0, 1]^d$.
Indeed,
\[
    \phi_i(\by)=\eta_i\bigl(\pi_{\by_i}(\by)+\bc_i\bigr),
    \qquad
    \pi_{\by_i}(\by)=\bV_i^\top(\by-\by_i).
\]
Since $\pi_{\by_i}$ is a diffeomorphism on $U_i$ by \cref{lem:projection-chart-reach}, and the map $\bu\mapsto\eta_i(\bu+\bc_i)$ is an invertible affine map of $\R^d$, so is $\phi_i$.
Note that each $\phi_i$ is the restriction of an affine map on $\R^D$, and therefore can be exactly represented by a one-hidden-layer ReLU network.

Let
\begin{equation*}
    \phi_i^{-1}: \phi_i(U_i) \to U_i \subset \gM.
\end{equation*}
The following proposition establishes a uniform bound on the volume distortion within each local chart:
\begin{proposition}[Uniform Jacobian bounds for projection charts]\label{prop:manifold:det-Jacob-proj}
    Under \cref{assump:manifold:exact}, the Jacobian determinant of the inverse chart $\phi_i^{-1}$ satisfies, for every $\bu\in\phi_i(U_i)$,
    \begin{equation}
        \sqrt{\det(\nabla\phi_i^{-1}(\bu)^\top\nabla\phi_i^{-1}(\bu))} 
        \leq 
        \frac{1}{\eta_i^d}
        \Bigl(
          1 - \frac{2r_{\rm ch}}{\kappa}
        \Bigr)^{-d/2}.
        \label{eq:chartwise-Jacobian-bound}
    \end{equation}
\end{proposition}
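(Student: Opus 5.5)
The bound follows from the chain rule applied to $\phi_i$, together with the singular-value estimate \cref{eq:tangent-projection-singular-values} from \cref{lem:projection-chart-reach}. Write $\phi_i=a_i\circ\pi_{\by_i}$ with the affine map $a_i(\bv)=\eta_i(\bv+\bc_i)$, so that $\phi_i^{-1}=\pi_{\by_i}^{-1}\circ a_i^{-1}$ and $a_i^{-1}(\bu)=\eta_i^{-1}\bu-\bc_i$. Then, for $\bv=a_i^{-1}(\bu)$,
\[
    \nabla\phi_i^{-1}(\bu)=\eta_i^{-1}\nabla\pi_{\by_i}^{-1}(\bv).
\]
The Gram determinant therefore scales as
\[
    \det\bigl(\nabla\phi_i^{-1}(\bu)^\top\nabla\phi_i^{-1}(\bu)\bigr)=\eta_i^{-2d}\det\bigl(\bJ^\top\bJ\bigr),
    \qquad
    \bJ:=\nabla\pi_{\by_i}^{-1}(\bv)\in\R^{D\times d}.
\]
This accounts for the factor $\eta_i^{-d}$ after taking square roots.

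Next I control $\det(\bJ^\top\bJ)$. Set $\bz=\pi_{\by_i}^{-1}(\bv)\in U_i$. Since $\pi_{\by_i}\circ\pi_{\by_i}^{-1}=\mathrm{id}$ and $\pi_{\by_i}$ is the restriction of the linear map $\bV_i^\top(\cdot-\by_i)$, differentiation gives $\bV_i^\top\bJ=\bI_d$. The columns of $\bJ$ span $T_{\bz}\gM$, so $\bJ\bw\in T_{\bz}\gM$ for every $\bw\in\R^d$. Applying \cref{eq:tangent-projection-singular-values} to $\bJ\bw$ yields
\[
    \|\bw\|_2^2=\|\bV_i^\top\bJ\bw\|_2^2\ge(1-2r_{\rm ch}/\kappa)\|\bJ\bw\|_2^2.
\]
Hence $\bJ^\top\bJ\preceq(1-2r_{\rm ch}/\kappa)^{-1}\bI_d$. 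The positivity of $1-2r_{\rm ch}/\kappa$ follows from $r_{\rm ch}\le\kappa/8$. All $d$ eigenvalues of $\bJ^\top\bJ$ are therefore at most $(1-2r_{\rm ch}/\kappa)^{-1}$, so $\det(\bJ^\top\bJ)\le(1-2r_{\rm ch}/\kappa)^{-d}$.

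Combining the two steps and taking square roots gives \cref{eq:chartwise-Jacobian-bound}. The only step needing care is the identity $\bV_i^\top\bJ=\bI_d$ together with the observation that $\bJ$ maps into $T_{\bz}\gM$, which is what licenses applying the lemma's singular-value estimate at $\bz$ rather than at the center $\by_i$. Everything else is routine linear algebra.
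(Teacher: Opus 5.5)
Your proof is correct and is essentially the paper's argument. The paper restricts $\oD\phi_i(\bz)$ to $T_{\bz}\gM$, uses \cref{eq:tangent-projection-singular-values} to bound its singular values below by $\eta_i(1-2r_{\rm ch}/\kappa)^{1/2}$, and then inverts. Your identity $\bV_i^\top\bJ=\bI_d$, combined with the same lemma applied at $\bz$, makes that inversion step explicit, with the $\eta_i$ factor pulled out first through the affine map.
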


\begin{proof}
    Fix $i$ and write $\bz=\phi_i^{-1}(\bu)$.
    The derivative of $\phi_i$ at $\bz$, restricted to $T_{\bz}\gM$, is
    \begin{equation*}
        \oD\phi_i(\bz)|_{T_{\bz}\gM}
        =
        \eta_i\,\bV_i^\top|_{T_{\bz}\gM}.
    \end{equation*}
    By \cref{eq:tangent-projection-singular-values}, every singular value of $\bV_i^\top|_{T_{\bz}\gM}$ is at least $(1-2r_{\rm ch}/\kappa)^{1/2}$ and at most $1$.
    Hence every singular value of $\oD\phi_i(\bz)|_{T_{\bz}\gM}$ is at least $\eta_i(1-2r_{\rm ch}/\kappa)^{1/2}$.
    Since $\nabla\phi_i^{-1}(\bu)$ is the inverse linear map from $\R^d$ to $T_{\bz}\gM$, its singular values are bounded above by $\eta_i^{-1}(1-2r_{\rm ch}/\kappa)^{-1/2}$.
    Multiplying the $d$ singular values gives \cref{eq:chartwise-Jacobian-bound}.
\end{proof}

We define
\begin{equation}
    B^{\rm Jac} := \sup_i\eta_i^{-d}(1-2r_{\rm ch}/\kappa)^{-d/2}.     
\end{equation}

\subsubsection{Partition-of-unity amplitudes and density regularity}\label{sec:app:approx:manifold:density}

Let $\{\rho_i\}_{i=1}^{C_{\gM}}$ be a $C^\infty$ partition of unity subordinate to the finite projection atlas $\{U_i\}_{i=1}^{C_{\gM}}$.
After a finite refinement of the projection cover, we choose it with compactly supported chart pieces
\begin{equation*}
    \supp_{\gM}(\rho_i)\Subset U_i,
    \qquad
    i=1,\dots,C_{\gM},
\end{equation*}
which is possible because the smaller sets $\gM\cap\gB(\by_i,r_{\rm ch}/2)$ still cover the compact manifold and each is compactly contained in $U_i$.
Let
\begin{equation}
    \gS_i := \supp_{\gM}(\rho_i) \Subset U_i, 
    \qquad
    i = 1, \dots, C_{\gM},
    \label{eq:def:domain:S_i}
\end{equation}
be the compact support of the partition of unity on chart $i$.
The refinement is chosen so that, for every resulting index, the coordinate support $\phi_i(\gS_i)$ is contained in nested axis-parallel coordinate boxes compactly contained in $\phi_i(U_i)$.
We fix real numbers
\[
    a_{i,r}^{\Box}
    <
    a_{i,r}^{\circ}
    <
    b_{i,r}^{\circ}
    <
    b_{i,r}^{\Box},
    \qquad r=1,\dots,d,
\]
so that, with
\begin{equation}
    K_i^\circ
    :=
    \prod_{r=1}^d
    [a_{i,r}^{\circ},b_{i,r}^{\circ}],
    \qquad
    \Box_i
    :=
    \prod_{r=1}^d
    [a_{i,r}^{\Box},b_{i,r}^{\Box}],
    \label{eq:def:buffered-coordinate-boxes}
\end{equation}
we have
\begin{equation}
    \phi_i(\gS_i)
    \subset
    \operatorname{int}(K_i^\circ)
    \subset
    K_i^\circ
    \Subset
    \operatorname{int}(\Box_i)
    \subset
    \Box_i
    \Subset
    \phi_i(U_i).
    \label{eq:def:Ki-circ}
\end{equation}
These fixed boxes are used both by the tangent-cell construction and, under the density lower bound, by the projection-centered Laplace expansion.
This only changes the finite atlas cardinality by a geometry-dependent factor.
Since $\supp(\rho_i)\subset U_i$, we may insert $1=\sum_{i=1}^{C_{\gM}}\rho_i$ into the manifold integral and then change variables chart by chart.
For each chart $i$, define the pulled-back partition weight
\begin{equation}
    \bar{\rho}_i(\bu):=(\rho_i\circ \phi_i^{-1})(\bu),
    \qquad
    \bu\in \phi_i(U_i).
    \label{eq:def:rho-bar}
\end{equation}

For each chart $i \in \{1, \dots, C_{\gM}\}$, define $\bz_i: \phi_i(U_i) \to U_i, \bJ_i: \phi_i(U_i) \to \R^{D \times d}, \bG_i: \phi_i(U_i) \to \R^{d \times d}$:
\begin{align}
    \bz_i(\bu) &:= \phi_i^{-1}(\bu) \in U_i \subset \gM,
    \label{eq:def:z-i}
    \\
    \bJ_i(\bu) &:= \nabla \bz_i(\bu)\in \R^{D\times d},
    \label{eq:def:J-i}
    \\
    \bG_i(\bu) &:= \bJ_i(\bu)^\top \bJ_i(\bu)\in \R^{d\times d}. 
    \label{eq:def:G-i}
\end{align}

Then, the density of $\bX_t = m_t\bX_0 + \sigma_t\bZ, \bZ \sim \gN(\bzero, \bI_D)$ is
\begin{align}
    p_t(\bx)
    &=
    \frac{1}{(2\pi\sigma_t^2)^{D/2}}
    \int_{\gM}
      \exp\Bigl(
        -\frac{\|\bx - m_t\by\|_2^2}{2\sigma_t^2}
      \Bigr)
      p_0(\by)
    \od\vol_{\gM}(\by) 
    \notag \\
    &= 
    \frac{1}{(2\pi\sigma_t^2)^{D/2}}
    \sum_{i=1}^{C_{\gM}}
    \int_{\gM}
      \rho_i(\by)
      \exp\Bigl(
        -\frac{\|\bx - m_t\by\|_2^2}{2\sigma_t^2}
      \Bigr)
      p_0(\by)
    \od\vol_{\gM}(\by) 
    \notag \\
    &=
    \frac{1}{(2\pi\sigma_t^2)^{D/2}}
    \sum_{i=1}^{C_{\gM}}
    \int_{\phi_i(U_i)}
      \bar{\rho}_i(\bu)
      \exp\Bigl(
        -\frac{\|\bx - m_t\phi_i^{-1}(\bu)\|_2^2}{2\sigma_t^2}
      \Bigr)
      p_0(\phi_i^{-1}(\bu))
      \sqrt{(\det\nabla\phi_i^{-1}(\bu)^\top\nabla\phi_i^{-1}(\bu))}
    \od\bu
    \notag \\
    &=
    \frac{1}{(2\pi\sigma_t^2)^{D/2}}
    \sum_{i=1}^{C_{\gM}}
    \int_{\phi_i(U_i)}
      \bar{\rho}_i(\bu)
      \exp\Bigl(
        -\frac{\|\bx - m_t\bz_i(\bu)\|_2^2}{2\sigma_t^2}
      \Bigr)
      p_0(\bz_i(\bu))
      \sqrt{\det\bG_i(\bu)}
    \od\bu
    \label{eq:def:manifold:density}
\end{align}

\begin{lemma}[H\"older regularity in projection coordinates]
\label{lem:holder-transfer-projection-atlas}
    Assume $p_0\in\gH^\beta(\gM,B_0)$ in the sense of \cref{def:manifold:Holder-smooth}.
    Then each projection-coordinate representative $p_0\circ\bz_i$ belongs to $\gH^\beta(\phi_i(U_i))$, and
    \begin{equation*}
        \max_{1\le i\le C_{\gM}}
        \|p_0\circ\bz_i\|_{\gH^\beta(\phi_i(U_i))}
        <\infty .
    \end{equation*}
    The resulting bound depends on $B_0$, $\beta$, and finitely many $C^{\ceil{\beta}}$-norms of transition maps between the original smooth atlas and the projection atlas.
\end{lemma}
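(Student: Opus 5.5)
The plan is to write each projection-coordinate representative as a composition of the original-atlas representative with a transition map, and then apply the chain rule together with a H\"older composition estimate. Fix a projection chart index $i$ and a point $\bu\in\phi_i(U_i)$. Since the original atlas $\{(V_j,\psi_j)\}_{j\in\gA}$ covers $\gM$ and $\overline{\phi_i(U_i)}$ can be taken compact (after the harmless shrinking already used for the boxes $\Box_i$), we can choose a Lebesgue number for the cover $\{\phi_i(U_i\cap V_j)\}_j$. This splits $\phi_i(U_i)$ into finitely many pieces, each contained in some $\phi_i(U_i\cap V_j)$. On such a piece,
\[
    p_0\circ\bz_i = f^{(j)}\circ T_{ji},
    \qquad
    T_{ji}:=\psi_j\circ\phi_i^{-1}.
\]
Here $T_{ji}$ is $C^\infty$, being a composition of the smooth inverse projection chart from \cref{lem:projection-chart-reach} with a smooth chart. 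On a compact subset its $C^{\ceil{\beta}}$ norm is finite, and we denote it by $\Lambda$.

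For the derivative bounds (i), we use the multivariate Fa\`a di Bruno formula. For $|\balpha|\le\floor{\beta}$, $\partial^{\balpha}(f^{(j)}\circ T_{ji})$ is a finite sum of terms $(\partial^{\bgamma}f^{(j)})\circ T_{ji}$ with $|\bgamma|\le|\balpha|$, each multiplied by products of derivatives of $T_{ji}$ of order at most $|\balpha|$. This gives $\sup|\partial^{\balpha}(p_0\circ\bz_i)|\le C_{\beta,d}(1\vee\Lambda)^{\floor{\beta}}B_0$.

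For the H\"older seminorm (ii), take $|\balpha|=\floor{\beta}$ and write each Fa\`a di Bruno term as $g\circ T_{ji}\cdot h$, where $g=\partial^{\bgamma}f^{(j)}$ and $h$ is a polynomial in derivatives of $T_{ji}$. We treat the two types of term separately.

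\textbf{Top-order terms, $|\bgamma|=\floor{\beta}$.} Here $g$ is $(\beta-\floor{\beta})$-H\"older with constant $B_0$, and $T_{ji}$ is Lipschitz with constant $\Lambda$. So $g\circ T_{ji}$ is H\"older with constant $B_0\Lambda^{\beta-\floor{\beta}}$.

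\textbf{Lower-order terms, $|\bgamma|<\floor{\beta}$.} Here $g$ is Lipschitz with constant $B_0$, since its gradient is bounded by $B_0$ by (i). On a bounded domain, Lipschitz implies $(\beta-\floor{\beta})$-H\"older with a diameter factor.

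In both cases $h$ is $C^1$ with norm controlled by $\Lambda^{C}$. The product rule for H\"older seminorms then gives a bound $C_{\beta,d}(1\vee\Lambda)^{C_\beta}B_0$ on each piece.

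The main obstacle is passing from the piecewise bounds to the global H\"older seminorm for pairs $\bu,\bv$ that lie in different pieces. Note that $\phi_i(U_i)$ need not be convex, so we cannot simply interpolate along the segment between them. If $\|\bu-\bv\|$ is below the Lebesgue number $\ell$, both points lie in a common piece and the local bound applies. Otherwise we bound the difference quotient crudely by $2\sup|\partial^{\balpha}|\cdot\ell^{-(\beta-\floor{\beta})}$, using (i). The constant $\ell$, like $\Lambda$, depends only on the fixed pair of atlases. Taking the maximum over the finitely many $i$ yields the finite bound claimed in \cref{lem:holder-transfer-projection-atlas}. This bound depends only on $B_0$, $\beta$, and the transition-map norms, and is linear in $B_0$.
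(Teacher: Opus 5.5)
Your proposal is correct. Its local step is the same as the paper's: the paper also writes $p_0\circ\bz_i=(p_0\circ\psi_\alpha^{-1})\circ(\psi_\alpha\circ\bz_i)$ on finitely many pieces $\phi_i(W_{i,\alpha})$, with $W_{i,\alpha}=U_i\cap V_\alpha'$ and $V_\alpha'\Subset V_\alpha$. It simply cites the standard H\"older composition theorem at the point where you work it out via Fa\`a di Bruno and the top-order/lower-order split.

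The genuine difference is how you globalize. The paper chooses a smooth partition of unity $\{\chi_{i,\alpha}\}$ subordinate to $\{W_{i,\alpha}\}$ and writes $p_0\circ\bz_i=\sum_\alpha(\chi_{i,\alpha}p_0)\circ\bz_i$. It controls each summand by the algebra property of H\"older spaces and extends it by zero to $\phi_i(U_i)$. This costs constants from the cutoff derivatives and from the buffer between $\supp\chi_{i,\alpha}$ and $U_i\setminus W_{i,\alpha}$. You drop the auxiliary cutoffs and bound the global seminorm directly, splitting pairs at the Lebesgue number $\ell$.

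Your route is more elementary and makes the constant explicit: linear in $B_0$, polynomial in $1\vee\Lambda$, with a factor $\ell^{-(\beta-\floor{\beta})}$. The paper's route is more modular, reducing everything to textbook facts. Its extension-by-zero step is, however, itself a near/far estimate of exactly the kind you carry out by hand.

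Two details to state when you write it out. First, take the Lebesgue number in ball form, meaning every $B(\bu,\ell)$ with $\bu\in\phi_i(U_i)$ lies in a single overlap. Then the segment $[\bu,\bv]$ stays in that overlap, so your derivative bounds on $T_{ji}$, on $h$, and on the lower-order compositions $g\circ T_{ji}$ really yield Lipschitz bounds despite the non-convexity. Second, use shrunk sets $V_j'\Subset V_j$, as the paper does, so that $\Lambda$ is a uniform $C^{\ceil{\beta}}$ bound on each whole overlap rather than only on compact subsets.

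Both points, and the existence of $\ell$ itself, require the projection chart to extend slightly beyond $\overline{U_i}$. The paper uses this tacitly as well, when it bounds the transition maps on the compact closure of $\phi_i(W_{i,\alpha})$.
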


\begin{proof}
    Fix a projection chart $(U_i,\phi_i)$.
    Since $\overline{U_i}$ is compact in $\gM$, it is covered by finitely many relatively compact chart neighborhoods $V_\alpha'\Subset V_\alpha$, where $(V_\alpha,\psi_\alpha)$ are charts from the atlas used to define $\gH^\beta(\gM,B_0)$.
    Let $W_{i,\alpha}:=U_i\cap V_\alpha'$.
    On $\phi_i(W_{i,\alpha})$,
    \begin{equation*}
        p_0\circ\bz_i
        =
        (p_0\circ\psi_\alpha^{-1})
        \circ
        (\psi_\alpha\circ\bz_i).
    \end{equation*}
    The first factor has $\gH^\beta$-norm at most $B_0$ by assumption.
    The transition map $\psi_\alpha\circ\bz_i$ is $C^\infty$, and all of its derivatives up to order $\ceil{\beta}$ are bounded on the compact closure of $\phi_i(W_{i,\alpha})$.
    The standard composition theorem for H\"older classes, proved by the chain rule for derivatives up to order $\floor{\beta}$ and the H\"older bound for the top derivative, gives
    \begin{equation*}
        \|p_0\circ\bz_i\|_{\gH^\beta(\phi_i(W_{i,\alpha}))}
        \le
        C_{i,\alpha}B_0 ,
    \end{equation*}
    where $C_{i,\alpha}$ depends only on the stated transition-map norms.

    To pass from these local estimates to the full coordinate domain $\phi_i(U_i)$, choose a smooth partition of unity $\{\chi_{i,\alpha}\}_\alpha$ on $U_i$ subordinate to the finite cover $\{W_{i,\alpha}\}_\alpha$, and write
    \[
        p_0\circ\bz_i
        =
        \sum_\alpha
        \bigl((\chi_{i,\alpha}p_0)\circ\bz_i\bigr).
    \]
    Each summand has compact support in $\phi_i(W_{i,\alpha})$.
    Extending it by zero to $\phi_i(U_i)$ preserves its $\gH^\beta$-norm up to a constant depending only on the cutoff derivatives and the positive buffer between $\supp\chi_{i,\alpha}$ and $U_i\setminus W_{i,\alpha}$.
    The algebra property of H\"older spaces controls multiplication by $\chi_{i,\alpha}\circ\bz_i$.
    Since the number of subpatches is finite, summing the extended pieces gives a finite $\gH^\beta(\phi_i(U_i))$ bound.
    Taking the maximum over the finite projection atlas gives the claim.
\end{proof}

By \cref{lem:holder-transfer-projection-atlas}, the pulled-back density has a finite chartwise H\"older norm.
We write
\begin{equation}
    B_p^{\rm proj}
    :=
    \max_{1\le i\le C_{\gM}}
    \|p_0\circ\bz_i\|_{\gH^\beta(\phi_i(U_i))}
    <\infty .
    \label{eq:def:Bp-proj}
\end{equation}
The constant $B_p^{\rm proj}$ depends on $B_0$, $\beta$, and the finite atlas transition maps; it is generally not equal to $B_0$.

Similarly to $p_t(\bx)$, the gradient of the density function is given by
\begin{align}
    &
    \nabla p_t(\bx)
    =
    \frac{1}{(2\pi\sigma_t^2)^{D/2}}
    \int_{\gM}
      \exp\Bigl(
        -\frac{\|\bx - m_t\by\|_2^2}{2\sigma_t^2}
      \Bigr)
      \Bigl(
        -\frac{\bx-m_t\by}{\sigma_t^2}
      \Bigr)
      p_0(\by)
    \od\vol_{\gM}(\by) 
    \notag \\
    &=
    \frac{1}{(2\pi\sigma_t^2)^{D/2}}
    \sum_{i=1}^{C_{\gM}}
    \int_{\phi_i(U_i)}
      \bar{\rho}_i(\bu)
      \exp\Bigl(
        -\frac{\|\bx - m_t\bz_i(\bu)\|_2^2}{2\sigma_t^2}
      \Bigr)
      \Bigl(
        -\frac{\bx-m_t\bz_i(\bu)}{\sigma_t^2}
      \Bigr)
      p_0(\bz_i(\bu))
      \sqrt{\det\bG_i(\bu)}
    \od\bu. 
    \label{eq:def:manifold:grad-density}
\end{align}

Define $a_i^{(\gQ)}: \phi_i(U_i) \to \R$ and $\ba_i^{(\gP)}: \phi_i(U_i) \to \R^D$:
\begin{equation}
    a_i^{(\gQ)}(\bu)
    := 
    \bar{\rho}_i(\bu)\sqrt{\det\bG_i(\bu)}p_0(\bz_i(\bu)),
    \qquad
    \ba_i^{(\gP)}(\bu)
    :=
    \bz_i(\bu)a_i^{(\gQ)}(\bu). 
    \label{eq:def:a-i}
\end{equation}
For $k=1,\dots,D$, write $a_i^{(\gP_k)}:=a_{i,k}^{(\gP)}$, and define the integral-type set
\[
    \mathcal R:=\{\gQ,\gP_1,\dots,\gP_D\}.
\]
Thus $R\in\mathcal R$ denotes a type, not the already summed global integral; $a_i^{(R)}$ denotes $a_i^{(\gQ)}$ when $R=\gQ$, and $a_i^{(\gP_k)}$ when $R=\gP_k$.

The following lemma shows that $a_i^{(\gQ)}$ and $\ba_i^{(\gP)}$ are $\beta$-H\"older smooth on $\phi_i(U_i)$:
\begin{lemma}[Chart-amplitude H\"older regularity]\label{lem:a-i:regularity}
    Assume \cref{assump:manifold:exact,assump:manifold:density} hold.
    There is a finite constant $B_a$ such that
    \begin{equation}
        a_i^{(\gQ)}, a_{i,k}^{(\gP)} 
        \in 
        \gH^\beta(\phi_i(U_i), B_a),
        \qquad k = 1, \dots, D. 
        \label{eq:a-i:holder}
    \end{equation}
    One may take
    \begin{equation*}
        B_a = 
        c_{\beta}B_p^{\rm proj}(1\vee B^{\rm Jac})
        \Biggl(1\vee
        \max_{i, k} \max_{\|\balpha\|_1 \leq \ceil{\beta}} 
        \Bigl\{ 
          \|\partial^{\balpha}\bar{\rho}_i\|_\infty 
          \vee 
          \|\partial^{\balpha}\sqrt{\det\bG_i}\|_\infty 
          \vee 
          \|\partial^{\balpha}z_{i,k}\|_\infty 
        \Bigr\}
        \Biggr). 
    \end{equation*}
    
    Moreover, because the manifold is embedded in $[0,1]^D$, we have the coordinate-wise bounds:
    \begin{equation}
        |a_{i,k}^{(\gP)}(\bu)| \leq a_i^{(\gQ)}(\bu)
        \leq 
        B_p^{\rm proj}B^{\rm Jac}. 
        \label{eq:a-i:bound}
    \end{equation}
\end{lemma}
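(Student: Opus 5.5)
The plan is to treat $a_i^{(\gQ)}$ as a product of three factors and $a_{i,k}^{(\gP)}$ as a product of four. Each factor is controlled in $\gH^\beta(\phi_i(U_i))$, and the Leibniz rule combines them. The factors are:
\begin{itemize}
\item $\bar\rho_i$, which is $C^\infty$ with bounded derivatives;
\item $\sqrt{\det\bG_i}$, which is $C^\infty$ on the chart domain;
\item $p_0\circ\bz_i$, whose $\gH^\beta$ norm is at most $B_p^{\rm proj}$ by \cref{lem:holder-transfer-projection-atlas} and \cref{eq:def:Bp-proj};
\item for $\gP$, additionally the coordinate $z_{i,k}$.
\end{itemize}
The smoothness of $\sqrt{\det\bG_i}$ holds because $\bG_i$ is uniformly nondegenerate: by \cref{eq:tangent-projection-singular-values} and the proof of \cref{prop:manifold:det-Jacob-proj}, its eigenvalues lie in $[\eta_i^{-2},\eta_i^{-2}(1-2r_{\rm ch}/\kappa)^{-1}]$, so the square root of the determinant is a smooth function of $\bG_i$. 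Its sup norm is at most $B^{\rm Jac}$.

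\textbf{Derivative bounds.} For the H\"older estimate, first take a multi-index $\balpha$ with $\|\balpha\|_1\le\floor{\beta}$. Apply the Leibniz rule,
\[
\partial^{\balpha}(fgh)=\sum_{\balpha_1+\balpha_2+\balpha_3=\balpha}\binom{\balpha}{\balpha_1,\balpha_2,\balpha_3}\partial^{\balpha_1}f\,\partial^{\balpha_2}g\,\partial^{\balpha_3}h .
\]
Each term is bounded by a product of sup norms of derivatives of order at most $\ceil\beta$ of the smooth factors, times $B_p^{\rm proj}$. The multinomial sum contributes a factor depending only on $\beta$ and $d$, which goes into $c_\beta$.

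\textbf{Top-order H\"older seminorm.} Next take $\|\balpha\|_1=\floor{\beta}$ and write each Leibniz term as a product $F\cdot G$. Use the elementary inequality
\[
|FG(\bu)-FG(\bv)|\le|F(\bu)-F(\bv)|\,\|G\|_\infty+\|F\|_\infty|G(\bu)-G(\bv)|.
\]
When the difference falls on $\partial^{\balpha_3}(p_0\circ\bz_i)$ with $|\balpha_3|=\floor\beta$, the H\"older seminorm of $p_0\circ\bz_i$ bounds it directly. When it falls on a lower-order derivative of $p_0\circ\bz_i$, or on a smooth factor, I would use the mean value theorem on a convex neighborhood and then interpolate: a difference is at most $\min\{2\|\cdot\|_\infty,\ \|\nabla\cdot\|_\infty\|\bu-\bv\|\}$, which is at most $2\|\cdot\|_{C^1}\|\bu-\bv\|^{\beta-\floor\beta}$.

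\textbf{Main obstacle.} The one delicate point is that $\phi_i(U_i)$ need not be convex, so the mean value theorem does not apply globally. I would handle this by working on the convex box $\Box_i$, or by noting that the relevant amplitudes vanish outside $\phi_i(\gS_i)\subset K_i^\circ$ because of the factor $\bar\rho_i$. For pairs $\bu,\bv$ that are not joined by a segment in the domain, one point lies outside the support, so the difference is bounded by twice the sup norm. Such pairs are separated by at least the buffer distance, which gives a H\"older bound with a geometry constant. Collecting the constants yields the displayed $B_a$, with the max over $\balpha$ and the $1\vee$ factors absorbing all cross terms.

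\textbf{Pointwise bound \cref{eq:a-i:bound}.} Since $\gM\subset[0,1]^D$, we have $0\le z_{i,k}(\bu)\le1$, so $|a_{i,k}^{(\gP)}|=|z_{i,k}|\,a_i^{(\gQ)}\le a_i^{(\gQ)}$. Moreover $0\le\bar\rho_i\le1$, $p_0\circ\bz_i\le B_p^{\rm proj}$, and $\sqrt{\det\bG_i}\le B^{\rm Jac}$ by \cref{prop:manifold:det-Jacob-proj}. Multiplying these gives $a_i^{(\gQ)}\le B_p^{\rm proj}B^{\rm Jac}$.
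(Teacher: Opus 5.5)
The step that fails is your closing claim that collecting constants ``yields the displayed $B_a$, with the max over $\balpha$ and the $1\vee$ factors absorbing all cross terms.''

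Once $\beta>1$, your Leibniz expansion contains terms such as $\partial^{\balpha_1}\bar\rho_i\,\partial^{\balpha_2}\sqrt{\det\bG_i}\,(p_0\circ\bz_i)$ with $\balpha_1,\balpha_2\neq\bzero$. For $a_{i,k}^{(\gP)}$ there is a further factor $\partial^{\balpha_3}z_{i,k}$. Your top-order H\"older step produces the same kind of product, e.g.\ $\|\partial_j\bar\rho_i\|_\infty\,|\sqrt{\det\bG_i}(\bu)-\sqrt{\det\bG_i}(\bv)|\,B_p^{\rm proj}$. Write $M$ for the maximum in the display. Your argument bounds these terms only by $M^2B_p^{\rm proj}$ or $M^3B_p^{\rm proj}$. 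A $1\vee$ absorbs factors of size at most one, not a product of two large derivative norms. So raw Leibniz gives a finite constant of order $c_{\beta,d}B_p^{\rm proj}(1\vee M)^3$, not the linear form $c_\beta B_p^{\rm proj}(1\vee B^{\rm Jac})(1\vee M)$ in the statement.

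The paper gets linearity from the tame product estimate $\|fg\|_{\gH^\beta}\le c_\beta(\|f\|_{\gH^\beta}\|g\|_\infty+\|f\|_\infty\|g\|_{\gH^\beta})$, which it calls the ``algebra property.'' It applies this twice to $a_i^{(\gQ)}=(\bar\rho_i\sqrt{\det\bG_i})(p_0\circ\bz_i)$ and once more to $a_{i,k}^{(\gP)}=z_{i,k}a_i^{(\gQ)}$. In each resulting term only one factor carries its $\gH^\beta$ norm; the others enter through $\|\bar\rho_i\|_\infty\le1$, $\|z_{i,k}\|_\infty\le1$ and $\|\sqrt{\det\bG_i}\|_\infty\le B^{\rm Jac}$. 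That estimate does not follow from the Leibniz rule alone. It needs the interpolation inequality $\|\partial^{j}f\|_\infty\lesssim\|f\|_\infty^{1-j/k}\|f\|_{C^k}^{j/k}$, followed by Young's inequality on each cross term. Either add that ingredient, or state the weaker polynomial constant. The weaker constant is all that is used downstream, since $B_a$ only enters through powers of $\Gamma_{\gM,q_{\rm geom}}$.

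Apart from this, your route is the paper's: the same product decomposition, and your proof of \cref{eq:a-i:bound} is exactly its Step~1. Your attention to the possibly non-convex domain $\phi_i(U_i)$ improves on the paper, which simply asserts $\|\bar\rho_i\|_{\gH^\beta},\|\sqrt{\det\bG_i}\|_{\gH^\beta},\|z_{i,k}\|_{\gH^\beta}\le C_{\rm geom}$ there.

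Your buffer-distance fix, however, introduces a factor $(d_i^\Box)^{-(\beta-\floor{\beta})}$ that is also absent from the displayed $B_a$. You can avoid it. If $\bu$ lies in the support and the segment to $\bv$ leaves $\Box_i$, replace $\bv$ by the last point $\bw\in\partial\Box_i$ of the segment inside $\Box_i$. Every Leibniz term vanishes at both $\bw$ and $\bv$, since both lie outside $K_i^\circ$, and $\|\bu-\bw\|\le\|\bu-\bv\|$. Your segment-based estimate then applies on $[\bu,\bw]\subset\Box_i$ with no buffer constant. This also means you only need derivative bounds of the geometric factors on the compact box $\Box_i$.
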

\begin{proof}[Proof of \cref{lem:a-i:regularity}]
    \textbf{Step 1: Uniform $L^\infty$ bounds.}
    Because $p_0 \circ \bz_i$ is $\beta$-H\"older smooth on the parameter space $\phi_i(U_i) \subset [0, 1]^d$.
    Specifically, its H\"older norm (and consequently its supremum norm) is bounded by:
    \begin{equation}
        p_0 \circ \bz_i
        \in 
        \gH^{\beta}(\phi_i(U_i), B_p^{\rm proj}), 
        \quad \text{for each } i = 1, 2, \dots, C_{\gM}.  
        \label{eq:approx:p-circ-z-i:holder}
    \end{equation}
    
    By definition of the partition of unity, $0 \leq \bar{\rho}_i(\bu) \leq 1$ and by \cref{prop:manifold:det-Jacob-proj}, the Jacobian determinant is uniformly bounded by $B^{\rm Jac}$.
    Multiplying these supremum bounds yields:
    \begin{equation*}
        |a_i^{(\gQ)}(\bu)|
        =
        |\bar{\rho}_i(\bu)\sqrt{\det\bG_i(\bu)}p_0(\bz_i(\bu))|
        \leq
        1 \cdot B^{\rm Jac} \cdot B_p^{\rm proj}. 
    \end{equation*}
    Since the manifold is contained in $[0,1]^D$, we have $|z_{i,k}(\bu)| \leq 1$, which immediately implies $|a_{i,k}^{(\gP)}(\bu)| \leq a_i^{(\gQ)}(\bu)$, verifying \cref{eq:a-i:bound}.

    \textbf{Step 2: H\"older regularity and the explicit bound $B_a$.}
    Because the manifold and the partition of unity are $C^\infty$, the geometric multipliers $\bar{\rho}_i$, $\bz_i$, and $\sqrt{\det\bG_i}$ (which is smooth since $\bG_i$ is strictly positive definite) possess uniformly bounded derivatives of all orders on the compact chart supports.
    
    Let $C_{\rm geom}$ be one plus the maximum $C^{\ceil{\beta}}$ norm of these geometric factors across all finitely many charts $i \in \{1, \dots, C_{\gM}\}$ and coordinates $k \in \{1, \dots, D\}$:
    \begin{equation*}
        C_{\rm geom} := 
        1\vee
        \max_{i, k} \max_{|\balpha| \leq \ceil{\beta}} 
        \Bigl\{ 
          \|\partial^{\balpha}\bar{\rho}_i\|_\infty 
          \vee 
          \|\partial^{\balpha}\sqrt{\det\bG_i}\|_\infty 
          \vee 
          \|\partial^{\balpha}z_{i,k}\|_\infty 
        \Bigr\}.
    \end{equation*}
    Consequently, the $\gH^\beta$ norms of $\bar{\rho}_i$, $\sqrt{\det\bG_i}$, and $z_{i,k}$ are all strictly bounded by $C_{\rm geom}$.

    We invoke the algebra property of H\"older spaces: for any $f, g \in \gH^\beta$, there exists a constant $c_\beta$ such that $\|f g\|_{\gH^\beta} \leq c_\beta (\|f\|_{\gH^\beta}\|g\|_\infty + \|f\|_\infty\|g\|_{\gH^\beta})$.
    Applying this iteratively to the product $a_i^{(\gQ)} = (\bar{\rho}_i \sqrt{\det\bG_i}) \cdot (p_0 \circ \bz_i)$, we obtain:
    \begin{align*}
        \|a_i^{(\gQ)}\|_{\gH^\beta} 
        &\leq 
        c_\beta^2 
        \Bigl( 
          \|\bar{\rho}_i\|_{\gH^\beta}\|\sqrt{\det\bG_i}\|_\infty 
          + 
          \|\bar{\rho}_i\|_\infty\|\sqrt{\det\bG_i}\|_{\gH^\beta}
        \Bigr) 
        \|p_0 \circ \bz_i\|_\infty 
        + 
        c_\beta \|\bar{\rho}_i \sqrt{\det\bG_i}\|_\infty \|p_0 \circ \bz_i\|_{\gH^\beta}
        \\
        &\leq 
        c_\beta^2 
        \bigl( 
          C_{\rm geom} B^{\rm Jac} + 1 \cdot C_{\rm geom} 
        \bigr) 
        B_p^{\rm proj} 
        + 
        c_\beta (1 \cdot B^{\rm Jac}) B_p^{\rm proj}
        \\
        &\leq 
        c_\beta^2 
        B_p^{\rm proj} 
        \Bigl[ 
          C_{\rm geom}(B^{\rm Jac} + 1) + B^{\rm Jac} 
        \Bigr].
    \end{align*}
    
    Similarly, for $a_{i,j}^{(\gP)} = z_{i,j} a_i^{(\gQ)}$, applying the product rule again yields:
    \begin{align*}
        \|a_{i,j}^{(\gP)}\|_{\gH^\beta} 
        &\leq 
        c_\beta 
        \Bigl( 
          \|z_{i,j}\|_{\gH^\beta} \|a_i^{(\gQ)}\|_\infty 
          + 
          \|z_{i,j}\|_\infty \|a_i^{(\gQ)}\|_{\gH^\beta} 
        \Bigr)
        \\
        &\leq 
        c_\beta 
        \Bigl( 
          C_{\rm geom} B^{\rm Jac} B_p^{\rm proj} 
          + 
          1 \cdot \|a_i^{(\gQ)}\|_{\gH^\beta} 
        \Bigr).
    \end{align*}
    
    Taking the maximum of these explicit bounds over all charts and coordinates, we may choose the universal H\"older constant as:
    \begin{align*}
        B_a := 
        c_{\beta}'B_p^{\rm proj}C_{\rm geom}(1\vee B^{\rm Jac}).
    \end{align*}
\end{proof}

\subsubsection{Chart-integral representation of density and score}\label{sec:app:approx:manifold:integrals}

Define $F_{i,\bx,t}: \phi_i(U_i) \to \R$:
\begin{equation}
    F_{i,\bx,t}(\bu) := \frac{1}{2}\|\bx-m_t \bz_i(\bu)\|_2^2. 
    \label{eq:def:F-i-x-t}
\end{equation}

Fix $n\ge2$.
In statements using polynomial regimes $\varepsilon\ge n^{-a_\varepsilon}$ and $D\vee\sigma_{\tdown}^{-1}\le n^{a_{\rm par}}$, regard $a_\varepsilon,a_{\rm par}$ as fixed.
Choose once and for all a threshold $C_\varrho=C_\varrho(a_\varepsilon,a_{\rm par})\ge1$, independent of $D$, $n$, $\sigma_{\tdown}$, and all numerical $\varepsilon$-values after those exponents are fixed, large enough that the off-manifold deviation and Fisher bounds in \cref{lem:off-manifold:deviation-lemma,lem:bound:off-manifold:fisher-info} hold with threshold $C_\varrho\sigma_t\sqrt{D\vee\log n}$, with the polynomial tail exponent required by the branchwise approximation and estimation arguments.
This same constant determines the near-manifold domain used throughout the branchwise near-manifold approximation results.
Enlarge the fixed tube-volume constant $C_{\rm tube}$ from \cref{eq:global-tube-volume-bound}, if necessary, so that $C_{\rm tube}\ge C_\varrho$.
Dependence on the tube radius is tracked through the displayed quantity $M_x^\varrho$ below; it is not hidden in constants that are declared independent of $D$ and $n$.
\begin{equation}
    \varrho_t
    :=
    C_\varrho\sigma_t\sqrt{D\vee\log n},
    \label{eq:def:varrho-t}
\end{equation}
For each $t\in[\tdown,\tup]$, define the time-indexed near-manifold tube
\begin{equation}
    A_t
    :=
    \{\bx\in\R^D:\dist(\bx,m_t\gM)\le\varrho_t\}.
    \label{eq:def:near-manifold-tube}
\end{equation}
Thus a condition such as $\bx\in A_t$ is always interpreted at the same time $t$.
Since $\gM\subset[0,1]^D$, $m_t\le1$, and $\sigma_t\le1$, every $t\in[\tdown,\tup]$ and $\bx\in A_t$ satisfy
\[
    \|\bx\|_\infty
    \le
    1+C_\varrho\sqrt{D\vee\log n}.
\]
We therefore write
\begin{equation}
    M_x^\varrho
    :=
    1+C_\varrho\sqrt{D\vee\log n}.
    \label{eq:def:Mx-varrho}
\end{equation}
For $i \in \{1, \dots, C_{\gM}\}$, define the chart integrals for all pairs $(\bx,t)$ with $t\in[\tdown,\tup]$ and $\bx\in A_t$:
\begin{align}
    \gP_i(\bx, t)
    &:=
    \sigma_t^{-d}
    \int_{\phi_i(U_i)}
      \exp\Bigl(
        -\frac{F_{i,\bx,t}(\bu)}{\sigma_t^2}
      \Bigr)
      \ba_i^{(\gP)}(\bu)
    \od\bu, 
    \label{eq:def:P-i}
    \\
    \gQ_i(\bx, t)
    &:= 
    \sigma_t^{-d}
    \int_{\phi_i(U_i)}
      \exp\Bigl(
        -\frac{F_{i,\bx,t}(\bu)}{\sigma_t^2}
      \Bigr)
      a_i^{(\gQ)}(\bu)
    \od\bu, 
    \label{eq:def:Q-i}
\end{align}

Let
\begin{equation*}
    \gP(\bx, t) := \sum_{i=1}^{C_{\gM}}\gP_i(\bx, t)\in \R^D,
    \qquad
    \gQ(\bx, t) := \sum_{i=1}^{C_{\gM}}\gQ_i(\bx, t)\in \R.
\end{equation*}
Define the centered local and global numerator integrals
\begin{align}
    \gU_i(\bx,t)
    &:={}
    \frac{m_t\gP_i(\bx,t)-\bx\gQ_i(\bx,t)}{\sigma_t}
    \notag\\
    &=
    \sigma_t^{-d}
    \int_{\phi_i(U_i)}
      \frac{m_t\bz_i(\bu)-\bx}{\sigma_t}
      \exp\Bigl(
        -\frac{F_{i,\bx,t}(\bu)}{\sigma_t^2}
      \Bigr)
      a_i^{(\gQ)}(\bu)
    \od\bu,
    \label{eq:def:U-i}
    \\
    \gU(\bx,t)&:=\sum_{i=1}^{C_{\gM}}\gU_i(\bx,t)
    =\frac{m_t\gP(\bx,t)-\bx\gQ(\bx,t)}{\sigma_t}.
    \label{eq:def:U-global}
\end{align}
Then
\begin{equation}
    s^*(\bx,t)=\nabla\log p_t(\bx)
    =
    \frac{1}{\sigma_t}\frac{\gU(\bx,t)}{\gQ(\bx,t)}
    \qquad (p_t(\bx)>0).
    \label{eq:def:score:centered-decomp}
\end{equation}
For a chart-integral floor $C_t>0$, the centered floor-regularized score is
\begin{equation}
    s_{{\rm reg},C_t}(\bx,t)
    :=
    \frac{1}{\sigma_t}
    \frac{\gU(\bx,t)}{\gQ(\bx,t)\vee C_t}.
    \label{eq:def:centered-regularized-score}
\end{equation}

For reference, by~\cref{eq:def:manifold:density,eq:def:manifold:grad-density}, the same regularized score can also be written in the uncentered form
\begin{align}
    \frac{\nabla p_t(\bx)}{p_t(\bx) \vee \rho_t}
    &= 
    \frac{\frac{1}{(2\pi\sigma_t^2)^{D/2}}
    \sum_{i=1}^{C_{\gM}}
    \int_{\phi_i(U_i)}
      \exp\bigl(
        -\frac{\|\bx - m_t\phi_i^{-1}(\bu)\|_2^2}{2\sigma_t^2}
      \bigr)
      \bigl(
        -\frac{\bx-m_t\phi_i^{-1}(\bu)}{\sigma_t^2}
      \bigr)
      a_i^{(\gQ)}(\bu)
    \od\bu}{
    \frac{1}{(2\pi\sigma_t^2)^{D/2}}
    \sum_{i=1}^{C_{\gM}}
    \int_{\phi_i(U_i)}
      \exp\bigl(
        -\frac{\|\bx - m_t\phi_i^{-1}(\bu)\|_2^2}{2\sigma_t^2}
      \bigr)
      a_i^{(\gQ)}(\bu)
    \od\bu \vee \rho_t}
    \notag \\
    &= 
    -\frac{\bx}{\sigma_t^2}
    \frac{\sum_{i=1}^{C_{\gM}}
    \int_{\phi_i(U_i)}
      \exp\bigl(
        -\frac{F_{i,\bx,t}(\bu)}{\sigma_t^2}
      \bigr)
      a_i^{(\gQ)}(\bu)
    \od\bu}{
    \sum_{i=1}^{C_{\gM}}
    \int_{\phi_i(U_i)}
      \exp\bigl(
        -\frac{F_{i,\bx,t}(\bu)}{\sigma_t^2}
      \bigr)
      a_i^{(\gQ)}(\bu) 
    \od\bu \vee (2\pi\sigma_t^2)^{D/2}\rho_t}
    \notag \\
    &\qquad+
    \frac{m_t}{\sigma_t^2}
    \frac{
    \sum_{i=1}^{C_{\gM}}
    \int_{\phi_i(U_i)}
      \exp\bigl(
        -\frac{F_{i,\bx,t}(\bu)}{\sigma_t^2}
      \bigr)
      \ba_i^{(\gP)}(\bu)
    \od\bu}{
    \sum_{i=1}^{C_{\gM}}
    \int_{\phi_i(U_i)}
      \exp\bigl(
        -\frac{F_{i,\bx,t}(\bu)}{\sigma_t^2}
      \bigr)
      a_i^{(\gQ)}(\bu) 
    \od\bu \vee (2\pi\sigma_t^2)^{D/2}\rho_t}
    \notag \\
    &=
    -\frac{\bx}{\sigma_t^2}
    \frac{\gQ(\bx, t)}{\gQ(\bx, t)\vee C_t}
    +
    \frac{m_t}{\sigma_t^2}
    \frac{\gP(\bx, t)}{\gQ(\bx, t)\vee C_t},
    \label{eq:def:score:regularized:decomp}
\end{align}
where
\begin{equation}
    C_t
    :=
    \sigma_t^{-d}(2\pi\sigma_t^2)^{D/2}\rho_t.
    \label{eq:def:Ct}
\end{equation}
Conversely, choosing the chart-integral floor $C_t$ fixes the density-level floor as
\begin{equation}
    \rho_t
    =
    (2\pi\sigma_t^2)^{-D/2}\sigma_t^d C_t.
    \label{eq:rho-Ct-relation}
\end{equation}

\begin{lemma}[Uniform bounds for chart moment and density integrals]\label{lem:P-Q:uniform-bounds}
    Let
    \[
        V_{\rm chart}:=
        1\vee \sum_{i=1}^{C_{\gM}}\vol(\phi_i(U_i))<\infty .
    \]
    There is a finite $M_0 := B_p^{\rm proj}B^{\rm Jac}\sigma_{\tdown}^{-d}V_{\rm chart}$ such that
    \begin{equation}
        0 \leq \gQ(\bx,t) \leq M_0,
        \qquad
        |\gP_k(\bx, t)| \leq \gQ(\bx, t) \leq M_0,
        \quad k = 1, \dots, D.
        \label{eq:PQ-bound}
    \end{equation}
\end{lemma}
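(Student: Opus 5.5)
The proof is a direct pointwise estimate of the chart integrals \cref{eq:def:Q-i,eq:def:P-i}, using only the amplitude bounds of \cref{lem:a-i:regularity}. We make no use of the Gaussian structure beyond the trivial bound $\exp(-F_{i,\bx,t}(\bu)/\sigma_t^2)\le 1$, which holds because $F_{i,\bx,t}\ge 0$.

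\textbf{Nonnegativity.} By \cref{eq:def:a-i}, $a_i^{(\gQ)}=\bar\rho_i\sqrt{\det\bG_i}\,(p_0\circ\bz_i)$ is a product of nonnegative factors: $\bar\rho_i\ge0$, $\sqrt{\det\bG_i}\ge 0$, and $p_0\ge0$. The exponential weight is positive, so each $\gQ_i(\bx,t)\ge0$, and summing over $i$ gives $\gQ\ge0$.

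\textbf{Comparison $|\gP_k|\le\gQ$.} Use \cref{eq:a-i:bound}, namely $|a_{i,k}^{(\gP)}(\bu)|\le a_i^{(\gQ)}(\bu)$ pointwise, which holds because $\gM\subset[0,1]^D$. Multiply this by the common positive weight $\sigma_t^{-d}\exp(-F_{i,\bx,t}(\bu)/\sigma_t^2)$ and integrate over $\phi_i(U_i)$. The triangle inequality for integrals then gives $|\gP_{i,k}|\le\gQ_i$. Summing over $i$ and applying the triangle inequality once more gives $|\gP_k|\le\sum_i|\gP_{i,k}|\le\sum_i\gQ_i=\gQ$.

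\textbf{Upper bound.} For each chart,
\[
\gQ_i(\bx,t)\le \sigma_t^{-d}\sup_{\bu}a_i^{(\gQ)}(\bu)\,\vol(\phi_i(U_i))\le \sigma_t^{-d}B_p^{\rm proj}B^{\rm Jac}\vol(\phi_i(U_i)),
\]
again by \cref{eq:a-i:bound}. Summing over the finitely many charts and using $\sigma_t\ge\sigma_{\tdown}$ from \cref{eq:def:sigma-tdown-lower}, so that $\sigma_t^{-d}\le\sigma_{\tdown}^{-d}$, we obtain $\gQ\le B_p^{\rm proj}B^{\rm Jac}\sigma_{\tdown}^{-d}\sum_i\vol(\phi_i(U_i))\le M_0$. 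The maximum with $1$ in $V_{\rm chart}$ only enlarges the constant.

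\textbf{Finiteness of $V_{\rm chart}$.} Each $\phi_i(U_i)\subseteq[0,1]^d$ by the choice of $\eta_i,\bc_i$ in \cref{eq:approx:phi-i}, so each volume is at most $1$. The atlas has $C_{\gM}$ charts, which is finite by \cref{eq:manifold:CM}, so $V_{\rm chart}\le 1\vee C_{\gM}<\infty$.

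There is no real obstacle here. The only point to check is that the integrand bound \cref{eq:a-i:bound} is uniform in $\bu$ and independent of $(\bx,t)$; it is, so the bound holds for all $t\in[\tdown,\tup]$ and all $\bx$, not only for $\bx\in A_t$.
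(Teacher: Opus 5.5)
Your proposal is correct and follows essentially the same argument as the paper. Both bound the Gaussian weight by $1$, use the pointwise amplitude bounds $|a_{i,k}^{(\gP)}|\le a_i^{(\gQ)}\le B_p^{\rm proj}B^{\rm Jac}$ together with $\sigma_t^{-d}\le\sigma_{\tdown}^{-d}$, and sum over the finitely many charts. Your explicit checks that $\gQ\ge 0$ and that $V_{\rm chart}<\infty$ (via $\phi_i(U_i)\subseteq[0,1]^d$) are small additions the paper leaves implicit.
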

\begin{proof}[Proof of \cref{lem:P-Q:uniform-bounds}]
    We first show $|\gP_k(\bx, t)| \leq \gQ(\bx, t)$.
    Since $\gM \subset [0, 1]^D$, $0 \leq z_{i,k}(\bu) \leq 1$, so $|a_{i,k}^{(\gP)}(\bu)| \leq a_i^{(\gQ)}(\bu)$ and $|\gP_{i,k}| \leq \gQ_i$.
    Summing over $i$ gives $|\gP_k| \leq \gQ$.

    We next prove the uniform upper bound for $\gQ$.
    Note that $\exp\bigl(-F_{i,\bx,t}(\bu)/\sigma_t^2\bigr) \leq 1$, $\sigma_t^{-d} \leq \sigma_{\tdown}^{-d}$, and by~\cref{eq:a-i:bound} we have
    \begin{equation*}
        \gQ_i(\bx,t) 
        \leq 
        B_p^{\rm proj}B^{\rm Jac}\sigma_{\tdown}^{-d}
        \vol(\phi_i(U_i)).
    \end{equation*}
    Summing over charts gives the displayed $M_0$.
\end{proof}

% ----------------------------------------------------------------------
\subsection{Centered ratio target and stability}
\label{sec:app:approx:PQ:branchwise-score-assembly}

We use the integral-type set $\mathcal R=\{\gQ,\gP_1,\dots,\gP_D\}$ fixed after \cref{eq:def:a-i}.
In this subsection $R_i$ denotes the corresponding local chart component: $R_i=\gQ_i$ when $R=\gQ$, and $R_i=\gP_{i,k}$ when $R=\gP_k$.
The common cell notation below is used by the tangent-cell construction and by the density-lower-bound projection--Laplace construction.
The method-specific approximation results are stated in the following sections.

\subsubsection{Intrinsic cell notation and density floors}
\label{sec:app:approx:PQ:cell-setup}

For each chart, fix the common compact support envelope
\begin{equation}
    K_i^{\rm cell}
    :=
    \phi_i(\gS_i).
    \label{eq:def:K-i-cell}
\end{equation}
By \cref{eq:def:Ki-circ}, $K_i^{\rm cell}\Subset\Box_i$.
Moreover, for every $R\in\mathcal R$, the amplitude $a_i^{(R)}$ vanishes outside $K_i^{\rm cell}$, because each amplitude contains the factor $\bar\rho_i$.

For $R\in\mathcal R$, define the chart integral
\begin{equation}
    R_i(\bx,t)
    :=
    \sigma_t^{-d}
    \int_{K_i^{\rm cell}}
      \exp\left(
        -\frac{\|\bx-m_t\bz_i(\bu)\|_2^2}{2\sigma_t^2}
      \right)
      a_i^{(R)}(\bu)\od\bu,
    \label{eq:def:large-noise-cell-Ri}
\end{equation}
We use the same amplitude as a function on $\Box_i$; by \cref{lem:a-i:regularity} on $\phi_i(U_i)$ and the restriction $\Box_i\Subset\phi_i(U_i)$, its restriction satisfies $a_i^{(R)}|_{\Box_i}\in\gH^\beta(\Box_i,B_a)$.
Since $K_i^{\rm cell}\Subset\Box_i$, set
\[
    \delta_i^{\rm cell}:=\dist(K_i^{\rm cell},\partial\Box_i)>0.
\]
For every $0<h\le \delta_i^{\rm cell}/(2\sqrt d)$, let $\mathcal Q_{i,h}$ be the finite family of half-open cubes
\[
    Q_\nu=\bu_\nu+h[0,1)^d
    \]
from the standard $h$-lattice whose closures intersect $K_i^{\rm cell}$.
Then
\[
    K_i^{\rm cell}\subset \bigcup_{\nu\in\mathcal Q_{i,h}}\overline Q_\nu\subset\Box_i,
    \qquad
    |\mathcal Q_{i,h}|\le C_i h^{-d}.
\]
Indeed, every point in such a cube is within distance $\sqrt d\,h$ of $K_i^{\rm cell}$, and the cardinality bound follows by comparing the volume of the $\sqrt d\,h$-neighborhood of the fixed compact set $K_i^{\rm cell}$ with the cube volume $h^d$.
Since $a_i^{(R)}$ vanishes outside $K_i^{\rm cell}$, integration over $K_i^{\rm cell}$ is equivalent to the sum of the integrals over these full cubes; cube boundaries have Lebesgue measure zero.

\clearpage

\section{Large-Noise Tangent-Cell Score Approximation}
\label{sec:app:approx:PQ:large-noise-branch}

With a density lower bound, the unrefactored denominator $\gQ$ has an explicit tube lower bound in the large-noise regime.
Thus the separated tangent-cell moment approximation can be used directly for $\gQ$ and $\gU$, followed by an ordinary reciprocal network.

\paragraph{Large-noise proof architecture.}
The target result is \cref{thm:large-noise-hd-score-global}.  The proof has four
modules: derive tangent-cell phase bounds; approximate each chart integral with cells of
radius $r_k=c_0\sigma_{t_{k-1}}\wedge r_{\star,n}$ in the statistical application
(\cref{prop:large-noise-oracle-learned-score}); use the density lower bound to construct the
reciprocal of $\gQ$; and assemble the centered ratio $\gU/\gQ$ with clipping.  The
important size fact is that the cell count is explicitly recorded as
$\sigma_{t_{k-1}}^{-d}+r_{\star,n}^{-d}$ rather than hidden in asymptotic notation.

\subsection{Tangent-cell phase decomposition and gated chart approximation}
\label{sec:app:approx:PQ:tangent-cell-method}

The tangent-cell construction follows the tangent-space moment idea.
The network is fixed in advance: every cell is represented by a gated subnetwork, and the gate is an explicit ReLU ramp in a computable phase proxy.
Thus no input-dependent choice of active cells is made after the network has been constructed.

\begin{lemma}[Tangent-cell Gaussian phase bounds]
\label{lem:tangent-cell-phase-bounds}
Let $\nu$ be a cell index in a finite coordinate-cube partition of $\Box_i$.
Let $Q_\nu\subset\Box_i$ be the corresponding cube of side length $h$, let $\bu_\nu$ be its center, and set $\by_\nu=\bz_i(\bu_\nu)$.
Let $\mathsf P_\nu^{\rm tan}$ be the orthogonal linear projection in $\R^D$ onto the tangent space $T_{\by_\nu}\gM$.
Define
\[
    \mathsf T_\nu(\bx,t):=m_t\by_\nu+\mathsf P_\nu^{\rm tan}(\bx-m_t\by_\nu),
    \qquad
    \Phi_\nu^0(\bx,t):=
    \frac{\|\bx-m_t\by_\nu\|_2^2}{2\sigma_t^2}.
\]
The symbol $\mathsf T_\nu$ denotes an affine tangent-plane linearization center; it is
not the nearest-point projection $\Pi_{\gM}$ used in the small-noise branch.
There are constants $c_{\rm ph},C_{\rm ph}>0$, depending only on the fixed atlas and on $\Gamma_{\gM,q_{\rm geom}}$, such that if $h\le c_{\rm ph}\sigma_t$, then the following hold for all $\bu\in Q_\nu$, $\by=\bz_i(\bu)$.
First, if $\Phi_\nu^0(\bx,t)\le H$ with $H\ge1$, then
\begin{align}
    \frac{\|\bx-m_t\by\|_2}{\sigma_t}
    &\le C_{\rm ph} H^{1/2},
    \label{eq:phase-bound-center-to-cell}\\
    \frac{|T_\nu(\bx,\by,t)|}{\sigma_t^2}
    &\le
    C_{\rm ph}\Gamma_{\gM,q_{\rm geom}}\,h\,H^{1/2},
    \label{eq:T-bound-center-phase}\\
    \frac{m_t^2D_\nu(\bx,\by,t)}{2\sigma_t^2}
    &\le C_{\rm ph}H .
    \label{eq:D-bound-center-phase}
\end{align}
Second, if $\Phi_\nu^0(\bx,t)\ge A H$ for a sufficiently large numerical constant $A\ge A_{\rm ph}$, then
\begin{equation}
    \frac{\|\bx-m_t\by\|_2^2}{2\sigma_t^2}
    \ge c_{\rm ph} A H .
    \label{eq:phase-tail-from-center-proxy}
\end{equation}
\end{lemma}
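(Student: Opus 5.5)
The plan is to prove all four bounds by direct Euclidean estimates, comparing each point $\by=\bz_i(\bu)$ of the cell with the cell center $\by_\nu=\bz_i(\bu_\nu)$. I read the undefined symbols through the phase identity displayed in the main text. Concretely, I take $T_\nu(\bx,\by,t)$ to be the cross term $2\langle\bx-\mathsf T_\nu(\bx,t),\mathsf T_\nu(\bx,t)-m_t\by\rangle$, and $m_t^2D_\nu(\bx,\by,t)$ to be the squared tangential term $\|\mathsf T_\nu(\bx,t)-m_t\by\|_2^2$. If the paper normalizes $D_\nu$ differently, only numerical factors change.

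\textbf{Two geometric inputs.} First, for $\bu\in Q_\nu$ we have $\|\bu-\bu_\nu\|_2\le\sqrt d\,h$. Since $\bz_i$ is Lipschitz with constant bounded by $\mathcal K_{\gM,q_{\rm geom}}\le\Gamma_{\gM,q_{\rm geom}}$, this gives $\|\by-\by_\nu\|_2\le C\Gamma_{\gM,q_{\rm geom}}h$. Second, I need the normal part of the displacement. Writing $\mathsf N_\nu:=\bI-\mathsf P_\nu^{\rm tan}$, the reach bound of Federer \citep[Theorem~4.8(7)]{federer1959curvature}, or equivalently a second-order Taylor expansion of $\bz_i$ with $\mathsf N_\nu\bJ_i(\bu_\nu)=0$, gives
\[
\|\mathsf N_\nu(\by-\by_\nu)\|_2\le \|\by-\by_\nu\|_2^2/(2\kappa)\le C\Gamma_{\gM,q_{\rm geom}}h^2,
\]
with the powers of $\kappa^{-1}$ absorbed into $\Gamma_{\gM,q_{\rm geom}}$.

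\textbf{Near-phase bounds.} Write $\bw:=\bx-m_t\by_\nu$, and assume $\Phi_\nu^0\le H$, so that $\|\bw\|_2\le\sigma_t\sqrt{2H}$.
\begin{itemize}
\item For \cref{eq:phase-bound-center-to-cell}, the triangle inequality gives $\|\bx-m_t\by\|_2\le\|\bw\|_2+m_t\|\by-\by_\nu\|_2\le\sigma_t\sqrt{2H}+C\Gamma_{\gM,q_{\rm geom}} h$. With $h\le c_{\rm ph}\sigma_t$ and $H\ge1$, this is at most $C_{\rm ph}\sigma_tH^{1/2}$, choosing $c_{\rm ph}\lesssim\Gamma_{\gM,q_{\rm geom}}^{-1}$.
\item For \cref{eq:T-bound-center-phase}, note that $\bx-\mathsf T_\nu=\mathsf N_\nu\bw$ and $\mathsf T_\nu-m_t\by=\mathsf P_\nu^{\rm tan}\bw+m_t(\by_\nu-\by)$. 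Orthogonality kills $\langle\mathsf N_\nu\bw,\mathsf P_\nu^{\rm tan}\bw\rangle$, leaving
\[
T_\nu=2m_t\langle\mathsf N_\nu\bw,\mathsf N_\nu(\by_\nu-\by)\rangle .
\]
Hence $|T_\nu|/\sigma_t^2\le 2\sigma_t^{-2}\,\sigma_t\sqrt{2H}\,C\Gamma_{\gM,q_{\rm geom}}h^2$. Using $h/\sigma_t\le c_{\rm ph}$, this is $\le C_{\rm ph}\Gamma_{\gM,q_{\rm geom}}hH^{1/2}$.
\item For \cref{eq:D-bound-center-phase}, I bound $\|\mathsf T_\nu-m_t\by\|_2\le\|\bw\|_2+m_t\|\by-\by_\nu\|_2$ and square the same estimate as in the first item.
\end{itemize}

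\textbf{Tail bound.} For \cref{eq:phase-tail-from-center-proxy}, the reverse triangle inequality gives
\[
\|\bx-m_t\by\|_2\ge\sigma_t\sqrt{2AH}-C\Gamma_{\gM,q_{\rm geom}}h\ge\sigma_t\bigl(\sqrt{2AH}-C'\bigr),
\]
where $C'$ uses $h\le c_{\rm ph}\sigma_t$. Taking $A_{\rm ph}$ large enough that $C'\le\tfrac12\sqrt{2AH}$ for every $H\ge1$ yields $\|\bx-m_t\by\|_2^2/(2\sigma_t^2)\ge AH/4$.

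\textbf{Main obstacle.} The only delicate step is the cross-term bound \cref{eq:T-bound-center-phase}. It needs the quadratic normal-deviation estimate, with constants tracked through $\Gamma_{\gM,q_{\rm geom}}$. It also needs to hold uniformly over the cell, which relies on the convexity of $Q_\nu\subset\Box_i\Subset\phi_i(U_i)$ for the Taylor step. I would take this step from the reach-scale projection chart of \cref{lem:projection-chart-reach}.
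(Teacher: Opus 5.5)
Your proposal is correct and takes essentially the same route as the paper. The paper also uses a second-order Taylor expansion of $\bz_i$ at $\bu_\nu$; your Federer reach bound is an equivalent alternative for the normal displacement. It then uses orthogonality to reduce the cross term to the quadratic normal error times $\|\bx-\mathsf T_\nu(\bx,t)\|_2\le\sqrt{2H}\,\sigma_t$, and (reverse) triangle inequalities with $h\le c_{\rm ph}\sigma_t$ for the remaining bounds.

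The only differences are cosmetic. The paper's $T_\nu$ is the bare inner product, without your factor $2$. Your tail constant $1/4$ is absorbed by also taking $c_{\rm ph}\le 1/4$.
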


\begin{proof}
The chart map is $C^2$ with norm bounded by $\Gamma_{\gM,q_{\rm geom}}$.
Hence, for $\bu\in Q_\nu$,
\[
    \bz_i(\bu)
    =
    \by_\nu+\bJ_i(\bu_\nu)(\bu-\bu_\nu)+\be_\nu(\bu),
    \qquad
    \|\be_\nu(\bu)\|_2\le C\Gamma_{\gM,q_{\rm geom}}h^2 .
\]
The vector $\bJ_i(\bu_\nu)(\bu-\bu_\nu)$ lies in $T_{\by_\nu}\gM$, while $\bx-\mathsf T_\nu(\bx,t)$ is orthogonal to this tangent space.
Therefore
\[
    T_\nu(\bx,\by,t)
    =
    \langle \bx-\mathsf T_\nu(\bx,t),
            \mathsf T_\nu(\bx,t)-m_t\by\rangle
\]
only sees the quadratic normal error $\be_\nu$ in the cell, and
\[
    |T_\nu(\bx,\by,t)|
    \le
    C\Gamma_{\gM,q_{\rm geom}}\,m_t h^2
    \|\bx-\mathsf T_\nu(\bx,t)\|_2 .
\]
If $\Phi_\nu^0\le H$, then
\[
    \|\bx-\mathsf T_\nu(\bx,t)\|_2
    \le \|\bx-m_t\by_\nu\|_2
    \le \sqrt{2H}\,\sigma_t.
\]
Since $h\le c_{\rm ph}\sigma_t$ and $m_t\le1$, this proves \cref{eq:T-bound-center-phase}.
The same triangle inequality gives
\[
    \|\bx-m_t\by\|_2
    \le \|\bx-m_t\by_\nu\|_2
       +m_t\|\by-\by_\nu\|_2
    \le C\sigma_tH^{1/2},
\]
after decreasing $c_{\rm ph}$, which proves \cref{eq:phase-bound-center-to-cell}.
Moreover, since $m_t^2D_\nu=\|\mathsf T_\nu(\bx,t)-m_t\by\|_2^2$, we avoid any division by the possibly small schedule value $m_t$.
Indeed,
\[
    \mathsf T_\nu(\bx,t)-m_t\by
    =
    \mathsf P_\nu^{\rm tan}(\bx-m_t\by_\nu)
    -m_t\bJ_i(\bu_\nu)(\bu-\bu_\nu)-m_t\be_\nu(\bu),
\]
so the right-hand side is bounded by $C(\|\bx-m_t\by_\nu\|_2+h)$.
This gives \cref{eq:D-bound-center-phase} because $h\le c_{\rm ph}\sigma_t$ and $\Phi_\nu^0\le H$.

For the tail statement, if $\Phi_\nu^0\ge AH$, then
\[
    \|\bx-m_t\by\|_2
    \ge
    \|\bx-m_t\by_\nu\|_2
    -m_t\|\by-\by_\nu\|_2
    \ge
    \sigma_t\sqrt{2AH}-C h .
\]
Choosing $A_{\rm ph}$ large and $c_{\rm ph}$ small gives $\|\bx-m_t\by\|_2^2/(2\sigma_t^2)\ge c_{\rm ph}AH$.
\end{proof}

\begin{lemma}[Scalar exponential Taylor bounds]
\label{lem:scalar-exp-taylor-bounds}
There are universal constants $A_{\exp},c_{\exp}>0$ such that the following hold.
If $0\le z\le H$ and $J\ge A_{\exp}H$, then
\begin{equation}
    \left|
      e^{-z}-\sum_{\ell=0}^{J}\frac{(-z)^\ell}{\ell!}
    \right|
    \le e^{-c_{\exp}J}.
    \label{eq:exp-large-interval-taylor}
\end{equation}
If $|z|\le a\le1$, then for every integer $K\ge0$,
\begin{equation}
    \left|
      e^{-z}-\sum_{\ell=0}^{K}\frac{(-z)^\ell}{\ell!}
    \right|
    \le C_K a^{K+1}e^{a}.
    \label{eq:exp-small-argument-taylor}
\end{equation}
\end{lemma}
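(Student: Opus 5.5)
For \cref{eq:exp-large-interval-taylor} I would use the Lagrange form of the Taylor remainder. For real $z$ with $0\le z\le H$,
\[
    \Bigl|e^{-z}-\sum_{\ell=0}^{J}\frac{(-z)^\ell}{\ell!}\Bigr|
    \le \frac{z^{J+1}}{(J+1)!}\sup_{|\xi|\le z}e^{-\xi}
    \le \frac{H^{J+1}}{(J+1)!}.
\]
Here the supremum is at most $1$, because $\xi$ lies between $0$ and $z\ge0$, so $-\xi\le0$. Stirling's lower bound $(J+1)!\ge((J+1)/e)^{J+1}$ then gives
\[
    \frac{H^{J+1}}{(J+1)!}\le\Bigl(\frac{eH}{J+1}\Bigr)^{J+1}.
\]
If $J\ge A_{\exp}H$ with $A_{\exp}:=e^2$, then $eH/(J+1)\le e^{-1}$, and the bound is at most $e^{-(J+1)}\le e^{-J}$. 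So $c_{\exp}=1$ works. Since $H\ge1$ is the regime of interest, $J\ge A_{\exp}H\ge1$ and there is no degenerate case. If $H$ were allowed to be small, the same estimate $(eH/(J+1))^{J+1}\le e^{-(J+1)}$ still holds whenever $J+1\ge e^2H$.

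For \cref{eq:exp-small-argument-taylor} I would use the same Lagrange remainder, now with $z$ of either sign. The intermediate point satisfies $|\xi|\le|z|\le a$, so $e^{-\xi}\le e^{a}$. The remainder is then bounded by
\[
    \frac{|z|^{K+1}}{(K+1)!}\,e^{a}\le\frac{a^{K+1}}{(K+1)!}\,e^{a},
\]
which gives the claim with $C_K=1/(K+1)!\le1$. The hypothesis $a\le1$ is not needed for this inequality. It only matters in later uses, where $e^a\le e$ is absorbed into constants.

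There is no real obstacle here. The one point to check is that the first bound uses $z\ge0$ to bound $e^{-\xi}$ by $1$; a two-sided interval would cost a factor $e^{H}$, which would require a larger $A_{\exp}$. The constants come out universal: $A_{\exp}=e^2$ and $c_{\exp}=1$.
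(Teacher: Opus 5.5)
Your proof is correct and follows the paper's route. For \cref{eq:exp-large-interval-taylor} you use the Lagrange remainder with the bound $(J+1)!\ge((J+1)/e)^{J+1}$, and for \cref{eq:exp-small-argument-taylor} a Taylor remainder estimate; the only differences are cosmetic: you take $A_{\exp}=e^2$, $c_{\exp}=1$ where the paper takes $A_{\exp}>2e$, and you use the Lagrange form where the paper cites the integral form.
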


\begin{proof}
For \cref{eq:exp-small-argument-taylor}, Taylor's theorem with integral remainder gives the claim.
For \cref{eq:exp-large-interval-taylor}, the Lagrange remainder is bounded by
\[
    \frac{z^{J+1}}{(J+1)!}
    \le
    \left(\frac{eH}{J+1}\right)^{J+1}.
\]
Taking $J\ge A_{\exp}H$ with $A_{\exp}>2e$ makes the last display at most $e^{-c_{\exp}J}$.
\end{proof}

% ----------------------------------------------------------------------
\subsection{Scale choices for tangent-cell large-noise approximation}
\label{sec:app:approx:PQ:branch-combination}

On a large-noise slab $I_k=[t_{k-1},t_k]$, the cell radius is chosen at the slab
noise scale until it reaches the accuracy-limited radius required by the fixed-order
tangent-cell expansion.  In \cref{prop:large-noise-oracle-learned-score} this is implemented as
$r_k=c_0\sigma_{t_{k-1}}\wedge r_{\star,n}$, so the per-slab cell count is controlled by
$\sigma_{t_{k-1}}^{-d}+r_{\star,n}^{-d}$.
The centered residual structure then permits the sharper scaled residual tolerance $n^{-(\beta+1)/(d+2\beta)}$, so the score error is $\sigma_t^{-2} n^{-2(\beta+1)/(d+2\beta)}$ rather than $\sigma_t^{-2}n^{-2\beta/(d+2\beta)}$.
Under \cref{assump:manifold:density-lower}, the denominator lower bound in
\cref{lem:large-noise-density-lower-Q-lower} stabilizes the ratio step; the remaining
accuracy dependence enters through the explicit finite-order cell-radius cap.
Consequently the switched large-noise network has total leading intrinsic size
$\sum_k\sigma_{t_{k-1}}^{-d}$ plus the accuracy-cap contribution.  The latter is
explicitly propagated through the learned-score and $\sfW_1$ bounds rather than hidden
inside a polynomial factor.

\begin{proposition}[Tangent-cell chart approximation with separated cell scale and accuracy]
\label{prop:large-noise-chart-cell-decoupled}
Assume \cref{assump:manifold:exact,assump:manifold:density}.
Let $q_{\rm geom}$ and $\Gamma_{\gM,q_{\rm geom}}$ be as in \cref{def:finite-order-geometry-constants}.
Fix a cell-scale parameter $0<\varepsilon_{\rm cell}\le1/2$, a denominator floor $q_{\rm floor}\in(0,1]$, and an accuracy parameter $0<\varepsilon_{\rm rel}\le1/16$.
Put
\[
    r_{\rm cell}:=\varepsilon_{\rm cell}^{1/\beta},
    \qquad
    \gI_{\rm lg}^{\rm cell}:=
    \{t\in[\tdown,\tup]:\sigma_t\ge c_{\rm lg}r_{\rm cell}\},
\]
and define
	\begin{equation}
	    H_{\rm lg}^{\rm dec}:=
	    1+
	    \log\!\left(
	      C_{\rm lg}
	      \Gamma_{\gM,q_{\rm geom}}
	      B_0^{C_{\rm lg}}
	      (D\vee\log n)^{C_{\rm lg}}
	      \sigma_{\tdown}^{-d}
	      q_{\rm floor}^{-1}\varepsilon_{\rm rel}^{-1}
	    \right).
    \label{eq:def:H-lg-decoupled}
\end{equation}
Assume the fixed-order geometry remainder is below the requested tolerance and the
linear tangent phase is kept in the small-argument regime, for a sufficiently small
numerical constant $c_{\rm ph,cell}>0$:
\begin{equation}
	\Gamma_{\gM,q_{\rm geom}} r_{\rm cell}
	(H_{\rm lg}^{\rm dec})^{1/2}
	\le c_{\rm ph,cell},
	\qquad
    \Gamma_{\gM,q_{\rm geom}}^{C}
    r_{\rm cell}^{q_{\rm geom}-2}
    (H_{\rm lg}^{\rm dec})^C
    \le c\varepsilon_{\rm rel} .
    \label{eq:large-noise-decoupled-geometry-condition}
\end{equation}
Then, for every chart $i$, there exist ReLU networks $\widetilde\gQ_i^{\rm lg}$ and $\widetilde\gU_{i,k}^{\rm lg}$, $k=1,\dots,D$, such that, uniformly for $t\in\gI_{\rm lg}^{\rm cell}$ and $\bx\in A_t$,
\begin{equation}
    |\gQ_i(\bx,t)-\widetilde\gQ_i^{\rm lg}(\bx,t)|
    \vee
    \max_{1\le k\le D}
    |\gU_{i,k}(\bx,t)-\widetilde\gU_{i,k}^{\rm lg}(\bx,t)|
    \le
    \varepsilon_{\rm rel}\bigl(\gQ_i(\bx,t)+q_{\rm floor}/C_{\gM}\bigr).
    \label{eq:large-noise-decoupled-cell-error}
\end{equation}
    Moreover, the depth, log-weight bound, width, and sparsity satisfy the following bounds, with the direct ambient powers displayed explicitly.
    \begin{equation}
\begin{aligned}
	    L
	    &\le
	    C_{\rm lg}\Gamma_{\gM,q_{\rm geom}}^C
	    B_0^C
	    (H_{\rm lg}^{\rm dec})^C,
	    \\
	    \log B
	    &\le
	    C_{\rm lg}\Gamma_{\gM,q_{\rm geom}}^C
	    B_0^C
	    (H_{\rm lg}^{\rm dec})^C,
	    \\
	    \|\bW\|_\infty
	    &\le
	    C_{\rm lg}\Gamma_{\gM,q_{\rm geom}}^C
	    B_0^C
	    D^{4(q_{\rm geom}+2)} (D\vee\log n)^{4(q_{\rm geom}+2)}
	    \varepsilon_{\rm cell}^{-d/\beta}
	    (H_{\rm lg}^{\rm dec})^C,
    \\
	    S
	    &\le
	    C_{\rm lg}\Gamma_{\gM,q_{\rm geom}}^C
	    B_0^C
	    D^{4(q_{\rm geom}+2)} (D\vee\log n)^{4(q_{\rm geom}+2)}
	    \varepsilon_{\rm cell}^{-d/\beta}
	    (H_{\rm lg}^{\rm dec})^C .
\end{aligned}
\label{eq:large-noise-decoupled-cell-size}
\end{equation}
Thus, for this separated tangent-cell moment approximation, the cell count is determined by $r_{\rm cell}$, and the conclusion is valid exactly on the range where \cref{eq:large-noise-decoupled-geometry-condition} holds.
If the slab-scale radius does not satisfy that condition, the cell radius must be capped by the accuracy-limited radius used in \cref{prop:large-noise-oracle-learned-score}.
When $r_{\rm cell}$ is specialized to
$c_0\sigma_{t_{k-1}}\wedge r_{\star,n}$ in the large-noise estimation theorem, the
resulting additional term $r_{\star,n}^{-d}$ is recorded explicitly in the learned-score
stochastic bound.
\end{proposition}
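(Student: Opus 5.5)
The plan is to build each chart integral cell by cell. Fix chart $i$ and partition the box around $K_i^{\rm cell}$ into the cubes $\mathcal Q_{i,h}$ with $h=r_{\rm cell}$. There are $|\mathcal Q_{i,h}|\le C_i r_{\rm cell}^{-d}=C_i\varepsilon_{\rm cell}^{-d/\beta}$ of them, which is exactly the width factor in \cref{eq:large-noise-decoupled-cell-size}.

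\textbf{Phase decomposition on a cell.} On each cube $Q_\nu$, write $\by=\bz_i(\bu_\nu+\bw)$ with $\|\bw\|_\infty\le h$. Using the decomposition of $\|\bx-m_t\by\|_2^2$ around $\mathsf T_\nu$, the exponent splits into three parts:
\begin{itemize}
\item the cell envelope $\Phi^{\rm tan}_\nu:=\|\bx-\mathsf T_\nu\|_2^2/(2\sigma_t^2)$, which depends only on $(\bx,t)$;
\item the cross term $T_\nu/\sigma_t^2$;
\item the tangential term $m_t^2D_\nu/(2\sigma_t^2)$.
\end{itemize}
Since $t\in\gI_{\rm lg}^{\rm cell}$ gives $h\le c\,\sigma_t$ after adjusting $c_{\rm lg}$, \cref{lem:tangent-cell-phase-bounds} applies with $H=H_{\rm lg}^{\rm dec}$.

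On the active set $\Phi_\nu^0\le A_{\rm ph}H$:
\begin{itemize}
\item The cross term has size at most $C\Gamma h H^{1/2}\le c_{\rm ph,cell}$ by the first condition in \cref{eq:large-noise-decoupled-geometry-condition}. Expand its exponential with \cref{eq:exp-small-argument-taylor} to order $K=q_{\rm geom}$.
\item The tangential term is $\le CH$. Expand its exponential with \cref{eq:exp-large-interval-taylor} to order $J\asymp H$.
\item Taylor-expand $\bz_i$ around $\bu_\nu$ to order $q_{\rm geom}$. The geometric remainder $\Gamma^C h^{q_{\rm geom}-2}H^C$ is $\le c\varepsilon_{\rm rel}$ by the second condition in \cref{eq:large-noise-decoupled-geometry-condition}.
\end{itemize}
After these expansions the $\bu$-integral over the cell becomes a finite sum. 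Each term is a polynomial in $(\bx-m_t\by_\nu)$ and $m_t$, contracted with the stored chart moments $\int_{Q_\nu}\bw^{\balpha}a_i^{(R)}(\bu_\nu+\bw)\,\od\bw$, times $e^{-\Phi^{\rm tan}_\nu}$. Only tangential projections of $\bx$ enter, through the $d$ coordinates $\bJ_i(\bu_\nu)^\top(\bx-m_t\by_\nu)$, together with normal inner products against the Taylor coefficient vectors. This is where the ambient powers $D^{4(q_{\rm geom}+2)}$ come from: the inner products over $D$ coordinates, raised to polynomial degree $\le q_{\rm geom}+2$.

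On the inactive set $\Phi_\nu^0\ge A_{\rm ph}H$, \cref{eq:phase-tail-from-center-proxy} bounds the cell's contribution by $e^{-cA H}\sigma_t^{-d}h^d$. By the definition of $H_{\rm lg}^{\rm dec}$, which contains $\log(\sigma_{\tdown}^{-d}q_{\rm floor}^{-1}\varepsilon_{\rm rel}^{-1})$, this is below $\varepsilon_{\rm rel}q_{\rm floor}/(C_{\gM}|\mathcal Q_{i,h}|)$.

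\textbf{Network realization.}
\begin{itemize}
\item Implement the envelope $e^{-z}$ on $[0,CH]$ and the polynomials by standard ReLU product and exponential networks of depth polylog in the accuracy.
\item Multiply by a fixed ReLU ramp gate in the computable quadratic $\Phi_\nu^0(\bx,t)$, which itself needs a squaring network over $D$ coordinates, so the gate switches cells off smoothly. Time enters only through $m_t,\sigma_t$, each approximated by a one-dimensional network.
\item Sum over cells.
\end{itemize}
The numerator $\gU_{i,k}$ is handled identically: the centered factor $(m_t\bz_i(\bu)-\bx)/\sigma_t$ is itself expanded in the same Taylor variables and is bounded by $CH^{1/2}$ on the active set.

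\textbf{Converting to the relative error.} Because the amplitudes are nonnegative, the cell errors are relative to each cell's own integral. Summing gives an error $\le\varepsilon_{\rm rel}\gQ_i$ plus the tail allowance $\varepsilon_{\rm rel}q_{\rm floor}/C_{\gM}$, which is \cref{eq:large-noise-decoupled-cell-error}. For $\gU$, use $\|(m_t\bz_i-\bx)/\sigma_t\|\le CH^{1/2}$ and absorb the factor $H^{1/2}$ by shrinking the internal tolerance.

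\textbf{Main obstacle.} I expect the hardest step to be keeping the truncation errors relative rather than absolute. A small $\gQ_i$ inside the tube forces every Taylor and network error to be multiplicative in the envelope. The approach is to factor out $e^{-\Phi^{\rm tan}_\nu}$ exactly and approximate the exponential network in relative accuracy on $[0,CH]$, by writing $e^{-z}=(e^{-z/J})^J$ with repeated squaring. Each remaining factor is then an $O(1)$ perturbation of size $\varepsilon_{\rm rel}$.
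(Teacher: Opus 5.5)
Your plan is correct and is essentially the paper's construction. It uses the same ingredients: cells of side $\asymp r_{\rm cell}$; the split $\|\bx-\mathsf T_\nu\|_2^2+2T_\nu+m_t^2D_\nu$ with a fixed ReLU ramp in $\Phi_\nu^0$ and tails absorbed through $H_{\rm lg}^{\rm dec}$; a fixed-order expansion of $e^{-T_\nu/\sigma_t^2}$; an order-$\asymp H_{\rm lg}^{\rm dec}$ expansion of $e^{-m_t^2D_\nu/(2\sigma_t^2)}$; finite-order Taylor replacement of $\bz_i$; and stored cell moments. Your pointwise use of $a_i^{(\gQ)}\ge0$, keeping the tangential factor inside the comparison, is exactly what makes the analytic truncation errors relative to $\gQ_i$.

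Two small fixes. First, $c_{\rm lg}$ is fixed by the statement, so you cannot adjust it; instead take $h=c_hr_{\rm cell}$ with $c_h\le c_{\rm ph}c_{\rm lg}\wedge\delta_i^{\rm cell}/(2\sqrt d)$. Second, the network arithmetic errors need not be relative. The additive $\varepsilon_{\rm rel}q_{\rm floor}/C_{\gM}$ term lets them be absolute of size $\varepsilon_{\rm rel}q_{\rm floor}/(C_{\gM}N_{\rm cell})$, whose logarithmic cost is already inside $H_{\rm lg}^{\rm dec}$, so the repeated-squaring relative exponential is unnecessary.
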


\begin{proof}
Fix a chart $i$.
Choose once and for all a fixed number $c_h>0$ such that
\[
    c_h
    \le
    1\wedge c_{\rm ph}c_{\rm lg}
    \wedge
    \min_{1\le j\le C_{\gM}}\frac{\delta_j^{\rm cell}}{2\sqrt d}.
\]
This choice depends only on the fixed atlas, $q_{\rm geom}$, and the branch constants, and is independent of $\varepsilon_{\rm cell}$, $t$, $\bx$, and $n$.
Partition the fixed compact coefficient support $K_i^{\rm cell}\Subset\Box_i$ into coordinate cubes $\{Q_\nu:\nu\in\mathcal V_i\}$ of side length $h=c_h r_{\rm cell}$.
Here $\nu$ is a cell label in the finite index set $\mathcal V_i$.
The number of cells is $N_{\rm cell}\le C\varepsilon_{\rm cell}^{-d/\beta}$.
For every $t\in\gI_{\rm lg}^{\rm cell}$, the definition of $c_h$ gives $h=c_hr_{\rm cell}\le c_{\rm ph}\sigma_t$, so \cref{lem:tangent-cell-phase-bounds} applies on every retained cell.
The same choice gives $h\le\delta_i^{\rm cell}/(2\sqrt d)$, so every cell whose closure intersects $K_i^{\rm cell}$ is contained in $\Box_i$.
Let $\bu_\nu$ be the cell center, $\by_\nu=\bz_i(\bu_\nu)$, and $\mathsf P_\nu^{\rm tan}$ the orthogonal projection onto $T_{\by_\nu}\gM$.
Define
\[
    \mathsf T_\nu(\bx,t)=m_t\by_\nu+\mathsf P_\nu^{\rm tan}(\bx-m_t\by_\nu).
\]
For $\by\in\gM$, the identity
\[
    \|\bx-m_t\by\|_2^2
    =
    \|\bx-\mathsf T_\nu(\bx,t)\|_2^2
    +2T_\nu(\bx,\by,t)
    +m_t^2D_\nu(\bx,\by,t)
\]
holds with
\[
	    T_\nu=\langle \bx-\mathsf T_\nu(\bx,t),\mathsf T_\nu(\bx,t)-m_t\by\rangle,
	    \qquad
	    D_\nu=\|\mathsf T_\nu(\bx,t)/m_t-\by\|_2^2 .
	\]
For $\by=\bz_i(\bu)$, write the exact cell exponential as
\begin{equation}
    E_\nu(\bx,\bu,t)
    :=
    \exp\!\left(
      -\frac{\|\bx-m_t\bz_i(\bu)\|_2^2}{2\sigma_t^2}
    \right)
    \label{eq:large-noise-decoupled-exact-exp}
\end{equation}
and the tangent Gaussian envelope as
\begin{equation}
    G_\nu(\bx,t)
    :=
    \exp\!\left(
      -\frac{\|\bx-\mathsf T_\nu(\bx,t)\|_2^2}{2\sigma_t^2}
    \right).
    \label{eq:large-noise-decoupled-gaussian-envelope}
\end{equation}
The distance identity gives the exact factorization
\begin{equation}
    E_\nu(\bx,\bu,t)
    =
    G_\nu(\bx,t)
    \exp\!\left(-\frac{T_\nu(\bx,\bz_i(\bu),t)}{\sigma_t^2}\right)
    \exp\!\left(-\frac{m_t^2D_\nu(\bx,\bz_i(\bu),t)}{2\sigma_t^2}\right).
    \label{eq:large-noise-decoupled-exponential-factorization}
\end{equation}
Hence the exact chart-cell pieces are
\begin{equation}
\begin{aligned}
    \gQ_{i,\nu}(\bx,t)
    &:=
    \sigma_t^{-d}\int_{Q_\nu}
      E_\nu(\bx,\bu,t)a_i^{(\gQ)}(\bu)\od\bu,\\
    \gU_{i,\nu,k}(\bx,t)
    &:=
    \sigma_t^{-d}\int_{Q_\nu}
      \frac{m_tz_{i,k}(\bu)-x_k}{\sigma_t}
      E_\nu(\bx,\bu,t)a_i^{(\gQ)}(\bu)\od\bu .
\end{aligned}
\label{eq:large-noise-decoupled-cell-contributions}
\end{equation}

\emph{Step 1: fixed gates and tails.}
Use the center-phase proxy
\[
    \Phi_\nu^0(\bx,t)
    =
    \frac{\|\bx-m_t\by_\nu\|_2^2}{2\sigma_t^2}.
\]
Let $\theta_{\rm lg}:\R\to[0,1]$ be the fixed piecewise-affine ReLU ramp with $\theta_{\rm lg}(u)=1$ for $u\le A_0$ and $\theta_{\rm lg}(u)=0$ for $u\ge2A_0$, where $A_0\ge A_{\rm ph}$ is a large numerical constant.
Define the exact cell gate
\[
    \Theta_{\nu}^{\rm ex}(\bx,t)
    :=
    \theta_{\rm lg}\!\left(\Phi_\nu^0(\bx,t)/H_{\rm lg}^{\rm dec}\right).
\]
The network gate $\Theta_\nu$ is obtained by approximating $m_t$, $\sigma_t^{-2}$, the squared norm, and the final ramp.
Its arithmetic accuracy is chosen so that, outside the ramp transition band,
\[
    |\Theta_\nu-\Theta_\nu^{\rm ex}|
    \le
    c\varepsilon_{\rm rel} (N_{\rm cell}C_{\gM})^{-1}(H_{\rm lg}^{\rm dec})^{-C}.
\]
Inside the transition band, the exact proxy is between $A_0H_{\rm lg}^{\rm dec}$ and $2A_0H_{\rm lg}^{\rm dec}$, so the true cell contribution is controlled by the explicit tail estimate below; the gate implementation error is assigned to that tail budget.
If the exact proxy is above $A_0H_{\rm lg}^{\rm dec}$, \cref{lem:tangent-cell-phase-bounds} and the Gaussian tail estimate give
\[
    \sigma_t^{-d}
    \int_{Q_\nu}
      e^{-\|\bx-m_t\bz_i(\bu)\|^2/(2\sigma_t^2)}
      a_i^{(\gQ)}(\bu)\od\bu
    \le
    C\sigma_t^{-d}h^d e^{-cA_0H_{\rm lg}^{\rm dec}} .
\]
    The centered factor in $\gU_{i,k}$ is bounded on the non-negligible tail by $C(D\vee\log n)(H_{\rm lg}^{\rm dec})^{1/2}$.
Summing over all cells and using $\sigma_t^{-d}\le\sigma_{\tdown}^{-d}$, $N_{\rm cell}h^d\le C$, and \cref{eq:def:H-lg-decoupled}, with $A_0$ and $C_{\rm lg}$ large enough, gives the explicit exponential budget
\[
        C(D\vee\log n)(H_{\rm lg}^{\rm dec})^{1/2}
    \sigma_{\tdown}^{-d}
    \exp(-cA_0H_{\rm lg}^{\rm dec})
    \le c\varepsilon_{\rm rel} q_{\rm floor}/C_{\gM}.
\]
Thus all gated-away and gate-transition contributions are bounded by $c\varepsilon_{\rm rel} q_{\rm floor}/C_{\gM}$.

\emph{Step 2: fixed-order geometric replacement.}
On cells with $\Theta_\nu^{\rm ex}\ne0$, \cref{lem:tangent-cell-phase-bounds} yields, for every $\bu\in Q_\nu$,
\[
    \|\bx-m_t\bz_i(\bu)\|_2/\sigma_t\le C(H_{\rm lg}^{\rm dec})^{1/2},
\]
\[
    |T_\nu|/\sigma_t^2
    \le C\Gamma_{\gM,q_{\rm geom}}h(H_{\rm lg}^{\rm dec})^{1/2},
    \qquad
    m_t^2D_\nu/(2\sigma_t^2)
    \le C H_{\rm lg}^{\rm dec}.
\]
Replace $\bz_i$ in $T_\nu,D_\nu$, and in the centered factor $(m_tz_{i,k}(\bu)-x_k)/\sigma_t$, by its Taylor polynomial at $\bu_\nu$ of fixed degree $q_{\rm geom}-2$.
The exponent and centered-factor perturbations are bounded by
\[
    C\Gamma_{\gM,q_{\rm geom}}^C
    r_{\rm cell}^{q_{\rm geom}-2}(H_{\rm lg}^{\rm dec})^C,
\]
which is at most $c\varepsilon_{\rm rel}$ by \cref{eq:large-noise-decoupled-geometry-condition}.
The phase-smallness part of the same condition gives
$|T_\nu|/\sigma_t^2\le1/2$, while $m_t^2D_\nu/(2\sigma_t^2)\ge0$; hence the derivative
of the non-Gaussian exponential factor is bounded by an absolute constant on the
interpolation between the exact and polynomialized phases.
The mean-value theorem for $e^{-z}$ then gives a cell error bounded by
\[
    c\varepsilon_{\rm rel}\,
    \sigma_t^{-d}e^{-\|\bx-\mathsf T_\nu(\bx,t)\|^2/(2\sigma_t^2)}
	    \int_{Q_\nu}a_i^{(\gQ)}(\bu)\od\bu
	    +c\varepsilon_{\rm rel} q_{\rm floor}(C_{\gM}N_{\rm cell})^{-1}.
\]
From this point through Step~4, $T_\nu$, $D_\nu$, and the centered factor are understood to be their polynomialized versions; the exact-to-polynomialized replacement error is the displayed error above.

\emph{Step 3: scalar Taylor expansions.}
For the polynomialized geometry, expand the first non-Gaussian exponential in \cref{eq:large-noise-decoupled-exponential-factorization},
\[
    \exp\!\left(-T_\nu/\sigma_t^2\right),
\]
to the fixed order $q_{\rm geom}-3$.
The small-argument hypothesis needed for this Taylor expansion is the first, explicit
phase-smallness inequality in \cref{eq:large-noise-decoupled-geometry-condition}: using
\cref{eq:T-bound-center-phase} and the choice of the cell side length,
\[
    |T_\nu|/\sigma_t^2
    \le C\Gamma_{\gM,q_{\rm geom}}r_{\rm cell}
    (H_{\rm lg}^{\rm dec})^{1/2}
    \le 1/2
\]
after decreasing $c_{\rm ph,cell}$.
The Taylor remainder is controlled by the second, high-order geometry inequality in
\cref{eq:large-noise-decoupled-geometry-condition}; indeed, after enlarging the harmless
exponent $C$,
\[
    \left|\frac{T_\nu}{\sigma_t^2}\right|^{q_{\rm geom}-2}
    \le
    C\Gamma_{\gM,q_{\rm geom}}^{C}
    r_{\rm cell}^{q_{\rm geom}-2}
    (H_{\rm lg}^{\rm dec})^C
    \le c\varepsilon_{\rm rel} .
\]
Hence \cref{eq:exp-small-argument-taylor} gives the explicit Taylor remainder
\[
    \left|
      \exp\!\left(-T_\nu/\sigma_t^2\right)
      -
      \sum_{\ell=0}^{q_{\rm geom}-3}
        \frac{(-T_\nu/\sigma_t^2)^\ell}{\ell!}
    \right|
    \le c\varepsilon_{\rm rel} .
\]
Multiplication by the explicit Gaussian envelope $G_\nu(\bx,t)$ gives, for $\gQ$, a contribution bounded by
\[
    c\varepsilon_{\rm rel}\,
    \sigma_t^{-d}G_\nu(\bx,t)
    \int_{Q_\nu}a_i^{(\gQ)}(\bu)\od\bu,
\]
    while for $\gU$ the centered factor is bounded on gated cells by $C(D\vee\log n)(H_{\rm lg}^{\rm dec})^{1/2}$ and is absorbed into the $(H_{\rm lg}^{\rm dec})^C$ budgets above.
Expand the second non-Gaussian exponential in \cref{eq:large-noise-decoupled-exponential-factorization},
\[
    \exp\!\left(-\frac{m_t^2D_\nu}{2\sigma_t^2}\right),
\]
to order $J=\lceil C_JH_{\rm lg}^{\rm dec}\rceil$.
By \cref{lem:scalar-exp-taylor-bounds}, choosing $C_J$ large enough makes the remainder explicitly
\[
    \left|
      \exp\!\left(-\frac{m_t^2D_\nu}{2\sigma_t^2}\right)
      -
      \sum_{j=0}^{J}
        \frac{[-m_t^2D_\nu/(2\sigma_t^2)]^j}{j!}
    \right|
    \le e^{-c_{\exp}J}.
\]
Since $J=\lceil C_JH_{\rm lg}^{\rm dec}\rceil$, $N_{\rm cell}h^d\le C$, and $\sigma_t^{-d}\le\sigma_{\tdown}^{-d}$, \cref{eq:def:H-lg-decoupled} gives, for $C_J$ and $C_{\rm lg}$ large enough,
\[
	    \sigma_t^{-d}h^d e^{-c_{\exp}J}
	    \le
	    c\varepsilon_{\rm rel} q_{\rm floor}(C_{\gM}N_{\rm cell})^{-1}(H_{\rm lg}^{\rm dec})^{-C}.
\]
This is the absolute cell budget used in the final summation.

\emph{Step 4: stored moments and neural implementation.}
After the two scalar expansions, each cell contribution is a finite sum of atoms consisting of powers of $m_t$, powers of $\sigma_t^{-1}$, the tangent coordinates $\mathsf P_\nu^{\rm tan}(\bx-m_t\by_\nu)$, fixed-degree normal residual terms, and the explicit Gaussian envelope
\[
    G_\nu(\bx,t)=
    \exp\!\left(-\frac{\|\bx-\mathsf T_\nu(\bx,t)\|^2}{2\sigma_t^2}\right).
\]
The coefficients are fixed moments of the form
\[
    \int_{Q_\nu}(\bu-\bu_\nu)^\alpha a_i^{(\gQ)}(\bu)\od\bu
\]
for $\gQ_i$, and for $\gU_{i,k}$ the same moments together with
\[
    \int_{Q_\nu}(\bu-\bu_\nu)^\alpha z_{i,\nu,k}^{(q)}(\bu)
    a_i^{(\gQ)}(\bu)\od\bu,
\]
where $z_{i,\nu,k}^{(q)}$ is the fixed Taylor polynomial.
The term $-x_k/\sigma_t$ multiplies the $\gQ$-moment family.
Hence the centered numerator is expanded directly and no amplitude $a_i^{(\gU_k)}$ is introduced.
The magnitudes of these stored moments are bounded by fixed powers of $B_0$ and the geometry constants through \cref{lem:a-i:regularity}.
This is why \cref{eq:def:H-lg-decoupled,eq:large-noise-decoupled-cell-size} display the density-amplitude factor explicitly.

There are at most $(H_{\rm lg}^{\rm dec})^C$ atoms per cell; the dependence on the fixed order $q_{\rm geom}-3$ is absorbed into the fixed polynomial powers of $D$, $\Gamma_{\gM,q_{\rm geom}}$, and $H_{\rm lg}^{\rm dec}$.
More explicitly, for one cell atom:
\begin{itemize}
    \item the affine projection $\mathsf P_\nu^{\rm tan}(\bx-m_t\by_\nu)$, squared norm, and coordinate summations contribute a multiplicative factor at most $D^{4}$;
    \item the fixed Taylor replacement of order at most $q_{\rm geom}$ contributes a multiplicative factor at most $D^{4q_{\rm geom}}$, while all intrinsic monomial counts depend only on $d$ and $q_{\rm geom}$;
    \item the input ranges on $A_t$, the centered numerator factor, and the scalar arithmetic subnetworks contribute a multiplicative factor at most $(D\vee\log n)^{4(q_{\rm geom}+2)}$, with the remaining dependence absorbed into $(H_{\rm lg}^{\rm dec})^C$.
\end{itemize}
Thus the whole cellwise implementation is bounded by
\[
    D^{4q_{\rm geom}+4}
    (D\vee\log n)^{4(q_{\rm geom}+2)}
    (H_{\rm lg}^{\rm dec})^C
\]
per cell after harmless enlargement of constants, which is covered by the exponent choice in the proposition statement.
Reciprocal, schedule-power, scalar exponential, monomial, and product networks are assigned relative accuracy $c\varepsilon_{\rm rel}(H_{\rm lg}^{\rm dec})^{-C}$ and absolute accuracy $c\varepsilon_{\rm rel} q_{\rm floor}(C_{\gM}N_{\rm cell})^{-1}(H_{\rm lg}^{\rm dec})^{-C}$.
Summing the atom errors, the gate tails, the geometric replacement error, and the Taylor remainders gives \cref{eq:large-noise-decoupled-cell-error}.
Parallelizing over $N_{\rm cell}$ cells gives \cref{eq:large-noise-decoupled-cell-size}.
\end{proof}

\begin{lemma}[Large-noise denominator lower bound]
\label{lem:large-noise-density-lower-Q-lower}
Assume \cref{assump:manifold:exact,assump:manifold:density,assump:manifold:density-lower}.
There are constants $c_{\rm lb},C_{\rm lb}>0$, depending only on the displayed finite
geometry controls, such that
\[
    c_{\rm lb}^{-1}\vee C_{\rm lb}
    \le
    C\Gamma_{\gM,q_{\rm geom}}^C
\]
after enlarging the finite-order geometry constant from
\cref{def:finite-order-geometry-constants}.  For every $t\in[\tdown,\tup]$ and every
\[
    \bx\in A_t
    =
    \{\dist(\bx,m_t\gM)\le C_\varrho\sigma_t\sqrt{D\vee\log n}\},
\]
\begin{equation}
    \gQ(\bx,t)
    \ge
    q_{\rm lb}
    :=
    c_{\rm lb}p_{\min}
    \exp\{-C_{\rm lb}C_\varrho^2(D\vee\log n)\}.
    \label{eq:large-noise-Q-density-lower}
\end{equation}
Moreover
\begin{equation}
    \frac{\|\gU(\bx,t)\|_2}{\gQ(\bx,t)}
    \le
    R_{\rm lb}
    :=
    1+
    \Bigl[
      2\log\!\bigl(e\sigma_{\tdown}^{-d}q_{\rm lb}^{-1}\bigr)
      +C_{\rm lb}(D\vee\log n)
    \Bigr]^{1/2}.
    \label{eq:large-noise-U-over-Q-density-lower}
\end{equation}
\end{lemma}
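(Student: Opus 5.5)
The plan is to prove the lower bound \cref{eq:large-noise-Q-density-lower} by restricting the chart integrals to a small intrinsic neighbourhood of a nearest point. Then we obtain \cref{eq:large-noise-U-over-Q-density-lower} from a Gaussian-tail (Chernoff-type) argument applied to the posterior on $\gM$.

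\textbf{Lower bound on $\gQ$.} Fix $t$ and $\bx\in A_t$. By compactness choose $\by^\ast\in\gM$ with $\|\bx-m_t\by^\ast\|_2=\dist(\bx,m_t\gM)\le\varrho_t$. Since $\sum_i\rho_i=1$, we have
\[
\gQ(\bx,t)=\sigma_t^{-d}\int_{\gM}e^{-\|\bx-m_t\by\|^2/(2\sigma_t^2)}p_0(\by)\,\od\vol_{\gM}(\by).
\]
This is exactly the change of variables in \cref{eq:def:manifold:density}, read backwards. Restrict the integral to the geodesic (equivalently, Euclidean) ball $\gM\cap\gB(\by^\ast,r)$ with $r:=\sigma_t\wedge r_{\rm ch}/2$. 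On this set the triangle inequality gives
\[
\|\bx-m_t\by\|\le\varrho_t+m_t r\le\varrho_t+\sigma_t ,
\]
so the exponent is at least $-(C_\varrho\sqrt{D\vee\log n}+1)^2\ge-2C_\varrho^2(D\vee\log n)-2$. The density is at least $p_{\min}$ by \cref{assump:manifold:density-lower}. Inside a projection chart at $\by^\ast$ (\cref{lem:projection-chart-reach}), the volume of $\gM\cap\gB(\by^\ast,r)$ is at least $c_d(1-2r_{\rm ch}/\kappa)^{d/2}r^d$, because the tangent projection is a diffeomorphism with singular values in $[(1-2r_{\rm ch}/\kappa)^{1/2},1]$ and its image contains a $d$-ball of radius comparable to $r$.

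Hence
\[
\gQ\ge c\,p_{\min}\,\sigma_t^{-d}(\sigma_t\wedge r_{\rm ch})^d e^{-2C_\varrho^2 H-2}.
\]
If $\sigma_t\le r_{\rm ch}/2$, the factor $\sigma_t^{-d}r^d$ equals $1$. Otherwise it is at least $r_{\rm ch}^d\ge c_d(1\wedge\kappa)^d$, since $\sigma_t\le1$. This factor is absorbed into $c_{\rm lb}\ge C^{-1}\Gamma_{\gM,q_{\rm geom}}^{-C}$ using \cref{def:finite-order-geometry-constants}, which controls $\kappa^{-1}$. This gives $q_{\rm lb}$ with $C_{\rm lb}=2$ up to these constants.

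\textbf{Ratio bound.} Write $\gU/\gQ=\E_{\mu_{\bx}}[(m_t\bY-\bx)/\sigma_t]$, where $\mu_{\bx}$ is the probability measure on $\gM$ with density proportional to $e^{-\|\bx-m_t\by\|^2/(2\sigma_t^2)}p_0(\by)$. By Jensen, $\|\gU\|/\gQ\le\E_{\mu_{\bx}}\|W\|$ with $W:=(m_t\bY-\bx)/\sigma_t$. For a threshold $\lambda>0$, split the expectation into $\{\|W\|\le\lambda\}$ and its complement:
\[
\E\|W\|\le\lambda+\gQ^{-1}\sigma_t^{-d}\int_{\|W\|>\lambda}\|W\|e^{-\|W\|^2/2}p_0\,\od\vol_{\gM}.
\]
Use $\|W\|e^{-\|W\|^2/2}\le e^{-\lambda^2/4}$ for $\lambda\ge2$ (absorbing the prefactor), together with $\int p_0=1$ and $\sigma_t^{-d}\le\sigma_{\tdown}^{-d}$. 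The tail term is then at most $\sigma_{\tdown}^{-d}q_{\rm lb}^{-1}e^{-\lambda^2/4}$. Choose
\[
\lambda^2=4\log\bigl(e\sigma_{\tdown}^{-d}q_{\rm lb}^{-1}\bigr)+C(D\vee\log n),
\]
which makes the tail term at most $1$. Adjusting the numerical constants in the exponent (using $\|W\|^2/2$ more carefully, e.g.\ $\|W\|e^{-\|W\|^2/2}\le e^{-\|W\|^2/2+\|W\|}$) gives exactly the form $R_{\rm lb}$ with coefficient $2$, after enlarging $C_{\rm lb}$.

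\textbf{Main obstacle.} The expected difficulty is the volume lower bound of the small intrinsic ball with explicit geometry dependence, uniformly in $\by^\ast$. It rests on the reach-scale graph property of \cref{lem:projection-chart-reach}: the projection image of $\gM\cap\gB(\by^\ast,r)$ contains the $d$-ball of radius $r/2$ whenever $r\le r_{\rm ch}/2$. Everything else is bookkeeping of constants into $\Gamma_{\gM,q_{\rm geom}}^C$.
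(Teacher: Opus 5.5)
\paragraph{Lower bound on $\gQ$.} This half is essentially the paper's argument and works. You take a nearest point $\by^\ast$ and a ball of radius $\asymp\sigma_t$ around it, bound the Gaussian factor by $e^{-CC_\varrho^2(D\vee\log n)}$ via the triangle inequality, and use a uniform small-ball volume bound, absorbing the $(1\wedge\kappa)^{-d}$ loss into $\Gamma_{\gM,q_{\rm geom}}^C$. Two small slips are harmless. When $\sigma_t>r_{\rm ch}/2$ the factor is $(r_{\rm ch}/2)^d$, not $r_{\rm ch}^d$. The factor $(1-2r_{\rm ch}/\kappa)^{d/2}$ is unnecessary, since the inverse tangent projection has Jacobian at least $1$. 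Your posterior representation and the Jensen step $\|\gU\|_2/\gQ\le\E_{\mu_{\bx}}\|W\|_2$ are also correct.

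\paragraph{Gap in the ratio bound.} With $\lambda^2=4\log(e\sigma_{\tdown}^{-d}q_{\rm lb}^{-1})+C(D\vee\log n)$, your threshold split proves
\[
\frac{\|\gU\|_2}{\gQ}\le 1+\Bigl[4\log(e\sigma_{\tdown}^{-d}q_{\rm lb}^{-1})+C(D\vee\log n)\Bigr]^{1/2},
\]
with $4$ where the lemma has $2$. Neither proposed repair closes this.
\begin{itemize}
\item Enlarging $C_{\rm lb}$ only adds a multiple of $D\vee\log n$ under the root, and it also enlarges $\log q_{\rm lb}^{-1}$. It can absorb the $(D\vee\log n)$-proportional part of the surplus $2\log(e\sigma_{\tdown}^{-d}q_{\rm lb}^{-1})$, but not $2d\log\sigma_{\tdown}^{-1}+2\log p_{\min}^{-1}$. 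The lemma imposes no polynomial control on $\sigma_{\tdown}^{-1}$ or $p_{\min}^{-1}$, and $C_{\rm lb}$ may depend only on geometry.
\item The bound $\|W\|e^{-\|W\|^2/2}\le e^{-\|W\|^2/2+\|W\|}$ gives $2+\sqrt{1+2\log(\sigma_{\tdown}^{-d}q_{\rm lb}^{-1})}$. This overshoots $R_{\rm lb}$ by nearly $1$ once $d\log\sigma_{\tdown}^{-1}+\log p_{\min}^{-1}$ is large compared with $C_{\rm lb}^2(D\vee\log n)^2$.
\end{itemize}

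\paragraph{The fix.} What is missing is the sharp second-moment bound the paper uses in place of a tail split. With $Z:=\sigma_t^d\gQ(\bx,t)$ you have $\od\mu_{\bx}/\od P_0=e^{-\|W\|_2^2/2}/Z$, so
\[
0\le \KL(\mu_{\bx}\|P_0)=-\tfrac12\,\E_{\mu_{\bx}}\|W\|_2^2-\log Z .
\]
Hence
\[
\frac{\|\gU\|_2^2}{\gQ^2}\le\E_{\mu_{\bx}}\|W\|_2^2\le 2\log\bigl(\sigma_t^{-d}\gQ^{-1}\bigr)\le 2\log\bigl(\sigma_{\tdown}^{-d}q_{\rm lb}^{-1}\bigr),
\]
which gives $R_{\rm lb}$ with the stated coefficient and leaves $C_{\rm lb}(D\vee\log n)$ as pure slack. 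A tail split can also be rescued, but only with a sharper threshold $\lambda=\sqrt{2\log Z^{-1}}+o(1)$ as $\log Z^{-1}\to\infty$ (bounded $Z^{-1}$ absorbed by the slack term), not by adjusting $C_{\rm lb}$.

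Your weaker constant would be harmless for every later use in the paper, where only polynomial control of $R_{\rm lb}$ in $D\vee\log n$ is needed. It is not, however, the inequality the lemma states.
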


\begin{proof}
\emph{Step 1: lower-bound the local mass in one intrinsic ball.}
Fix $(\bx,t)$ in the displayed tube and choose $\by_0\in\gM$ with
$\|\bx-m_t\by_0\|_2=\dist(\bx,m_t\gM)$.
Let $a_{\rm lb}\in(0,1]$ be smaller than the injectivity and coordinate-buffer radii of
the projection atlas.  By the finite-order geometry convention and
\cref{lem:finite-order-geometry-from-smooth}, it can be chosen so that
$a_{\rm lb}^{-1}\le C\Gamma_{\gM,q_{\rm geom}}^C$, and the small-ball volume bound
\[
    \vol_{\gM}(B_{\gM}(\by,a_{\rm lb}r))
    \ge C^{-1}\Gamma_{\gM,q_{\rm geom}}^{-C}r^d,
    \qquad 0<r\le1,
\]
holds uniformly in $\by\in\gM$.  Since $0<\sigma_t\le1$, the ball
$B_{\gM}(\by_0,a_{\rm lb}\sigma_t)$ remains in a single normal coordinate patch.
For $\by\in B_{\gM}(\by_0,a_{\rm lb}\sigma_t)$,
\[
    \|\bx-m_t\by\|_2
    \le
    C\sigma_t\sqrt{D\vee\log n},
\]
after decreasing $a_{\rm lb}$, because $m_t\le1$ and $\bx\in A_t$.
The density lower bound and the uniform atlas/Jacobian bounds therefore give
\[
    \gQ(\bx,t)
    =
    \sigma_t^{-d}
    \int_{\gM}
      e^{-\|\bx-m_t\by\|_2^2/(2\sigma_t^2)}
      p_0(\by)\od\vol_{\gM}(\by)
    \ge
    c p_{\min}e^{-C C_\varrho^2(D\vee\log n)}
    \sigma_t^{-d}\vol_{\gM}(B_{\gM}(\by_0,a_{\rm lb}\sigma_t)).
\]
The displayed small-ball volume lower bound gives
$\vol_{\gM}(B_{\gM}(\by_0,a_{\rm lb}\sigma_t))
\ge C^{-1}\Gamma_{\gM,q_{\rm geom}}^{-C}\sigma_t^d$.
This proves \cref{eq:large-noise-Q-density-lower} with
$c_{\rm lb}^{-1}\le C\Gamma_{\gM,q_{\rm geom}}^C$.

\emph{Step 2: bound the centered ratio.}
The identity $s^*(\bx,t)=\sigma_t^{-1}\gU(\bx,t)/\gQ(\bx,t)$ and
\[
    p_t(\bx)=(2\pi\sigma_t^2)^{-D/2}\sigma_t^d\gQ(\bx,t)
\]
give the required bound on the score norm as follows.
Let $\bY\sim P_0$, put
$\bz(\by)=(\bx-m_t\by)/\sigma_t$, and define the posterior measure
\[
    \od\nu_{\bx,t}(\by)
    =
    \frac{\exp(-\|\bz(\by)\|_2^2/2)}
    {\int_\gM \exp(-\|\bz(\by')\|_2^2/2)\od P_0(\by')}
    \od P_0(\by).
\]
If $Z_{\bx,t}$ denotes the denominator in this display, then
\[
    \sigma_t s^*(\bx,t)
    =
    -\int_\gM \bz(\by)\od\nu_{\bx,t}(\by).
\]
Moreover
\[
    \KL(\nu_{\bx,t}\|P_0)
    =
    -\frac12\int_\gM \|\bz(\by)\|_2^2\od\nu_{\bx,t}(\by)
    -\log Z_{\bx,t}
    \ge0,
\]
and hence, by Jensen's inequality,
\[
    \|\sigma_t s^*(\bx,t)\|_2^2
    \le
    \int_\gM \|\bz(\by)\|_2^2\od\nu_{\bx,t}(\by)
    \le
    2\log(Z_{\bx,t}^{-1}).
\]
Since $Z_{\bx,t}=(2\pi\sigma_t^2)^{D/2}p_t(\bx)
=\sigma_t^d\gQ(\bx,t)$, we obtain
\[
    \frac{\|\gU(\bx,t)\|_2^2}{\gQ(\bx,t)^2}
    =
    \sigma_t^2\|s^*(\bx,t)\|_2^2
    \le
    2\log\!\left(
      \frac{(2\pi\sigma_t^2)^{-D/2}}{p_t(\bx)}
    \right)+1
    =
    2\log\!\left(\sigma_t^{-d}\gQ(\bx,t)^{-1}\right)+1 .
\]
Using $\sigma_t\ge\sigma_{\tdown}$, $\gQ(\bx,t)\ge q_{\rm lb}$, and enlarging $C_{\rm lb}$ gives
\cref{eq:large-noise-U-over-Q-density-lower}.
\end{proof}

\begin{theorem}[Global tangent-cell large-noise score approximation]
\label{thm:large-noise-hd-score-global}
Assume \cref{assump:manifold:exact,assump:manifold:density,assump:manifold:density-lower}.
Fix $0<\varepsilon_{\rm cell}\le1/2$ and $0<\varepsilon_{\rm sc}\le1$.
Let $q_{\rm geom}$ be as in \cref{eq:def:qgeom-main}, and let $\Gamma_{\gM,q_{\rm geom}}$ be the finite geometry constant from \cref{def:finite-order-geometry-constants}.
Let $r_{\rm cell}=\varepsilon_{\rm cell}^{1/\beta}$ and
\[
    \gI_{\rm lg}^{\rm cell}
    :=
    \{t\in[\tdown,\tup]:\sigma_t\ge c_{\rm lg}r_{\rm cell}\}.
\]
Let $q_{\rm lb}$ and $R_{\rm lb}$ be defined in
\cref{eq:large-noise-Q-density-lower,eq:large-noise-U-over-Q-density-lower}, and set
\[
    C_{\rm den}:=q_{\rm lb}/8,
    \qquad
    \varepsilon_{\rm rel}:=
    c_{\rm glob}\frac{\varepsilon_{\rm sc}}{1+R_{\rm lb}},
\]
\[
    H_{\rm lg}^{\rm lb}
    :=
	    1+
	    \log\!\left(
	      C_{\rm lg}
	      \Gamma_{\gM,q_{\rm geom}}
	      B_0^{C_{\rm lg}}
	      (D\vee\log n)^{C_{\rm lg}}
	      \sigma_{\tdown}^{-d}
	      C_{\rm den}^{-1}
	      (1+R_{\rm lb})\varepsilon_{\rm sc}^{-1}
	    \right).
\]
After decreasing $c_{\rm lb}$, we assume $q_{\rm lb}\le1$, so
$C_{\rm den}\in(0,1]$.
Choose the numerical constant $c_{\rm glob}\le1/32$, so that
$\varepsilon_{\rm rel}\le1/16$ and the separated tangent-cell proposition is applicable.
Assume the large-noise cell scale obeys the phase-smallness and fixed-order geometry
conditions, with the same sufficiently small numerical constant $c_{\rm ph,cell}$:
\begin{equation}
    \Gamma_{\gM,q_{\rm geom}}r_{\rm cell}
    (H_{\rm lg}^{\rm lb})^{1/2}
    \le c_{\rm ph,cell},
    \qquad
    \Gamma_{\gM,q_{\rm geom}}^{C}
    r_{\rm cell}^{q_{\rm geom}-2}
    (H_{\rm lg}^{\rm lb})^C
    \le c\varepsilon_{\rm rel}.
    \label{eq:large-noise-density-lower-geometry-condition}
\end{equation}
Then there exists a ReLU network $\widetilde s_{\rm lb}^{\rm lg}$ such that, uniformly over
$t\in\gI_{\rm lg}^{\rm cell}$ and $\bx\in A_t$,
\begin{equation}
    \sigma_t
    \|\widetilde s_{\rm lb}^{\rm lg}(\bx,t)-s^*(\bx,t)\|_\infty
    \le
    \varepsilon_{\rm sc}.
    \label{eq:large-noise-hd-score-error}
\end{equation}
Its depth $L$, maximum width $\|\bW\|_\infty$, sparsity $S$, and log-weight bound $\log B$ satisfy
\begin{equation}
\begin{aligned}
    L
    &\le
    C\Gamma_{\gM,q_{\rm geom}}^C B_0^C
    (H_{\rm lg}^{\rm lb})^C,
    \\
    \log B
    &\le
    C\Gamma_{\gM,q_{\rm geom}}^C B_0^C
    (H_{\rm lg}^{\rm lb})^C,
    \\
    \|\bW\|_\infty
    &\le
    C D^{4(q_{\rm geom}+2)}
    (D\vee\log n)^{4(q_{\rm geom}+2)}
    \Gamma_{\gM,q_{\rm geom}}^C B_0^C
    \varepsilon_{\rm cell}^{-d/\beta}
    (H_{\rm lg}^{\rm lb})^C,
    \\
    S
    &\le
    C D^{4(q_{\rm geom}+2)}
    (D\vee\log n)^{4(q_{\rm geom}+2)}
    \Gamma_{\gM,q_{\rm geom}}^C B_0^C
    \varepsilon_{\rm cell}^{-d/\beta}
    (H_{\rm lg}^{\rm lb})^C .
\end{aligned}
\label{eq:large-noise-hd-size-separated}
\end{equation}
In particular, the width and sparsity contain the intrinsic tangent-cell count
$\varepsilon_{\rm cell}^{-d/\beta}$.  In the slabwise application below, the fixed-order
geometry condition forces an accuracy-dependent cap on $\varepsilon_{\rm cell}$:
$\varepsilon_{\rm cell}^{-d/\beta}=r_k^{-d}\le
C(\sigma_{t_{k-1}}^{-d}+r_{\star,n}^{-d})$, and the
$r_{\star,n}^{-d}$ term is carried explicitly through the large-noise oracle and
$\sfW_1$ bounds.
\end{theorem}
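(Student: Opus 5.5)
The plan is to assemble the global network from the chartwise approximations of \cref{prop:large-noise-chart-cell-decoupled}, applied with floor $q_{\rm floor}:=C_{\rm den}=q_{\rm lb}/8$ and relative accuracy $\varepsilon_{\rm rel}=c_{\rm glob}\varepsilon_{\rm sc}/(1+R_{\rm lb})$. First I would check that the hypotheses transfer. Since $q_{\rm floor}^{-1}\varepsilon_{\rm rel}^{-1}\lesssim C_{\rm den}^{-1}(1+R_{\rm lb})\varepsilon_{\rm sc}^{-1}$, the quantity $H_{\rm lg}^{\rm dec}$ of \cref{eq:def:H-lg-decoupled} is bounded by $H_{\rm lg}^{\rm lb}$ after enlarging $C_{\rm lg}$. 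Hence \cref{eq:large-noise-density-lower-geometry-condition} implies \cref{eq:large-noise-decoupled-geometry-condition}, and the proposition yields networks $\widetilde\gQ_i^{\rm lg},\widetilde\gU_{i,k}^{\rm lg}$ for every chart. Summing over $i\le C_{\gM}$ gives $\widetilde\gQ=\sum_i\widetilde\gQ_i^{\rm lg}$ and $\widetilde\gU_k=\sum_i\widetilde\gU_{i,k}^{\rm lg}$ with
\[
    |\gQ-\widetilde\gQ|\vee\max_k|\gU_k-\widetilde\gU_k|\le\varepsilon_{\rm rel}(\gQ+q_{\rm lb}/8)\le 2\varepsilon_{\rm rel}\gQ,
\]
where the last step uses $\gQ\ge q_{\rm lb}$ on $A_t$ from \cref{lem:large-noise-density-lower-Q-lower}. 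The factor $C_{\gM}$ is absorbed into $\Gamma_{\gM,q_{\rm geom}}^C$ via \cref{eq:manifold:CM}.

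Next comes the ratio step. Because $\varepsilon_{\rm rel}\le1/16$, on $A_t$ we have $\widetilde\gQ\ge(7/8)\gQ\ge(7/8)q_{\rm lb}$ and $\widetilde\gQ\le2M_0$ by \cref{lem:P-Q:uniform-bounds}. I would therefore clamp the input as $\widetilde\gQ\vee C_{\rm den}$ (an exact ReLU operation that is inactive on $A_t$) and apply a reciprocal network on the interval $[C_{\rm den},2M_0]$ with relative accuracy $c\varepsilon_{\rm rel}$. Its size is polylogarithmic in $C_{\rm den}^{-1}M_0\varepsilon_{\rm rel}^{-1}$, which is bounded by $(H_{\rm lg}^{\rm lb})^C$ because $\log M_0\lesssim\log\sigma_{\tdown}^{-d}+\log\Gamma_{\gM,q_{\rm geom}}B_0$. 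For the product, the error decomposition is
\[
    \Bigl|\frac{\widetilde\gU_k}{\widetilde\gQ}-\frac{\gU_k}{\gQ}\Bigr|\le\frac{|\widetilde\gU_k-\gU_k|}{\widetilde\gQ}+\frac{|\gU_k|}{\gQ}\frac{|\gQ-\widetilde\gQ|}{\widetilde\gQ}\le C\varepsilon_{\rm rel}(1+R_{\rm lb}),
\]
which uses the bound $\|\gU\|_2/\gQ\le R_{\rm lb}$ from \cref{eq:large-noise-U-over-Q-density-lower}. With $c_{\rm glob}$ small this is at most $\varepsilon_{\rm sc}/2$. The product network also has to run on inputs of size up to $(1+R_{\rm lb})$ and $C_{\rm den}^{-1}$, and its depth and log-weight are again polylogarithmic in these ranges. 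The remaining $\varepsilon_{\rm sc}/2$ covers the error of multiplying by $\sigma_t^{-1}$, which needs a schedule network for $\sigma_t^{-1}$ on $[\tdown,\tup]$ with relative accuracy. I would then clip each coordinate at $R_{\rm lb}\sigma_t^{-1}$, which only decreases the error because the target already obeys that bound.

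For the size bounds, the $D$ output coordinates of $\widetilde\gU$ share the cell gates and envelopes. The per-cell cost in \cref{eq:large-noise-decoupled-cell-size} already displays the factor $D^{4(q_{\rm geom}+2)}(D\vee\log n)^{4(q_{\rm geom}+2)}\varepsilon_{\rm cell}^{-d/\beta}$. Adding $D$ reciprocal/product blocks of polylogarithmic size, together with parallel composition across charts and the standard serial-composition bounds, contributes only $D(H_{\rm lg}^{\rm lb})^C$, which is absorbed. The main obstacle I expect is keeping the relative-error bookkeeping consistent: the chartwise floor term $q_{\rm floor}/C_{\gM}$ must sum to something dominated by $\gQ$. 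That is exactly why $q_{\rm floor}$ is tied to $q_{\rm lb}$, and why $H^{\rm lb}$ has to dominate $H^{\rm dec}$ so that the geometry condition transfers without circularity.
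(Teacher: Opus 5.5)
Your proposal is correct and follows the paper's proof essentially step for step. You invoke the separated tangent-cell proposition with $q_{\rm floor}=C_{\rm den}$, using $H_{\rm lg}^{\rm dec}\le H_{\rm lg}^{\rm lb}$ to transfer the geometry condition; you sum over charts and absorb the floor via $\gQ\ge q_{\rm lb}=8C_{\rm den}$; you clamp at $C_{\rm den}$ and take a reciprocal on $[C_{\rm den},2M_0]$; you bound the ratio error by $C(1+R_{\rm lb})\varepsilon_{\rm rel}$; and you finish by multiplying by an approximation of $\sigma_t^{-1}$. Your relative-accuracy reciprocal is a harmless and slightly more robust variant of the paper's additive accuracy $c\varepsilon_{\rm sc}(H_{\rm lg}^{\rm lb})^{-C}$, because the product error then scales with $|\widetilde\gU_k|/\widetilde\gQ\lesssim 1+R_{\rm lb}$ rather than with $|\widetilde\gU_k|$ itself, and its cost is still polylogarithmic in $H_{\rm lg}^{\rm lb}$.
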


\begin{proof}
\emph{Step 1: approximate the unrefactored chart integrals.}
Apply \cref{prop:large-noise-chart-cell-decoupled} with $q_{\rm floor}=C_{\rm den}$ and
$\varepsilon_{\rm rel}$ as above.
The definition of $H_{\rm lg}^{\rm lb}$ dominates the logarithmic scale
$H_{\rm lg}^{\rm dec}$, and
\cref{eq:large-noise-density-lower-geometry-condition} implies
\cref{eq:large-noise-decoupled-geometry-condition}.
After summing over the fixed number of charts,
\[
    |\widehat\gQ(\bx,t)-\gQ(\bx,t)|
    \vee
    \max_k|\widehat\gU_k(\bx,t)-\gU_k(\bx,t)|
    \le
    C\varepsilon_{\rm rel}\bigl(\gQ(\bx,t)+C_{\rm den}\bigr).
\]
By \cref{eq:large-noise-Q-density-lower}, $C_{\rm den}\le \gQ/8$, so after decreasing
$c_{\rm glob}$,
\begin{equation}
    |\widehat\gQ-\gQ|
    \vee
    \|\widehat\gU-\gU\|_\infty
    \le
    c\varepsilon_{\rm rel}\gQ .
    \label{eq:large-noise-density-lower-QU-approx}
\end{equation}

\emph{Step 2: pass from integral approximation to ratio approximation.}
The map $q\mapsto q\vee C_{\rm den}$ is ReLU-realizable, and on the present domain
$\gQ\ge 8C_{\rm den}$.
Thus $\widehat\gQ\vee C_{\rm den}\ge \gQ/2$.
Using \cref{eq:large-noise-U-over-Q-density-lower,eq:large-noise-density-lower-QU-approx},
\[
    \left\|
      \frac{\widehat\gU}{\widehat\gQ\vee C_{\rm den}}
      -
      \frac{\gU}{\gQ}
    \right\|_\infty
    \le
    C(1+R_{\rm lb})\varepsilon_{\rm rel}
    \le
    \varepsilon_{\rm sc}/4 .
\]
The reciprocal network is applied only on the interval
$[C_{\rm den},2M_0]$, where $M_0$ is the uniform chart-integral bound from
\cref{lem:P-Q:uniform-bounds}; the preceding approximation and $\gQ\le M_0$ put
$\widehat\gQ\vee C_{\rm den}$ in this interval after decreasing $c_{\rm glob}$.
Its accuracy is chosen
\[
    c\varepsilon_{\rm sc}(H_{\rm lg}^{\rm lb})^{-C}
\]
and its depth, sparsity, and log-weight are absorbed into the displayed powers of
$H_{\rm lg}^{\rm lb}$.
The final multiplication by $\sigma_t^{-1}$ is implemented with the same arithmetic accuracy.
This proves \cref{eq:large-noise-hd-score-error}.

\emph{Step 3: record size.}
The chart networks have the size in
\cref{eq:large-noise-decoupled-cell-size} with $H_{\rm lg}^{\rm dec}\le H_{\rm lg}^{\rm lb}$.
The reciprocal, products, clipping, and the final schedule multiplier add only fixed powers of
$H_{\rm lg}^{\rm lb}$ and do not introduce any additional intrinsic net.
Therefore the width and sparsity scale as the cell count
\[
    N_{\rm cell}\le C\varepsilon_{\rm cell}^{-d/\beta},
\]
times the explicit ambient and logarithmic polynomial factor, giving
\cref{eq:large-noise-hd-size-separated}.
\end{proof}

\clearpage

\section{Score Approximation in the Small-Noise Regime}
\label{sec:app:small-noise-score-approx}

With a positive density lower bound, the small-noise denominator can be controlled uniformly after de-Gaussianization.
This route uses the projection-centered Laplace expansion and the neural approximation of the nearest-point projection.
This section packages the whole small-noise approximation pipeline.  It has four
submodules: active-chart geometry in \cref{sec:app:small-noise-active-geometry}, the
dimension-explicit projection-network primitive in
\cref{sec:app:small-noise-projection-networks}, the projection-centered Laplace
expansion in \cref{sec:app:small-noise-laplace}, and the final network assembly in
\cref{sec:app:small-noise-network-assembly}.

\subsection{Proof architecture and active-chart geometry}
\label{sec:app:small-noise-active-geometry}

\paragraph{Small-noise proof architecture.}
The target result is \cref{cor:small-noise-density-lower-approx}.
The proof first localizes to active charts and discards inactive Gaussian tails.
It then uses reach to make the nearest projection single-valued, expands the de-Gaussianized chart integrals in projection-centered tangent coordinates, implements the projection coordinate by finite-anchor Gauss--Newton networks, and finally builds the stable ratio $\bar\gU/\bar\gQ$.
The main denominator step is \cref{lem:small-noise-degaussianized-Q-lower}; the main network-assembly step is \cref{lem:H-localized-degaussianized-small-noise-chart}, while \cref{prop:small-noise-chart-proj-laplace} records the local chart-level Laplace network on its certified active set.

\subsubsection{Active-chart localization and inactive-tail bounds}

The first task is to decide when a chart can matter.
The compact set $\gS_i$ was fixed in \cref{eq:def:domain:S_i}; it is the part of the manifold on which the $i$-th partition function is active.
If no point of $m_t\gS_i$ lies near $\bx$, the Gaussian factor is uniformly small on the whole chart support, and the corresponding chart integral can be discarded at the target accuracy.

Let
\begin{equation}
    C_{\rm tail}^{\max}
    :=
    1\vee
    2B_p^{\rm proj}B^{\rm Jac}
    \max_{1\le i\le C_{\gM}}
    \sup_{0<\sigma\le1,\ \bx\in\R^D}
    \sigma^{-d}
    \int_{\phi_i(U_i)}
      \exp\!\left(-\frac{\|\bx-\bz_i(\bu)\|_2^2}{8\sigma^2}\right)
    \od\bu .
    \label{eq:def:C-tail-max}
\end{equation}
This constant is finite by the chart bi-Lipschitz bounds and the annular estimate used in the proof of \cref{lem:approx:P-Q:tail}; importantly, it is independent of $\sigma_{\tdown}$.
Choose the default localization constant $c_\star\ge1$ large enough for the finite collection of inactive Gaussian-tail estimates used below, and also so that $c_\star\ge2C_\varrho$.
More precisely, whenever a bound below has the form
\[
    C_{\rm tail}^{\max}C\bigl(1+\log(e/\varepsilon)\bigr)^M
    \varepsilon^{c_\star^2/C},
    \qquad 0<\varepsilon\le1/2,
\]
with one of the finitely many constants $C,M$ arising from the fixed atlas, amplitudes, and retained Laplace monomials, our choice of $c_\star$ makes this quantity at most $\varepsilon/12$.
The same choice is also large enough that, for every $H\ge1$, the $H$-localized
de-Gaussianized inactive-tail bounds below have the form
\[
    C_{\rm tail}^{\max}C(1+H)^M e^{-c c_\star^2H}
    \le e^{-H}
\]
for the finite collection of constants $C,M,c$ used there.
Such a choice is possible because $(1+\log(e/\varepsilon))^M\varepsilon^\gamma\le C_{\gamma,M}\varepsilon$ on $(0,1/2]$ for all sufficiently large $\gamma$, and because, for each fixed triple $C,M,c$,
\[
    \sup_{H\ge1} C_{\rm tail}^{\max}C(1+H)^M
    \exp\{-(c c_\star^2-1)H\}\to0
    \qquad\text{as }c_\star\to\infty .
\]
The collection of such triples is finite, so one $c_\star$ satisfies all of them.
No lower-noise prefactor appears in the annular tail estimate.
For any auxiliary radius multiplier $c_a>0$ and any $0<\varepsilon<1$, define the parameterized active region
\begin{equation}
    \gK_i^{(c_a)}(\varepsilon)
    :=
    \bigl\{
      (\bx, t): t\in[\tdown,\tup],\ \bx\in A_t,
      \exists \by \in \gS_i \text{ s.t. }
      \|\bx - m_t\by\|_2 \leq c_a\sigma_t\sqrt{\log \varepsilon^{-1}}
    \bigr\}.
    \label{eq:def:K-i-ca}
\end{equation}
The default active set is
\begin{equation}
    \gK_i(\varepsilon):=\gK_i^{(c_\star)}(\varepsilon).
    \label{eq:def:K-i}
\end{equation} 
Thus $c_\star$ is used for the default active set throughout; $c_a$ only denotes a temporary enlargement factor in statements involving $\gK_i^{(c_a)}(\varepsilon)$.

\begin{lemma}[Inactive-chart Gaussian tail bounds for $\gP_i$ and $\gQ_i$]
\label{lem:approx:P-Q:tail}
    Fix any chart $i \in \{1, \dots, C_{\gM}\}$ and every $0 < \varepsilon < 1$.
    Let $c_a>0$, $t\in[\tdown,\tup]$, and $\bx\in A_t$.
    If $(\bx, t) \notin \gK_i^{(c_a)}(\varepsilon)$, then for every $\bu \in \phi_i(U_i)$ with $\bar{\rho}_i(\bu) \neq 0$,
    \begin{equation}
        \|\bx-m_t\bz_i(\bu)\|_2
        > c_a\sigma_t\sqrt{\log\varepsilon^{-1}}.
        \label{eq:inactive-chart:lower-bound}
    \end{equation}
    Consequently, for a constant $C_{\rm tail}$ depending only on the fixed atlas, amplitude bounds, $d$, and the schedule bound $\underline m$, but not on $\sigma_{\tdown}$,
    \begin{equation}
        |\gQ_i(\bx, t)|+
        \max_{1 \leq k \leq D}|\gP_{i,k}(\bx, t)|
        \leq
        C_{\rm tail}
        \bigl(1+c_a\sqrt{\log(e/\varepsilon)}\bigr)^d
        \varepsilon^{c_a^2/8} .
        \label{eq:inactive-chart:tail-bound}
    \end{equation}
\end{lemma}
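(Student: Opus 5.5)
The first claim \cref{eq:inactive-chart:lower-bound} follows from the definition of the active region. If $\bar\rho_i(\bu)\neq0$, then $\bz_i(\bu)\in\gS_i$, because $\gS_i=\supp_{\gM}(\rho_i)$ and $\bar\rho_i=\rho_i\circ\bz_i$. Since $t\in[\tdown,\tup]$ and $\bx\in A_t$, the only way $(\bx,t)$ can fail to lie in $\gK_i^{(c_a)}(\varepsilon)$ is that no $\by\in\gS_i$ satisfies $\|\bx-m_t\by\|_2\le c_a\sigma_t\sqrt{\log\varepsilon^{-1}}$. Taking $\by=\bz_i(\bu)$ gives the strict inequality. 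Nothing beyond this unpacking is needed.

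For the tail bound \cref{eq:inactive-chart:tail-bound}, I would use $|\gP_{i,k}|\le\gQ_i$ from \cref{lem:P-Q:uniform-bounds}, so that it suffices to bound $\gQ_i$. The amplitude satisfies $0\le a_i^{(\gQ)}\le B_p^{\rm proj}B^{\rm Jac}$ by \cref{eq:a-i:bound}, and it vanishes where $\bar\rho_i=0$.

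Split the exponent as $\|\bx-m_t\bz_i(\bu)\|^2/(2\sigma_t^2)=\tfrac14\cdot\tfrac{\|\cdot\|^2}{2\sigma_t^2}+\tfrac34\cdot\tfrac{\|\cdot\|^2}{2\sigma_t^2}$. On the support, \cref{eq:inactive-chart:lower-bound} bounds the first piece below by $c_a^2\log(\varepsilon^{-1})/8$, which pulls out a factor $\varepsilon^{c_a^2/8}$. The remaining integral, $\sigma_t^{-d}\int\exp(-3\|\bx-m_t\bz_i(\bu)\|^2/(8\sigma_t^2))\od\bu$, must be bounded uniformly in $\bx$ and $\sigma_t$.

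To do this, I would use the chart bi-Lipschitz bound together with $m_t\ge\underline m$. Let $\bu^\circ$ be a minimizer of $\|\bx-m_t\bz_i(\bu)\|$ over the chart. Then $\|\bx-m_t\bz_i(\bu)\|\ge c\,\underline m\|\bu-\bu^\circ\|-\|\bx-m_t\bz_i(\bu^\circ)\|$, or, more cleanly, use the triangle inequality in the form $\|\bx-m_t\bz_i(\bu)\|\ge\tfrac12 m_t\|\bz_i(\bu)-\bz_i(\bu^\circ)\|$. This comparison via the closest point is the step I expect to need care. The remaining integral is then at most $\sigma_t^{-d}\int\exp(-c\,\underline m^2\|\bu-\bu^\circ\|^2/\sigma_t^2)\od\bu\le C\underline m^{-d}$. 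Here the bi-Lipschitz constant comes from \cref{prop:manifold:det-Jacob-proj} and the chart diameter bound. This is the annular estimate that makes $C_{\rm tail}^{\max}$ finite and independent of $\sigma_{\tdown}$.

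The polynomial prefactor $(1+c_a\sqrt{\log(e/\varepsilon)})^d$ is harmless slack. If the cleaner splitting above fails, for example because the closest-point comparison only holds on annuli, I would instead decompose into dyadic shells $\{2^j r_0\le\|\bx-m_t\bz_i(\bu)\|<2^{j+1}r_0\}$ with $r_0=c_a\sigma_t\sqrt{\log\varepsilon^{-1}}$. Each shell has coordinate volume at most $C(2^{j+1}r_0/\underline m)^d$, and the first shell contributes exactly $\sigma_t^{-d}r_0^d\varepsilon^{c_a^2/2}\lesssim(c_a\sqrt{\log(e/\varepsilon)})^d\varepsilon^{c_a^2/8}$. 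Summing the Gaussian-decaying shells gives the stated form, and the constant $C_{\rm tail}$ depends only on the atlas, $d$, the amplitude bounds and $\underline m$.
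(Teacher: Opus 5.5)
Your first claim is exactly the paper's unpacking of the definition of $\gK_i^{(c_a)}(\varepsilon)$. Your main route for the tail bound is correct, but it differs genuinely from the paper's.

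\textbf{The paper's route.} It first proves the sublevel-set volume estimate \cref{eq:annular-chart-volume-bound}, namely $\vol_d\{\bu:\|\bx-m_t\bz_i(\bu)\|_2\le r\}\le C_i(\sigma_t+r)^d$. It then splits the support into unit-width annuli $R+\ell<\|\bx-m_t\bz_i(\bu)\|_2/\sigma_t\le R+\ell+1$, with $R=c_a\sqrt{\log\varepsilon^{-1}}$, and sums $\sum_{\ell\ge0}(1+R+\ell)^d e^{-(R+\ell)^2/2}$. The factor $(1+c_a\sqrt{\log(e/\varepsilon)})^d$ in the statement is the shell-volume count coming from this sum.

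\textbf{Your route.} You give up a quarter of the exponent pointwise on the support to extract $\varepsilon^{c_a^2/8}$. You then control the remainder by a single Gaussian integral centred at a closest support point $\bu^\circ$, using $\|\bx-m_t\bz_i(\bu)\|_2\ge\frac12 m_t\|\bz_i(\bu)-\bz_i(\bu^\circ)\|_2$, which follows from minimality and the triangle inequality.

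\textbf{Comparison.} Your argument gives the slightly stronger bound $\gQ_i\le C_dB_p^{\rm proj}B^{\rm Jac}\underline m^{-d}\varepsilon^{c_a^2/8}$, with no polynomial prefactor. It makes independence from $\sigma_{\tdown}$ an immediate Gaussian-scaling fact, and the same argument proves directly that $C_{\rm tail}^{\max}$ in \cref{eq:def:C-tail-max} is finite. The annular route instead need not sacrifice any of the exponent: its sum is in fact $\lesssim(1+R)^de^{-R^2/2}$. The price is the prefactor.

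\textbf{Two small fixes.}
\begin{itemize}
\item Take $\bu^\circ$ to be a minimizer over the compact set $\phi_i(\gS_i)$, not over the open domain $\phi_i(U_i)$.
\item The lower Lipschitz bound you need, $\|\bz_i(\bu)-\bz_i(\bv)\|_2\ge\|\bu-\bv\|_2$, does not come from \cref{prop:manifold:det-Jacob-proj}. That result bounds the singular values of $\nabla\phi_i^{-1}$ from above, which is the opposite direction. The bound follows directly from \cref{eq:approx:proj:phi-i}: $\phi_i$ is the restriction of the affine map $\by\mapsto\eta_i(\bV_i^\top(\by-\by_i)+\bc_i)$, which is $\eta_i$-Lipschitz with $\eta_i\le1$.
\end{itemize}
Your dyadic-shell fallback would also work, but it is not needed.
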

\begin{proof}[Proof of \cref{lem:approx:P-Q:tail}]
If $\bar{\rho}_i(\bu) \neq 0$, then $\bz_i(\bu) \in \gS_i$.
Failure of membership in $\gK_i^{(c_a)}(\varepsilon)$ means that no $\by\in\gS_i$ satisfies the active distance condition.
This gives \cref{eq:inactive-chart:lower-bound}.

Put $R:=c_a\sqrt{\log\varepsilon^{-1}}$.
The chart maps are uniformly bi-Lipschitz on their compact supports and $m_t\ge\underline m>0$.
Hence there is a constant $C_i$, independent of $t$, $\sigma_t$, and $\bx$, such that for every $r>0$,
\begin{equation}
    \vol_d\{\bu\in\phi_i(U_i):\|\bx-m_t\bz_i(\bu)\|_2\le r\}
    \le
    C_i(1+r/\sigma_t)^d\sigma_t^d .
    \label{eq:annular-chart-volume-bound}
\end{equation}
For $r\le1$, this follows from the local bi-Lipschitz bounds after covering $m_t\gS_i$ by a bounded number of coordinate balls; for $r>1$, the right side dominates the fixed chart volume after increasing $C_i$.

Decompose the support into annuli
\[
    A_\ell
    :=
    \{\bu:\ R+\ell < \|\bx-m_t\bz_i(\bu)\|_2/\sigma_t
       \le R+\ell+1\},
    \qquad \ell=0,1,2,\dots .
\]
By \cref{eq:annular-chart-volume-bound}, $\vol_d(A_\ell)\le C_i\sigma_t^d(1+R+\ell)^d$.
Therefore,
\begin{align*}
    \sigma_t^{-d}
    \int_{\phi_i(U_i)}
      \exp\!\left(-\frac{\|\bx-m_t\bz_i(\bu)\|_2^2}{2\sigma_t^2}\right)
    \ind\{\bar\rho_i(\bu)\ne0\}\od\bu
    &\le
    C_i\sum_{\ell\ge0}(1+R+\ell)^d
      e^{-(R+\ell)^2/2}  \\
    &\le
    C_i'(1+R)^d e^{-R^2/8}.
\end{align*}
The last inequality is the standard Gaussian-annulus summation bound; it follows, for instance, by comparing the sum with $e^{-R^2/8}\sum_{\ell\ge0}(1+\ell)^d e^{-\ell^2/8}$.
Since $|a_{i,k}^{(\gP)}|\le |a_i^{(\gQ)}|\le B_p^{\rm proj}B^{\rm Jac}$ by \cref{eq:a-i:bound} and $\|\bz_i\|_\infty\le1$, the same estimate applies to $\gQ_i$ and to every coordinate of $\gP_i$.
Because $e^{-R^2/8}=\varepsilon^{c_a^2/8}$ and $1+R\le1+c_a\sqrt{\log(e/\varepsilon)}$, we obtain \cref{eq:inactive-chart:tail-bound}.
\end{proof}

By the fixed choice of $c_\star$, the bound in \cref{eq:inactive-chart:tail-bound} with $c_a=c_\star$, and also with any larger fixed multiplier, is below the allocated $\Ord(\varepsilon)$ tail budget after multiplication by the finitely many logarithmic factors used later.
Consequently, the projection--Laplace approximation only has to be constructed on the active region $\gK_i(\varepsilon)$.
It is used only during the \emph{small-noise regime} $t\in\gI_{\rm sm}(\varepsilon)$, where $\sigma_t\lesssim\varepsilon^{1/\beta}$.

\subsubsection{Projection geometry on active small-noise charts}

The next lemmas turn the active-distance condition into usable local coordinates.
The buffered boxes keep all projected active points away from chart boundaries, while the reach-tube estimates ensure that the nearest-point projection and the projection-centered objective have stable curvature.
These facts are the analytic foundation for both the Laplace expansion and the neural Gauss--Newton projection network.

\textbf{Buffered coordinate set.}
The compact support $\gS_i$ and buffered boxes $K_i^\circ\Subset\Box_i$ were fixed in \cref{eq:def:domain:S_i,eq:def:buffered-coordinate-boxes,eq:def:Ki-circ}.
Since $\gS_i$ is compact and $\phi_i(\gS_i)\subset\operatorname{int}(K_i^\circ)$, we can define the strictly positive geometric buffer
\begin{equation}
    B_i^{\rm buff}
    :=
    1\wedge
    \dist_{\R^D}\Bigl(
      \gS_i,\,
      \gM\setminus\phi_i^{-1}\bigl(\operatorname{int}(K_i^\circ)\bigr)
    \Bigr)
    >0.
    \label{eq:def:B_i-buff}
\end{equation}
with the convention $\dist(A,\emptyset)=+\infty$.

\textbf{Auxiliary compact chart domains.}
Set
\begin{equation}
    d_i^\Box:=\dist(K_i^\circ,\partial\Box_i)>0.
    \label{eq:def:d-i-box}
\end{equation}

On this single compact set $\Box_i$, define
\begin{align}
    L_i^{(+)}
    &:=
    \sup_{\bu \in \Box_i}
    \|\bJ_i(\bu)\|_{\rm op},
    \qquad
    L_i^{(-)}
    :=
    \Bigl(
      \sup_{\by \in \phi_i^{-1}(\Box_i)}
      \|\oD\phi_i(\by)\|_{\rm op}
    \Bigr)^{-1},
    \\
    \lambda_i^{(-)}
    &:=
    \inf_{\bu\in\Box_i}
    \lambda_{\min}\bigl(\bG_i(\bu)\bigr),
    \qquad
    \lambda_i^{(+)}
    :=
    \sup_{\bu\in\Box_i}
    \lambda_{\max}\bigl(\bG_i(\bu)\bigr),
    \\
    \sfM_i^{(2)}
    &:=
    1\vee
    \sup_{\bu\in\Box_i}
    \sum_{\ell=1}^D
    \|\nabla_\bu^2 z_{i,\ell}(\bu)\|_{\rm op}.
    \label{eq:def:chart-second-derivative-bound}
\end{align}

Since $\bz_i=\phi_i^{-1}$ is smooth on $\phi_i(U_i)$ and $\Box_i\Subset\phi_i(U_i)$, all these constants are finite.
Moreover, because $\bz_i$ is an immersion, $\bG_i(\bu)=\bJ_i(\bu)^\top\bJ_i(\bu)$ is symmetric positive definite for every $\bu\in\Box_i$.
Hence
\begin{equation*}
    0<L_i^{(-)} \leq L_i^{(+)}<\infty,
    \qquad
    0<\lambda_i^{(-)}\le\lambda_i^{(+)}<\infty.
\end{equation*}

Define
\begin{equation}
    \bA_i(\bu)
    :=
    \bG_i(\bu)^{-1}\bJ_i(\bu)^\top,
    \qquad
    \bu\in\Box_i.
    \label{eq:def:A-i-GN}
\end{equation}
Again, because $\bz_i$ is smooth on $\Box_i$ and $\bG_i(\bu)$ is uniformly invertible there, $\bA_i$ is smooth on $\Box_i$.
Hence
\begin{equation}
    L_{A,i}
    :=
    1 \vee
    \sup_{\bu\in\Box_i}
    \|\oD\bA_i(\bu)\|_{\rm op}
    < \infty. 
    \label{eq:def:L-A-i}
\end{equation}

\begin{lemma}[Uniform chart geometry on compact coordinate patches]
\label{lem:proj-chart-geometry}
For each chart index $i$, for all $\bu,\bv\in\Box_i$,
\begin{equation}
    L_i^{(-)}\|\bu-\bv\|_2
    \leq
    \|\bz_i(\bu)-\bz_i(\bv)\|_2
    \leq
    L_i^{(+)}\|\bu-\bv\|_2,
    \label{eq:chart-bilip}
\end{equation}
and for all $\bu\in\Box_i$,
\begin{equation}
    \lambda_i^{(-)}\bI_d
    \preceq
    \bG_i(\bu)
    \preceq
    \lambda_i^{(+)}\bI_d.
    \label{eq:chart-elliptic}
\end{equation}
\end{lemma}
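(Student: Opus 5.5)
The elliptic bound \cref{eq:chart-elliptic} follows directly from the definitions. For every $\bu\in\Box_i$, the eigenvalues of the symmetric matrix $\bG_i(\bu)$ lie in $[\lambda_{\min}(\bG_i(\bu)),\lambda_{\max}(\bG_i(\bu))]\subset[\lambda_i^{(-)},\lambda_i^{(+)}]$. The Loewner sandwich then follows from the Rayleigh quotient characterization $\bv^\top\bG_i(\bu)\bv\in[\lambda_{\min},\lambda_{\max}]\|\bv\|_2^2$.

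For the upper Lipschitz bound in \cref{eq:chart-bilip}, I use convexity of $\Box_i$, which holds because it is an axis-parallel box. For $\bu,\bv\in\Box_i$ the segment $\bv+s(\bu-\bv)$, $s\in[0,1]$, stays in $\Box_i\subset\phi_i(U_i)$. The fundamental theorem of calculus gives
\[
\bz_i(\bu)-\bz_i(\bv)=\int_0^1\bJ_i(\bv+s(\bu-\bv))(\bu-\bv)\,\od s,
\]
so $\|\bz_i(\bu)-\bz_i(\bv)\|_2\le L_i^{(+)}\|\bu-\bv\|_2$.

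For the lower bound I avoid any inverse-function argument and use that $\phi_i$ is the restriction of a globally affine map. By \cref{eq:approx:proj:phi-i}, $\phi_i(\by)=\eta_i(\bV_i^\top(\by-\by_i)+\bc_i)$ extends to all of $\R^D$ with constant derivative $\eta_i\bV_i^\top$. Since $\bu=\phi_i(\bz_i(\bu))$ and $\bv=\phi_i(\bz_i(\bv))$, this gives
\[
\|\bu-\bv\|_2=\|\eta_i\bV_i^\top(\bz_i(\bu)-\bz_i(\bv))\|_2\le \Bigl(\sup_{\by\in\phi_i^{-1}(\Box_i)}\|\oD\phi_i(\by)\|_{\rm op}\Bigr)\|\bz_i(\bu)-\bz_i(\bv)\|_2 .
\]
Here the operator norm of the (constant) ambient derivative is used, which bounds the difference quotient without requiring a path in $\gM$. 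Rearranging yields $L_i^{(-)}\|\bu-\bv\|_2\le\|\bz_i(\bu)-\bz_i(\bv)\|_2$.

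The only point needing care is the interpretation of $\oD\phi_i(\by)$ in the definition of $L_i^{(-)}$. If it is read as the restriction to $T_{\by}\gM$ rather than the ambient affine derivative, the chord $\bz_i(\bu)-\bz_i(\bv)$ need not be tangent, and the argument above does not apply directly. In that case I would note that the ambient operator norm $\eta_i\|\bV_i^\top\|_{\rm op}=\eta_i$ is at least the restricted norm. I would either state the constant with the ambient derivative, or redefine $L_i^{(-)}$ as the minimum of the two versions; both choices are finite and positive. I expect this bookkeeping on $L_i^{(-)}$ to be the only genuine obstacle; everything else is immediate from compactness and the definitions.
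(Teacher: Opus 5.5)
Your proposal is correct and follows the paper's proof: the fundamental theorem of calculus on the convex box $\Box_i$ gives the upper bound, the Lipschitz bound for $\phi_i$ applied to $\bu=\phi_i(\bz_i(\bu))$ gives the lower bound, and the definitions of the extreme eigenvalues give the ellipticity. Your explicit use of the global affine extension of $\phi_i$, under which $\oD\phi_i\equiv\eta_i\bV_i^\top$ and $L_i^{(-)}=\eta_i^{-1}$, is what actually justifies the paper's terse ``apply the same argument to the inverse map'' step. That step needs it because $\phi_i^{-1}(\Box_i)$ is not convex, and a purely tangential reading of $\oD\phi_i$ would not directly control the chord quotient.
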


\begin{proof}
For the upper Lipschitz bound, by the fundamental theorem of calculus,
\begin{align*}
    \bz_i(\bu)-\bz_i(\bv)
    &=
    \int_0^1
    \bJ_i\bigl(\bv+s(\bu-\bv)\bigr)(\bu-\bv)
    \od s.
\end{align*}
Since $\Box_i$ is convex, $\bv+\theta(\bu-\bv) \in \Box_i$ for every $\theta \in [0, 1]$.
Therefore,
\begin{align*}
    \|\bz_i(\bu)-\bz_i(\bv)\|_2
    &\leq
    \int_0^1
    \|\bJ_i(\bv+\theta(\bu-\bv))\|_{\rm op}
    \|\bu-\bv\|_2
    \od\theta
    \\
    &\leq
    L_i^{(+)}\|\bu-\bv\|_2.
\end{align*}

For the lower Lipschitz bound, apply the same argument to the inverse map $\phi_i$ on the compact set $\phi_i^{-1}(\Box_i) \subset U_i$:
\begin{align*}
    \|\bu-\bv\|_2
    &=
    \|\phi_i(\bz_i(\bu))-\phi_i(\bz_i(\bv))\|_2
    \\
    &\leq
    \sup_{\by \in \phi_i^{-1}(\Box_i)}
    \|\oD\phi_i(\by)\|_{\rm op}
    \|\bz_i(\bu)-\bz_i(\bv)\|_2.
\end{align*}
By the definition of $L_i^{(-)}$, this gives
\begin{equation*}
    L_i^{(-)}\|\bu-\bv\|_2
    \le
    \|\bz_i(\bu)-\bz_i(\bv)\|_2.
\end{equation*}

Finally, $\bG_i(\bu)$ is continuous and symmetric positive definite on the compact set $\Box_i$.
Hence its smallest and largest eigenvalues attain finite extrema on $\Box_i$, and the definitions of $\lambda_i^{(-)}$ and $\lambda_i^{(+)}$ give \cref{eq:chart-elliptic}.
\end{proof}

\begin{lemma}[Finite-order geometry from compact smoothness]
\label{lem:finite-order-geometry-from-smooth}
Assume \cref{assump:manifold:exact}.
Fix an integer $q\ge1$.
For the finite projection atlas $\{(U_i,\phi_i)\}_{i=1}^{C_{\gM}}$, the partition of unity $\{\rho_i\}$, the compact coordinate boxes $\Box_i$, and the compact tubular domains used in the construction, there exists a finite constant $\Gamma_{\gM,q}\ge e$ such that, uniformly over the chart index $i$,
\begin{align}
    \max_{|\balpha|\le q+4}
    \|\partial^{\balpha}\bz_i\|_{L^\infty(\Box_i)}
    &\le \Gamma_{\gM,q},
    \label{eq:finite-geom-z-bound}
    \\
    \max_{|\balpha|\le q+4}
    \|\partial^{\balpha}\bar\rho_i\|_{L^\infty(\Box_i)}
    +
    \max_{|\balpha|\le q+4}
    \|\partial^{\balpha}\sqrt{\det\bG_i}\|_{L^\infty(\Box_i)}
    &\le \Gamma_{\gM,q}.
    \label{eq:finite-geom-cutoff-jac-bound}
\end{align}
After enlarging $\Gamma_{\gM,q}$, the same bound holds for transition maps between overlapping projection charts, for $\bA_i=\bG_i^{-1}\bJ_i^\top$, for the nearest-projection coordinate maps
\[
    \bx\mapsto \phi_i(\Pi_{\gM}(\bx))
\]
on the compact reach tubes where they are used, and for the corresponding projection residual maps.
Schedule-dependent maps such as $(\bx,t)\mapsto\phi_i(\Pi_{\gM}(\bx/m_t))$ are controlled by composing these geometric maps with the separately approximated schedule $m_t^{-1}$; their constants therefore depend on $\Gamma_{\gM,q}$ and the fixed lower bound $\underline m$, not on any new geometry assumption.
The constant can also be chosen to dominate the finite atlas cardinality, fixed coordinate buffers, tube constants, chart-volume constants, and their reciprocals.
With $\mathcal K_{\gM,q}$ denoting the high-order atlas envelope from
\cref{def:finite-order-geometry-constants}, it may be chosen so that
\[
    \Gamma_{\gM,q}
    \le
    C_{d,q}
    (1\vee S_{\gM})^{\mathfrak a_{\rm geom}(q)}
    (1\vee\kappa^{-1})^{\mathfrak a_{\rm geom}(q)}
    \mathcal K_{\gM,q}^{\mathfrak a_{\rm geom}(q)},
\]
where $C_{d,q}$ and $\mathfrak a_{\rm geom}(q)$ depend only on $d$ and $q$.
\end{lemma}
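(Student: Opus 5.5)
The argument is a compactness and finiteness argument: every object in the statement is either a smooth map on a compact set, or a finite maximum over the finite atlas of such maps. I would therefore first establish finiteness of each envelope separately, then take $\Gamma_{\gM,q}$ to be $e$ plus the maximum of all of them, and finally compare this maximum with the envelope $\mathcal K_{\gM,q}$ and the reach/volume factors.

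\textbf{Step 1 (chart-level bounds).} By \cref{lem:projection-chart-reach}, each $\pi_{\by_i}$ is a $C^\infty$ diffeomorphism on $U_i$. Since $\phi_i$ differs from $\pi_{\by_i}$ by an invertible affine map of $\R^d$, $\bz_i=\phi_i^{-1}$ is $C^\infty$ on $\phi_i(U_i)$. The boxes satisfy $\Box_i\Subset\phi_i(U_i)$ by \cref{eq:def:Ki-circ}, so all partial derivatives of $\bz_i$ up to order $q+4$ are continuous on the compact set $\Box_i$ and hence bounded; this gives \cref{eq:finite-geom-z-bound}.

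The same reasoning covers $\bar\rho_i=\rho_i\circ\bz_i$. For $\sqrt{\det\bG_i}$ and $\bA_i=\bG_i^{-1}\bJ_i^\top$, I would use \cref{eq:chart-elliptic}: $\det\bG_i\ge(\lambda_i^{(-)})^d>0$ on $\Box_i$. Because square root and matrix inversion are smooth on positive-definite matrices, both maps are smooth on a neighborhood of $\Box_i$. Their derivatives up to order $q+4$ can then be bounded explicitly by Fa\`a di Bruno and the inverse-derivative formula. The result is a polynomial in the derivatives of $\bz_i$ and in $(\lambda_i^{(-)})^{-1}$.

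Transition maps $\phi_j\circ\bz_i=\eta_j(\bV_j^\top(\bz_i(\cdot)-\by_j)+\bc_j)$ are affine images of $\bz_i$. Their bounds follow directly on the compact overlaps, intersected with slightly shrunk compact coordinate sets.

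\textbf{Step 2 (projection maps on tubes).} On a closed tube of radius $\theta\kappa$ with $\theta<1$, Federer's theorem makes $\Pi_{\gM}$ single-valued. In local coordinates $\bu^\Pi(\bxi)$ solves $F(\bxi,\bu)=\bJ_i(\bu)^\top(\bxi-\bz_i(\bu))=0$, and
\[
\partial_\bu F=-\bG_i(\bu)+\textstyle\sum_\ell (\xi_\ell-z_{i,\ell}(\bu))\nabla^2 z_{i,\ell}(\bu).
\]
The curvature term has norm at most $\theta\kappa\,\sfM_i^{(2)}$. Using the reach bound on the second fundamental form, this term is dominated by $\lambda_i^{(-)}$ for a fixed $\theta$, so $\partial_\bu F$ is uniformly invertible.

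This is the step I expect to be the main obstacle. The cleanest route is to restrict to a fixed numerical fraction of the reach tube and of the chart box, so that the implicit function theorem applies on a compact domain. Iterating the implicit-differentiation formula then bounds derivatives up to order $q+4$ of $\bu^\Pi$, and hence of the residual $\bxi-\bz_i(\bu^\Pi)$. The bound is polynomial in the chart derivatives and in the inverse of $\partial_\bu F$.

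The schedule-dependent version $(\bx,t)\mapsto\phi_i(\Pi_{\gM}(\bx/m_t))$ is a composition with $\bx/m_t$, whose derivatives are controlled by powers of $\underline m^{-1}$ because $m_t\ge\underline m$.

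\textbf{Step 3 (collecting constants).} Set $\Gamma_{\gM,q}$ to be $e$ plus the maximum of all the above suprema. Also include in this maximum the atlas cardinality $C_{\gM}$, the buffers $B_i^{\rm buff}$, $d_i^\Box$, $\delta_i^{\rm cell}$ and their reciprocals, $C_{\rm tube}$, $C_{\rm tail}^{\max}$, and the chart volumes. All of these are finitely many finite numbers.

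For the quantitative comparison, $\mathcal K_{\gM,q}$ by \cref{def:finite-order-geometry-constants} already dominates the $C^{q+4}$ norms of the charts, transitions, cutoffs, Jacobian factors, $\bA_i$, and the projection maps. The remaining items are handled separately:
\begin{enumerate}[(i)]
\item $C_{\gM}\le C_dS_{\gM}(1\vee\kappa^{-1})^d$ by \cref{eq:manifold:CM};
\item $C_{\rm tube}$ is controlled by \cref{eq:global-tube-volume-bound};
\item the buffers can be chosen as fixed fractions of $r_{\rm ch}=c_{\rm ch}(1\wedge\kappa)$.
\end{enumerate}
Products and reciprocals of these quantities then yield the stated bound with some exponent $\mathfrak a_{\rm geom}(q)$ depending only on $(d,q)$.
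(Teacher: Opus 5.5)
Your proposal is correct and follows essentially the paper's route. Existence of a finite $\Gamma_{\gM,q}$ comes from smoothness on the compact boxes and sub-reach tubes plus finiteness of the atlas, and the quantitative bound comes from the reach--volume estimates for $C_{\gM}$ and $C_{\rm tube}$ together with the defining property of $\mathcal K_{\gM,q}$; your implicit-function argument in Step~2 just makes explicit the $C^\infty$ regularity of $\Pi_{\gM}$ below the reach, which the paper invokes directly.

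Two small repairs are needed. First, drop $C_{\rm tail}^{\max}$ from the maximum in Step~3: it contains the density factor $B_p^{\rm proj}$ and would spoil the geometry-only bound on $\Gamma_{\gM,q}$. Second, in Step~2 bound the curvature term at the projection point through the normal residual and $\|\mathrm{II}\|\le\kappa^{-1}$, which gives at most $\theta\,\bG_i$ in quadratic-form sense. The cruder bound $\theta\kappa\sfM_i^{(2)}$ carries a sum over the $D$ ambient coordinates and need not be dominated by $\lambda_i^{(-)}$ for a fixed numerical $\theta$.
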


\begin{proof}
For each $\by\in\gM$, let $\bV_\by$ be an orthonormal frame for $T_\by\gM$.
The map $\phi_\by(\bz)=\bV_\by^\top(\bz-\by)$, restricted to $\gM$, has differential equal to the identity at $\by$.
The inverse function theorem therefore gives a $C^\infty$ projection chart near $\by$.
Compactness of $\gM$ gives a finite subcover, and the construction in \cref{sec:app:approx:manifold:covering-ball} shrinks these charts to compact coordinate boxes $\Box_i\Subset\phi_i(U_i)$.

Each inverse chart $\bz_i=\phi_i^{-1}$ is $C^\infty$.
Hence every partial derivative $\partial^{\balpha}\bz_i$, $|\balpha|\le q+4$, is continuous on the compact set $\Box_i$ and is bounded.
Taking the maximum over the finitely many charts proves \cref{eq:finite-geom-z-bound}.
The functions $\bar\rho_i$ are smooth because the partition of unity is smooth, and $\sqrt{\det\bG_i}$ is smooth because $\bG_i=\bJ_i^\top\bJ_i$ is smooth and uniformly positive definite on $\Box_i$.
Their derivatives up to order $q+4$ are therefore bounded on compact boxes, proving \cref{eq:finite-geom-cutoff-jac-bound} after increasing $\Gamma_{\gM,q}$.

Transition maps and $\bA_i=\bG_i^{-1}\bJ_i^\top$ are compositions, products, and matrix inverses of smooth maps on compact sets where the relevant determinants are bounded away from zero, so their derivatives up to order $q+4$ are bounded.
Since $\gM$ has positive reach $\kappa$, the nearest projection $\Pi_{\gM}$ is $C^\infty$ on every closed tube of radius strictly smaller than $\kappa$.
The construction only evaluates $\Pi_{\gM}$ on such compact tubes.
Therefore $\bx\mapsto\phi_i(\Pi_{\gM}(\bx))$ and the associated residual maps have bounded derivatives up to order $q+4$ on those tubes.
If the argument is $\bx/m_t$, the chain rule only adds fixed powers of $m_t^{-1}$, which are bounded by $\underline m^{-1}$ on $[\tdown,\tup]$.
The atlas cardinality and tube constant obey the reach-volume bounds
\[
    C_{\gM}\le C_dS_{\gM}(1\vee\kappa^{-1})^d,
    \qquad
    C_{\rm tube}\le C_d(1\vee S_{\gM})(1\vee\kappa^{-1})^d.
\]
All remaining auxiliary constants are among the finite $C^{q+4}$-norms and buffer
reciprocals dominated by $\mathcal K_{\gM,q}$, or are obtained from them by finitely many
products, compositions, and matrix inverses of dimension at most $d$.
Enlarging $C_{d,q}$ and $\mathfrak a_{\rm geom}(q)$ gives the displayed explicit
bound for $\Gamma_{\gM,q}$, and completes the proof.
\end{proof}

Let
\begin{equation}
    \delta_{\rm act}^{(c_a)}(\varepsilon,t)
    :=
    c_a\frac{\sigma_t}{m_t}\sqrt{\log\varepsilon^{-1}},
    \qquad
    \bxi(\bx,t):=\frac{\bx}{m_t}.
    \label{eq:def:delta-act-ca}
\end{equation}
We write
\begin{equation}
    \delta_{\rm act}(\varepsilon,t)
    :=
    \delta_{\rm act}^{(c_\star)}(\varepsilon,t)
    \label{eq:def:delta-act}
\end{equation}

Since the atlas is finite, define the uniform projection and Gauss--Newton margin
\begin{equation}
    \eta_{\Pi}
    :=
    \min_{1\le i\le C_{\gM}}
    \min\Bigl\{
      \kappa,
      \frac{1}{2} B_i^{\rm buff},
      \frac{\lambda_i^{(-)}}{\sfM_i^{(2)}},
      \frac{1}{2L_{A,i}}
    \Bigr\}
    >0.
    \label{eq:def:projection-margin}
\end{equation}

All projection-centered arguments below are used only for $t \in \gI_{\rm sm}(\varepsilon)$.
Choose $\varepsilon>0$ small enough that, for every $t \in \gI_{\rm sm}(\varepsilon)$,
\begin{equation}
    \delta_{\rm act}(\varepsilon,t)
    <
    \frac{1}{2}\eta_{\Pi}.
    \label{eq:condition:delta}
\end{equation}
For a fixed $c_a>0$, the corresponding $c_a$-active localization condition is the same display with $\delta_{\rm act}^{(c_a)}$ in place of $\delta_{\rm act}$.

For the Laplace localization argument, define the compact ambient set
\begin{equation}
    \mathcal X_i
    :=
    \{\bzeta\in\R^D:\dist_{\R^D}(\bzeta,\gS_i)\le \eta_\Pi/2\}.
    \label{eq:def:X-i-laplace}
\end{equation}
Since $\bz_i$ is smooth on $\Box_i$ and $\mathcal X_i$ is compact, there is a finite constant $H_i\ge1$ such that, for $\Phi_i(\bu;\bzeta):=\frac12\|\bzeta-\bz_i(\bu)\|_2^2$,
\begin{equation}
    \bigl\|
      \nabla_\bu^2\Phi_i(\bu;\bzeta)
      -
      \nabla_\bu^2\Phi_i(\bv;\bzeta)
    \bigr\|_{\rm op}
    \le
    H_i\|\bu-\bv\|_2
    \label{eq:def:H-i-laplace}
\end{equation}
for all $\bu,\bv\in\Box_i$ and $\bzeta\in\mathcal X_i$.
Define
\begin{equation}
    r_i:=
    \min\Bigl\{
      d_i^\Box/2,\,
      \lambda_i^{(-)}/[4(1+H_i)]
    \Bigr\}>0.
    \label{eq:def:r-i-laplace}
\end{equation}

For each chart $i$, define the fixed chart-geometry constant
\begin{align*}
    \mathfrak{C}_i
    &:=
    1 \vee
    L_i^{(+)}
    \vee (L_i^{(-)})^{-1}
    \vee \lambda_i^{(+)}
    \vee (\lambda_i^{(-)})^{-1}
    \vee \sfM_i^{(2)}
    \vee L_{A,i}
    \vee H_i
    \vee (d_i^\Box)^{-1}
    \vee r_i^{-1}
    \\
    &\qquad\vee
    \max_{1\le \ell\le D}
    \|\bz_{i,\ell}\|_{C^{\floor{\beta}+3}(\Box_i)}
    \vee
    \|\bG_i\|_{C^{\floor{\beta}+2}(\Box_i)}
    \vee
    \|\bG_i^{-1}\|_{C^{\floor{\beta}+2}(\Box_i)}.
\end{align*}
Since $\Box_i \Subset \phi_i(U_i)$ and $\bz_i$ is smooth, this quantity is finite.
Since the atlas is finite, set
\begin{equation}
    \mathfrak{C}_{\gM}
    :=
    \max_{1 \leq i \leq C_{\gM}}\mathfrak{C}_i < \infty.
    \label{eq:def:geom-bound}
\end{equation}

\begin{lemma}[Nearest-point projection on the active small-noise chart]
\label{lem:buffered-active-chart}
    Assume $t \in \gI_{\rm sm}(\varepsilon)$ and \cref{eq:condition:delta}.
    If $(\bx,t) \in \gK_i(\varepsilon)$, then the global nearest-point projection $\Pi_{\gM}(\bxi(\bx,t))$ is well-defined and unique.
    Moreover, $\Pi_{\gM}(\bxi(\bx,t))\in U_i$, and
    \begin{equation}
        \bu_i^{\Pi}(\bx,t)
        :=
        \phi_i\bigl(\Pi_{\gM}(\bxi(\bx,t))\bigr)
        \in
        K_i^\circ.
        \label{eq:def:u_i-Pi}
    \end{equation}
    Furthermore, with
    \begin{equation*}
        \br_i(\bx,t)
        :=
        \bx-m_t\Pi_{\gM}(\bxi(\bx,t)),
    \end{equation*}
    we have the orthogonality relation
    \begin{equation}
        \bJ_i\bigl(\bu_i^{\Pi}(\bx,t)\bigr)^\top
        \br_i(\bx,t)
        =
        0.
        \label{eq:projection-orthogonality-r}
    \end{equation}
\end{lemma}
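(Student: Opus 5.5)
The proof is a direct chain of metric estimates, all at the level of the rescaled point $\bxi=\bxi(\bx,t)=\bx/m_t$. The plan is to bound $\dist(\bxi,\gS_i)$ by the active radius, invoke reach for existence and uniqueness of the projection, show the projection stays close to $\gS_i$ so that it lands inside the buffered coordinate box, and read off orthogonality from first-order optimality.

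\emph{Step 1 (distance to the chart support).} Since $(\bx,t)\in\gK_i(\varepsilon)$, there is $\by\in\gS_i$ with $\|\bx-m_t\by\|_2\le c_\star\sigma_t\sqrt{\log\varepsilon^{-1}}$. Dividing by $m_t$ gives
\[
\|\bxi-\by\|_2\le\delta_{\rm act}(\varepsilon,t)<\eta_\Pi/2 ,
\]
where the last inequality is \cref{eq:condition:delta}. In particular $\dist(\bxi,\gM)\le\dist(\bxi,\gS_i)<\eta_\Pi/2\le\kappa/2$, because $\eta_\Pi\le\kappa$ by \cref{eq:def:projection-margin}.

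\emph{Step 2 (well-posedness).} By the definition of reach, every point at distance less than $\kappa$ from $\gM$ has a unique nearest point; this is the same Federer fact used in \cref{lem:projection-chart-reach}. Compactness of $\gM$ gives existence. Hence $\bpi:=\Pi_{\gM}(\bxi)$ is well defined and unique.

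\emph{Step 3 (localization into $K_i^\circ$).} This is the only step needing care, since we must place $\bpi$ inside $\phi_i^{-1}(\operatorname{int}K_i^\circ)$ and not merely inside $U_i$. By the triangle inequality and minimality of $\bpi$,
\[
\|\bpi-\by\|_2\le\|\bpi-\bxi\|_2+\|\bxi-\by\|_2\le2\|\bxi-\by\|_2<\eta_\Pi\le\tfrac12B_i^{\rm buff}.
\]
Thus $\dist(\bpi,\gS_i)<B_i^{\rm buff}$. By the definition \cref{eq:def:B_i-buff}, every point of $\gM$ at distance less than $B_i^{\rm buff}$ from $\gS_i$ must lie in $\phi_i^{-1}(\operatorname{int}K_i^\circ)$; otherwise it would belong to the complement set, whose distance from $\gS_i$ is at least $B_i^{\rm buff}$. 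We therefore get $\bpi\in\phi_i^{-1}(\operatorname{int}K_i^\circ)\subset U_i$ and $\bu_i^\Pi=\phi_i(\bpi)\in K_i^\circ$. (The convention $\dist(A,\emptyset)=\infty$ handles the degenerate case.)

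\emph{Step 4 (orthogonality).} The point $\bu_i^\Pi$ lies in the interior of $\Box_i\subset\phi_i(U_i)$, and $\bz_i(\bu_i^\Pi)=\bpi$ minimizes $\bu\mapsto\frac12\|\bxi-\bz_i(\bu)\|_2^2$ over $\phi_i(U_i)$, since $\bz_i(\phi_i(U_i))\subset\gM$. At this interior minimizer the gradient vanishes:
\[
\bJ_i(\bu_i^\Pi)^\top(\bxi-\bpi)=0 .
\]
Multiplying by $m_t>0$ gives $\bJ_i(\bu_i^\Pi)^\top\br_i(\bx,t)=0$, which is \cref{eq:projection-orthogonality-r}.
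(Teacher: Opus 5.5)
Your proposal is correct and follows the paper's proof essentially step for step. You use the same bound $\|\bxi-\by\|_2\le\delta_{\rm act}(\varepsilon,t)$, reach for existence and uniqueness, and the triangle-inequality estimate $\|\Pi_{\gM}(\bxi)-\by\|_2\le 2\delta_{\rm act}(\varepsilon,t)<\tfrac12 B_i^{\rm buff}$ to place the projection in $\phi_i^{-1}(\operatorname{int}K_i^\circ)$. The only cosmetic difference is in the orthogonality step: you derive $\bJ_i(\bu_i^\Pi)^\top(\bxi-\Pi_{\gM}(\bxi))=0$ from first-order optimality at an interior chart minimizer, whereas the paper cites that the metric-projection residual is normal to the tangent space and that the columns of $\bJ_i$ span it, which is the same fact.
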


\begin{proof}
Write $\bxi=\bxi(\bx,t)$.
Since $(\bx,t) \in \gK_i(\varepsilon)$, there exists $\by\in\gS_i$ such that
\begin{equation}
    \|\bxi-\by\|_2
    \leq
    \delta_{\rm act}(\varepsilon,t).
    \label{eq:active-distance-to-Si}
\end{equation}
Because $\by \in \gM$, we have
\begin{equation}
    \dist(\bxi,\gM)
    \leq
    \|\bxi-\by\|_2
    \leq
    \delta_{\rm act}(\varepsilon,t)
    <
    \kappa.
\end{equation}
Thus $\bxi$ lies inside the reach tube of $\gM$, so the nearest-point projection $\Pi_{\gM}(\bxi)$ is well-defined and unique.

Next, since $\Pi_{\gM}(\bxi)$ is the closest point on $\gM$ to $\bxi$,
\begin{equation*}
    \|\bxi-\Pi_{\gM}(\bxi)\|_2
    =
    \dist(\bxi,\gM)
    \leq
    \|\bxi-\by\|_2.
\end{equation*}
Therefore,
\begin{align*}
    \|\Pi_{\gM}(\bxi)-\by\|_2
    &\leq
    \|\Pi_{\gM}(\bxi)-\bxi\|_2
    +
    \|\bxi-\by\|_2
    \notag \\
    &=
    \dist(\bxi,\gM)+\|\bxi-\by\|_2
    \notag \\
    &\leq
    2\|\bxi-\by\|_2
    \leq
    2\delta_{\rm act}(\varepsilon,t)
    <
    \frac{1}{2} B_i^{\rm buff}.
\end{align*}
Hence
\begin{equation*}
    \dist_{\R^D}(\Pi_{\gM}(\bxi),\gS_i)
    <
    \frac{1}{2} B_i^{\rm buff}.
\end{equation*}
By the definition of $B_i^{\rm buff}$, this implies $\Pi_{\gM}(\bxi)\in \phi_i^{-1}(\operatorname{int}(K_i^\circ))$.
Hence
\begin{equation*}
    \bu_i^\Pi(\bx,t)
    =
    \phi_i(\Pi_{\gM}(\bxi))
    \in
    K_i^\circ.
\end{equation*}
In particular, $\Pi_{\gM}(\bxi)\in U_i$.

It remains to prove the orthogonality statement.
Since $\bxi$ lies inside the reach tube, the metric projection residual is normal to the tangent space:
\begin{equation*}
    \bxi-\Pi_{\gM}(\bxi)
    \perp
    T_{\Pi_{\gM}(\bxi)}\gM.
\end{equation*}
The columns of $\bJ_i(\bu_i^\Pi)=\nabla\bz_i(\bu_i^\Pi)$ form a basis of this tangent space.
Therefore,
\begin{equation*}
    \bJ_i(\bu_i^\Pi)^\top
    \bigl(\bxi-\Pi_{\gM}(\bxi)\bigr)
    =
    0.
\end{equation*}
Since $\br_i(\bx,t) = \bx-m_t\Pi_{\gM}(\bxi) = m_t\bigl(\bxi-\Pi_{\gM}(\bxi)\bigr)$, multiplying the previous display by $m_t$ gives
\begin{equation*}
    \bJ_i(\bu_i^\Pi)^\top\br_i(\bx,t)=0.
\end{equation*}
\end{proof}

For $(\bx,t) \in \gK_i(\varepsilon)$, define
\begin{equation}
    \Phi_i(\bu;\bx,t)
    :=
    \frac{1}{2}
    \|\bxi(\bx,t)-\bz_i(\bu)\|_2^2,
    \qquad
    \bg_i(\bu;\bx,t)
    :=
    \nabla_\bu\Phi_i(\bu;\bx,t).
\end{equation}
Since
\begin{equation*}
    \Phi_i(\bu;\bx,t)
    =
    \frac{1}{2}
    \sum_{\ell=1}^D
    \bigl(\xi_\ell(\bx,t)-z_{i,\ell}(\bu)\bigr)^2,
\end{equation*}
we have
\begin{equation}
    \bg_i(\bu;\bx,t)
    =
    \bJ_i(\bu)^\top
    \bigl(\bz_i(\bu)-\bxi(\bx,t)\bigr).
    \label{eq:def:g-i}
\end{equation}
Differentiating once more gives
\begin{equation}
    \nabla_\bu\bg_i(\bu;\bx,t)
    =
    \bG_i(\bu)
    -
    \sum_{\ell=1}^D
    \bigl(
      \xi_\ell(\bx,t)-z_{i,\ell}(\bu)
    \bigr)
    \nabla_\bu^2 z_{i,\ell}(\bu).
    \label{eq:hessian-g-i}
\end{equation}

At $\bu=\bu_i^\Pi(\bx,t)$, we have $\bz_i(\bu_i^\Pi(\bx,t))=\Pi_{\gM}(\bxi(\bx,t))$.
Hence, using \cref{eq:projection-orthogonality-r},
\begin{align}
    \bg_i(\bu_i^\Pi(\bx,t);\bx,t)
    &=
    \bJ_i(\bu_i^\Pi)^\top
    \bigl(
      \Pi_{\gM}(\bxi(\bx,t))-\bxi(\bx,t)
    \bigr)
    \notag \\
    &=
    -m_t^{-1}
    \bJ_i(\bu_i^\Pi)^\top
    \br_i(\bx,t)
    =
    0.
    \label{eq:g-i-vanishes-at-projection}
\end{align}

\begin{lemma}[Uniform invertibility near the projection]
\label{lem:invertible:proj}
Assume $t \in \gI_{\rm sm}(\varepsilon)$ and \cref{eq:condition:delta}.
If $(\bx,t) \in \gK_i(\varepsilon)$, then
\begin{equation}
    \nabla_\bu
    \bg_i(\bu_i^\Pi(\bx,t);\bx,t)
    \succeq
    \frac{1}{2}\lambda_i^{(-)}\bI_d.
    \label{eq:invertible-proj-bound}
\end{equation}
\end{lemma}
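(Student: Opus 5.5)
We start from the Hessian formula \cref{eq:hessian-g-i} evaluated at $\bu=\bu_i^\Pi(\bx,t)$. The plan is to show that the curvature term is a small perturbation of $\bG_i(\bu_i^\Pi)$. By \cref{lem:buffered-active-chart}, $\bu_i^\Pi\in K_i^\circ\subset\Box_i$. Hence \cref{eq:chart-elliptic} gives $\bG_i(\bu_i^\Pi)\succeq\lambda_i^{(-)}\bI_d$. The same lemma shows that the residual $\bxi-\bz_i(\bu_i^\Pi)=\bxi-\Pi_{\gM}(\bxi)$ is the normal residual, with Euclidean norm $\dist(\bxi,\gM)$.

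Next we bound that residual. As in the proof of \cref{lem:buffered-active-chart}, membership $(\bx,t)\in\gK_i(\varepsilon)$ yields some $\by\in\gS_i$ with $\|\bxi-\by\|_2\le\delta_{\rm act}(\varepsilon,t)$. Therefore $\|\bxi-\bz_i(\bu_i^\Pi)\|_2=\dist(\bxi,\gM)\le\delta_{\rm act}(\varepsilon,t)<\tfrac12\eta_\Pi$ by \cref{eq:condition:delta}.

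Then we bound the perturbation in operator norm. Since $|\xi_\ell-z_{i,\ell}|\le\|\bxi-\bz_i\|_2$ for each $\ell$,
\[
\Bigl\|\sum_{\ell=1}^D(\xi_\ell-z_{i,\ell}(\bu_i^\Pi))\nabla^2_\bu z_{i,\ell}(\bu_i^\Pi)\Bigr\|_{\rm op}
\le \|\bxi-\bz_i(\bu_i^\Pi)\|_2\sum_{\ell=1}^D\|\nabla_\bu^2 z_{i,\ell}(\bu_i^\Pi)\|_{\rm op}
\le \tfrac12\eta_\Pi\,\sfM_i^{(2)} .
\]
The last step uses the definition \cref{eq:def:chart-second-derivative-bound}, which applies because $\bu_i^\Pi\in\Box_i$. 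The definition \cref{eq:def:projection-margin} gives $\eta_\Pi\le\lambda_i^{(-)}/\sfM_i^{(2)}$, so the perturbation is at most $\tfrac12\lambda_i^{(-)}$ in operator norm. Weyl's inequality for symmetric matrices then yields $\nabla_\bu\bg_i\succeq(\lambda_i^{(-)}-\tfrac12\lambda_i^{(-)})\bI_d$, which is \cref{eq:invertible-proj-bound}. Symmetry holds because both the Gram term and each Hessian $\nabla^2 z_{i,\ell}$ are symmetric.

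The only real obstacle is bounding the curvature sum without incurring a $\sqrt D$ factor. Using the crude coordinatewise bound together with the summed operator norms in $\sfM_i^{(2)}$ avoids this, since that is exactly how $\sfM_i^{(2)}$ is defined. A secondary point is making sure the residual is measured at the projection rather than at $\by$. Both quantities are below $\delta_{\rm act}$, so either choice works.
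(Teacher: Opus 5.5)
Your proposal is correct and follows essentially the same argument as the paper. Both use the Hessian formula at $\bu_i^\Pi$, the ellipticity $\bG_i\succeq\lambda_i^{(-)}\bI_d$ on $\Box_i$, and a coordinatewise bound of the curvature term by $\dist(\bxi,\gM)\,\sfM_i^{(2)}\le\delta_{\rm act}(\varepsilon,t)\,\sfM_i^{(2)}$, closed off with the margin $\eta_\Pi\le\lambda_i^{(-)}/\sfM_i^{(2)}$. The only cosmetic difference is that you bound the perturbation in operator norm and invoke Weyl's inequality, whereas the paper bounds the quadratic form $\bw^\top(\cdot)\bw$ directly.
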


\begin{proof}
Write $\bxi=\bxi(\bx,t)$ and $\bu_i^\Pi=\bu_i^\Pi(\bx,t)$.
By \cref{lem:buffered-active-chart},
\begin{equation*}
    \bu_i^\Pi\in K_i^\circ\subset\Box_i,
    \qquad
    \bz_i(\bu_i^\Pi)=\Pi_{\gM}(\bxi).
\end{equation*}
Fix any $\bw\in\R^d$.
By \cref{eq:hessian-g-i},
\begin{equation}
    \bw^\top\nabla_\bu\bg_i(\bu_i^\Pi;\bx,t)\bw
    =
    \bw^\top\bG_i(\bu_i^\Pi)\bw
    -
    \sum_{\ell=1}^D
    \bigl(
      \xi_\ell-z_{i,\ell}(\bu_i^\Pi)
    \bigr)
    \bw^\top\nabla_\bu^2z_{i,\ell}(\bu_i^\Pi)\bw.
    \label{eq:invertibility-quadratic-form}
\end{equation}

Because $\bu_i^\Pi\in\Box_i$, \cref{eq:chart-elliptic} gives $\bw^\top\bG_i(\bu_i^\Pi)\bw \geq \lambda_i^{(-)}\|\bw\|_2^2$.
For the perturbation term, use $\bz_i(\bu_i^\Pi)=\Pi_{\gM}(\bxi)$ and the definition of $\sfM_i^{(2)}$:
\begin{align}
    \Biggl|
    \sum_{\ell=1}^D
    \bigl(
      \xi_\ell-z_{i,\ell}(\bu_i^\Pi)
    \bigr)
    \bw^\top\nabla_\bu^2z_{i,\ell}(\bu_i^\Pi)\bw
    \Biggr|
    &\leq
    \sum_{\ell=1}^D
    |\xi_\ell-z_{i,\ell}(\bu_i^\Pi)|
    \|\nabla_\bu^2z_{i,\ell}(\bu_i^\Pi)\|_{\rm op}
    \|\bw\|_2^2
    \notag \\
    &\leq
    \|\bxi-\Pi_{\gM}(\bxi)\|_2
    \sum_{\ell=1}^D
    \|\nabla_\bu^2z_{i,\ell}(\bu_i^\Pi)\|_{\rm op}
    \|\bw\|_2^2
    \notag \\
    &\leq
    \dist(\bxi,\gM)\,
    \sfM_i^{(2)}
    \|\bw\|_2^2.
    \label{eq:second-fundamental-perturbation-bound}
\end{align}
Since $(\bx,t) \in \gK_i(\varepsilon)$, there exists $\by\in\gS_i\subset\gM$ such that $\|\bxi-\by\|_2 \leq \delta_{\rm act}(\varepsilon,t)$.
Therefore $\dist(\bxi,\gM) \leq \delta_{\rm act}(\varepsilon,t)$.
By \cref{eq:condition:delta,eq:def:projection-margin},
\begin{equation*}
    \delta_{\rm act}(\varepsilon,t)\sfM_i^{(2)}
    <
    \frac{1}{2}\lambda_i^{(-)}.
\end{equation*}
Combining this with \cref{eq:invertibility-quadratic-form,eq:second-fundamental-perturbation-bound} and $\bw^\top\bG_i(\bu_i^\Pi)\bw \geq \lambda_i^{(-)}\|\bw\|_2^2$, we obtain
\begin{equation*}
    \bw^\top
    \nabla_\bu\bg_i(\bu_i^\Pi;\bx,t)
    \bw
    \ge
    \frac{1}{2}\lambda_i^{(-)}
    \|\bw\|_2^2.
\end{equation*}
Since this holds for every $\bw\in\R^d$, the matrix inequality
\begin{equation*}
    \nabla_\bu
    \bg_i(\bu_i^\Pi;\bx,t)
    \succeq
    \lambda_i^{(-)}\bI_d/2
\end{equation*}
\end{proof}

\subsection{Finite-anchor projection networks}
\label{sec:app:small-noise-projection-networks}
\label{sec:app:approx:manifold:nn}

This section isolates the neural implementation of the nearest-point projection.
It is the only nonlinear geometric approximation step that could otherwise create a
hidden ambient-dimensional cost.  The construction works strictly in intrinsic chart
coordinates: finitely many anchors initialize local Gauss--Newton iterations, and a
ReLU objective gate selects near-minimizing candidates.  Later network-assembly sections
reuse these projection-coordinate networks together with standard schedule, product,
reciprocal, and exponential subnetworks.

Throughout this section, every construction involving $m_t$, $\sigma_t$, or powers of their reciprocals uses \cref{lem:approx:m-sigma} and the derived schedule lemmas stated below.

For orientation, the projection-centered network assembly uses the following objects in
this order:
\begin{enumerate}[\quad(1)]
    \item $\bxi(\bx, t) = \bx/m_t$,
    
    \item the global nearest-point projection $\Pi_{\gM}(\bxi(\bx, t))$,
    
    \item the local coordinate $\bu_i^{\Pi}(\bx, t) = \phi_i\bigl(\Pi_{\gM}(\bxi(\bx, t))\bigr)$,
    
    \item the normalized residual $\bnu_i(\bx, t) = \sigma_t^{-1}\br_i = \sigma_t^{-1}\bigl(\bx - m_t\Pi_{\gM}(\bxi(\bx, t))\bigr)$,
    
    \item the Gaussian factor $\exp(-\|\bnu_i\|_2^2/2)$,
    
    \item and the coefficient functions $A_{i,q,\blambda}^{(R)}\bigl(\bu_i^{\Pi}(\bx, t), t\bigr)$.
\end{enumerate}

The projection-coordinate networks are the only part that uses the manifold geometry in a substantial way.
Once the projection center is available, the residual, Gaussian factor, and Laplace coefficients are obtained by combining standard schedule, H\"older, multiplication, reciprocal, and exponential subnetworks with carefully allocated errors.

In the following, we construct subnetworks to approximate the components of \cref{eq:laplace-expansion-local-chart}:

\begin{table}[htp]
    \centering
    \begin{tabular}{l|l}
        \hline
        Function & Approximation lemma \\
        \hline
        $t \mapsto m_t$ & \multirow{2}{*}{\cref{lem:approx:m-sigma}} \\
        $t \mapsto \sigma_t$ & \\
        \hline
        $t \mapsto \sigma_t^k$ & \cref{lem:approx:sigma-k} \\
        \hline
        $t \mapsto 1/\sigma_t$ & \multirow{2}{*}{\cref{lem:approx:sigma-k-inv}} \\
        $t \mapsto 1/\sigma_t^k$ & \\
        \hline
        $t \mapsto 1/m_t$ & \cref{lem:approx:m-inv} \\
        $(\bx, t) \mapsto \bxi(\bx, t) := \bx/m_t$ & \cref{lem:approx:x-over-m} \\
        \hline
        $(\bx, t) \to \Pi_{\gM}(\bxi(\bx, t))$ & \cref{thm:global-proj-NN} \\
        $(\bx, t) \to \bu_i^{\Pi}(\bx, t) := \phi_i(\Pi_{\gM}(\bxi(\bx, t)))$ & \cref{cor:active-chart-coordinate-from-proj-NN} \\
        \hline
        $(\bx, t) \mapsto \br_i(\bx, t) := \bx - m_t\Pi_{\gM}(\bxi(\bx, t))$ & \multirow{2}{*}{\cref{lem:approx:r-i--nu-i}} \\
        $(\bx, t) \mapsto \bnu_i(\bx, t) := \sigma_t^{-1}\br_i(\bx, t)$ & \\
        \hline
        $(\bx, t) \mapsto \exp\bigl(-\frac{\|\br_i(\bx, t)\|_2^2}{2\sigma_t^2}\bigr)$ & \cref{lem:approx:gaussian-factor} \\
        \hline
        $(\bx, t) \to A_{i,q,\blambda}^{(\gQ)}\bigl(\bu_i^{\Pi}(\bx, t),t\bigr)$ & \multirow{2}{*}{\cref{lem:approx:A-coeff,cor:approx:weighted-A-coeff}} \\
        $(\bx, t) \to A_{i,q,\blambda}^{(\gP)}\bigl(\bu_i^{\Pi}(\bx, t),t\bigr)$  & \\
        \hline
        active chart expansion for $\gQ_i$ & \multirow{2}{*}{\cref{prop:small-noise-chart-proj-laplace}} \\
        active chart expansion for $\gP_i$ & \\
        \hline
    \end{tabular}
\end{table}

\subsubsection{Gauss--Newton networks for nearest-point projection}
\label{sec:app:approx:manifold:nn:proj}

We approximate the nearest-point projection in local intrinsic coordinates.
The nonlinear unknown is the chart coordinate $\bu \in \Box_i \subset \R^d$, whereas the ambient variable enters the iteration only through $\bxi(\bx,t) = \bx/m_t$ and matrix-vector operations.
The lemmas in this block first prove a stable intrinsic Gauss--Newton step, then initialize it from finitely many anchors, and finally combine all chart-anchor candidates by an objective gate.
This produces coordinate-valued and ambient-valued projection networks without ever applying a chart map to an approximate ambient point.

We define the coordinate-wise clipping map:
\begin{equation}
    \Pi_i^{(\rm box)}(\bu)
    :=
    \bigl(
      a_{i,j}^{\Box}
      +
      \ReLU(u_j-a_{i,j}^{\Box})
      -
      \ReLU(u_j-b_{i,j}^{\Box})
    \bigr)_{j=1}^d.
    \label{eq:def:Pi-box}
\end{equation}

$\Pi_i^{(\rm box)}$ explicitly translates standard $\min/\max$ bounding operations into exact ReLU arithmetic.
Consequently, $\Pi_i^{(\rm box)}:\R^d \to \Box_i$ is exactly representable by a shallow ReLU network without any approximation error.
Furthermore, because it is a projection onto a convex set, it is inherently $1$-Lipschitz.

\begin{lemma}[Local Gauss--Newton contraction for nearest-point projection]
\label{lem:GN-contraction}
    Define the intrinsic Gauss--Newton map
    \begin{equation}
        \gT_i(\bu;\bx,t)
        :=
        \Pi_i^{({\rm box})}
        \bigl(
          \bu+
          \bA_i(\bu)
          \bigl(\bxi(\bx,t)-\bz_i(\bu)\bigr)
        \bigr),
        \qquad
        \text{where }
        \bA_i(\bu)=\bG_i(\bu)^{-1}\bJ_i(\bu)^\top .
        \label{eq:def:Gauss-Newton}
    \end{equation}
    Set
    \begin{equation}
        r_i^{\rm GN}
        :=
        \min\Bigl\{
          1,
          \frac{1}{2}\dist(K_i^\circ,\partial\Box_i),
          \frac{1}{4L_{A,i}L_i^{(+)}}
        \Bigr\}. 
        \label{eq:def:r-GN}
    \end{equation}
    Assume $t \in \gI_{\rm sm}(\varepsilon)$, $(\bx,t) \in \gK_i(\varepsilon)$, and \cref{eq:condition:delta}.
    Then
    \begin{equation}
        \|\gT_i(\bu;\bx,t)-\bu_i^\Pi(\bx,t)\|_2
        \leq
        \frac{1}{2}
        \|\bu-\bu_i^\Pi(\bx,t)\|_2
        \label{eq:GN-contraction}
    \end{equation}
    whenever
    \begin{equation*}
        \bu \in \gB(\bu_i^\Pi(\bx,t),r_i^{\rm GN})\cap\Box_i .
    \end{equation*}
\end{lemma}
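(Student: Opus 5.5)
Write $\bu^\Pi=\bu_i^\Pi(\bx,t)$ and $\bxi=\bxi(\bx,t)$. The plan is to show that the unclipped Gauss--Newton map $\widetilde\gT_i(\bu):=\bu+\bA_i(\bu)(\bxi-\bz_i(\bu))$ has $\bu^\Pi$ as a fixed point and contracts near it, and then to remove the clipping using the nonexpansiveness of $\Pi_i^{(\rm box)}$.

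\emph{Fixed point.} By \cref{lem:buffered-active-chart}, $\bz_i(\bu^\Pi)=\Pi_{\gM}(\bxi)$, and the orthogonality relation \cref{eq:projection-orthogonality-r} gives $\bJ_i(\bu^\Pi)^\top(\bxi-\bz_i(\bu^\Pi))=0$. Hence $\bA_i(\bu^\Pi)(\bxi-\bz_i(\bu^\Pi))=0$, so $\widetilde\gT_i(\bu^\Pi)=\bu^\Pi$. Since $\bu^\Pi\in K_i^\circ\subset\Box_i$, we also have $\Pi_i^{(\rm box)}(\bu^\Pi)=\bu^\Pi$, and therefore
\[
\|\gT_i(\bu)-\bu^\Pi\|_2=\|\Pi_i^{(\rm box)}(\widetilde\gT_i(\bu))-\Pi_i^{(\rm box)}(\bu^\Pi)\|_2\le\|\widetilde\gT_i(\bu)-\bu^\Pi\|_2 .
\]
It thus suffices to bound the unclipped step.

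\emph{Error decomposition.} Put $\bv:=\bxi-\bz_i(\bu^\Pi)$, the normal residual, and $\be:=\bz_i(\bu^\Pi)-\bz_i(\bu)$. Then
\[
\widetilde\gT_i(\bu)-\bu^\Pi=\bigl[\bu-\bu^\Pi+\bA_i(\bu)\be\bigr]+\bA_i(\bu)\bv .
\]
For the first bracket, note that $\bA_i(\bu)\bJ_i(\bu)=\bI_d$. Write $\be=\int_0^1\bJ_i(\bu+s(\bu^\Pi-\bu))(\bu^\Pi-\bu)\od s$, which is valid because $\Box_i$ is convex and $\bu\in\Box_i$. Then the bracket equals $\int_0^1\bA_i(\bu)[\bJ_i(\bu_s)-\bJ_i(\bu)](\bu^\Pi-\bu)\od s$. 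I plan to bound it instead as $\bA_i(\bu)[\be-\bJ_i(\bu)(\bu^\Pi-\bu)]$ and use the Lipschitz bound on $\bA_i$ in a rearranged form. Equivalently, write $\bA_i(\bu)\be=\bA_i(\bu^\Pi)\be+(\bA_i(\bu)-\bA_i(\bu^\Pi))\be$. The second piece is at most $L_{A,i}L_i^{(+)}\|\bu-\bu^\Pi\|_2^2\le\tfrac14\|\bu-\bu^\Pi\|_2$ by the radius $r_i^{\rm GN}$ and \cref{eq:chart-bilip}. The first piece combines with $\bu-\bu^\Pi$ into a second-order Taylor remainder, which again is small on $r_i^{\rm GN}$ (absorbed into the same margin, possibly after shrinking constants).

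For the normal term, $\bA_i(\bu^\Pi)\bv=0$ by orthogonality. Hence $\|\bA_i(\bu)\bv\|_2\le L_{A,i}\|\bu-\bu^\Pi\|_2\|\bv\|_2$, and $\|\bv\|_2=\dist(\bxi,\gM)\le\delta_{\rm act}<\eta_\Pi/2\le 1/(4L_{A,i})$ by \cref{eq:condition:delta,eq:def:projection-margin}. This term therefore contributes at most $\tfrac14\|\bu-\bu^\Pi\|_2$.

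\emph{Main obstacle.} Summing gives the factor $\tfrac12$, and the main difficulty is getting the constants to close exactly. The Taylor remainder of $\bz_i$ must be controlled using only $L_{A,i}$ and $L_i^{(+)}$, since $r_i^{\rm GN}$ involves no second-derivative constant of $\bz_i$. The cleanest route is to route everything through the derivative of the map $\bu\mapsto\bA_i(\bu)(\bxi-\bz_i(\bu))$. Its Jacobian is $\oD\bA_i(\bu)[\cdot](\bxi-\bz_i(\bu))-\bI_d$, so $\oD\widetilde\gT_i(\bu)=\oD\bA_i(\bu)[\cdot](\bxi-\bz_i(\bu))$. Its operator norm is at most $L_{A,i}\|\bxi-\bz_i(\bu)\|_2\le L_{A,i}(\|\bv\|_2+L_i^{(+)}\|\bu-\bu^\Pi\|_2)\le\tfrac14+\tfrac14=\tfrac12$ on the ball. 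The mean value inequality along the segment from $\bu^\Pi$ to $\bu$ then yields \cref{eq:GN-contraction} directly. I will use this derivative argument as the actual proof, since the ball is convex and stays in $\Box_i$.
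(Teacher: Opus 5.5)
Your proposal is correct, and in its final form (the derivative/mean-value argument in your last paragraph) it is essentially the paper's proof. The shared steps are: the fixed point $\overline{\gT}_i(\bu^\Pi)=\bu^\Pi$ from orthogonality; $\oD\overline{\gT}_i(\bu)=\oD\bA_i(\bu)[\cdot](\bxi-\bz_i(\bu))$ from $\bA_i\bJ_i=\bI_d$; the residual bound $\|\bxi-\bz_i(\bu_s)\|_2\le\frac{1}{4L_{A,i}}+\frac{1}{4L_{A,i}}$ along the segment, giving operator norm at most $\frac12$; and nonexpansiveness of the box clipping, which fixes $\bu^\Pi$. You were right to discard the earlier Taylor-remainder decomposition, since it would need a second-derivative constant for $\bz_i$ that $r_i^{\rm GN}$ does not contain.
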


\begin{proof}
Write $\bxi:=\bxi(\bx,t), \bu^\Pi:=\bu_i^\Pi(\bx,t)$.
Before clipping, define
\begin{equation*}
    \overline{\gT}_i(\bu;\bx,t)
    :=
    \bu+
    \bA_i(\bu)
    \bigl(\bxi-\bz_i(\bu)\bigr).
\end{equation*}
Since
\begin{equation*}
    \bA_i(\bu)\bJ_i(\bu)
    =
    \bG_i(\bu)^{-1}\bJ_i(\bu)^\top\bJ_i(\bu)
    =
    \bI_d,
\end{equation*}
differentiating with respect to $\bu$ gives, for every $\bh\in\R^d$,
\begin{align}
    \oD_\bu\overline{\gT}_i(\bu;\bx,t)[\bh]
    &=
    \bh
    +
    \oD\bA_i(\bu)[\bh]
    \bigl(\bxi-\bz_i(\bu)\bigr)
    -
    \bA_i(\bu)\bJ_i(\bu)\bh
    \notag\\
    &=
    \oD\bA_i(\bu)[\bh]
    \bigl(\bxi-\bz_i(\bu)\bigr).
    \label{eq:GN-derivative}
\end{align}

By \cref{lem:buffered-active-chart},
\begin{equation*}
    \bz_i(\bu^\Pi)=\Pi_{\gM}(\bxi),
    \qquad
    \bJ_i(\bu^\Pi)^\top
    \bigl(\bxi-\Pi_{\gM}(\bxi)\bigr)=0.
\end{equation*}
Therefore
\begin{equation*}
    \bA_i(\bu^\Pi)
    \bigl(\bxi-\bz_i(\bu^\Pi)\bigr)
    =
    \bG_i(\bu^\Pi)^{-1}
    \bJ_i(\bu^\Pi)^\top
    \bigl(\bxi-\Pi_{\gM}(\bxi)\bigr)
    =
    0,
\end{equation*}
and hence
\begin{equation}
    \overline{\gT}_i(\bu^\Pi;\bx,t)=\bu^\Pi .
    \label{eq:GN-fixed-point-unclipped}
\end{equation}
Since $\bu^\Pi\in K_i^\circ\subset\Box_i$, the clipping map fixes $\bu^\Pi$:
\begin{equation}
    \Pi_i^{({\rm box})}(\bu^\Pi)=\bu^\Pi .
    \label{eq:box-fixes-uPi}
\end{equation}

Now let $\bu\in\gB(\bu^\Pi,r_i^{\rm GN})\cap\Box_i$.
For $s\in[0,1]$, set
\begin{equation*}
    \bu_s:=\bu^\Pi+s(\bu-\bu^\Pi).
\end{equation*}
Because $\Box_i$ is convex and both $\bu^\Pi,\bu\in\Box_i$, we have $\bu_s\in\Box_i$.
Also,
\begin{equation*}
    \|\bu_s-\bu^\Pi\|_2
    \leq
    \|\bu-\bu^\Pi\|_2
    \leq
    r_i^{\rm GN}.
\end{equation*}
Using \cref{eq:condition:delta}, the definition of $\eta_\Pi$, and the fact that $(\bx,t) \in \gK_i(\varepsilon)$, we have
\begin{equation}
    \dist(\bxi,\gM)
    \leq
    \delta_{\rm act}(\varepsilon,t)
    <
    \frac{1}{4L_{A,i}}.
    \label{eq:GN-normal-distance-bound}
\end{equation}
Therefore, by the chart Lipschitz bound \cref{eq:chart-bilip},
\begin{align}
    \|\bxi-\bz_i(\bu_s)\|_2
    &\leq
    \|\bxi-\Pi_{\gM}(\bxi)\|_2
    +
    \|\Pi_{\gM}(\bxi)-\bz_i(\bu_s)\|_2
    \notag\\
    &=
    \dist(\bxi,\gM)
    +
    \|\bz_i(\bu^\Pi)-\bz_i(\bu_s)\|_2
    \notag\\
    &\leq
    \delta_{\rm act}(\varepsilon,t)
    +
    L_i^{(+)}\|\bu_s-\bu^\Pi\|_2
    \notag\\
    &<
    \frac{1}{4L_{A,i}}
    +
    L_i^{(+)}r_i^{\rm GN}
    \notag\\
    &\leq
    \frac{1}{4L_{A,i}}
    +
    \frac{1}{4L_{A,i}}
    =
    \frac{1}{2L_{A,i}}.
    \label{eq:GN-residual-bound-along-segment}
\end{align}
Combining \cref{eq:GN-derivative,eq:def:L-A-i,eq:GN-residual-bound-along-segment}, we get
\begin{equation*}
    \|\oD_\bu\overline{\gT}_i(\bu_s;\bx,t)\|_{\rm op}
    \leq
    L_{A,i}
    \|\bxi-\bz_i(\bu_s)\|_2
    \leq
    \frac{1}{2}.
\end{equation*}

By the fundamental theorem of calculus and \cref{eq:GN-fixed-point-unclipped},
\begin{align*}
    \bigl\|
      \overline{\gT}_i(\bu;\bx,t)-\bu^\Pi
    \bigr\|_2
    &=
    \bigl\|
      \overline{\gT}_i(\bu;\bx,t)
      -
      \overline{\gT}_i(\bu^\Pi;\bx,t)
    \bigr\|_2
    \notag\\
    &\leq
    \int_0^1
    \|\oD_\bu\overline{\gT}_i(\bu_s;\bx,t)\|_{\rm op}
    \|\bu-\bu^\Pi\|_2
    \od s
    \notag\\
    &\leq
    \frac{1}{2}\|\bu-\bu^\Pi\|_2.
\end{align*}

Finally, $\Pi_i^{({\rm box})}$ is $1$-Lipschitz and fixes $\bu^\Pi$.
Thus,
\begin{align*}
    \|\gT_i(\bu;\bx,t)-\bu^\Pi\|_2
    &=
    \bigl\|
      \Pi_i^{({\rm box})}(\overline{\gT}_i(\bu;\bx,t))
      -
      \Pi_i^{({\rm box})}(\bu^\Pi)
    \bigr\|_2
    \\
    &\leq
    \bigl\|
      \overline{\gT}_i(\bu;\bx,t)-\bu^\Pi
    \bigr\|_2
    \leq
    \frac{1}{2}\|\bu-\bu^\Pi\|_2.
\end{align*}
\end{proof}

\begin{lemma}[ReLU anchor networks for projection initialization]\label{lem:proj-anchor-init}
    For every chart $i$, there exists a finite set
    \begin{equation*}
        \gA_i^{\rm anc}=\{\ba_{i,1},\dots,\ba_{i,N_i}\}\subset K_i^\circ
    \end{equation*}
    depending only on the fixed chart geometry such that
    \begin{equation}
        \forall \bu\in K_i^\circ,\qquad
        \min_{\ba\in\gA_i^{\rm anc}}\|\ba-\bu\|_2
        \leq
        \frac12 r_i^{\rm GN}.
        \label{eq:anchor-net-property}
    \end{equation}
    Moreover, writing $\ell_{i,r}^\circ:=b_{i,r}^{\circ}-a_{i,r}^{\circ}$,
    \begin{equation}
        N_i
        \le
        \prod_{r=1}^d
        \left(
          1+
          \left\lceil
            \frac{\sqrt d\,\ell_{i,r}^\circ}{r_i^{\rm GN}}
          \right\rceil
        \right)
        \le
        \prod_{r=1}^d
        \left(
          2+
          \frac{\sqrt d\,\ell_{i,r}^\circ}{r_i^{\rm GN}}
        \right)
        \le
        C_i^{\rm anc}
        (r_i^{\rm GN})^{-d},
        \qquad
        C_i^{\rm anc}:=
        \prod_{r=1}^d
        \left(
          2+\sqrt d\,\ell_{i,r}^\circ
        \right).
        \label{eq:anchor-net-cardinality}
    \end{equation}
    In particular, $\gA_i^{\rm anc}$ is fixed once and for all and is independent of $\varepsilon$, $\varepsilon_\Pi$, and $(\bx,t)$.
\end{lemma}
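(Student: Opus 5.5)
The plan is to take a product grid inside the box $K_i^\circ=\prod_{r=1}^d[a_{i,r}^\circ,b_{i,r}^\circ]$ and check directly that its covering radius and cardinality are as claimed. Set $h:=r_i^{\rm GN}/\sqrt d$. In each coordinate $r$, let $M_r:=\lceil \ell_{i,r}^\circ/h\rceil=\lceil \sqrt d\,\ell_{i,r}^\circ/r_i^{\rm GN}\rceil$. Take the $M_r+1$ equally spaced points
\[
    a_{i,r}^\circ+j\,\ell_{i,r}^\circ/M_r,\qquad j=0,\dots,M_r .
\]
If $\ell_{i,r}^\circ$ were zero a single point would suffice, but the boxes are nondegenerate, so $M_r\ge1$. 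Define $\gA_i^{\rm anc}$ as the Cartesian product of these one-dimensional grids. All its points lie in $K_i^\circ$, since each coordinate lies in $[a_{i,r}^\circ,b_{i,r}^\circ]$. The construction uses only the fixed box endpoints and $r_i^{\rm GN}$, which are determined by the chart geometry through \cref{eq:def:r-GN}. Hence the set is independent of $\varepsilon$, $\varepsilon_\Pi$ and $(\bx,t)$.

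\emph{Covering property.} Fix $\bu\in K_i^\circ$. In coordinate $r$, the spacing $\ell_{i,r}^\circ/M_r\le h$, so some grid value lies within $h/2$ of $u_r$. Choosing that value coordinatewise gives an anchor $\ba$ with
\[
    \|\ba-\bu\|_2\le \sqrt d\,h/2=\tfrac12 r_i^{\rm GN},
\]
which is \cref{eq:anchor-net-property}. The only point needing care is that the rounding error is $h/2$ rather than $h$; this is exactly what delivers the factor $\tfrac12$, and it holds because the grid includes both endpoints of each interval.

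\emph{Cardinality.} By construction,
\[
    N_i=\prod_r (1+M_r)=\prod_r\bigl(1+\lceil\sqrt d\,\ell_{i,r}^\circ/r_i^{\rm GN}\rceil\bigr),
\]
which is the first inequality in \cref{eq:anchor-net-cardinality} (as an equality). Using $\lceil y\rceil\le y+1$ gives the second. For the third, note that $r_i^{\rm GN}\le1$ by \cref{eq:def:r-GN}, so
\[
    2+\sqrt d\,\ell/r\le (2+\sqrt d\,\ell)/r
\]
for each factor. Taking the product yields $C_i^{\rm anc}(r_i^{\rm GN})^{-d}$.

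No step is a genuine obstacle; the argument is elementary. The one place where a slip could break the constants is the bound $r_i^{\rm GN}\le1$, which is what allows absorbing the additive $2$ into the $r_i^{\rm GN}$ power.
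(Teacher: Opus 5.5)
Your proposal is correct and matches the paper's proof: the same tensor-product grid with $\lceil\sqrt d\,\ell_{i,r}^\circ/r_i^{\rm GN}\rceil+1$ equally spaced points per coordinate (endpoints included), the same coordinatewise half-spacing rounding argument, and the same use of $1+\lceil x\rceil\le 2+x$ together with $r_i^{\rm GN}\le 1$ to get the cardinality bound. Your remark that $M_r\ge1$ because $a_{i,r}^\circ<b_{i,r}^\circ$ is a small detail the paper leaves implicit.
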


\begin{proof}
    For each coordinate $r$, let $m_{i,r}:=\left\lceil\sqrt d\,\ell_{i,r}^\circ/r_i^{\rm GN}\right\rceil$.
    Choose the tensor-product grid
    \[
        \gA_i^{\rm anc}
        :=
        \prod_{r=1}^d
        \left\{
          a_{i,r}^{\circ}
          +
          \frac{k_r}{m_{i,r}}\ell_{i,r}^\circ:
          k_r=0,\dots,m_{i,r}
        \right\}
        \subset K_i^\circ .
    \]
    For any $\bu\in K_i^\circ$, choose the nearest grid point in each coordinate.
    The coordinatewise error is at most $\ell_{i,r}^\circ/(2m_{i,r})\le r_i^{\rm GN}/(2\sqrt d)$, and therefore the Euclidean error is at most $r_i^{\rm GN}/2$.
    This proves \cref{eq:anchor-net-property}.
    The grid cardinality is
    \[
        |\gA_i^{\rm anc}|
        =
        \prod_{r=1}^d(m_{i,r}+1)
        =
        \prod_{r=1}^d
        \left(
          1+
          \left\lceil
            \frac{\sqrt d\,\ell_{i,r}^\circ}{r_i^{\rm GN}}
          \right\rceil
        \right),
    \]
    and the final inequality in \cref{eq:anchor-net-cardinality} follows from $1+\lceil x\rceil\le2+x$, together with $r_i^{\rm GN}\le1$.
\end{proof}

Define the aggregate anchor complexity
\begin{equation}
    A_{\gM}
    :=
    \sum_{i=1}^{C_{\gM}}|\gA_i^{\rm anc}|.
    \label{eq:def:anchor-complexity}
\end{equation}
By \cref{eq:anchor-net-cardinality},
\begin{equation}
    A_{\gM}
    \le
    \sum_{i=1}^{C_{\gM}}
    C_i^{\rm anc}(r_i^{\rm GN})^{-d}.
    \label{eq:anchor-complexity-bound}
\end{equation}

\begin{lemma}[Error propagation for anchored approximate Gauss--Newton iteration]
\label{lem:active-anchored-GN-error}
    Assume $t\in\gI_{\rm sm}(\varepsilon)$, $(\bx,t)\in\gK_i(\varepsilon)$, and \cref{eq:condition:delta}.
    Assume $\ba\in K_i^\circ$ satisfies
    \begin{equation}
        \|\ba-\bu_i^\Pi(\bx,t)\|_2
        \leq
        \frac{1}{2} r_i^{\rm GN}.
        \label{eq:anchor-inside-GN-basin}
    \end{equation}
    Let $\widetilde{\gT}_i$ be a $\Box_i$-valued approximate Gauss--Newton step satisfying
    \begin{equation}
        \sup_{\substack{
            \bu\in\Box_i,\;
            t\in\gI_{\rm sm}(\varepsilon),\;
            (\bx,t) \in \gK_i(\varepsilon)}}
        \bigl\|
          \widetilde{\gT}_i(\bu;\bx,t)-\gT_i(\bu;\bx,t)
        \bigr\|_\infty
        \leq
        \varepsilon_{\rm step}.
        \label{eq:approx-step-error}
    \end{equation}
    Assume moreover that
    \begin{equation}
        \varepsilon_{\rm step}
        \leq
        \frac{r_i^{\rm GN}}{8\sqrt{d}}.
        \label{eq:step-error-basin-condition}
    \end{equation}
    Define the approximate iterates by
    \begin{equation}
        \widetilde{\bu}_0 := \ba,
        \qquad
        \widetilde{\bu}_{k+1}
        :=
        \widetilde{\gT}_i(\widetilde{\bu}_k;\bx,t).
        \label{eq:def:approx-GN-iterates}
    \end{equation}
    Then, every iterate stays in the Gauss--Newton contraction basin:
    \begin{equation}
        \widetilde{\bu}_k
        \in
        \gB\bigl(\bu_i^\Pi(\bx,t),r_i^{\rm GN}\bigr) \cap \Box_i,
        \qquad
        k \geq 0.
        \label{eq:approx-iterates-stay-in-basin}
    \end{equation}
    Moreover, for every $K\ge0$,
    \begin{equation}
        \bigl\|
          \widetilde{\bu}_K-\bu_i^\Pi(\bx,t)
        \bigr\|_2
        \leq
        2^{-(K+1)}r_i^{\rm GN}
        +
        2\sqrt{d}\varepsilon_{\rm step}.
        \label{eq:anchored-GN-error}
    \end{equation}
\end{lemma}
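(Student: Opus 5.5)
The plan is an induction on $k$ that carries the error bound and the basin membership together, driven by the contraction of \cref{lem:GN-contraction}. Write $\bu^\Pi:=\bu_i^\Pi(\bx,t)$ and $e_k:=\|\widetilde\bu_k-\bu^\Pi\|_2$. We will prove the invariant
\[
    e_k\le 2^{-(k+1)}r_i^{\rm GN}+2\sqrt d\,\varepsilon_{\rm step}\bigl(1-2^{-k}\bigr),
\]
which implies \cref{eq:anchored-GN-error} immediately. It also implies basin membership: by \cref{eq:step-error-basin-condition}, $2\sqrt d\,\varepsilon_{\rm step}\le r_i^{\rm GN}/4$, so $e_k\le r_i^{\rm GN}/2+r_i^{\rm GN}/4<r_i^{\rm GN}$. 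Membership in $\Box_i$ needs no argument: $\widetilde\bu_0=\ba\in K_i^\circ\subset\Box_i$, and every later iterate is an output of the $\Box_i$-valued map $\widetilde\gT_i$. This gives \cref{eq:approx-iterates-stay-in-basin}.

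The base case $k=0$ is exactly \cref{eq:anchor-inside-GN-basin}. For the inductive step, assume the invariant at step $k$. Then $\widetilde\bu_k\in\gB(\bu^\Pi,r_i^{\rm GN})\cap\Box_i$, so \cref{lem:GN-contraction} applies at $\bu=\widetilde\bu_k$. Using the triangle inequality and \cref{eq:approx-step-error} together with $\|\cdot\|_2\le\sqrt d\,\|\cdot\|_\infty$,
\[
    e_{k+1}\le\|\gT_i(\widetilde\bu_k;\bx,t)-\bu^\Pi\|_2+\sqrt d\,\varepsilon_{\rm step}\le \tfrac12 e_k+\sqrt d\,\varepsilon_{\rm step}.
\]
Substituting the inductive bound gives $2^{-(k+2)}r_i^{\rm GN}+\sqrt d\,\varepsilon_{\rm step}(1-2^{-k})+\sqrt d\,\varepsilon_{\rm step}$. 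The identity $(1-2^{-k})+1=2(1-2^{-(k+1)})$ turns this into the invariant at step $k+1$.

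There is no substantive obstacle; the argument is bookkeeping. The one point needing care is to carry the $\varepsilon_{\rm step}$ contribution inside the induction hypothesis, since that is what keeps each iterate in the basin where the contraction applies. One should also check that the hypotheses of \cref{lem:GN-contraction} ($t\in\gI_{\rm sm}(\varepsilon)$, $(\bx,t)\in\gK_i(\varepsilon)$, \cref{eq:condition:delta}) hold for fixed $(\bx,t)$, independently of $k$; they do.
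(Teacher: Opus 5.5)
Your proof is correct and follows the paper's argument: both apply the contraction of \cref{lem:GN-contraction} together with the triangle inequality to get $e_{k+1}\le\frac12 e_k+\sqrt d\,\varepsilon_{\rm step}$. The only difference is presentational: the paper first runs a crude induction $e_k\le\frac34 r_i^{\rm GN}$ to secure basin membership and then unrolls the recursion, whereas you fold both into a single sharper invariant.
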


\begin{proof}
Write $\bu^\Pi:=\bu_i^\Pi(\bx,t), e_k:=\|\widetilde{\bu}_k-\bu^\Pi\|_2$.
By the anchor assumption \cref{eq:anchor-inside-GN-basin},
\begin{equation*}
    e_0 =
    \|\ba-\bu^\Pi\|_2
    \leq
    \frac{1}{2}r_i^{\rm GN}.
\end{equation*}

We first prove that the iterates stay inside the contraction basin.
Assume inductively that
\begin{equation*}
    e_k \leq \frac{3}{4}r_i^{\rm GN}.
\end{equation*}
Then $\widetilde{\bu}_k \in \gB(\bu^\Pi,r_i^{\rm GN}) \cap \Box_i$, so the exact Gauss--Newton contraction lemma (\cref{lem:GN-contraction}) applies:
\begin{equation*}
    \|\gT_i(\widetilde{\bu}_k;\bx,t)-\bu^\Pi\|_2
    \leq
    \frac{1}{2} e_k.
\end{equation*}
Using the approximate-step error and the norm comparison $\|\bv\|_2\le\sqrt{d}\,\|\bv\|_\infty$, we obtain
\begin{align*}
    e_{k+1}
    &=
    \|\widetilde{\gT}_i(\widetilde{\bu}_k;\bx,t)-\bu^\Pi\|_2
    \\
    &\leq
    \bigl\|
      \widetilde{\gT}_i(\widetilde{\bu}_k;\bx,t)
      - 
      \gT_i(\widetilde{\bu}_k;\bx,t)
    \bigr\|_2
    +
    \|\gT_i(\widetilde{\bu}_k;\bx,t)-\bu^\Pi\|_2
    \\
    &\leq
    \sqrt{d}\varepsilon_{\rm step}
    +
    \frac{1}{2} e_k
    \\
    &\leq
    \sqrt{d}\varepsilon_{\rm step}
    +
    \frac{3}{8}r_i^{\rm GN}.
\end{align*}
By \cref{eq:step-error-basin-condition},
\begin{equation*}
    \sqrt{d}\,\varepsilon_{\rm step}
    \le
    \frac18 r_i^{\rm GN}.
\end{equation*}
Therefore
\begin{equation*}
    e_{k+1}
    \le
    \frac18 r_i^{\rm GN}
    +
    \frac{3}{8} r_i^{\rm GN}
    =
    \frac{1}{2} r_i^{\rm GN}
    <
    \frac{3}{4} r_i^{\rm GN}.
\end{equation*}
Since $e_0\le r_i^{\rm GN}/2$, induction proves
\begin{equation*}
    e_k\le \frac{3}{4} r_i^{\rm GN}<r_i^{\rm GN}
    \qquad
    \text{for all }k\ge0.
\end{equation*}
Also, because the approximate step $\widetilde{\gT}_i$ is assumed to be $\Box_i$-valued, every $\widetilde{\bu}_k$ lies in $\Box_i$.
Hence \cref{eq:approx-iterates-stay-in-basin} holds.

Now that the exact contraction applies at every step, we have the recursion
\begin{equation}
    e_{k+1}
    \le
    \frac{1}{2} e_k
    +
    \sqrt{d}\,\varepsilon_{\rm step}.
    \label{eq:approx-GN-recursion}
\end{equation}
Iterating \cref{eq:approx-GN-recursion} gives
\begin{align*}
    e_K
    &\leq
    2^{-K}e_0
    +
    \sqrt{d}\,\varepsilon_{\rm step}
    \sum_{j=0}^{K-1}2^{-j}
    \\
    &\leq
    2^{-(K+1)}r_i^{\rm GN}
    +
    2\sqrt{d}\,\varepsilon_{\rm step}.
\end{align*}
\end{proof}

\begin{lemma}[Neural approximation of one Gauss--Newton step]
\label{lem:proj-one-step-NN}
Fix a localization parameter $0<\varepsilon\le1$.
For every $\varepsilon_{\gT} \in (0,1/2]$, there exists a ReLU network $\widetilde{\gT}_{i,\varepsilon_{\gT}} \in \nn(L,\bW,S,B)$ with input $(\bu,\bx,t)\in\R^d\times\R^D\times[\tdown,\tup]$ and output in $\Box_i$ such that
\begin{equation}
    \sup_{\substack{
        \bu\in\Box_i,
        t \in \gI_{\rm sm}(\varepsilon),
        (\bx,t) \in \gK_i(\varepsilon)
    }}
    \bigl\|
      \widetilde{\gT}_{i,\varepsilon_{\gT}}(\bu,\bx,t)
      -
      \gT_i(\bu;\bx,t)
    \bigr\|_\infty
    \le
    \varepsilon_{\gT}.
    \label{eq:one-step-error}
\end{equation}
Moreover, for a finite chart-dependent constant $C_{i,\gT}\ge1$,
\begin{align}
    L
    &\lesssim
    \log(e(d\vee\beta))
    \log^2(C_{i,\gT}D\varepsilon_{\gT}^{-1}),
    \notag\\
    \|\bW\|_\infty
    &\lesssim
    dD
    (d\vee\beta)
    (C_{i,\gT}D\varepsilon_{\gT}^{-1})^{\frac{d}{2(\beta\vee1)}}
    +
    D\log^3(C_{i,\gT}D\varepsilon_{\gT}^{-1}),
    \notag\\
    S
    &\lesssim
    dD(d+2(\beta\vee1))^{d+3}
    (C_{i,\gT}D\varepsilon_{\gT}^{-1})^{\frac{d}{2(\beta\vee1)}}
    \log(C_{i,\gT}D\varepsilon_{\gT}^{-1})
    +
    dD\log^4(C_{i,\gT}D\varepsilon_{\gT}^{-1}),
    \notag\\
    \log B
    &\lesssim
    \log^2(C_{i,\gT}D\varepsilon_{\gT}^{-1}).
    \label{eq:size-one-step}
\end{align}
\end{lemma}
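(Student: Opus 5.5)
We build the network by splitting the step into its intrinsic and ambient parts:
\[
\gT_i(\bu;\bx,t)=\Pi_i^{({\rm box})}\bigl(\bu+\bA_i(\bu)\bxi(\bx,t)-\bA_i(\bu)\bz_i(\bu)\bigr).
\]
The only nonlinear functions of $\bu$ are the $D\times d$ entries of $\bA_i$ and the $d$-vector $\bc_i(\bu):=\bA_i(\bu)\bz_i(\bu)$. Both are $C^\infty$ on the compact box $\Box_i$, with derivatives controlled by $\Gamma_{\gM,q}$ through \cref{lem:finite-order-geometry-from-smooth}.

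The plan is to approximate each of these $Dd+d$ scalar functions of $d$ variables by a standard Yarotsky/Schmidt-Hieber-type ReLU approximation of smoothness order $2(\beta\vee1)$ on $\Box_i$, rescaled to $[0,1]^d$. Each scalar network has accuracy $\varepsilon_{\rm A}$, width and sparsity of order $(d\vee\beta)\varepsilon_{\rm A}^{-d/(2(\beta\vee1))}$ up to logarithms, and depth of order $\log(\varepsilon_{\rm A}^{-1})$. The smoothness order $2(\beta\vee1)$ is admissible because $q_{\rm geom}+4\ge 2(\beta\vee1)+2$ by \cref{eq:def:qgeom-main}. Stacking these networks in parallel yields the factor $dD$ in the width and sparsity.

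Next we compute $\bxi=\bx/m_t$ by \cref{lem:approx:x-over-m} on the active set, where $\|\bx\|_\infty\le M_x^\varrho$ and $m_t\ge\underline m$. The matrix--vector product $\bA_i(\bu)\bxi$ consists of $dD$ scalar products. Each is realized by the standard bounded-range multiplication network with accuracy $\varepsilon_{\rm mult}$ and cost $\log^2$ or $\log^3(\cdot)$, followed by $D$-term summations. This accounts for the $D\log^3$ and $dD\log^4$ terms in \cref{eq:size-one-step}. Finally we subtract the network approximating $\bc_i$, add $\bu$ through identity channels, and apply the exact ReLU clipping $\Pi_i^{({\rm box})}$. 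The clipping is $1$-Lipschitz and forces the output to lie in $\Box_i$.

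For the error budget, $\Pi_i^{({\rm box})}$ is $1$-Lipschitz, so the total $\ell_\infty$ error is at most the error of $\bA_i\bxi-\bc_i$. This is bounded by
\[
D\,\varepsilon_{\rm A}\,\|\bxi\|_\infty \;+\; \|\bA_i\|_\infty D\,\varepsilon_{\xi} \;+\; dD\,\varepsilon_{\rm mult} \;+\; \varepsilon_{\rm A},
\]
where $\varepsilon_\xi$ is the accuracy of the $\bxi$-network.

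The main obstacle is that $\|\bxi\|_\infty$ is only bounded by $M_x^\varrho/\underline m\sim\sqrt{D\vee\log n}$, which would introduce an $n$-dependence not present in the statement. We avoid this by working on the active set: $(\bx,t)\in\gK_i(\varepsilon)$ and \cref{eq:condition:delta} give $\dist(\bxi,\gS_i)<\eta_\Pi/2\le1$, so $\bxi\in\mathcal X_i$ and $\|\bxi\|_\infty\le2$. To exploit this, we clip the computed $\bxi$ coordinatewise to $[-2,2]$ with an exact ReLU before the products. Clipping is harmless on the active set and keeps the multiplication ranges $\Ord(\Gamma_{\gM,q})$.

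It then suffices to take $\varepsilon_{\rm A},\varepsilon_{\rm mult},\varepsilon_\xi\asymp\varepsilon_{\gT}/(C_{i,\gT}D)$, with $C_{i,\gT}$ absorbing $\Gamma_{\gM,q}$, $\underline m^{-1}$ and the schedule constants. Substituting this accuracy into the Hölder-approximation size gives exactly $(C_{i,\gT}D\varepsilon_{\gT}^{-1})^{d/(2(\beta\vee1))}$. The depth and $\log B$ follow by taking the maximum over the parallel blocks plus the sequential multiplication layers, both polylogarithmic in $C_{i,\gT}D\varepsilon_{\gT}^{-1}$.
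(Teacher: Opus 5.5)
Your proposal is correct up to one bookkeeping point, and it is essentially the paper's construction. Both build H\"older networks of order $2(\beta\vee1)$ for the chart quantities on the rescaled box $\Box_i$. Both use the $\bx/m_t$ network of \cref{lem:approx:x-over-m} only on a $D$-free range supplied by the active set. Both use $dD$ product networks with $D$-term sums at accuracy $\asymp\varepsilon_{\gT}/(C_{i,\gT}D)$, followed by the exact $1$-Lipschitz clipping $\Pi_i^{({\rm box})}$.

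The differences are minor.
\begin{itemize}
\item The paper approximates the components of $\bz_i$ and multiplies $\bA_i$ against $\bv=\bxi-\bz_i$, instead of precomputing $\bA_i\bz_i$.
\item The paper does not clip $\bxi$. It relies on the perturbed-input tolerance of \cref{lem:approx:mon} together with $\|\bx\|_\infty\le M_{x,i}$ on $\gK_i(\varepsilon)$. That bound follows from $\sigma_t\sqrt{\log\varepsilon^{-1}}\lesssim\varepsilon^{1/\beta}\sqrt{\log\varepsilon^{-1}}=\Ord(1)$ on $\gI_{\rm sm}(\varepsilon)$, not from \cref{eq:condition:delta}.
\item \Cref{eq:condition:delta} is not a hypothesis of this lemma, so your clip should use the fixed radius $M_{x,i}/\underline m$ rather than $2$. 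This is harmless, since \cref{lem:proj-local-coordinate-NN} does assume \cref{eq:condition:delta}.
\end{itemize}

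The bookkeeping point concerns the H\"older radius of your precomputed target. The scalar entries of $\bz_i$ and $\bA_i$ have H\"older radii controlled by the geometry constants. The target $\bA_i\bz_i=\bG_i^{-1}\bJ_i^\top\bz_i$ does not: it contains the inner products $\langle\partial_j\bz_i,\bz_i\rangle$ with $\|\bz_i\|_2$ up to $\sqrt D$. So its radius can be of order $\sqrt D$ even when the geometry is $D$-free.

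With your uniform accuracy $\varepsilon_{\gT}/(C_{i,\gT}D)$, each of its $d$ components then costs $(D^{3/2}\varepsilon_{\gT}^{-1})^{d/(2(\beta\vee1))}$. For large $D$ this exceeds the stated width bound once $d>4(\beta\vee1)$.

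There are two easy fixes.
\begin{itemize}
\item Your own error budget shows $\bA_i\bz_i$ enters the error only once. So approximate it to accuracy $\asymp\varepsilon_{\gT}$ instead, which keeps its cost within $(C_{i,\gT}D\varepsilon_{\gT}^{-1})^{d/(2(\beta\vee1))}$.
\item Alternatively, use the paper's $\bA_i(\bxi-\bz_i)$ form, where every H\"older target and every product input has a $D$-free range.
\end{itemize}
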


\begin{proof}
Recall that
\begin{equation}
    \gT_i(\bu;\bx,t)
    =
    \Pi_i^{({\rm box})}
    \Bigl(
      \bu+
      \bA_i(\bu)
      \bigl(
        \bxi(\bx,t)-\bz_i(\bu)
      \bigr)
    \Bigr),
    \qquad
    \bA_i(\bu)=\bG_i(\bu)^{-1}\bJ_i(\bu)^\top .
    \label{eq:GN-step-proof-form}
\end{equation}
The clipping map $\Pi_i^{({\rm box})}$ is exactly representable by a shallow ReLU network and is $1$-Lipschitz.

\textbf{Step 1: Uniform bounds.}
Since $\Box_i$ is compact and $\bz_i,\bA_i$ are $C^{\infty}$ on $\Box_i$, there exists a finite constant $B_{i,\gT} \ge 1$ such that every scalar component of $\bz_i$ and $\bA_i$ belongs to $\gH^{2(\beta\vee1)}(\Box_i,B_{i,\gT})$, and
\begin{equation}
    \|\bz_i(\bu)\|_\infty
    +
    \|\bA_i(\bu)\|_\infty
    \leq
    B_{i,\gT},
    \qquad
    \bu \in \Box_i.
    \label{eq:GN-step-uniform-chart-bounds}
\end{equation}
Here $\|\bA_i(\bu)\|_\infty$ denotes the maximum absolute entry of the $d\times D$ matrix $\bA_i(\bu)$.

Moreover, on the active small-noise region there exists a finite constant $M_{x,i}\ge1$ such that
\begin{equation}
    \|\bx\|_\infty \leq M_{x,i},
    \qquad
    \|\bxi(\bx,t)\|_\infty
    =
    \Bigl\|\frac{\bx}{m_t}\Bigr\|_\infty
    \leq
    M_{x,i}\underline{m}^{-1}.
    \label{eq:GN-step-x-bounds}
\end{equation}
Indeed, if $(\bx,t) \in \gK_i(\varepsilon)$, then $\bx$ is within $\Ord(\sigma_t\sqrt{\log\varepsilon^{-1}})$ of $m_t\gS_i$; on the small-noise branch this is uniformly bounded, and $\gS_i$ is compact.

Set
\begin{equation}
    C_{i,\times}
    :=
    1\vee B_{i,\gT}\vee M_{x,i}\underline{m}^{-1}.
    \label{eq:def:C-i-times}
\end{equation}

\textbf{Step 2: Approximate the ingredients.}
Let $\varepsilon_{\rm aux}\in(0,1/2]$ be chosen later.

First, by \cref{lem:approx:x-over-m} with $M_x=M_{x,i}$, there exists a vector-valued ReLU network $\widetilde{\bxi}$ such that
\begin{equation}
    \bigl\|
      \widetilde{\bxi}(\bx,t)-\bxi(\bx,t)
    \bigr\|_\infty
    \leq
    \varepsilon_{\rm aux}
    \label{eq:GN-step-xi-approx}
\end{equation}
uniformly over the active small-noise region.
Its size is bounded by
\begin{align}
    L_\xi,\log B_\xi
    &\lesssim
    \log^2\varepsilon_{\rm aux}^{-1}
    +
    \log^2(C_{i,\times}),
    \notag\\
    \|\bW_\xi\|_\infty
    &\lesssim
    D\bigl(
      \log^3\varepsilon_{\rm aux}^{-1}
      +
      \log^3(C_{i,\times})
    \bigr),
    \notag\\
    S_\xi
    &\lesssim
    D\bigl(
      \log^4\varepsilon_{\rm aux}^{-1}
      +
      \log^4(C_{i,\times})
    \bigr).
    \label{eq:size-xi-approx-GN}
\end{align}

Second, apply \cref{thm:approx:holder} to the scalar components of $\bz_i$ and $\bA_i$ on $\Box_i$ after a fixed affine rescaling of $\Box_i$ to $[0,1]^d$.
Thus there exist ReLU networks $\widetilde{\bz}_i$ and $\widetilde{\bA}_i$ such that
\begin{equation}
    \|\widetilde{\bz}_i-\bz_i\|_{L^\infty(\Box_i)}
    \le
    \varepsilon_{\rm aux},
    \qquad
    \|\widetilde{\bA}_i-\bA_i\|_{L^\infty(\Box_i)}
    \le
    \varepsilon_{\rm aux}.
    \label{eq:GN-step-z-A-approx}
\end{equation}
Stacking the $D+dD$ scalar approximants gives size
\begin{align}
    L_H
    &\lesssim
    \log(e(d\vee\beta))
    \log(C_{i,\times}\varepsilon_{\rm aux}^{-1}),
    \notag\\
    \|\bW_H\|_\infty
    &\lesssim
    dD
    (d\vee\beta)
    (C_{i,\times}\varepsilon_{\rm aux}^{-1})^{\frac{d}{2(\beta\vee1)}},
    \notag\\
    S_H
    &\lesssim
    dD(d+2(\beta\vee1))^{d+3}
    (C_{i,\times}\varepsilon_{\rm aux}^{-1})^{\frac{d}{2(\beta\vee1)}}
    \log(C_{i,\times}\varepsilon_{\rm aux}^{-1}),
    \notag\\
    B_H
    &\le 1.
    \label{eq:size-holder-GN}
\end{align}

Define
\begin{equation*}
    \bv(\bu,\bx,t)
    :=
    \bxi(\bx,t)-\bz_i(\bu),
    \qquad
    \widetilde{\bv}(\bu,\bx,t)
    :=
    \widetilde{\bxi}(\bx,t)-\widetilde{\bz}_i(\bu).
\end{equation*}
By \cref{eq:GN-step-xi-approx,eq:GN-step-z-A-approx},
\begin{equation}
    \|\widetilde{\bv}-\bv\|_\infty
    \le
    2\varepsilon_{\rm aux}.
    \label{eq:GN-step-v-error}
\end{equation}
Also, by the bounds above,
\begin{equation*}
    |A_{i,j\ell}(\bu)|\le C_{i,\times},
    \qquad
    |v_\ell(\bu,\bx,t)|\le 2C_{i,\times}.
\end{equation*}
Increasing $C_{i,\times}$ by an absolute factor if necessary, we may assume $C_{i,\times}\ge1$ and both true inputs to every multiplication below lie in $[-C_{i,\times},C_{i,\times}]$.

\textbf{Step 3: Approximate the matrix-vector product.}
For each $j=1,\dots,d$ and $\ell=1,\dots,D$, apply \cref{lem:approx:mon} with two inputs, bound $C_{i,\times}$, intrinsic multiplication error $\varepsilon_{\rm aux}$, and input perturbation tolerance $2\varepsilon_{\rm aux}$.
This gives a multiplication network satisfying
\begin{align}
    &
    \bigl|
      \phi_{{\rm mult},j,\ell}
      \bigl(
        \widetilde{A}_{i,j,\ell}(\bu),
        \widetilde{v}_\ell(\bu,\bx,t)
      \bigr)
      -
      A_{i,j,\ell}(\bu)v_\ell(\bu,\bx,t)
    \bigr|
    \notag\\
    &\qquad\leq
    \varepsilon_{\rm aux}+4C_{i,\times}\varepsilon_{\rm aux}
    \leq
    5C_{i,\times}\varepsilon_{\rm aux}.
    \label{eq:GN-step-product-error}
\end{align}
The size of each multiplication network is
\begin{align}
    L_{\rm mult}
    &\lesssim
    \log(\varepsilon_{\rm aux}^{-1})+\log(C_{i,\times}),
    \notag\\
    \|\bW_{\rm mult}\|_\infty
    &\leq 96,
    \notag\\
    S_{\rm mult}
    &\lesssim
    \log(\varepsilon_{\rm aux}^{-1})+\log(C_{i,\times}),
    \notag\\
    \log B_{\rm mult}
    &\lesssim
    \log(C_{i,\times}).
    \label{eq:size-GN-mult}
\end{align}

Define the approximate pre-clipped update by
\begin{equation}
    \widetilde{q}_j(\bu,\bx,t)
    :=
    u_j+
    \sum_{\ell=1}^D
      \phi_{{\rm mult},j\ell}
      \bigl(
        \widetilde{A}_{i,j\ell}(\bu),
        \widetilde{v}_\ell(\bu,\bx,t)
      \bigr),
    \qquad
    j=1,\dots,d.
    \label{eq:def:q-tilde-GN}
\end{equation}
The exact pre-clipped update is
\begin{equation*}
    q_j(\bu,\bx,t)
    :=
    u_j+
    \sum_{\ell=1}^D
      A_{i,j\ell}(\bu)v_\ell(\bu,\bx,t).
\end{equation*}
Summing \cref{eq:GN-step-product-error} over $\ell$ gives
\begin{equation}
    \|\widetilde{\bq}(\bu,\bx,t)-\bq(\bu,\bx,t)\|_\infty
    \le
    5DC_{i,\times}\varepsilon_{\rm aux}.
    \label{eq:GN-step-preclip-error}
\end{equation}

Now define the final network by exact clipping:
\begin{equation}
    \widetilde{\gT}_{i,\varepsilon_{\gT}}(\bu,\bx,t)
    :=
    \Pi_i^{({\rm box})}
    \bigl(
      \widetilde{\bq}(\bu,\bx,t)
    \bigr).
    \label{eq:def:T-tilde-GN}
\end{equation}
Because $\Pi_i^{({\rm box})}$ is $1$-Lipschitz,
\begin{align*}
    \Bigl\|
      \widetilde{\gT}_{i,\varepsilon_{\gT}}(\bu,\bx,t)
      -
      \gT_i(\bu;\bx,t)
    \Bigr\|_\infty
    &\le
    \|\widetilde{\bq}(\bu,\bx,t)-\bq(\bu,\bx,t)\|_\infty
    \\
    &\le
    5DC_{i,\times}\varepsilon_{\rm aux}.
\end{align*}
Choose
\begin{equation}
    \varepsilon_{\rm aux}
    :=
    \frac{\varepsilon_{\gT}}
    {5DC_{i,\times}}.
    \label{eq:def:eps-GN-step}
\end{equation}
Then \cref{eq:one-step-error} follows.

\textbf{Step 4: Network size.}
The network is obtained by composing the following pieces: the $\bxi=\bx/m_t$ network from \cref{lem:approx:x-over-m}, the $d$-dimensional H\"older networks for $\bz_i$ and $\bA_i$ from \cref{thm:approx:holder}, the $dD$ multiplication networks from \cref{lem:approx:mon}, exact affine summations, and the exact clipping map.

Substituting \cref{eq:def:eps-GN-step} into \cref{eq:size-xi-approx-GN,eq:size-holder-GN,eq:size-GN-mult} and absorbing the fixed chart constants into a single constant $C_{i,\gT}\ge1$ gives \cref{eq:size-one-step}.
This completes the proof.
\end{proof}

\begin{lemma}[Anchored neural approximation of the local projection coordinate]
\label{lem:proj-local-coordinate-NN}
Fix an anchor $\ba \in K_i^\circ$.
Fix a localization parameter $0<\varepsilon_{\rm loc}\le1$ such that \cref{eq:condition:delta} holds for every $t\in\gI_{\rm sm}(\varepsilon_{\rm loc})$.
For every accuracy $\varepsilon_u\in(0,1]$, there exists a ReLU network $\widetilde{\bu}_{i,\ba,\varepsilon_u}^{\Pi,\varepsilon_{\rm loc}} \in \nn(L,\bW,S,B)$ with input $(\bx, t) \in \R^D \times [\tdown, \tup]$ and output in $\Box_i$ such that, for every $t \in \gI_{\rm sm}(\varepsilon_{\rm loc})$ and every $(\bx,t)\in\gK_i(\varepsilon_{\rm loc})$ satisfying
\begin{equation*}
    \|\ba-\bu_i^\Pi(\bx,t)\|_2
    \leq
    \frac{1}{2} r_i^{\rm GN},
\end{equation*}
\begin{equation}
    \bigl\|
      \widetilde{\bu}_{i,\ba,\varepsilon_u}^{\Pi,\varepsilon_{\rm loc}}(\bx,t)
      -
      \bu_i^\Pi(\bx,t)
    \bigr\|_\infty
    \leq
    \varepsilon_u.
    \label{eq:uPi-final-error}
\end{equation}
Moreover, for a finite chart-dependent constant $C_{i,\Pi} \geq 1$,
\begin{align}
    L &\lesssim \log(e(d\vee\beta))\log^3(C_{i,\Pi}D\varepsilon_u^{-1}),
    \notag\\
    \|\bW\|_\infty &\lesssim
    dD(d\vee\beta)(C_{i,\Pi}D\varepsilon_u^{-1})^{\frac{d}{2(\beta\vee1)}}
    +
    D\log^3(C_{i,\Pi}D\varepsilon_u^{-1}),
    \notag\\
    S &\lesssim dD(d+2(\beta\vee1))^{d+3}(C_{i,\Pi}D\varepsilon_u^{-1})^{\frac{d}{2(\beta\vee1)}}
    \log^2(C_{i,\Pi}D\varepsilon_u^{-1})
    +
    dD\log^5(C_{i,\Pi}D\varepsilon_u^{-1}),
    \notag\\
    \log B &\lesssim \log^2(C_{i,\Pi}D\varepsilon_u^{-1}).
    \label{eq:size-local-coordinate-NN}
\end{align}
\end{lemma}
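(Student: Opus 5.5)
The plan is to unroll $K$ approximate Gauss--Newton steps starting from the fixed anchor, using the one-step network of \cref{lem:proj-one-step-NN} as a building block and the error propagation of \cref{lem:active-anchored-GN-error} to control accuracy. Concretely, set $\widetilde\bu_0:=\ba$, a constant output that can be realized by a bias-only layer, and define recursively $\widetilde\bu_{k+1}:=\widetilde\gT_{i,\varepsilon_{\gT}}(\widetilde\bu_k,\bx,t)$. Each layer block needs access to $(\bx,t)$ as well as the current iterate. We therefore carry $(\bx,t)$ forward through the composition by identity channels, using $z=\ReLU(z)-\ReLU(-z)$; this costs width $2(D+1)$ and depth one per block. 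The output is $\widetilde\bu_{i,\ba,\varepsilon_u}^{\Pi,\varepsilon_{\rm loc}}:=\widetilde\bu_K$, which is $\Box_i$-valued because every step ends with the exact clipping $\Pi_i^{(\rm box)}$.

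\textbf{Accuracy.} Apply \cref{lem:proj-one-step-NN} with localization parameter $\varepsilon=\varepsilon_{\rm loc}$. This gives a $\Box_i$-valued step satisfying \cref{eq:approx-step-error} with $\varepsilon_{\rm step}=\varepsilon_{\gT}$. Choose
\[
    \varepsilon_{\gT}:=\min\Bigl\{\frac{\varepsilon_u}{4\sqrt d},\ \frac{r_i^{\rm GN}}{8\sqrt d},\ \frac12\Bigr\},
\]
so that \cref{eq:step-error-basin-condition} holds. The anchor hypothesis is exactly \cref{eq:anchor-inside-GN-basin}, and the standing assumptions $t\in\gI_{\rm sm}(\varepsilon_{\rm loc})$, $(\bx,t)\in\gK_i(\varepsilon_{\rm loc})$ and \cref{eq:condition:delta} are given. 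Hence \cref{eq:anchored-GN-error} yields
\[
    \|\widetilde\bu_K-\bu_i^\Pi\|_\infty\le\|\widetilde\bu_K-\bu_i^\Pi\|_2\le 2^{-(K+1)}r_i^{\rm GN}+2\sqrt d\,\varepsilon_{\gT}.
\]
The second term is at most $\varepsilon_u/2$. Taking $K:=\lceil\log_2(r_i^{\rm GN}/\varepsilon_u)\rceil\lesssim\log(C_{i,\Pi}\varepsilon_u^{-1})$ makes the first term at most $\varepsilon_u/2$, which gives \cref{eq:uPi-final-error}.

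\textbf{Size.} Composing $K$ copies of the one-step network, each padded by the identity channels, multiplies depth and sparsity by $K$ and leaves width essentially unchanged, since the extra $\Ord(D)$ channels are absorbed into the displayed $D\log^3$ term. With $\varepsilon_{\gT}^{-1}\lesssim C\varepsilon_u^{-1}$, absorbing $r_i^{\rm GN}$ and $\sqrt d$ into $C_{i,\Pi}$, \cref{eq:size-one-step} gives the following:
\begin{itemize}
\item depth $\lesssim K\log(e(d\vee\beta))\log^2(\cdot)$, which is within the claimed $\log^3$;
\item sparsity picks up one extra logarithm, giving the claimed $\log^2$ and $\log^5$ factors;
\item width and $\log B$ are inherited unchanged, since composition does not increase weights.
\end{itemize}

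The step that needs the most care is validity of the recursion. Later iterates feed the one-step network inputs $\widetilde\bu_k$ that are only approximate, so one must check that the error bound \cref{eq:one-step-error} applies at them. This holds because \cref{eq:one-step-error} is uniform over all $\bu\in\Box_i$ and every iterate is $\Box_i$-valued, which is precisely why the clipping is exact. With that checked, the contraction-basin induction in \cref{lem:active-anchored-GN-error} applies verbatim. The remaining routine check is that the identity pass-through of $(\bx,t)$ does not enlarge the weight bound $B$.
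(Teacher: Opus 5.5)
Your proposal is correct and follows the paper's proof essentially step for step. It uses the same step tolerance $\min\{\varepsilon_u/(4\sqrt d),\,r_i^{\rm GN}/(8\sqrt d)\}$, the same unrolling of $K\asymp\log_2(r_i^{\rm GN}/\varepsilon_u)$ box-clipped one-step networks from the anchor, the same appeal to \cref{lem:active-anchored-GN-error} (via uniformity of \cref{eq:one-step-error} over $\Box_i$), and the same accounting $L\le KL_{\rm step}+\Ord(1)$, $S\le KS_{\rm step}+\Ord(1)$ with width and weights inherited.

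Two small points to tidy. Take $K:=\lceil\log_2(r_i^{\rm GN}/\varepsilon_u)\rceil_+$, as the paper does, because $r_i^{\rm GN}<\varepsilon_u$ can occur. Also, your explicit $(\bx,t)$ pass-through channels, which the paper leaves implicit, cost $\Ord(D)$ sparsity in every layer of each block rather than one layer per block; this total is still absorbed into the displayed bounds.
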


\begin{proof}
The construction uses a fixed anchor $\ba$ as the initial condition and then unrolls approximate Gauss--Newton steps.
Set
\begin{equation}
    \varepsilon_{\rm step}
    :=
    \min\Bigl\{
      \frac{r_i^{\rm GN}}{8\sqrt{d}},
      \frac{\varepsilon_u}{4\sqrt{d}}
    \Bigr\}.
    \label{eq:def:eps-step-local-proj}
\end{equation}
By \cref{lem:proj-one-step-NN}, applied with localization parameter $\varepsilon_{\rm loc}$, there exists a ReLU network $\widetilde{\gT}_{i,\varepsilon_{\rm step}}$ with output in $\Box_i$ such that
\begin{equation}
    \sup_{\substack{\bu\in\Box_i,\;
        t \in \gI_{\rm sm}(\varepsilon_{\rm loc}),\;
        (\bx,t) \in \gK_i(\varepsilon_{\rm loc})}}
    \Bigl\|
      \widetilde{\gT}_{i,\varepsilon_{\rm step}}(\bu,\bx,t)
      -
      \gT_i(\bu;\bx,t)
    \Bigr\|_\infty
    \leq
    \varepsilon_{\rm step}.
    \label{eq:step-network-error-local-proj}
\end{equation}
The choice \cref{eq:def:eps-step-local-proj} guarantees in particular
\begin{equation*}
    \varepsilon_{\rm step}
    \leq
    \frac{r_i^{\rm GN}}{8\sqrt{d}},
\end{equation*}
which is exactly the basin condition~\cref{eq:step-error-basin-condition} required in \cref{lem:active-anchored-GN-error}.

Let
\begin{equation}
    K_{\varepsilon_u}
    :=
    \Bigl\lceil
      \log_2\Bigl(\frac{r_i^{\rm GN}}{\varepsilon_u}\Bigr)
    \Bigr\rceil_+,
    \qquad
    \lceil a\rceil_+:=\max\{\lceil a\rceil,0\}.
    \label{eq:def:K-eps-local-proj}
\end{equation}
Then
\begin{equation}
    2^{-(K_{\varepsilon_u}+1)}r_i^{\rm GN}
    \leq
    \frac{\varepsilon_u}{2}.
    \label{eq:contracted-anchor-error}
\end{equation}
Indeed, if $r_i^{\rm GN}\le\varepsilon_u$, then $K_{\varepsilon_u}=0$ and $r_i^{\rm GN}/2\le\varepsilon_u/2$.
If $r_i^{\rm GN}>\varepsilon_u$, then $K_{\varepsilon_u}\ge \log_2(r_i^{\rm GN}/\varepsilon_u)$, so $2^{-K_{\varepsilon_u}}r_i^{\rm GN}\le\varepsilon_u$.

\textbf{Step 1: Define the unrolled network.}
Define recursively
\begin{equation}
    \widetilde{\bu}_0(\bx,t)
    :=
    \ba,
    \qquad
    \widetilde{\bu}_{k+1}(\bx,t)
    :=
      \widetilde{\gT}_{i,\varepsilon_{\rm step}}
    \bigl(
      \widetilde{\bu}_k(\bx,t),\bx,t
    \bigr),
    \qquad
    k=0,\dots,K_{\varepsilon_u}-1.
    \label{eq:def:unrolled-local-proj}
\end{equation}
Finally set
\begin{equation}
    \widetilde{\bu}_{i,\ba,\varepsilon_u}^{\Pi,\varepsilon_{\rm loc}}(\bx,t)
    :=
    \widetilde{\bu}_{K_{\varepsilon_u}}(\bx,t).
    \label{eq:def:uPi-network-local-proj}
\end{equation}
Because each one-step network is $\Box_i$-valued, every iterate $\widetilde{\bu}_k(\bx,t)$ also lies in $\Box_i$.

\textbf{Step 2: The approximation error.}
Assume now that $t \in \gI_{\rm sm}(\varepsilon_{\rm loc})$, $(\bx,t) \in \gK_i(\varepsilon_{\rm loc})$, and
\begin{equation*}
    \|\ba-\bu_i^\Pi(\bx,t)\|_2
    \leq
    \frac{1}{2} r_i^{\rm GN}.
\end{equation*}
Then the hypotheses of \cref{lem:active-anchored-GN-error} are satisfied with $K=K_{\varepsilon_u}$.
Hence
\begin{align}
    \Bigl\|
      \widetilde{\bu}_{K_{\varepsilon_u}}(\bx,t)
      -
      \bu_i^\Pi(\bx,t)
    \Bigr\|_2
    &\leq
    2^{-(K_{\varepsilon_u}+1)}r_i^{\rm GN}
    +
    2\sqrt{d}\,\varepsilon_{\rm step}
    \notag\\
    &\leq
    \frac{\varepsilon_u}{2}
    +
    2\sqrt{d}\cdot \frac{\varepsilon_u}{4\sqrt{d}}
    =
    \varepsilon_u.
    \label{eq:local-proj-L2-error}
\end{align}
Since $\|\bv\|_\infty\le\|\bv\|_2$, \cref{eq:local-proj-L2-error} implies
\begin{equation*}
    \Bigl\|
      \widetilde{\bu}_{i,\ba,\varepsilon_u}^{\Pi,\varepsilon_{\rm loc}}(\bx,t)
      -
      \bu_i^\Pi(\bx,t)
    \Bigr\|_\infty
    \leq
    \varepsilon_u,
\end{equation*}
which proves \cref{eq:uPi-final-error}.

\textbf{Step 3: Network size.}
By \cref{lem:proj-one-step-NN}, the one-step network with accuracy $\varepsilon_{\rm step}$ satisfies, for a chart-dependent constant $C_{i,\gT}\ge1$,
\begin{align*}
    L_{\rm step}
    &\lesssim
    \log(e(d\vee\beta))
    \log^2(C_{i,\gT}D\varepsilon_{\rm step}^{-1}),
    \\
    \|\bW_{\rm step}\|_\infty
    &\lesssim
    dD
    (d\vee\beta)
    (C_{i,\gT}D\varepsilon_{\rm step}^{-1})^{\frac{d}{2(\beta\vee1)}}
    +
    D\log^3(C_{i,\gT}D\varepsilon_{\rm step}^{-1}),
    \\
    S_{\rm step}
    &\lesssim
    dD(d+2(\beta\vee1))^{d+3}
    (C_{i,\gT}D\varepsilon_{\rm step}^{-1})^{\frac{d}{2(\beta\vee1)}}
    \log(C_{i,\gT}D\varepsilon_{\rm step}^{-1})
    +
    dD\log^4(C_{i,\gT}D\varepsilon_{\rm step}^{-1}),
    \\
    \log B_{\rm step}
    &\lesssim
    \log^2(C_{i,\gT}D\varepsilon_{\rm step}^{-1}).
\end{align*}
From \cref{eq:def:eps-step-local-proj}, there exists a chart-dependent constant $C_{i,\Pi}\ge1$ such that
\begin{equation}
    D\varepsilon_{\rm step}^{-1}
    \le
    C_{i,\Pi}D\varepsilon_u^{-1}.
    \label{eq:eps-step-inverse-bound}
\end{equation}
Also, by \cref{eq:def:K-eps-local-proj},
\begin{equation}
    K_{\varepsilon_u}
    \le
    C\log(C_{i,\Pi}D\varepsilon_u^{-1}).
    \label{eq:K-size-bound}
\end{equation}

The unrolled network is the composition of $K_{\varepsilon_u}$ copies of the one-step network, with the constant initialization $\ba$ inserted by an affine layer.
Therefore
\begin{equation*}
    L \leq K_{\varepsilon_u} L_{\rm step}+\Ord(1),
    \qquad
    S \leq K_{\varepsilon_u} S_{\rm step}+\Ord(1),
\end{equation*}
while the width and weight bound are controlled by the maximum of the corresponding quantities of the one-step networks.
Combining these relations with \cref{eq:eps-step-inverse-bound,eq:K-size-bound} gives exactly \cref{eq:size-local-coordinate-NN}.
This completes the proof.
\end{proof}

\begin{equation}
    \gK_{\rm act}(\varepsilon)
    :=
    \bigcup_{i=1}^{C_{\gM}}\gK_i(\varepsilon).
    \label{eq:def:K-active-union}
\end{equation}
For a fixed auxiliary radius multiplier $c_a>0$, also write
\begin{equation}
    \gK_{\rm act}^{(c_a)}(\varepsilon)
    :=
    \bigcup_{i=1}^{C_{\gM}}\gK_i^{(c_a)}(\varepsilon).
    \label{eq:def:K-active-union-ca}
\end{equation}
\begin{lemma}[Active chart centers lie within the reach tube]
\label{lem:active-chart-inside-reach-tube}
    Let $\delta_\Pi := \eta_\Pi/2$.
    Assume $t \in \gI_{\rm sm}(\varepsilon)$ and \cref{eq:condition:delta}.
    If $(\bx,t) \in \gK_i(\varepsilon)$ for some $i \in \{1, \dots, C_{\gM}\}$, then
    \begin{equation}
        \dist\bigl(\bxi(\bx,t),\gM\bigr)
        \leq
        \delta_\Pi.
        \label{eq:active-xi-in-reach-tube}
    \end{equation}
    Moreover, $\delta_\Pi \leq \kappa/2$.
\end{lemma}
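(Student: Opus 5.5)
The plan is to mimic the first part of the proof of \cref{lem:buffered-active-chart}, since the statement is a direct consequence of the definition of the active set together with the active-radius condition \cref{eq:condition:delta}.

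\emph{Step 1: a witness point on the chart support.} Fix $(\bx,t)\in\gK_i(\varepsilon)$ with $t\in\gI_{\rm sm}(\varepsilon)$, and write $\bxi=\bxi(\bx,t)=\bx/m_t$. By the definition \cref{eq:def:K-i} of $\gK_i(\varepsilon)=\gK_i^{(c_\star)}(\varepsilon)$, there is a point $\by\in\gS_i$ with $\|\bx-m_t\by\|_2\le c_\star\sigma_t\sqrt{\log\varepsilon^{-1}}$. Dividing by $m_t>0$ gives $\|\bxi-\by\|_2\le\delta_{\rm act}(\varepsilon,t)$, using the definition \cref{eq:def:delta-act-ca,eq:def:delta-act} of $\delta_{\rm act}$.

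\emph{Step 2: the distance bound.} Since $\gS_i\subset\gM$, we get $\dist(\bxi,\gM)\le\|\bxi-\by\|_2\le\delta_{\rm act}(\varepsilon,t)$. Condition \cref{eq:condition:delta} bounds the last quantity by $\eta_\Pi/2=\delta_\Pi$, which is \cref{eq:active-xi-in-reach-tube}; the inequality is in fact strict.

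\emph{Step 3: the reach comparison.} By \cref{eq:def:projection-margin}, $\eta_\Pi$ is a minimum over finitely many positive quantities, one of which is $\kappa$. Hence $\eta_\Pi\le\kappa$, and therefore $\delta_\Pi=\eta_\Pi/2\le\kappa/2$.

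No step is a genuine obstacle. The only point needing care is the bookkeeping in Step 1: the rescaling by $m_t$ must match the definition of $\delta_{\rm act}$, which carries the factor $\sigma_t/m_t$, and both displays must use the same multiplier $c_\star$. Both are immediate from the displayed definitions.
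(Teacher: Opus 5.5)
Your proposal is correct and matches the paper's proof essentially line by line. You take a witness point in $\gS_i$ and rescale by $m_t$, which gives $\dist(\bxi,\gM)\le\delta_{\rm act}(\varepsilon,t)<\eta_\Pi/2$, and then read $\eta_\Pi\le\kappa$ off the definition of the margin. The only difference is that you write out the division by $m_t$ explicitly, whereas the paper states the rescaled witness bound directly.
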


\begin{proof}
Since $(\bx,t) \in \gK_i(\varepsilon)$, there exists $\by \in \gS_i \subset \gM$ such that
\begin{equation*}
    \|\bxi(\bx,t)-\by\|_2
    \leq
    \delta_{\rm act}(\varepsilon,t).
\end{equation*}
Therefore
\begin{equation*}
    \dist(\bxi(\bx,t),\gM)
    \leq
    \|\bxi(\bx,t)-\by\|_2
    \leq
    \delta_{\rm act}(\varepsilon,t).
\end{equation*}
By \cref{eq:condition:delta},
\begin{equation*}
    \delta_{\rm act}(\varepsilon,t)
    <
    \frac{1}{2}\eta_\Pi
    =
    \delta_\Pi.
\end{equation*}
Hence \cref{eq:active-xi-in-reach-tube} holds.
Since $\eta_\Pi\le\kappa$ by \cref{eq:def:projection-margin}, we also have $\delta_\Pi\le\kappa/2$.
\end{proof}

\begin{lemma}[Projection objective gap inside the reach tube]
\label{lem:reach-tube-objective-gap}
    Assume $0 < \delta \leq \kappa/2$.
    Then, for every $\bxi\in\R^D$ satisfying $\dist(\bxi,\gM)\le\delta$ and every $\by \in \gM$,
    \begin{equation}
        \frac{1}{2}
        \|\by-\Pi_{\gM}(\bxi)\|_2^2
        \leq
        \|\bxi-\by\|_2^2
        -
        \|\bxi-\Pi_{\gM}(\bxi)\|_2^2
        \leq
        \Bigl(1+\frac{\delta}{\kappa}\Bigr)
        \|\by-\Pi_{\gM}(\bxi)\|_2^2. 
        \label{eq:reach-tube-bounds}
    \end{equation}
\end{lemma}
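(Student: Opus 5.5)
Write $\bpi:=\Pi_{\gM}(\bxi)$ and $\br:=\bxi-\bpi$, so that $\|\br\|_2=\dist(\bxi,\gM)\le\delta\le\kappa/2<\kappa$. The projection is therefore unique, and $\br$ lies in the normal space at $\bpi$. Expanding $\bxi-\by=\br+(\bpi-\by)$ gives the exact identity
\[
    \|\bxi-\by\|_2^2-\|\bxi-\bpi\|_2^2
    =
    \|\by-\bpi\|_2^2-2\langle \br,\by-\bpi\rangle .
\]
Both bounds in \cref{eq:reach-tube-bounds} then reduce to controlling the cross term $\langle\br,\by-\bpi\rangle$ by a multiple of $\|\by-\bpi\|_2^2$.

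The tool is Federer's reach inequality \citep[Theorem~4.8(7)]{federer1959curvature}. For $\bpi\in\gM$, a unit normal $\bn\in N_{\bpi}\gM$, and any $\by\in\gM$,
\[
    |\langle \bn,\by-\bpi\rangle|\le \frac{\|\by-\bpi\|_2^2}{2\kappa}.
\]
Equivalently, the open ball of radius $\kappa$ centered at $\bpi\pm\kappa\bn$ contains no points of $\gM$. Apply this with $\bn=\br/\|\br\|_2$ (the case $\br=0$ is trivial). It gives
\[
    |\langle\br,\by-\bpi\rangle|\le \frac{\|\br\|_2}{2\kappa}\|\by-\bpi\|_2^2\le\frac{\delta}{2\kappa}\|\by-\bpi\|_2^2 .
\]
Substituting into the identity yields
\[
    \Bigl(1-\frac{\delta}{\kappa}\Bigr)\|\by-\bpi\|_2^2
    \le
    \|\bxi-\by\|_2^2-\|\bxi-\bpi\|_2^2
    \le
    \Bigl(1+\frac{\delta}{\kappa}\Bigr)\|\by-\bpi\|_2^2 .
\]
The upper bound is exactly the right-hand inequality. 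The lower bound follows because $\delta\le\kappa/2$ forces $1-\delta/\kappa\ge 1/2$, which gives the left-hand inequality.

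The only substantive step is the tangent-ball exclusion inequality. I will cite it rather than reprove it, since \cref{lem:projection-chart-reach} already relies on the same Federer result. If a self-contained argument is wanted, I would derive it from the normal-segment property: the point $\bpi+s\bn$ projects to $\bpi$ for every $s<\kappa$. Hence $\|\bpi+s\bn-\by\|_2\ge s$, which expands to $2s\langle\bn,\by-\bpi\rangle\le\|\by-\bpi\|_2^2$. Letting $s\uparrow\kappa$ and repeating with $-\bn$ gives the two-sided bound.
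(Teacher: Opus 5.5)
Your proposal is correct and follows the paper's proof essentially step for step: the same expansion of $\|\bxi-\by\|_2^2-\|\bxi-\Pi_{\gM}(\bxi)\|_2^2$ into $\|\by-\Pi_{\gM}(\bxi)\|_2^2$ minus twice the cross term, the same tangent-ball (reach) inequality applied to plus and minus the normalized residual, and the same use of $\delta\le\kappa/2$ to get $1-\delta/\kappa\ge1/2$. Your optional derivation of the tangent-ball inequality from the normal-segment property is also valid; it adds a little to the paper, which simply asserts that inequality as a consequence of reach.
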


\begin{proof}
Since $\dist(\bxi,\gM)\le\delta\le\kappa/2$, the reach assumption implies that $\Pi_{\gM}(\bxi)$ is unique, $(\bxi-\Pi_{\gM}(\bxi)) \perp T_{\Pi_{\gM}(\bxi)}\gM$, and $\|\bxi-\Pi_{\gM}(\bxi)\|_2\le\delta$.
For every $\by\in\gM$,
\begin{align}
    \|\bxi-\by\|_2^2-\|\bxi-\Pi_{\gM}(\bxi)\|_2^2
    &=
    \|\bxi-\Pi_{\gM}(\bxi)+\Pi_{\gM}(\bxi)-\by\|_2^2-\|\bxi-\Pi_{\gM}(\bxi)\|_2^2
    \notag \\
    &=
    \|\by-\Pi_{\gM}(\bxi)\|_2^2
    -
    2\langle \bxi-\Pi_{\gM}(\bxi), \by-\Pi_{\gM}(\bxi)\rangle.
    \label{eq:reach-gap-algebra}
\end{align}
The reach bound gives the tangent-ball inequality
\begin{equation}
    \langle \bv,\by-\Pi_{\gM}(\bxi)\rangle
    \leq
    \frac{\|\by-\Pi_{\gM}(\bxi)\|_2^2}{2\kappa},
    \qquad
    \by \in \gM,\quad
    \bv\in N_{\Pi_{\gM}}(\bxi)\gM,\quad
    \|\bv\|_2=1.
    \label{eq:reach-tangent-ball}
\end{equation}

If $\bxi-\Pi_{\gM}(\bxi)=0$, the claim follows immediately from \cref{eq:reach-gap-algebra}.
Otherwise, applying \cref{eq:reach-tangent-ball} to $\bv=(\bxi-\Pi_{\gM}(\bxi))/\|\bxi-\Pi_{\gM}(\bxi)\|_2$ gives
\begin{equation*}
    \langle \bxi-\Pi_{\gM}(\bxi),\by-\Pi_{\gM}(\bxi)\rangle
    \leq
    \frac{\|\bxi-\Pi_{\gM}(\bxi)\|_2}{2\kappa}
    \|\by-\Pi_{\gM}(\bxi)\|_2^2
    \leq
    \frac{\delta}{2\kappa}
    \|\by-\Pi_{\gM}(\bxi)\|_2^2.
\end{equation*}
Applying the same inequality to $-(\bxi-\Pi_{\gM}(\bxi))/\|\bxi-\Pi_{\gM}(\bxi)\|_2$ gives
\begin{equation*}
    \langle \bxi-\Pi_{\gM}(\bxi),\by-\Pi_{\gM}(\bxi)\rangle
    \geq
    -
    \frac{\delta}{2\kappa}
    \|\by-\Pi_{\gM}(\bxi)\|_2^2.
\end{equation*}
Substituting these two bounds into \cref{eq:reach-gap-algebra} yields
\begin{equation*}
    \Bigl(1-\frac{\delta}{\kappa}\Bigr)
    \|\by-\Pi_{\gM}(\bxi)\|_2^2
    \leq
    \|\bxi-\by\|_2^2-\|\bxi-\Pi_{\gM}(\bxi)\|_2^2
    \leq
    \Bigl(1+\frac{\delta}{\kappa}\Bigr)
    \|\by-\Pi_{\gM}(\bxi)\|_2^2.
\end{equation*}
Since $\delta\le\kappa/2$, the lower coefficient is at least $1/2$.
This proves \cref{eq:reach-tube-bounds}.
\end{proof}

\Cref{thm:global-proj-NN} is the step that avoids black-box high-order approximation of
the ambient map $\Pi_{\gM}$.  The point to check is that the network searches over
finitely many intrinsic anchors, performs local Gauss--Newton updates in chart
coordinates, and uses an objective gap to select near-minimizing projection candidates.
Thus the non-logarithmic covering cost is intrinsic in the active chart coordinates; the
remaining direct ambient factors are displayed explicitly.
The proof has five logically separate checks:
\begin{enumerate}[(i)]
    \item the active-tube and reach conditions make $\Pi_{\gM}(\bxi)$ unique and place its coordinate in a buffered chart box;
    \item a finite intrinsic anchor net contains an anchor inside the Gauss--Newton contraction basin of this coordinate;
    \item an approximate Gauss--Newton step is realized by ReLU networks using only $d$-dimensional chart variables and ambient matrix--vector arithmetic;
    \item a ReLU minimum and objective gate discard all candidates whose squared-distance objective is not within $\Ord(\varepsilon_\Pi^2)$ of the best candidate;
    \item the gated average is normalized by a reciprocal network whose denominator is bounded below by the positive gate mass.
\end{enumerate}
Only the third and fourth checks carry non-logarithmic complexity; their covering cost is the intrinsic anchor count times $d$-dimensional chart approximation, with the remaining $D$ dependence coming from explicit coordinatewise ambient arithmetic.

\begin{theorem}[Neural approximation of the active projection center]
\label{thm:global-proj-NN}
Fix a localization parameter $0<\varepsilon<1$ such that \cref{eq:condition:delta} holds for every $t\in\gI_{\rm sm}(\varepsilon)$.
For every $\varepsilon_\Pi \in (0,1]$, there exists a ReLU network
\begin{equation*}
    \widetilde{\Pi}_{\varepsilon_\Pi}: \R^D \times [\tdown, \tup] \to \R^D
\end{equation*}
such that
\begin{equation}
    \sup_{\substack{
        t \in \gI_{\rm sm}(\varepsilon)\\
        (\bx,t)\in\gK_{\rm act}(\varepsilon)
    }}
    \Bigl\|
      \widetilde{\Pi}_{\varepsilon_\Pi}(\bx,t)
      -
      \Pi_{\gM}\bigl(\bxi(\bx,t)\bigr)
    \Bigr\|_\infty
    \leq
    \varepsilon_\Pi.
    \label{eq:Pi-global-error}
\end{equation}
Let
\begin{equation*}
    N_{\rm cand}
    :=
    \sum_{i=1}^{C_{\gM}}|\gA_i^{\rm anc}|.
\end{equation*}
Then $N_{\rm cand}\le A_{\gM}$, with $A_{\gM}$ bounded in \cref{eq:anchor-complexity-bound}.
For a finite geometry-dependent constant $C_{\rm proj}\ge1$, $\widetilde{\Pi}_{\varepsilon_\Pi}\in\nn(L,\bW,S,B)$ with
\begin{align}
    L
    &\lesssim
    \log(e(d\vee\beta))
    \log^3(C_{\rm proj}D\varepsilon_\Pi^{-2}),
    \notag\\
    \|\bW\|_\infty
    &\lesssim
    A_{\gM}
    dD
    (d\vee\beta)
    (C_{\rm proj}D\varepsilon_\Pi^{-2})^{\frac{d}{2(\beta\vee1)}}
    +
    A_{\gM}
    D\log^3(C_{\rm proj}D\varepsilon_\Pi^{-2}),
    \notag\\
    S
    &\lesssim
    A_{\gM}
    dD(d+2(\beta\vee1))^{d+3}
    (C_{\rm proj}D\varepsilon_\Pi^{-2})^{\frac{d}{2(\beta\vee1)}}
    \log^2(C_{\rm proj}D\varepsilon_\Pi^{-2})
    +
    A_{\gM}
    dD\log^5(C_{\rm proj}D\varepsilon_\Pi^{-2}),
    \notag\\
    \log B
    &\lesssim
    \log^2(C_{\rm proj}D\varepsilon_\Pi^{-2}).
    \label{eq:global-proj-size}
\end{align}
\end{theorem}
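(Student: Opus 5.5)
The plan is to follow the five checks listed before the statement, building one candidate per chart--anchor pair and combining the candidates by an objective gate.

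\textbf{Candidates.} For each chart $i$ and anchor $\ba\in\gA_i^{\rm anc}$, \cref{lem:proj-local-coordinate-NN} gives a candidate coordinate $\widetilde\bu_{i,\ba}:=\widetilde\bu^{\Pi,\varepsilon}_{i,\ba,\varepsilon_u}(\bx,t)\in\Box_i$. It also gives the ambient candidate $\widetilde\by_{i,\ba}:=\widetilde\bz_i(\widetilde\bu_{i,\ba})$, computed with a H\"older network for $\bz_i$ on $\Box_i$ via \cref{thm:approx:holder}.

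\textbf{Existence of a good candidate.} Fix $(\bx,t)\in\gK_{\rm act}(\varepsilon)$ and choose $i$ with $(\bx,t)\in\gK_i(\varepsilon)$. By \cref{lem:buffered-active-chart}, $\bu_i^\Pi\in K_i^\circ$. By \cref{lem:proj-anchor-init}, some anchor lies within $r_i^{\rm GN}/2$ of $\bu_i^\Pi$. For that anchor, $\|\widetilde\by_{i,\ba}-\Pi_{\gM}(\bxi)\|_2\le L_i^{(+)}\sqrt d\,\varepsilon_u+\varepsilon_{\rm aux}$.

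\textbf{Bad candidates.} Every candidate, even one started from a useless anchor, is a point $\bz_j(\bu)$ of $\gM$ up to $\varepsilon_{\rm aux}$, because it lies in $\Box_j$. Its objective $\Phi:=\|\widetilde\bxi-\widetilde\by\|_2^2$ is therefore controlled by \cref{lem:reach-tube-objective-gap}, applicable thanks to \cref{lem:active-chart-inside-reach-tube}. The resulting gap is $\frac12\|\by-\Pi_{\gM}(\bxi)\|_2^2$ up to an additive $\Ord(\varepsilon_{\rm aux}\sqrt D\,\mathrm{diam})$ error. Since the good candidate has excess objective $\Ord(\varepsilon_u^2+\sqrt D\varepsilon_{\rm aux})$, any candidate whose objective is within $\tau:=c\varepsilon_\Pi^2$ of the minimum lies within $\Ord(\varepsilon_\Pi)$ of $\Pi_{\gM}(\bxi)$.

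To make this work I will set $\varepsilon_u\asymp\varepsilon_\Pi$ and $\varepsilon_{\rm aux}\asymp\varepsilon_\Pi^2/(C D)$. This choice is the origin of the $D\varepsilon_\Pi^{-2}$ argument inside the size bounds. The objectives $\Phi_{i,\ba}$ are computed with $D$ squaring networks from \cref{lem:approx:mon} and a sum. Their minimum $\Phi_{\min}$ over the $N_{\rm cand}$ candidates is computed exactly by a ReLU min-tree of depth $\log N_{\rm cand}$.

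\textbf{Gated average.} Define the gate weights $w_{i,\ba}:=\theta((\Phi_{i,\ba}-\Phi_{\min})/\tau)$, where $\theta$ is the ramp equal to $1$ on $[0,1]$ and $0$ on $[2,\infty)$. The output is $\widetilde\Pi:=\sum w_{i,\ba}\widetilde\by_{i,\ba}\big/\sum w_{i,\ba}$. The numerator is formed with product networks, whose inputs lie in $[0,1]\times[-2,2]$. The denominator is at least $1$, because the minimizer has weight $1$, and at most $N_{\rm cand}$, so \cref{lem:approx:sigma-k-inv}-type reciprocal networks on $[1,N_{\rm cand}]$ apply. Every candidate with nonzero weight has gap at most $2\tau$, so the exact weighted average is a convex combination of points within $\Ord(\varepsilon_\Pi)$ of the projection. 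Adding the product and reciprocal errors, each set to $c\varepsilon_\Pi/N_{\rm cand}$, then yields \cref{eq:Pi-global-error} after rescaling constants.

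\textbf{Size accounting.} Parallelizing the $N_{\rm cand}\le A_{\gM}$ candidate networks multiplies width and sparsity by $A_{\gM}$. Depth is the maximum candidate depth plus $\Ord(\log N_{\rm cand}+\log^2)$ for the min and reciprocal stages. Substituting $\varepsilon_u^{-1}\lesssim\varepsilon_\Pi^{-1}$ and $\varepsilon_{\rm aux}^{-1}\lesssim D\varepsilon_\Pi^{-2}$ into \cref{eq:size-local-coordinate-NN} and the H\"older network sizes gives \cref{eq:global-proj-size}.

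\textbf{Main obstacle.} The hardest step is the bad-candidate gap argument. Candidates from charts $j$ where $(\bx,t)$ is not active may produce iterates that are meaningless as projections, yet each is still a near-manifold point. The gap lemma must hold uniformly for such points, and the $\sqrt D$-scale objective perturbation must be absorbed below $\tau$. This absorption is exactly what forces the $\varepsilon_\Pi^{-2}$ and $D$ in the accuracy scale.
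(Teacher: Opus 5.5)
Your proposal is correct and follows essentially the same argument as the paper. Both use anchored Gauss--Newton candidates from every chart, a good candidate supplied by the anchor net on the active chart, and the reach-tube objective gap (valid for every $\by\in\gM$, hence also for inactive-chart candidates) to certify all near-minimizers. Both then take an exact ReLU minimum, form a gated normalized average, and get the dominant $(C_{\rm proj}D\varepsilon_\Pi^{-2})^{d/(2(\beta\vee1))}$ cost from approximating $\bz_i$ to accuracy $\varepsilon_\Pi^2/D$ inside the objectives.

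The differences are cosmetic. Your rescaled ramp gate puts the denominator in $[1,N_{\rm cand}]$; the reciprocal there is handled by \cref{lem:approx:rec}, not the $\sigma_t$-reciprocal lemma you cited. The paper instead uses $\ReLU(\varepsilon_\Pi^{\rm gate}-\text{gap})$ weights with denominator in $[\varepsilon_\Pi^{\rm gate},N_{\rm cand}\varepsilon_\Pi^{\rm gate}]$. Also, the objective perturbation is really $\Ord(D\varepsilon_{\rm aux})$, not $\Ord(\sqrt D\,\varepsilon_{\rm aux})$, but your choice $\varepsilon_{\rm aux}\asymp\varepsilon_\Pi^2/(CD)$ absorbs it either way.
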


\begin{proof}
It suffices to prove the result for sufficiently small $\varepsilon_\Pi$.
For larger $\varepsilon_\Pi$, use the construction at a fixed small $\varepsilon_\Pi$ and absorb constants into $C_{\rm proj}$.

Let
\begin{equation*}
    \gJ := \{(i,\ba): 1 \leq i \leq C_{\gM}, \ba \in \gA_i^{\rm anc}\}.
\end{equation*}
For each $(i,\ba)\in\gJ$, we construct a candidate projection center, then use a continuous objective gate to average only near-minimizing candidates.

\textbf{Step 1: Existence of a good chart-anchor candidate.}
Fix $t \in \gI_{\rm sm}(\varepsilon)$ and $(\bx,t) \in \gK_{\rm act}(\varepsilon)$.
Write
\begin{equation*}
    \bxi:=\bxi(\bx,t),
    \qquad
    \gD_\star:=\|\bxi-\Pi_{\gM}(\bxi)\|_2^2.
\end{equation*}
By \cref{lem:active-chart-inside-reach-tube}, $\dist(\bxi,\gM)\le\delta_\Pi$ and $\delta_\Pi\le\kappa/2$.
Hence, \cref{lem:reach-tube-objective-gap} applies with $\delta=\delta_\Pi$.
For convenience write
\begin{equation*}
    C_{\rm glob,\Pi}^{(+)}
    :=
    \Bigl(1+\frac{\delta_\Pi}{\kappa}\Bigr).
\end{equation*}

Since $(\bx,t)\in\gK_{\rm act}(\varepsilon)$, there exists an active chart $i_\star$ such that $(\bx,t)\in\gK_{i_\star}(\varepsilon)$.
By \cref{lem:buffered-active-chart},
\begin{equation*}
    \Pi_{\gM}(\bxi)\in U_{i_\star},
    \qquad
    \bu_{i_\star}^\Pi(\bx,t)
    :=
    \phi_{i_\star}(\Pi_{\gM}(\bxi))
    \in K_{i_\star}^\circ.
\end{equation*}

By \cref{lem:proj-anchor-init}, there exists $\ba_\star\in\gA_{i_\star}^{\rm anc}$ such that
\begin{equation}
    \|\ba_\star-\bu_{i_\star}^\Pi(\bx,t)\|_2
    \leq
    \frac{1}{2} r_{i_\star}^{\rm GN}.
    \label{eq:good-global-anchor}
\end{equation}

Let
\begin{equation*}
    L_z^{\max}:=\max_{1\le i\le C_{\gM}}L_i^{(+)}.
\end{equation*}
Choose
\begin{equation}
    \varepsilon_y^{\rm glob}
    :=
    \min\Bigl\{
      \frac{\varepsilon_\Pi}{8},
      \sqrt{\frac{1}{32C_{\rm glob,\Pi}^{(+)}}}
      \varepsilon_\Pi
    \Bigr\},
    \qquad
    \varepsilon_u^{\rm glob}
    :=
    \frac{\varepsilon_y^{\rm glob}}{\sqrt d\,L_z^{\max}}.
    \label{eq:global-local-coordinate-budget}
\end{equation}

For every $(i,\ba)\in\gJ$, let $\widetilde{\bu}_{i,\ba,\varepsilon_u^{\rm glob}}^{\Pi,\varepsilon}(\bx,t)$ be the anchored local-coordinate network from \cref{lem:proj-local-coordinate-NN}, with localization parameter $\varepsilon$ and accuracy $\varepsilon_u^{\rm glob}$.
Define the exact manifold-valued candidate
\begin{equation}
    \by_{i,\ba}(\bx,t)
    :=
    \bz_i\Bigl(
      \widetilde{\bu}_{i,\ba,\varepsilon_u^{\rm glob}}^{\Pi,\varepsilon}(\bx,t)
    \Bigr).
    \label{eq:def:exact-global-candidate}
\end{equation}
Since the coordinate network is $\Box_i$-valued and $\Box_i\subset\phi_i(U_i)$, we have $\by_{i,\ba}(\bx,t) \in \gM$.

For the good pair $(i_\star,\ba_\star)$, \cref{eq:good-global-anchor} and \cref{lem:proj-local-coordinate-NN} give
\begin{equation*}
    \Bigl\|
      \widetilde{\bu}_{i_\star,\ba_\star,\varepsilon_u^{\rm glob}}^{\Pi,\varepsilon}(\bx,t)
      -
      \bu_{i_\star}^\Pi(\bx,t)
    \Bigr\|_\infty
    \leq
    \varepsilon_u^{\rm glob}.
\end{equation*}
Therefore
\begin{align}
    \|\by_{i_\star,\ba_\star}(\bx,t)-\Pi_{\gM}(\bxi)\|_2
    &\leq
    L_{i_\star}^{(+)}
    \Bigl\|
      \widetilde{\bu}_{i_\star,\ba_\star,\varepsilon_u^{\rm glob}}^{\Pi,\varepsilon}(\bx,t)
      -
      \bu_{i_\star}^\Pi(\bx,t)
    \Bigr\|_2
    \notag\\
    &\leq
    L_z^{\max}\sqrt d\,\varepsilon_u^{\rm glob}
    =
    \varepsilon_y^{\rm glob}.
    \label{eq:valid-global-candidate-error}
\end{align}
Define
\begin{equation}
    \gD_{i,\ba}(\bx,t)
    :=
    \|\bxi(\bx,t)-\by_{i,\ba}(\bx,t)\|_2^2.
    \label{eq:def:exact-global-objective}
\end{equation}
By the upper bound in \cref{eq:reach-tube-bounds},
\begin{align}
    \gD_{i_\star,\ba_\star}(\bx,t)-\gD_\star
    &\leq
    C_{\rm glob,\Pi}^{(+)}
    \|\by_{i_\star,\ba_\star}(\bx,t)-\Pi_{\gM}(\bxi)\|_2^2
    \notag\\
    &\leq
    C_{\rm glob,\Pi}^{(+)}(\varepsilon_y^{\rm glob})^2
    \leq
    \frac{1}{32}\varepsilon_\Pi^2.
    \label{eq:good-global-candidate-objective}
\end{align}

\textbf{Step 2: Approximate candidate outputs and objectives.}
For each $(i,\ba)\in\gJ$, construct a ReLU network $\widetilde\by_{i,\ba}^{\,{\rm out}}$ satisfying
\begin{equation}
    \Bigl\|
      \widetilde\by_{i,\ba}^{\,{\rm out}}(\bx,t)
      -
      \by_{i,\ba}(\bx,t)
    \Bigr\|_2
    \le
    \frac{\varepsilon_\Pi}{8}.
    \label{eq:global-output-candidate-error}
\end{equation}
This is obtained by composing $\widetilde{\bu}_{i,\ba,\varepsilon_u^{\rm glob}}^{\Pi,\varepsilon}$ with a $\gH^{2(\beta\vee1)}$ approximation of $\bz_i$ on $\Box_i$ to coordinate accuracy $\varepsilon_\Pi/(8\sqrt D)$ using~\cref{thm:approx:holder}.

Next, construct ReLU networks $\widetilde{\gD}_{i,\ba}$ such that
\begin{equation}
    \bigl|
      \widetilde{\gD}_{i,\ba}(\bx,t)
      -
      \gD_{i,\ba}(\bx,t)
    \bigr|
    \leq \varepsilon_D,
    \qquad
    \varepsilon_D := \frac{1}{32}\varepsilon_\Pi^2.
    \label{eq:global-objective-accuracy}
\end{equation}
Indeed, approximate $\bxi(\bx,t)$ by \cref{lem:approx:x-over-m}, approximate $\bz_i$ on $\Box_i$ by \cref{thm:approx:holder}, then use \cref{lem:approx:mon} to square and sum the $D$ coordinate differences.
All variables are uniformly bounded on the active small-noise region and on the compact boxes, so coordinate and arithmetic accuracies of order $\varepsilon_D/D$ make the total error at most $\varepsilon_D$.
This is the source of the $D$ factor inside the H\"older approximation term in \cref{eq:global-proj-size}.

\textbf{Step 3: Continuous objective gate.}
Define the finite minimum
\begin{equation*}
    \widetilde{\gD}_{\min}(\bx,t)
    :=
    \min_{(i,\ba)\in\gJ}
    \widetilde{\gD}_{i,\ba}(\bx,t).
\end{equation*}
This is exactly ReLU-realizable using
\begin{equation*}
    \min\{a,b\}=a-\ReLU(a-b).
\end{equation*}
Let
\begin{equation*}
    \varepsilon_{\Pi}^{\rm gate}:=\frac{1}{32}\varepsilon_\Pi^2.
\end{equation*}
For each $(i,\ba)\in\gJ$, define
\begin{equation}
    \omega_{i,\ba}(\bx,t)
    :=
    \ReLU\Bigl(
      \varepsilon_{\Pi}^{\rm gate}
      -
      \bigl(
        \widetilde{\gD}_{i,\ba}(\bx,t)
        -
        \widetilde{\gD}_{\min}(\bx,t)
      \bigr)
    \Bigr),
    \label{eq:def:objective-gate-weight}
\end{equation}
and
\begin{equation*}
    W_{\rm glob}(\bx,t)
    :=
    \sum_{(i,\ba)\in\gJ}\omega_{i,\ba}(\bx,t).
\end{equation*}
At least one index attains $\widetilde{\gD}_{\min}$, and its weight equals $\varepsilon_{\Pi}^{\rm gate}$.
Hence
\begin{equation}
    \varepsilon_{\Pi}^{\rm gate}
    \le
    W_{\rm glob}(\bx,t)
    \le
    N_{\rm cand}\varepsilon_{\Pi}^{\rm gate}.
    \label{eq:global-weight-range}
\end{equation}

We now show that every positive-weight exact candidate is close to $\Pi_{\gM}(\bxi)$.
Suppose $\omega_{i,\ba}(\bx,t)>0$.
Then
\begin{equation*}
    \widetilde{\gD}_{i,\ba}(\bx,t)
    <
    \widetilde{\gD}_{\min}(\bx,t)+\varepsilon_{\Pi}^{\rm gate}.
\end{equation*}
Since $(i_\star,\ba_\star)$ is an available candidate,
\begin{equation*}
    \widetilde{\gD}_{\min}(\bx,t)
    \le
    \widetilde{\gD}_{i_\star,\ba_\star}(\bx,t)
    \le
    \gD_{i_\star,\ba_\star}(\bx,t)+\varepsilon_D.
\end{equation*}
Therefore, by~\cref{eq:good-global-candidate-objective},
\begin{align}
    \gD_{i,\ba}(\bx,t)
    &\le
    \widetilde{\gD}_{i,\ba}(\bx,t)+\varepsilon_D
    \notag\\
    &<
    \widetilde{\gD}_{\min}(\bx,t)+\varepsilon_{\Pi}^{\rm gate}+\varepsilon_D
    \notag\\
    &\le
    \gD_{i_\star,\ba_\star}(\bx,t)+\varepsilon_{\Pi}^{\rm gate}+2\varepsilon_D
    \notag\\
    &\le
    \gD_\star+\varepsilon_{\Pi}^{\rm gate}+3\varepsilon_D.
    \label{eq:positive-weight-objective-bound}
\end{align}
By the definitions of $\varepsilon_{\Pi}^{\rm gate}$ and $\varepsilon_D$,
\begin{equation*}
    \varepsilon_{\Pi}^{\rm gate}+3\varepsilon_D
    =
    \varepsilon_\Pi^2
    \Bigl(\frac{1}{32}+\frac{3}{32}\Bigr)
    =
    \frac{1}{2}
    \Bigl(\frac{\varepsilon_\Pi}{2}\Bigr)^2.
\end{equation*}
Thus
\begin{equation*}
    \gD_{i,\ba}(\bx,t)-\gD_\star
    <
    \frac{1}{2}
    \Bigl(\frac{\varepsilon_\Pi}{2}\Bigr)^2.
\end{equation*}
By the lower bound in \cref{eq:reach-tube-bounds},
\begin{equation*}
    \|\by_{i,\ba}(\bx,t)-\Pi_{\gM}(\bxi)\|_2^2
    \leq
    2(\gD_{i,\ba}(\bx,t)-\gD_\star).
\end{equation*}
Therefore
\begin{equation}
    \|\by_{i,\ba}(\bx,t)-\Pi_{\gM}(\bxi)\|_2
    <
    \frac{\varepsilon_\Pi}{2}
    \qquad
    \text{whenever }\omega_{i,\ba}(\bx,t) > 0.
    \label{eq:positive-candidate-close}
\end{equation}

\textbf{Step 4: Normalize and average.}
Define the ideal gated output
\begin{equation*}
    \Pi_{\rm gate}(\bx,t)
    :=
    \frac{
      \sum_{(i,\ba)\in\gJ}
      \omega_{i,\ba}(\bx,t)
      \widetilde\by_{i,\ba}^{\,{\rm out}}(\bx,t)
    }{W_{\rm glob}(\bx,t)}.
\end{equation*}
The normalized weights are nonnegative and sum to one.
Combining \cref{eq:global-output-candidate-error,eq:positive-candidate-close}, for every positive-weight candidate,
\begin{equation*}
    \bigl\|
      \widetilde\by_{i,\ba}^{\,{\rm out}}(\bx,t)-\Pi_{\gM}(\bxi)
    \bigr\|_2
    \leq
    \frac{\varepsilon_\Pi}{8}
    +
    \frac{\varepsilon_\Pi}{2}
    =
    \frac{5}{8}\varepsilon_\Pi.
\end{equation*}
Hence
\begin{equation}
    \|\Pi_{\rm gate}(\bx,t)-\Pi_{\gM}(\bxi(\bx,t))\|_2
    \leq
    \frac{5}{8}\varepsilon_\Pi.
    \label{eq:ideal-gate-error}
\end{equation}

It remains to implement the division by $W_{\rm glob}$.
The scalar weights $\omega_{i,\ba}$ are computed in parallel for all $(i,\ba) \in \gJ$, and the sum
\begin{equation*}
    W_{\rm glob}(\bx,t)
    =
    \sum_{(i,\ba)\in\gJ}\omega_{i,\ba}(\bx,t)
\end{equation*}
is represented exactly by one affine layer.
Thus, forming $W_{\rm glob}$ introduces no additional approximation error.
By \cref{eq:global-weight-range}, $W_{\rm glob}$ lies in $[\varepsilon_{\Pi}^{\rm gate},N_{\rm cand}\varepsilon_{\Pi}^{\rm gate}]$.

Apply \cref{lem:approx:rec} to approximate $1/W_{\rm glob}$ on this interval, and then use \cref{lem:approx:mon} to multiply the reciprocal by each numerator coordinate
\begin{equation*}
    N_k(\bx,t)
    :=
    \sum_{(i,\ba)\in\gJ}
    \omega_{i,\ba}(\bx,t)
    \widetilde y_{i,\ba,k}^{\,{\rm out}}(\bx,t),
    \qquad k=1,\dots,D.
\end{equation*}
Since $N_{\rm cand}$ is finite, all candidate outputs are uniformly bounded, and $\varepsilon_{\Pi}^{\rm gate}\asymp\varepsilon_\Pi^2$, the reciprocal and multiplication accuracies can be chosen with only logarithmic overhead in $\varepsilon_\Pi^{-1}$ so that the total arithmetic error is at most $3\varepsilon_\Pi/8$ in $\ell_2$.
This yields a ReLU network $\widetilde{\Pi}_{\varepsilon_\Pi}$ satisfying
\begin{equation*}
    \|\widetilde{\Pi}_{\varepsilon_\Pi}(\bx,t)-\Pi_{\rm gate}(\bx,t)\|_2
    \leq
    \frac{3}{8}\varepsilon_\Pi.
\end{equation*}

Combining this with \cref{eq:ideal-gate-error} gives
\begin{equation*}
    \|\widetilde{\Pi}_{\varepsilon_\Pi}(\bx,t) - \Pi_{\gM}(\bxi(\bx,t))\|_2
    \leq
    \varepsilon_\Pi .
\end{equation*}
Since $\|\cdot\|_\infty\le\|\cdot\|_2$, this proves \cref{eq:Pi-global-error}.

\textbf{Step 5: Network size.}
The dominant cost comes from approximating the chart maps inside the objective networks to coordinate accuracy of order $\varepsilon_D/D\asymp\varepsilon_\Pi^2/D$.
By \cref{thm:approx:holder}, this gives the intrinsic rate $(C_{\rm proj}D\varepsilon_\Pi^{-2})^{\frac{d}{2(\beta\vee1)}}$ up to logarithmic factors.
The anchored local coordinate networks have scale $(CD\varepsilon_\Pi^{-1})^{d/(2(\beta\vee1))}$, which is dominated by this objective-approximation scale because $\varepsilon_\Pi\le1$.
The finite ReLU minimum, ReLU gates, affine summation layers, reciprocal network, and multiplication networks add only finite atlas factors and logarithmic factors.
Indeed, $W_{\rm glob} \geq \varepsilon_{\Pi}^{\rm gate} \asymp \varepsilon_\Pi^2$, so approximating $1/W_{\rm glob}$ only contributes logarithmic factors of order $\log(C_{\rm proj}D\varepsilon_\Pi^{-2})$.
Stacking over the $N_{\rm cand}$ chart-anchor candidates gives the displayed $A_{\gM}$ factors in \cref{eq:global-proj-size}.
This completes the proof.

\end{proof}

\begin{corollary}[Active chart coordinates from the projection-center network]
\label{cor:active-chart-coordinate-from-proj-NN}
Fix a chart $i$ and a localization parameter $0<\varepsilon<1$ such that \cref{eq:condition:delta} holds for every $t\in\gI_{\rm sm}(\varepsilon)$.
For every $\varepsilon_u\in(0,1]$, there exists a ReLU network
\[
    \widetilde{\bu}_{i,\varepsilon_u}^{\Pi,\varepsilon}:
    \R^D\times[\tdown,\tup]\to\Box_i
\]
such that
\begin{equation}
    \sup_{\substack{
        t\in\gI_{\rm sm}(\varepsilon)\\
        (\bx,t)\in\gK_i(\varepsilon)}}
    \left\|
      \widetilde{\bu}_{i,\varepsilon_u}^{\Pi,\varepsilon}(\bx,t)
      -
      \bu_i^\Pi(\bx,t)
    \right\|_\infty
    \le
    \varepsilon_u.
    \label{eq:active-chart-coordinate-from-proj-error}
\end{equation}
Moreover, for a finite chart- and geometry-dependent constant
$C_{i,\Pi}^{\rm coord}\ge1$,
\begin{align}
    L
    &\lesssim
    \log(e(d\vee\beta))
    \log^3\!\bigl(C_{i,\Pi}^{\rm coord}D\varepsilon_u^{-2}\bigr),
    \notag\\
    \|\bW\|_\infty
    &\lesssim
    A_{\gM}
    dD
    (d\vee\beta)
    \bigl(C_{i,\Pi}^{\rm coord}D\varepsilon_u^{-2}\bigr)^{d/(2(\beta\vee1))}
    +
    A_{\gM}
    D\log^3\!\bigl(C_{i,\Pi}^{\rm coord}D\varepsilon_u^{-2}\bigr),
    \notag\\
    S
    &\lesssim
    A_{\gM}
    dD(d+2(\beta\vee1))^{d+3}
    \bigl(C_{i,\Pi}^{\rm coord}D\varepsilon_u^{-2}\bigr)^{d/(2(\beta\vee1))}
    \log^2\!\bigl(C_{i,\Pi}^{\rm coord}D\varepsilon_u^{-2}\bigr)
    \notag\\
    &\qquad+
    A_{\gM}
    dD\log^5\!\bigl(C_{i,\Pi}^{\rm coord}D\varepsilon_u^{-2}\bigr),
    \notag\\
    \log B
    &\lesssim
    \log^2\!\bigl(C_{i,\Pi}^{\rm coord}D\varepsilon_u^{-2}\bigr).
    \label{eq:active-chart-coordinate-from-proj-size}
\end{align}
\end{corollary}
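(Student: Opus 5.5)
The plan is to compose the global projection-center network of \cref{thm:global-proj-NN} with the affine projection chart $\phi_i$, and then clip into $\Box_i$. On the active set this requires no Hölder approximation of $\bz_i$ or of the projection map beyond what is already inside $\widetilde\Pi$. Recall from \cref{eq:approx:proj:phi-i} that $\phi_i(\by)=\eta_i(\bV_i^\top(\by-\by_i)+\bc_i)$ is the restriction to $\gM$ of an affine map $\bar\phi_i:\R^D\to\R^d$. This map is exactly representable by one affine layer with $dD$ nonzero weights. By \cref{lem:buffered-active-chart}, for $t\in\gI_{\rm sm}(\varepsilon)$ and $(\bx,t)\in\gK_i(\varepsilon)$ we have $\bu_i^\Pi(\bx,t)=\bar\phi_i(\Pi_{\gM}(\bxi(\bx,t)))\in K_i^\circ\subset\Box_i$. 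Also $\gK_i(\varepsilon)\subset\gK_{\rm act}(\varepsilon)$, so \cref{thm:global-proj-NN} applies on this set.

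The construction is
\[
\widetilde{\bu}_{i,\varepsilon_u}^{\Pi,\varepsilon}(\bx,t):=\Pi_i^{({\rm box})}\bigl(\bar\phi_i(\widetilde\Pi_{\varepsilon_\Pi}(\bx,t))\bigr),
\qquad
\varepsilon_\Pi:=\frac{\varepsilon_u}{\sqrt D}.
\]
For the error bound, the operator norm satisfies $\|\bar\phi_i\|_{\rm op}=\eta_i\|\bV_i^\top\|_{\rm op}\le1$, because $\eta_i\le1$ and $\bV_i$ has orthonormal columns. Hence
\[
\|\bar\phi_i(\widetilde\Pi)-\bu_i^\Pi\|_\infty\le\|\widetilde\Pi-\Pi_{\gM}(\bxi)\|_2\le\sqrt D\,\varepsilon_\Pi=\varepsilon_u .
\]
The box clipping $\Pi_i^{({\rm box})}$ is $1$-Lipschitz in $\ell_\infty$ and fixes $\bu_i^\Pi\in\Box_i$, so it does not increase this error, and it makes the output $\Box_i$-valued as required. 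The network from \cref{thm:global-proj-NN} actually gives the $\ell_2$ bound $\|\widetilde\Pi-\Pi_{\gM}(\bxi)\|_2\le\varepsilon_\Pi$; only its $\ell_\infty$ consequence is stated there. I will either reuse that $\ell_2$ bound directly, which allows $\varepsilon_\Pi=\varepsilon_u$, or accept the $\sqrt D$ loss.

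For the size bound, substitute $\varepsilon_\Pi=\varepsilon_u/\sqrt D$ into \cref{eq:global-proj-size}. This gives $C_{\rm proj}D\varepsilon_\Pi^{-2}=C_{\rm proj}D^2\varepsilon_u^{-2}$. The stated form has $D$ to the first power inside the parentheses, so I will absorb the extra $D$ in one of two ways. The first option is to use the $\ell_2$ statement, so that no $\sqrt D$ is lost. The second is to enlarge the constant, replacing $C_{\rm proj}$ by $C^{\rm coord}_{i,\Pi}$, while noting that the $D$-dependence is only raised by a polynomial factor in the exponent $d/(2(\beta\vee1))$. I expect the first route to match the displayed bound exactly.

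The extra layers cost very little. The affine map $\bar\phi_i$ adds depth $1$ and at most $dD$ nonzero weights. The clipping adds depth $\Ord(1)$, width $2d$, and $\Ord(d)$ parameters. The weight magnitudes $|\eta_i\bV_i|\le1$ and $|\eta_i(\bc_i-\bV_i^\top\by_i)|$ are bounded by fixed geometry constants. All of these are absorbed into the displayed bounds. The only real point to check is the norm bookkeeping in the error bound; everything else is inherited directly from \cref{thm:global-proj-NN}.
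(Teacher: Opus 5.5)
Your construction is the same as the paper's: compose the projection-center network $\widetilde\Pi_{\varepsilon_\Pi}$ from \cref{thm:global-proj-NN} with the affine extension of $\phi_i$ to all of $\R^D$, then apply $\Pi_i^{(\rm box)}$, which fixes $\bu_i^\Pi\in K_i^\circ$ by \cref{lem:buffered-active-chart}. The only difference is the error bookkeeping, and there your first route is the one to use: the paper combines the stated $\ell_\infty$ guarantee with an $\ell_\infty\to\ell_\infty$ Lipschitz constant $L_{\phi,i}^{\rm amb}$ that it folds into $C_{i,\Pi}^{\rm coord}$, but that constant can be as large as $\eta_i\sqrt D$, whereas the $\ell_2$ bound established in the theorem's proof together with $\|\eta_i\bV_i^\top\|_{\rm op}\le1$ gives $\varepsilon_\Pi=\varepsilon_u$ and the displayed size bound with no hidden $D$-dependence (your $\varepsilon_u/\sqrt D$ fallback would not).
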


\begin{proof}
Extend the projection chart affine formula in \cref{eq:approx:proj:phi-i} to all of
$\R^D$ by
\[
    \phi_i^{\rm amb}(\by)
    :=
    \eta_i\bigl(\bV_i^\top(\by-\by_i)+\bc_i\bigr).
\]
Let $L_{\phi,i}^{\rm amb}\ge1$ be a finite Lipschitz constant for this affine map from
$(\R^D,\|\cdot\|_\infty)$ to $(\R^d,\|\cdot\|_\infty)$, and set
\[
    \varepsilon_\Pi
    :=
    1\wedge\frac{\varepsilon_u}{2L_{\phi,i}^{\rm amb}} .
\]
Apply \cref{thm:global-proj-NN} with accuracy $\varepsilon_\Pi$.
On $\gK_i(\varepsilon)\subset\gK_{\rm act}(\varepsilon)$ it gives
\[
    \bigl\|
      \widetilde\Pi_{\varepsilon_\Pi}(\bx,t)
      -
      \Pi_{\gM}(\bxi(\bx,t))
    \bigr\|_\infty
    \le
    \varepsilon_\Pi .
\]
Define the raw coordinate network
\[
    \widehat{\bu}_{i,\varepsilon_u}^{\Pi,\varepsilon}
    :=
    \phi_i^{\rm amb}\circ\widetilde\Pi_{\varepsilon_\Pi}
\]
and then clip its output coordinatewise:
\[
    \widetilde{\bu}_{i,\varepsilon_u}^{\Pi,\varepsilon}
    :=
    \Pi_i^{(\rm box)}
    \bigl(\widehat{\bu}_{i,\varepsilon_u}^{\Pi,\varepsilon}\bigr).
\]
The affine map is exact network arithmetic and the clipping map is exactly ReLU-realizable.
By \cref{lem:buffered-active-chart}, $\bu_i^\Pi(\bx,t)\in K_i^\circ\subset\Box_i$ on
$\gK_i(\varepsilon)$, so the clipping map fixes the target coordinate and is
$1$-Lipschitz.  Therefore
\[
    \left\|
      \widetilde{\bu}_{i,\varepsilon_u}^{\Pi,\varepsilon}(\bx,t)
      -
      \bu_i^\Pi(\bx,t)
    \right\|_\infty
    \le
    L_{\phi,i}^{\rm amb}\varepsilon_\Pi
    \le
    \varepsilon_u .
\]
The size bounds are those of \cref{eq:global-proj-size} with
$\varepsilon_\Pi^{-2}\le C_i\varepsilon_u^{-2}$, plus only fixed affine and clipping
layers.  Absorbing $C_i$ and the fixed affine coefficients into
$C_{i,\Pi}^{\rm coord}$ proves \cref{eq:active-chart-coordinate-from-proj-size}.
\end{proof}

\subsection{Projection-centered Laplace expansion}
\label{sec:app:small-noise-laplace}

This block derives the exact small-noise asymptotic object that the neural network will later approximate.
The phase is expanded at $\bu_i^\Pi(\bx,t)$, the coordinate of the nearest point on the manifold.
The normal residual is kept in the Gaussian factor, while the remaining smooth terms are expanded into finitely many powers of $\sigma_t$ and $\bnu_i=\sigma_t^{-1}\br_i$.
The final proposition in this block records the uniform remainder bound and fixes the small-noise constants used later.

On the active region, we center the phase exactly at the projection coordinate $\bu_i^\Pi(\bx,t)$ from \cref{lem:buffered-active-chart}.
For fixed $t \in \gI_{\rm sm}(\varepsilon)$ and $(\bx,t) \in \gK_i(\varepsilon)$, write
\begin{equation*}
    \bxi := \bxi(\bx,t),
    \qquad
    \Pi_{\gM}(\bxi) := \Pi_{\gM}(\bxi(\bx,t)),
    \qquad
    \bu_i^\Pi := \bu_i^\Pi(\bx,t),
\end{equation*}
and
\begin{equation*}
    \br_i := \br_i(\bx,t)
    =
    \bx-m_t\Pi_{\gM}(\bxi),
    \qquad
    \bnu_i := \sigma_t^{-1}\br_i,
    \qquad
    \bG_i^\Pi := \bG_i(\bu_i^\Pi).
\end{equation*}

\begin{lemma}[Projection-centered Gaussian phase expansion]
\label{lem:centered-phase-expansion}
For each $m\geq1$, define
\begin{equation*}
    \bz_{i,m}(\bu,\bw)
    :=
    \frac{1}{m!}
    \oD^m\bz_i(\bu)[\bw^{\otimes m}]
    \in\R^D.
\end{equation*}
Fix $t \in \gI_{\rm sm}(\varepsilon)$ and $(\bx,t) \in \gK_i(\varepsilon)$, and assume \cref{eq:condition:delta}.
Let $\bw \in \R^d$ satisfy $\bu_i^\Pi(\bx,t)+\sigma_t\bw\in\Box_i$.
Then
\begin{align}
    \frac{F_{i,\bx,t}(\bu_i^\Pi+\sigma_t\bw)}{\sigma_t^2}
    &=
    \frac{\|\br_i\|_2^2}{2\sigma_t^2}
    +
    \frac{m_t^2}{2}
    \bw^\top\bG_i^\Pi\bw
    +
    \sum_{\ell=1}^{\floor{\beta}}
    \sigma_t^\ell
    \Psi_{i,\ell}
    \bigl(
      \bu_i^\Pi,\bnu_i,\bw,t
    \bigr)
    +
    \sigma_t^{\floor{\beta}+1}
    \Rem_i^{(F)}
    \bigl(
      \bu_i^\Pi,\bnu_i,\bw,t
    \bigr), 
    \label{eq:centered-phase-expansion}
\end{align}
where for $\ell \geq 1$,
\begin{equation}
    \Psi_{i,\ell}(\bu,\bnu,\bw,t)
    :=
    -m_t\,\bnu^\top\bz_{i,\ell+1}(\bu,\bw)
    +
    \frac{m_t^2}{2}
    \sum_{\substack{m_1+m_2=\ell+2\\m_1,m_2\geq1}}
    \bz_{i,m_1}(\bu,\bw)^\top
    \bz_{i,m_2}(\bu,\bw).
    \label{eq:def:Psi-i-l}
\end{equation}
Moreover, for constants independent of $(\bx,t,\bw)$,
\begin{equation}
\label{eq:Psi-growth}
\begin{aligned}
    \bigl|
      \Psi_{i,\ell}(\bu,\bnu,\bw,t)
    \bigr|
    &\le
    C_\ell(1+\|\bnu\|_2)(1+\|\bw\|_2^{\ell+2}),
    \\
    \bigl|
      \Rem_i^{(F)}(\bu,\bnu,\bw,t)
    \bigr|
    &\le
    C(1+\|\bnu\|_2)(1+\|\bw\|_2^{2\floor{\beta}+6}).
\end{aligned}
\end{equation}
Each $\Psi_{i,\ell}$ is affine in $\bnu$.
\end{lemma}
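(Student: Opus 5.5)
We expand the chart map around the projection coordinate and use the orthogonality relation to kill the first-order cross term. Write $\bu=\bu_i^\Pi+\sigma_t\bw$ and use $\bz_i(\bu_i^\Pi)=\Pi_{\gM}(\bxi)$. Taylor's theorem with integral remainder gives
\[
    \bz_i(\bu)=\Pi_{\gM}(\bxi)+\sum_{m=1}^{M}\sigma_t^m\bz_{i,m}(\bu_i^\Pi,\bw)+\sigma_t^{M+1}\bE_{M}(\bw),
\]
with $M=\floor{\beta}+2$. Since $\bu_i^\Pi\in K_i^\circ$ and the segment lies in the convex box $\Box_i$, the remainder satisfies $\|\bE_M(\bw)\|_2\le C\|\bw\|_2^{M+1}$ by the $C^{\floor{\beta}+3}$ bound in $\mathfrak C_{\gM}$. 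Then
\[
    \bx-m_t\bz_i(\bu)=\br_i-m_t\Delta,
    \qquad
    \Delta:=\bz_i(\bu)-\Pi_{\gM}(\bxi).
\]
Expanding the square gives
\[
    F_{i,\bx,t}/\sigma_t^2=\|\br_i\|_2^2/(2\sigma_t^2)-m_t\sigma_t^{-2}\br_i^\top\Delta+\tfrac{m_t^2}{2}\sigma_t^{-2}\|\Delta\|_2^2 .
\]

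\textbf{Cross term.} We have $\br_i=\sigma_t\bnu_i$, and $\br_i^\top\bz_{i,1}(\bu_i^\Pi,\bw)=\br_i^\top\bJ_i(\bu_i^\Pi)\bw=0$ by \cref{eq:projection-orthogonality-r}. Hence
\[
    -m_t\sigma_t^{-2}\br_i^\top\Delta=-m_t\sum_{m\ge2}\sigma_t^{m-1}\bnu_i^\top\bz_{i,m}+(\text{remainder}).
\]
Reindexing with $\ell=m-1$ gives exactly the first summand of $\Psi_{i,\ell}$ for $\ell=1,\dots,\floor{\beta}$. The terms with $\ell\ge\floor{\beta}+1$, together with the Taylor remainder, go into $\Rem_i^{(F)}$.

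\textbf{Quadratic term.} Expand $\|\Delta\|_2^2=\sum_{m_1,m_2}\sigma_t^{m_1+m_2}\bz_{i,m_1}^\top\bz_{i,m_2}+\dots$. The $m_1=m_2=1$ term gives $\sigma_t^2\bw^\top\bG_i^\Pi\bw$, hence the leading $\tfrac{m_t^2}{2}\bw^\top\bG_i^\Pi\bw$. Grouping by $m_1+m_2=\ell+2$ gives the second summand of $\Psi_{i,\ell}$. Every $\Psi_{i,\ell}$ is then visibly affine in $\bnu$.

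\textbf{Growth bounds.} $\bz_{i,m}$ is homogeneous of degree $m$ in $\bw$ with coefficients bounded by $\mathfrak C_{\gM}$, and $m_t\le1$. So $|\Psi_{i,\ell}|\le C_\ell(\|\bnu\|_2\|\bw\|_2^{\ell+1}+\|\bw\|_2^{\ell+2})$, which is at most $C_\ell(1+\|\bnu\|_2)(1+\|\bw\|_2^{\ell+2})$.

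\textbf{Remainder (main obstacle).} The point that needs care is that the remainder must carry the factor $\sigma_t^{\floor{\beta}+1}$ uniformly in $\bw$, even though $\sigma_t\bw$ ranges over a box of fixed size, so $\|\bw\|_2$ can be as large as $\sigma_t^{-1}$.

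For the cross part, the leftover terms are $\sigma_t^{m-1}\bnu^\top\bz_{i,m}$ with $m\ge\floor{\beta}+2$, plus $\sigma_t^{M}\bnu^\top\bE_M$. Each of these is $\sigma_t^{\floor{\beta}+1}$ times a quantity bounded by $C\|\bnu\|_2(1+\|\bw\|_2^{M+1})$, using $\sigma_t\le1$ to absorb the extra powers.

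For the quadratic part, the leftover terms have total degree $m_1+m_2\ge\floor{\beta}+3$ (including cross products with $\bE_M$). Each equals $\sigma_t^{m_1+m_2-2}$ times a polynomial of degree at most $2M+2=2\floor{\beta}+6$ in $\|\bw\|_2$. Pulling out $\sigma_t^{\floor{\beta}+1}$ leaves nonnegative powers of $\sigma_t\le1$.

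This yields the stated bound $C(1+\|\bnu\|_2)(1+\|\bw\|_2^{2\floor{\beta}+6})$. The constants depend only on $\mathfrak C_{\gM}$ and $\beta$, not on $(\bx,t,\bw)$.
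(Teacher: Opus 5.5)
Your proposal is correct and follows essentially the same route as the paper's proof. Both expand $\bz_i$ at $\bu_i^\Pi$ to order $\floor{\beta}+2$ and expand $\|\br_i-m_t\Delta\|_2^2$; the first-order cross term is killed by $\bJ_i(\bu_i^\Pi)^\top\br_i=0$, the sums are reindexed via $\ell=m-1$ and $\ell=m_1+m_2-2$, and the leftover pairs and Taylor remainder go into $\sigma_t^{\floor{\beta}+1}\Rem_i^{(F)}$ using $\sigma_t\le1$, with the same polynomial degrees $\floor{\beta}+3$ (cross part) and $2\floor{\beta}+6$ (quadratic part) valid for every $\bw$ whose segment stays in $\Box_i$.
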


\begin{proof}

\textbf{Step 1: Taylor expansion of the chart map.}
Because $\Box_i\Subset\phi_i(U_i)$ and $\bz_i$ is smooth on $\phi_i(U_i)$, all derivatives of $\bz_i$ up to order $\floor{\beta}+3$ are uniformly bounded on $\Box_i$.
Since both $\bu_i^\Pi$ and $\bu_i^\Pi+\sigma_t\bw$ lie in $\Box_i$, Taylor's theorem gives
\begin{align*}
    \bz_i(\bu_i^\Pi+\sigma_t\bw)
    &=
    \bz_i(\bu_i^\Pi)
    +
    \sum_{m=1}^{\floor{\beta}+2}
    \sigma_t^m
    \bz_{i,m}(\bu_i^\Pi,\bw)
    +
    \sigma_t^{\floor{\beta}+3}
    \Rem_i^{(z)}(\bu_i^\Pi,\bw,t)
    \\
    &=
    \Pi_{\gM}(\bxi)
    +
    \sum_{m=1}^{\floor{\beta}+2}
    \sigma_t^m
    \bz_{i,m}(\bu_i^\Pi,\bw)
    +
    \sigma_t^{\floor{\beta}+3}
    \Rem_i^{(z)}(\bu_i^\Pi,\bw,t).
\end{align*}
The remainder satisfies
\begin{equation}
    \|\Rem_i^{(z)}(\bu_i^\Pi,\bw,t)\|_2
    \le
    C(1+\|\bw\|_2^{\floor{\beta}+3}).
    \label{eq:z_i:remainder-bound}
\end{equation}

Define
\begin{equation}
    \Delta_i(\bw)
    :=
    \bz_i(\bu_i^\Pi+\sigma_t\bw)-\Pi_{\gM}(\bxi).
    \label{eq:def:Delta_i}
\end{equation}
Then
\begin{equation}
    \Delta_i(\bw)
    =
    \sum_{m=1}^{\floor{\beta}+2}
    \sigma_t^m
    \bz_{i,m}(\bu_i^\Pi,\bw)
    +
    \sigma_t^{\floor{\beta}+3}
    \Rem_i^{(z)}(\bu_i^\Pi,\bw,t).
    \label{eq:Delta_i:expansion}
\end{equation}

\textbf{Step 2: Exact algebraic expansion of the phase.}
By the definition of $F_{i,\bx,t}$,
\begin{align*}
    2F_{i,\bx,t}(\bu_i^\Pi+\sigma_t\bw)
    &=
    \|\bx-m_t\bz_i(\bu_i^\Pi+\sigma_t\bw)\|_2^2
    \\
    &=
    \|\bx-m_t\Pi_{\gM}(\bxi)-m_t\Delta_i(\bw)\|_2^2
    \\
    &=
    \|\br_i-m_t\Delta_i(\bw)\|_2^2
    \\
    &=
    \|\br_i\|_2^2
    -
    2m_t\br_i^\top\Delta_i(\bw)
    +
    m_t^2\|\Delta_i(\bw)\|_2^2.
\end{align*}
Therefore
\begin{equation}
    \frac{F_{i,\bx,t}(\bu_i^\Pi+\sigma_t\bw)}{\sigma_t^2}
    =
    \frac{\|\br_i\|_2^2}{2\sigma_t^2}
    -
    \frac{m_t}{\sigma_t^2}
    \br_i^\top\Delta_i(\bw)
    +
    \frac{m_t^2}{2\sigma_t^2}
    \|\Delta_i(\bw)\|_2^2.
    \label{eq:F-i-x-t:expansion}
\end{equation}

The first Taylor term is
\begin{equation*}
    \bz_{i,1}(\bu_i^\Pi,\bw)
    =
    \bJ_i(\bu_i^\Pi)\bw.
\end{equation*}
By \cref{eq:projection-orthogonality-r},
\begin{equation}
    \br_i^\top\bz_{i,1}(\bu_i^\Pi,\bw)
    =
    \br_i^\top\bJ_i(\bu_i^\Pi)\bw
    =
    \bw^\top\bJ_i(\bu_i^\Pi)^\top\br_i
    =
    0.
    \label{eq:linear-term-vanishes}
\end{equation}
Using $\br_i=\sigma_t\bnu_i$ and \cref{eq:Delta_i:expansion,eq:linear-term-vanishes}, we get
\begin{align}
    -\frac{m_t}{\sigma_t^2}
    \br_i^\top\Delta_i(\bw)
    &=
    -m_t
    \sum_{m=2}^{\floor{\beta}+2}
    \sigma_t^{m-1}
    \bnu_i^\top
    \bz_{i,m}(\bu_i^\Pi,\bw)
    -
    m_t\sigma_t^{\floor{\beta}+2}
    \bnu_i^\top
    \Rem_i^{(z)}(\bu_i^\Pi,\bw,t)
    \notag \\
    &=
    -m_t
    \sum_{\ell=1}^{\floor{\beta}}
    \sigma_t^\ell
    \bnu_i^\top
    \bz_{i,\ell+1}(\bu_i^\Pi,\bw)
    +
    \sigma_t^{\floor{\beta}+1}
    \Rem_i^{(\rm cr)}(\bu_i^\Pi,\bnu_i,\bw,t).
    \label{eq:cross-term:expansion}
\end{align}
Here the term with $m=\floor{\beta}+2$ has order $\sigma_t^{\floor{\beta}+1}$ and is included in the remainder, together with the Taylor remainder.
Hence, using \cref{eq:z_i:remainder-bound} and $\sigma_t\le1$,
\begin{equation}
    |\Rem_i^{(\rm cr)}(\bu,\bnu,\bw,t)|
    \le
    C(1+\|\bnu\|_2)(1+\|\bw\|_2^{\floor{\beta}+3}).
    \label{eq:cross-term:remainder-bound}
\end{equation}

\textbf{Step 3: Expansion of the quadratic term.}
Using \cref{eq:Delta_i:expansion},
\begin{align*}
    \frac{m_t^2}{2\sigma_t^2}
    \|\Delta_i(\bw)\|_2^2
    &=
    \frac{m_t^2}{2}
    \sum_{m_1,m_2=1}^{\floor{\beta}+2}
    \sigma_t^{m_1+m_2-2}
    \bz_{i,m_1}(\bu_i^\Pi,\bw)^\top
    \bz_{i,m_2}(\bu_i^\Pi,\bw)
    +
    \sigma_t^{\floor{\beta}+1}
    \Rem_i^{(\rm quad)}(\bu_i^\Pi,\bw,t).
\end{align*}
The pair $(m_1,m_2)=(1,1)$ gives
\begin{align*}
    \frac{m_t^2}{2}
    \bz_{i,1}(\bu_i^\Pi,\bw)^\top
    \bz_{i,1}(\bu_i^\Pi,\bw)
    &=
    \frac{m_t^2}{2}
    \bw^\top
    \bJ_i(\bu_i^\Pi)^\top
    \bJ_i(\bu_i^\Pi)
    \bw
    =
    \frac{m_t^2}{2}
    \bw^\top\bG_i^\Pi\bw.
\end{align*}
All other principal pairs satisfy $m_1+m_2-2\ge1$.
Reindexing by $\ell=m_1+m_2-2$ and retaining powers $\ell=1,\dots,\floor{\beta}$ gives
\begin{align}
    \frac{m_t^2}{2\sigma_t^2}
    \|\Delta_i(\bw)\|_2^2
    &=
    \frac{m_t^2}{2}
    \bw^\top\bG_i^\Pi\bw
    +
    \frac{m_t^2}{2}
    \sum_{\ell=1}^{\floor{\beta}}
    \sigma_t^\ell
    \sum_{\substack{m_1+m_2=\ell+2\\m_1,m_2\ge1}}
    \bz_{i,m_1}(\bu_i^\Pi,\bw)^\top
    \bz_{i,m_2}(\bu_i^\Pi,\bw)
    \notag \\
    &\quad+
    \sigma_t^{\floor{\beta}+1}
    \Rem_i^{(\rm quad)}(\bu_i^\Pi,\bw,t).
    \label{eq:quadratic-term:expansion}
\end{align}
The omitted Taylor pairs have total power at least $\sigma_t^{\floor{\beta}+1}$ after division by $\sigma_t^2$, and the terms involving $\Rem_i^{(z)}$ have even higher powers.
Therefore
\begin{equation}
    |\Rem_i^{(\rm quad)}(\bu,\bw,t)|
    \le
    C(1+\|\bw\|_2^{2\floor{\beta}+6}).
    \label{eq:quadratic-term:remainder-bound}
\end{equation}

\textbf{Step 4: Collect the coefficients.}
Substituting \cref{eq:cross-term:expansion,eq:quadratic-term:expansion} into \cref{eq:F-i-x-t:expansion} gives \cref{eq:centered-phase-expansion}, with $\Psi_{i,\ell}$ defined by \cref{eq:def:Psi-i-l} and
\begin{equation*}
    \Rem_i^{(F)}
    :=
    \Rem_i^{(\rm cr)}
    +
    \Rem_i^{(\rm quad)}.
\end{equation*}
The bounds in \cref{eq:Psi-growth} follow from the multilinearity of $\oD^m\bz_i(\bu)[\bw^{\otimes m}]$, the uniform derivative bounds on $\Box_i$, and \cref{eq:cross-term:remainder-bound,eq:quadratic-term:remainder-bound}.
Finally, $\Psi_{i,\ell}$ is affine in $\bnu$ because its first term is linear in $\bnu$ and its second term is independent of $\bnu$.
\end{proof}

By the convention in \cref{def:manifold:Holder-smooth}, every coefficient retained in the neural approximation has strictly positive regularity $\beta-q>0$.

\begin{proposition}[Local Laplace expansion for de-Gaussianized chart integrals]
\label{prop:laplace-local-chart}
Assume the small-noise branch is chosen so that, for every $t \in \gI_{\rm sm}(\varepsilon)$, we have $\delta_{\rm act}(\varepsilon,t)<\eta_\Pi/2$.
Fix $i \in \{1, \dots, C_{\gM}\}$ and $R_i \in \{\gQ_i, \gP_{i,1}, \dots, \gP_{i,D}\}$.
Then, there exist coefficient functions
\begin{equation*}
    A_{i,q,\blambda}^{(R)}:
    K_i^\circ\times[\tdown,\tup]\to\R,
    \qquad
    0\le q\le \floor{\beta},
    \qquad
    \blambda\in\N^D,\quad \|\blambda\|_1\le q,
\end{equation*}
such that, for every $t\in\gI_{\rm sm}(\varepsilon)$ and every $(\bx,t)\in\gK_i(\varepsilon)$,
\begin{align}
    R_i(\bx,t)
    &=
    \exp\Bigl(
      -\frac{\|\br_i(\bx,t)\|_2^2}{2\sigma_t^2}
    \Bigr)
    \sum_{q=0}^{\floor{\beta}}
    \sigma_t^q
    \sum_{\|\blambda\|_1\le q}
    A_{i,q,\blambda}^{(R)}
    \bigl(\bu_i^\Pi(\bx,t),t\bigr)
    \bnu_i(\bx,t)^{\blambda}
    +
    \overline{\Rem}_i^{(R)}(\bx,t),
    \label{eq:laplace-expansion-local-chart}
\end{align}
where
\begin{equation}
    \bigl|
      \overline{\Rem}_i^{(R)}(\bx,t)
    \bigr|
    \leq
    C\sigma_t^\beta
    \exp\Bigl(
      -c\frac{\|\br_i(\bx,t)\|_2^2}{\sigma_t^2}
    \Bigr).
    \label{eq:laplace-remainder-local-chart}
\end{equation}
Moreover,
\begin{equation*}
    A_{i,q,\blambda}^{(R)}(\cdot,t)
    \in
    \gH^{\beta-q}(K_i^\circ,C_{\rm coef}),
    \qquad
    t\in[\tdown,\tup],
\end{equation*}
where $C_{\rm coef}$ depends only on $B_a$, $\beta$, $\underline{m},\overline{m}$, and $\mathfrak{C}_{\gM}$.
\end{proposition}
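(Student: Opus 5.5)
We change variables $\bu=\bu_i^\Pi+\sigma_t\bw$ in the chart integral $R_i(\bx,t)=\sigma_t^{-d}\int e^{-F_{i,\bx,t}(\bu)/\sigma_t^2}a_i^{(R)}(\bu)\,\od\bu$. The Jacobian $\sigma_t^d$ cancels the prefactor, and \cref{lem:centered-phase-expansion} factors out the common normal Gaussian $e^{-\|\br_i\|_2^2/(2\sigma_t^2)}$. What remains is
\[
\int e^{-\frac{m_t^2}{2}\bw^\top\bG_i^\Pi\bw}\,
e^{-\sum_{\ell\le\floor{\beta}}\sigma_t^\ell\Psi_{i,\ell}-\sigma_t^{\floor{\beta}+1}\Rem_i^{(F)}}\,
a_i^{(R)}(\bu_i^\Pi+\sigma_t\bw)\,\od\bw .
\]
We split this integral into a core region $\|\bw\|_2\le \rho_t:=r_i/\sigma_t$ and a tail. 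The radius $r_i$ from \cref{eq:def:r-i-laplace} keeps $\bu_i^\Pi+\sigma_t\bw$ inside $\Box_i$ because $\bu_i^\Pi\in K_i^\circ$ by \cref{lem:buffered-active-chart}.

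\emph{Tail.} Here we work with the undecomposed phase. The Hessian bound \cref{lem:invertible:proj} together with the Lipschitz Hessian constant $H_i$ from \cref{eq:def:H-i-laplace} shows that on the $r_i$-ball $F/\sigma_t^2-\|\br_i\|^2/(2\sigma_t^2)\ge c\lambda_i^{(-)}\|\bw\|_2^2$. Outside the ball we use the reach gap \cref{lem:reach-tube-objective-gap}, which gives $\|\bxi-\by\|^2-\|\bxi-\Pi_\gM\bxi\|^2\ge\frac12\|\by-\Pi_\gM\bxi\|^2$, together with the bi-Lipschitz bound \cref{eq:chart-bilip}. Both give decay $e^{-c r_i^2/\sigma_t^2}$, which is at most $C\sigma_t^\beta$. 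The bound still carries the factor $e^{-\|\br_i\|^2/(2\sigma_t^2)}$, so it fits inside \cref{eq:laplace-remainder-local-chart}.

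\emph{Core.} On the core region we do two expansions. First, we Taylor-expand $a_i^{(R)}(\bu_i^\Pi+\sigma_t\bw)$ to order $\floor{\beta}$ in $\sigma_t\bw$; the Hölder remainder is $B_a\sigma_t^\beta\|\bw\|^\beta$. Second, we expand the exponential $e^{-\sum\sigma_t^\ell\Psi_{i,\ell}}$ as a finite power series in $\sigma_t$ up to total degree $\floor{\beta}$. The key difficulty is that $\bnu_i$ is unbounded on the active set, and the arguments of the exponential can be large where $\|\bw\|\sigma_t$ is not small.

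To control this, we do not require smallness pointwise. Instead we take $\varepsilon$ small, so that $\sigma_t$ is small, and restrict further to $\|\bw\|\le\sigma_t^{-1/3}$ for the Taylor step; the region $\sigma_t^{-1/3}<\|\bw\|\le\rho_t$ is handled like the tail. On the restricted region $|\sigma_t^\ell\Psi_{i,\ell}|\le C\sigma_t^{1/3}(1+\|\bnu_i\|)$. The scalar Taylor bound \cref{lem:scalar-exp-taylor-bounds} then gives a remainder of size $C\sigma_t^{\beta}(1+\|\bnu_i\|)^{C}(1+\|\bw\|)^{C}e^{C\sigma_t^{1/3}\|\bnu_i\|}$. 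The polynomial and subexponential growth in $\bnu_i$ is absorbed by trading $e^{-\|\br_i\|^2/(2\sigma_t^2)}$ for $e^{-c\|\bnu_i\|^2}$ with $c<1/2$. This trade is exactly why \cref{eq:laplace-remainder-local-chart} has a constant $c$ rather than $1/2$. The $\bw$-growth is integrated against the Gaussian $e^{-c\|\bw\|^2}$.

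\emph{Coefficients.} After both expansions, each $\sigma_t^q$ coefficient is a polynomial in $\bnu_i$ of degree $\le q$, since each $\Psi_{i,\ell}$ is affine in $\bnu$ and carries weight $\sigma_t^\ell$. We integrate $\bw$-monomials against $e^{-\frac{m_t^2}{2}\bw^\top\bG_i(\bu)\bw}$ over all of $\R^d$; the truncation to the ball costs only an exponentially small error. This yields explicit Gaussian moments. The resulting coefficient $A_{i,q,\blambda}^{(R)}(\bu,t)$ is a finite sum of terms of the form
\[
\partial^{\balpha}a_i^{(R)}(\bu)\times(\text{smooth functions of }\bD^m\bz_i(\bu),\ \bG_i(\bu)^{-1},\ \det\bG_i(\bu)^{-1/2},\ m_t),
\qquad |\balpha|\le q .
\]
The smooth factors are bounded in $C^{\floor\beta+2}$ by $\mathfrak C_\gM$, and $\partial^\balpha a$ lies in $\gH^{\beta-|\balpha|}$. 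The Hölder algebra property (used as in \cref{lem:a-i:regularity}) then gives $A_{i,q,\blambda}^{(R)}\in\gH^{\beta-q}(K_i^\circ,C_{\rm coef})$.

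\emph{Main obstacle.} The hardest step is making the remainder uniform in the unbounded normal variable $\bnu_i$. The argument rests on two points: every $\bnu$-dependent error must retain a Gaussian factor $e^{-c\|\bnu_i\|^2}$, and the exponential Taylor step needs the truncation $\|\bw\|\le\sigma_t^{-1/3}$.
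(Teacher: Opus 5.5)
Your argument is correct in outline but differs from the paper's in both the tail and the core.

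\emph{Tail.} You combine \cref{lem:reach-tube-objective-gap} with the lower bound in \cref{eq:chart-bilip}. This gives the global estimate $F_{i,\bx,t}(\bu_i^\Pi+\sigma_t\bw)/\sigma_t^2-\|\br_i\|_2^2/(2\sigma_t^2)\ge\frac14\underline m^2(L_i^{(-)})^2\|\bw\|_2^2$ on all of $\operatorname{supp}a_i^{(R)}$. The paper instead uses the Hessian bound on the $r_i$-ball plus a compactness argument: uniqueness of the reach projection gives some unspecified gap $c_1>0$ away from $\bu_i^\Pi$. That route is less explicit and produces an extra $\sigma_t^{-d}e^{-c/\sigma_t^2}$ term. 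Your device is the one the paper itself uses in Step~3 of \cref{lem:H-localized-degaussianized-small-noise-chart}.

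\emph{Core.} The paper never faces an unbounded $\bnu_i$. On $\gK_i(\varepsilon)$ one has $\|\bnu_i\|_2\le c_\star\sqrt{\log\varepsilon^{-1}}$, and for $t\in\gI_{\rm sm}(\varepsilon)$ this is $\lesssim\sqrt{\log(e/\sigma_t)}$. With the logarithmic window $\|\bw\|_2\le A\sqrt{\log(e/\sigma_t)}$, the whole correction then obeys $|Z_\sigma|\le 1/2$ once the small-noise threshold is lowered. A plain Taylor expansion plus $e^{-\|\bnu_i\|_2^2/2}(1+\|\bnu_i\|_2^{P})\le Ce^{-c\|\bnu_i\|_2^2}$ finishes the argument. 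Your polynomial window, with $e^{C\sigma_t^{c'}\|\bnu_i\|_2}$ absorbed into the normal Gaussian, does not need that a priori residual bound. It is therefore more robust, but the exponential step needs more care.

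As written, that step has three slips.
\begin{enumerate}[(i)]
\item At $\|\bw\|_2=\sigma_t^{-1/3}$ the $\bnu$-free part of $\sigma_t\Psi_{i,1}$, namely $\sigma_t m_t^2\bz_{i,1}^\top\bz_{i,2}\sim\sigma_t\|\bw\|_2^3$, is of order one, not $O(\sigma_t^{1/3})$.
\item With the growth bound in \cref{eq:Psi-growth}, $\sigma_t^{\floor{\beta}+1}\Rem_i^{(F)}$ is not even bounded on that window when $\floor{\beta}\le 2$.
\item \cref{eq:exp-small-argument-taylor} requires $|z|\le a\le 1$, which fails for large $\|\bnu_i\|_2$.
\end{enumerate}

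To repair these, take the window $\|\bw\|_2\le\sigma_t^{-\gamma}$ with $\gamma<1/6$. Then every term of the correction is $O(\sigma_t^{c'}(1+\|\bnu_i\|_2))$ for some $c'>0$. Alternatively, keep $\gamma=1/3$ but bound the exact correction by $|Z|\le C\sigma_t(\|\bnu_i\|_2\|\bw\|_2^2+\|\bw\|_2^3+\sigma_t\|\bw\|_2^4)$, obtained from a second-order expansion of $\bz_i$. In either case, replace the small-argument lemma by the general remainder $|e^{z}-\sum_{k\le K}z^k/k!|\le|z|^{K+1}e^{|z|}/(K+1)!$ with $K=\floor{\beta}$, together with $|Z|\le C\sigma_t(1+\|\bnu_i\|_2)(1+\|\bw\|_2)^{M}$. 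The region $\sigma_t^{-\gamma}<\|\bw\|_2\le r_i/\sigma_t$ is still handled by your quadratic lower bound. With these changes your conclusion holds as stated.
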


\begin{proof}
We prove the result for a generic $R_i\in\{\gQ_i,\gP_{i,1},\dots,\gP_{i,D}\}$, and write $a_i^{(R)}$ for the corresponding scalar amplitude.
Fix $t\in\gI_{\rm sm}(\varepsilon)$ and $(\bx,t)\in\gK_i(\varepsilon)$, and set
\begin{equation*}
    \bu^\Pi:=\bu_i^\Pi(\bx,t),\qquad
    \br:=\br_i(\bx,t),\qquad
    \bnu:=\bnu_i(\bx,t)=\sigma_t^{-1}\br,\qquad
    \bG^\Pi:=\bG_i(\bu^\Pi).
\end{equation*}
All constants below may depend on $\underline{m},\overline{m}, \beta$, and $\mathfrak{C}_{\gM}$, but not on $(\bx,t)$ and $\varepsilon$.

By definition,
\begin{equation*}
    R_i(\bx,t)
    =
    \sigma_t^{-d}
    \int_{\phi_i(U_i)}
      \exp\Bigl(-F_{i,\bx,t}(\bu)/\sigma_t^2\Bigr)
      a_i^{(R)}(\bu)
    \od\bu .
\end{equation*}
With $\bu=\bu^\Pi+\sigma_t\bw$ and
\begin{equation*}
    \Omega_{i,\bx,t}:=
    \sigma_t^{-1}\bigl(\phi_i(U_i)-\bu^\Pi\bigr),
\end{equation*}
the factor $\sigma_t^{-d}$ cancels and
\begin{equation}
    R_i(\bx,t)
    =
    \int_{\Omega_{i,\bx,t}}
      \exp\Bigl(
        -F_{i,\bx,t}(\bu^\Pi+\sigma_t\bw)/\sigma_t^2
      \Bigr)
      a_i^{(R)}(\bu^\Pi+\sigma_t\bw)
    \od\bw .
    \label{eq:laplace-proof-recentered-integral}
\end{equation}

\textbf{Step 1: Uniform quadratic lower bound for the phase.}
Define
\begin{equation*}
    \bxi:=\bx/m_t,
    \qquad
    \Phi_i(\bu;\bxi):=\frac12\|\bxi-\bz_i(\bu)\|_2^2.
\end{equation*}
Since $F_{i,\bx,t}(\bu)=m_t^2\Phi_i(\bu;\bxi)$ and $\bz_i(\bu^\Pi)=\Pi_{\gM}(\bxi)$,
\begin{equation*}
    F_{i,\bx,t}(\bu^\Pi)=\frac12\|\br\|_2^2 .
\end{equation*}
By \cref{lem:buffered-active-chart}, $\bu^\Pi\in K_i^\circ\Subset\Box_i$ and
\begin{equation*}
    \nabla_\bu\Phi_i(\bu^\Pi;\bxi)
    =
    \bJ_i(\bu^\Pi)^\top\bigl(\bz_i(\bu^\Pi)-\bxi\bigr)
    =
    0.
\end{equation*}
Moreover, by \cref{lem:invertible:proj},
\begin{equation*}
    \nabla_\bu^2\Phi_i(\bu^\Pi;\bxi)
    \succeq
    \frac12\lambda_i^{(-)}\bI_d .
\end{equation*}

The active condition and $\delta_{\rm act}(\varepsilon,t)<\eta_\Pi/2$ imply that $\bxi\in\mathcal X_i$.
Recall the constants $H_i$ and $r_i$ from \cref{eq:def:H-i-laplace,eq:def:r-i-laplace}.
Then, whenever $\|\bu-\bu^\Pi\|_2\le r_i$,
\begin{equation*}
    \nabla_\bu^2\Phi_i(\bu;\bxi)
    \succeq
    \frac14\lambda_i^{(-)}\bI_d .
\end{equation*}
Taylor's theorem gives
\begin{equation*}
    \Phi_i(\bu;\bxi)-\Phi_i(\bu^\Pi;\bxi)
    \ge
    \frac{\lambda_i^{(-)}}8\|\bu-\bu^\Pi\|_2^2 .
\end{equation*}
Substituting $\bu=\bu^\Pi+\sigma_t\bw$ yields, for $\sigma_t\|\bw\|_2\le r_i$,
\begin{equation}
    \frac{F_{i,\bx,t}(\bu^\Pi+\sigma_t\bw)}{\sigma_t^2}
    \ge
    \frac{\|\br\|_2^2}{2\sigma_t^2}
    +
    c_0\|\bw\|_2^2
    \label{eq:laplace-coercive-bound}
\end{equation}
for some $c_0>0$.

Next let $S_i^{(R)}:=\operatorname{supp}a_i^{(R)}\Subset\phi_i(U_i)$ and consider
\begin{equation*}
    \mathcal X_i^\circ
    :=
    \{\bzeta\in\mathcal X_i:
      \Pi_{\gM}(\bzeta)\in\phi_i^{-1}(K_i^\circ)\}.
\end{equation*}
This set is compact because $\mathcal X_i$ is compact, the reach projection is continuous on the reach tube, and $\phi_i^{-1}(K_i^\circ)$ is compact in $U_i$.
On $\mathcal X_i^\circ$, write
\begin{equation*}
    \bu_i^\Pi(\bzeta):=\phi_i(\Pi_{\gM}(\bzeta)).
\end{equation*}
Define
\begin{equation*}
    \gC_i^{(R)}
    :=
    \Bigl\{
      (\bzeta,\bu)\in\mathcal X_i^\circ\times S_i^{(R)}:
      \|\bu-\bu_i^\Pi(\bzeta)\|_2\ge r_i
    \Bigr\}.
\end{equation*}
If this compact set is nonempty, the continuous function
\begin{equation*}
    (\bzeta,\bu)\mapsto
    \Phi_i(\bu;\bzeta)
    -
    \Phi_i\bigl(\bu_i^\Pi(\bzeta);\bzeta\bigr)
\end{equation*}
is strictly positive on it.
Otherwise $\bz_i(\bu)$ would be another nearest point on $\gM$ to $\bzeta$, contradicting uniqueness of the metric projection in the reach tube.
Hence there is $c_1>0$ such that, for active $\bxi$,
\begin{equation*}
    \Phi_i(\bu;\bxi)-\Phi_i(\bu^\Pi;\bxi)
    \ge c_1
\end{equation*}
whenever $\bu\in S_i^{(R)}$ and $\|\bu-\bu^\Pi\|_2\ge r_i$.
Therefore, on this part of the support,
\begin{equation}
    \frac{F_{i,\bx,t}(\bu)}{\sigma_t^2}
    \ge
    \frac{\|\br\|_2^2}{2\sigma_t^2}
    +
    \frac{\underline{m}^2c_1}{\sigma_t^2}.
    \label{eq:laplace-compact-away-phase-bound}
\end{equation}

\textbf{Step 2: Split the integral into a local window and a tail.}
Let
\begin{equation*}
    R_\sigma:=A\sqrt{\log(e/\sigma_t)},
\end{equation*}
where $A>0$ will be chosen large enough.
By decreasing the small-noise threshold if necessary, assume
\begin{equation*}
    \sigma_tR_\sigma\le r_i\le d_i^\Box/2 .
\end{equation*}
Since $\bu^\Pi\in K_i^\circ$, this implies
\begin{equation*}
    \bu^\Pi+\sigma_t\bw\in\Box_i\subset\phi_i(U_i),
    \qquad
    \|\bw\|_2\le R_\sigma.
\end{equation*}
Thus the local part of \cref{eq:laplace-proof-recentered-integral} is over the full ball $\{\|\bw\|_2\le R_\sigma\}$.

Write $R_i=I_{\rm loc}+I_{\rm tail}$, where $I_{\rm loc}$ is the integral over $\|\bw\|_2\le R_\sigma$.
On $R_\sigma<\|\bw\|_2\le r_i/\sigma_t$, use \cref{eq:laplace-coercive-bound}; on $\|\bw\|_2>r_i/\sigma_t$, use \cref{eq:laplace-compact-away-phase-bound} on the support of $a_i^{(R)}$.
Since $a_i^{(R)}$ is bounded and $S_i^{(R)}$ has finite Lebesgue measure,
\begin{align}
    |I_{\rm tail}|
    &\le
    C
    e^{-\|\br\|_2^2/(2\sigma_t^2)}
    \int_{\|\bw\|_2>R_\sigma}
      e^{-c_0\|\bw\|_2^2}
    \od\bw
    +
    C\sigma_t^{-d}
    e^{-\|\br\|_2^2/(2\sigma_t^2)}
    e^{-c/\sigma_t^2}
    \notag\\
    &\le
    C
    e^{-\|\br\|_2^2/(2\sigma_t^2)}
    e^{-c_2R_\sigma^2}
    +
    C\sigma_t^{-d}
    e^{-\|\br\|_2^2/(2\sigma_t^2)}
    e^{-c/\sigma_t^2}.
    \label{eq:laplace-tail-bound-raw}
\end{align}
Choose $A$ so large that $e^{-c_2R_\sigma^2}\le C\sigma_t^\beta$.
Also, after decreasing the small-noise threshold if necessary, $\sigma_t^{-d}e^{-c/\sigma_t^2}\le C\sigma_t^\beta$.
Hence
\begin{equation}
    |I_{\rm tail}|
    \le
    C\sigma_t^\beta
    \exp\Bigl(
      -c\frac{\|\br\|_2^2}{\sigma_t^2}
    \Bigr).
    \label{eq:laplace-tail-bound}
\end{equation}

\textbf{Step 3: Expand the phase on the local window.}
On $\|\bw\|_2\le R_\sigma$, \cref{lem:centered-phase-expansion} and the $\floor{\beta}$-convention give the truncated form
\begin{align}
    \frac{F_{i,\bx,t}(\bu^\Pi+\sigma_t\bw)}{\sigma_t^2}
    &=
    \frac{\|\br\|_2^2}{2\sigma_t^2}
    +
    \frac{m_t^2}{2}\bw^\top\bG^\Pi\bw
    +
    \sum_{\ell=1}^{\floor{\beta}}\sigma_t^\ell
    \Psi_{i,\ell}(\bu^\Pi,\bnu,\bw,t)
    +
    \sigma_t^\beta
    \widehat\Rem_i^{(F)}(\bu^\Pi,\bnu,\bw,t).
    \label{eq:phase-expansion-local}
\end{align}
Indeed, the remainder from \cref{lem:centered-phase-expansion} has order $\sigma_t^{\floor{\beta}+1}$.
Since $\floor{\beta}<\beta\le \floor{\beta}+1$ and $\sigma_t\le1$, this is bounded by $\sigma_t^\beta$ times the same polynomial; when $\beta\in\N_+$, the two powers coincide.
Thus
\[
    |\widehat\Rem_i^{(F)}(\bu^\Pi,\bnu,\bw,t)|
    \le
    C(1+\|\bnu\|_2)(1+\|\bw\|_2^{2\floor{\beta}+6})
\]
on the local window.
Set
\begin{equation*}
    Z_\sigma
    :=
    -
    \sum_{\ell=1}^{\floor{\beta}}\sigma_t^\ell
    \Psi_{i,\ell}(\bu^\Pi,\bnu,\bw,t)
    -
    \sigma_t^\beta\widehat\Rem_i^{(F)}(\bu^\Pi,\bnu,\bw,t).
\end{equation*}
Then
\begin{equation*}
    \exp\Bigl(
      -F_{i,\bx,t}(\bu^\Pi+\sigma_t\bw)/\sigma_t^2
    \Bigr)
    =
    e^{-\|\br\|_2^2/(2\sigma_t^2)}
    e^{-\frac{m_t^2}{2}\bw^\top\bG^\Pi\bw}
    e^{Z_\sigma}.
\end{equation*}
Since $(\bx,t)\in\gK_i(\varepsilon)$, $\|\bnu\|_2\le c_\star\sqrt{\log\varepsilon^{-1}}$.
On $\gI_{\rm sm}(\varepsilon)$, $\log\varepsilon^{-1}\lesssim\log(e/\sigma_t)$.
Combining this with $\|\bw\|_2\le R_\sigma$ and \cref{eq:Psi-growth}, we obtain $|Z_\sigma|\le1/2$ after decreasing the small-noise threshold if necessary.

\textbf{Step 4: Expand the exponential correction.}
Taylor expansion of $e^{Z_\sigma}$, collecting powers of $\sigma_t$ up to order $\floor{\beta}$, gives
\begin{equation*}
    e^{Z_\sigma}
    =
    \sum_{\ell=0}^{\floor{\beta}}
    \sigma_t^\ell
    \gE_{i,\ell}(\bu^\Pi,\bnu,\bw,t)
    +
    \sigma_t^\beta
    \Rem_i^{(\exp)}(\bu^\Pi,\bnu,\bw,t),
\end{equation*}
where $\gE_{i,0}=1$, and for $1\le\ell\le \floor{\beta}$,
\begin{equation*}
    \gE_{i,\ell}(\bu,\bnu,\bw,t)
    :=
    \sum_{r=1}^{\ell}
    \frac{(-1)^r}{r!}
    \sum_{\substack{\ell_1+\cdots+\ell_r=\ell\\ \ell_j\ge1}}
    \Psi_{i,\ell_1}(\bu,\bnu,\bw,t)\cdots
    \Psi_{i,\ell_r}(\bu,\bnu,\bw,t).
\end{equation*}
Moreover, on the local window,
\begin{equation*}
    |\Rem_i^{(\exp)}(\bu^\Pi,\bnu,\bw,t)|
    \le
    C(1+\|\bnu\|_2^{P_E})(1+\|\bw\|_2^{M_E})
\end{equation*}
for finite constants $P_E,M_E$.

\textbf{Step 5: Expand the amplitude and multiply expansions.}
Since $a_i^{(R)}\in\gH^\beta(\phi_i(U_i),B_a)$, Taylor's theorem with the $\gH^\beta$ remainder gives
\begin{equation*}
    a_i^{(R)}(\bu^\Pi+\sigma_t\bw)
    =
    \sum_{m=0}^{\floor{\beta}}
    \sigma_t^m
    \gA_{i,m}^{(R)}(\bu^\Pi,\bw)
    +
    \sigma_t^\beta
    \Rem_i^{(\rm amp)}(\bu^\Pi,\bw),
\end{equation*}
where
\begin{equation*}
    \gA_{i,m}^{(R)}(\bu,\bw)
    :=
    \frac1{m!}\oD^m a_i^{(R)}(\bu)[\bw^{\otimes m}],
\end{equation*}
and
\begin{equation*}
    |\gA_{i,m}^{(R)}(\bu,\bw)|\le C(1+\|\bw\|_2^m),
    \qquad
    |\Rem_i^{(\rm amp)}(\bu,\bw)|\le C(1+\|\bw\|_2^\beta).
\end{equation*}
This is Taylor's theorem through order $\floor{\beta}$ with the $\gH^\beta$ remainder.
When $\beta\in\N_+$, the convention $\floor{\beta}=\beta-1$ makes this the standard Lipschitz top-derivative remainder of order $\beta$.
Multiplying gives
\begin{equation*}
    e^{Z_\sigma}
    a_i^{(R)}(\bu^\Pi+\sigma_t\bw)
    =
    \sum_{q=0}^{\floor{\beta}}
    \sigma_t^q
    \gU_{i,q}^{(R)}(\bu^\Pi,\bnu,\bw,t)
    +
    \sigma_t^\beta
    \Rem_i^{(R)}(\bu^\Pi,\bnu,\bw,t),
\end{equation*}
where
\begin{equation*}
    \gU_{i,q}^{(R)}(\bu,\bnu,\bw,t)
    :=
    \sum_{\substack{m+\ell=q\\0\le m,\ell\le \floor{\beta}}}
    \gA_{i,m}^{(R)}(\bu,\bw)
    \gE_{i,\ell}(\bu,\bnu,\bw,t).
\end{equation*}
All omitted products contain either an explicit amplitude or exponential remainder, or have total algebraic order at least $\floor{\beta}+1$.
Since $\floor{\beta}+1\ge\beta$, every omitted product is bounded by $\sigma_t^\beta$ times a fixed polynomial in $(\bnu,\bw)$ on $0<\sigma_t\le1$.
Furthermore,
\begin{equation*}
    |\Rem_i^{(R)}(\bu^\Pi,\bnu,\bw,t)|
    \le
    C(1+\|\bnu\|_2^{P_R})(1+\|\bw\|_2^{M_R})
\end{equation*}
on $\|\bw\|_2\le R_\sigma$.

\textbf{Step 6: Identify the principal coefficients.}
Substituting the expansions into $I_{\rm loc}$ yields
\begin{equation*}
    I_{\rm loc}
    =
    e^{-\|\br\|_2^2/(2\sigma_t^2)}
    \sum_{q=0}^{\floor{\beta}}\sigma_t^q
    \int_{\|\bw\|_2\le R_\sigma}
      e^{-\frac{m_t^2}{2}\bw^\top\bG^\Pi\bw}
      \gU_{i,q}^{(R)}(\bu^\Pi,\bnu,\bw,t)
    \od\bw
    +
    E_{\rm loc}.
\end{equation*}
By the polynomial bound on $\Rem_i^{(R)}$ and uniform ellipticity of $\bG^\Pi$,
\begin{equation*}
    |E_{\rm loc}|
    \le
    C\sigma_t^\beta
    e^{-\|\bnu\|_2^2/2}
    (1+\|\bnu\|_2^{P_R})
    \int_{\R^d}e^{-c_g\|\bw\|_2^2}
      (1+\|\bw\|_2^{M_R})
    \od\bw
    \le
    C\sigma_t^\beta e^{-c\|\bnu\|_2^2}.
\end{equation*}

For $0\le q\le \floor{\beta}$, define the coefficient functions by collecting powers of $\bnu$ in
\begin{equation*}
    \int_{\R^d}
      e^{-\frac{m_t^2}{2}\bw^\top\bG_i(\bu)\bw}
      \gU_{i,q}^{(R)}(\bu,\bnu,\bw,t)
    \od\bw
    =
    \sum_{\|\blambda\|_1\le q}
    A_{i,q,\blambda}^{(R)}(\bu,t)\bnu^{\blambda}.
\end{equation*}
This is valid because $\gU_{i,q}^{(R)}$ is a polynomial in $\bnu$ of degree at most $q$.

The replacement of the local integral by the full integral costs only a tail error.
Indeed,
\begin{equation*}
    |\gU_{i,q}^{(R)}(\bu^\Pi,\bnu,\bw,t)|
    \le
    C(1+\|\bnu\|_2^q)(1+\|\bw\|_2^{M_U}),
\end{equation*}
so, after increasing $A$ if necessary,
\begin{equation*}
\begin{aligned}
    &e^{-\|\br\|_2^2/(2\sigma_t^2)}
    \sigma_t^q
    \left|
    \int_{\|\bw\|_2>R_\sigma}
      e^{-\frac{m_t^2}{2}\bw^\top\bG^\Pi\bw}
      \gU_{i,q}^{(R)}(\bu^\Pi,\bnu,\bw,t)
    \od\bw
    \right|  \\
    &\qquad\le
    C e^{-c\|\bnu\|_2^2}e^{-c_3R_\sigma^2/2}
    \le
    C\sigma_t^\beta e^{-c\|\bnu\|_2^2}.
\end{aligned}
\end{equation*}
Combining this extension error with the bounds for $E_{\rm loc}$ and $I_{\rm tail}$ proves \cref{eq:laplace-expansion-local-chart,eq:laplace-remainder-local-chart}, because $\|\bnu\|_2^2=\|\br\|_2^2/\sigma_t^2$.

\textbf{Step 7: H\"older regularity of the coefficients.}
Each $A_{i,q,\blambda}^{(R)}(\bu,t)$ is a finite linear combination of integrals of terms of the form
\begin{equation*}
    e^{-\frac{m_t^2}{2}\bw^\top\bG_i(\bu)\bw}
    \times
    \text{polynomial in }\bw
    \times
    \text{smooth geometric factors in }\bu
    \times
    \partial^\alpha a_i^{(R)}(\bu),
    \qquad
    |\alpha|\le q.
\end{equation*}
The tensor $\bG_i(\bu)$ is smooth and uniformly positive definite on $K_i^\circ$, so derivatives falling on the Gaussian factor are dominated by $e^{-c\|\bw\|_2^2}$ times a polynomial in $\|\bw\|_2$, uniformly in $\bu\in K_i^\circ$ and $t\in[\tdown,\tup]$.
Hence differentiation under the integral is justified up to the order allowed by the amplitude.
Since $a_i^{(R)}\in\gH^\beta(\phi_i(U_i),B_a)$, the worst regularity occurs when $|\alpha|=q$, giving $\partial^\alpha a_i^{(R)}\in\gH^{\beta-q}$.
Multiplication by smooth geometric factors and integration against the uniformly dominated Gaussian preserve this regularity.
Therefore
\begin{equation*}
    A_{i,q,\blambda}^{(R)}(\cdot,t)
    \in
    \gH^{\beta-q}(K_i^\circ,C_{\rm coef}),
    \qquad t\in[\tdown,\tup].
\end{equation*}
This completes the proof.
\end{proof}

\textbf{Choice of noise-regime thresholds.}
All phrases above of the form ``after decreasing the small-noise threshold if necessary'' are collected in the single threshold $\bar\sigma_{\rm Lap}$ below.
The branch constants $C_{\rm sm}$ and $c_{\rm lg}$ are chosen only after this collection is complete, so the small/large-noise cover and its overlap are not affected by the intermediate reductions.
Since the atlas is finite, choose $C_{\rm Lap}^{\max}\ge1$ to dominate the constants in the Laplace remainder bound \cref{eq:laplace-remainder-local-chart}, uniformly over all charts $i$ and all $R\in\mathcal R$.
Also choose $\bar\sigma_{\rm Lap}\in(0,1]$ small enough that every small-noise requirement invoked in the proof of \cref{prop:laplace-local-chart} holds whenever $0<\sigma_t\le\bar\sigma_{\rm Lap}$: in particular $\sigma_tR_\sigma\le r_i$, the compact-away tail satisfies $\sigma_t^{-d}e^{-c/\sigma_t^2}\lesssim\sigma_t^\beta$, and the exponential correction obeys $|Z_\sigma|\le1/2$, uniformly over all charts and integrands.
Such a number exists because the atlas and the retained coefficients are finite and because $\sigma^{-\ell}e^{-c/\sigma^2}\to0$ and $\sigma^\gamma(\log(e/\sigma))^M\to0$ as $\sigma\downarrow0$ for every fixed $\ell,\gamma,M>0$.
From now on, the constants $C_{\rm sm}$ and $c_{\rm lg}$ in \cref{def:noise-regimes} are fixed so that
\begin{equation}
    C_{\rm sm}\le \bar\sigma_{\rm Lap},
    \qquad
    C_{\rm Lap}^{\max}C_{\rm sm}^{\beta}\le \frac16,
    \qquad
    4c_{\rm lg}\le C_{\rm sm}.
    \label{eq:choice:C-sm-Laplace}
\end{equation}
Thus, on $\gI_{\rm sm}(\varepsilon)$, the analytic Laplace remainder is bounded by the target order $\Ord(\varepsilon)$, while $\gI_{\rm sm}(\varepsilon)$ and $\gI_{\rm lg}(\varepsilon)$ still have the overlap needed for branch interpolation.

% ----------------------------------------------------------------------
\subsection{Network assembly and final small-noise theorem}
\label{sec:app:small-noise-network-assembly}

After the projection-coordinate networks are available, the remaining neural operations
are algebraic or standard approximation tasks: residuals and normalized normals,
Gaussian factors, Laplace coefficients, certified active-chart gates, and the final
stable ratio.  This section assembles those pieces into the small-noise score network.

\subsubsection{Residual, normal-coordinate, and Gaussian-factor networks}
\label{sec:app:approx:manifold:nn:residual-gaussian}

After the projection network is available, the remaining geometric quantities are algebraic.
The residual is $\br_i=\bx-m_t\Pi_{\gM}(\bx/m_t)$, the normalized residual is $\bnu_i=\sigma_t^{-1}\br_i$, and the Gaussian factor is $\exp(-\|\bnu_i\|_2^2/2)$.
The following lemmas propagate the projection error through these operations and record the resulting network sizes.

\begin{lemma}[ReLU approximation of projection residuals and normalized normals]
\label{lem:approx:r-i--nu-i}
    Fix $i\in\{1, \dots, C_{\gM}\}$.
    Fix a localization parameter $0<\varepsilon\le1$ such that \cref{eq:condition:delta} holds for every $t\in\gI_{\rm sm}(\varepsilon)$.
    For $t \in \gI_{\rm sm}(\varepsilon)$ and $(\bx,t) \in \gK_i(\varepsilon)$, define
    \begin{equation*}
        \bpi(\bx,t) := \Pi_{\gM}\bigl(\bxi(\bx,t)\bigr),
        \qquad
        \br_i(\bx,t) := \bx-m_t\bpi(\bx,t),
        \qquad
        \bnu_i(\bx,t) := \sigma_t^{-1}\br_i(\bx,t).
    \end{equation*}
    Let
    \begin{equation*}
        C_{\nu,\varepsilon}
        :=
        2\sigma_{\tdown}^{-1}+c_\star\sqrt{\log(e/\varepsilon)}.
    \end{equation*}
    For every $\varepsilon_{\rm res} \in (0,1]$, define
    \begin{equation*}
        C_{\rm res}
        :=
        e\vee
        \bigl(4096\,C_{\rm proj}C_{\nu,\varepsilon}^2\bigr),
    \end{equation*}
    where $C_{\rm proj}$ is the geometry-dependent constant from \cref{thm:global-proj-NN}.
    Then there exist ReLU networks
    \begin{equation*}
        \widetilde{\br}_{i,\varepsilon_{\rm res}},
        \widetilde{\bnu}_{i,\varepsilon_{\rm res}}
        :
        \R^D\times[\tdown,\tup]\to\R^D
    \end{equation*}
    such that
    \begin{equation*}
        \sup_{\substack{t \in \gI_{\rm sm}(\varepsilon)\\
        (\bx,t) \in \gK_i(\varepsilon)}}
        \bigl\|
          \widetilde{\br}_{i,\varepsilon_{\rm res}}(\bx,t)-\br_i(\bx,t)
        \bigr\|_\infty
        \leq
        \varepsilon_{\rm res},
    \end{equation*}
    and
    \begin{equation*}
        \sup_{\substack{t \in \gI_{\rm sm}(\varepsilon)\\
        (\bx,t) \in \gK_i(\varepsilon)}}
        \bigl\|
          \widetilde{\bnu}_{i,\varepsilon_{\rm res}}(\bx,t)-\bnu_i(\bx,t)
        \bigr\|_\infty
        \leq
        \varepsilon_{\rm res}.
    \end{equation*}
    Moreover,
    \begin{equation}
    \begin{aligned}
        L
        &\lesssim
        \log(e(d\vee\beta))
        \log^3\bigl(C_{\rm res}D\varepsilon_{\rm res}^{-2}\bigr),
        \\
        \|\bW\|_\infty
        &\lesssim
        A_{\gM}
        dD
        (d\vee\beta)
        \bigl(C_{\rm res}D\varepsilon_{\rm res}^{-2}\bigr)^{d/(2(\beta\vee1))}
        +
        A_{\gM}
        D\log^3\bigl(C_{\rm res}D\varepsilon_{\rm res}^{-2}\bigr),
        \\
        S
        &\lesssim
        A_{\gM}
        dD(d+2(\beta\vee1))^{d+3}
        \bigl(C_{\rm res}D\varepsilon_{\rm res}^{-2}\bigr)^{d/(2(\beta\vee1))}
        \log^2\bigl(C_{\rm res}D\varepsilon_{\rm res}^{-2}\bigr)
        +
        A_{\gM}
        dD\log^5\bigl(C_{\rm res}D\varepsilon_{\rm res}^{-2}\bigr),
        \\
        \log B
        &\lesssim
        \log^2\bigl(C_{\rm res}D\varepsilon_{\rm res}^{-2}\bigr).
    \end{aligned}
    \label{eq:residual-nu-size}
    \end{equation}
\end{lemma}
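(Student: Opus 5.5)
The plan is to build both networks from the projection-center network of \cref{thm:global-proj-NN} together with the schedule networks, and to push the errors through the two algebraic maps $\bpi\mapsto\bx-m_t\bpi$ and $\br\mapsto\sigma_t^{-1}\br$. Fix the projection accuracy
\[
    \varepsilon_\Pi:=\frac{\varepsilon_{\rm res}}{64\,C_{\nu,\varepsilon}},
\]
and let $\widetilde\Pi:=\widetilde\Pi_{\varepsilon_\Pi}$. On $\gK_i(\varepsilon)\subset\gK_{\rm act}(\varepsilon)$ this network satisfies $\|\widetilde\Pi-\bpi\|_\infty\le\varepsilon_\Pi$. We may also clip $\widetilde\Pi$ coordinatewise to $[0,1]^D$, an exact ReLU operation, since $\bpi\in\gM\subset[0,1]^D$; the clipping keeps the error bound and makes all product inputs bounded.

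\emph{Residual.} Take $\widetilde m$ from \cref{lem:approx:m-sigma} with accuracy $\varepsilon_{\rm res}/8$. For each coordinate $k$, apply \cref{lem:approx:mon} to the pair $(\widetilde m,\widetilde\Pi_k)$ with inputs bounded by $1$ and multiplication accuracy $\varepsilon_{\rm res}/8$, and set $\widetilde r_k:=x_k-\phi_{\rm mult}(\widetilde m,\widetilde\Pi_k)$. Since $m_t\le1$ and $\|\bpi\|_\infty\le1$, the error is at most $\varepsilon_\Pi+\varepsilon_{\rm res}/8+\varepsilon_{\rm res}/8+(\text{input-perturbation term})\le\varepsilon_{\rm res}/2$.

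\emph{Normalized normal.} First bound the true inputs. On $\gK_i(\varepsilon)$,
\[
    \|\br_i\|_2=m_t\dist(\bxi,\gM)\le c_\star\sigma_t\sqrt{\log\varepsilon^{-1}},
\]
so $\|\bnu_i\|_\infty\le c_\star\sqrt{\log(e/\varepsilon)}\le C_{\nu,\varepsilon}$. Also $\sigma_t^{-1}\le\sigma_{\tdown}^{-1}\le C_{\nu,\varepsilon}$. Approximate $1/\sigma_t$ by \cref{lem:approx:sigma-k-inv} to accuracy $\varepsilon_{\rm res}/(8C_{\nu,\varepsilon})$. Recompute a residual $\widehat\br$ as above, but with the finer accuracy $\varepsilon_{\rm res}/(16C_{\nu,\varepsilon})$, and clip it to $[-C_{\nu,\varepsilon}\sigma_{\tdown},C_{\nu,\varepsilon}\sigma_{\tdown}]$. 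Then multiply coordinatewise by \cref{lem:approx:mon} with input bound $C_{\nu,\varepsilon}$.

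The perturbation estimate for this product is
\[
    |\tilde a\tilde b-ab|\le|a|\,|\tilde b-b|+|\tilde b|\,|\tilde a-a| ,
\]
with $a=\sigma_t^{-1}$ and $b=r_k$. The first term is at most $\sigma_t^{-1}\cdot\varepsilon_{\rm res}/(16C_{\nu,\varepsilon})$. This is the one delicate point, because $\sigma_t^{-1}$ may be as large as $\sigma_{\tdown}^{-1}$, so the residual accuracy must absorb that factor. Doing so costs a factor $C_{\nu,\varepsilon}^2$ in the projection accuracy, which means $\varepsilon_\Pi\gtrsim\varepsilon_{\rm res}/C_{\nu,\varepsilon}^2$ after redoing the bookkeeping. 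This is precisely why $C_{\rm res}$ contains $C_{\nu,\varepsilon}^2$. The second term is at most $C_{\nu,\varepsilon}\cdot\varepsilon_{\rm res}/(8C_{\nu,\varepsilon})$. So the total error is below $\varepsilon_{\rm res}$.

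\emph{Size.} The dominant cost is the projection network evaluated at accuracy $\varepsilon_\Pi\asymp\varepsilon_{\rm res}/C_{\nu,\varepsilon}^2$. Substituting into \cref{eq:global-proj-size} gives
\[
    C_{\rm proj}D\varepsilon_\Pi^{-2}\le C_{\rm res}D\varepsilon_{\rm res}^{-2}
\]
after absorbing the numerical constant $4096$ and $C_{\nu,\varepsilon}^2$ into $C_{\rm res}$. The schedule networks, the $D$ parallel multiplications, and the clipping layers add only width $\Ord(D)$ and polylogarithmic depth and sparsity in $C_{\rm res}D\varepsilon_{\rm res}^{-1}$. These are dominated by the displayed terms, which gives \cref{eq:residual-nu-size}. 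The main obstacle is the bookkeeping of the amplification by $\sigma_t^{-1}$ in the second stage.
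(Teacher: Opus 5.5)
Your construction is the paper's. You use the projection-center network of \cref{thm:global-proj-NN} at accuracy $\varepsilon_{\rm res}/(64C_{\nu,\varepsilon})$, which is exactly the paper's $\varepsilon_{\rm cen}=\varepsilon_r/8$ with $\varepsilon_r=\varepsilon_{\rm res}/(8C_{\nu,\varepsilon})$. You then take a product with a network for $m_t$ to get $\br_i$, and a product with a network for $\sigma_t^{-1}$ to get $\bnu_i$. Your error propagation in both stages is sound. The paper reuses one residual network, at accuracy $\varepsilon_r$, for both outputs. It lets the $2C\varepsilon'$ term of \cref{lem:approx:mon} with $C=C_{\nu,\varepsilon}$ absorb the amplification. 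Your second residual and your extra clippings are therefore unnecessary but harmless. If you keep the residual clip, say why it is legitimate: \cref{eq:condition:delta} gives $\|\br_i\|_2<m_t\eta_\Pi/2\le1/4<C_{\nu,\varepsilon}\sigma_{\tdown}$. Also, \cref{lem:approx:sigma-k-inv} returns error $4\varepsilon$, so pass it a quarter of the target accuracy.

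The step that fails as written is the size paragraph. Your own estimate shows that the amplification by $\sigma_t^{-1}\le\sigma_{\tdown}^{-1}\le C_{\nu,\varepsilon}/2$ is absorbed by a residual accuracy of order $\varepsilon_{\rm res}/C_{\nu,\varepsilon}$. That requires a projection accuracy of order $\varepsilon_{\rm res}/C_{\nu,\varepsilon}$: one factor of $C_{\nu,\varepsilon}$, not two. You nevertheless switch to $\varepsilon_\Pi\asymp\varepsilon_{\rm res}/C_{\nu,\varepsilon}^2$ and say this is why $C_{\rm res}$ contains $C_{\nu,\varepsilon}^2$.

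With that accuracy, $C_{\rm proj}D\varepsilon_\Pi^{-2}\asymp C_{\rm proj}C_{\nu,\varepsilon}^4D\varepsilon_{\rm res}^{-2}$. This exceeds $C_{\rm res}D\varepsilon_{\rm res}^{-2}$ by a factor of order $C_{\nu,\varepsilon}^2\ge4\sigma_{\tdown}^{-2}$, which is not a constant. After the power $d/(2(\beta\vee1))$ in \cref{eq:global-proj-size}, it adds a factor $C_{\nu,\varepsilon}^{d/(\beta\vee1)}$ to the width and sparsity. So your claimed inequality $C_{\rm proj}D\varepsilon_\Pi^{-2}\le C_{\rm res}D\varepsilon_{\rm res}^{-2}$ is false, and \cref{eq:residual-nu-size} does not follow.

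The $C_{\nu,\varepsilon}^2$ in $C_{\rm res}$ is just the square of the single $C_{\nu,\varepsilon}$ in $\varepsilon_\Pi^{-1}$, because \cref{eq:global-proj-size} is written in terms of $D\varepsilon_\Pi^{-2}$. With your original choice $\varepsilon_\Pi=\varepsilon_{\rm res}/(64C_{\nu,\varepsilon})$ you get $\varepsilon_\Pi^{-2}=4096\,C_{\nu,\varepsilon}^2\varepsilon_{\rm res}^{-2}$, and the inequality holds exactly. Keep that choice and drop the ``redoing the bookkeeping'' step; the proof is then complete.

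A minor point: the $D$ parallel products contribute sparsity $\Ord(D\log(\cdot))$, not polylogarithmic. This is still dominated by the $A_{\gM}dD\log^5$ term.
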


\begin{proof}
\textbf{Step 1: Active-region bounds.}
We work uniformly over $t \in \gI_{\rm sm}(\varepsilon)$ and $(\bx,t) \in \gK_i(\varepsilon)$.
Since $(\bx,t) \in \gK_i(\varepsilon)$, there exists $\by \in \gS_i \subset \gM$ such that
\begin{equation*}
    \|\bx-m_t\by\|_2
    \leq
    c_\star\sigma_t\sqrt{\log(e/\varepsilon)}.
\end{equation*}
Since $\bpi(\bx,t)$ is the closest point on $\gM$ to $\bxi(\bx,t)=\bx/m_t$,
\begin{align}
    \|\br_i(\bx,t)\|_2
    &=
    m_t\dist\bigl(\bxi(\bx,t),\gM\bigr)
    \notag\\
    &\leq
    m_t\|\bxi(\bx,t)-\by\|_2
    =
    \|\bx-m_t\by\|_2
    \notag\\
    &\leq
    c_\star\sigma_t\sqrt{\log(e/\varepsilon)}.
    \label{eq:active-residual-bound-for-approx}
\end{align}
Thus
\begin{equation*}
    \|\bnu_i(\bx,t)\|_2
    =
    \sigma_t^{-1}\|\br_i(\bx,t)\|_2
    \leq
    c_\star\sqrt{\log(e/\varepsilon)}
    \leq
    C_{\nu,\varepsilon}.
\end{equation*}
Also, because $\sigma_t \leq 1$,
\begin{equation*}
    |r_{i,j}(\bx,t)|
    \leq
    c_\star\sqrt{\log(e/\varepsilon)}
    \leq
    C_{\nu,\varepsilon}.
\end{equation*}

\textbf{Step 2: Approximate the projection center and residual.}
Set
\begin{equation*}
    \varepsilon_r
    :=
    \frac{\varepsilon_{\rm res}}{8C_{\nu,\varepsilon}},
    \qquad
    \varepsilon_{\rm cen}:=\varepsilon_r/8.
\end{equation*}
By \cref{thm:global-proj-NN}, applied with localization parameter $\varepsilon$ and accuracy $\varepsilon_{\rm cen}$, there is a ReLU network $\widetilde{\bpi}_{\varepsilon_{\rm cen}}$ satisfying
\begin{equation*}
    \bigl\|
      \widetilde{\bpi}_{\varepsilon_{\rm cen}}(\bx,t)-\bpi(\bx,t)
    \bigr\|_\infty
    \leq
    \varepsilon_{\rm cen}.
\end{equation*}
In \cref{thm:global-proj-NN}, we use the smoothness $2(\beta\vee1)$ of the geometric chart maps, hence its non-logarithmic factor is controlled by
\begin{equation*}
    \bigl(C_{\rm proj}D\varepsilon_{\rm cen}^{-2}\bigr)^{d/(2(\beta\vee1))}.
\end{equation*}
Since
\begin{equation*}
    \varepsilon_{\rm cen}^{-2}
    =
    64\varepsilon_r^{-2}
    =
    4096C_{\nu,\varepsilon}^2\varepsilon_{\rm res}^{-2},
\end{equation*}
we have
\begin{equation*}
    C_{\rm proj}D\varepsilon_{\rm cen}^{-2}
    \leq
    C_{\rm res}D\varepsilon_{\rm res}^{-2}.
\end{equation*}

By \cref{lem:approx:m-sigma}, let $\widetilde m(t)$ approximate $m_t$ with
\begin{equation*}
    |\widetilde m(t)-m_t|
    \leq
    \varepsilon_m,
    \qquad
    \varepsilon_m:=\varepsilon_r/8.
\end{equation*}
For each $j=1,\dots,D$, use \cref{lem:approx:mon} to multiply $\widetilde m(t)$ and $\widetilde{\pi}_{\varepsilon_{\rm cen},j}(\bx,t)$ with multiplication accuracy $\varepsilon_{M,r}:=\varepsilon_r/2$.
Since $m_t\in[0,1]$ and $\bpi(\bx,t)\in\gM\subset[0,1]^D$, the true inputs lie in $[-1,1]^2$, and the two input errors are at most $\varepsilon_r/8$.
Therefore
\begin{equation*}
    |\widetilde h_j(\bx,t)-m_t\pi_j(\bx,t)|
    \leq
    \varepsilon_{M,r}
    +
    2\cdot1\cdot\frac{\varepsilon_r}{8}
    =
    \frac34\varepsilon_r
    \leq
    \varepsilon_r.
\end{equation*}
Define
\begin{equation*}
    \widetilde r_{i,j}(\bx,t):=x_j-\widetilde h_j(\bx,t).
\end{equation*}
Then
\begin{equation*}
    |\widetilde r_{i,j}(\bx,t)-r_{i,j}(\bx,t)|
    \leq
    \varepsilon_r,
\end{equation*}
and therefore
\begin{equation*}
    \bigl\|
      \widetilde{\br}_{i,\varepsilon_{\rm res}}(\bx,t)-\br_i(\bx,t)
    \bigr\|_\infty
    \leq
    \varepsilon_r
    \leq
    \varepsilon_{\rm res}.
\end{equation*}

\textbf{Step 3: Approximate the normalized residual.}
Let $\phi_{1/\sigma}(t)$ approximate $\sigma_t^{-1}$ with
\begin{equation*}
    |\phi_{1/\sigma}(t)-\sigma_t^{-1}|
    \leq
    \varepsilon_\sigma,
    \qquad
    \varepsilon_\sigma:=\frac{\varepsilon_{\rm res}}{8C_{\nu,\varepsilon}}.
\end{equation*}
For each $j$, use \cref{lem:approx:mon} to multiply $\phi_{1/\sigma}(t)$ and $\widetilde r_{i,j}(\bx,t)$ with multiplication accuracy $\varepsilon_{M,\nu}:=\varepsilon_{\rm res}/2$.
The true inputs lie in $[-C_{\nu,\varepsilon},C_{\nu,\varepsilon}]^2$, and the two input errors are at most $\varepsilon_{\rm res}/(8C_{\nu,\varepsilon})$.
Hence
\begin{equation*}
    |\widetilde\nu_{i,j}(\bx,t)-\sigma_t^{-1}r_{i,j}(\bx,t)|
    \leq
    \varepsilon_{M,\nu}
    +
    2C_{\nu,\varepsilon}
    \frac{\varepsilon_{\rm res}}{8C_{\nu,\varepsilon}}
    =
    \frac34\varepsilon_{\rm res}
    \leq
    \varepsilon_{\rm res}.
\end{equation*}
Thus
\begin{equation*}
    \bigl\|
      \widetilde{\bnu}_{i,\varepsilon_{\rm res}}(\bx,t)-\bnu_i(\bx,t)
    \bigr\|_\infty
    \leq
    \varepsilon_{\rm res}.
\end{equation*}

\textbf{Step 4: Size.}
The construction consists of the projection-center network, the schedule networks for $m_t$ and $\sigma_t^{-1}$, $D$ product subnetworks for $m_t\bpi$, $D$ product subnetworks for $\sigma_t^{-1}\br_i$, and affine subtraction layers.
The center network contributes the bound from \cref{eq:global-proj-size} with $C_{\rm proj}D\varepsilon_{\rm cen}^{-2} \leq C_{\rm res}D\varepsilon_{\rm res}^{-2}$.
The remaining components contribute only logarithmic powers in $C_{\nu,\varepsilon}$, $\varepsilon_{\rm res}^{-1}$, and $\sigma_{\tdown}^{-1}$, all bounded by powers of $\log(C_{\rm res}D\varepsilon_{\rm res}^{-2})$.
Summing the components gives \cref{eq:residual-nu-size}.
\end{proof}

\begin{lemma}[ReLU approximation of the projection-center Gaussian factor]
\label{lem:approx:gaussian-factor}
Fix $i\in\{1, \dots, C_{\gM}\}$.
Fix a localization parameter $0<\varepsilon\le1$ such that \cref{eq:condition:delta} holds for every $t\in\gI_{\rm sm}(\varepsilon)$.
Define
\begin{equation*}
    \gG_i(\bx,t)
    :=
    \exp\Bigl(
      -\frac{\|\br_i(\bx,t)\|_2^2}{2\sigma_t^2}
    \Bigr)
    =
    \exp\Bigl(
      -\frac12\|\bnu_i(\bx,t)\|_2^2
    \Bigr).
\end{equation*}
Let $C_{\nu,\varepsilon}$ and $C_{\rm res}$ be as in \cref{lem:approx:r-i--nu-i}.
For every $\varepsilon_G \in (0,1]$, define
\begin{equation*}
    \varepsilon_\nu
    :=
    \frac{\varepsilon_G}{12D^{1/2} C_{\nu,\varepsilon}},
    \qquad
    C_{\rm gau}
    :=
    e\vee
    \Bigl(
      144D^2 C_{\rm res}C_{\nu,\varepsilon}^2
    \Bigr).
\end{equation*}
Then there exists a ReLU network $\widetilde{\gG}_{i,\varepsilon_G}:\R^D\times[\tdown,\tup]\to\R$ such that
\begin{equation*}
    \sup_{\substack{
        t \in \gI_{\rm sm}(\varepsilon)\\
        (\bx,t) \in \gK_i(\varepsilon)
    }}
    \bigl|
      \widetilde{\gG}_{i,\varepsilon_G}(\bx,t)-\gG_i(\bx,t)
    \bigr|
    \leq
    \varepsilon_G.
\end{equation*}
Moreover,
\begin{equation}
\begin{aligned}
    L
    &\lesssim
    \log(e(d\vee\beta))
    \log^3(C_{\rm gau}\varepsilon_G^{-2}),
    \\
    \|\bW\|_\infty
    &\lesssim
    A_{\gM}
    dD
    (d\vee\beta)
    \bigl(C_{\rm gau}\varepsilon_G^{-2}\bigr)^{d/(2(\beta\vee1))}
    +
    A_{\gM}
    D\log^3(C_{\rm gau}\varepsilon_G^{-2}),
    \\
    S
    &\lesssim
    A_{\gM}
    dD(d+2(\beta\vee1))^{d+3}
    \bigl(C_{\rm gau}\varepsilon_G^{-2}\bigr)^{d/(2(\beta\vee1))}
    \log^2(C_{\rm gau}\varepsilon_G^{-2})
    +
    A_{\gM}
    dD\log^5(C_{\rm gau}\varepsilon_G^{-2}),
    \\
    \log B
    &\lesssim
    \log^2(C_{\rm gau}\varepsilon_G^{-2}) .
\end{aligned}
\label{eq:gaussian-factor-size}
\end{equation}
\end{lemma}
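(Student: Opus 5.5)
The construction composes the normalized-residual network of \cref{lem:approx:r-i--nu-i} with a squared-norm network and a one-dimensional exponential network. Concretely, let $\widetilde{\bnu}:=\widetilde{\bnu}_{i,\varepsilon_\nu}$ be the network of \cref{lem:approx:r-i--nu-i} at accuracy $\varepsilon_\nu=\varepsilon_G/(12D^{1/2}C_{\nu,\varepsilon})$. The map $\bx\mapsto\sum_j\nu_j^2$ is realized by $D$ squaring subnetworks from \cref{lem:approx:mon} followed by an affine summation. The map $s\mapsto e^{-s/2}$ is realized by a standard scalar exponential network on a bounded interval. The output is then clipped to $[0,1]$, which is exactly ReLU-realizable and $1$-Lipschitz.

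\textbf{Error budget.} By Step~1 of \cref{lem:approx:r-i--nu-i}, on the active set we have $\|\bnu_i\|_2\le C_{\nu,\varepsilon}$, so each coordinate $|\nu_{i,j}|\le C_{\nu,\varepsilon}$. The perturbed inputs then lie in $[-2C_{\nu,\varepsilon},2C_{\nu,\varepsilon}]$.
\begin{itemize}
\item The squared-norm error from input perturbation is at most $\sum_j 2|\nu_{i,j}|\varepsilon_\nu+D\varepsilon_\nu^2$. By Cauchy--Schwarz this is bounded by $2D^{1/2}C_{\nu,\varepsilon}\varepsilon_\nu+D\varepsilon_\nu^2\le\varepsilon_G/6+\varepsilon_G/6$. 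The quadratic term needs a little care, since $D\varepsilon_\nu^2=\varepsilon_G^2/(144C_{\nu,\varepsilon}^2)\le\varepsilon_G/6$ because $C_{\nu,\varepsilon}\ge1$.
\item Assigning each squaring network accuracy $\varepsilon_G/(6D)$ adds at most $\varepsilon_G/6$ after summation.
\item The function $s\mapsto e^{-s/2}$ is $1/2$-Lipschitz on $s\ge0$. The computed squared norm may be slightly negative, so we first apply $\ReLU$, which only reduces the error. The exponential network is built on $[0,4DC_{\nu,\varepsilon}^2]$ with accuracy $\varepsilon_G/6$.
\end{itemize}
Summing these contributions gives total error at most $\varepsilon_G$.

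\textbf{Size.} The dominant component is the residual network. Its non-logarithmic factor is $(C_{\rm res}D\varepsilon_\nu^{-2})^{d/(2(\beta\vee1))}$, and $C_{\rm res}D\varepsilon_\nu^{-2}=144D^2C_{\rm res}C_{\nu,\varepsilon}^2\varepsilon_G^{-2}\le C_{\rm gau}\varepsilon_G^{-2}$, which explains the definition of $C_{\rm gau}$. The $D$ squaring networks add width $O(D)$ and sparsity $O(D\log(C_{\rm gau}\varepsilon_G^{-2}))$. The scalar exponential network on an interval of length $O(DC_{\nu,\varepsilon}^2)$ costs only $\mathrm{polylog}(C_{\rm gau}\varepsilon_G^{-1})$ in depth, width, sparsity and $\log B$. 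All of these are absorbed into the lower-order $A_{\gM}D\log^3$ and $A_{\gM}dD\log^5$ terms of \cref{eq:gaussian-factor-size}.

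\textbf{Main obstacle.} The main difficulty is ensuring that the exponential subnetwork on an interval of length $\asymp DC_{\nu,\varepsilon}^2$ keeps depth and log-weight polylogarithmic, with no polynomial-in-$D$ width. I would first split the interval dyadically, or use $e^{-s/2}=(e^{-s/(2K)})^K$ with repeated multiplications; either gives only $\log$ overhead in the interval length. The remaining arithmetic is routine bookkeeping.
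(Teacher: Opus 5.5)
Your proposal is correct and follows essentially the same route as the paper: approximate $\bnu_i$ via \cref{lem:approx:r-i--nu-i} at accuracy $\varepsilon_\nu$, form the squared norm with $D$ squaring subnetworks (your Cauchy--Schwarz bound on the cross term is equivalent to the paper's $\ell_2$ perturbation bound $\|\widetilde\bnu-\bnu\|_2\le D^{1/2}\varepsilon_\nu$), apply a ReLU, and then the exponential network, with the residual network dominating the size exactly as you compute via $C_{\rm res}D\varepsilon_\nu^{-2}\le C_{\rm gau}\varepsilon_G^{-2}$. The ``main obstacle'' you identify does not actually arise: the toolkit's \cref{lem:approx:exp} already gives $|e^{-x'}-\phi_{\exp}(x)|\le\varepsilon+|x-x'|$ for all $x,x'\ge0$ with polylogarithmic size, so no interval splitting or repeated multiplication is needed (truncating the input at $2\log(6/\varepsilon_G)$ before a bounded-interval exponential network would also suffice).
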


\begin{proof}
\textbf{Step 1: Approximate $\bnu_i$.}
Set
\begin{equation*}
    s_i(\bx,t):=\frac12\|\bnu_i(\bx,t)\|_2^2,
    \qquad
    \gG_i(\bx,t)=e^{-s_i(\bx,t)}.
\end{equation*}
By \cref{lem:approx:r-i--nu-i}, applied with accuracy $\varepsilon_\nu$, there exists $\widetilde{\bnu}_{i,\varepsilon_\nu}$ such that
\begin{equation*}
    \|\widetilde{\bnu}_{i,\varepsilon_\nu}(\bx,t)-\bnu_i(\bx,t)\|_\infty
    \leq
    \varepsilon_\nu.
\end{equation*}
By \cref{eq:active-residual-bound-for-approx},
\begin{equation*}
    \|\bnu_i(\bx,t)\|_2
    \leq
    c_\star\sqrt{\log(e/\varepsilon)}
    \leq
    C_{\nu,\varepsilon}.
\end{equation*}

\textbf{Step 2: Approximate the exponent.}
Let
\begin{equation*}
    \varepsilon_{\rm sq}:=\frac{\varepsilon_G}{3D}.
\end{equation*}
For each $\ell=1,\dots,D$, apply \cref{lem:approx:mon} to square $\widetilde\nu_{i,\varepsilon_\nu,\ell}$.
Thus
\begin{equation*}
    |\widetilde q_\ell-\widetilde\nu_{i,\varepsilon_\nu,\ell}^2|
    \leq
    \varepsilon_{\rm sq}.
\end{equation*}
Define
\begin{equation*}
    \widetilde{s}_i(\bx,t)
    :=
    \frac12\sum_{\ell=1}^D\widetilde q_\ell(\bx,t).
\end{equation*}
The active residual bound is in $\ell_2$, not merely coordinatewise.  Hence
\[
    \|\widetilde{\bnu}_{i,\varepsilon_\nu}(\bx,t)-\bnu_i(\bx,t)\|_2
    \le D^{1/2}\varepsilon_\nu
    \le \frac{\varepsilon_G}{12C_{\nu,\varepsilon}} .
\]
Using
\[
    \left|
      \|\widetilde{\bnu}_{i,\varepsilon_\nu}\|_2^2-\|\bnu_i\|_2^2
    \right|
    \le
    \|\widetilde{\bnu}_{i,\varepsilon_\nu}-\bnu_i\|_2
    \left(2\|\bnu_i\|_2+
    \|\widetilde{\bnu}_{i,\varepsilon_\nu}-\bnu_i\|_2\right),
\]
and $C_{\nu,\varepsilon}\ge1$, the perturbation error in the exact squared norm is
at most $\varepsilon_G/4$.
Then
\begin{equation*}
    |\widetilde{s}_i(\bx,t)-s_i(\bx,t)|
    \leq
    \frac{D}{2}\varepsilon_{\rm sq}
    +
    \frac{\varepsilon_G}{4}
    \le
    \frac{5\varepsilon_G}{12}.
\end{equation*}

\textbf{Step 3: Approximate the exponential.}
Set
\begin{equation*}
    \widetilde{s}_i^+(\bx,t):=\ReLU(\widetilde{s}_i(\bx,t)).
\end{equation*}
Since $s_i\geq0$ and $\ReLU$ is $1$-Lipschitz,
\begin{equation*}
    |\widetilde{s}_i^+(\bx,t)-s_i(\bx,t)|
    \leq
    \frac{5\varepsilon_G}{12}.
\end{equation*}
Apply \cref{lem:approx:exp} with accuracy $\varepsilon_{\exp}:=\varepsilon_G/3$ and define
\begin{equation*}
    \widetilde{\gG}_{i,\varepsilon_G}(\bx,t)
    :=
    \phi_{\exp}(\widetilde{s}_i^+(\bx,t)).
\end{equation*}
Then
\begin{equation*}
    |\widetilde{\gG}_{i,\varepsilon_G}(\bx,t)-\gG_i(\bx,t)|
    \leq
    \varepsilon_{\exp}
    +
    |\widetilde{s}_i^+(\bx,t)-s_i(\bx,t)|
    \leq
    \varepsilon_G.
\end{equation*}

\textbf{Step 4: Network size.}
Since
\begin{equation*}
    C_{\rm res}D\varepsilon_\nu^{-2}
    =
    144D^2 C_{\rm res}C_{\nu,\varepsilon}^2\varepsilon_G^{-2}
    \leq
    C_{\rm gau}\varepsilon_G^{-2},
\end{equation*}
the normal-residual network has the size from~\cref{eq:residual-nu-size} with $C_{\rm res}D\varepsilon_{\rm res}^{-2}$ replaced by $C_{\rm gau}\varepsilon_G^{-2}$.
The square subnetworks and the exponential network add only logarithmic powers in $C_{\rm gau}\varepsilon_G^{-2}$, and the summation and ReLU gate are exact.
Therefore \cref{eq:gaussian-factor-size} follows.
\end{proof}

\subsubsection{Laplace coefficient networks on the active branch}
\label{sec:app:approx:manifold:nn:coefficients}

The Laplace coefficients are the only factors in the small-noise expansion that depend on the density regularity.
We first factor each coefficient into schedule powers and H\"older functions of the projection coordinate, extending the latter to a buffered box while preserving the chart-support property.
Then we approximate these coefficient functions and compose them with the coordinate projection network.
The weighted corollary includes the powers of $\sigma_t$ and $\bnu_i$ that multiply each coefficient in the Laplace sum.

\begin{lemma}[Factorization and extension of the Laplace coefficient functions]
\label{lem:A-coeff-extension}
Fix $i\in\{1,\dots,C_{\gM}\}$, $R\in\mathcal R$, $0\le q\le \floor{\beta}$, and $\blambda\in\N^D$ with $\|\blambda\|_1\le q$.
Assume $\beta-q>0$.
Let $K_i^\circ$ and $\Box_i$ be the buffered coordinate boxes fixed in \cref{eq:def:buffered-coordinate-boxes,eq:def:Ki-circ}.
For every intermediate box $\Box_i^A$ with $K_i^\circ\Subset\Box_i^A\Subset\Box_i$, there exist a nonempty finite set $\mathcal E_{i,q,\blambda}^{(R)}$, integers $r_e\in\Z$, and functions
\[
    B_{i,q,\blambda,e}^{(R)}\in
    \gH^{\beta-q}(\Box_i^A,C_{A,q}),
    \qquad e\in\mathcal E_{i,q,\blambda}^{(R)},
\]
such that the function
\begin{equation}
    \overline A_{i,q,\blambda}^{(R)}(\bu,t)
    :=
    \sum_{e\in\mathcal E_{i,q,\blambda}^{(R)}}
    m_t^{r_e}
    B_{i,q,\blambda,e}^{(R)}(\bu),
    \qquad
    (\bu,t)\in\Box_i^A\times[\tdown,\tup],
    \label{eq:A-coeff-extension-def}
\end{equation}
is an extension of $A_{i,q,\blambda}^{(R)}$:
\begin{equation}
    \overline A_{i,q,\blambda}^{(R)}(\bu,t)
    =
    A_{i,q,\blambda}^{(R)}(\bu,t),
    \qquad
    (\bu,t)\in K_i^\circ\times[\tdown,\tup].
    \label{eq:A-coeff-m-factorization}
\end{equation}
With $S_i^{(R)}:=\supp a_i^{(R)}$, the extension can be chosen so that
\begin{equation}
    \overline A_{i,q,\blambda}^{(R)}(\bu,t)=0,
    \qquad
    (\bu,t)\in
    \bigl(\Box_i^A\setminus S_i^{(R)}\bigr)\times[\tdown,\tup].
    \label{eq:A-coeff-extension-support}
\end{equation}
Moreover,
\begin{equation}
    \sup_{t\in[\tdown,\tup]}
    \left\|
      \overline A_{i,q,\blambda}^{(R)}(\cdot,t)
    \right\|_{\gH^{\beta-q}(\Box_i^A)}
    \le
    C_{A,q}.
    \label{eq:A-coeff-extension-holder}
\end{equation}
The constants $|\mathcal E_{i,q,\blambda}^{(R)}|$, $\max_e|r_e|$, and $C_{A,q}$ are finite and depend only on $i,q,\blambda,R,\beta,B_a,\underline{m},\overline{m}$, and the geometric constants.
\end{lemma}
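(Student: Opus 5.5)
The plan is to return to the explicit formula for $A_{i,q,\blambda}^{(R)}$ produced in Step~6 of the proof of \cref{prop:laplace-local-chart}. Each coefficient is a finite sum of Gaussian moments
\[
    \int_{\R^d} e^{-\frac{m_t^2}{2}\bw^\top\bG_i(\bu)\bw}\,\bw^{\bgamma}\od\bw
\]
multiplied by polynomial expressions in $m_t$, derivatives of $\bz_i$, and derivatives $\partial^\alpha a_i^{(R)}(\bu)$ with $|\alpha|\le q$. The $\bnu$-monomial $\bnu^{\blambda}$ only selects which products of the $\Psi_{i,\ell}$ survive.

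\textbf{Step 1: separate the $t$-dependence.} Substitute $\bw=m_t^{-1}\bv$ in each moment. The Gaussian moment becomes
\[
    m_t^{-d-|\bgamma|}\,M_{\bgamma}(\bG_i(\bu)), \qquad M_{\bgamma}(\bG):=\int e^{-\bv^\top\bG\bv/2}\bv^{\bgamma}\od\bv .
\]
For even $|\bgamma|$, $M_{\bgamma}(\bG)$ is an explicit polynomial in the entries of $\bG^{-1}$ times $(2\pi)^{d/2}\det(\bG)^{-1/2}$. For odd $|\bgamma|$ it vanishes.

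The $t$-dependence therefore enters only through integer powers of $m_t$: from $\Psi_{i,\ell}$ (which carries $m_t$ or $m_t^2$) and from this rescaling. Collecting equal exponents $r_e$ produces the finite set $\mathcal E$, with $|\mathcal E|$ and $\max_e|r_e|$ bounded in terms of $q$, $d$, and $\floor{\beta}$. Each resulting $B_e(\bu)$ is a finite sum of products of smooth geometric factors (entries of $\bG_i^{-1}$, $\det\bG_i^{-1/2}$, and the $\partial^{\bgamma}\bz_i$ contracted against $\bnu$-directions) with some $\partial^\alpha a_i^{(R)}$, $|\alpha|\le q$. This gives \cref{eq:A-coeff-m-factorization} on $K_i^\circ$ directly.

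\textbf{Step 2: define the extension.} Use the same formula on all of $\Box_i^A$. This is legitimate because $\bz_i$, $\bG_i$, and $a_i^{(R)}$ are defined and smooth (respectively $\gH^\beta$) on $\phi_i(U_i)\supset\Box_i\Supset\Box_i^A$, and $\bG_i\succeq\lambda_i^{(-)}\bI_d$ on $\Box_i$, so no genuine extension operator is required. The support property \cref{eq:A-coeff-extension-support} should follow because every term contains a factor $\partial^\alpha a_i^{(R)}$, and this vanishes on the open set $\Box_i^A\setminus S_i^{(R)}$, where $a_i^{(R)}\equiv 0$.

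The boundary of $S_i^{(R)}$ needs care. At points of $\partial S_i^{(R)}$ the derivatives $\partial^\alpha a_i^{(R)}$ with $|\alpha|\le\floor{\beta}$ vanish by continuity, since they are limits from the exterior, where the function is identically zero. If the intended reading excludes only the complement of the closed support, this is automatic.

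\textbf{Step 3: Hölder bound.} For \cref{eq:A-coeff-extension-holder}, apply the algebra property of $\gH^{\beta-q}$ already used in \cref{lem:a-i:regularity}. Each $\partial^\alpha a_i^{(R)}\in\gH^{\beta-|\alpha|}(\Box_i^A,B_a)\subset\gH^{\beta-q}$, up to a constant from the box diameter. The geometric factors are $C^{\floor{\beta}+2}$ with norms controlled by $\mathfrak C_{\gM}$. The matrix inverse and $\det^{-1/2}$ are smooth on the uniformly elliptic range, so a composition bound applies. Finally, $m_t^{r_e}\le\underline m^{-|r_e|}\vee\overline m^{|r_e|}$ is uniform in $t$.

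\textbf{Main obstacle.} The delicate part is the bookkeeping in Step~1. I have to check that after expanding $e^{Z_\sigma}$ and the amplitude Taylor series, the derivative order on $a_i^{(R)}$ in the $\sigma_t^q$ coefficient never exceeds $q$, so that regularity $\beta-q>0$ survives. This holds because $\gA_{i,m}$ carries exactly $m$ derivatives and is paired with $\gE_{i,\ell}$, $\ell=q-m$, which involves only derivatives of $\bz_i$. The remaining requirement is that all $\bz_i$-derivatives involved have order at most $\floor{\beta}+3$, which is covered by $\mathfrak C_{\gM}$.
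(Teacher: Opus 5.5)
Your proposal is correct and is essentially the paper's own proof. Both write $A_{i,q,\blambda}^{(R)}$ as a finite sum of $m_t$-powers times smooth geometric factors times $\partial^\alpha a_i^{(R)}(\bu)$ with $|\alpha|\le q$, paired against Gaussian moments, and scale $m_t^{-d-|\rho|}$ out of those moments. Both then define the extension by the same explicit formula on $\Box_i^A$ and get the support property from the vanishing of every $\partial^\alpha a_i^{(R)}$ on the open set $\Box_i^A\setminus S_i^{(R)}$.

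Your algebra-property argument for the $\gH^{\beta-q}$ bound spells out what the paper only asserts, and your boundary-of-support remark is unnecessary because $S_i^{(R)}$ is closed. The only detail you omit is the case where the coefficient vanishes identically; the paper then takes a singleton $\mathcal E_{i,q,\blambda}^{(R)}$ with $r_e=0$ and $B\equiv0$, so that $\mathcal E_{i,q,\blambda}^{(R)}$ is nonempty as the statement requires.
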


\begin{proof}
By \cref{eq:def:Psi-i-l}, every $\Psi_{i,\ell}$ is the sum of a term linear in $\bnu$ with coefficient $m_t$ and a term independent of $\bnu$ with coefficient $m_t^2$.
Hence each coefficient of $\bnu^{\blambda}$ in $\gE_{i,\ell}$ and $\gU_{i,q}^{(R)}$ is a finite sum of terms of the form
\[
    m_t^a
    P(\bw)
    H(\bu)
    \partial^\alpha a_i^{(R)}(\bu),
    \qquad
    |\alpha|\le q,
\]
where $a$ is an integer, $P$ is a polynomial in $\bw$, and $H$ is a smooth geometric factor on $\Box_i$.
Therefore $A_{i,q,\blambda}^{(R)}$ is a finite sum of integrals
\begin{equation}
    m_t^a
    H(\bu)\partial^\alpha a_i^{(R)}(\bu)
    \int_{\R^d}
      e^{-\frac{m_t^2}{2}\bw^\top\bG_i(\bu)\bw}
      \bw^\rho\od\bw ,
    \label{eq:A-coeff-gaussian-moment-term}
\end{equation}
with $|\alpha|\le q$ and finite multi-indices $\rho$.

Since $\bG_i(\bu)$ is smooth and uniformly positive definite on $\Box_i$, the Gaussian moment formula gives
\[
    \int_{\R^d}
      e^{-\frac{m_t^2}{2}\bw^\top\bG_i(\bu)\bw}
      \bw^\rho\od\bw
    =
    m_t^{-d-|\rho|}
    (2\pi)^{d/2}
    \det(\bG_i(\bu))^{-1/2}
    P_\rho(\bG_i(\bu)^{-1}),
\]
with the usual convention that odd moments vanish.
Substituting this identity into \cref{eq:A-coeff-gaussian-moment-term} gives an explicit finite formula for the coefficient on $K_i^\circ$.
Fix the stated intermediate box $\Box_i^A$, and use the same explicit Gaussian-moment formula to define the extension on all of $\Box_i^A$.
Equivalently, after collecting the finitely many powers of $m_t$, define $B_{i,q,\blambda,e}^{(R)}$ directly on $\Box_i^A$ as the corresponding finite sums of products of smooth geometric factors and derivatives $\partial^\alpha a_i^{(R)}$, $|\alpha|\le q$.
If a coefficient is identically zero, take $\mathcal E_{i,q,\blambda}^{(R)}$ to be a singleton with $r_e=0$ and $B_{i,q,\blambda,e}^{(R)}\equiv0$.

Since $a_i^{(R)}\in\gH^\beta$ on the coordinate domain and the geometric factors are smooth on $\Box_i$, each directly defined $B_{i,q,\blambda,e}^{(R)}$ belongs to $\gH^{\beta-q}(\Box_i^A)$.
The formula agrees with the original coefficient on $K_i^\circ$, giving \cref{eq:A-coeff-m-factorization}.
Moreover, if $\bu\notin S_i^{(R)}=\supp a_i^{(R)}$, then $a_i^{(R)}$ vanishes in a neighborhood of $\bu$, and all derivatives $\partial^\alpha a_i^{(R)}(\bu)$ with $|\alpha|\le q$ vanish.
The directly defined Gaussian-moment formula is therefore zero at $\bu$, which proves \cref{eq:A-coeff-extension-support}.
Taking $C_{A,q}$ to dominate the finitely many $\gH^{\beta-q}$-norms of the directly defined factors proves \cref{eq:A-coeff-extension-holder} and the stated dependence of the constants.
\end{proof}

\begin{lemma}[ReLU approximation of the Laplace coefficient functions]
\label{lem:approx:A-coeff}
Fix $i,R,q,\blambda$ as in \cref{lem:A-coeff-extension}, and assume $\beta-q>0$.
Let $\Box_i^A$, $\mathcal E_{i,q,\blambda}^{(R)}$, $r_e$, $B_{i,q,\blambda,e}^{(R)}$, and $\overline A_{i,q,\blambda}^{(R)}$ be the objects supplied by \cref{lem:A-coeff-extension}.
For every $0<\varepsilon_A\le1/2$, there is a ReLU network
\[
    \widetilde A_{i,q,\blambda,\varepsilon_A}^{(R)}
    :
    \R^d\times[\tdown,\tup]\to\R
\]
such that
\begin{equation}
    \sup_{(\bu,t)\in \Box_i^A\times[\tdown,\tup]}
    \left|
      \widetilde A_{i,q,\blambda,\varepsilon_A}^{(R)}(\bu,t)
      -
      \overline A_{i,q,\blambda}^{(R)}(\bu,t)
    \right|
    \le
    \varepsilon_A.
    \label{eq:A-coeff-approx}
\end{equation}
Moreover, for a finite constant $C_{A,{\rm nn}}\ge1$,
\begin{align}
    L,\log B
    &\lesssim
    \log(e(d\vee(\beta-q)))
    \log^2(C_{A,{\rm nn}}\varepsilon_A^{-1}),
    \notag\\
    \|\bW\|_\infty
    &\lesssim
    (d\vee(\beta-q))
    (C_{A,{\rm nn}}\varepsilon_A^{-1})^{d/(\beta-q)}
    +
    \log^3(C_{A,{\rm nn}}\varepsilon_A^{-1}),
    \notag\\
    S
    &\lesssim
    (d+\beta-q+2)^{d+4}
    (C_{A,{\rm nn}}\varepsilon_A^{-1})^{d/(\beta-q)}
    \log(C_{A,{\rm nn}}\varepsilon_A^{-1})
    +
    \log^4(C_{A,{\rm nn}}\varepsilon_A^{-1}).
    \label{eq:A-coeff-size}
\end{align}
Thus the non-logarithmic coefficient cost is $(\varepsilon_A^{-1})^{d/(\beta-q)}$.
\end{lemma}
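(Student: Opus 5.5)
The plan is to approximate each piece of the factorization \cref{eq:A-coeff-extension-def} separately and then recombine them with exact affine sums and approximate products. By \cref{lem:A-coeff-extension}, $\overline A_{i,q,\blambda}^{(R)}(\bu,t)=\sum_{e}m_t^{r_e}B_{i,q,\blambda,e}^{(R)}(\bu)$. The index set $\mathcal E_{i,q,\blambda}^{(R)}$ is finite, the exponents $|r_e|$ are bounded, and each $B_e$ lies in $\gH^{\beta-q}(\Box_i^A,C_{A,q})$. Set $N_e:=|\mathcal E_{i,q,\blambda}^{(R)}|$ and allot accuracy $\varepsilon_A/(3N_e)$ to each summand.

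\textbf{Step 1: the spatial factors.} Rescale $\Box_i^A$ affinely onto $[0,1]^d$; the affine map is exact in one layer. Apply \cref{thm:approx:holder} to each $B_e$ with smoothness $\beta-q>0$, radius $C_{A,q}$ and accuracy $c\varepsilon_A/(N_e C_m)$, where $C_m:=\max_e\sup_t m_t^{r_e}\le \underline m^{-\max|r_e|}\vee 1$. This produces the non-logarithmic term $(C\varepsilon_A^{-1})^{d/(\beta-q)}$ in the width and sparsity, and depth and log-weight that are $\log(e(d\vee(\beta-q)))\log(\cdot)$. The rescaling factor, $C_{A,q}$ and $N_e$ are all fixed constants, so they go into $C_{A,{\rm nn}}$.

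\textbf{Step 2: the schedule factors.} For $r_e\ge0$, approximate $m_t$ by \cref{lem:approx:m-sigma} and raise it to the power $r_e$ with \cref{lem:approx:mon}. For $r_e<0$, use \cref{lem:approx:m-inv} for $1/m_t$ and then take powers the same way. Because $m_t\in[\underline m,1]$, every input stays in a fixed bounded range, so the cost is polylogarithmic in $\varepsilon_A^{-1}$ and the error is again $c\varepsilon_A/N_e$.

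\textbf{Step 3: products and sum.} Multiply each pair with a two-input network from \cref{lem:approx:mon}, with input bound $C_m\vee C_{A,q}$ and accuracy $c\varepsilon_A/N_e$. The perturbation bound in that lemma gives a per-summand error of at most $\varepsilon_A/N_e$ once $c$ is small. Summing over $e$ is exact and gives \cref{eq:A-coeff-approx}. The sub-networks are placed in parallel, with depths synchronized by identity ReLU padding; the depth then becomes the maximum depth and the width and sparsity add up. This yields \cref{eq:A-coeff-size}, where the polylog terms $\log^3$ and $\log^4$ come from the schedule and multiplication networks.

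The main obstacle is Step 1 in the case $\beta-q<1$, where the top-order coefficient is only Hölder with exponent below one. One has to check that \cref{thm:approx:holder} is stated for every positive smoothness, and that its constants, in particular $(d+\beta-q+2)^{d+4}$, are the ones shown in \cref{eq:A-coeff-size}. I would read these constants off that theorem directly rather than re-derive them.
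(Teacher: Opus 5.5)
Your proposal is correct and follows essentially the same route as the paper. You use affine rescaling of $\Box_i^A$ plus \cref{thm:approx:holder} for each spatial factor $B_{i,q,\blambda,e}^{(R)}$, build the schedule powers $m_t^{r_e}$ from \cref{lem:approx:m-sigma} or \cref{lem:approx:m-inv} together with \cref{lem:approx:mon}, and combine with products and an exact affine sum over the finite index set. The only differences are bookkeeping: you make the two-input product network and the identity padding that synchronizes depths explicit, whereas the paper leaves both implicit in its per-product error allocation $\varepsilon_A/(2N_A)$.
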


\begin{proof}
Let
\[
    N_A:=|\mathcal E_{i,q,\blambda}^{(R)}|,
    \qquad
    r_A:=\max_{e\in\mathcal E_{i,q,\blambda}^{(R)}}|r_e|,
    \qquad
    M_A:=1\vee\underline{m}^{-r_A}\vee\overline{m}^{r_A}.
\]
For each $e$, apply \cref{thm:approx:holder}, after affine rescaling of $\Box_i^A$ to $[0,1]^d$, to approximate $B_{i,q,\blambda,e}^{(R)}$ to accuracy
\[
    \varepsilon_B:=\frac{\varepsilon_A}{8N_A M_A}.
\]
This gives the intrinsic cost $(C\varepsilon_A^{-1})^{d/(\beta-q)}$, up to logarithmic factors.

For the one-dimensional schedule factor $m_t^{r_e}$, use the following three cases.
If $r_e=0$, represent $m_t^{r_e}$ by the constant-one network.
If $r_e>0$, approximate $m_t$ by \cref{lem:approx:m-sigma} and form the fixed positive power by \cref{lem:approx:mon}.
If $r_e<0$, approximate $1/m_t$ by \cref{lem:approx:m-inv} and form the fixed positive power $(1/m_t)^{-r_e}$ by \cref{lem:approx:mon}.
Because $r_A$ is fixed and $m_t\in[\underline{m},\overline{m}]$, this schedule subnetwork has only logarithmic dependence on $\varepsilon_A^{-1}$, and can be made accurate enough that the error in each product
\[
    m_t^{r_e}B_{i,q,\blambda,e}^{(R)}(\bu)
\]
is at most $\varepsilon_A/(2N_A)$ uniformly on $\Box_i^A\times[\tdown,\tup]$.
Summing the $N_A$ products in an affine output layer yields \cref{eq:A-coeff-approx}.
Adding the intrinsic H\"older-approximation cost, the logarithmic schedule cost, and the fixed product/summation overhead gives \cref{eq:A-coeff-size}.
\end{proof}

\begin{corollary}[Weighted coefficient approximation on the active branch]
\label{cor:approx:weighted-A-coeff}
Fix $i,R,q,\blambda$ as in \cref{lem:A-coeff-extension}, and assume $\beta-q>0$.
Let $0<\varepsilon\le1/2$ be a localization parameter for which \cref{eq:condition:delta} holds on $\gI_{\rm sm}(\varepsilon)$.
Assume the small-noise branch satisfies
\[
    \sigma_t\le\sigma_\star(\varepsilon)
    \qquad\text{for every }t\in\gI_{\rm sm}(\varepsilon),
\]
For every $0<\varepsilon_{\rm term}\le1/2$, define
\begin{equation}
    \varepsilon_{A,q}
    :=
    \frac12
    \wedge
    c_{A,q}\,
    \varepsilon_{\rm term}\,
    \sigma_\star(\varepsilon)^{-q}\,
    \bigl(\log(e/\varepsilon)\bigr)^{-q/2},
    \label{eq:def:eps-A-q-weighted}
\end{equation}
with $c_{A,q}>0$ sufficiently small.
Then there is a ReLU network
\[
    \widetilde{\mathcal A}_{i,q,\blambda,\varepsilon_{A,q}}^{(R)}
    :
    \R^D\times[\tdown,\tup]\to\R
\]
with
\[
    \widetilde{\mathcal A}_{i,q,\blambda,\varepsilon_{A,q}}^{(R)}
    \in
    \nn(L_{\mathcal A},\bW_{\mathcal A},S_{\mathcal A},B_{\mathcal A})
\]
such that
\begin{equation}
    \sup_{\substack{
        t\in\gI_{\rm sm}(\varepsilon)\\
        (\bx,t)\in\gK_i(\varepsilon)}}
    \left|
      \widetilde{\mathcal A}_{i,q,\blambda,\varepsilon_{A,q}}^{(R)}(\bx,t)
      -
      A_{i,q,\blambda}^{(R)}
      \bigl(\bu_i^\Pi(\bx,t),t\bigr)
    \right|
    \le
    \varepsilon_{A,q}.
    \label{eq:A-coeff-composed-approx}
\end{equation}
Consequently,
\begin{equation}
    \sup_{\substack{
        t\in\gI_{\rm sm}(\varepsilon)\\
        (\bx,t)\in\gK_i(\varepsilon)}}
    \sigma_t^q
    |\bnu_i(\bx,t)^{\blambda}|
    \left|
      \widetilde{\mathcal A}_{i,q,\blambda,\varepsilon_{A,q}}^{(R)}(\bx,t)
      -
      A_{i,q,\blambda}^{(R)}
      \bigl(\bu_i^\Pi(\bx,t),t\bigr)
    \right|
    \le
    \varepsilon_{\rm term}.
    \label{eq:A-coeff-weighted-error}
\end{equation}
Let $\alpha_q:=(\beta-q)\wedge1$.
There is a finite constant $C_{\mathcal A,q}\ge1$, depending only on the fixed chart, geometric, schedule, and coefficient constants, such that
\begin{align}
    L_{\mathcal A}
    &\lesssim
    \log(e(d\vee\beta))
    \log^3\!\bigl(C_{\mathcal A,q}D\varepsilon_{A,q}^{-2/\alpha_q}\bigr)
    +
    \log(e(d\vee(\beta-q)))
    \log^2\!\bigl(C_{\mathcal A,q}\varepsilon_{A,q}^{-1}\bigr),
    \notag\\
    \|\bW_{\mathcal A}\|_\infty
    &\lesssim
    A_{\gM}
    dD
    (d\vee\beta)
    \bigl(C_{\mathcal A,q}D\varepsilon_{A,q}^{-2/\alpha_q}\bigr)^{d/(2(\beta\vee1))}
    +
    A_{\gM}
    D\log^3\!\bigl(C_{\mathcal A,q}D\varepsilon_{A,q}^{-2/\alpha_q}\bigr)
    \notag\\
    &\qquad
    +
    (d\vee(\beta-q))
    \bigl(C_{\mathcal A,q}\varepsilon_{A,q}^{-1}\bigr)^{d/(\beta-q)}
    +
    \log^3\!\bigl(C_{\mathcal A,q}\varepsilon_{A,q}^{-1}\bigr),
    \notag\\
    S_{\mathcal A}
    &\lesssim
    A_{\gM}
    dD(d+2(\beta\vee1))^{d+3}
    \bigl(C_{\mathcal A,q}D\varepsilon_{A,q}^{-2/\alpha_q}\bigr)^{d/(2(\beta\vee1))}
    \log^2\!\bigl(C_{\mathcal A,q}D\varepsilon_{A,q}^{-2/\alpha_q}\bigr)
    \notag\\
    &\qquad
    +
    A_{\gM}
    dD\log^5\!\bigl(C_{\mathcal A,q}D\varepsilon_{A,q}^{-2/\alpha_q}\bigr)
    \notag\\
    &\qquad
    +
    (d+\beta-q+2)^{d+4}
    \bigl(C_{\mathcal A,q}\varepsilon_{A,q}^{-1}\bigr)^{d/(\beta-q)}
    \log\!\bigl(C_{\mathcal A,q}\varepsilon_{A,q}^{-1}\bigr)
    +
    \log^4\!\bigl(C_{\mathcal A,q}\varepsilon_{A,q}^{-1}\bigr),
    \notag\\
    \log B_{\mathcal A}
    &\lesssim
    \log^2\!\bigl(C_{\mathcal A,q}D\varepsilon_{A,q}^{-2/\alpha_q}\bigr)
    +
    \log^2\!\bigl(C_{\mathcal A,q}\varepsilon_{A,q}^{-1}\bigr).
    \label{eq:A-coeff-composed-full-size}
\end{align}
In particular, the total non-logarithmic size is bounded, up to logarithmic factors and fixed constants, by
\begin{equation}
    \varepsilon_{A,q}^{-d/(\beta-q)}
    +
    D^{d/(2(\beta\vee1))}
    \varepsilon_{A,q}^{-d/(\alpha_q(\beta\vee1))}.
    \label{eq:A-coeff-composed-size}
\end{equation}
If $\sigma_\star(\varepsilon)\asymp\varepsilon^{1/\beta}$ and $\varepsilon_{\rm term}\asymp\varepsilon$, then the two $\varepsilon$-dependent powers in \cref{eq:A-coeff-composed-size} are bounded by
\[
    \lesssim
    \varepsilon^{-d/\beta}
    \bigl(\log(e/\varepsilon)\bigr)^{C_{\beta,q}},
\]
for a finite exponent $C_{\beta,q}$.
Hence the composed coefficient approximation does not worsen the intrinsic $\varepsilon^{-d/\beta}$ rate, up to logarithmic factors and the displayed polynomial ambient factor.
\end{corollary}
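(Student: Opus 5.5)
The plan is to build the network as a composition: first the active-chart coordinate network, then the coefficient network. Concretely, I would set $\widetilde{\mathcal A}^{(R)}_{i,q,\blambda,\varepsilon_{A,q}}(\bx,t):=\widetilde A^{(R)}_{i,q,\blambda,\varepsilon_A}\bigl(\widetilde{\bu}^{\Pi,\varepsilon}_{i,\varepsilon_u}(\bx,t),t\bigr)$. The inner network comes from \cref{cor:active-chart-coordinate-from-proj-NN} and the outer one from \cref{lem:approx:A-coeff}, with $\varepsilon_A=\varepsilon_{A,q}/2$ and $\varepsilon_u$ to be fixed below. The time input $t$ is passed through the inner network by an identity channel, which is ReLU-realizable as $t=\ReLU(t)$ on $[\tdown,\tup]$.

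For the error, take the intermediate box $\Box_i^A$ to be any box with $K_i^\circ\Subset\Box_i^A\Subset\Box_i$ whose buffer is at least $\varepsilon_u$ in $\ell_\infty$. On $\gK_i(\varepsilon)$, \cref{lem:buffered-active-chart} gives $\bu_i^\Pi\in K_i^\circ$. Hence for $\varepsilon_u$ below the fixed buffer $\dist_\infty(K_i^\circ,\partial\Box_i^A)$, the approximate coordinate also lies in $\Box_i^A$. If $\varepsilon_u$ is not small, I would additionally compose with the exact clip onto $\Box_i^A$, which is 1-Lipschitz and fixes $\bu_i^\Pi$.

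Then split the error by the triangle inequality:
\[
|\widetilde A(\widetilde\bu,t)-\overline A(\widetilde\bu,t)|+|\overline A(\widetilde\bu,t)-\overline A(\bu_i^\Pi,t)|\le \varepsilon_A+C_{A,q}\,d^{\alpha_q/2}\varepsilon_u^{\alpha_q}.
\]
The second bound uses the $\alpha_q=(\beta-q)\wedge1$ H\"older (or Lipschitz) modulus of $\overline A(\cdot,t)$ on the convex box $\Box_i^A$, which follows from \cref{eq:A-coeff-extension-holder}. It also uses $\overline A=A$ on $K_i^\circ$ from \cref{eq:A-coeff-m-factorization}. I would choose $\varepsilon_u\asymp(\varepsilon_{A,q}/C_{A,q})^{1/\alpha_q}$, so that the total error is at most $\varepsilon_{A,q}$, giving \cref{eq:A-coeff-composed-approx}.

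Substituting $\varepsilon_u^{-2}\asymp\varepsilon_{A,q}^{-2/\alpha_q}$ into \cref{eq:active-chart-coordinate-from-proj-size} gives the first group of terms in \cref{eq:A-coeff-composed-full-size}. Composition adds depths and sums sparsities, and the widths combine by maximum plus the identity channel. The outer network gives the $(\varepsilon_{A,q}^{-1})^{d/(\beta-q)}$ terms. This yields \cref{eq:A-coeff-composed-size}, since the exponent $\frac{2}{\alpha_q}\cdot\frac{d}{2(\beta\vee1)}=\frac{d}{\alpha_q(\beta\vee1)}$.

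The weighted bound \cref{eq:A-coeff-weighted-error} follows from two facts. First, $\sigma_t\le\sigma_\star(\varepsilon)$ on the branch. Second, on $\gK_i(\varepsilon)$ we have the active residual bound \cref{eq:active-residual-bound-for-approx}, $\|\bnu_i\|_2\le c_\star\sqrt{\log(e/\varepsilon)}$, so $|\bnu_i^{\blambda}|\le\|\bnu_i\|_2^{\|\blambda\|_1}\le(c_\star^2\log(e/\varepsilon))^{q/2}$. The prefactor $\sigma_t^q|\bnu_i^\blambda|$ is therefore at most $\sigma_\star^q c_\star^q(\log(e/\varepsilon))^{q/2}$. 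The definition \cref{eq:def:eps-A-q-weighted}, with $c_{A,q}\le c_\star^{-q}$, makes the product at most $\varepsilon_{\rm term}$. (I would read the $\sigma_\star^{-q}$ in that definition accordingly as cancelling $\sigma_t^q$.)

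The main obstacle is the final rate claim. Set $\sigma_\star\asymp\varepsilon^{1/\beta}$ and $\varepsilon_{\rm term}\asymp\varepsilon$. Then $\varepsilon_{A,q}\asymp\varepsilon^{1-q/\beta}$ up to logs, since the $\sigma_\star^{-q}$ factor enlarges the tolerance. The outer cost is $\varepsilon_{A,q}^{-d/(\beta-q)}=\varepsilon^{-d/\beta}$ exactly. For the projection term, the exponent is $\frac{d}{\alpha_q(\beta\vee1)}\cdot\frac{\beta-q}{\beta}$, which I must show is at most $d/\beta$. If $\beta-q\le1$, then $\alpha_q=\beta-q$ and the exponent is $d/(\beta(\beta\vee1))\le d/\beta$. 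If $\beta-q>1$, then $\alpha_q=1$ and the exponent is $d(\beta-q)/(\beta(\beta\vee1))\le d/\beta$ because $\beta-q\le\beta$. Both cases give the claim, with the $D^{d/(2(\beta\vee1))}$ factor and logarithms displayed separately.
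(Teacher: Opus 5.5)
Your proposal is correct and follows essentially the same route as the paper. It composes the active-chart coordinate network, at accuracy $\varepsilon_u\asymp\varepsilon_{A,q}^{1/\alpha_q}$ and kept inside $\Box_i^A$ by the fixed margin around $K_i^\circ$, with the coefficient network at accuracy $\varepsilon_{A,q}/2$; it then uses the $\alpha_q$-H\"older modulus of the extension together with $\overline A=A$ on $K_i^\circ$, derives the weighted bound from $\sigma_t\le\sigma_\star(\varepsilon)$ and $|\bnu_i^{\blambda}|\lesssim(\log(e/\varepsilon))^{q/2}$, and checks the rate by the same two-case split on $\beta-q$ versus $1$. Your additions (the fallback clip onto $\Box_i^A$, the explicit $c_{A,q}\le c_\star^{-q}$, and the identity channel for $t$) are harmless refinements of steps the paper leaves implicit.
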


\begin{proof}
Use the fixed buffered box $K_i^\circ$ and the intermediate box $\Box_i^A$ from \cref{lem:A-coeff-extension}.
Let
\[
    \alpha_q:=(\beta-q)\wedge1.
\]
Apply \cref{cor:active-chart-coordinate-from-proj-NN} with localization parameter $\varepsilon$ and coordinate accuracy
\[
    \varepsilon_u
    :=
    c\varepsilon_{A,q}^{1/\alpha_q},
\]
where the constant $c>0$ is chosen small enough that the coordinate output remains in $\Box_i^A$ whenever the true coordinate $\bu_i^\Pi(\bx,t)$ lies in $K_i^\circ$.
This uses the fixed positive margin $\dist(K_i^\circ,\partial\Box_i^A)$.
Denote the resulting coordinate network by $\widetilde{\bu}_{i,\varepsilon_u}^{\Pi,\varepsilon}$.
Apply \cref{lem:approx:A-coeff} with accuracy $\varepsilon_{A,q}/2$, and compose the resulting coefficient network with this coordinate network.
Since
\[
    \overline A_{i,q,\blambda}^{(R)}(\cdot,t)
    \in \gH^{\beta-q}(\Box_i^A,C_{A,q})
\]
uniformly in $t$, it is $\alpha_q$-H\"older uniformly in $t$.
Thus the coordinate error contributes at most $C_{A,q}\varepsilon_u^{\alpha_q}$, which is at most $\varepsilon_{A,q}/2$ by the choice of $c$.
The intrinsic coefficient-network error contributes the other $\varepsilon_{A,q}/2$.
Since $\bu_i^\Pi(\bx,t)\in K_i^\circ$, the extension identity \cref{eq:A-coeff-m-factorization} gives
\[
    \overline A_{i,q,\blambda}^{(R)}
    \bigl(\bu_i^\Pi(\bx,t),t\bigr)
    =
    A_{i,q,\blambda}^{(R)}
    \bigl(\bu_i^\Pi(\bx,t),t\bigr),
\]
which proves \cref{eq:A-coeff-composed-approx}.

On $\gK_i(\varepsilon)$, \cref{eq:active-residual-bound-for-approx} gives
\[
    \|\bnu_i(\bx,t)\|_2
    \le
    C\sqrt{\log(e/\varepsilon)}.
\]
Since $\|\blambda\|_1\le q$, this implies
\[
    |\bnu_i(\bx,t)^{\blambda}|
    \le
    C_q\bigl(\log(e/\varepsilon)\bigr)^{q/2}.
\]
Together with $\sigma_t\le\sigma_\star(\varepsilon)$ and $\varepsilon_{A,q}\le c_{A,q}\varepsilon_{\rm term}\sigma_\star(\varepsilon)^{-q} \bigl(\log(e/\varepsilon)\bigr)^{-q/2}$, this yields
\[
    \sigma_t^q|\bnu_i^{\blambda}|
    \varepsilon_{A,q}
    \le
    C_q\sigma_\star(\varepsilon)^q
    \bigl(\log(e/\varepsilon)\bigr)^{q/2}
    c_{A,q}\varepsilon_{\rm term}
    \sigma_\star(\varepsilon)^{-q}
    \bigl(\log(e/\varepsilon)\bigr)^{-q/2}
    \le
    \varepsilon_{\rm term}
\]
after choosing $c_{A,q}$ small enough.
This proves \cref{eq:A-coeff-weighted-error}.

The coefficient network contributes the terms in \cref{eq:A-coeff-size} with accuracy $\varepsilon_{A,q}/2$.
The coordinate network is obtained from \cref{cor:active-chart-coordinate-from-proj-NN} with accuracy $\varepsilon_u=c\varepsilon_{A,q}^{1/\alpha_q}$; substituting this value into \cref{eq:active-chart-coordinate-from-proj-size} gives the terms with scale $D\varepsilon_{A,q}^{-2/\alpha_q}$.
The final composition and the finite summations change the displayed quantities only by fixed constants.
After increasing $C_{\mathcal A,q}$, this proves \cref{eq:A-coeff-composed-full-size}.
In particular, the coordinate network's non-logarithmic contribution is
\[
    (D\varepsilon_u^{-2})^{d/(2(\beta\vee1))}
    \lesssim
    D^{d/(2(\beta\vee1))}
    \varepsilon_{A,q}^{-d/(\alpha_q(\beta\vee1))}.
\]
Together with the coefficient contribution $\varepsilon_{A,q}^{-d/(\beta-q)}$, this proves \cref{eq:A-coeff-composed-size}.

Finally, since $\sigma_\star(\varepsilon)\asymp\varepsilon^{1/\beta}$ and $\varepsilon_{\rm term}\asymp\varepsilon$, for sufficiently small $\varepsilon$, the minimum in \cref{eq:def:eps-A-q-weighted} is inactive and
\[
    \varepsilon_{A,q}
    \asymp
    \varepsilon^{1-q/\beta}\bigl(\log(e/\varepsilon)\bigr)^{-q/2}
    =
    \varepsilon^{(\beta-q)/\beta}\bigl(\log(e/\varepsilon)\bigr)^{-q/2}.
\]
Substituting this relation into the first term of \cref{eq:A-coeff-composed-size} gives
\[
    \varepsilon_{A,q}^{-d/(\beta-q)}
    \lesssim
    \varepsilon^{-d/\beta}
    \bigl(\log(e/\varepsilon)\bigr)^{qd/(2(\beta-q))}.
\]
For the second term, if $\beta-q\ge1$, then $\alpha_q=1$ and
\[
    \varepsilon_{A,q}^{-d/(\alpha_q(\beta\vee1))}
    \lesssim
    \varepsilon^{-d(\beta-q)/(\beta(\beta\vee1))}
    \bigl(\log(e/\varepsilon)\bigr)^{qd/(2(\beta\vee1))}
    \le
    \varepsilon^{-d/\beta}
    \bigl(\log(e/\varepsilon)\bigr)^{qd/(2(\beta\vee1))},
\]
because $\beta-q\le\beta\vee1$.
If $\beta-q<1$, then $\alpha_q=\beta-q$, and
\[
    \varepsilon_{A,q}^{-d/(\alpha_q(\beta\vee1))}
    \lesssim
    \varepsilon^{-d/(\beta(\beta\vee1))}
    \bigl(\log(e/\varepsilon)\bigr)^{qd/(2(\beta-q)(\beta\vee1))}
    \le
    \varepsilon^{-d/\beta}
    \bigl(\log(e/\varepsilon)\bigr)^{qd/(2(\beta-q)(\beta\vee1))},
\]
because $\beta\vee1\ge1$.
The displayed polylogarithmic rate follows.
	\end{proof}

\begin{corollary}[Active-branch component approximation on an enlarged chart radius]
\label{cor:active-components-fixed-radius}
Fix $c_a>0$.
Assume the $c_a$-active localization condition
\[
    \delta_{\rm act}^{(c_a)}(\varepsilon,t)<\eta_\Pi/2,
    \qquad t\in\gI_{\rm sm}(\varepsilon),
\]
holds.
Then, for every chart $i$, on $\gK_i^{(c_a)}(\varepsilon)$, the residual bound
\[
    \|\bnu_i(\bx,t)\|_2
    \le
    c_a\sqrt{\log(e/\varepsilon)}
\]
holds, and the following fixed-$c_a$ approximation statements are valid.
Their error bounds and network-size orders are the same as in the cited default statements, with constants allowed to depend on the fixed number $c_a$, and with the replacement
\[
    \gK_i(\varepsilon)\mapsto \gK_i^{(c_a)}(\varepsilon),
    \qquad
    \gK_{\rm act}(\varepsilon)\mapsto
    \gK_{\rm act}^{(c_a)}(\varepsilon),
    \qquad
    C_{\nu,\varepsilon}\mapsto
    2\sigma_{\tdown}^{-1}+c_a\sqrt{\log(e/\varepsilon)}.
\]
\begin{enumerate}[(i)]
    \item The buffered-chart and reach-tube conclusions of \cref{lem:buffered-active-chart,lem:active-chart-inside-reach-tube} hold on $\gK_i^{(c_a)}(\varepsilon)$.
    In particular $\bu_i^\Pi(\bx,t)\in K_i^\circ$ there.

    \item For every coordinate accuracy $\varepsilon_u$, the chart-coordinate network in \cref{cor:active-chart-coordinate-from-proj-NN} can be chosen so that its error bound holds uniformly on $\gK_i^{(c_a)}(\varepsilon)$.

    \item \Cref{thm:global-proj-NN} has the same statement on the enlarged active union $\gK_{\rm act}^{(c_a)}(\varepsilon)$.

    \item The residual, normal, Gaussian-factor, and weighted-coefficient approximation statements in \cref{lem:approx:r-i--nu-i,lem:approx:gaussian-factor,cor:approx:weighted-A-coeff} hold uniformly on $\gK_i^{(c_a)}(\varepsilon)$.
\end{enumerate}
For fixed $c_a$, the polynomial and logarithmic orders in all displayed network sizes are unchanged.
\end{corollary}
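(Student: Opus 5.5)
The approach is to rerun each cited default proof with the radius multiplier $c_\star$ replaced by the fixed number $c_a$. Everywhere $c_\star$ entered those proofs, it entered only through two consequences of the active distance condition: the bound $\dist(\bxi(\bx,t),\gM)\le\delta_{\rm act}(\varepsilon,t)<\eta_\Pi/2$, and the residual bound $\|\bnu_i\|_2\le c_\star\sqrt{\log(e/\varepsilon)}$.

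We first establish the residual bound on $\gK_i^{(c_a)}(\varepsilon)$ exactly as in \cref{eq:active-residual-bound-for-approx}. There is $\by\in\gS_i$ with $\|\bx-m_t\by\|_2\le c_a\sigma_t\sqrt{\log\varepsilon^{-1}}$. Since $\bpi$ is the nearest point, $\|\br_i\|_2=m_t\dist(\bxi,\gM)\le\|\bx-m_t\by\|_2$, and dividing by $\sigma_t$ gives the claim.

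For (i), we repeat the proofs of \cref{lem:buffered-active-chart,lem:active-chart-inside-reach-tube} verbatim. The only input used there is $\|\bxi-\by\|_2\le\delta^{(c_a)}_{\rm act}(\varepsilon,t)<\eta_\Pi/2$, which is precisely the assumed $c_a$-localization condition. From it we get reach-tube membership, $2\delta^{(c_a)}_{\rm act}<B_i^{\rm buff}/2$, the statement $\bu_i^\Pi\in K_i^\circ$, and orthogonality. The same inequality yields \cref{lem:invertible:proj}, since $\delta^{(c_a)}_{\rm act}\sfM_i^{(2)}<\lambda_i^{(-)}/2$. It also yields \cref{eq:GN-normal-distance-bound} in \cref{lem:GN-contraction}, since $\delta^{(c_a)}_{\rm act}<1/(4L_{A,i})$.

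For (ii) and (iii), we observe that the Gauss--Newton contraction, the anchor net of \cref{lem:proj-anchor-init}, the error propagation of \cref{lem:active-anchored-GN-error}, and the objective gate in \cref{thm:global-proj-NN} use only (i) and \cref{lem:reach-tube-objective-gap} with $\delta=\delta_\Pi$. The anchors and $r_i^{\rm GN}$ do not depend on $c_a$. The one place $c_a$ enters is the uniform bound $M_{x,i}$ on $\|\bx\|_\infty$ in \cref{eq:GN-step-x-bounds}. It changes by a factor depending only on $c_a$ and so is absorbed into $C_{i,\gT}$, $C_{i,\Pi}$, $C_{\rm proj}$, and $C^{\rm coord}_{i,\Pi}$. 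The corollary on chart coordinates then follows by the same affine-plus-clipping argument.

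For (iv), we substitute the new residual bound into \cref{lem:approx:r-i--nu-i,lem:approx:gaussian-factor}. This is exactly the stated replacement of $C_{\nu,\varepsilon}$ by $2\sigma_{\tdown}^{-1}+c_a\sqrt{\log(e/\varepsilon)}$. In \cref{cor:approx:weighted-A-coeff}, the bound $|\bnu_i^{\blambda}|\le C_q(\log(e/\varepsilon))^{q/2}$ holds with $C_q$ now depending on $c_a$, and we absorb this by shrinking $c_{A,q}$. The coefficient networks themselves act on $\Box_i^A$ and are unchanged.

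The main obstacle is bookkeeping rather than analysis. We must check that no constant silently required $c_a\le c_\star$. The inactive-tail choice of $c_\star$ is not used in these statements. The Laplace threshold $\bar\sigma_{\rm Lap}$ involves $\|\bnu\|$ only through $|Z_\sigma|\le1/2$, but that condition lies in \cref{prop:laplace-local-chart} and not in the listed components. Consequently every change is a multiplicative constant inside the logarithm or inside the $C_{(\cdot)}$ factors, and the polynomial and logarithmic orders of the sizes are preserved.
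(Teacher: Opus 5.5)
Your proposal is correct and follows essentially the same route as the paper: rerun each default proof with $c_\star$ replaced by $c_a$. The active set enters only through the localization inequality $\delta^{(c_a)}_{\rm act}(\varepsilon,t)<\eta_\Pi/2$ and the resulting residual bound, and the fixed-$c_a$ projection theorem is then chained into the chart-coordinate, residual, Gaussian-factor, and weighted-coefficient statements. Your explicit checks of the three margins built into $\eta_\Pi$ (buffer, curvature, Gauss--Newton), and your absorption of the $c_a$-dependent monomial constant into $c_{A,q}$, are a more detailed version of the same bookkeeping the paper does.
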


\begin{proof}
For the buffered-chart and reach-tube statements, the original proofs use the active-set
definition only through the distance estimate
\[
    \|\bx-m_t\by\|_2
    \le
    c_a\sigma_t\sqrt{\log(e/\varepsilon)},
    \qquad \by\in\gS_i,
\]
and the resulting reach/local-coordinate condition
$\delta_{\rm act}^{(c_a)}(\varepsilon,t)<\eta_\Pi/2$.
The assumed $c_a$-active localization condition therefore gives the same projected
coordinate, buffered-domain inclusion, and uniqueness conclusions on
$\gK_i^{(c_a)}(\varepsilon)$.
The residual estimate becomes
\[
    \|\br_i(\bx,t)\|_2
    \le
    c_a\sigma_t\sqrt{\log(e/\varepsilon)},
    \qquad
    \|\bnu_i(\bx,t)\|_2
    \le
    c_a\sqrt{\log(e/\varepsilon)}.
\]
Thus every place where $C_{\nu,\varepsilon}$ appears may use $2\sigma_{\tdown}^{-1}+c_a\sqrt{\log(e/\varepsilon)}$.
For the global projection theorem, the original argument uses only the following two facts: an active pair $(\bx,t)$ supplies a chart $i_\star$ for which the exact projected coordinate lies in $K_{i_\star}^\circ$, and all other candidates are separated by the same objective gap outside the accepted near-minimizer set.
Both facts hold with $\gK_{\rm act}^{(c_a)}(\varepsilon)$ in place of $\gK_{\rm act}(\varepsilon)$, because $(\bx,t)\in\gK_{i_\star}^{(c_a)}(\varepsilon)$ and the enlarged buffered-chart conclusion gives the good coordinate in $K_{i_\star}^\circ$.
Thus the minimizer-selection and error estimates of \cref{thm:global-proj-NN} hold on $\gK_{\rm act}^{(c_a)}(\varepsilon)$ with only fixed-$c_a$ changes in constants.
This proves the fixed-$c_a$ version of \cref{thm:global-proj-NN}.
Composing that fixed-$c_a$ projection-center network with the affine extension of the chart map $\phi_i$ and the exact box clipping layer gives the fixed-$c_a$ version of \cref{cor:active-chart-coordinate-from-proj-NN}.

The proofs of the residual and Gaussian-factor lemmas then use this fixed-$c_a$ global projection theorem in place of the default one, together with the enlarged residual bound displayed above.
The weighted coefficient corollary uses the fixed-$c_a$ chart-coordinate network and the same coefficient extension.
For fixed $c_a$, the replacement only changes constants and powers of the same logarithmic factor.
The constructions, error allocations, and network sizes are otherwise identical.
\end{proof}

\begin{corollary}[$H$-active component approximation]
\label{cor:H-active-components}
Fix $c_a>0$, $H\ge1$, and $0<\varepsilon\le1/2$.
Assume $[t_0,t_1]\subset\gI_{\rm sm}(\varepsilon)$ and
\[
    c_a\frac{\sigma_t}{m_t}\sqrt H\le\eta_\Pi/2,
    \qquad t\in[t_0,t_1].
\]
Set $\sigma_H^\star:=\sup_{t\in[t_0,t_1]}\sigma_t$, so $\sigma_H^\star\le C_{\rm sm}\varepsilon^{1/\beta}$.
For
\[
    A_t(H):=\{\dist(\bx,m_t\gM)\le C_\varrho\sigma_t\sqrt H\},
\]
define
\[
    \gK_i^{(c_a,H)}
    :=
    \left\{(\bx,t):\bx\in A_t(H),\ \exists \by\in\gS_i\text{ with }\|\bx-m_t\by\|_2\le c_a\sigma_t\sqrt H\right\}.
\]
Then, for every chart $i$, on $\gK_i^{(c_a,H)}$ the projection $\Pi_{\gM}(\bx/m_t)$ is unique, the projected coordinate satisfies $\bu_i^\Pi(\bx,t)\in K_i^\circ$, and
\[
    \|\bnu_i(\bx,t)\|_2\le c_a\sqrt H.
\]
Moreover, the component approximation statements in \cref{cor:active-chart-coordinate-from-proj-NN,thm:global-proj-NN,lem:approx:r-i--nu-i,lem:approx:gaussian-factor,cor:approx:weighted-A-coeff} have $H$-active versions on the sets $\gK_i^{(c_a,H)}$ and $\bigcup_i\gK_i^{(c_a,H)}$.
The corresponding size bounds are obtained from the cited bounds by replacing $C_{\nu,\varepsilon}$ with $2\sigma_{\tdown}^{-1}+c_a\sqrt H$ and by replacing each logarithmic scale $\log(C\varepsilon^{-1})$ with $H+\log(C\varepsilon^{-1})$.
More explicitly, for the weighted-coefficient statement, fix the same indices $i,R,q,\blambda$ and the same term accuracy $0<\varepsilon_{\rm term}\le1/2$ as in \cref{cor:approx:weighted-A-coeff}.
In the $H$-active version of that corollary, the coefficient tolerance is
\[
    \varepsilon_{A,q}^{H}
    :=
    \frac12
    \wedge
    c_{A,q}\varepsilon_{\rm term}
    (\sigma_H^\star)^{-q}
    H^{-q/2}.
\]
With this replacement, the weighted error bound in \cref{eq:A-coeff-weighted-error} holds on $\gK_i^{(c_a,H)}$ with $\varepsilon_{A,q}$ replaced by $\varepsilon_{A,q}^{H}$.
For fixed $c_a$, these replacements preserve the same non-logarithmic intrinsic costs and introduce only additional fixed polynomial powers of $H$.
\end{corollary}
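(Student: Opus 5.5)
The proof reduces the $H$-active statement to the fixed-$c_a$ statement \cref{cor:active-components-fixed-radius}. We reparametrize the logarithmic localization scale by $H$ in place of $\log(e/\varepsilon)$. The default proofs use the active set only through one distance estimate,
\[
    \|\bx-m_t\by\|_2\le c_a\sigma_t\sqrt{L},\qquad \by\in\gS_i,
\]
with $L=\log(e/\varepsilon)$, and through the reach/buffer condition $c_a(\sigma_t/m_t)\sqrt L<\eta_\Pi/2$. The plan is to check that every step of the geometric lemmas goes through verbatim with $L$ replaced by $H$, under the assumed condition $c_a(\sigma_t/m_t)\sqrt H\le\eta_\Pi/2$ on $[t_0,t_1]$.

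\emph{Geometry.} For $(\bx,t)\in\gK_i^{(c_a,H)}$, put $\bxi=\bx/m_t$. Then $\dist(\bxi,\gM)\le c_a(\sigma_t/m_t)\sqrt H\le\eta_\Pi/2\le\kappa/2$, so $\Pi_{\gM}(\bxi)$ is unique. The triangle-inequality argument of \cref{lem:buffered-active-chart} gives $\|\Pi_{\gM}(\bxi)-\by\|_2\le 2\dist(\bxi,\gM)\le\eta_\Pi\le B_i^{\rm buff}/2$, which places $\bu_i^\Pi$ in $K_i^\circ$. Orthogonality, \cref{lem:invertible:proj}, and the Gauss--Newton contraction \cref{lem:GN-contraction} need only $\dist(\bxi,\gM)<\eta_\Pi/2$, so they transfer directly. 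The residual bound $\|\bnu_i\|_2\le c_a\sqrt H$ follows exactly as in \cref{eq:active-residual-bound-for-approx}: $\|\br_i\|_2=m_t\dist(\bxi,\gM)\le\|\bx-m_t\by\|_2$.

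\emph{Network components.} The anchors, the one-step and unrolled Gauss--Newton networks, the objective gate and the reciprocal normalization are built exactly as in \cref{lem:proj-local-coordinate-NN,thm:global-proj-NN,cor:active-chart-coordinate-from-proj-NN}. Their correctness used only the geometric facts just re-established and uniform input bounds. On $A_t(H)$ we have $\|\bx\|_\infty\le1+C_\varrho\sqrt H$, so the ranges $M_{x,i}$ and $C_{i,\times}$ grow by a factor $\sqrt H$; this enters only inside logarithms, turning $\log(C\varepsilon^{-1})$ into $H+\log(C\varepsilon^{-1})$ up to constants. In \cref{lem:approx:r-i--nu-i,lem:approx:gaussian-factor}, every occurrence of $c_\star\sqrt{\log(e/\varepsilon)}$ is replaced by $c_a\sqrt H$, which yields $C_{\nu,\varepsilon}\mapsto2\sigma_{\tdown}^{-1}+c_a\sqrt H$. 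The exponential-network input is now bounded by $c_a^2H/2$, and \cref{lem:approx:exp} then costs only polylogarithmic factors in $H$.

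\emph{Weighted coefficient.} We repeat the proof of \cref{cor:approx:weighted-A-coeff} with $|\bnu_i^{\blambda}|\le C_q(c_a\sqrt H)^{q}$ and $\sigma_t\le\sigma_H^\star$. The choice $\varepsilon_{A,q}^H=\tfrac12\wedge c_{A,q}\varepsilon_{\rm term}(\sigma_H^\star)^{-q}H^{-q/2}$ then gives
\[
    \sigma_t^q|\bnu_i^{\blambda}|\,\varepsilon_{A,q}^H\le\varepsilon_{\rm term}
\]
once $c_{A,q}$ absorbs $C_qc_a^q$. The coefficient functions, their extensions and the network sizes are unchanged, because the non-logarithmic costs $(\varepsilon_{A,q}^H)^{-d/(\beta-q)}$ and $(D\varepsilon_u^{-2})^{d/(2(\beta\vee1))}$ have the same form. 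The only new effect is a factor $H^{q/2}$ inside $\varepsilon_{A,q}^{-1}$, which produces fixed polynomial powers of $H$.

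The main obstacle is the bookkeeping: one must confirm that no step secretly relied on $\log(e/\varepsilon)$ specifically, for example through the bound $\log\varepsilon^{-1}\lesssim\log(e/\sigma_t)$ used in the Laplace step. That bound belongs to \cref{prop:laplace-local-chart}, which is outside this corollary. The step to check most carefully is the gate in \cref{thm:global-proj-NN}. Its Lipschitz and precision budgets depend on the input range, so they acquire $\sqrt H$ factors, and I will verify that these enter only through $\log(C_{\rm proj}D\varepsilon_\Pi^{-2})$-type terms and hence become additive $H$ terms.
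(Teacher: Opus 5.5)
Your proposal is correct and follows essentially the same route as the paper. Both reduce to the fixed-radius corollary, observing that the default proofs use the active set only through the distance estimate and the reach/buffer condition. Both then replace $\sqrt{\log(e/\varepsilon)}$ by $\sqrt H$ in three places: the geometry; the scalar ranges of the network components, giving $C_{\nu,\varepsilon}\mapsto 2\sigma_{\tdown}^{-1}+c_a\sqrt H$ and logarithmic scales $H+\log(C\varepsilon^{-1})$; and the monomial bound $|\bnu_i^{\blambda}|\lesssim H^{q/2}$ for the weighted coefficient.

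One small slip: the triangle inequality gives $\|\Pi_{\gM}(\bxi)-\by\|_2\le 2\|\bxi-\by\|_2$, not $2\dist(\bxi,\gM)$. Your conclusion $\le\eta_\Pi\le B_i^{\rm buff}/2$ still holds, because the definition of $\gK_i^{(c_a,H)}$ gives $\|\bxi-\by\|_2\le c_a(\sigma_t/m_t)\sqrt H\le\eta_\Pi/2$ directly.
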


\begin{proof}
The proof of \cref{cor:active-components-fixed-radius} depends on the active set only through the reach condition and the residual bound.
The assumed $H$-active reach condition gives uniqueness of the metric projection, buffered-coordinate inclusion, and the Gauss--Newton contraction exactly as before, with $\sqrt{\log(e/\varepsilon)}$ replaced by $\sqrt H$.
The residual bound follows from the active-distance inequality in the same way as in \cref{eq:active-residual-bound-for-approx}.
The chart-coordinate network, global projection network, residual network, and Gaussian-factor network use this radius only through the scalar ranges for products, reciprocals, ramps, and squared norms.
Thus their proofs carry over with $C_{\nu,\varepsilon}$ replaced by $2\sigma_{\tdown}^{-1}+c_a\sqrt H$ and with logarithmic factors enlarged to $H+\log(C\varepsilon^{-1})$.
For the coefficient composition, the only changed analytic estimate is $|\bnu_i^{\blambda}|\le C_qH^{q/2}$ when $\|\blambda\|_1\le q$.
The definition of $\varepsilon_{A,q}^{H}$ and the bound $\sigma_t\le\sigma_H^\star$ give the same weighted error proof as \cref{cor:approx:weighted-A-coeff}.
The intrinsic H\"older approximation exponents are unchanged, and the new powers of $H$ are absorbed into the stated logarithmic-scale replacement.
\end{proof}

\begin{lemma}[Certified $H$-active chart gate from objective weights]
\label{lem:H-certified-chart-objective-gate}
Fix a chart index $i$, a radius multiplier $c_a\ge c_\star$, $H\ge1$, $0<\varepsilon\le1/2$, and a coordinate tolerance $0<\eta\le1$.
Let $[t_0,t_1]\subset\gI_{\rm sm}(\varepsilon)$ and assume
\[
    c_a\frac{\sigma_t}{m_t}\sqrt H\le\eta_\Pi/2,
    \qquad t\in[t_0,t_1].
\]
Set $A_t(H):=\{\dist(\bx,m_t\gM)\le C_\varrho\sigma_t\sqrt H\}$.
Define $\gK_i^{(c_a,H)}$ from this $A_t(H)$ as in \cref{cor:H-active-components}.
Let $\Box_i^A$ be any fixed coordinate box with $K_i^\circ\Subset\Box_i^A\Subset\Box_i$.
Then there exist ReLU networks
\[
    \Gamma_i^H:\R^D\times[\tdown,\tup]\to[0,1],
    \qquad
    \breve\bu_i:\R^D\times[\tdown,\tup]\to\Box_i,
    \qquad
    \breve\bu_i^A:\R^D\times[\tdown,\tup]\to\Box_i^A,
\]
with the following properties, uniformly for $t\in[t_0,t_1]$ and $\bx\in A_t(H)$.
If $(\bx,t)\in\gK_i^{(c_a,H)}$, then
\begin{equation}
    \Gamma_i^H(\bx,t)=1,
    \qquad
    \|\breve\bu_i(\bx,t)-\bu_i^\Pi(\bx,t)\|_\infty\le\eta,
    \qquad
    \|\breve\bu_i^A(\bx,t)-\bu_i^\Pi(\bx,t)\|_\infty\le\eta.
    \label{eq:H-certified-gate-active}
\end{equation}
If $\Gamma_i^H(\bx,t)>0$, then $\Pi_{\gM}(\bx/m_t)\in U_i$ and, with $\bu_i^\Pi(\bx,t):=\phi_i(\Pi_{\gM}(\bx/m_t))$,
\begin{equation}
    \|\breve\bu_i(\bx,t)-\bu_i^\Pi(\bx,t)\|_\infty\le\eta .
    \label{eq:H-certified-gate-positive}
\end{equation}
The network $\breve\bu_i^A$ is the exact coordinatewise clipping of $\breve\bu_i$ to $\Box_i^A$.
The sizes are bounded by the $H$-active global projection-objective construction with projection accuracy $c\eta$ and with every logarithmic scale enlarged to $H+\log(C\varepsilon^{-1})$; in particular, this gate adds no non-logarithmic factor beyond the intrinsic projection-objective cost $\eta^{-d/(\beta\vee1)}$ and fixed-atlas arithmetic.
\end{lemma}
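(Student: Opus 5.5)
We build all three networks on top of the $H$-active version of \cref{thm:global-proj-NN} given by \cref{cor:H-active-components}. Fix a projection accuracy $\varepsilon_\Pi=c\eta$ with $c$ small, depending on the Lipschitz constant $L_{\phi,i}^{\rm amb}$ of the affine extension $\phi_i^{\rm amb}$ and on the buffer $\dist(K_i^\circ,\partial\Box_i)$.

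\textbf{Coordinate networks.} We set
\[
    \breve\bu_i:=\Pi_i^{(\rm box)}\circ\phi_i^{\rm amb}\circ\widetilde\Pi_{\varepsilon_\Pi},
\]
exactly as in the proof of \cref{cor:active-chart-coordinate-from-proj-NN}. We then let $\breve\bu_i^A$ be the exact coordinatewise clipping of $\breve\bu_i$ to $\Box_i^A$. On $\gK_i^{(c_a,H)}$ the true coordinate lies in $K_i^\circ\subset\Box_i^A$, so both clippings fix it and are $1$-Lipschitz. This gives the two coordinate bounds in \eqref{eq:H-certified-gate-active}.

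\textbf{Gate.} The delicate requirement is \eqref{eq:H-certified-gate-positive}: the gate may be positive only at points where the projection is already certified. We make the gate a ReLU ramp in a computable certificate rather than in the active distance itself.

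The natural certificate is the squared distance from $\bx$ to $m_t\gS_i$. It is not directly computable, so we replace it by a computable surrogate. We approximate the projection-coordinate output $\widehat\bu$, before clipping, by an affine image. We then compute
\[
    \Delta_i(\bx,t)=\dist_\infty\bigl(\widehat\bu,K_i^{\circ,-}\bigr),
\]
where $K_i^{\circ,-}\Subset K_i^\circ$ is an intermediate box with $\phi_i(\gS_i)\subset\operatorname{int}K_i^{\circ,-}$. This distance is exactly ReLU-computable as a sum of ramps. We also compute the normalized objective
\[
    \widetilde{\gD}/\sigma_t^2\approx\|\bnu\|_2^2/m_t^2,
\]
using the squared residual network from \cref{lem:approx:r-i--nu-i}. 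The gate is
\[
    \Gamma_i^H=\theta_1(\Delta_i)\cdot\theta_2\bigl(\|\widetilde\bnu\|_2^2/H\bigr),
\]
where $\theta_1,\theta_2$ are fixed ramps: $\theta_1=1$ on $[0,0]$ and $\theta_1=0$ beyond half the buffer; $\theta_2=1$ below $c_a^2$ and $\theta_2=0$ beyond $4c_a^2$. The product of two numbers in $[0,1]$ is realized as a min, which ReLU computes exactly.

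\textbf{Active case.} On $\gK_i^{(c_a,H)}$ we have $\bu_i^\Pi\in\phi_i(\gS_i)$-neighbourhood$\subset K_i^{\circ,-}$ up to $\eta$. After shrinking $c$ this gives $\Delta_i=0$, and $\|\bnu\|_2^2\le c_a^2H$, so $\Gamma_i^H=1$.

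\textbf{Positive-gate case.} Suppose $\Gamma_i^H>0$. Then the approximate residual is at most $2c_a\sqrt H$, and because $\bx\in A_t(H)$ we get
\[
    \dist(\bx/m_t,\gM)\le 2c_a\sigma_t\sqrt H/m_t\le\eta_\Pi.
\]
So the projection is unique by reach. This is the main obstacle: on this enlarged set we must show that the projection network output is still accurate. We would handle it by proving the global projection theorem on the larger tube $\dist(\bxi,\gM)\le\eta_\Pi$, applying \cref{lem:reach-tube-objective-gap} with $\delta=\eta_\Pi\le\kappa/2$ in place of $\delta_\Pi$. To do this we enlarge $c_a$ to $2c_a$ in \cref{cor:H-active-components} and strengthen the reach hypothesis by a factor of $2$, absorbing the change into constants. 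If that strengthening is not allowed, we would instead exploit the fact that $\bx\in A_t(H)$ forces $(\bx,t)$ into some $\gK_j^{(C_\varrho,H)}$, and use the union version of the theorem on $\bigcup_j\gK_j^{(c_a',H)}$ with $c_a'=\max(c_a,C_\varrho)$.

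Granting this, $\widehat\bu$ is $c\eta$-close to $\phi_i^{\rm amb}(\Pi_{\gM}(\bxi))$. Since $\Delta_i<$ half the buffer, $\phi_i^{\rm amb}(\Pi_{\gM}(\bxi))$ lies inside $K_i^\circ$ and hence $\Pi_{\gM}(\bxi)\in U_i$. For this step, use the injectivity of the tangent projection on $U_i$ together with the buffer $B_i^{\rm buff}$, which excludes other sheets of $\gM$ within reach distance. The coordinate bound then follows as in the active case.

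\textbf{Size.} The cost is that of the projection objective at accuracy $c\eta$ plus $\Ord(d+D)$ exact ramp layers and logarithmic arithmetic, with logarithmic scales enlarged to $H+\log(C\varepsilon^{-1})$ exactly as in \cref{cor:H-active-components}.
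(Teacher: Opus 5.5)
There is a genuine gap in the positive-gate case, and this is exactly the property the lemma is meant to certify.

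\textbf{The gate does not localize to chart $i$.} The only chart-localizing factor in your gate is $\theta_1(\Delta_i)$, which tests whether the tangent-plane coordinate $\phi_i^{\rm amb}(\widetilde\Pi_{\varepsilon_\Pi})$ lies near $K_i^{\circ,-}$. The residual factor $\theta_2$ equals $1$ throughout $A_t(H)$, since every such point has $\|\bnu\|_2\le C_\varrho\sqrt H\le c_a\sqrt H/2$; so it localizes nothing.

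The affine map $\phi_i^{\rm amb}(\by)=\eta_i(\bV_i^\top(\by-\by_i)+\bc_i)$ is injective only on $U_i$, not on $\gM$. Other sheets of $\gM$, far from $\gS_i$ in $\R^D$, have tangent coordinates inside $K_i^{\circ,-}$. For a round sphere with a north-pole chart, the south-pole region is such a sheet. More generally, by a mod-$2$ degree argument, for any closed $\gM$ almost every $\bu\in\phi_i(U_i)$ has a second preimage in $\gM\setminus U_i$.

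Take $\bx=m_t\by'$ for such a $\by'$. Then $\bx\in A_t(H)$ and your gate is positive, while $\Pi_{\gM}(\bx/m_t)=\by'\notin U_i$. So \cref{eq:H-certified-gate-positive} fails. Downstream, in Step~6 of \cref{lem:H-localized-degaussianized-small-noise-chart}, the chart-$i$ coefficients would be evaluated at a spurious coordinate, possibly inside $S_i^{(R)}$, at a point where the true chart-$i$ integral is exponentially small.

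Appealing to $B_i^{\rm buff}$ does not help. It controls only points of $\gM$ already known to be ambiently close to $\gS_i$. Your surrogate discards precisely that ambient information, which was the whole content of the ``natural certificate'' you started from.

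\textbf{What is missing.} You need a certificate that ties the projection to the chart-$i$ sheet in $\R^D$. The paper gets it as follows.
\begin{itemize}
\item It keeps the chart label of each objective-weighted anchored Gauss--Newton candidate $\by_{j,\ba}=\bz_j(\widetilde\bu_{j,\ba})$, and ramps on the chart-$i$ share $\Omega_i/W_{\rm glob}$ of the gate mass.
\item If that share is positive, some chart-$i$ candidate, which is a point of $\bz_i(\Box_i)$, lies within $C\varepsilon_{\rm cert}$ of $\Pi_{\gM}(\bx/m_t)$ by \cref{lem:reach-tube-objective-gap}.
\item The positive separation $\dist(\bz_i(\Box_i),\gM\setminus U_i)$ then forces $\Pi_{\gM}(\bx/m_t)\in U_i$.
\item Conversely, on $\gK_i^{(c_a,H)}$ the good chart-$i$ anchor keeps the share at least $1/(2N_{\rm cand})$, so the ramp outputs $1$.
\end{itemize}
Within your framework, an equivalent repair is to add a consistency factor requiring $\|\widetilde\Pi_{\varepsilon_\Pi}-\widetilde\bz_i(\breve\bu_i)\|_2\lesssim\eta$, where $\widetilde\bz_i$ is a H\"older network for the inverse chart. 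This uses the same separation, and the coordinate bound then follows from $\phi_i^{\rm amb}\circ\bz_i=\mathrm{id}$ on $\Box_i$.

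\textbf{The flagged obstacle is not one.} Because $c_a\ge c_\star\ge C_\varrho$ and the sets $\gS_j$ cover $\gM$, every point of $A_t(H)$ already lies in some $\gK_j^{(c_a,H)}$. So the $H$-active projection network is accurate on all of $A_t(H)$ under the stated hypothesis. Your second alternative is the right one; the first would alter the lemma's hypothesis.
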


\begin{proof}
We repeat the objective-weight part of the proof of \cref{thm:global-proj-NN}, using the $H$-active extension from \cref{cor:H-active-components}, but we keep the chart identity of each candidate.
Let
\[
    \gJ:=\{(j,\ba):1\le j\le C_{\gM},\ \ba\in\gA_j^{\rm anc}\},
    \qquad
    N_{\rm cand}:=|\gJ|.
\]
Choose an ambient candidate accuracy $\varepsilon_{\rm cert}=c_{\rm cert}\eta$, where the fixed geometric constant $c_{\rm cert}>0$ will be decreased several times below.
For each $(j,\ba)\in\gJ$, form the anchored coordinate candidate $\widetilde\bu_{j,\ba}$, the manifold candidate $\by_{j,\ba}:=\bz_j(\widetilde\bu_{j,\ba})$, and an objective network $\widetilde\gD_{j,\ba}$ approximating $\|\bx/m_t-\by_{j,\ba}\|_2^2$.
The coordinate candidates are box-clipped by construction, so all $\by_{j,\ba}$ and all objective networks are well-defined on the whole set $A_t(H)$, even when chart $j$ is not the active chart.
The objective and arithmetic tolerances are chosen so that
\begin{equation}
    |\widetilde\gD_{j,\ba}-\|\bx/m_t-\by_{j,\ba}\|_2^2|
    \le
    \gamma/64,
    \qquad
    \gamma:=\varepsilon_{\rm cert}^2 ,
    \label{eq:H-certified-objective-accuracy}
\end{equation}
uniformly on $A_t(H)$.
Because $c_a\ge c_\star\ge C_\varrho$ and the chart supports cover $\gM$, every $(\bx,t)\in A_t(H)$ belongs to at least one $H$-active chart set $\gK_j^{(c_a,H)}$.
For such a chart, \cref{cor:H-active-components} gives $\bu_j^\Pi\in K_j^\circ$, and \cref{lem:proj-anchor-init} supplies an anchor whose local Gauss--Newton candidate is within $\Ord(\varepsilon_{\rm cert})$ of the nearest projection.
After decreasing $c_{\rm cert}$, the corresponding exact objective is at most $\gamma/64$ above the true minimum $\dist(\bx/m_t,\gM)^2$.

Let $\widetilde\gD_{\min}:=\min_{(j,\ba)\in\gJ}\widetilde\gD_{j,\ba}$ and define
\[
    \omega_{j,\ba}
    :=
    \ReLU\!\left(\gamma-\bigl(\widetilde\gD_{j,\ba}-\widetilde\gD_{\min}\bigr)\right),
    \qquad
    W_{\rm glob}:=\sum_{(j,\ba)\in\gJ}\omega_{j,\ba},
    \qquad
    \Omega_i:=\sum_{\ba\in\gA_i^{\rm anc}}\omega_{i,\ba}.
\]
At least one candidate attains $\widetilde\gD_{\min}$, hence
\begin{equation}
    \gamma\le W_{\rm glob}\le N_{\rm cand}\gamma .
    \label{eq:H-certified-global-weight-range}
\end{equation}
If $(\bx,t)\in\gK_i^{(c_a,H)}$, the good chart-$i$ candidate described above satisfies
\[
    \widetilde\gD_{i,\ba_\star}-\widetilde\gD_{\min}
    \le
    \frac{3\gamma}{64},
\]
because $\widetilde\gD_{\min}$ is not below the true minimum by more than $\gamma/64$.
Therefore $\omega_{i,\ba_\star}\ge 61\gamma/64$, and \cref{eq:H-certified-global-weight-range} gives
\begin{equation}
    \frac{\Omega_i}{W_{\rm glob}}
    \ge
    \tau_0,
    \qquad
    \tau_0:=\frac{1}{2N_{\rm cand}} .
    \label{eq:H-certified-active-ratio}
\end{equation}

Conversely, suppose $\Omega_i>0$.
Then some chart-$i$ candidate has positive weight.
The same calculation as \cref{eq:positive-weight-objective-bound,eq:positive-candidate-close}, with the tolerances in \cref{eq:H-certified-objective-accuracy}, implies
\begin{equation}
    \|\by_{i,\ba}(\bx,t)-\Pi_{\gM}(\bx/m_t)\|_2
    \le
    C\varepsilon_{\rm cert}
    \label{eq:H-certified-positive-candidate-close}
\end{equation}
for every positive-weight chart-$i$ candidate.
Let
\[
    \delta_i^U:=
    \dist_{\R^D}\bigl(\bz_i(\Box_i),\gM\setminus U_i\bigr)>0,
\]
with the convention $\dist(A,\emptyset)=+\infty$.
After decreasing $c_{\rm cert}$ so that $C\varepsilon_{\rm cert}<\delta_i^U/2$, \cref{eq:H-certified-positive-candidate-close} forces $\Pi_{\gM}(\bx/m_t)\in U_i$ whenever $\Omega_i>0$.
The inverse chart is Lipschitz on the compact set $\phi_i^{-1}(\Box_i)$, so every positive-weight chart-$i$ coordinate candidate satisfies
\begin{equation}
    \|\widetilde\bu_{i,\ba}(\bx,t)-\bu_i^\Pi(\bx,t)\|_\infty
    \le
    C_i\varepsilon_{\rm cert}.
    \label{eq:H-certified-positive-coordinate-close}
\end{equation}

Define the ideal chart coordinate average
\[
    \bu_{i,{\rm id}}(\bx,t)
    :=
    \frac{\sum_{\ba\in\gA_i^{\rm anc}}\omega_{i,\ba}(\bx,t)\widetilde\bu_{i,\ba}(\bx,t)}
    {\Omega_i(\bx,t)}
\]
whenever $\Omega_i>0$.
If $\Omega_i/W_{\rm glob}\ge\tau_0/32$, then \cref{eq:H-certified-global-weight-range} gives $\Omega_i\ge\tau_0\gamma/32$, and \cref{eq:H-certified-positive-coordinate-close} gives $\|\bu_{i,{\rm id}}-\bu_i^\Pi\|_\infty\le C_i\varepsilon_{\rm cert}$.
Implement the denominator as $\Omega_i\vee(\tau_0\gamma/64)$, approximate the reciprocal and products to error at most $c\eta$, and apply the exact clipping map $\Pi_i^{(\rm box)}$ as the final layer.
After decreasing $c_{\rm cert}$, the implemented coordinate $\breve\bu_i$ maps into $\Box_i$ and satisfies \cref{eq:H-certified-gate-positive} whenever the gate below is nonzero, because the clipping is $1$-Lipschitz and fixes $\bu_i^\Pi$.

It remains only to construct the binary-safe chart gate.
Approximate $\Omega_i/W_{\rm glob}$ with error at most $\tau_0/32$ on the interval in \cref{eq:H-certified-global-weight-range}.
Let $\theta_{\rm cert}:\R\to[0,1]$ be the ReLU ramp that is zero on $(-\infty,\tau_0/16]$ and one on $[\tau_0/2,\infty)$, and define $\Gamma_i^H$ by applying $\theta_{\rm cert}$ to this implemented ratio.
If $(\bx,t)\in\gK_i^{(c_a,H)}$, then \cref{eq:H-certified-active-ratio} and the ratio error imply that the ramp input is at least $\tau_0/2$, so $\Gamma_i^H=1$.
If $\Gamma_i^H>0$, then the exact ratio is positive and in fact $\Omega_i/W_{\rm glob}\ge\tau_0/32$, so the clipped denominator in the coordinate average is inactive and the coordinate certification above applies.
Finally, define $\breve\bu_i^A$ by exact coordinatewise clipping of $\breve\bu_i$ to $\Box_i^A$.
On $\gK_i^{(c_a,H)}$, the true coordinate lies in $K_i^\circ\Subset\Box_i^A$, and the coordinatewise clipping is $1$-Lipschitz and fixes this true coordinate.
All added operations are finite sums, minima, ReLU ramps, clipped reciprocals, products, and coordinatewise clipping, so their costs are logarithmic or fixed-atlas costs on top of the $H$-active projection-objective networks.
This proves the lemma.
\end{proof}

% ----------------------------------------------------------------------

\subsubsection{Projection--Laplace chart approximation for small noise}
\label{sec:app:approx:PQ:small-noise-branch}

We first state the chart-level projection--Laplace approximation used by the density-lower-bound small-noise construction.
The proposition below is deliberately local: it certifies the neural Laplace expansion only on the enlarged chart-active set where the chart coordinate and residual subnetworks have proved error bounds.
Outside the default chart-active set, the true chart integral is already small by the inactive-tail lemma; the final de-Gaussianized construction below adds a separate certified chart gate before using inactive-chart neural outputs.
This separation keeps the proof auditable because no local chart-coordinate network is used outside the region where its approximation guarantee is available.

\begin{proposition}[Projection--Laplace chart approximation for small noise]
\label{prop:small-noise-chart-proj-laplace}
Assume \cref{assump:manifold:exact,assump:manifold:density}.
Let $0<\varepsilon_{\rm sm}\le1/2$, and assume the localization condition \cref{eq:condition:delta}, as well as the same condition with $3c_\star$ in place of $c_\star$, holds for every $t\in\gI_{\rm sm}(\varepsilon_{\rm sm})$.
For every chart $i$ and every $R\in\mathcal R$, there exists a ReLU network $\widetilde R_i^{\rm sm}\in\nn(L,\bW,S,B)$ such that
\begin{equation}
    \sup_{\substack{
        t\in\gI_{\rm sm}(\varepsilon_{\rm sm})\\
        (\bx,t)\in\gK_i^{(3c_\star)}(\varepsilon_{\rm sm})
    }}
    \left|
      R_i(\bx,t)-\widetilde R_i^{\rm sm}(\bx,t)
    \right|
    \le
    \varepsilon_{\rm sm}.
    \label{eq:small-noise-chart-error}
\end{equation}
In addition, the true inactive tail satisfies
\begin{equation}
    \sup_{\substack{
        t\in\gI_{\rm sm}(\varepsilon_{\rm sm})\\
        \bx\in A_t,\;(\bx,t)\notin\gK_i(\varepsilon_{\rm sm})
    }}
    |R_i(\bx,t)|
    \le
    \varepsilon_{\rm sm}/6 .
    \label{eq:small-noise-chart-true-tail}
\end{equation}
Moreover, let $C_{\rm size,sm}\ge e$ be a fixed constant large enough to absorb the chart, schedule, coefficient, and arithmetic constants, and define
\begin{equation}
    \Lambda_{\rm sm}
    :=
    e\vee C_{\rm size,sm}\varepsilon_{\rm sm}^{-1}\vee \tdown^{-1}.
    \label{eq:def:Lambda-small-noise-chart-size}
\end{equation}
For finite exponents $\mathfrak a_{\rm sm}$ and $\mathfrak p_{\beta,d}$ depending only on $(\beta,d)$, the same construction can be chosen so that, up to fixed geometry-dependent constants,
\begin{equation}
    L+\log B
    \lesssim
    \log(e(d\vee\beta))\log^3\Lambda_{\rm sm},
    \qquad
    \|\bW\|_\infty+S
    \lesssim
    D^{\mathfrak a_{\rm sm}}A_{\gM}\varepsilon_{\rm sm}^{-d/\beta}\log^{\mathfrak p_{\beta,d}}\Lambda_{\rm sm}.
    \label{eq:small-noise-chart-size-summary}
\end{equation}
The expanded depth, width, sparsity, and log-weight bookkeeping is recorded in \cref{lem:small-noise-chart-size-bookkeeping}.
\end{proposition}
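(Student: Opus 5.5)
We will build $\widetilde R_i^{\rm sm}$ by replacing each factor of the Laplace representation in \cref{prop:laplace-local-chart} with a network and allocating the error budget $\varepsilon_{\rm sm}$ among them. On $\gK_i^{(3c_\star)}(\varepsilon_{\rm sm})$, the proof of \cref{prop:laplace-local-chart} goes through verbatim with $c_\star$ replaced by $3c_\star$: the hypothesis that the localization condition holds with $3c_\star$ is exactly what \cref{cor:active-components-fixed-radius}(i) needs for uniqueness of $\Pi_{\gM}(\bx/m_t)$, for $\bu_i^\Pi\in K_i^\circ$, and for $\|\bnu_i\|_2\le 3c_\star\sqrt{\log(e/\varepsilon_{\rm sm})}$. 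So we obtain
\[
R_i=\gG_i\sum_{q\le\floor{\beta}}\sigma_t^q\sum_{\|\blambda\|_1\le q}A^{(R)}_{i,q,\blambda}(\bu_i^\Pi,t)\bnu_i^{\blambda}+\overline{\Rem}_i^{(R)},
\]
where $|\overline{\Rem}_i^{(R)}|\le C_{\rm Lap}^{\max}\sigma_t^\beta\le C_{\rm Lap}^{\max}C_{\rm sm}^\beta\varepsilon_{\rm sm}\le\varepsilon_{\rm sm}/6$ by \cref{eq:choice:C-sm-Laplace}. The remaining $5\varepsilon_{\rm sm}/6$ is spent on the neural implementation.

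For the network, we use the fixed-$c_a$ versions (with $c_a=3c_\star$) of the following components:
\begin{enumerate}[(i)]
\item the Gaussian factor $\widetilde\gG_{i,\varepsilon_G}$ from \cref{lem:approx:gaussian-factor};
\item the normalized residual $\widetilde\bnu_{i,\varepsilon_{\rm res}}$ from \cref{lem:approx:r-i--nu-i}, with the monomials $\bnu^{\blambda}$ formed by \cref{lem:approx:mon};
\item the powers $\sigma_t^q$ from \cref{lem:approx:sigma-k};
\item the weighted coefficients $\widetilde{\mathcal A}^{(R)}_{i,q,\blambda}$ from \cref{cor:approx:weighted-A-coeff}, with $\sigma_\star(\varepsilon_{\rm sm})=C_{\rm sm}\varepsilon_{\rm sm}^{1/\beta}$ and term accuracy $\varepsilon_{\rm term}=c\,\varepsilon_{\rm sm}/N_{\rm terms}$, where $N_{\rm terms}\le C(D+1)^{\floor{\beta}}$.
\end{enumerate}
These are multiplied and summed. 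Every true factor is bounded on the active set: $\gG_i\le1$, $|\bnu^{\blambda}|\le C(\log(e/\varepsilon_{\rm sm}))^{q/2}$, and $|A|\le C_{\rm coef}$. Hence each product incurs error at most the sum of the factor errors times polylogarithmic bounds, and the input tolerances of the product lemma are logarithmic. The weighted coefficient error is controlled directly by \cref{eq:A-coeff-weighted-error}. We set $\varepsilon_G,\varepsilon_{\rm res}$ and the arithmetic tolerances to $c\,\varepsilon_{\rm sm}D^{-\floor{\beta}}(\log(e/\varepsilon_{\rm sm}))^{-C}$. Summing all errors then gives \cref{eq:small-noise-chart-error}.

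The tail bound \cref{eq:small-noise-chart-true-tail} is immediate from \cref{lem:approx:P-Q:tail} with $c_a=c_\star$: outside $\gK_i(\varepsilon_{\rm sm})$ we have $|R_i|\le C_{\rm tail}(1+c_\star\sqrt{\log(e/\varepsilon_{\rm sm})})^d\varepsilon_{\rm sm}^{c_\star^2/8}$. By the defining choice of $c_\star$, this is at most $\varepsilon_{\rm sm}/12\le\varepsilon_{\rm sm}/6$. No network enters this bound.

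The main obstacle is the size bookkeeping \cref{eq:small-noise-chart-size-summary}, specifically showing that every non-logarithmic cost is $\varepsilon_{\rm sm}^{-d/\beta}$ times a power of $D$ only.
\begin{itemize}
\item The coefficient networks contribute $\varepsilon_{A,q}^{-d/(\beta-q)}$. Since $\varepsilon_{A,q}\asymp\varepsilon_{\rm sm}^{(\beta-q)/\beta}$ up to logarithms and polynomial $D$-factors coming from $N_{\rm terms}$, this is $\varepsilon_{\rm sm}^{-d/\beta}D^{C}$, as computed at the end of \cref{cor:approx:weighted-A-coeff}.
\item The projection-based components (Gaussian factor, residual, coordinate) cost $(D\varepsilon^{-2})^{d/(2(\beta\vee1))}$ at accuracies polynomial in $\varepsilon_{\rm sm}$. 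These are likewise bounded by $\varepsilon_{\rm sm}^{-d/\beta}$, using $\beta\vee1\ge\beta$ for the residual and Gaussian parts and the $\alpha_q$ computation in \cref{cor:approx:weighted-A-coeff} for the coordinate part.
\end{itemize}
Each of these carries a factor $A_{\gM}D^{C}$. The number of terms is $\Ord(D^{\floor{\beta}})$, and parallelization multiplies width and sparsity by it. Depth and $\log B$ come from the deepest component, the unrolled Gauss--Newton network, which is $\log^3\Lambda_{\rm sm}$; the $\tdown^{-1}$ in $\Lambda_{\rm sm}$ absorbs the schedule lemmas' dependence on $\sigma_{\tdown}^{-1}$. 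Collecting the $D$-powers into $\mathfrak a_{\rm sm}$ and the logarithms into $\mathfrak p_{\beta,d}$, deferring the explicit tally to \cref{lem:small-noise-chart-size-bookkeeping}, completes the plan.
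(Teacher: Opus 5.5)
Your argument for the two error bounds is correct, but it handles the enlarged set $\gK_i^{(3c_\star)}(\varepsilon_{\rm sm})$ differently from the paper.

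The paper invokes \cref{prop:laplace-local-chart} only on the default set $\gK_i(\varepsilon_{\rm sm})$ and takes the analytic target to be zero off it. It multiplies the network by an explicit gate $\Theta_i^{\rm sm}=\theta_i^A(\widetilde\bu_i^\Pi)\,\theta_{\rm res}(\|\widetilde\bnu_i\|_2^2)$, which equals one on $\gK_i(\varepsilon_{\rm sm})$. It then controls the output on the transition band $\gK_i^{(3c_\star)}\setminus\gK_i$ by a support dichotomy. Either $\bu_i^\Pi\in\supp a_i^{(R)}$, in which case the normal Gaussian factor is at most $\varepsilon_{\rm sm}^{c_\star^2/2}$. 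Or it is not, in which case the support-preserving extension \cref{eq:A-coeff-extension-support} makes each approximated coefficient $\Ord(\varepsilon_{A,q})$.

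You instead note that the $3c_\star$ localization hypothesis is essentially all the proof of \cref{prop:laplace-local-chart} needs. So the Laplace expansion holds on the whole enlarged set, and the ungated product network approximates $R_i$ there directly. The statement makes no claim outside $\gK_i^{(3c_\star)}(\varepsilon_{\rm sm})$, so this loses nothing. It removes the coordinate bump, the residual-norm ramp, the transition-band case analysis, and any need for support preservation of the coefficient extension.

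What the paper's route buys is that the Laplace expansion is used only at the default radius, for which the threshold in \cref{eq:choice:C-sm-Laplace} was fixed. That is the one place where ``verbatim'' needs qualification. The check $|Z_\sigma|\le1/2$ in Step~3 of \cref{prop:laplace-local-chart} uses $\|\bnu\|_2\le c_\star\sqrt{\log\varepsilon^{-1}}$. So $\bar\sigma_{\rm Lap}$, and hence $C_{\rm sm}$, must be chosen for the radius $3c_\star$. The remainder constant $C_{\rm Lap}^{\max}$ is unaffected, because the polynomial $\bnu$-factors are absorbed by $e^{-\|\bnu\|_2^2/2}$.

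Your size tally has a loose end, and the paper's Step~5 has the same one. In \cref{lem:approx:r-i--nu-i,lem:approx:gaussian-factor}, the internal scales $C_{\rm res}\ge 4096\,C_{\rm proj}C_{\nu,\varepsilon}^2$ and $C_{\rm gau}\ge 144D^2C_{\rm res}C_{\nu,\varepsilon}^2$ contain $C_{\nu,\varepsilon}\ge 2\sigma_{\tdown}^{-1}$. These scales sit inside the non-logarithmic power $(\cdot)^{d/(2(\beta\vee1))}$. The reason is that $\bnu_i=\br_i/\sigma_t$, so that construction needs the projection center accurate to $\varepsilon_{\rm res}/(64C_{\nu,\varepsilon})\lesssim\varepsilon_{\rm res}\sigma_{\tdown}$. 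Consequently the residual and Gaussian-factor networks carry a polynomial factor in $\sigma_{\tdown}^{-1}$, which the logarithms of $\Lambda_{\rm sm}$ in \cref{eq:small-noise-chart-size-summary} do not absorb. Your claim that their cost is $D^{C}\varepsilon_{\rm sm}^{-d/\beta}$, at accuracies depending on $\varepsilon_{\rm sm}$ alone, therefore does not follow from the cited lemmas. This is not a weakness of your route relative to the paper's, but it is an unresolved gap in the size bound as derived.
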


\begin{lemma}[Small-noise chart size bookkeeping]
\label{lem:small-noise-chart-size-bookkeeping}
Work in the setting of \cref{prop:small-noise-chart-proj-laplace}.
For $0\le q\le \floor{\beta}$, write $\alpha_q:=(\beta-q)\wedge1$.
There is a finite exponent $\mathfrak a_{\rm sm}$, depending only on $(\beta,d)$, such that the network in \cref{prop:small-noise-chart-proj-laplace} may be chosen so that the following bounds hold.
More explicitly, one may take
\[
    \mathfrak a_{\rm sm}
    =
    \left\lceil
      \frac{\floor{\beta}\,d}{\beta-\floor{\beta}}
      +
      \frac{d}{\beta\vee1}
      +3
    \right\rceil .
\]
In the display below, every occurrence of $\log(C_{\rm size,sm}\varepsilon_{\rm sm}^{-1})$ may equivalently be replaced by $\log\Lambda_{\rm sm}$.
This only enlarges the stated bounds and makes the lower-time dependence of the schedule networks explicit.
\begin{equation}
    \begin{aligned}
    L
    &\lesssim
    \log(e(d\vee\beta))
    \log^3\!\bigl(C_{\rm size,sm}\varepsilon_{\rm sm}^{-1}\bigr),
    \\
    \|\bW\|_\infty
    &\lesssim
    D^{\mathfrak a_{\rm sm}}
    \frac{(D+d+\beta)^{\floor{\beta}+1}}{\floor{\beta}!}
    \Biggl[
      \bigl(C_{\rm size,sm}\varepsilon_{\rm sm}^{-2}\bigr)^{d/(2(\beta\vee1))}
      \\&\qquad+
      \sum_{q=0}^{\floor{\beta}}
      \Bigl\{
        \Bigl(
          C_{\rm size,sm}
          \varepsilon_{\rm sm}^{-2(1-q/\beta)/\alpha_q}
          \log^{q/\alpha_q}
          \bigl(C_{\rm size,sm}\varepsilon_{\rm sm}^{-1}\bigr)
        \Bigr)^{d/(2(\beta\vee1))}
        \\&\qquad+
        \Bigl(
          C_{\rm size,sm}
          \varepsilon_{\rm sm}^{-(1-q/\beta)}
          \log^{q/2}
          \bigl(C_{\rm size,sm}\varepsilon_{\rm sm}^{-1}\bigr)
        \Bigr)^{d/(\beta-q)}
      \Bigr\}
    \Biggr]
    \log^{3+d/2}\!\bigl(C_{\rm size,sm}\varepsilon_{\rm sm}^{-1}\bigr),
    \\
    S
    &\lesssim
    D^{\mathfrak a_{\rm sm}}
    A_{\gM}
    \frac{
      dD(D+d+\beta)^{\floor{\beta}}
      (d+2(\beta\vee1))^{d+4}
    }{\floor{\beta}!}
    \Biggl[
      \bigl(C_{\rm size,sm}\varepsilon_{\rm sm}^{-2}\bigr)^{d/(2(\beta\vee1))}
      \\&\qquad\quad+
      \sum_{q=0}^{\floor{\beta}}
      \Bigl\{
        \Bigl(
          C_{\rm size,sm}
          \varepsilon_{\rm sm}^{-2(1-q/\beta)/\alpha_q}
          \log^{q/\alpha_q}
          \bigl(C_{\rm size,sm}\varepsilon_{\rm sm}^{-1}\bigr)
        \Bigr)^{d/(2(\beta\vee1))}
        \\&\qquad\quad+
        \Bigl(
          C_{\rm size,sm}
          \varepsilon_{\rm sm}^{-(1-q/\beta)}
          \log^{q/2}
          \bigl(C_{\rm size,sm}\varepsilon_{\rm sm}^{-1}\bigr)
        \Bigr)^{d/(\beta-q)}
      \Bigr\}
    \Biggr]
    \log^{5+d/2}\!\bigl(C_{\rm size,sm}\varepsilon_{\rm sm}^{-1}\bigr),
    \\
    \log B
    &\lesssim
    \log^2\!\bigl(C_{\rm size,sm}\varepsilon_{\rm sm}^{-1}\bigr).
    \end{aligned}
    \label{eq:small-noise-chart-size}
\end{equation}
The factors $\log^{3+d/2}(C_{\rm size,sm}\varepsilon_{\rm sm}^{-1})$ and $\log^{5+d/2}(C_{\rm size,sm}\varepsilon_{\rm sm}^{-1})$ include the logarithmic overhead of the projection-coordinate gate, the residual ramp, and the final products.
No mesh of $\gS_i$ at scale $\sigma_{\tdown}$ is used; therefore the small-noise branch has no polynomial $\sigma_{\tdown}^{-1}$ factor in its width or sparsity bound.
The fixed localization constant $c_\star$ is chosen from the annular Gaussian-tail bound in \cref{lem:approx:P-Q:tail} and is independent of $\sigma_{\tdown}$.
For fixed $d,\beta$ and fixed geometry, the bracketed non-logarithmic factor in \cref{eq:small-noise-chart-size} is
\[
    \lesssim
    \varepsilon_{\rm sm}^{-d/\beta}
    \log^{\mathfrak p_{\beta,d}}\!\bigl(C_{\rm size,sm}\varepsilon_{\rm sm}^{-1}\bigr)
\]
after the finite order $q_{\rm geom}$ in \cref{eq:def:qgeom-main} is fixed, and one may take
\[
    \mathfrak p_{\beta,d}
    =
    C
    +
    \max\left\{
      8(q_{\rm geom}+2),
      \left\lceil
        \floor{\beta}+1+\frac{\floor{\beta}\,d}{\beta-\floor{\beta}}
        +\frac{d}{\beta\vee1}+\frac{d}{2}+6
      \right\rceil
    \right\}.
\]
Here the $C$ in the exponent is universal and independent of
$D,n,\sigma_t,p_{\min}$, and the numerical geometry constants; the resulting polynomial
ambient exponent may depend on the fixed $q_{\rm geom}$.  In the main $d>2$
regime we take the canonical choice $q_{\rm geom}=q_{\rm opt}(\beta)$ from \cref{eq:def:qgeom-opt};
then $q_{\rm geom}=\Ord_\beta(1)$ and the displayed formula gives
$\mathfrak p_{\beta,d}=\Ord_\beta(d)$.
Thus the small-noise branch has intrinsic approximation rate $\varepsilon_{\rm sm}^{-d/\beta}$, up to logarithmic factors, while the ambient-dimension dependence is polynomial in $D$.
The factor $D^{\mathfrak a_{\rm sm}}$ records the additional powers of the retained Laplace-term count that enter through the per-term coefficient accuracies, together with the squared-normal residual-gate cost; all lower-noise dependence is confined to logarithmic schedule and reciprocal accuracies.
\end{lemma}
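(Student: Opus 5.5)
The lemma is a bookkeeping statement: the network of \cref{prop:small-noise-chart-proj-laplace} is a finite parallel/sequential composition of subnetworks whose sizes are already recorded. My plan is to write that network explicitly as a sum over the retained Laplace terms in \cref{eq:laplace-expansion-local-chart}. Each term has the form
\[
    \gG_i\cdot\sigma_t^q\cdot A_{i,q,\blambda}^{(R)}(\bu_i^\Pi,t)\cdot\bnu_i^{\blambda},
    \qquad 0\le q\le\floor{\beta},\ \|\blambda\|_1\le q.
\]
I will then allocate the total error $\varepsilon_{\rm sm}$ among the terms and add the component sizes.

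The number of retained terms is $\sum_{q\le\floor{\beta}}\binom{D+q}{q}\lesssim (D+d+\beta)^{\floor{\beta}+1}/\floor{\beta}!$, which is the combinatorial prefactor in \cref{eq:small-noise-chart-size}. I give each term the budget $\varepsilon_{\rm term}:=c\,\varepsilon_{\rm sm}/N_{\rm terms}$. Since $N_{\rm terms}$ is polynomial in $D$, this rescaling turns an $\varepsilon_{\rm term}^{-a}$ cost into $D^{\Ord(a)}\varepsilon_{\rm sm}^{-a}$; this is exactly where $D^{\mathfrak a_{\rm sm}}$ comes from.

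For each term I invoke the following results, each at accuracy $\asymp\varepsilon_{\rm term}$ divided by the sup-norms of the other factors, which are bounded using $\|\bnu_i\|_2\le 3c_\star\sqrt{\log(e/\varepsilon_{\rm sm})}$ and $\gG_i\le1$:
\begin{enumerate}[(i)]
    \item \cref{lem:approx:gaussian-factor} for $\gG_i$;
    \item \cref{lem:approx:r-i--nu-i} for $\bnu_i$, after which the monomial $\bnu_i^{\blambda}$ is formed by \cref{lem:approx:mon};
    \item \cref{lem:approx:sigma-k} for $\sigma_t^q$;
    \item \cref{cor:approx:weighted-A-coeff} for the coefficient, in its enlarged-radius form from \cref{cor:active-components-fixed-radius} with $c_a=3c_\star$, so that the bound holds on $\gK_i^{(3c_\star)}$.
\end{enumerate}
The products are then taken with \cref{lem:approx:mon}. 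The error bound \eqref{eq:small-noise-chart-error} is the Laplace remainder, which is at most $\varepsilon_{\rm sm}/6$ by \cref{eq:choice:C-sm-Laplace}, plus the summed term errors. The tail bound \eqref{eq:small-noise-chart-true-tail} follows from \cref{lem:approx:P-Q:tail} with $c_a=c_\star$ and the choice of $c_\star$.

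For the sizes, depth and $\log B$ are the maximum over the parallel components plus the product depth, which gives $\log^3$. Width and sparsity are the sum over terms. For the coefficient network with $\varepsilon_{A,q}\asymp\varepsilon_{\rm term}\sigma_\star^{-q}\log^{-q/2}$ and $\sigma_\star\asymp\varepsilon_{\rm sm}^{1/\beta}$, \cref{eq:A-coeff-composed-full-size} gives the two bracketed families:
\[
    \bigl(\varepsilon_{\rm sm}^{-2(1-q/\beta)/\alpha_q}\bigr)^{d/(2(\beta\vee1))}
    \quad\text{(coordinate network)},
    \qquad
    \bigl(\varepsilon_{\rm sm}^{-(1-q/\beta)}\bigr)^{d/(\beta-q)}
    \quad\text{(intrinsic H\"older network)}.
\]
The projection, residual and Gaussian networks contribute the term $(C\varepsilon_{\rm sm}^{-2})^{d/(2(\beta\vee1))}$, multiplied by $A_{\gM}dD$.

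The main obstacle is tracking the ambient exponent. The worst case is $q=\floor{\beta}$, where the power $d/(\beta-\floor{\beta})$ multiplies the $D^{\floor{\beta}}$ coming from $N_{\rm terms}$ in the term budget. This produces the contribution $\floor{\beta}d/(\beta-\floor{\beta})$ to $\mathfrak a_{\rm sm}$. The coordinate network contributes $d/(\beta\vee1)$, from $(D\varepsilon^{-2})^{d/(2(\beta\vee1))}$ combined with the budget, and the remaining additive constant in $\mathfrak a_{\rm sm}$ comes from the product and residual-gate arithmetic.

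The final simplification uses \cref{cor:approx:weighted-A-coeff}, which already shows that each bracketed term is at most $\varepsilon_{\rm sm}^{-d/\beta}$ up to logarithmic factors. Collecting those log powers with the $\log^{5+d/2}$ overhead, and taking the maximum against the $8(q_{\rm geom}+2)$ logarithmic powers coming from the large-noise bookkeeping, gives the stated $\mathfrak p_{\beta,d}$. With $q_{\rm geom}=q_{\rm opt}(\beta)$, both candidates in that maximum are $\Ord_\beta(d)$, so $\mathfrak p_{\beta,d}=\Ord_\beta(d)$. Finally, no component uses a mesh at scale $\sigma_{\tdown}$; the dependence on $\sigma_{\tdown}^{-1}$ enters only through the schedule networks, logarithmically, which justifies replacing $\log(C_{\rm size,sm}\varepsilon_{\rm sm}^{-1})$ by $\log\Lambda_{\rm sm}$.
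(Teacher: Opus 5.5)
Your term-count and budget bookkeeping matches the paper's. The retained-term count $\binom{D+\floor{\beta}+1}{\floor{\beta}}\le C_\beta D^{\floor{\beta}}$, the per-term budget $\varepsilon_{\rm term}\propto\varepsilon_{\rm sm}\binom{D+\floor{\beta}+1}{\floor{\beta}}^{-1}$, and the worst case $q=\floor{\beta}$ giving $D^{\floor{\beta}d/(\beta-\floor{\beta})}$ are all the same. Two steps, however, do not go through as written.

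\emph{Lower-noise dependence.} Your closing claim, that $\sigma_{\tdown}^{-1}$ enters only logarithmically through the schedule networks, does not follow from the lemmas you invoke.

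In \cref{lem:approx:r-i--nu-i} the non-logarithmic factor is $(C_{\rm res}D\varepsilon_{\rm res}^{-2})^{d/(2(\beta\vee1))}$, where $C_{\rm res}\ge4096\,C_{\rm proj}C_{\nu,\varepsilon}^2$ and $C_{\nu,\varepsilon}=2\sigma_{\tdown}^{-1}+c_\star\sqrt{\log(e/\varepsilon)}$. So the residual network alone carries a factor $\sigma_{\tdown}^{-d/(\beta\vee1)}$, and $C_{\rm gau}$ in \cref{lem:approx:gaussian-factor} contains a further $C_{\nu,\varepsilon}^2$.

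This is not a loose constant. Since $\bnu_i=\sigma_t^{-1}(\bx-m_t\Pi_{\gM}(\bx/m_t))$, an error $\delta$ in the projection center becomes an error $m_t\delta/\sigma_t$ in $\bnu_i$. The Gaussian factor multiplying the $q=0$ term needs $\bnu_i$ to accuracy $\asymp\varepsilon_{\rm sm}$ up to logarithms, so the projection network must be accurate to $\asymp\sigma_{\tdown}\varepsilon_{\rm sm}$. A H\"older-network approximation of the chart maps at any fixed smoothness then costs a polynomial factor in $\sigma_{\tdown}^{-1}$.

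The paper's proof makes the same assertion by folding $C_{\rm res}$ into the ``fixed'' constant $C_{\rm size,sm}$, so citing it does not close the gap. You must either carry this factor explicitly in the width and sparsity bounds, or give a projection/residual construction whose cost does not grow with $\sigma_{\tdown}^{-1}$. This matters downstream: with $\tdown=t_0$ one has $\sigma_{t_0}^{-1}\asymp n^{(\beta+1)/(d+2\beta)}$, which is a power of $n$, not a logarithm.

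\emph{Error on the enlarged active set.} The bound \cref{eq:small-noise-chart-error} is required on $\gK_i^{(3c_\star)}(\varepsilon_{\rm sm})$. You control it as ``Laplace remainder plus term errors'' via \cref{eq:choice:C-sm-Laplace}. But \cref{prop:laplace-local-chart}, and the threshold $\bar\sigma_{\rm Lap}$ and constant $C_{\rm Lap}^{\max}$ behind \cref{eq:choice:C-sm-Laplace}, are established only on $\gK_i(\varepsilon_{\rm sm})$. For example, the bound $|Z_\sigma|\le1/2$ uses $\|\bnu_i\|_2\le c_\star\sqrt{\log\varepsilon^{-1}}$.

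The paper handles the transition region $\gK_i^{(3c_\star)}\setminus\gK_i$ differently. It takes the analytic target to be $0$ there, since the true $R_i$ is small by \cref{lem:approx:P-Q:tail}. It then bounds the network output by a support dichotomy:
\begin{itemize}
\item if $\bu_i^\Pi\in\supp a_i^{(R)}$, then $\|\br_i\|_2>c_\star\sigma_t\sqrt{\log\varepsilon_{\rm sm}^{-1}}$, and the Gaussian factor is at most $\varepsilon_{\rm sm}^{c_\star^2/2}$;
\item otherwise, the support-preserving extension \cref{eq:A-coeff-extension-support} and the H\"older bound make each implemented coefficient $\Ord(\varepsilon_{A,q}^{\rm sm})$.
\end{itemize}
You need that argument, or an explicit rerun of \cref{prop:laplace-local-chart} with $c_a=3c_\star$. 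The rerun works, but it forces a smaller threshold and hence a re-choice of $C_{\rm sm}$.

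A minor point on $\mathfrak a_{\rm sm}$: the term $d/(\beta\vee1)$ does not come from the coefficient-coordinate network, which adds only $D^{d/(2(\beta\vee1))}$ beyond the term budget. It comes from the squared norm $\|\bnu_i\|_2^2$ in the Gaussian factor, which needs coordinatewise accuracy $\asymp D^{-1/2}$ and therefore an internal scale $\propto D^2$.
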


\begin{proof}[Proof of \cref{prop:small-noise-chart-proj-laplace} and \cref{lem:small-noise-chart-size-bookkeeping}]
Fix $i$ and $R\in\mathcal R$.
The number of retained $(q,\blambda)$-terms is
\[
    \sum_{q=0}^{\floor{\beta}}
    \#\{\blambda\in\N^D:\|\blambda\|_1\le q\}
    =
    \sum_{q=0}^{\floor{\beta}}\binom{D+q}{q}
    =
    \binom{D+\floor{\beta}+1}{\floor{\beta}}.
\]
Also, since $d<D$ and $\floor{\beta}<\beta$,
\[
    \binom{D+\floor{\beta}+1}{\floor{\beta}}
    (D+d+\beta)
    \le
    \frac{(D+d+\beta)^{\floor{\beta}+1}}{\floor{\beta}!},
\]
which is the simpler ambient-dimension factor used in the width bound.
Similarly,
\[
    \binom{D+\floor{\beta}+1}{\floor{\beta}}\,
    dD(d+2(\beta\vee1))^{d+4}
    \le
    \frac{
      dD(D+d+\beta)^{\floor{\beta}}
      (d+2(\beta\vee1))^{d+4}
    }{\floor{\beta}!},
\]
which is the corresponding factor used in the sparsity bound.
The same retained-term count also appears in the allocated accuracies $\varepsilon_{\rm term}^{\rm sm}$ and $\varepsilon_{A,q}^{\rm sm}$ below.
Since $q\in\{0,\dots,\floor{\beta}\}$ is a finite set and the coefficient network sizes contain only fixed powers of $(\varepsilon_{A,q}^{\rm sm})^{-1}$, all additional powers of $\binom{D+\floor{\beta}+1}{\floor{\beta}}$ are bounded by $D^{\mathfrak a_{\rm sm}}$ after increasing $\mathfrak a_{\rm sm}(\beta,d)$.
Indeed,
\[
    \binom{D+\floor{\beta}+1}{\floor{\beta}}
    \le C_\beta D^{\floor{\beta}},
    \qquad
    \max_{0\le q\le \floor{\beta}}
    \left\{\frac{d}{\beta-q},
    \frac{d}{((\beta-q)\wedge1)(\beta\vee1)}\right\}
    \le \frac{d}{\beta-\floor{\beta}}.
\]
Thus the retained-term allocation contributes at most
$D^{\floor{\beta}d/(\beta-\floor{\beta})}$,
up to fixed $(\beta,d)$-dependent constants, before the other displayed ambient
factors are included.
The residual and coefficient-coordinate constructions contribute at most
$D^{d/(2(\beta\vee1))}$ beyond the explicit leading $D$-factor.  The largest
projection-normal auxiliary cost comes from operations depending on
$\|\bnu_i\|_2^2$.  For the original chart expansion this is the Gaussian factor in
\cref{lem:approx:gaussian-factor}; for the de-Gaussianized construction below it is the
residual-norm gate.  In both cases the normal approximation must be controlled in
$\ell_2$, so a coordinatewise normal accuracy of order $D^{-1/2}$ is required up
to logarithmic factors.  Equivalently the internal scale in
\cref{eq:residual-nu-size} is at most $D^2$, and therefore this part contributes
at most $D^{d/(\beta\vee1)}$.
The remaining residual ramps, products, and exact affine sums contribute only fixed
ambient powers, absorbed by the $+3$ in $\mathfrak a_{\rm sm}$.
The constant summands in the bracketed size factor are dominated by the $q=0$ terms of the displayed sum because $C_{\rm size,sm}\ge e$, $0<\varepsilon_{\rm sm}\le1/2$, and $d\ge1$.
Set
\[
    \varepsilon_{\rm term}^{\rm sm}
    :=
    c_{\rm term}\varepsilon_{\rm sm}
    \binom{D+\floor{\beta}+1}{\floor{\beta}}^{-1},
    \qquad
    \sigma_{\rm sm}:=C_{\rm sm}\varepsilon_{\rm sm}^{1/\beta}.
\]
For $0\le q\le \floor{\beta}$, define
\[
    \alpha_q:=(\beta-q)\wedge1,
    \qquad
    \varepsilon_{A,q}^{\rm sm}
    :=
    \frac12
    \wedge
    c_{A,q}^{\rm sm}
    \varepsilon_{\rm sm}^{1-q/\beta}
    \bigl(\log(C_{\rm size,sm}\varepsilon_{\rm sm}^{-1})\bigr)^{-q/2},
\]
where $c_{A,q}^{\rm sm}>0$ is chosen no larger than the fixed quantity
\[
    c_{A,q}c_{\rm term}C_{\rm sm}^{-q}
    \binom{D+\floor{\beta}+1}{\floor{\beta}}^{-1},
\]
and $c_{A,q}$ is the constant from \cref{cor:approx:weighted-A-coeff}.
By the fixed choice of $c_\star$ in the active-set definition, and after decreasing the fixed numerical budgets below if necessary, the inactive tail in \cref{lem:approx:P-Q:tail} is at most $\varepsilon_{\rm sm}/6$.

\textbf{Step 1: Analytic small-noise target.}
On the active region $\gK_i(\varepsilon_{\rm sm})$, use \cref{prop:laplace-local-chart}.
On the inactive region inside $A_t$, use zero.
Thus define
\[
    R_{i,{\rm app}}^{\rm sm}(\bx,t)
    :=
    \begin{cases}
        % \displaystyle
        \exp\!\left(-\frac{\|\br_i(\bx,t)\|_2^2}{2\sigma_t^2}\right)
        \sum_{q=0}^{\floor{\beta}}\sigma_t^q
        \sum_{\|\blambda\|_1\le q}
        A_{i,q,\blambda}^{(R)}
        \bigl(\bu_i^\Pi(\bx,t),t\bigr)
        \bnu_i(\bx,t)^{\blambda},
        &(\bx,t)\in\gK_i(\varepsilon_{\rm sm}),\\[4pt]
        0,&(\bx,t)\notin\gK_i(\varepsilon_{\rm sm}).
    \end{cases}
\]
By \cref{prop:laplace-local-chart,lem:approx:P-Q:tail}, uniformly over $t\in\gI_{\rm sm}(\varepsilon_{\rm sm})$ and $\bx\in A_t$,
\begin{equation}
    |R_i(\bx,t)-R_{i,{\rm app}}^{\rm sm}(\bx,t)|
    \le
    \varepsilon_{\rm sm}/3 .
    \label{eq:small-noise-analytic-error}
\end{equation}
The active part follows because the remainder in \cref{eq:laplace-remainder-local-chart} is bounded by $C_{\rm Lap}^{\max}\sigma_t^\beta$, and the threshold $C_{\rm sm}$ in \cref{def:noise-regimes} was chosen so that $C_{\rm Lap}^{\max}C_{\rm sm}^{\beta}\le 1/6$.
Hence $C_{\rm Lap}^{\max}\sigma_t^\beta\le\varepsilon_{\rm sm}/6$ on $\gI_{\rm sm}(\varepsilon_{\rm sm})$.
The inactive part is bounded by $\varepsilon_{\rm sm}/6$ by the fixed choice of $c_\star$.
This proves the true-tail statement \cref{eq:small-noise-chart-true-tail}.

\textbf{Step 2: Projection-coordinate gate.}
Let $\Box_i^A$ be the coefficient-extension box from \cref{lem:A-coeff-extension}, so that $K_i^\circ\Subset\Box_i^A\Subset\Box_i$.
Define the enlarged active set
\[
    \gK_i^+(\varepsilon_{\rm sm})
    :=
    \gK_i^{(3c_\star)}(\varepsilon_{\rm sm}).
\]
By the $3c_\star$-variant of the localization condition, the proof of \cref{lem:buffered-active-chart} applies on $\gK_i^+(\varepsilon_{\rm sm})$.
In particular, $\bu_i^\Pi(\bx,t)\in K_i^\circ$ there and the fixed-radius component approximations in \cref{cor:active-components-fixed-radius}, with $c_a=3c_\star$, are available.

Choose two piecewise-affine ReLU ramps.
The first is a coordinate bump $\theta_i^A:\R^d\to[0,1]$, equal to one on $K_i^\circ$ and equal to zero outside $\Box_i^A$.
Since the chart atlas and boxes are fixed, this bump is implemented with a fixed number of one-dimensional ramps and ReLU max/min operations.
The second is a residual bump $\theta_{\rm res}:\R_+\to[0,1]$, equal to one on
\[
    \left[0,\left(\frac{3c_\star}{2}\right)^2
        \log(e/\varepsilon_{\rm sm})\right]
\]
and equal to zero on
\[
    \left[(2c_\star)^2\log(e/\varepsilon_{\rm sm}),\infty\right].
\]
Let $\widetilde\bu_i^\Pi$ be the chart-coordinate network obtained from the fixed-radius version of \cref{cor:active-chart-coordinate-from-proj-NN}, and let $\widetilde\bnu_i$ be the normalized-residual network from the fixed-radius version of \cref{lem:approx:r-i--nu-i}.
Write
\[
    \Lambda_{\rm gate}:=1+\log(C_{\rm size,sm}\varepsilon_{\rm sm}^{-1}),
    \qquad
    \varepsilon_{\nu,{\rm gate}}
    :=
    c_{\rm gate}
    \bigl(1\wedge D^{-1/2}\Lambda_{\rm gate}^{1/2}\bigr).
\]
Choose the projection-coordinate accuracy and normal-residual accuracy so that, on
$\gK_i^+(\varepsilon_{\rm sm})$,
\[
    \|\widetilde\bu_i^\Pi-\bu_i^\Pi\|_\infty
    \le
    c_{\rm gate}\Lambda_{\rm gate}^{-1},
    \qquad
    \|\widetilde\bnu_i-\bnu_i\|_\infty
    \le
    \varepsilon_{\nu,{\rm gate}},
\]
where $c_{\rm gate}>0$ is a sufficiently small geometric constant.
Form $\|\widetilde\bnu_i\|_2^2$ using the multiplication lemma and define
\begin{equation}
    \Theta_i^{\rm sm}(\bx,t)
    :=
    \theta_i^A\!\left(\widetilde\bu_i^\Pi(\bx,t)\right)
    \theta_{\rm res}\!\left(\|\widetilde\bnu_i(\bx,t)\|_2^2\right),
    \label{eq:small-noise-projection-coordinate-gate}
\end{equation}
with the final product again implemented by \cref{lem:approx:mon}.
The normal-residual accuracy gives
\[
    \|\widetilde\bnu_i-\bnu_i\|_2
    \le
    c_{\rm gate}\Lambda_{\rm gate}^{1/2},
\]
and, on the enlarged active region,
$\|\bnu_i\|_2\le C\Lambda_{\rm gate}^{1/2}$.  Hence
\[
    \bigl|
    \|\widetilde\bnu_i\|_2^2-\|\bnu_i\|_2^2
    \bigr|
    \le
    Cc_{\rm gate}\Lambda_{\rm gate}.
\]
After choosing $c_{\rm gate}$ and the square-subnetwork accuracy small enough, the
implemented squared norm stays inside the fixed separation margin between the two
residual-ramp thresholds.  The corresponding residual-network size has internal scale
$C_{\rm res}D\varepsilon_{\nu,{\rm gate}}^{-2}\le C C_{\rm res}D^2$, up to
logarithmic factors, which is the $D^{d/(\beta\vee1)}$ contribution recorded in
$\mathfrak a_{\rm sm}$.
The gate satisfies
\[
    \Theta_i^{\rm sm}=1\quad\text{on }\gK_i(\varepsilon_{\rm sm}),
\]
with the allocated gate error absorbed into the final $\Ord(\varepsilon_{\rm sm})$ approximation budget.
Indeed, on $\gK_i(\varepsilon_{\rm sm})$, the buffered-chart lemma gives $\bu_i^\Pi\in K_i^\circ$ and $\|\bnu_i\|_2\le c_\star\sqrt{\log(e/\varepsilon_{\rm sm})}$, so both ramps are equal to one after the above perturbation margin.
The proposition makes no neural-output claim outside $\gK_i^+(\varepsilon_{\rm sm})$.
This is essential: the chart coordinate and residual subnetworks above are certified on $\gK_i^+(\varepsilon_{\rm sm})$, and the analytic tail bound \cref{eq:small-noise-chart-true-tail} handles the true chart integral away from the default active set.
Thus the implemented localization is obtained from projection coordinates, normalized residuals, and support-preserving coefficient extensions only on the certified enlarged active region, not from a $\sigma_{\tdown}$-mesh of $\gS_i$.
Its size is absorbed into the same polylogarithmic factor as the component networks and has no polynomial $\sigma_{\tdown}^{-1}$ contribution.

On the transition region $\gK_i^+(\varepsilon_{\rm sm})\setminus\gK_i(\varepsilon_{\rm sm})$, no point of $\gS_i$ satisfies the active distance condition with radius $c_\star$.
Therefore the true chart integral is $\Ord(\varepsilon_{\rm sm})$ by \cref{lem:approx:P-Q:tail} and the fixed choice of $c_\star$.
The neural output on this transition region is controlled after the component construction by the support dichotomy in Step~4 below.

\textbf{Step 3: Neural approximation of the active expansion.}
For every $q\le \floor{\beta}$ and $\|\blambda\|_1\le q$, approximate
\[
    A_{i,q,\blambda}^{(R)}
    \bigl(\bu_i^\Pi(\bx,t),t\bigr)
\]
by the fixed-radius version of \cref{cor:approx:weighted-A-coeff}, supplied by \cref{cor:active-components-fixed-radius}, with term accuracy
\[
    \varepsilon_{\rm term}^{\rm sm}.
\]
Here and below, \cref{cor:approx:weighted-A-coeff} and the residual and Gaussian lemmas are used through \cref{cor:active-components-fixed-radius} with $c_a=3c_\star$.
Approximate $\bnu_i$ by \cref{lem:approx:r-i--nu-i}, form the monomial $\bnu_i^{\blambda}$ by \cref{lem:approx:mon}, approximate $\sigma_t^q$ by the constant-one network when $q=0$ and by \cref{lem:approx:sigma-k} when $q\ge1$, and approximate the Gaussian factor by \cref{lem:approx:gaussian-factor}.
In the case $q\ge1$, the accuracy parameter passed to \cref{lem:approx:sigma-k} is chosen as the minimum of the required schedule accuracy, $\tdown/2$, and $1/2$.
This makes the admissibility condition in \cref{lem:approx:sigma-k} automatic.
The resulting factor $\log\tdown^{-1}$ is bounded by $\log\Lambda_{\rm sm}$, hence is included in the logarithmic factors displayed in \cref{eq:small-noise-chart-size}.
Internal accuracies are chosen so that each retained term contributes at most $\varepsilon_{\rm term}^{\rm sm}$ to the final error after multiplication by the bounded remaining factors.
The bounds
\[
    \|\bnu_i(\bx,t)\|_2\le C\sqrt{\log(C_{\rm size,sm}\varepsilon_{\rm sm}^{-1})},
    \qquad
    \sigma_t\le C_{\rm sm}\varepsilon_{\rm sm}^{1/\beta},
\]
are exactly the bounds used in \cref{cor:approx:weighted-A-coeff}, so the coefficient-composition cost remains $\varepsilon_{\rm sm}^{-d/\beta}$ up to logarithms for every $\beta>0$.

Multiplying by the projection-coordinate gate $\Theta_i^{\rm sm}$, summing the finitely many $(q,\blambda)$-terms in an affine output layer, and allocating one more $\Ord(\varepsilon_{\rm sm})$ budget to the final products gives a network $\widetilde R_i^{\rm sm}$.

\textbf{Step 4: Transition-region control.}
We first bound this network on $\gK_i^+(\varepsilon_{\rm sm})\setminus\gK_i(\varepsilon_{\rm sm})$, where the gate can take values between $0$ and $1$.
Let $\bu^\Pi=\bu_i^\Pi(\bx,t)$.
If $\bu^\Pi\in S_i^{(R)}:=\supp a_i^{(R)}$, then $\bz_i(\bu^\Pi)=\Pi_{\gM}(\bxi(\bx,t))\in\gS_i$.
Since $(\bx,t)\notin\gK_i(\varepsilon_{\rm sm})$, this particular support point does not satisfy the active distance condition.
Hence
\[
    \|\br_i(\bx,t)\|_2
    =
    \|\bx-m_t\bz_i(\bu^\Pi)\|_2
    >
    c_\star\sigma_t\sqrt{\log\varepsilon_{\rm sm}^{-1}}.
\]
The exact normal Gaussian factor is therefore at most $\varepsilon_{\rm sm}^{c_\star^2/2}$, and the approximated Gaussian factor is small up to its allocated additive error.
If $\bu^\Pi\notin S_i^{(R)}$, then \cref{eq:A-coeff-extension-support} gives $\overline A_{i,q,\blambda}^{(R)}(\bu^\Pi,t)=0$ for every retained coefficient.
The implemented coefficient network is evaluated at the approximate coordinate $\widetilde\bu_i^\Pi$ used in $\widetilde{\mathcal A}_{i,q,\blambda,\varepsilon_{A,q}^{\rm sm}}^{(R)}$, not exactly at $\bu^\Pi$.
By the fixed-radius coordinate approximation in \cref{cor:active-components-fixed-radius}, the choice $\varepsilon_u\asymp(\varepsilon_{A,q}^{\rm sm})^{1/((\beta-q)\wedge1)}$, the $(\beta-q)$-H\"older bound in \cref{eq:A-coeff-extension-holder}, and the intrinsic coefficient-network error,
\[
    \left|
      \widetilde{\mathcal A}_{i,q,\blambda,\varepsilon_{A,q}^{\rm sm}}^{(R)}
      (\bx,t)
    \right|
    \le
    C\|\widetilde\bu_i^\Pi-\bu^\Pi\|_\infty^{(\beta-q)\wedge1}
    +\varepsilon_{A,q}^{\rm sm}
    \le
    C\varepsilon_{A,q}^{\rm sm}.
\]
In both cases, the bounds
\[
    \|\bnu_i(\bx,t)\|_2\le C\sqrt{\log(C_{\rm size,sm}\varepsilon_{\rm sm}^{-1})},
    \qquad
    \sigma_t^q\le \sigma_{\rm sm}^q,
\]
and the definition of $\varepsilon_{A,q}^{\rm sm}$ imply that each retained term is bounded by its term budget.
After summing the finitely many retained terms and decreasing $c_{\rm term}$, if necessary, the neural output on the transition region is at most $\varepsilon_{\rm sm}/3$.

On $\gK_i(\varepsilon_{\rm sm})$, the gate equals $1$, and the component approximations give the same bound against the active expansion.
Consequently
\begin{equation}
    \sup_{\substack{
        t\in\gI_{\rm sm}(\varepsilon_{\rm sm})\\
        (\bx,t)\in\gK_i^+(\varepsilon_{\rm sm})
    }}
    \left|
      \widetilde R_i^{\rm sm}(\bx,t)
      -
      R_{i,{\rm app}}^{\rm sm}(\bx,t)
    \right|
    \le
    2\varepsilon_{\rm sm}/3 .
    \label{eq:small-noise-network-error}
\end{equation}
Combining \cref{eq:small-noise-analytic-error,eq:small-noise-network-error} proves \cref{eq:small-noise-chart-error}.

\textbf{Step 5: Network size.}
The projection, residual, Gaussian, coefficient, schedule-power, monomial, gate, and final product subnetworks are all finite in number for fixed $d,\beta$.
Adding their displayed size bounds, substituting the definition of $\varepsilon_{A,q}^{\rm sm}$, and enlarging $C_{\rm size,sm}$ to absorb the fixed coefficient constants gives \cref{eq:small-noise-chart-size}; the additional powers of the retained-term count are absorbed by $D^{\mathfrak a_{\rm sm}}$.
The projection-coordinate gate in \cref{eq:small-noise-projection-coordinate-gate} consists of a fixed chart coordinate bump, one residual-norm computation, and one scalar residual ramp.
Consequently its cost is polynomial in $D$ and polylogarithmic in $\varepsilon_{\rm sm}^{-1}$, and it introduces no $\sigma_{\tdown}$-mesh or polynomial lower-noise covering factor.
The only remaining non-logarithmic terms are the projection/coefficient H\"older approximation terms recorded in \cref{eq:residual-nu-size,eq:gaussian-factor-size,eq:A-coeff-composed-size}; under $\sigma_t\lesssim\varepsilon_{\rm sm}^{1/\beta}$, these are bounded by $\varepsilon_{\rm sm}^{-d/\beta}$ up to powers of $\log(C_{\rm size,sm}\varepsilon_{\rm sm}^{-1})$.
The geometric projection terms use the smoothness $2(\beta\vee1)$, while the density coefficient terms use their true regularities $\beta-q$; this is the point that keeps the displayed rate valid also for $0<\beta<1$.
\end{proof}

\subsubsection{Score approximation on the small-noise regime}

\Cref{lem:small-noise-degaussianized-Q-lower} is the key lower-bound step in the
small-noise branch.  Without de-Gaussianization, the denominator contains the normal
factor $\exp\{-\|\br\|_2^2/(2\sigma_t^2)\}$ and a reciprocal network would have to
resolve an exponentially small quantity on the tube.  The lemma removes this common
factor and proves a polynomial floor, which is what makes the later ratio network stable.

\begin{lemma}[Small-noise de-Gaussianized denominator lower bound]
\label{lem:small-noise-degaussianized-Q-lower}
Assume \cref{assump:manifold:exact,assump:manifold:density-lower}.
There are constants $c_Q>0$ and $C_Q>0$, depending only on the displayed finite
geometry controls and the uniform schedule bounds in \cref{eq:def:sigma-tdown-lower},
such that
\[
    c_Q^{-1}\vee C_Q
    \le
    C\Gamma_{\gM,q_{\rm geom}}^C .
\]
The following holds.
Fix $t\in[\tdown,\tup]$, put $H:=D\vee\log n$, and let
\[
    A_t:=\{\dist(\bx,m_t\gM)\le C_\varrho\sigma_t\sqrt H\}.
\]
For every $\bx\in A_t$, choose any nearest point
\[
    \by_\star\in\argmin_{\by\in\gM}\|\bx-m_t\by\|_2,
    \qquad
    \br_\star:=\bx-m_t\by_\star .
\]
Then
\begin{equation}
    \exp\!\left(\frac{\|\br_\star\|_2^2}{2\sigma_t^2}\right)
    \gQ(\bx,t)
    \ge
    c_Q p_{\min}
    \bigl(1+C_\varrho\sigma_t\sqrt H\bigr)^{-d/2}
    \ge
    C_Q^{-1}p_{\min}H^{-d/4}.
    \label{eq:small-noise-degaussianized-Q-lower}
\end{equation}
In particular, if $\bx/m_t$ lies in the reach tube and
$\by_\star=\Pi_{\gM}(\bx/m_t)$, the same bound applies to the de-Gaussianized
denominator defined with the projection residual
$\br(\bx,t)=\bx-m_t\Pi_{\gM}(\bx/m_t)$.
\end{lemma}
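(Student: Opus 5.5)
The plan is to lower-bound $\gQ$ by restricting the manifold integral to a small intrinsic ball around $\by_\star$, and to use the reach/tangent-ball inequality so that the Gaussian phase there exceeds $\|\br_\star\|_2^2/(2\sigma_t^2)$ only by a controlled quadratic tangential term. Writing $\gQ$ in the global form
\[
    \gQ(\bx,t)=\sigma_t^{-d}\int_{\gM}e^{-\|\bx-m_t\by\|_2^2/(2\sigma_t^2)}p_0(\by)\,\od\vol_{\gM}(\by),
\]
the expansion $\|\bx-m_t\by\|_2^2=\|\br_\star\|_2^2-2m_t\langle\br_\star,\by-\by_\star\rangle+m_t^2\|\by-\by_\star\|_2^2$ gives
\[
    e^{\|\br_\star\|_2^2/(2\sigma_t^2)}\gQ
    =\sigma_t^{-d}\int_{\gM}\exp\Bigl(\frac{m_t\langle\br_\star,\by-\by_\star\rangle}{\sigma_t^2}-\frac{m_t^2\|\by-\by_\star\|_2^2}{2\sigma_t^2}\Bigr)p_0\,\od\vol_{\gM}.
\]
The normal Gaussian factor has thus been removed exactly.

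The first step is to control the cross term. Since $\by_\star$ is a nearest point of $m_t\gM$ to $\bx$, the vector $\br_\star/m_t$ is a normal vector at $\by_\star$, with length $\|\br_\star\|_2/m_t\le C_\varrho\sigma_t\sqrt H/\underline m$. This holds even when $\bx/m_t$ lies outside the reach tube, because $\by_\star$ still satisfies the first-order condition. The tangent-ball inequality \cref{eq:reach-tangent-ball}, applied exactly as in \cref{lem:reach-tube-objective-gap}, then gives
\[
    \langle\br_\star,\by-\by_\star\rangle\ge-\frac{\|\br_\star\|_2}{2\kappa}\|\by-\by_\star\|_2^2 .
\]
Hence the integrand is at least
\[
    p_{\min}\exp\Bigl(-\frac{\|\by-\by_\star\|_2^2}{2\sigma_t^2}\bigl(m_t^2+m_t\|\br_\star\|_2/\kappa\bigr)\Bigr)
    \ge p_{\min}\exp\Bigl(-C\,\frac{(1+C_\varrho\sigma_t\sqrt H/\kappa)\|\by-\by_\star\|_2^2}{\sigma_t^2}\Bigr),
\]
using \cref{assump:manifold:density-lower} and $m_t\le 1$. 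Set $\lambda:=1+C_\varrho\sigma_t\sqrt H$, absorbing $\kappa^{-1}$ into the geometry constant $\Gamma_{\gM,q_{\rm geom}}$.

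The second step is to integrate over the ball $B_{\gM}(\by_\star,a\,\sigma_t\lambda^{-1/2})$, where $a\le 1$ is a small geometric radius of the type used in \cref{lem:large-noise-density-lower-Q-lower}. On this ball the exponent is bounded by an absolute constant, so the integrand is at least $c\,p_{\min}$. The small-ball volume bound (from the projection charts, \cref{prop:manifold:det-Jacob-proj} and \cref{lem:finite-order-geometry-from-smooth}) gives volume at least $C^{-1}\Gamma_{\gM,q_{\rm geom}}^{-C}(\sigma_t\lambda^{-1/2})^d$. Multiplying by $\sigma_t^{-d}$ yields the first inequality,
\[
    e^{\|\br_\star\|_2^2/(2\sigma_t^2)}\gQ\ge c_Q\,p_{\min}\lambda^{-d/2}.
\]
The second inequality follows from $\sigma_t\le1$ and $H\ge1$, which give $\lambda\le(1+C_\varrho)\sqrt H$ and therefore $\lambda^{-d/2}\ge C^{-1}H^{-d/4}$. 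The final claim is immediate, since under the reach-tube hypothesis $\Pi_{\gM}(\bx/m_t)$ is one admissible choice of $\by_\star$.

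The main obstacle is the cross-term step in the regime where $\|\br_\star\|_2$ exceeds $\kappa$. The tube radius $C_\varrho\sigma_t\sqrt H$ can be large relative to $\kappa$ when $\sigma_t$ is moderate, so the curvature correction cannot simply be absorbed into a constant. The plan handles this by keeping that correction explicit: it inflates the quadratic coefficient by the factor $\lambda$, and shrinking the integration ball by $\lambda^{-1/2}$ compensates for it. This is exactly the source of the $\lambda^{-d/2}$ factor in the statement. Some care is also needed for validity: one must check that the tangent-ball inequality holds for normals of arbitrary length at a nearest point, and that the ball of radius $a\sigma_t\lambda^{-1/2}\le a$ stays inside a single chart.
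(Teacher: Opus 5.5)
Your proposal is correct and follows essentially the same route as the paper. You remove the normal Gaussian exactly and bound the excess phase near $\by_\star$ by a quadratic whose coefficient is proportional to $1+\|\br_\star\|_2$, then integrate over a neighborhood shrunk by $(1+\|\br_\star\|_2)^{-1/2}$ to produce the $\lambda^{-d/2}$ factor. The differences are only in implementation: you control the cross term with the reach tangent-ball inequality and use the global integral with an intrinsic small-ball volume bound, whereas the paper uses the first-variation identity $\bJ_i(\bu_\star)^\top\br_\star=0$ plus a $C^2$ Taylor remainder, inside a chart where the partition weight at $\by_\star$ is at least $C_{\gM}^{-1}$.
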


\begin{proof}
\emph{Step 1: choose a chart with uniformly positive amplitude.}
Because $\gM$ is compact, the nearest point $\by_\star$ exists.
Since the partition of unity is nonnegative and sums to one, there is an index $i_\star$
such that $\rho_{i_\star}(\by_\star)\ge C_{\gM}^{-1}$.
Let $\bu_\star:=\phi_{i_\star}(\by_\star)$.
The support of $\rho_{i_\star}$ is contained in $U_{i_\star}$, and
$\phi_{i_\star}(\gS_{i_\star})\Subset K_{i_\star}^{\circ}$.
Using the finite atlas, the $C^1$-bounds on $\bar\rho_i$, the positive Jacobian lower
bound on each compact coordinate box, and the lower bound
$\rho_{i_\star}(\by_\star)\ge C_{\gM}^{-1}$, there are constants
$r_Q>0$ and $c_a>0$, independent of $(\bx,t,D,n)$, such that
\[
    B_{\R^d}(\bu_\star,r_Q)\subset K_{i_\star}^{\circ},
    \qquad
    a_{i_\star}^{(\gQ)}(\bu)
    =
    \bar\rho_{i_\star}(\bu)
    p_0(\bz_{i_\star}(\bu))
    \sqrt{\det\bG_{i_\star}(\bu)}
    \ge c_a p_{\min}
\]
for every $\bu\in B_{\R^d}(\bu_\star,r_Q)$.
By \cref{def:finite-order-geometry-constants,lem:finite-order-geometry-from-smooth},
these constants may be chosen with
$r_Q^{-1}\vee c_a^{-1}\le C\Gamma_{\gM,q_{\rm geom}}^C$.

\emph{Step 2: compare the phase to the nearest-point phase.}
Set $i=i_\star$ and $\bz=\bz_i$.
Since $\by_\star$ minimizes $\by\mapsto\|\bx-m_t\by\|_2^2$ over the smooth manifold,
the first variation vanishes:
\[
    \bJ_i(\bu_\star)^\top\br_\star=0 .
\]
For $\bu=\bu_\star+\bdelta\in B_{\R^d}(\bu_\star,r_Q)$, Taylor's theorem gives
\[
    \bz(\bu)-\bz(\bu_\star)
    =
    \bJ_i(\bu_\star)\bdelta+\bE(\bdelta),
    \qquad
    \|\bE(\bdelta)\|_2\le C\|\bdelta\|_2^2 .
\]
Therefore, using $0<m_t\le1$,
\begin{align*}
    0
    &\le
    \|\bx-m_t\bz(\bu)\|_2^2-\|\br_\star\|_2^2                                      \\
    &=
    2m_t\langle \br_\star,\by_\star-\bz(\bu)\rangle
    +m_t^2\|\bz(\bu)-\by_\star\|_2^2                                                   \\
    &\le
    C(1+\|\br_\star\|_2)\|\bdelta\|_2^2 .
\end{align*}
The first inequality uses only the nearest-point property of $\by_\star$.
The last inequality uses the vanished linear term
$\langle\br_\star,\bJ_i(\bu_\star)\bdelta\rangle=0$ and the uniform $C^2$-bound on the
chart map.

\emph{Step 3: integrate over a fixed rescaled coordinate ball.}
Let $R_Q\in(0,r_Q]$ be fixed and write
\[
    a_\star:=1+\|\br_\star\|_2,
    \qquad
    \bu=\bu_\star+\sigma_ta_\star^{-1/2}\bw,
    \qquad
    \|\bw\|_2\le R_Q .
\]
Since $\sigma_t\le1$, this coordinate ball is contained in
$B_{\R^d}(\bu_\star,r_Q)$.
By Steps 1 and 2,
\begin{align*}
    \exp\!\left(\frac{\|\br_\star\|_2^2}{2\sigma_t^2}\right)
    \gQ(\bx,t)
    &\ge
    \sigma_t^{-d}
    \int_{B(\bu_\star,r_Q)}
      \exp\!\left(
        -\frac{
          \|\bx-m_t\bz_i(\bu)\|_2^2-\|\br_\star\|_2^2
        }{2\sigma_t^2}
      \right)
      a_i^{(\gQ)}(\bu)\od\bu                                                   \\
    &\ge
    c_a p_{\min}a_\star^{-d/2}
    \int_{\|\bw\|_2\le R_Q}e^{-C\|\bw\|_2^2}\od\bw                              \\
    &\ge
    c_Q p_{\min}(1+\|\br_\star\|_2)^{-d/2}.
\end{align*}

\emph{Step 4: specialize to $A_t$.}
If $\bx\in A_t$, then
$\|\br_\star\|_2=\dist(\bx,m_t\gM)\le C_\varrho\sigma_t\sqrt H$, which gives the first
bound in \cref{eq:small-noise-degaussianized-Q-lower}.
Because $H\ge1$ and $\sigma_t\le1$,
$(1+C_\varrho\sigma_t\sqrt H)^{d/2}\le C_QH^{d/4}$ after increasing $C_Q$.
This proves the second bound.
\end{proof}

\begin{lemma}[$H$-localized de-Gaussianized small-noise chart approximation]
\label{lem:H-localized-degaussianized-small-noise-chart}
Assume \cref{assump:manifold:exact,assump:manifold:density}.
Let $H\ge1$, $0<\varepsilon_{\rm sm}\le1/2$, and suppose
\[
    H\ge \log(e/\varepsilon_{\rm sm}).
\]
By the global choice above, $c_\star\ge 2C_\varrho$.  Assume that
$[t_0,t_1]\subset\gI_{\rm sm}(\varepsilon_{\rm sm})$ and that, for every
$t\in[t_0,t_1]$,
\begin{equation}
    3c_\star\frac{\sigma_t}{m_t}\sqrt H
    \le
    \eta_\Pi/2 .
    \label{eq:H-localized-small-noise-reach-condition}
\end{equation}
For
\[
    A_t(H):=\{\dist(\bx,m_t\gM)\le C_\varrho\sigma_t\sqrt H\},
\]
define the projection residual
\[
    \br(\bx,t):=\bx-m_t\Pi_{\gM}(\bx/m_t),
    \qquad (\bx,t)\in A_t(H),\quad t\in[t_0,t_1],
\]
which is well-defined by \cref{eq:H-localized-small-noise-reach-condition}.  For
$R\in\{\gQ,\gP_1,\dots,\gP_D\}$, set
\[
    \bar R_i(\bx,t)
    :=
    \exp\!\left(\frac{\|\br(\bx,t)\|_2^2}{2\sigma_t^2}\right)R_i(\bx,t).
\]
In particular, write $\bar\gQ_i$ and $\bar\gP_{i,k}$ for the choices
$R=\gQ$ and $R=\gP_k$, and define
\[
    \bar\gQ:=\sum_i\bar\gQ_i,\qquad
    \bar\gP_k:=\sum_i\bar\gP_{i,k},
    \qquad
    \bar\gU_{i,k}:=\frac{m_t\bar\gP_{i,k}-x_k\bar\gQ_i}{\sigma_t},
    \qquad
    \bar\gU_k:=\sum_i\bar\gU_{i,k}.
\]
Then, for every chart $i$ and every such $R$, there is a ReLU network
$\widehat{\bar R}_{i,H}^{\rm sm}$ such that
\begin{equation}
    \sup_{\substack{t\in[t_0,t_1]\\ \bx\in A_t(H)}}
    |\widehat{\bar R}_{i,H}^{\rm sm}(\bx,t)-\bar R_i(\bx,t)|
    \le
    C\varepsilon_{\rm sm}H^C .
    \label{eq:H-localized-degaussianized-chart-error}
\end{equation}
Moreover, after summing the fixed atlas, there are networks
$\widehat{\bar\gQ}_H^{\rm sm}$ and
$\widehat{\bar\gU}_H^{\rm sm}$ satisfying
\begin{equation}
    |\widehat{\bar\gQ}_H^{\rm sm}(\bx,t)-\bar\gQ(\bx,t)|
    \vee
    \|\widehat{\bar\gU}_H^{\rm sm}(\bx,t)-\bar\gU(\bx,t)\|_\infty
    \le
    C\varepsilon_{\rm sm}H^C
    \label{eq:H-localized-degaussianized-QU-error}
\end{equation}
uniformly over $t\in[t_0,t_1]$ and $\bx\in A_t(H)$.
All networks in this lemma may be chosen so that their depth, width, sparsity, and
log-weight obey the bounds in
\cref{eq:small-noise-chart-size}, with the logarithmic factors
$\log(C_{\rm size,sm}\varepsilon_{\rm sm}^{-1})$ replaced by
$H+\log(C_{\rm size,sm}\varepsilon_{\rm sm}^{-1})$, and with the fixed-atlas sum and
the $D$-coordinate parallelization absorbed by the displayed ambient polynomial powers.
Hence, when
$H=D\vee\log n$, the extra $H^C$ factors are absorbed by the displayed
$(D\vee\log n)^{\mathfrak p_{\beta,d}}$ powers below.
\end{lemma}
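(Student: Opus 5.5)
The plan is to reuse the chart-level projection--Laplace machinery of \cref{prop:small-noise-chart-proj-laplace}, now at the $H$-active scale, with the common normal Gaussian factor divided out analytically. For every chart $i$ we take the Laplace expansion \cref{eq:laplace-expansion-local-chart} and multiply it by $\exp(\|\br\|_2^2/(2\sigma_t^2))$. On the $H$-active set $\gK_i^{(c_a,H)}$ with $c_a=3c_\star$, the chart residual $\br_i$ is exactly the global projection residual $\br$. So the de-Gaussianized target becomes
\[
    \bar R_i=\sum_{q\le\floor{\beta}}\sigma_t^q\sum_{\|\blambda\|_1\le q}A^{(R)}_{i,q,\blambda}(\bu_i^\Pi,t)\bnu^{\blambda}+e^{\|\bnu\|_2^2/2}\,\overline{\Rem}_i^{(R)} .
\]
By \cref{eq:laplace-remainder-local-chart}, the last term is at most $C\sigma_t^\beta e^{(1/2-c)\|\bnu\|_2^2}$. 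This is where the main difficulty lies. The remainder decays only like $e^{-c\|\bnu\|^2}$ with some $c$ possibly below $1/2$, so after de-Gaussianizing it can grow like $e^{CH}$. The first thing I would try is to redo Steps 2--6 of the proof of \cref{prop:laplace-local-chart} while keeping the exact factor $e^{-\|\br\|^2/(2\sigma_t^2)}$ in front of every error term. The local-window errors $E_{\rm loc}$ and the extension tails already have this form, with a polynomial in $\|\bnu\|$ that is at most $H^C$. The compact-away tail \cref{eq:laplace-compact-away-phase-bound} also retains the full factor. This gives a de-Gaussianized remainder of order $C\sigma_t^\beta H^C\le C\varepsilon_{\rm sm}H^C$, using $\sigma_t\le C_{\rm sm}\varepsilon_{\rm sm}^{1/\beta}$. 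The window radius $R_\sigma$ should be enlarged to $A\sqrt{H}$, so that $|Z_\sigma|\le 1/2$ still holds; that is where the hypothesis $H\ge\log(e/\varepsilon_{\rm sm})$ and the reach condition \cref{eq:H-localized-small-noise-reach-condition} are used.

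Chart activity is handled in two cases. For a point of $A_t(H)$ whose chart $i$ is not $H$-active, every $\by\in\gS_i$ satisfies $\|\bx-m_t\by\|\ge 3c_\star\sigma_t\sqrt H$, while $\|\br\|\le C_\varrho\sigma_t\sqrt H\le (c_\star/2)\sigma_t\sqrt H$. The annulus argument of \cref{lem:approx:P-Q:tail} then bounds $\bar R_i$ by $C(1+H)^Me^{-(9c_\star^2/8-c_\star^2/8)H}\le e^{-H}\le\varepsilon_{\rm sm}$; this is the $H$-localized tail budget fixed in the choice of $c_\star$. The network itself is the finite sum of neural approximants for coefficients, $\sigma_t^q$, and monomials in $\bnu$, taken from \cref{cor:H-active-components}. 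The Gaussian factor network is not used. The sum is multiplied by the certified gate $\Gamma_i^H$ of \cref{lem:H-certified-chart-objective-gate}, which equals one on $\gK_i^{(3c_\star,H)}$. Wherever the gate is positive, the coordinate $\breve\bu_i$ is certified. There the output is close to the analytic expansion, and for $\bu_i^\Pi\notin\supp a_i^{(R)}$ it vanishes up to the budget by the support property \cref{eq:A-coeff-extension-support}, as in Step 4 of the preceding proof. When $\bu_i^\Pi\in\supp a_i^{(R)}$ but the chart is not active, the analytic de-Gaussianized expansion and the truth are both small by the tail bound, up to a polynomial in $H$. Where the gate vanishes, the chart is inactive and the true value is at most $e^{-H}$. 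Each retained term is bounded by $H^{q/2}$ times its accuracy, and the per-term accuracies are chosen as in \cref{cor:H-active-components}. Summing gives \cref{eq:H-localized-degaussianized-chart-error}.

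Finally, we sum over the finite atlas to obtain $\widehat{\bar\gQ}$ and $\widehat{\bar\gP}_k$. We then form $\widehat{\bar\gU}_k=\sigma_t^{-1}(m_t\widehat{\bar\gP}_k-x_k\widehat{\bar\gQ})$ using schedule and product networks. Dividing by $\sigma_t$ amplifies errors by $\sigma_{t_0}^{-1}$ and multiplying by $x_k$ by $H^{1/2}$, so I would instead expand $\bar\gU_{i,k}$ directly, as the centered integral in \cref{eq:def:U-i}. The centered factor $(m_tz_{i,k}(\bu)-x_k)/\sigma_t=-\nu_k+m_t(z_{i,k}(\bu^\Pi+\sigma_t\bw)-z_{i,k}(\bu^\Pi))/\sigma_t$ then expands into terms in $\bnu$ and $\bw$ with no $\sigma_t^{-1}$ blow-up, and the same Laplace bookkeeping applies with one extra monomial degree. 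This gives \cref{eq:H-localized-degaussianized-QU-error}. The size bounds follow from \cref{eq:small-noise-chart-size} with logarithmic scales replaced by $H+\log(\cdot)$.
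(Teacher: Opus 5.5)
\paragraph{Where the proposal breaks.} Your architecture matches the paper's Steps~1 and 4--6. Specifically: the $3c_\star$-$H$-active localization; the inactive bound $e^{-c_\star^2H}\le e^{-H}\le\varepsilon_{\rm sm}$ after de-Gaussianization; multiplication by the certified gate $\Gamma_i^H$ of \cref{lem:H-certified-chart-objective-gate} together with the support-preserving coefficient extensions; and the direct expansion of the centered numerator. The gap is in the analytic core, the de-Gaussianized Laplace expansion with remainder $C\sigma_t^\beta H^C$. You rerun the proof of \cref{prop:laplace-local-chart} and assert that $|Z_\sigma|\le1/2$ still holds on the enlarged window $\|\bw\|_2\le A\sqrt H$, citing $H\ge\log(e/\varepsilon_{\rm sm})$ and \cref{eq:H-localized-small-noise-reach-condition}. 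That step fails.

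In the original proof, $|Z_\sigma|\le1/2$ came from $\|\bnu\|_2,\|\bw\|_2\lesssim\sqrt{\log(e/\sigma_t)}$, which makes $|Z_\sigma|\lesssim\sigma_t^{\beta\wedge1}\log^C(e/\sigma_t)$ small. On $A_t(H)$, $\|\bnu\|_2$ can be of order $\sqrt H$. Moreover, $Z_\sigma=\frac{m_t^2}{2}\bw^\top\bG^\Pi\bw-\Theta_{\sigma_t}$ contains the term $m_t\sigma_t\bnu^\top\bz_{i,2}(\bu^\Pi,\bw)$, the normal residual paired with the second fundamental form. On your window this term is of size up to $\sigma_t\|\bnu\|_2\|\bw\|_2^2\asymp\sigma_tH^{3/2}$.

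The hypotheses do not control this. The reach condition gives only $\sigma_t\sqrt H\lesssim1$. The condition $H\ge\log(e/\varepsilon_{\rm sm})$ is a lower bound on $H$, and $\sigma_t\le C_{\rm sm}\varepsilon_{\rm sm}^{1/\beta}$ does not tie $\sigma_t$ to $H$. So $|Z_\sigma|$ may be of order $H$, and enlarging the window makes it larger, not smaller. Without $|Z_\sigma|\le1/2$, the polynomial bound on $\Rem_i^{(\exp)}$ from Step~4 of that proof is unjustified, and so is your bound on $E_{\rm loc}$. The naive remainder $|Z_\sigma|^{\floor{\beta}+1}e^{|Z_\sigma|}$ reintroduces a factor up to $e^{CH}$, which is exactly the blow-up you identified and set out to avoid.

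\paragraph{The missing idea.} Smallness of the non-quadratic phase has to be replaced by coercivity of the full de-Gaussianized phase. The paper's Steps~2--3 do this in three moves.
\begin{enumerate}
\item They prove $\bigl(\|\bx-m_t\bz_i(\bu^\Pi+\sigma_t\bw)\|_2^2-\|\br\|_2^2\bigr)/(2\sigma_t^2)\ge c\|\bw\|_2^2$ on the whole chart support, using the local Hessian bound at the $H$-scale plus the compact-away projection gap.
\item They prove the same bound $\Theta_s\ge c\|\bw\|_2^2$ uniformly along the path $s\in[0,\sigma_t]$. This uses \cref{lem:reach-tube-objective-gap} at the shifted points $\bx_s=m_t\bpi+s\bnu$, whose projection is still $\bpi$.
\item They Taylor-expand $s\mapsto e^{-\Theta_s}a_i^{(R)}(\bu^\Pi+s\bw)$ at $s=0$. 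Every $s$-derivative is bounded by $CH^C(1+\|\bw\|_2)^Ce^{-c\|\bw\|_2^2}$, so the remainder is $C\sigma_t^\beta H^C$ with no condition on $|Z_\sigma|$. The Taylor coefficients are the $A^{(R)}_{i,q,\blambda}$.
\end{enumerate}
Inside your framework there is an equivalent repair. Write the Lagrange remainder of $e^{Z_\sigma}$ using $e^{\theta Z_\sigma}\le\max\{1,e^{Z_\sigma}\}$. Then use $-\frac{m_t^2}{2}\bw^\top\bG^\Pi\bw+Z_\sigma=-\Theta_{\sigma_t}\le-c\|\bw\|_2^2$, which is the coercive bound \cref{eq:laplace-coercive-bound} at the $H$-scale. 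Either way, a coercivity argument must replace the step you wrote.

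\paragraph{A smaller point.} Your case ``$\bu_i^\Pi\in\supp a_i^{(R)}$ but chart $i$ inactive'' is empty. Indeed $\|\br\|_2=\dist(\bx,m_t\gM)\le C_\varrho\sigma_t\sqrt H$ and $c_\star\ge2C_\varrho$ force $(\bx,t)\in\gK_i^{(c_\star,H)}$; this is how the paper's Step~6 argues. Your stated reason there (``both are small'') would fail if the case were nonempty. After de-Gaussianization the expansion at a support point is of order one.
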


\begin{proof}
\emph{Step 1: $H$-active localization and reach control.}
For $c_a>0$, define
\[
    \gK_i^{(c_a,H)}
    :=
    \left\{(\bx,t):\bx\in A_t(H),\ \exists \by\in\gS_i\text{ with }\|\bx-m_t\by\|_2\le c_a\sigma_t\sqrt H\right\}.
\]
Because $c_\star\ge2C_\varrho$ and the sets $\gS_i$ cover $\gM$, every $(\bx,t)\in A_t(H)$ belongs to $\gK_i^{(c_\star,H)}$ for at least one chart.
If $(\bx,t)\in\gK_i^{(3c_\star,H)}$, then
\[
    \dist(\bx/m_t,\gM)
    \le
    3c_\star\frac{\sigma_t}{m_t}\sqrt H
    \le
    \eta_\Pi/2 .
\]
Thus the nearest projection $\Pi_{\gM}(\bx/m_t)$ is unique, its chart coordinate in chart $i$ lies in $K_i^\circ$, and the Gauss--Newton and buffered-coordinate estimates used earlier hold with $\sqrt{\log(e/\varepsilon_{\rm sm})}$ replaced by $\sqrt H$.
Throughout the active part of the proof write
\[
    \bxi:=\bx/m_t,\qquad
    \bpi:=\Pi_{\gM}(\bxi),\qquad
    \bu^\Pi:=\phi_i(\bpi),\qquad
    \br:=\bx-m_t\bpi,\qquad
    \bnu:=\sigma_t^{-1}\br .
\]
The active condition gives $\bu^\Pi\in K_i^\circ$ and $\|\bnu\|_2\le3c_\star\sqrt H$.

\emph{Step 2: de-Gaussianized coercivity.}
Let $\Phi_i(\bu;\bxi)=\frac12\|\bxi-\bz_i(\bu)\|_2^2$, and write $S_i^{(R)}:=\supp a_i^{(R)}$.
The projection orthogonality gives $\nabla_\bu\Phi_i(\bu^\Pi;\bxi)=0$.
The proof of \cref{lem:invertible:proj}, with the $H$-reach condition above, gives
\[
    \nabla_\bu^2\Phi_i(\bu^\Pi;\bxi)\succeq \frac12\lambda_i^{(-)}\bI_d .
\]
Hence there are constants $r_i^0>0$ and $c_0>0$ such that
\[
    \Phi_i(\bu;\bxi)-\Phi_i(\bu^\Pi;\bxi)\ge c_0\|\bu-\bu^\Pi\|_2^2
\]
whenever $\bu\in\Box_i$ and $\|\bu-\bu^\Pi\|_2\le r_i^0$.
On the compact set of pairs with $\bu\in S_i^{(R)}$ and $\|\bu-\bu^\Pi\|_2\ge r_i^0$, uniqueness of the reach projection gives a strictly positive phase gap.
Combining the local quadratic gap with the compact-away gap and using the bounded diameter of $\Box_i$, we obtain the uniform de-Gaussianized lower bound
\[
    \frac{\|\bx-m_t\bz_i(\bu^\Pi+\sigma_t\bw)\|_2^2-\|\br\|_2^2}{2\sigma_t^2}
    \ge
    c\|\bw\|_2^2
\]
for all $\bu^\Pi+\sigma_t\bw\in S_i^{(R)}$, after decreasing the fixed small-noise threshold in \cref{eq:choice:C-sm-Laplace} if necessary.
Consequently, for $R_H:=A\sqrt H$ with $A$ fixed sufficiently large,
\[
    \int_{\|\bw\|_2>R_H}
    \exp\!\left(
      -\frac{\|\bx-m_t\bz_i(\bu^\Pi+\sigma_t\bw)\|_2^2-\|\br\|_2^2}{2\sigma_t^2}
    \right)
    (1+\|\bw\|_2)^C\od\bw
    \le
    C\varepsilon_{\rm sm}H^{-C},
\]
where we used $H\ge\log(e/\varepsilon_{\rm sm})$ and increased $A$ to dominate the finitely many polynomial powers used below.

\emph{Step 3: scalar Taylor expansion after removing the normal Gaussian.}
For $s\in[0,\sigma_t]$, define the smooth difference quotient
\[
    \bZ_s(\bu^\Pi,\bw)
    :=
    \int_0^1
    \bJ_i(\bu^\Pi+\theta s\bw)\bw\od\theta ,
\]
so that $\bz_i(\bu^\Pi+s\bw)-\bz_i(\bu^\Pi)=s\bZ_s(\bu^\Pi,\bw)$ and $\bZ_0(\bu^\Pi,\bw)=\bJ_i(\bu^\Pi)\bw$.
Define the de-Gaussianized phase along this scalar path by
\[
    \Theta_s(\bu^\Pi,\bnu,\bw,t)
    :=
    \frac12\|\bnu-m_t\bZ_s(\bu^\Pi,\bw)\|_2^2-\frac12\|\bnu\|_2^2 .
\]
At $s=\sigma_t$, this is exactly the exponent in the de-Gaussianized chart integral, and at $s=0$ it equals $\frac{m_t^2}{2}\bw^\top\bG_i(\bu^\Pi)\bw$ because $\bJ_i(\bu^\Pi)^\top\bnu=0$.
We now verify that the same coercivity holds uniformly along this interpolation.
For $s=0$, ellipticity of $\bG_i(\bu^\Pi)$ gives
\[
    \Theta_0(\bu^\Pi,\bnu,\bw,t)
    \ge c\|\bw\|_2^2 .
\]
For $0<s\le\sigma_t$, set $\bx_s:=m_t\bpi+s\bnu$.  Since
$\|\bnu\|_2\le3c_\star\sqrt H$ and
\cref{eq:H-localized-small-noise-reach-condition} holds,
\[
    \dist(\bx_s/m_t,\gM)
    \le
    s\|\bnu\|_2/m_t
    \le
    3c_\star\sigma_t m_t^{-1}\sqrt H
    \le \eta_\Pi/2 .
\]
Moreover $\bpi=\Pi_{\gM}(\bx_s/m_t)$, because $\bx_s/m_t$ lies on the normal segment
issued from $\bpi$ and remains inside the reach tube.  Applying the reach-tube objective
gap \cref{lem:reach-tube-objective-gap} to the pair
$\bx_s/m_t$ and $\bz_i(\bu^\Pi+s\bw)$ gives
\[
    \|\bx_s-m_t\bz_i(\bu^\Pi+s\bw)\|_2^2
    -
    \|\bx_s-m_t\bpi\|_2^2
    \ge
    c\,m_t^2\|\bz_i(\bu^\Pi+s\bw)-\bpi\|_2^2 .
\]
On the compact coordinate box, the inverse chart is uniformly bi-Lipschitz, hence
\[
    \|\bz_i(\bu^\Pi+s\bw)-\bpi\|_2
    \ge c\,s\|\bw\|_2
    \qquad
    \text{whenever }\bu^\Pi+s\bw\in\Box_i .
\]
Dividing the previous objective gap by $2s^2$ gives
\[
    \Theta_s(\bu^\Pi,\bnu,\bw,t)
    =
    \frac{\|\bx_s-m_t\bz_i(\bu^\Pi+s\bw)\|_2^2
    -\|\bx_s-m_t\bpi\|_2^2}{2s^2}
    \ge c\|\bw\|_2^2 .
\]
Thus $\Theta_s(\bu^\Pi,\bnu,\bw,t)\ge c\|\bw\|_2^2$ on the chart support for every
$s\in[0,\sigma_t]$.
For $R\in\{\gQ,\gP_1,\dots,\gP_D\}$, set
\[
    G_R(s,\bw)
    :=
    \exp\{-\Theta_s(\bu^\Pi,\bnu,\bw,t)\}
    a_i^{(R)}(\bu^\Pi+s\bw).
\]
The coercivity from Step~2 and the smoothness of $\bz_i$ imply that all derivatives in $s$ falling on the geometric and exponential factors are bounded by
\[
    C H^C(1+\|\bw\|_2)^C e^{-c\|\bw\|_2^2}
\]
for $0\le s\le\sigma_t$ and $\|\bw\|_2\le R_H$.
The density-dependent amplitude is expanded only to order $\floor{\beta}$, and the remainder is controlled by the $\gH^\beta$ modulus of $a_i^{(R)}$ along the line $\bu^\Pi+s\bw$.
Taylor's theorem in the scalar variable $s$ therefore gives
\[
    G_R(\sigma_t,\bw)
    =
    \sum_{q=0}^{\floor{\beta}}\sigma_t^q G_{R,q}(\bu^\Pi,\bnu,\bw,t)
    +
    \sigma_t^\beta \mathcal E_R(\bu^\Pi,\bnu,\bw,t),
\]
with
\[
    |\mathcal E_R(\bu^\Pi,\bnu,\bw,t)|
    \le
    C H^C(1+\|\bw\|_2)^C e^{-c\|\bw\|_2^2}.
\]
Each coefficient $G_{R,q}$ is a finite sum of a polynomial in $(\bnu,\bw)$, a smooth geometric factor in $\bu^\Pi$, a bounded schedule power of $m_t$, and a derivative $\partial^\alpha a_i^{(R)}(\bu^\Pi)$ with $|\alpha|\le q$.
After integration in $\bw$, the coefficients agree with the $A_{i,q,\blambda}^{(R)}$ defined in \cref{prop:laplace-local-chart}, because both constructions differentiate the same recentered integrand after the common normal Gaussian factor has been removed.
Integrating the Taylor expansion on $\|\bw\|_2\le R_H$ and adding the tail from Step~2 yields
\[
    \bar R_i(\bx,t)
    =
    \sum_{q=0}^{\floor{\beta}}\sigma_t^q
    \sum_{\|\blambda\|_1\le q}
      A_{i,q,\blambda}^{(R)}(\bu^\Pi,t)\bnu^{\blambda}
    +
    \Rem_{i,H}^{(R)}(\bx,t),
    \qquad
    |\Rem_{i,H}^{(R)}(\bx,t)|\le C\sigma_t^\beta H^C .
\]
Since $t\in[t_0,t_1]\subset\gI_{\rm sm}(\varepsilon_{\rm sm})$, this remainder is bounded by $C\varepsilon_{\rm sm}H^C$.

\emph{Step 4: centered numerator expansion without a $\sigma_t^{-1}$ loss.}
For each coordinate $k$, define
\[
    N_k(s,\bw)
    :=
    -\nu_k+m_t(\bZ_s(\bu^\Pi,\bw))_k.
\]
At $s=\sigma_t$, this satisfies
\[
    N_k(\sigma_t,\bw)
    =
    \frac{m_t z_{i,k}(\bu^\Pi+\sigma_t\bw)-x_k}{\sigma_t}.
\]
Thus $\bar\gU_{i,k}$ is the integral of $N_k(\sigma_t,\bw)G_{\gQ}(\sigma_t,\bw)$.
Applying the same scalar Taylor argument to the product $N_k(s,\bw)G_{\gQ}(s,\bw)$ gives
\[
    \bar\gU_{i,k}(\bx,t)
    =
    \sum_{q=0}^{\floor{\beta}}\sigma_t^q
    \sum_{\|\blambda\|_1\le q+1}
      A_{i,k,q,\blambda}^{(\bar U)}(\bu^\Pi,t)\bnu^{\blambda}
    +
    \Rem_{i,k,H}^{(\bar U)}(\bx,t),
\]
where $|\Rem_{i,k,H}^{(\bar U)}(\bx,t)|\le C\varepsilon_{\rm sm}H^C$.
For clarity, we verify the centered-numerator coefficient analogue used by the network
construction.  Differentiating $N_k(s,\bw)G_{\gQ}(s,\bw)$ at $s=0$ at most $q$ times
produces finite sums of terms of the form
\[
    m_t^a P(\bnu,\bw)
    H(\bu^\Pi)
    \partial^\alpha a_i^{(\gQ)}(\bu^\Pi)
    \exp\!\left(-\frac{m_t^2}{2}\bw^\top\bG_i(\bu^\Pi)\bw\right),
    \qquad |\alpha|\le q,
\]
where $P$ is a polynomial whose $\bnu$-degree is at most $q+1$; the extra degree is
exactly the leading centered factor $-\nu_k$ in $N_k$.  After integrating in $\bw$, the
Gaussian moment formula used in \cref{lem:A-coeff-extension} expresses each coefficient
$A_{i,k,q,\blambda}^{(\bar U)}$ as a finite sum of powers of $m_t$ times products of
smooth geometric factors and derivatives $\partial^\alpha a_i^{(\gQ)}$ with
$|\alpha|\le q$.  Therefore, on the same intermediate box
$K_i^\circ\Subset\Box_i^A\Subset\Box_i$, define the extension by this identical finite
formula.  If $\bu\notin S_i^{(\gQ)}=\supp a_i^{(\gQ)}$, then $a_i^{(\gQ)}$ vanishes in a
neighborhood of $\bu$, so all derivatives $\partial^\alpha a_i^{(\gQ)}(\bu)$ with
$|\alpha|\le q$ vanish; the centered extension is therefore support-preserving.  Since
$a_i^{(\gQ)}\in\gH^\beta$ and only derivatives up to order $q$ are used, the extended
coefficient belongs to $\gH^{\beta-q}(\Box_i^A)$ with norm bounded by the same finite
geometry and density factors as in \cref{lem:A-coeff-extension}.  Applying the coefficient
network construction of \cref{lem:approx:A-coeff} to this finite family gives the same
depth, width, sparsity, and log-weight bounds, up to one additional fixed power of $D$
for the coordinate index $k$ and the extra normal monomial.  This power is absorbed into
$\mathfrak p_{\beta,d}$.
This direct centered expansion is the reason the proof never forms $(m_t\bar\gP-x\bar\gQ)/\sigma_t$ from two separately approximated networks.

\emph{Step 5: certified chart gate and network realization.}
Fix one intermediate coefficient box $\Box_i^A$ with $K_i^\circ\Subset\Box_i^A\Subset\Box_i$, and use the coefficient-extension construction on this common box for the finite family of retained terms.
Choose the coordinate tolerance $\varepsilon_u^{\rm cert}$ as a fixed constant multiple of the smallest coordinate tolerance required by these retained coefficient networks.
Apply \cref{lem:H-certified-chart-objective-gate} with $c_a=3c_\star$, $\varepsilon=\varepsilon_{\rm sm}$, $\eta=\varepsilon_u^{\rm cert}$, and this box $\Box_i^A$.
It gives networks $\Gamma_i^H$, $\breve\bu_i$, and $\breve\bu_i^A$ such that $\Gamma_i^H=1$ and $\breve\bu_i^A$ is an accurate coefficient coordinate on $\gK_i^{(3c_\star,H)}$, while $\Gamma_i^H>0$ anywhere in $A_t(H)$ implies that $\breve\bu_i$ is a certified chart-$i$ coordinate for $\Pi_{\gM}(\bx/m_t)$.
For the coefficient functions in Steps~3 and 4, use the extensions from
\cref{lem:A-coeff-extension} and the centered-numerator analogues verified in Step~4,
evaluate them at the clipped certified coordinate $\breve\bu_i^A$, and multiply the
retained expansion by $\Gamma_i^H$.
Use the global normal-residual network supplied by \cref{cor:H-active-components} on the active union $\bigcup_j\gK_j^{(3c_\star,H)}$, which contains $A_t(H)$ by Step~1.
Allocate accuracy $c\varepsilon_{\rm sm}H^{-C}$ to each coefficient, monomial, schedule power, product, normal-residual operation, certified chart gate, and final affine summation.
On $\gK_i^{(3c_\star,H)}$ the gate equals one, the coordinate and normal-residual errors are certified, and the Taylor expansions from Steps~3 and 4 are approximated with total error $C\varepsilon_{\rm sm}H^C$.
The resulting size bounds are the same as those in \cref{eq:small-noise-chart-size}, with every logarithmic scale enlarged to $H+\log(C_{\rm size,sm}\varepsilon_{\rm sm}^{-1})$, because the certified objective gate reuses the global projection objective networks and adds only fixed-atlas reciprocals, ramps, products, and sums.

\emph{Step 6: inactive and transition regions after de-Gaussianization.}
If $(\bx,t)\notin\gK_i^{(3c_\star,H)}$ and $\bu\in S_i^{(R)}$, then
\[
    \|\bx-m_t\bz_i(\bu)\|_2>3c_\star\sigma_t\sqrt H .
\]
Since $\bx\in A_t(H)$ gives $\|\br(\bx,t)\|_2\le C_\varrho\sigma_t\sqrt H$ and $c_\star\ge2C_\varrho$, we have
\[
    \|\bx-m_t\bz_i(\bu)\|_2^2-\|\br(\bx,t)\|_2^2
    \ge c c_\star^2\sigma_t^2H .
\]
The de-Gaussianized inactive integrand is therefore bounded by $e^{-c c_\star^2H}$ times a fixed polynomial in $H$.
The global choice of $c_\star$ makes the true inactive contribution bounded by $C\varepsilon_{\rm sm}H^C$.
It remains to control the implemented chart output on the same inactive region.
If $\Gamma_i^H=0$, the implemented chart output is zero.
If $\Gamma_i^H>0$, the certification in Step~5 gives a valid chart coordinate $\bu_i^\Pi=\phi_i(\Pi_{\gM}(\bx/m_t))$ and $\|\breve\bu_i-\bu_i^\Pi\|_\infty\le\varepsilon_u^{\rm cert}$.
Because $\bx\in A_t(H)$ and $c_\star\ge2C_\varrho$, the condition $\bu_i^\Pi\in S_i^{(R)}$ would imply $(\bx,t)\in\gK_i^{(c_\star,H)}\subset\gK_i^{(3c_\star,H)}$, contradicting the inactive assumption.
Hence $\bu_i^\Pi\notin S_i^{(R)}$ whenever the inactive-region gate is nonzero.
If $\bu_i^\Pi\in\Box_i^A$, the support-preserving extension gives $\overline A_{i,q,\blambda}^{(R)}(\bu_i^\Pi,t)=0$, and the H\"older estimate plus the $1$-Lipschitz clipping to $\Box_i^A$ gives $|\overline A_{i,q,\blambda}^{(R)}(\breve\bu_i^A,t)|\le C(\varepsilon_u^{\rm cert})^{(\beta-q)\wedge1}$.
If $\bu_i^\Pi\notin\Box_i^A$, the fixed positive separation of $S_i^{(R)}\subset K_i^\circ\Subset\Box_i^A$ from $\partial\Box_i^A$ and the choice of $\varepsilon_u^{\rm cert}$ ensure that the clipped coefficient input remains in $\Box_i^A\setminus S_i^{(R)}$, so the same bound holds with the intrinsic coefficient-network accuracy alone.
Choosing $\varepsilon_u^{\rm cert}$ according to the coefficient tolerances in Step~5 makes every retained inactive term at most its allocated $c\varepsilon_{\rm sm}H^{-C}$ budget after multiplication by $\sigma_t^q$ and the $H^{q/2}$ normal-monomial bound.
Combining the active approximation, the true inactive tail, and the certified off-support network control proves \cref{eq:H-localized-degaussianized-chart-error}.
Summing over the fixed atlas and using the direct centered-numerator networks from Step~4 proves \cref{eq:H-localized-degaussianized-QU-error}.
The displayed network-size statement follows from the size bounds in Step~5, the fixed atlas summation, and the coordinatewise parallelization for $\bar\gU$.
\end{proof}

\begin{corollary}[{Score approximation on small noise regime $[t_0, t_1]$}]
\label{cor:small-noise-density-lower-approx}
Assume \cref{assump:manifold:exact,assump:manifold:density,assump:manifold:density-lower}, and suppose $D\vee p_{\min}^{-1}\le n^{a_0}$.
Put $H:=D\vee\log n$, let $q_{\rm geom}$ be as in \cref{eq:def:qgeom-main}, and, after
this finite order is fixed, take
\[
    \mathfrak p_{\beta,d}
    =
    C
    +
    \max\left\{
      8(q_{\rm geom}+2),
      \left\lceil
        \floor{\beta}+1+\frac{\floor{\beta}\,d}{\beta-\floor{\beta}}
        +\frac{d}{\beta\vee1}+\frac{d}{2}+6
      \right\rceil
    \right\},
\]
where the $C$ in the exponent is universal and independent of
$D,n,\sigma_t,p_{\min}$, and the numerical geometry constants.  Thus
$\mathfrak p_{\beta,d}$ may depend on the fixed finite order $q_{\rm geom}$; in
the main $d>2$ regime, choosing $q_{\rm geom}=q_{\rm opt}(\beta)$ yields
$q_{\rm geom}=\Ord_\beta(1)$ and hence $\mathfrak p_{\beta,d}=\Ord_\beta(d)$.
Let $C_{\rm den,sm}\ge1$ be a fixed constant large enough to
dominate the reciprocal and product constants in the small-noise ratio step.  Assume the
small-noise reach condition
\begin{equation}
    3c_\star\underline m^{-1}\sigma_{t_1}\sqrt H
    \le
    \eta_\Pi/2
    \label{eq:small-noise-final-reach-condition}
\end{equation}
and the small-noise denominator-resolution condition
\begin{equation}
    C_{\rm den,sm} p_{\min}^{-1}
    \Gamma_{\gM,q_{\rm geom}}^C B_0^C
    n^{-\beta/(d+2\beta)}
    H^{\mathfrak p_{\beta,d}/2-1/2}
    \le 1 .
    \label{eq:small-noise-denominator-resolution-condition}
\end{equation}
For the VP/OU schedules used here, $\sigma_{t_1}\asymp n^{-1/(d+2\beta)}$, so
\cref{eq:small-noise-final-reach-condition} is implied by
\[
    D\vee\log n
    \le
    c\,\underline m^2\eta_\Pi^2\,n^{2/(d+2\beta)} .
\]
There exists a switched ReLU score network $s_{\rm sm}^{\rm lb}$ on $[t_0,t_1]$ such that for
\[
    A_t:=\{\dist(\bx,m_t\gM)\le C_\varrho\sigma_t\sqrt H\},
\]
\begin{equation}
    \|\sigma_t s_{\rm sm}^{\rm lb}(\bx,t)-\sigma_t s^*(\bx,t)\|_2
    \le
    C \Gamma_{\gM,q_{\rm geom}}^C B_0^C
    p_{\min}^{-C}
    n^{-\beta/(d+2\beta)} (D\vee\log n)^{\mathfrak p_{\beta,d}/2},
    \quad
    \bx\in A_t,\;\; t\in[t_0,t_1],
    \label{eq:small-noise-lower-scaled-pointwise}
\end{equation}
or equivalently,
\begin{equation}
    \|s_{\rm sm}^{\rm lb}(\bx,t)-s^*(\bx,t)\|_2^2
    \le
    C \Gamma_{\gM,q_{\rm geom}}^C B_0^C
    p_{\min}^{-C}
    n^{-2\beta/(d+2\beta)} (D\vee\log n)^{\mathfrak p_{\beta,d}}\sigma_t^{-2},
    \quad
    \bx\in A_t,\quad t\in[t_0,t_1].
    \label{eq:small-noise-lower-tube-pointwise}
\end{equation}
\begin{equation}
    \sup_{\bx\in\R^D}\|s_{\rm sm}^{\rm lb}(\bx,t)\|_\infty
    \le C\sigma_t^{-1}(D\vee\log n)^{1/2},
    \qquad t\in[t_0,t_1],
    \label{eq:small-noise-lower-approx-clip}
\end{equation}
and
\begin{equation}
    L, \; \log B
    \le
    C\Gamma_{\gM,q_{\rm geom}}^C B_0^C
    p_{\min}^{-C} (D\vee\log n)^{\mathfrak p_{\beta,d}},
    \label{eq:small-noise-lower-approx-depth-weight}
\end{equation}
and
\begin{equation}
    \|\bW\|_\infty, \; S
    \le
    C\Gamma_{\gM,q_{\rm geom}}^C B_0^C
    p_{\min}^{-C}
    n^{d/(d+2\beta)} (D\vee\log n)^{\mathfrak p_{\beta,d}}.
    \label{eq:small-noise-lower-approx-size}
\end{equation}
\end{corollary}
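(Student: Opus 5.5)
The plan is to realize the score as the de-Gaussianized ratio $\sigma_t s^*=\bar\gU/\bar\gQ$ and to reduce everything to \cref{lem:H-localized-degaussianized-small-noise-chart} and \cref{lem:small-noise-degaussianized-Q-lower}. We take $H=D\vee\log n$ and $\varepsilon_{\rm sm}:=c\,n^{-\beta/(d+2\beta)}H^{-C}$. Then $\sigma_{t_1}\asymp n^{-1/(d+2\beta)}\le C_{\rm sm}\varepsilon_{\rm sm}^{1/\beta}$ after adjusting constants; here the $H^{-C/\beta}$ loss is compensated by shrinking $t_1$ by a polylog factor, or equivalently by absorbing it into the displayed $H$ power. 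This gives $[t_0,t_1]\subset\gI_{\rm sm}(\varepsilon_{\rm sm})$. We also have $H\ge\log(e/\varepsilon_{\rm sm})$ because $\log n\le H$ and $D\le n^{a_0}$. Condition \cref{eq:small-noise-final-reach-condition} yields \cref{eq:H-localized-small-noise-reach-condition} for all $t\le t_1$, since $\sigma_t$ is monotone and $m_t\ge\underline m$. Hence \cref{lem:H-localized-degaussianized-small-noise-chart} supplies networks $\widehat{\bar\gQ}$ and $\widehat{\bar\gU}$ with errors at most $C\varepsilon_{\rm sm}H^C$ on $A_t$. Their sizes are those of \cref{eq:small-noise-chart-size}, with leading factor $\varepsilon_{\rm sm}^{-d/\beta}\asymp n^{d/(d+2\beta)}$ up to $H$ powers. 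Absorbing these into $(D\vee\log n)^{\mathfrak p_{\beta,d}}$ gives \cref{eq:small-noise-lower-approx-size} and \cref{eq:small-noise-lower-approx-depth-weight}.

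Next comes the ratio step. By \cref{lem:small-noise-degaussianized-Q-lower}, applied with $\by_\star=\Pi_{\gM}(\bx/m_t)$, we have $\bar\gQ\ge q_\star:=C_Q^{-1}p_{\min}H^{-d/4}$ on $A_t$. The denominator-resolution condition \cref{eq:small-noise-denominator-resolution-condition} is designed so that $C\varepsilon_{\rm sm}H^C\le q_\star/8$. Therefore $\widehat{\bar\gQ}\vee(q_\star/2)\ge\bar\gQ/2$, and the ReLU clamp is harmless. For the numerator we use the posterior argument from Step 2 of \cref{lem:large-noise-density-lower-Q-lower}. It gives $\|\bar\gU/\bar\gQ\|_2=\sigma_t\|s^*\|_2\le C\sqrt H$, because $\bar\gQ\ge q_\star$ and the de-Gaussianization shifts only the logarithm by $\|\bnu\|^2\lesssim H$; equivalently, bound $2\log(\sigma_t^{-d}\gQ^{-1})\le\|\bnu\|_2^2+2\log q_\star^{-1}$. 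The elementary ratio perturbation bound
\[
\Bigl\|\frac{\widehat{\bar\gU}}{\widehat{\bar\gQ}\vee q_\star/2}-\frac{\bar\gU}{\bar\gQ}\Bigr\|_\infty\le \frac{2}{\bar\gQ}\bigl(\|\widehat{\bar\gU}-\bar\gU\|_\infty+\|\bar\gU/\bar\gQ\|_\infty|\widehat{\bar\gQ}-\bar\gQ|\bigr)
\]
then gives error $Cp_{\min}^{-1}H^{C}\varepsilon_{\rm sm}$. Converting $\ell_\infty$ to $\ell_2$ costs a factor $\sqrt D\le H^{1/2}$. A reciprocal network on $[q_\star/2,2M]$ from \cref{lem:approx:rec}, whose cost is logarithmic in $p_{\min}^{-1}H$, together with $D$ product networks from \cref{lem:approx:mon} at accuracy $c\varepsilon_{\rm sm}$, realizes this ratio. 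Here $M$ is a polynomial-in-$H$ upper bound on $\bar\gQ$, obtained from the Laplace expansion. Finally, a $\sigma_t^{-1}$ schedule network from \cref{lem:approx:sigma-k-inv} and coordinatewise clipping at $C\sigma_t^{-1}H^{1/2}$ are applied. Clipping is $1$-Lipschitz and does not affect the target on $A_t$ since $\|\sigma_ts^*\|_\infty\le C\sqrt H$. This gives \cref{eq:small-noise-lower-approx-clip} and \cref{eq:small-noise-lower-scaled-pointwise}. Squaring and dividing by $\sigma_t^2$ yields \cref{eq:small-noise-lower-tube-pointwise}. The constant factors $\Gamma^C B_0^C p_{\min}^{-C}$ come from $C_Q$, $C_{\rm coef}$ and $B_a$ via \cref{lem:finite-order-geometry-from-smooth}.

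The main obstacle is exponent bookkeeping. The target error is $n^{-\beta/(d+2\beta)}H^{\mathfrak p_{\beta,d}/2}$, but the chart lemma loses $H^C$, and forcing $\varepsilon_{\rm sm}$ to carry a compensating $H^{-C}$ inflates the size by $H^{Cd/\beta}$. This inflation must still fit under $H^{\mathfrak p_{\beta,d}}$, which also must dominate the $D^{\mathfrak a_{\rm sm}}$ from \cref{lem:small-noise-chart-size-bookkeeping}. The cleanest route is to keep $\varepsilon_{\rm sm}=n^{-\beta/(d+2\beta)}$ unchanged. One then lets the $H^C$ error factor and the $p_{\min}^{-1}H^{d/4}$ reciprocal factor enter the final bound directly, and checks that their total power is at most $\mathfrak p_{\beta,d}/2$ thanks to the universal additive $C$ in $\mathfrak p_{\beta,d}$. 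The membership $[t_0,t_1]\subset\gI_{\rm sm}$ then holds exactly for suitable constants in $t_1$. The role of \cref{eq:small-noise-denominator-resolution-condition} is precisely to make the approximated denominator stay above $q_\star/2$.
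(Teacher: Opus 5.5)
Your final route is exactly the paper's proof. It takes $\varepsilon_{\rm sm}\asymp n^{-\beta/(d+2\beta)}$ in \cref{lem:H-localized-degaussianized-small-noise-chart}, uses the floor $\bar\gQ\gtrsim p_{\min}H^{-d/4}$ from \cref{lem:small-noise-degaussianized-Q-lower}, applies a clamped reciprocal made safe by \cref{eq:small-noise-denominator-resolution-condition}, uses $\sigma_t\|s^*\|_2\lesssim\sqrt H$, clips at $C\sigma_t^{-1}H^{1/2}$, and absorbs all $H^C$ losses into the universal additive constant in $\mathfrak p_{\beta,d}$. Commit to that choice, since the first-paragraph variant $\varepsilon_{\rm sm}=c\,n^{-\beta/(d+2\beta)}H^{-C}$ would break $[t_0,t_1]\subset\gI_{\rm sm}(\varepsilon_{\rm sm})$ at the stated scale $\sigma_{t_1}\asymp n^{-1/(d+2\beta)}$; also, your bound on $\sigma_t\|s^*\|_2$ drops the term $2d\log\sigma_t^{-1}$ from $2\log(\sigma_t^{-d}\gQ^{-1})$, which is harmless because $\sigma_t\ge\sigma_{t_0}$ makes it $\lesssim_\beta H$.
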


\begin{proof}
\emph{Step 1: de-Gaussianize the chart integrals.}
On the tube
\[
    A_t:=\{\dist(\bx,m_t\gM)\le C_\varrho\sigma_t\sqrt H\},
\]
the reach condition \cref{eq:small-noise-final-reach-condition} gives
\[
    \dist(\bx/m_t,\gM)
    \le
    C_\varrho\frac{\sigma_t}{m_t}\sqrt H
    \le \eta_\Pi/2 .
\]
Thus $\Pi_{\gM}(\bx/m_t)$ is the unique metric projection, and we may define
\[
    \br(\bx,t):=\bx-m_t\Pi_{\gM}(\bx/m_t).
\]
Define the de-Gaussianized denominator and centered numerator by
\begin{equation}
    \bar\gQ(\bx,t)
    :=
    \exp\!\left(\frac{\|\br(\bx,t)\|_2^2}{2\sigma_t^2}\right)\gQ(\bx,t),
    \qquad
    \bar\gU(\bx,t)
    :=
    \exp\!\left(\frac{\|\br(\bx,t)\|_2^2}{2\sigma_t^2}\right)\gU(\bx,t).
    \label{eq:def-small-noise-degaussianized-QU}
\end{equation}
Equivalently, for $R\in\{\gQ,\gP_1,\dots,\gP_D\}$, define
\begin{equation}
    \bar R_i(\bx,t)
    :=
    \exp\!\left(\frac{\|\br(\bx,t)\|_2^2}{2\sigma_t^2}\right)R_i(\bx,t).
    \label{eq:def-small-noise-degaussianized-chart-R}
\end{equation}
Using \cref{eq:def:large-noise-cell-Ri}, this removes exactly the common normal Gaussian factor:
\begin{equation}
    \bar R_i(\bx,t)
    =
    \sigma_t^{-d}
    \int_{K_i^{\rm cell}}
      \exp\!\left(
        -\frac{
          \|\bx-m_t\bz_i(\bu)\|_2^2-\|\br(\bx,t)\|_2^2
        }{2\sigma_t^2}
      \right)
      a_i^{(R)}(\bu)\od\bu .
    \label{eq:small-noise-degaussianized-chart-explicit}
\end{equation}
Thus all exponential factors in the lower-bound small-noise approximation are displayed explicitly: the original chart Gaussian
\[
    \exp\!\left(-\frac{\|\bx-m_t\bz_i(\bu)\|_2^2}{2\sigma_t^2}\right),
\]
the common de-Gaussianizing multiplier
\[
    \exp\!\left(\frac{\|\br(\bx,t)\|_2^2}{2\sigma_t^2}\right),
\]
and their product
\[
    \exp\!\left(
      -\frac{
        \|\bx-m_t\bz_i(\bu)\|_2^2-\|\br(\bx,t)\|_2^2
      }{2\sigma_t^2}
    \right).
\]
Consequently,
\begin{equation}
    \bar\gQ=\sum_i\bar\gQ_i,
    \qquad
    \bar\gP_k=\sum_i\bar\gP_{i,k},
    \qquad
    \bar\gU_k=\frac{m_t\bar\gP_k-x_k\bar\gQ}{\sigma_t}.
    \label{eq:small-noise-degaussianized-PU}
\end{equation}
The common exponential factor cancels, so
\[
    \frac{\bar\gU(\bx,t)}{\bar\gQ(\bx,t)}
    =
    \frac{\gU(\bx,t)}{\gQ(\bx,t)}
    =
    \sigma_t s^*(\bx,t)
    \qquad (\bx\in A_t).
\]

\emph{Step 2: lower-bound the de-Gaussianized denominator.}
By \cref{lem:small-noise-degaussianized-Q-lower}, the density lower bound gives the
following denominator floor after de-Gaussianization:
\begin{equation}
    \bar\gQ(\bx,t)
    \ge
    C^{-1}\Gamma_{\gM,q_{\rm geom}}^{-C}
    p_{\min}(D\vee\log n)^{-d/4},
    \qquad
    (\bx,t)\in A_t,\quad t\in[t_0,t_1],
    \label{eq:small-noise-degaussianized-Q-lower-cor}
\end{equation}
The factor $(D\vee\log n)^{d/4}$ in the reciprocal is absorbed into the
$(D\vee\log n)^{\mathfrak p_{\beta,d}}$ part of the explicit ambient polynomial below,
and the displayed $\Gamma_{\gM,q_{\rm geom}}^C$ factor is carried with the other geometry
terms.
The centered-ratio bound from \cref{eq:large-noise-U-over-Q-density-lower}, used with
$\tdown=t_0$ and the assumption $p_{\min}^{-1}\le n^{a_0}$, gives
\begin{equation}
    \|\bar\gU(\bx,t)\|_2
    \le
    C H^{1/2}\bar\gQ(\bx,t)
    \qquad ((\bx,t)\in A_t).
    \label{eq:small-noise-degaussianized-U-over-Q}
\end{equation}

\emph{Step 3: approximate the de-Gaussianized numerator and denominator.}
Set $a_\beta:=\beta/(d+2\beta)$ and $\varepsilon_{\rm sm}:=\frac12\wedge e\,n^{-a_\beta}$.
If $\varepsilon_{\rm sm}=e\,n^{-a_\beta}$, then $\log(e/\varepsilon_{\rm sm})=a_\beta\log n\le H$.
If $\varepsilon_{\rm sm}=1/2$, then $\log(e/\varepsilon_{\rm sm})=\log(2e)\le H$ because $d\ge1$, $d<D$, and hence $D\ge2$.
Use \cref{lem:H-localized-degaussianized-small-noise-chart} with $\varepsilon_{\rm sm}$ and $H=D\vee\log n$.
The inclusion $[t_0,t_1]\subset\gI_{\rm sm}(\varepsilon_{\rm sm})$ follows from the choice of the small-noise endpoint $t_1$.
The reach condition needed by that lemma is exactly \cref{eq:small-noise-final-reach-condition}.
This is the point where the de-Gaussianized construction uses $H$-active charts rather than the smaller $\sqrt{\log(e/\varepsilon_{\rm sm})}$-active charts: for every $\bx\in A_t$, a nearest chart is active, while charts outside the $3c_\star\sqrt H$-enlargement have
\[
    \|\bx-m_t\bz_i(\bu)\|_2^2-\|\br(\bx,t)\|_2^2
    \ge c\sigma_t^2H
\]
on their supports and are exponentially negligible after de-Gaussianization.
After summing over charts, this gives networks $\widehat{\bar\gQ}$ and
$\widehat{\bar\gU}$ satisfying
\begin{equation}
    |\widehat{\bar\gQ}(\bx,t)-\bar\gQ(\bx,t)|
    \vee
    \|\widehat{\bar\gU}(\bx,t)-\bar\gU(\bx,t)\|_\infty
    \le
    C\Gamma_{\gM,q_{\rm geom}}^C B_0^C
    n^{-\beta/(d+2\beta)}
    (D\vee\log n)^{\mathfrak p_{\beta,d}/2-d/4-1/2}
    \label{eq:small-noise-degaussianized-QU-approx}
\end{equation}
on $A_t$, uniformly in $t\in[t_0,t_1]$.
Here the universal constant in the definition of $\mathfrak p_{\beta,d}$ is chosen large enough that the $H^C$ loss in \cref{eq:H-localized-degaussianized-QU-error} is dominated by $(D\vee\log n)^{\mathfrak p_{\beta,d}/2-d/4-1/2}$.
The size is bounded by the right-hand sides of \cref{eq:small-noise-lower-approx-depth-weight,eq:small-noise-lower-approx-size}: in \cref{eq:small-noise-chart-size} we use $\varepsilon_{\rm sm}=\frac12\wedge e\,n^{-\beta/(d+2\beta)}$, so $\varepsilon_{\rm sm}^{-d/\beta}\lesssim n^{d/(d+2\beta)}$.
The extra $\log\tdown^{-1}$ from \cref{eq:def:Lambda-small-noise-chart-size} is absorbed into $(D\vee\log n)^{\mathfrak p_{\beta,d}}$ because $\tdown=t_0\asymp n^{-2(\beta+1)/(d+2\beta)}$.

\emph{Step 4: form the stable ratio.}
By \cref{eq:small-noise-degaussianized-Q-lower-cor},
\[
    \bar\gQ(\bx,t)^{-1}
    \le
    C\Gamma_{\gM,q_{\rm geom}}^C
    p_{\min}^{-1}(D\vee\log n)^{d/4}
    \qquad ((\bx,t)\in A_t).
\]
The denominator-resolution condition \cref{eq:small-noise-denominator-resolution-condition}
and \cref{eq:small-noise-degaussianized-QU-approx} imply
\[
    C\Gamma_{\gM,q_{\rm geom}}^C B_0^C
    n^{-\beta/(d+2\beta)}
    (D\vee\log n)^{\mathfrak p_{\beta,d}/2-d/4-1/2}
    \le
    c\Gamma_{\gM,q_{\rm geom}}^{-C}
    p_{\min}(D\vee\log n)^{-d/4}/4,
\]
by the fixed choice of $C_{\rm den,sm}$.  Hence
$\widehat{\bar\gQ}\vee c\Gamma_{\gM,q_{\rm geom}}^{-C}
p_{\min}(D\vee\log n)^{-d/4}/2
\ge c\Gamma_{\gM,q_{\rm geom}}^{-C}
p_{\min}(D\vee\log n)^{-d/4}/2$, and the standard reciprocal and
product networks applied to
$\widehat{\bar\gU}/(\widehat{\bar\gQ}\vee
c\Gamma_{\gM,q_{\rm geom}}^{-C}p_{\min}(D\vee\log n)^{-d/4}/2)$, together with
\cref{eq:small-noise-degaussianized-U-over-Q,eq:small-noise-degaussianized-QU-approx}, give
a network $\widehat v_{\rm sm}^{\rm lb}$ such that
\[
    \|\widehat v_{\rm sm}^{\rm lb}(\bx,t)-\sigma_t s^*(\bx,t)\|_2
    \le C\Gamma_{\gM,q_{\rm geom}}^C B_0^C
    p_{\min}^{-C}
    n^{-\beta/(d+2\beta)} (D\vee\log n)^{\mathfrak p_{\beta,d}/2},
    \qquad (\bx,t)\in A_t .
\]
By \cref{eq:large-noise-U-over-Q-density-lower}, with $\tdown=t_0$, the polynomial
assumption $D\vee p_{\min}^{-1}\le n^{a_0}$, and
$\sigma_{t_0}^{-1}\le n^C$, the true score obeys
\[
    \sigma_t\|s^*(\bx,t)\|_\infty
    \le
    \sigma_t\|s^*(\bx,t)\|_2
    \le C(D\vee\log n)^{1/2},
    \qquad (\bx,t)\in A_t .
\]
Set $s_{\rm sm}^{\rm lb}:=\sigma_t^{-1}\widehat v_{\rm sm}^{\rm lb}$, with coordinatewise clipping at $C\sigma_t^{-1}(D\vee\log n)^{1/2}$.
On $A_t$, \cref{eq:small-noise-lower-scaled-pointwise,eq:small-noise-lower-tube-pointwise} follow.
Coordinatewise clipping is $1$-Lipschitz and fixes the target coordinatewise on $A_t$, so it
does not increase the tube approximation error.  The clipping construction gives
\cref{eq:small-noise-lower-approx-clip}.

\emph{Step 5: record size.}
The reciprocal, products, clipping, and time switch add only polynomial factors in $D$, $\Gamma_{\gM,q_{\rm geom}}$, $B_0$, $p_{\min}^{-1}$, and polylogarithmic factors in $n$.
The scalar reciprocal and clipping layers only affect $L+\log B$ by polylogarithmic factors and do not change the leading intrinsic width/sparsity term.
Hence \cref{eq:small-noise-lower-approx-depth-weight,eq:small-noise-lower-approx-size} follow.
\end{proof}

\clearpage

\section{Score Estimation on Compact Smooth Manifolds}
\label{sec:app:score-estimation}

This appendix proves the learned-score bounds used by the distribution estimation results in \cref{sec:app:dist-est}.
The clipped neural network class $\widetilde{\nn}$, the clipped score-estimation oracle inequality, and the covering-number evaluation are stated in the main text in \cref{sec:score-est}.
Here we apply those tools to the density-lower-bound small-noise and large-noise branches.
On a large-noise slab $I_k=[t_{k-1},t_k]$, the tangent-cell class uses cells of radius
$r_k=c_0\sigma_{t_{k-1}}\wedge r_{\star,n}$.  The cap $r_{\star,n}$, defined below, is the
accuracy-limited radius needed to make the fixed-order tangent-cell expansion uniform
also on order-one-noise slabs.
The dyadic sum of the $\sigma_{t_{k-1}}^{-d}$ terms is dominated by the first large-noise
slab; the additional $r_{\star,n}^{-d}$ contribution is tracked explicitly in the
oracle and $\sfW_1$ bounds.
The small-noise branch uses the projection--Laplace class under \cref{assump:manifold:density-lower}.
In all size bounds in this section, the factor $\Gamma_{\gM,q_{\rm geom}}^C$ may be
read as
\[
    C(1\vee S_{\gM})^{\mathfrak a_{\rm geom}}
    (1\vee\kappa^{-1})^{\mathfrak a_{\rm geom}}
    \mathcal K_{\gM,q_{\rm geom}}^{\mathfrak a_{\rm geom}}
\]
using the reach-explicit finite-geometry convention.

The section has one generic tool and two applications.  The oracle inequality
\cref{thrm:est:score:oracle} is the only statistical argument; the large-noise and
small-noise results then insert the deterministic comparison classes from
\cref{thm:large-noise-hd-score-global,cor:small-noise-density-lower-approx} and evaluate
their entropy sizes.  Readers interested only in the final rates can read
\cref{prop:large-noise-oracle-learned-score,cor:small-noise-density-lower-estimation}
first and return to the oracle proof for concentration details.

\subsection{Oracle Inequality for Score Estimation}\label{sec:app:est:score:oracle}

\begin{theorem}[Oracle inequality for clipped score estimation]\label{thrm:est:score:oracle}
    Let $n^{-\Ord(1)} \leq \tdown < \tup \leq n^{\Ord(1)}$, $D\le n^{\Ord(1)}$, and fix an arbitrary $A>0$.
    Let $\widehat{s}$ be an empirical risk minimizer over the neural network class
    $\nn_{\tdown}\equiv\widetilde{\nn}(L_{\tdown},\bW_{\tdown},S_{\tdown},B_{\tdown})$
    for the interval-restricted denoising loss $\ell_{[\tdown, \tup]}$, namely
    \[
        \widehat{s}\in
        \argmin_{s\in\nn_{\tdown}}
        \frac1n\sum_{i=1}^n\ell_{[\tdown, \tup]}(s,\bx^{(i)}).
    \]
    The same conclusion holds for any measurable approximate empirical minimizer with
    empirical excess at most $n^{-A}$, after increasing the negligible remainder.
    Then the expected excess risk satisfies
    \begin{align*}
        &
        \E_{\{\bx^{(i)}\}_{i=1}^n}
        \Bigl[
        \int_{\tdown}^{\tup}
          \E_{\bX_t}[\|\widehat{s}(\bX_t, t) - s^*(\bX_t,t)\|_2^2]
        \odt
        \Bigr]
        \\
        &\lesssim
        \inf_{s \in \nn_{\tdown}}
        \int_{\tdown}^{\tup}
          \E_{\bX_t}\bigl[
            \|s(\bX_t, t) - s^*(\bX_t, t)\|_2^2
          \bigr]
        \odt
        +
        \frac{
        (D\vee\log n)^2
        \int_{\tdown}^{\tup}\sigma_t^{-2}\odt
        \log\bigl(\gN(n^{-\Ord(1)}, \nn_{\tdown}, \|\cdot\|_{\infty}) \vee n\bigr)}{n}
        +n^{-A}.
    \end{align*}
\end{theorem}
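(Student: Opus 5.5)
We follow the standard clipped-ERM route of \citep{oko2023diffusion}, adapted to the tube-scale clipping. The first step is to write the per-sample excess loss
\[
    \Delta(s,\bx_0):=\ell_{[\tdown,\tup]}(s,\bx_0)-\ell_{[\tdown,\tup]}(s^*,\bx_0).
\]
Its population mean equals $\gL_{[\tdown,\tup]}(s)-\gL_{[\tdown,\tup]}(s^*)$, which is the integrated score risk. Because $\Delta$ is unbounded through the Gaussian noise, we truncate. Let $E_R$ be the event that the noise satisfies $\|\bZ\|_\infty\lesssim\sqrt{D\vee\log n}$ and that $\bX_t\in A_t$ for all relevant $t$. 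On $E_R$, each $s\in\nn_{\tdown}$ obeys the clipping bound $\|s\|_\infty\le C\sigma_t^{-1}(D\vee\log n)^{1/2}$, and the conditional score $\nabla\log p_t(\bX_t|\bX_0)=-\bZ/\sigma_t$ obeys the same bound. Hence
\[
    |\Delta(s,\bx_0)|\lesssim (D\vee\log n)^2\int_{\tdown}^{\tup}\sigma_t^{-2}\odt=:M_n .
\]
On $E_R^c$, we use \cref{lem:off-manifold:deviation-lemma}, \cref{lem:bound:off-manifold:fisher-info} and \cref{lem:approx:score:off-manifold}, together with Cauchy--Schwarz and $\E\|\bZ\|^4\lesssim D^2$. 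The truncated contribution is then at most $n^{-A}$, since $\tdown\ge n^{-\Ord(1)}$ and $D\le n^{\Ord(1)}$ keep all prefactors polynomial in $n$.

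The second step discretizes the class. Take a $\delta=n^{-\Ord(1)}$ sup-norm cover $\{s_j\}_{j\le N}$ of $\nn_{\tdown}$. The clipped compact-input construction makes the sup-norm meaningful on the truncated input rectangle, and inputs outside that rectangle are again negligible. For any $s$ within $\delta$ of $s_j$, we have $|\Delta(s)-\Delta(s_j)|\lesssim\delta\sqrt{D}\,\sigma_{\tdown}^{-1}(D\vee\log n)^{1/2}\int\odt$, which is $\le n^{-A}$ after choosing $\delta$ polynomially small.

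The third step is the central variance bound. We need $\mathrm{Var}(\Delta(s_j,\bX_0))\lesssim M_n\,\E\Delta(s_j,\bX_0)$. To get it, write
\[
    \Delta(s,\bx_0)=\int\E_{\bX_t|\bx_0}\bigl[\|s-s^*\|^2+2\langle s-s^*,s^*-\nabla\log p_t(\cdot|\bx_0)\rangle\bigr]\odt .
\]
The cross term has zero mean over $\bX_0$ by the denoising identity. Its square is controlled by $M_n$ times the squared-error term, using the truncated bounds and Jensen in $(t,\bX_t)$.

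The fourth step is concentration and assembly. Apply Bernstein's inequality to the centered empirical means of the $N$ cover elements, with a union bound. The resulting inequality is: with probability $\ge1-n^{-A}$, simultaneously for all $j$,
\[
    \E\Delta(s_j)\le 2\,\widehat\E_n\Delta(s_j)+C M_n\log(N\vee n)/n .
\]
The same argument in the opposite direction, applied to the fixed comparison minimizer $\bar s$ of the population risk, gives $\widehat\E_n\Delta(\bar s)\le 2\E\Delta(\bar s)+CM_n\log n/n$. Since ERM satisfies $\widehat\E_n\Delta(\widehat s)\le\widehat\E_n\Delta(\bar s)$ (plus $n^{-A}$ for approximate minimizers), chaining these with the cover approximation gives the stated bound. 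On the failure event we use the crude bound $\E\Delta\lesssim M_n\,\mathrm{poly}(n)$, which costs $n^{-A}$ after adjusting $A$.

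The main obstacle is the third step. The unbounded Gaussian conditional score forces the truncation to be carried out jointly with the variance inequality. In particular, the negligible tails have to be bounded at the level of second moments, not merely first moments, while keeping only the $(D\vee\log n)^2\int\sigma_t^{-2}$ factor. We would first do the truncation at the level of $\bZ$ with polynomial tails absorbed into $n^{-A}$, and only then apply Bernstein to the bounded truncated variables.
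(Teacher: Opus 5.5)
There is a genuine gap, and it comes from centering at $s^*$. Write $\ell$ for $\ell_{[\tdown,\tup]}$ and $\gL$ for $\gL_{[\tdown,\tup]}$.

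\textbf{Where the argument breaks.} Each empirical summand $\ell(s,\bx^{(i)})$ already averages over $\bX_t\mid\bX_0=\bx^{(i)}$. It is therefore a deterministic function of $\bx^{(i)}\in\gM$. For clipped $s$ it is bounded by your $M_n$ with no truncation in $\bZ$ at all: \cref{thrm:est:loss-bounded} only uses $\E\|\bZ\|_2^2=D$.

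The one unbounded ingredient of your $\Delta(s,\cdot)$ is $\ell(s^*,\cdot)$, because $s^*$ is not clipped. Your envelope claim on $E_R$ controls $s$ and $-\bZ/\sigma_t$ but never $s^*$. Two of your steps need a pointwise tube bound $\sigma_t\|s^*(\bx,t)\|_2\lesssim(D\vee\log n)^{1/2}$:
\begin{enumerate}[(i)]
\item the envelope $|\Delta|\lesssim M_n$;
\item the cross-term variance bound, which after Cauchy--Schwarz in $\bX_t\mid\bx_0$ needs $\sup_{\bx_0}\int_{\tdown}^{\tup}\E_{\bX_t|\bX_0=\bx_0}\|s^*-\nabla\log p_t(\cdot|\bx_0)\|_2^2\odt\lesssim M_n$.
\end{enumerate}

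\textbf{Why this is not available.} That tube bound is not a hypothesis of the theorem. It does follow from \cref{assump:manifold:density-lower} with polynomial $p_{\min}^{-1}$, via $R_{\rm lb}$ in \cref{lem:large-noise-density-lower-Q-lower}. In general, however, only the marginal Fisher bound of \cref{thrm:est:true-score-norm} is available.

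The crude uniform-in-$\bx_0$ substitute is $\E_{\bX_t|\bX_0=\bx_0}\|s^*\|_2^2\lesssim D\sigma_t^{-4}$. Using it inflates the stochastic term by roughly $\sigma_{\tdown}^{-2}$, which destroys the rate on $[t_0,t_1]$. The off-tube lemmas you cite are also marginal statements, so they do not supply the per-sample envelope that Bernstein needs. The ``main obstacle'' you identify is thus created by the choice of $\Delta$, not by the Gaussian noise.

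\textbf{How to repair it.} The paper never forms $\ell(s)-\ell(s^*)$ as a random variable; it works with pairs $u,v\in\nn_{\tdown}$ from the clipped class.
\begin{enumerate}[(i)]
\item \cref{thrm:est:loss-bounded} gives $\sup_{\bx}|\ell(u,\bx)-\ell(v,\bx)|\lesssim M_n$.
\item \cref{thrm:est:variance-bound}, combined with the clipped envelope, gives $\E_{\bX_0}\bigl[(\ell(u,\bX_0)-\ell(v,\bX_0))^2\bigr]\lesssim M_n\int_{\tdown}^{\tup}\E_{\bX_t}\|u-v\|_2^2\odt\le 2M_n\{\gL(u)+\gL(v)\}$. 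No information about $s^*$ is used.
\item Apply Bernstein with a union bound over pairs from an $n^{-\Ord(1)}$-cover.
\item Use the basic inequality $\gL(\widehat s)-\gL(s_0)\le(\E_{\bX_0}-\mathbb P_n)\{\ell(\widehat s)-\ell(s_0)\}$; the $\ell(s^*)$ terms cancel here.
\item Absorb the square-root term with $\sqrt{ab}\le a/(4C)+Cb$ to obtain the oracle bound.
\end{enumerate}
In this route $s^*$ enters only on the failure event, through $\gL(\widehat s)\le 2\int\E\|\widehat s\|_2^2\odt+2\int\E\|s^*\|_2^2\odt\lesssim M_n$.

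Your cover-approximation and failure-event steps are fine as written, since $\Delta(s)-\Delta(s_j)=\ell(s)-\ell(s_j)$ does not involve $s^*$. Replacing your Bernstein step centered at $s^*$ with this pairwise one repairs the argument, and it does so without any density lower bound.
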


\begin{proof}
\emph{Step 1: reduce the denoising risk to the marginal score risk.}
Write
\[
    \mathbb P_n f:=\frac1n\sum_{i=1}^n f(\bx^{(i)})
\]
for the empirical average over the training sample.  For $s\in\nn_{\tdown}$, set
\[
    \gL_{[\tdown,\tup]}(s):=
    \int_{\tdown}^{\tup}
      \E_{\bX_t}\|s(\bX_t,t)-s^*(\bX_t,t)\|_2^2\odt .
\]
The denoising-score identity, applied on the interval $[\tdown,\tup]$, gives
\[
    \gL_{[\tdown,\tup]}(s)
    =
    \E_{\bX_0}\{\ell_{[\tdown, \tup]}(s,\bX_0)
    -\ell_{[\tdown, \tup]}(s^*,\bX_0)\}.
\]
Since the last term is independent of $s$, the empirical minimizer of
$\mathbb P_n\ell_{[\tdown, \tup]}(s)$ is also an empirical minimizer of
$\mathbb P_n\{\ell_{[\tdown, \tup]}(s)-\ell_{[\tdown, \tup]}(s^*)\}$.

\emph{Step 2: record the envelope and variance bounds.}
The clipped definition of $\widetilde\nn$ and \cref{thrm:est:loss-bounded} imply
\[
    \sup_{s,s'\in\nn_{\tdown}}\sup_{\bx\in\R^D}
    |\ell_{[\tdown, \tup]}(s,\bx)-\ell_{[\tdown, \tup]}(s',\bx)|
    \le
    C(D\vee\log n)^2
    \int_{\tdown}^{\tup}\sigma_t^{-2}\odt .
\]
Moreover, applying \cref{thrm:est:variance-bound} to $s,s'\in\nn_{\tdown}$, using the
same clipped envelope, and then averaging over $\bX_0$, gives
\begin{align}
    &
    \E_{\bX_0}\bigl[
    \{\ell_{[\tdown, \tup]}(s,\bX_0)
    -\ell_{[\tdown, \tup]}(s',\bX_0)\}^2
    \bigr]
    \notag \\
    &\le
    C(D\vee\log n)^2
    \left(\int_{\tdown}^{\tup}\sigma_t^{-2}\odt\right)
    \int_{\tdown}^{\tup}
    \E_{\bX_t}\|s(\bX_t,t)-s'(\bX_t,t)\|_2^2\odt
    \notag\\
    &\le
    C(D\vee\log n)^2
    \left(\int_{\tdown}^{\tup}\sigma_t^{-2}\odt\right)
    \{\gL_{[\tdown,\tup]}(s)+\gL_{[\tdown,\tup]}(s')\}.
    \label{eq:oracle-clipped-bernstein-moment}
\end{align}
Here the last inequality is the elementary bound
$\|s-s'\|_2^2\le2\|s-s^*\|_2^2+2\|s'-s^*\|_2^2$.

\emph{Step 3: apply Bernstein on a finite cover.}
Let $\eta=n^{-A_0}$, with $A_0$ a sufficiently large fixed constant, and let
$\mathcal N_\eta$ be a minimal $\eta$-net of $\nn_{\tdown}$ in the supremum norm
after the input truncation and affine rescaling specified in \cref{sec:score-est}.
Because all functions in $\nn_{\tdown}$ are realized after this input truncation, this
is also a global $\eta$-net for the clipped functions on
$\R^D\times[\tdown,\tup]$.
For fixed $u,v\in\mathcal N_\eta$, Bernstein's inequality and
\cref{eq:oracle-clipped-bernstein-moment} give, with probability at least $1-e^{-r}$,
\begin{align*}
    &
    |(\mathbb P_n-\E_{\bX_0})
      \{\ell_{[\tdown, \tup]}(u)-\ell_{[\tdown, \tup]}(v)\}|
    \\
    &\le
    C\left[
      \sqrt{
      \frac{
      (D\vee\log n)^2
      \left(\int_{\tdown}^{\tup}\sigma_t^{-2}\odt\right)
      \{\gL_{[\tdown,\tup]}(u)+\gL_{[\tdown,\tup]}(v)\}r}{n}}
      +
      \frac{
      (D\vee\log n)^2
      \left(\int_{\tdown}^{\tup}\sigma_t^{-2}\odt\right)r}{n}
    \right].
\end{align*}
Taking a union bound over $\mathcal N_\eta^2$ and choosing
\[
    r=C\log\bigl(\gN(\eta,\nn_{\tdown},\|\cdot\|_\infty)\vee n\bigr)
\]
shows that the same display holds simultaneously for all $u,v\in\mathcal N_\eta$ with
probability at least $1-n^{-A_1}$, after increasing $C$.

\emph{Step 4: transfer from the cover to the clipped class.}
If $u$ is an $\eta$-net approximation of $s$, then the pointwise bound in
\cref{thrm:est:variance-bound}, together with the clipped envelope, yields
\[
    \sup_{\bx\in\R^D}
    |\ell_{[\tdown, \tup]}(s,\bx)-\ell_{[\tdown, \tup]}(u,\bx)|
    \le
    C\sqrt{
      D(\tup-\tdown)
      (D\vee\log n)^2
      \int_{\tdown}^{\tup}\sigma_t^{-2}\odt
    }\,\eta .
\]
Thus
\[
    \E_{\bX_0}|\ell_{[\tdown, \tup]}(s)-\ell_{[\tdown, \tup]}(u)|
    +
    \mathbb P_n|\ell_{[\tdown, \tup]}(s)-\ell_{[\tdown, \tup]}(u)|
    \le n^{-A_2}
\]
deterministically after taking $A_0$ large enough.  This uses
$D\le n^{\Ord(1)}$ and $n^{-\Ord(1)}\le\tdown<\tup\le n^{\Ord(1)}$, so that
$(D\vee\log n)^2\int_{\tdown}^{\tup}\sigma_t^{-2}\odt\le n^{\Ord(1)}$.
Since
\[
    \gL_{[\tdown,\tup]}(s)-\gL_{[\tdown,\tup]}(u)
    =
    \E_{\bX_0}\{\ell_{[\tdown, \tup]}(s,\bX_0)-\ell_{[\tdown, \tup]}(u,\bX_0)\},
\]
the same estimate also gives
$|\gL_{[\tdown,\tup]}(s)-\gL_{[\tdown,\tup]}(u)|\le n^{-A_2}$.
Hence the finite-cover Bernstein display applies to arbitrary $s,s'\in\nn_{\tdown}$,
with $\gL_{[\tdown,\tup]}(u)+\gL_{[\tdown,\tup]}(v)$ replaced by
$\gL_{[\tdown,\tup]}(s)+\gL_{[\tdown,\tup]}(s')+n^{-A_2}$, up to another
additive $n^{-A_2}$.  Enlarging $A_2$, this is the same display with the covering
number evaluated at $n^{-\Ord(1)}$.

\emph{Step 5: use the ERM basic inequality.}
Fix any $s_0\in\nn_{\tdown}$.  On the high-probability event from Steps~3--4,
\[
    \E_{\bX_0}\ell_{[\tdown, \tup]}(\widehat s,\bX_0)
    -
    \E_{\bX_0}\ell_{[\tdown, \tup]}(s_0,\bX_0)
    \le
    (\E_{\bX_0}-\mathbb P_n)
    \{\ell_{[\tdown, \tup]}(\widehat s)-\ell_{[\tdown, \tup]}(s_0)\}
    +n^{-A_2},
\]
because
$\mathbb P_n\ell_{[\tdown, \tup]}(\widehat s)
\le \mathbb P_n\ell_{[\tdown, \tup]}(s_0)$.  Therefore,
\begin{align*}
    \gL_{[\tdown,\tup]}(\widehat s)
    &\le
    \gL_{[\tdown,\tup]}(s_0)
    +
    C\Biggl[
      \sqrt{
      \frac{
      (D\vee\log n)^2
      \left(\int_{\tdown}^{\tup}\sigma_t^{-2}\odt\right)
      \{\gL_{[\tdown,\tup]}(\widehat s)+\gL_{[\tdown,\tup]}(s_0)\}r}{n}}
      +
      \frac{
      (D\vee\log n)^2
      \left(\int_{\tdown}^{\tup}\sigma_t^{-2}\odt\right)r}{n}
    \Biggr]
    \\
    &\qquad
    +n^{-A_2}.
\end{align*}
Using $\sqrt{ab}\le a/(4C)+Cb$ first with
$a=\gL_{[\tdown,\tup]}(\widehat s)$ and then with
$a=\gL_{[\tdown,\tup]}(s_0)$, we absorb the square-root term and obtain
\[
    \gL_{[\tdown,\tup]}(\widehat s)
    \le
    C\gL_{[\tdown,\tup]}(s_0)
    +
    C\frac{
      (D\vee\log n)^2
      \left(\int_{\tdown}^{\tup}\sigma_t^{-2}\odt\right)r}{n}
    +n^{-A_2}.
\]
Taking the infimum over $s_0\in\nn_{\tdown}$ gives the high-probability oracle bound.

\emph{Step 6: pass to expectation.}
On the complement of the high-probability event, the clipped envelope and
\cref{thrm:est:true-score-norm} give
\[
    \gL_{[\tdown,\tup]}(\widehat s)
    \le
    2\int_{\tdown}^{\tup}\E_{\bX_t}\|\widehat s(\bX_t,t)\|_2^2\odt
    +2\int_{\tdown}^{\tup}\E_{\bX_t}\|s^*(\bX_t,t)\|_2^2\odt
    \le
    C(D\vee\log n)^2
    \int_{\tdown}^{\tup}\sigma_t^{-2}\odt .
\]
Choosing $A_1$ large makes this exceptional contribution $\Ord(n^{-A})$.  We obtain
\[
    \E_{\{\bx^{(i)}\}_{i=1}^n}\gL_{[\tdown,\tup]}(\widehat s)
    \le
    C\inf_{s\in\nn_{\tdown}}\gL_{[\tdown,\tup]}(s)
    +
    C\frac{
      (D\vee\log n)^2
      \int_{\tdown}^{\tup}\sigma_t^{-2}\odt
      \log\bigl(\gN(n^{-\Ord(1)},\nn_{\tdown},\|\cdot\|_\infty)\vee n\bigr)
    }{n}
    +n^{-A},
\]
which is the claimed oracle inequality.
\end{proof}

\subsection{Time-Grid and Integral Conventions}

For the VP/OU schedules used in the rate theorem, $\sigma_t^2\asymp t$ for
$0<t\le1$ and $\sigma_t\asymp1$ for $t\ge1$ on the truncated interval.  The large-noise
grid is chosen so that
\begin{equation}
    [t_1,T]=\bigcup_{k=2}^K I_k,\qquad I_k=[t_{k-1},t_k],
    \qquad
    t_k=2t_{k-1}\ \text{while }t_k\le1,
    \qquad
    K\le C\log n,
    \label{eq:dyadic-grid-condition}
\end{equation}
where the dyadic portion is followed by $\Ord(\log n)$ order-one slabs of uniformly bounded
length from time $1$ to $T\asymp\log n$.  On every
large-noise slab,
\[
    c\,\sigma_{t_{k-1}}\le\sigma_t\le C\,\sigma_{t_{k-1}},
    \qquad
    \int_{I_k}\sigma_t^{-2}\odt\le C .
\]
For the small-noise interval and
$t_0\asymp n^{-2(\beta+1)/(d+2\beta)}$, $t_1\asymp n^{-2/(d+2\beta)}$,
\begin{equation}
    \int_{t_0}^{t_1}\sigma_t^{-2}\odt
    \le
    C\log(t_1/t_0)
    \le
    C\log n .
    \label{eq:small-noise-single-slab-integral}
\end{equation}

\subsection{Score Estimation on Large-Noise Regime}
\label{sec:app:large-noise-estimation}

We construct the deterministic comparison network inside the oracle proof below, so the
approximation loss, slab-local entropy scale, and stochastic term are checked for the same
large-noise class.

\begin{proposition}[Score estimation on large-noise regime]
\label{prop:large-noise-oracle-learned-score}
Let $I_k=[t_{k-1},t_k]\subset[t_1,T]$ be a large-noise slab satisfying
\cref{eq:dyadic-grid-condition}.
Assume \cref{assump:manifold:exact,assump:manifold:density,assump:manifold:density-lower}
and $D\vee p_{\min}^{-1}\le n^{a_0}$.  On the global grid used in the main theorem,
$\sigma_{t_{k-1}}^{-1}\le C\sigma_{t_1}^{-1}\lesssim n^{1/(d+2\beta)}$, so every
logarithmic range depending on $\sigma_{t_{k-1}}^{-1}$ contributes only polylogarithmic
factors in $n$, which are hidden in $\tilde{\Ord}$ after enlarging constants depending
only on $(\beta,d,a_0)$.
Let $\mathcal F_k^{\rm lg}$ be the clipped large-noise network class with the parameter
bounds displayed in \cref{eq:large-noise-learned-class-size} below and coordinatewise
clipping at $C\sigma_t^{-1}(D\vee\log n)^{1/2}$.
Recall $\mathfrak r_n$ and $q_\star$ from \cref{eq:def-main-rate}, and define the
accuracy-limited tangent-cell radius
\[
    r_{\star,n}
    :=
    c_\star
    \Gamma_{\gM,q_{\rm geom}}^{-C_\star}
    B_0^{-C_\star}
    p_{\min}^{C_\star}
    (D\vee\log n)^{-C_\star}
    \mathfrak r_n^{1/q_\star},
\]
where $C_\star$ is sufficiently large and $c_\star$ sufficiently small, depending only
on $(\beta,d)$ and the fixed numerical constants in
\cref{thm:large-noise-hd-score-global}.
In particular,
\[
    r_{\star,n}^{-d}
    \le
    C\Gamma_{\gM,q_{\rm geom}}^C B_0^C
    p_{\min}^{-C}
    (D\vee\log n)^C
    \mathfrak r_n^{-d/q_\star}.
\]
Let $\widehat s_k^{\rm lg}$ be an empirical risk minimizer over this class on $I_k$.
Then
\begin{align}
    &
    \E_{\{\bx^{(i)}\}_{i=1}^n}\int_{I_k}\E_{\bX_t}
    \|\widehat s_k^{\rm lg}(\bX_t,t)-s^*(\bX_t,t)\|_2^2\odt
    \notag \\
    &\le
    C\Gamma_{\gM,q_{\rm geom}}^C B_0^C
    p_{\min}^{-C}\left[
	      n^{-2(\beta+1)/(d+2\beta)}
	      +
	      \frac{\sigma_{t_{k-1}}^{-d}+r_{\star,n}^{-d}}{n}
	    \right]
    (D\vee\log n)^{\mathfrak p_{\beta,d}}
    +n^{-A}.
    \label{eq:large-noise-learned-score-final-integrated-rate}
\end{align}
Writing $L_k,\bW_k,S_k,B_k$ for the depth, width vector, sparsity, and weight bound of
the learned large-noise class, its parameters satisfy
\begin{equation}
\begin{aligned}
    L_k
    &\le
    C\Gamma_{\gM,q_{\rm geom}}^C B_0^C
    (D\vee\log n)^{\mathfrak p_{\beta,d}},
    \\
    \log B_k
    &\le
    C\Gamma_{\gM,q_{\rm geom}}^C B_0^C
    (D\vee\log n)^{\mathfrak p_{\beta,d}},
    \\
    \|\bW_k\|_\infty
    &\le
    C\Gamma_{\gM,q_{\rm geom}}^C B_0^C
	    p_{\min}^{-C}
	    \left(\sigma_{t_{k-1}}^{-d}+r_{\star,n}^{-d}\right)
	    (D\vee\log n)^{\mathfrak p_{\beta,d}},
    \\
    S_k
    &\le
	    C\Gamma_{\gM,q_{\rm geom}}^C B_0^C
	    p_{\min}^{-C}
	    \left(\sigma_{t_{k-1}}^{-d}+r_{\star,n}^{-d}\right)
	    (D\vee\log n)^{\mathfrak p_{\beta,d}} .
\end{aligned}
\label{eq:large-noise-learned-class-size}
\end{equation}
\end{proposition}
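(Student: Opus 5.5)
The plan is to apply the oracle inequality \cref{thrm:est:score:oracle} on the slab $[\tdown,\tup]=I_k$ with $\nn_{\tdown}=\mathcal F_k^{\rm lg}$. This reduces the claim to two things: an approximation bound for some comparison network in $\mathcal F_k^{\rm lg}$, and an entropy bound for the class. On the dyadic grid \cref{eq:dyadic-grid-condition}, $\int_{I_k}\sigma_t^{-2}\odt\le C$. So the stochastic term is $C(D\vee\log n)^2\log\gN/n$, and \cref{eq:score-class-entropy-evaluation} gives $\log\gN\lesssim C_{\mathcal F}M(D\vee\log n)^C$ with $M=\sigma_{t_{k-1}}^{-d}+r_{\star,n}^{-d}$. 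That produces the $(\sigma_{t_{k-1}}^{-d}+r_{\star,n}^{-d})/n$ term once the parameter bounds \cref{eq:large-noise-learned-class-size} are verified.

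For the comparison network, I will invoke \cref{thm:large-noise-hd-score-global} with cell radius $r_{\rm cell}=r_k:=c_0\sigma_{t_{k-1}}\wedge r_{\star,n}$, i.e.\ $\varepsilon_{\rm cell}=r_k^\beta$, and scaled accuracy $\varepsilon_{\rm sc}=\mathfrak r_n=n^{-(\beta+1)/(d+2\beta)}$. Three hypotheses need checking.

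First, $I_k\subset\gI_{\rm lg}^{\rm cell}$. This holds because $\sigma_t\ge c\sigma_{t_{k-1}}\ge c\,r_k/c_0$ on the slab, after choosing $c_0\le c\,c_{\rm lg}^{-1}$.

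Second, the logarithmic scale $H_{\rm lg}^{\rm lb}$ is $\lesssim (D\vee\log n)\cdot\Gamma^C$ up to logs. This uses $q_{\rm lb}\ge c\Gamma^{-C}p_{\min}e^{-C(D\vee\log n)}$, $R_{\rm lb}\lesssim (D\vee\log n)^{1/2}$, and $\sigma_{\tdown}^{-1}\le n^{C}$ together with $p_{\min}^{-1}\le n^{a_0}$. All of these give only polylogarithmic, or $D\vee\log n$, contributions.

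Third, the geometry conditions \cref{eq:large-noise-density-lower-geometry-condition} must hold. The phase-smallness condition $\Gamma r_k (H_{\rm lg}^{\rm lb})^{1/2}\le c$ follows from $r_k\le r_{\star,n}$ and the factor $(D\vee\log n)^{-C_\star}\Gamma^{-C_\star}$ built into $r_{\star,n}$. The high-order condition $\Gamma^C r_k^{q_\star}H^C\le c\,\varepsilon_{\rm rel}=c\,c_{\rm glob}\mathfrak r_n/(1+R_{\rm lb})$ is exactly what the choice $r_{\star,n}\propto \mathfrak r_n^{1/q_\star}$ is designed for. Raising $r_{\star,n}$ to the power $q_\star$ gives $\mathfrak r_n$ times $(\Gamma B_0 p_{\min}^{-1}(D\vee\log n))^{-C_\star q_\star}$, which dominates the needed $\Gamma^C H^C(1+R_{\rm lb})$ once $C_\star$ is large.

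Taking $C_\star$ large enough to satisfy all polynomial losses simultaneously is the main obstacle. Those losses include the $p_{\min}$-dependence of $H_{\rm lg}^{\rm lb}$ through $q_{\rm lb}$ and the $(1+R_{\rm lb})$ factor in $\varepsilon_{\rm rel}$. The difficulty is that $C_\star$ must stay independent of $n,D$, and $C_\star\ge1$ with $q_\star\ge1$ makes the bookkeeping consistent. The resulting bound $r_{\star,n}^{-d}\lesssim \Gamma^C B_0^Cp_{\min}^{-C}(D\vee\log n)^C\mathfrak r_n^{-d/q_\star}$ is then immediate.

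The theorem then yields a network $\widetilde s$ with $\sigma_t\|\widetilde s-s^*\|_\infty\le\mathfrak r_n$ on $A_t$. Coordinatewise clipping at $C\sigma_t^{-1}(D\vee\log n)^{1/2}$ keeps this bound, because $\sigma_t\|s^*\|_\infty\lesssim(D\vee\log n)^{1/2}$ on $A_t$ by \cref{eq:large-noise-U-over-Q-density-lower}. Since clipping is $1$-Lipschitz, the clipped network lies in $\mathcal F_k^{\rm lg}$. Its size bounds follow from \cref{eq:large-noise-hd-size-separated}, with $\varepsilon_{\rm cell}^{-d/\beta}=r_k^{-d}\le C(\sigma_{t_{k-1}}^{-d}+r_{\star,n}^{-d})$ and the direct ambient powers $D^{4(q_{\rm geom}+2)}(D\vee\log n)^{4(q_{\rm geom}+2)}(H_{\rm lg}^{\rm lb})^C$ absorbed into $(D\vee\log n)^{\mathfrak p_{\beta,d}}$, since $\mathfrak p_{\beta,d}\ge 8(q_{\rm geom}+2)+C$. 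The input truncation is harmless. The approximation risk on $I_k$ then splits into the on-tube part, which is $\le D\,\sigma_t^{-2}\mathfrak r_n^2$ integrated over the slab and hence $\lesssim D\,n^{-2(\beta+1)/(d+2\beta)}$, and the off-tube part, which is $\lesssim n^{-A}$ by \cref{lem:approx:score:off-manifold}. Inserting both into the oracle inequality gives \cref{eq:large-noise-learned-score-final-integrated-rate}.
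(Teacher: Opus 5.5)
Your proposal is correct and follows the paper's proof essentially step by step. You build the comparison network from \cref{thm:large-noise-hd-score-global} with $r_k=c_0\sigma_{t_{k-1}}\wedge r_{\star,n}$, check slab inclusion and both geometry conditions via the design of $r_{\star,n}$, clip using \cref{eq:large-noise-U-over-Q-density-lower}, split the approximation risk into tube and off-tube parts, and insert the entropy with $M_k=\sigma_{t_{k-1}}^{-d}+r_{\star,n}^{-d}$ into the clipped oracle inequality. The only difference is cosmetic: you take $\varepsilon_{\rm sc}=\mathfrak r_n$ where the paper takes $\mathfrak r_n(D\vee\log n)^{\mathfrak p_{\beta,d}/2}/\sqrt D$; this works equally well, since the extra factor $D$ in the squared $\ell_2$ error is absorbed into $(D\vee\log n)^{\mathfrak p_{\beta,d}}$ and your choice keeps $\varepsilon_{\rm sc}\le 1$ automatically.
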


\begin{proof}
\emph{Step 1: construct a comparison network in the slab-local class.}
Apply \cref{thm:large-noise-hd-score-global} on $I_k$, with the theorem's local
time-window notation set to $\tdown=t_{k-1}$ and $\tup=t_k$.
Set
\[
    r_k:=c_0\sigma_{t_{k-1}}\wedge r_{\star,n},
\]
where $c_0>0$ is a sufficiently small numerical constant.
Use theorem parameters
$\varepsilon_{\rm cell}=r_k^\beta$ and
$\varepsilon_{\rm sc}=n^{-(\beta+1)/(d+2\beta)}(D\vee\log n)^{\mathfrak p_{\beta,d}/2}/\sqrt D$.
The constant $c_0$ is chosen so that
$\sigma_t\ge c_{\rm lg}r_k$ throughout $I_k$, using the slab comparability in
\cref{eq:dyadic-grid-condition}.
The polynomial bounds on $D$, $p_{\min}^{-1}$, and the automatic grid bound on
$\sigma_{t_{k-1}}^{-1}$ imply
\[
    1+R_{\rm lb}\le (D\vee\log n)^{2(q_{\rm geom}+2)+C},
    \qquad
    H_{\rm lg}^{\rm lb}\le
    \Gamma_{\gM,q_{\rm geom}}^C(D\vee\log n)^{2(q_{\rm geom}+2)+C},
    \qquad
    \sqrt D\le D\vee\log n .
\]
By the displayed bound on $H_{\rm lg}^{\rm lb}$ and the definition of $r_{\star,n}$,
the phase-smallness part of \cref{eq:large-noise-density-lower-geometry-condition} holds:
\[
    \Gamma_{\gM,q_{\rm geom}}r_k(H_{\rm lg}^{\rm lb})^{1/2}
    \le
    C\Gamma_{\gM,q_{\rm geom}}^C
    r_{\star,n}(D\vee\log n)^C
    \le c_{\rm ph,cell}.
\]
The same choice of $C_\star$ and $c_\star$ gives the high-order geometry remainder bound
\[
    \Gamma_{\gM,q_{\rm geom}}^C
    r_k^{q_{\rm geom}-2}
    (H_{\rm lg}^{\rm lb})^C
    \le
    c\,\frac{n^{-(\beta+1)/(d+2\beta)}
    (D\vee\log n)^{\mathfrak p_{\beta,d}/2}}{\sqrt D(1+R_{\rm lb})}.
\]
Here we used the polynomial bound on $R_{\rm lb}$ and the fact that
$r_k\le r_{\star,n}$; this is the step that prevents the order-one-noise slabs from
violating the fixed-order Taylor remainder condition.
Bounded $n$ are absorbed by increasing constants.
Thus the geometry condition in \cref{eq:large-noise-density-lower-geometry-condition} is
satisfied, and \cref{eq:large-noise-hd-score-error} gives a network
$s_{k,0}^{\rm lg}$ such that, on the tube $A_t$ defined in \cref{eq:def:main-near-manifold-tube},
\[
    \|s_{k,0}^{\rm lg}(\bx,t)-s^*(\bx,t)\|_2^2
    \le
    C\sigma_t^{-2}n^{-2(\beta+1)/(d+2\beta)}
    (D\vee\log n)^{\mathfrak p_{\beta,d}} .
\]
After adding coordinatewise clipping at
$C\sigma_t^{-1}(D\vee\log n)^{1/2}$, and enlarging constants if needed,
we obtain $s_k^{\rm lg}\in\mathcal F_k^{\rm lg}$ with the same tube error and with
\[
    \sup_{\bx\in\R^D}\|s_k^{\rm lg}(\bx,t)\|_\infty
    \le
    C\sigma_t^{-1}(D\vee\log n)^{1/2} .
\]
Indeed, on $A_t$, \cref{eq:large-noise-U-over-Q-density-lower} and the preceding
polynomial assumptions on $D,p_{\min}^{-1}$, and $\sigma_{t_{k-1}}^{-1}$ imply
\[
    \sigma_t\|s^*(\bx,t)\|_\infty
    \le
    \sigma_t\|s^*(\bx,t)\|_2
    \le C(D\vee\log n)^{1/2}.
\]
Coordinatewise clipping at this radius is $1$-Lipschitz and fixes the target value
coordinatewise, so it cannot increase the tube approximation error.
Its size is bounded by \cref{eq:large-noise-hd-size-separated}.  The identity
\[
    (r_k^\beta)^{-d/\beta}
    =
    r_k^{-d}
    \le
    C\left(\sigma_{t_{k-1}}^{-d}+r_{\star,n}^{-d}\right),
\]
and the absorption of $H_{\rm lg}^{\rm lb}$ and $R_{\rm lb}$ into
$(D\vee\log n)^{\mathfrak p_{\beta,d}}$ shows that the resulting class may be chosen with
the parameter bounds displayed in
\cref{eq:large-noise-learned-class-size}.

\emph{Step 2: integrate the comparison loss.}
On $A_t^c$, apply \cref{lem:approx:score:off-manifold} with
$\ell_{\rm off}=D\vee\log n$; the clipping bound for $s_k^{\rm lg}$ is then within
the required coordinatewise envelope.  Thus, for any prescribed fixed $A'>0$, after
increasing $C_\varrho$,
\[
    \int_{A_t^c}
      \|s_k^{\rm lg}(\bx,t)-s^*(\bx,t)\|_2^2p_t(\bx)\od\bx
    \le
    C\sigma_t^{-2}(D\vee\log n)^{\mathfrak p_{\beta,d}}n^{-A'} .
\]
Taking $A'$ large enough and using the polynomial lower bound on $\sigma_t$, the
off-tube contribution is absorbed into the tube rate.  Therefore
\[
    \int_{I_k}\E_{\bX_t}
    \|s_k^{\rm lg}(\bX_t,t)-s^*(\bX_t,t)\|_2^2\odt
    \le
    C\Gamma_{\gM,q_{\rm geom}}^C B_0^C
    p_{\min}^{-C}
    n^{-2(\beta+1)/(d+2\beta)}(D\vee\log n)^{\mathfrak p_{\beta,d}},
\]
where we used $\int_{I_k}\sigma_t^{-2}\odt\le C_{\rm grid}$.  Hence
\[
    \inf_{s\in\mathcal F_k^{\rm lg}}
    \int_{I_k}\E_{\bX_t}\|s(\bX_t,t)-s^*(\bX_t,t)\|_2^2\odt
    \le
    C\Gamma_{\gM,q_{\rm geom}}^C B_0^C
    p_{\min}^{-C}
    n^{-2(\beta+1)/(d+2\beta)}(D\vee\log n)^{\mathfrak p_{\beta,d}} .
\]

\emph{Step 3: evaluate the slab-local entropy.}
Apply the clipped score-estimation oracle \cref{thrm:est:score:oracle} on the slab
$I_k$.
By \cref{eq:large-noise-learned-class-size}, the large-noise class satisfies the entropy
evaluation \cref{eq:score-class-entropy-evaluation} with
\[
    M_k:=\sigma_{t_{k-1}}^{-d}+r_{\star,n}^{-d}.
\]
Indeed, $L_k+\log B_k$ is polylogarithmic up to the displayed polynomial prefactors, while
$S_k+\|\bW_k\|_\infty$ is bounded by $M_k(D\vee\log n)^{\mathfrak p_{\beta,d}}$ times those
same fixed polynomial prefactors.
Therefore
\[
    \log\bigl(\gN(n^{-A},\mathcal F_k^{\rm lg},\|\cdot\|_\infty)\vee n\bigr)
    \le
    C\Gamma_{\gM,q_{\rm geom}}^C B_0^C
    p_{\min}^{-C}
    \left(\sigma_{t_{k-1}}^{-d}+r_{\star,n}^{-d}\right)
    (D\vee\log n)^{\mathfrak p_{\beta,d}} .
\]
Substituting this covering bound into the oracle inequality, using
$\int_{I_k}\sigma_t^{-2}\odt\le C_{\rm grid}$, and absorbing the oracle envelope factor
$(D\vee\log n)^2$ into $\mathfrak p_{\beta,d}$, gives the stochastic contribution
\[
    C\Gamma_{\gM,q_{\rm geom}}^C B_0^C
    p_{\min}^{-C}
    \frac{\sigma_{t_{k-1}}^{-d}+r_{\star,n}^{-d}}{n}
    (D\vee\log n)^{\mathfrak p_{\beta,d}},
\]
with all manifold and smoothness prefactors displayed.
Combining this with the comparison loss from Step~2 proves
\cref{eq:large-noise-learned-score-final-integrated-rate}.
\end{proof}

\subsection{Score Estimation on Small-Noise Regime}
\label{sec:app:small-noise-estimation-density-lower}

We also fold the deterministic small-noise comparison loss into the oracle proof, so the
comparison function and the learned class are checked in the same place.

\begin{theorem}[Score estimation on small-noise regime]
\label{cor:small-noise-density-lower-estimation}
Assume \cref{assump:manifold:exact,assump:manifold:density,assump:manifold:density-lower} and $D\vee p_{\min}^{-1}\le n^{a_0}$.
Assume also the small-noise reach and denominator-resolution conditions
\cref{eq:small-noise-final-reach-condition,eq:small-noise-denominator-resolution-condition}.
Let $\mathcal F_{\rm sm}^{\rm lb}$ be the clipped small-noise network class with clipping
radius $C\sigma_t^{-1}(D\vee\log n)^{1/2}$ and with the size bounds in
\cref{eq:small-noise-lower-approx-depth-weight,eq:small-noise-lower-approx-size}, and let
$\widehat s_{\rm sm}^{\rm lb}$ be an empirical risk minimizer over
$\mathcal F_{\rm sm}^{\rm lb}$ on $[t_0,t_1]$.
Then
\begin{equation}
\begin{aligned}
    &\E_{\{\bx^{(i)}\}_{i=1}^n}\int_{t_0}^{t_1}
      \E_{\bX_t}
      \|\widehat s_{\rm sm}^{\rm lb}(\bX_t,t)-s^*(\bX_t,t)\|_2^2\odt
    \\
    &\quad\le
    C\Gamma_{\gM,q_{\rm geom}}^C B_0^C
    p_{\min}^{-C} n^{-2\beta/(d+2\beta)}
    (D\vee\log n)^{\mathfrak p_{\beta,d}}
    \\&\qquad+
    C\frac{
      (D\vee\log n)^C
      \int_{t_0}^{t_1}\sigma_t^{-2}\odt
      \log\bigl(\gN(n^{-A},\mathcal F_{\rm sm}^{\rm lb},\|\cdot\|_\infty)\vee n\bigr)
    }{n}
    +n^{-A}.
\end{aligned}
\label{eq:small-noise-density-lower-oracle-ineq}
\end{equation}
In particular, using \cref{eq:score-class-entropy-evaluation} with $M_n=n^{d/(d+2\beta)}$,
\begin{equation}
\begin{aligned}
    &
    \E_{\{\bx^{(i)}\}_{i=1}^n}\int_{t_0}^{t_1}
      \E_{\bX_t}
    \|\widehat s_{\rm sm}^{\rm lb}(\bX_t,t)-s^*(\bX_t,t)\|_2^2\odt
    \\
    &\le
    C\Gamma_{\gM,q_{\rm geom}}^C B_0^C
    p_{\min}^{-C} n^{-2\beta/(d+2\beta)}
    (D\vee\log n)^{\mathfrak p_{\beta,d}}+n^{-A}.
\end{aligned}
    \label{eq:small-noise-density-lower-estimation-rate}
\end{equation}
\end{theorem}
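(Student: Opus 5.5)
The proof mirrors that of \cref{prop:large-noise-oracle-learned-score}. It has three parts: a deterministic comparison network inside $\mathcal F_{\rm sm}^{\rm lb}$, the clipped oracle inequality \cref{thrm:est:score:oracle} on $[\tdown,\tup]=[t_0,t_1]$, and an evaluation of the entropy term.

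First, invoke \cref{cor:small-noise-density-lower-approx}. Its hypotheses are exactly the standing assumptions together with \cref{eq:small-noise-final-reach-condition,eq:small-noise-denominator-resolution-condition}. It supplies a clipped network $s_{\rm sm}^{\rm lb}$ whose size obeys \cref{eq:small-noise-lower-approx-depth-weight,eq:small-noise-lower-approx-size}, so $s_{\rm sm}^{\rm lb}\in\mathcal F_{\rm sm}^{\rm lb}$. Its clipping radius is $C\sigma_t^{-1}(D\vee\log n)^{1/2}$, and on $A_t$ it satisfies the pointwise bound \cref{eq:small-noise-lower-tube-pointwise}. Integrating this bound against $p_t$ over $A_t$ gives $C\Gamma^C B_0^C p_{\min}^{-C}n^{-2\beta/(d+2\beta)}(D\vee\log n)^{\mathfrak p_{\beta,d}}\sigma_t^{-2}$ at each $t$. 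On $A_t^c$, apply \cref{lem:approx:score:off-manifold} with $\ell_{\rm off}=D\vee\log n$ and a large exponent $A'$. Integrating over $t$ and using \cref{eq:small-noise-single-slab-integral}, $\int_{t_0}^{t_1}\sigma_t^{-2}\odt\le C\log n$, the factor $\log n$ is absorbed into $(D\vee\log n)^{\mathfrak p_{\beta,d}}$. The off-tube piece is $n^{-A'}$ times polynomial factors, hence absorbed into $n^{-A}$. This bounds $\inf_{s\in\mathcal F_{\rm sm}^{\rm lb}}\gL_{[t_0,t_1]}(s)$ by the first term of \cref{eq:small-noise-density-lower-oracle-ineq}.

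Second, verify the oracle hypotheses: $t_0\asymp n^{-2(\beta+1)/(d+2\beta)}\ge n^{-\Ord(1)}$, $t_1\le1$, and $D\le n^{a_0}$. The oracle inequality then yields \cref{eq:small-noise-density-lower-oracle-ineq}, with the envelope factor $(D\vee\log n)^2$ written as $(D\vee\log n)^C$.

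Third, for \cref{eq:small-noise-density-lower-estimation-rate}, the size bounds are of the form needed by \cref{eq:score-class-entropy-evaluation}. Here $L\vee\log B$ is bounded by a polynomial in $D\vee\log n$ times geometry and density factors, and $S\vee\|\bW\|_\infty\le C_{\mathcal F}M_n(D\vee\log n)^C$ with $M_n=n^{d/(d+2\beta)}$ and $C_{\mathcal F}=C\Gamma^C B_0^C p_{\min}^{-C}$. The log-covering number at scale $n^{-A}$ is therefore at most $C_{\mathcal F}n^{d/(d+2\beta)}(D\vee\log n)^C$. Multiplying by $(D\vee\log n)^C\log n/n$ gives $C_{\mathcal F}n^{-2\beta/(d+2\beta)}(D\vee\log n)^C$, since $n^{d/(d+2\beta)-1}=n^{-2\beta/(d+2\beta)}$. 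This term matches the approximation term.

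The main obstacle is the bookkeeping of this entropy step. The depth and log-weight bounds in \cref{eq:small-noise-lower-approx-depth-weight} carry $(D\vee\log n)^{\mathfrak p_{\beta,d}}$ rather than a small fixed power. Since the covering number of sparse ReLU classes scales like $S L\log(\cdot)$, these powers enter the stochastic term and must be absorbed by enlarging the universal constant in $\mathfrak p_{\beta,d}$, which the convention permits. It must also be checked that $p_{\min}^{-C}$ enters only through $C_{\mathcal F}$, with the same kind of displayed polynomial prefactor, and not in the exponent of $n$.
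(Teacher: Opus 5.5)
Your proposal is correct and follows the paper's proof essentially step for step. Like the paper, you place the comparison network from \cref{cor:small-noise-density-lower-approx} in $\mathcal F_{\rm sm}^{\rm lb}$, integrate its tube error using $\int_{t_0}^{t_1}\sigma_t^{-2}\odt\le C\log n$, handle the off-tube part with \cref{lem:approx:score:off-manifold} at $\ell_{\rm off}=D\vee\log n$, and finish with the clipped oracle inequality plus \cref{eq:score-class-entropy-evaluation} at $M_n=n^{d/(d+2\beta)}$.

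The entropy caveat you raise is real, and the paper's proof simply asserts past it, but your fix does not quite work. The covering bound has the form $SL\log(\tau^{-1}L(\|\bW\|_\infty+1)B)$. So allowing $L$ and $\log B$ up to $(D\vee\log n)^{\mathfrak p_{\beta,d}}$ multiplies the exponent, to roughly $3\mathfrak p_{\beta,d}+\Ord(1)$, rather than adding a constant to it. An additive enlargement of the universal constant in $\mathfrak p_{\beta,d}$ therefore cannot absorb it. You must either accept a larger ambient exponent, which is still $\Ord_\beta(d)$, or take the class with the much smaller depth and log-weight the construction actually produces. That is a fixed small power of $D\vee\log n$; compare the $\log^3\Lambda_{\rm sm}$ bound in \cref{prop:small-noise-chart-proj-laplace} with $H$-enlarged logarithms.
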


\begin{proof}
The comparison network $s_{\rm sm}^{\rm lb}$ belongs to $\mathcal F_{\rm sm}^{\rm lb}$, after
the same coordinatewise clipping at radius $C\sigma_t^{-1}(D\vee\log n)^{1/2}$.
Let $A_t$ be the tube defined in \cref{eq:def:main-near-manifold-tube}.
On $A_t$, \cref{eq:small-noise-lower-tube-pointwise} gives the following bound after
inserting the harmless factors $\Gamma_{\gM,q_{\rm geom}}^C B_0^C\ge1$:
\[
    \|s_{\rm sm}^{\rm lb}(\bx,t)-s^*(\bx,t)\|_2^2
    \le
    C\Gamma_{\gM,q_{\rm geom}}^C B_0^C
    p_{\min}^{-C}
    n^{-2\beta/(d+2\beta)}(D\vee\log n)^{\mathfrak p_{\beta,d}}\sigma_t^{-2}.
\]
The single-slab integral bound \cref{eq:small-noise-single-slab-integral} gives
$\int_{t_0}^{t_1}\sigma_t^{-2}\odt\le C\log n\le C(D\vee\log n)^C$, so the
tube contribution is bounded by
\[
    C\Gamma_{\gM,q_{\rm geom}}^C B_0^C
    p_{\min}^{-C}
    n^{-2\beta/(d+2\beta)}(D\vee\log n)^{\mathfrak p_{\beta,d}} .
\]
On $A_t^c$, apply \cref{lem:approx:score:off-manifold} with
$\ell_{\rm off}=D\vee\log n$; the clipping bound
\cref{eq:small-noise-lower-approx-clip} satisfies the required envelope.  After increasing
$C_\varrho$, this gives $n^{-A}$.
Thus
\[
    \inf_{s\in\mathcal F_{\rm sm}^{\rm lb}}
    \int_{t_0}^{t_1}
      \E_{\bX_t}\|s(\bX_t,t)-s^*(\bX_t,t)\|_2^2\odt
    \le
    C\Gamma_{\gM,q_{\rm geom}}^C B_0^C
    p_{\min}^{-C}
    n^{-2\beta/(d+2\beta)} (D\vee\log n)^{\mathfrak p_{\beta,d}}+n^{-A}.
\]
Applying the clipped oracle inequality \cref{thrm:est:score:oracle} to the class $\mathcal F_{\rm sm}^{\rm lb}$ gives \cref{eq:small-noise-density-lower-oracle-ineq}.
The entropy bound follows from
\cref{eq:small-noise-lower-approx-depth-weight,eq:small-noise-lower-approx-size,eq:score-class-entropy-evaluation};
it is bounded by
\[
    C\Gamma_{\gM,q_{\rm geom}}^C B_0^C
    p_{\min}^{-C}
    n^{d/(d+2\beta)}(D\vee\log n)^{\mathfrak p_{\beta,d}},
\]
with all manifold and smoothness prefactors displayed.
Since
\[
    \frac{n^{d/(d+2\beta)}}{n}
    =
    n^{-1+d/(d+2\beta)}
    =
    n^{-2\beta/(d+2\beta)}
\]
and $\int_{t_0}^{t_1}\sigma_t^{-2}\odt\le C(D\vee\log n)^C$, the stochastic term is absorbed into the right side of \cref{eq:small-noise-density-lower-estimation-rate}.
\end{proof}

\clearpage
\section{$\sfW_1$ Distribution Estimation}
\label{sec:app:dist-est}

This appendix section contains the formal $\sfW_1$ distribution-estimation results.
The main-paper summaries are stated in \cref{sec:main:w1-from-slabwise-score}; the generic score-estimation tools are stated in \cref{sec:score-est}, and the formal learned-score inputs are proved in \cref{sec:app:score-estimation}.
All displayed $\sfW_1$ bounds below use the branchwise density-lower-bound learned-score
inputs and the off-tube localization bounds.

Whenever an appendix estimate invokes the regular VP schedule, it refers to the
standard VP/OU schedule facts that, for universal schedule constants independent
of $(n,D)$,
\begin{equation}
    c_\sigma t \le \sigma_t^2 \le C_\sigma t,\quad
    1-m_t \le C_m t
    \qquad(0\le t\le1),
    \qquad
    m_t\le C_{\rm mix}e^{-c_{\rm mix}t}\quad(t\ge0).
    \label{eq:regular-vp-schedule}
\end{equation}
In particular, on every early interval $[0,t_1]$ used below, $m_t$ is bounded
below by a positive schedule constant $\underline m$ once $t_1\le1$.

The main statement is \cref{cor:w1-rate-density-lower}.  Its proof is a bookkeeping
argument with four terms: early stopping, terminal Gaussian initialization, the
small-noise score error, and the dyadic sum of large-noise score errors.  The only
distribution-level input is the slabwise perturbation lemma
\cref{thrm:est:dist:W1-SM}; all score-estimation inputs come from
\cref{prop:large-noise-oracle-learned-score,cor:small-noise-density-lower-estimation}.

\subsection{Controlled-regime checklist}
\label{sec:app:controlled-regime-checklist}

This subsection expands the non-asymptotic regime conditions invoked in
\cref{thm:main-simplified-w1}.
Let $H_n:=D\vee\log n$ and let $q_{\rm geom}$ be as in \cref{eq:def:qgeom-main}.  The conditions split into
the following visually distinct groups.
\begin{center}
{\small
\setlength{\tabcolsep}{4pt}
\resizebox{\linewidth}{!}{
\begin{tabular}{p{0.22\linewidth}|p{0.64\linewidth}}
Condition group & Meaning \\
\hline
Schedule &
The regular VP schedule convention \eqref{eq:regular-vp-schedule}, so that
$\sigma_t^2\asymp t$ on $[0,1]$, $m_t\asymp1$ on the early-stopping regime, and
$m_T$ decays exponentially on the terminal regime \\
\hline
Geometry &
$P_0$ is supported on a compact smooth $d$-manifold $\gM\subset[0,1]^D$ with reach
$\kappa>0$, volume $S_{\gM}$, and finite envelope $\mathcal K_{\gM,q_{\rm geom}}$ \\
\hline
Density &
$p_0\in\gH^\beta(\gM,B_0)$ and $p_0\ge p_{\min}>0$ \\
\hline
Polynomial control &
$D\vee p_{\min}^{-1}\le n^{a_0}$ and the displayed geometry/density factors are bounded
uniformly in $n$, or by fixed powers of $D$ in growing-ambient families \\
\hline
Time grid &
$t_0\asymp n^{-2(\beta+1)/(d+2\beta)}$, $t_1\asymp n^{-2/(d+2\beta)}$, and
$T\asymp\log n$ \\
\hline
Small-noise reach &
$3c_\star\underline m^{-1}\sigma_{t_1}\sqrt{H_n}\le\eta_\Pi/2$, so the forward tube is
inside the region where nearest projection is unique \\
\hline
Denominator resolution &
$C_{\rm den,sm}\Gamma_{\gM,q_{\rm geom}}^C B_0^C
p_{\min}^{-1}n^{-\beta/(d+2\beta)}
H_n^{\mathfrak p_{\beta,d}/2-1/2}\le1$, so the de-Gaussianized denominator can be
approximated before taking a reciprocal \\
\hline
Estimator &
Clipped ERM over the slab-dependent ReLU classes from
\cref{sec:score-est,sec:app:score-estimation}
\end{tabular}
}
}
\end{center}

The small-noise reach and denominator-resolution conditions are sample-size conditions,
not additional smoothness assumptions.  For fixed $D$, fixed positive reach, fixed
$p_{\min}$, and fixed finite-order geometry, both hold for all sufficiently large $n$.
For growing $D$ or deteriorating geometry, they state explicitly how much ambient growth,
small reach, or small density lower bound the proof can tolerate.

\subsection{$\sfW_1$ distribution estimation}

We use the same slabwise partition throughout this subsection: after choosing
$t_0,t_1,T$, write
\[
    [t_0,T]=[t_0,t_1]\cup\bigcup_{k=2}^K I_k,
    \qquad
    I_k=[t_{k-1},t_k],
\]
where the large-noise slabs satisfy \cref{eq:dyadic-grid-condition}; in particular, this is
the dyadic grid $t_k=2t_{k-1}$ while $t_k\le1$, followed by order-one slabs up to
$T$.

\paragraph{Audit checkpoint: rate assembly.}
\Cref{cor:w1-rate-density-lower} is where the two score-estimation regimes become one
distribution rate.  The small-noise score risk is larger, but the interval length
contributes the factor $\sqrt{t_1}\asymp n^{-1/(d+2\beta)}$.  The large-noise stochastic
terms are summed over the dyadic grid, where
$\sum_k\sigma_{t_{k-1}}^{-d}$ is dominated by the first large-noise slab, while the
accuracy-cap contribution $\mathfrak r_n^{-d/q_\star}$, with $\mathfrak r_n$ and
$q_\star$ from \cref{eq:def-main-rate}, is carried explicitly through the $\sfW_1$
perturbation sum.  These checks are the only rate-assembly points after the branchwise learned-score bounds are available. 

\textbf{Rate Decomposition.}
\Cref{tab:master_proof_ledger} summarizes the early stopping, score-estimation, and terminal terms.
Throughout this section we use the global notation $\mathfrak r_n$ and $q_\star$
from \cref{eq:def-main-rate}.

\begin{table}[htp]
\caption{Rate decomposition for the $\sfW_1$ generative bound.}
\label{tab:master_proof_ledger}
{\small
\setlength{\tabcolsep}{4pt}
\resizebox{\linewidth}{!}{
\begin{tabular}{p{0.20\linewidth}|p{0.16\linewidth}|p{0.16\linewidth}|p{0.18\linewidth}|p{0.20\linewidth}}
\hline
\textbf{Temporal Regime} & \textbf{Deterministic Error} & \textbf{Intrinsic Capacity ($M$)} & \textbf{Stochastic Penalty} & \textbf{Final $\sfW_1$ Contribution} \\
\hline
Early Stopping $[0, t_0]$ & --- & --- & --- & $\tilde{\Ord}(n^{-(\beta+1)/(d+2\beta)})$ \\
\hline
Small Noise $[t_0, t_1]$ & $\tilde{\Ord}(n^{-2\beta/(d+2\beta)})$ & $n^{d/(d+2\beta)}$ & $n^{-2\beta/(d+2\beta)}$ & $\sqrt{t_1}\,n^{-\beta/(d+2\beta)} \asymp n^{-(\beta+1)/(d+2\beta)}$ \\
\hline
Large Noise $[t_1, T]$ \newline (evaluated per slab $I_k$) & $\tilde{\Ord}(\mathfrak r_n^2)$ & $\sigma_{t_{k-1}}^{-d}+\mathfrak r_n^{-d/q_\star}$ & $(\sigma_{t_{k-1}}^{-d}+\mathfrak r_n^{-d/q_\star})/n$ & $\mathfrak r_n+n^{-1/2}\mathfrak r_n^{-d/(2q_\star)}$ \newline (following dyadic summation) \\
\hline
Terminal Init $[T, \infty]$ & --- & --- & --- & Negligible for $T\asymp\log n$ \\
\hline
\end{tabular}
}
}
\end{table}

\begin{corollary}[$\sfW_1$ distribution estimation]
\label{cor:w1-rate-density-lower}  
Assume \cref{assump:manifold:exact,assump:manifold:density,assump:manifold:density-lower},
$D\vee p_{\min}^{-1}\le n^{a_0}$, and the two small-noise conditions
\cref{eq:small-noise-final-reach-condition,eq:small-noise-denominator-resolution-condition}.
Let $t_0\asymp n^{-2(\beta+1)/(d+2\beta)}$, $t_1\asymp n^{-2/(d+2\beta)}$,
$T\asymp\log n$, and use $\mathfrak r_n,q_\star$ from \cref{eq:def-main-rate}.
Let $\widehat s^{\rm lb}$ be the switched clipped ERM obtained by using
\cref{cor:small-noise-density-lower-estimation} on $[t_0,t_1]$ and
\cref{prop:large-noise-oracle-learned-score} on the large-noise slabs $I_k$,
$k=2,\dots,K$.  Let $\widehat P_{t_0}^{\rm lb}$ be the generated law driven by
$\widehat s^{\rm lb}$.  Then there is a constant $C<\infty$, independent of $(n,D)$,
such that
\begin{equation}
\begin{aligned}
    \E\,\sfW_1(P_0,\widehat P_{t_0}^{\rm lb})
    &\le
    C\sqrt{D t_0}
    +
    C\sqrt D\,\exp(-c_{\rm mix}T)
    \\
    &\quad+
    C\Gamma_{\gM,q_{\rm geom}}^C B_0^C
    p_{\min}^{-C}
    \Bigl(\mathfrak r_n+n^{-1/2}+n^{-1/2}\mathfrak r_n^{-d/(2q_\star)}\Bigr)
    (D\vee\log n)^{\mathfrak p_{\beta,d}}.
\end{aligned}
    \label{eq:adaptive-w1-rate-density-lower}
\end{equation}
If $T=c_T\log n$ with $c_T$ large enough, depending only on the fixed constants of the
setup, the terminal term is absorbed by the third term in
\eqref{eq:adaptive-w1-rate-density-lower}.  Since $t_0\asymp\mathfrak r_n^2$ and
$\mathfrak p_{\beta,d}\ge1/2$, the early-stopping term is also absorbed by that term.
If $d>2$ and $q_\star\ge d(\beta+1)/(d-2)$, then
$n^{-1/2}+n^{-1/2}\mathfrak r_n^{-d/(2q_\star)}\le C\mathfrak r_n$, and hence
\[
    \E\,\sfW_1(P_0,\widehat P_{t_0}^{\rm lb})
    \le
    C\Gamma_{\gM,q_{\rm geom}}^C B_0^C
    p_{\min}^{-C}
    \mathfrak r_n
    (D\vee\log n)^{\mathfrak p_{\beta,d}} .
\]
Writing $L,\bW,S,B$ for the depth, width vector, sparsity, and weight bound of the
switched learned score, the assembled class can be chosen so that
\[
    L\vee \log B
    \le
    C\mathfrak G_{\gM,q_{\rm geom}}B_0^Cp_{\min}^{-C}
    (D\vee\log n)^{\mathfrak p_{\beta,d}},
\]
\[
    \|\bW\|_\infty\vee S
    \le
    C\mathfrak G_{\gM,q_{\rm geom}}B_0^Cp_{\min}^{-C}
    n^{d/(d+2\beta)}
    (D\vee\log n)^{\mathfrak p_{\beta,d}},
\]
where
\[
    \mathfrak G_{\gM,q_{\rm geom}}
    :=
    (1\vee S_{\gM})^{\mathfrak a_{\rm geom}}
    (1\vee\kappa^{-1})^{\mathfrak a_{\rm geom}}
    \mathcal K_{\gM,q_{\rm geom}}^{\mathfrak a_{\rm geom}} .
\]
One may take
\[
    \mathfrak p_{\beta,d}
    =
    C
    +
    \max\left\{
      8(q_{\rm geom}+2),
      \left\lceil
      \floor{\beta}+1+\frac{\floor{\beta}\,d}{\beta-\floor{\beta}}
      +\frac{d}{\beta\vee1}+\frac{d}{2}+6
      \right\rceil
    \right\}.
\]
The universal $C$ in this exponent is independent of
$\beta,d,D,n,p_{\min}$, and the numerical geometry constants; under the above
$d>2$ finite-order choice, $\mathfrak p_{\beta,d}=\Ord_\beta(d)$.
\end{corollary}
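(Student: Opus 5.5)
The plan is to bound $\sfW_1(P_0,\widehat P_{t_0}^{\rm lb})$ by a triangle inequality through $P_{t_0}$ (the forward marginal at $t_0$) and then split the reverse-process error into a terminal-initialization term and slabwise score-perturbation terms. The early-stopping term uses the coupling $\bX_{t_0}=m_{t_0}\bX_0+\sigma_{t_0}\bZ$. Since $\gM\subset[0,1]^D$, we get $\sfW_1(P_0,P_{t_0})\le(1-m_{t_0})\sqrt D+\sigma_{t_0}\sqrt D\le C\sqrt{Dt_0}$ by \cref{eq:regular-vp-schedule}. For the terminal term we compare $P_T$ with $\gN(\bzero,\bI_D)$ via the exponential mixing $m_T\le C_{\rm mix}e^{-c_{\rm mix}T}$. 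The reverse flow is then controlled in $\sfW_1$ through the slabwise perturbation lemma \cref{thrm:est:dist:W1-SM}, whose perturbation is mild because the scores are clipped. This gives $C\sqrt D e^{-c_{\rm mix}T}$.

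For the score terms, apply \cref{thrm:est:dist:W1-SM} on the partition $[t_0,t_1]\cup\bigcup_k I_k$. I expect it to give a contribution of the form $C\sqrt{|I|\vee t}\,\bigl(\int_I\E\|\widehat s-s^*\|_2^2\odt\bigr)^{1/2}$, with time factor $\sqrt{t_k}$ on a slab ending at $t_k$. Combined with Jensen for the expectation over samples, this reduces the slab terms to the learned-score bounds.

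\emph{Small noise.} \Cref{cor:small-noise-density-lower-estimation} gives integrated risk $\lesssim n^{-2\beta/(d+2\beta)}H^{\mathfrak p_{\beta,d}}$. Multiplying its square root by $\sqrt{t_1}\asymp n^{-1/(d+2\beta)}$ yields $\mathfrak r_n H^{\mathfrak p_{\beta,d}/2}$.

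\emph{Large noise.} \Cref{prop:large-noise-oracle-learned-score} gives, on each $I_k$, the risk bound
\[
\mathfrak r_n^2+\frac{\sigma_{t_{k-1}}^{-d}+r_{\star,n}^{-d}}{n},
\qquad r_{\star,n}^{-d}\lesssim\mathfrak r_n^{-d/q_\star}.
\]
Taking square roots and multiplying by $\sqrt{t_k}$, we then sum over $K\le C\log n$ slabs.
\begin{itemize}
\item The $\mathfrak r_n$ part contributes $\mathfrak r_n\log n$.
\item On the dyadic slabs, $\sigma_{t_{k-1}}\asymp\sqrt{t_k}$, so each term is $n^{-1/2}t_k^{(1-d/2)/2}$. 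This geometric sum is dominated by $n^{-1/2}(1+\sigma_{t_1}^{1-d/2})$ up to logs. Using $\sigma_{t_1}\asymp n^{-1/(d+2\beta)}$ gives $n^{-1/2}n^{(d-2)/(2(d+2\beta))}=n^{-(\beta+1)/(d+2\beta)}=\mathfrak r_n$.
\item The order-one slabs contribute $\tilde{\Ord}(n^{-1/2})$ each, over $\Ord(\log n)$ of them.
\item The cap term contributes $n^{-1/2}\mathfrak r_n^{-d/(2q_\star)}$ times logs.
\end{itemize}
Hidden logs are absorbed into $H^{\mathfrak p_{\beta,d}}$, which proves \eqref{eq:adaptive-w1-rate-density-lower}.

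\emph{Simplifications.} With $T=c_T\log n$ the terminal term is at most $\sqrt D n^{-c}\le\mathfrak r_n H^{1/2}$ for large $c_T$. Since $\sqrt{Dt_0}\asymp\sqrt D\mathfrak r_n$, the early-stopping term is absorbed as well. For $d>2$, the bound $n^{-1/2}\le\mathfrak r_n$ holds because $(\beta+1)/(d+2\beta)\le1/2$ iff $d\ge2$. The cap term satisfies $n^{-1/2}\mathfrak r_n^{-d/(2q_\star)}\le\mathfrak r_n$ iff $\frac{d(\beta+1)}{2q_\star(d+2\beta)}\le\frac12-\frac{\beta+1}{d+2\beta}=\frac{d-2}{2(d+2\beta)}$, which is exactly $q_\star\ge d(\beta+1)/(d-2)$.

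\emph{Network sizes.} The network-size claims follow by stacking the small-noise class (size $n^{d/(d+2\beta)}$) with the $\Ord(\log n)$ slab classes, joined by ReLU time switches. The large-noise sizes are $\sum_k\sigma_{t_{k-1}}^{-d}\lesssim\sigma_{t_1}^{-d}=n^{d/(d+2\beta)}$ plus $\log n\cdot\mathfrak r_n^{-d/q_\star}$. For $d>2$ the latter is $\le n^{d/(d+2\beta)}$, since $(\beta+1)/q_\star\le(d-2)/d<1$. The factor $\Gamma^C$ is rewritten as $\mathfrak G$ via \cref{lem:finite-order-geometry-from-smooth}.

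The main obstacle is matching the precise time-weighting in \cref{thrm:est:dist:W1-SM}. If that lemma carries a $\sqrt{t}$ rather than a $\sqrt{|I_k|}$ factor, or an extra $\sqrt D$ in its Lipschitz/Grönwall propagation constant, the dyadic summation and the $\mathfrak r_n$ balance must be rechecked. I would first verify that its constant is polynomial in $H$, so that it is absorbed into $\mathfrak p_{\beta,d}$.
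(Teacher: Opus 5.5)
Your plan follows the paper's proof. It uses the same telescoping through $P_{t_0}$ and the slab interpolation laws, and the same $\sqrt{t_k\log n}$-weighted slabwise lemma. That lemma also carries an additive $n^{-(\beta+1)/(d+2\beta)}$ per slab and a $(D\vee\log n)^{a_{\rm W}}$ prefactor; both are absorbed over $K\lesssim\log n$ slabs. The dyadic summation and the exponent arithmetic for the $d>2$ simplification are also the same. Two points need correcting.

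\textbf{The terminal term.} This is not controlled by \cref{thrm:est:dist:W1-SM}. That lemma compares reverse processes started from the same law $P_T$ that differ only in the score on one slab, so it cannot transport a change of initial law. The paper uses \cref{thrm:est:dist:Gau} instead. It bounds the discrepancy between the same learned-score reverse dynamics run from $P_T$ and from $\gN(\bzero,\bI_D)$:
\[
\E\,\sfW_1(\overline P_{t_0},\widehat P_{t_0})\le C\,\TV(P_T,\gN(\bzero,\bI_D)).
\]
The total variation is then bounded by $C\sqrt D\,m_T$, using a conditional Gaussian KL bound, Pinsker's inequality, and convexity over the mixture.

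\textbf{The network-size bound.} You absorb $K\mathfrak r_n^{-d/q_\star}$ into $n^{d/(d+2\beta)}$ only for $d>2$ under $q_\star\ge d(\beta+1)/(d-2)$. The size claim in the statement, however, is not restricted to $d>2$. What is actually needed is $q_\star\ge\beta+1$, and this holds for every $d$: $q_{\rm geom}\ge q_{\min}(\beta)\ge\beta+4$ gives $q_\star\ge\beta+2$. Then
\[
\mathfrak r_n^{-d/q_\star}=n^{d(\beta+1)/(q_\star(d+2\beta))}\le n^{d/(d+2\beta)},
\]
which is exactly how the paper closes this step.
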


\begin{proof}
\emph{Step 1: decompose the distribution error.}
Use the interpolation laws from \cref{thrm:est:dist:W1-SM} on the partition
$[t_0,T]=[t_0,t_1]\cup\bigcup_{k=2}^K I_k$.  Define
$\sfW_1^{\rm sm,lb}:=\sfW_1(\overline P_{t_0}^{(0)},\overline P_{t_0}^{(1)})$ for the
single small-noise replacement and
$\sfW_1^{(k),{\rm lg,lb}}
:=\sfW_1(\overline P_{t_0}^{(k-1)},\overline P_{t_0}^{(k)})$ for the large-noise slab
$I_k$.  With this notation,
$\overline P_{t_0}^{(K)}$ is the exact-terminal reverse law driven by the switched
learned score $\widehat s^{\rm lb}$.
By the triangle inequality along the reverse-time telescoping construction,
\[
    \E\,\sfW_1(P_0,\widehat P_{t_0}^{\rm lb})
    \le
    \sfW_1(P_0,P_{t_0})
    +
    \E\sfW_1(\overline P_{t_0},\widehat P_{t_0})
    +
    \E\,\sfW_1^{\rm sm,lb}
    +
    \sum_{k:\,I_k\subset[t_1,T]}
      \E\,\sfW_1^{(k),{\rm lg,lb}} .
\]
The early-stopping and initialization bounds follow from
\cref{thrm:est:dist:early-stop,thrm:est:dist:Gau}:
\[
    \sfW_1(P_0,P_{t_0})\le C\sqrt{D t_0},
    \qquad
    \E\sfW_1(\overline P_{t_0},\widehat P_{t_0})\le C\sqrt D\,\exp(-c_{\rm mix}T).
\]

\emph{Step 2: bound the small-noise contribution.}
By the clipping radii in
\cref{cor:small-noise-density-lower-estimation,prop:large-noise-oracle-learned-score},
and by the bounded ReLU time switches used to assemble the branch estimators,
\[
    \sup_{\bx\in\R^D}\|\widehat s^{\rm lb}(\bx,t)\|_\infty
    \le
    C\sigma_t^{-1}(D\vee\log n)^{1/2},
    \qquad t\in[t_0,T].
\]
This verifies the clipped-score hypothesis of \cref{thrm:est:dist:W1-SM} with
$a_{\rm clip}=1/2$.
For the small-noise slab, apply \cref{thrm:est:dist:W1-SM}.
Together with \cref{eq:small-noise-density-lower-estimation-rate}, $t_1\asymp
n^{-2/(d+2\beta)}$, and choosing the negligible exponent $A$ large, this gives
\begin{equation}
\begin{aligned}
    \E\,\sfW_1^{\rm sm,lb}
    &\le
    C(D\vee\log n)^{a_{\rm W}}
    n^{-1/(d+2\beta)}\sqrt{\log n}
    \Bigl(
      \E\int_{t_0}^{t_1}
      \E_{\bX_t}\|\widehat s^{\rm lb}(\bX_t,t)-s^*(\bX_t,t)\|_2^2\odt
    \Bigr)^{1/2}
    \notag \\
    &\qquad+
    C\Gamma_{\gM,q_{\rm geom}}^C B_0^C
    p_{\min}^{-C}
    (D\vee\log n)^{a_{\rm W}}n^{-(\beta+1)/(d+2\beta)}
    \\
    &\le
    C\Gamma_{\gM,q_{\rm geom}}^C B_0^C
    p_{\min}^{-C}
    n^{-(\beta+1)/(d+2\beta)}
    (D\vee\log n)^{\mathfrak p_{\beta,d}}.
\end{aligned}
\label{eq:w1-proof-small-noise-density-lower}
\end{equation}
Here we used
\[
    n^{-1/(d+2\beta)}\,n^{-\beta/(d+2\beta)}
    =
    n^{-(\beta+1)/(d+2\beta)}.
\]

\emph{Step 3: bound the large-noise contributions.}
On the large-noise slabs, \cref{thrm:est:dist:W1-SM} and
\cref{eq:large-noise-learned-score-final-integrated-rate}, again with the negligible
exponent chosen large, give
\begin{align*}
    \E\,\sfW_1^{(k),{\rm lg,lb}}
    &\le
    C\Gamma_{\gM,q_{\rm geom}}^C B_0^C
    p_{\min}^{-C}(D\vee\log n)^{a_{\rm W}}
    \sqrt{t_k\log n}\,
    \Bigl[
	      n^{-(\beta+1)/(d+2\beta)}
	      \\&\qquad\qquad\qquad+
	      n^{-1/2}\sigma_{t_{k-1}}^{-d/2}
	      +
	      n^{-1/2}\mathfrak r_n^{-d/(2q_\star)}
    \Bigr]
    (D\vee\log n)^{\mathfrak p_{\beta,d}}
    \\
    &\qquad+
    C\Gamma_{\gM,q_{\rm geom}}^C B_0^C
    p_{\min}^{-C}
    (D\vee\log n)^{a_{\rm W}}n^{-(\beta+1)/(d+2\beta)}
\end{align*}
for each large-noise slab $I_k$.
The first term sums to $n^{-(\beta+1)/(d+2\beta)}$ up to the displayed polynomial ambient factor because
$t_k\le T\lesssim\log n$ and $K\lesssim\log n$.
For the stochastic slab-size term, the dyadic grid $t_k=2t_{k-1}$ while $t_k\le1$,
the relation $\sigma_t^2\asymp t$ near zero, and
$\sigma_{t_1}\asymp n^{-1/(d+2\beta)}$ give
\[
    n^{-1/2}
    \sum_{k:\,t_k\le1}
      t_k^{1/2}\sigma_{t_{k-1}}^{-d/2}
    \le
    C n^{-1/2}\bigl(1+\sigma_{t_1}^{1-d/2}\bigr)
    (D\vee\log n)^{\mathfrak p_{\beta,d}}
    \le
    C\left(n^{-(\beta+1)/(d+2\beta)}+n^{-1/2}\right)
    (D\vee\log n)^{\mathfrak p_{\beta,d}} .
\]
The remaining order-one-time slabs have $\sigma_{t_{k-1}}\asymp1$, and contribute only
$n^{-1/2}(D\vee\log n)^{\mathfrak p_{\beta,d}}$.
The accuracy-cap term is independent of the slab index except for logarithmic factors;
since $K\lesssim\log n$ and $t_k\le T\lesssim\log n$, its large-noise contribution is
bounded by
\[
    C n^{-1/2}\mathfrak r_n^{-d/(2q_\star)}
    (D\vee\log n)^{\mathfrak p_{\beta,d}} .
\]
Hence
\[
    \sum_{k:\,I_k\subset[t_1,T]}
    \E\,\sfW_1^{(k),{\rm lg,lb}}
    \le
	    C\Gamma_{\gM,q_{\rm geom}}^C B_0^C
	    p_{\min}^{-C}
	    \left(n^{-(\beta+1)/(d+2\beta)}+n^{-1/2}
	    +n^{-1/2}\mathfrak r_n^{-d/(2q_\star)}\right)
	    (D\vee\log n)^{\mathfrak p_{\beta,d}}.
\]
Combining these terms proves \cref{eq:adaptive-w1-rate-density-lower}.

\emph{Step 4: record the network sizes.}
The small-noise learned class from \cref{cor:small-noise-density-lower-estimation} has
separate bounds
\[
    L_{\rm sm}
    \vee \log B_{\rm sm}
    \le
    C\Gamma_{\gM,q_{\rm geom}}^C B_0^C
    p_{\min}^{-C}(D\vee\log n)^{\mathfrak p_{\beta,d}},
\]
and
\[
    \|\bW_{\rm sm}\|_\infty
    \vee S_{\rm sm}
    \le
    C \Gamma_{\gM,q_{\rm geom}}^C B_0^C
    p_{\min}^{-C}
    n^{d/(d+2\beta)}(D\vee\log n)^{\mathfrak p_{\beta,d}} .
\]
For each large-noise slab, \cref{eq:large-noise-learned-class-size} gives
\[
\begin{aligned}
    L_k\vee\log B_k
    &\le
    C \Gamma_{\gM,q_{\rm geom}}^C B_0^C
    (D\vee\log n)^{\mathfrak p_{\beta,d}},
    \\
    \|\bW_k\|_\infty\vee S_k
    &\le
	    C \Gamma_{\gM,q_{\rm geom}}^C B_0^C
	    p_{\min}^{-C}
	    \left(\sigma_{t_{k-1}}^{-d}+\mathfrak r_n^{-d/q_\star}\right)
	    (D\vee\log n)^{\mathfrak p_{\beta,d}} .
\end{aligned}
\]
There are $K\lesssim\log n$ large-noise slabs, and the ReLU time switches add only
$\Ord(K)$ depth/sparsity and a logarithmic number of weights.
The dyadic grid gives
\[
	    \sum_{k=2}^K\sigma_{t_{k-1}}^{-d}
	    \le
	    C\sigma_{t_1}^{-d}
	    \asymp
	    n^{d/(d+2\beta)} .
\]
The additional accuracy-cap contribution satisfies
\[
    K \mathfrak r_n^{-d/q_\star}
    \le
    C n^{d/(d+2\beta)}(D\vee\log n)^C,
\]
because $q_\star>\beta+1$ by \cref{eq:def:qgeom-main}.
The subnetworks are assembled in parallel behind those switches.
Thus the depth satisfies
\[
    L
    \le
    \max\{L_{\rm sm},L_2,\dots,L_K\}+C K
    \le
    C
    (1\vee S_{\gM})^{\mathfrak a_{\rm geom}}
    (1\vee\kappa^{-1})^{\mathfrak a_{\rm geom}}
    \mathcal K_{\gM,q_{\rm geom}}^{\mathfrak a_{\rm geom}}
    B_0^C
    p_{\min}^{-C}
    (D\vee\log n)^{\mathfrak p_{\beta,d}},
\]
which is the stated depth bound.
The width satisfies
\[
    \|\bW\|_\infty
    \le
    \|\bW_{\rm sm}\|_\infty+\sum_{k=2}^K\|\bW_k\|_\infty+C K
    \le
    C
    (1\vee S_{\gM})^{\mathfrak a_{\rm geom}}
    (1\vee\kappa^{-1})^{\mathfrak a_{\rm geom}}
    \mathcal K_{\gM,q_{\rm geom}}^{\mathfrak a_{\rm geom}}
    B_0^C
    p_{\min}^{-C}
    n^{d/(d+2\beta)}
    (D\vee\log n)^{\mathfrak p_{\beta,d}},
\]
after absorbing $K\lesssim\log n$; this proves the stated width bound.
The sparsity satisfies
\[
    S
    \le
    S_{\rm sm}+\sum_{k=2}^K S_k+C K
    \le
    C
    (1\vee S_{\gM})^{\mathfrak a_{\rm geom}}
    (1\vee\kappa^{-1})^{\mathfrak a_{\rm geom}}
    \mathcal K_{\gM,q_{\rm geom}}^{\mathfrak a_{\rm geom}}
    B_0^C
    p_{\min}^{-C}
    n^{d/(d+2\beta)}
    (D\vee\log n)^{\mathfrak p_{\beta,d}},
\]
which is the stated sparsity bound.
Finally,
\begin{align*}
    \log B
    &\le
    \max\{\log B_{\rm sm},\log B_2,\dots,\log B_K\}+C\log K
    \\    
    &\le
    C
    (1\vee S_{\gM})^{\mathfrak a_{\rm geom}}
    (1\vee\kappa^{-1})^{\mathfrak a_{\rm geom}}
    \mathcal K_{\gM,q_{\rm geom}}^{\mathfrak a_{\rm geom}}
    B_0^C
    p_{\min}^{-C}
    (D\vee\log n)^{\mathfrak p_{\beta,d}},
\end{align*}
which is the stated log-weight bound.
\end{proof}

\begin{corollary}[Fixed-geometry sufficient ambient-growth regime]
\label{cor:fixed-geometry-ambient-growth}
Fix $\beta>0$ and $d\ge1$.  Work under the regular VP schedule convention \eqref{eq:regular-vp-schedule}.  Suppose \cref{assump:manifold:exact,assump:manifold:density,assump:manifold:density-lower} holds and that, for the fixed order $q_{\rm geom}$, the quantities
$\underline m$, $\eta_\Pi$, $\Gamma_{\gM,q_{\rm geom}}$, $B_0$, and $p_{\min}$ are bounded above or below by positive constants independent of $(D,n)$.  Let
\[
    a_{\beta,d}
    :=
    \min\left\{
      \frac{2}{d+2\beta},
      \frac{2\beta}{(d+2\beta)(\mathfrak p_{\beta,d}-1)}
    \right\}.
\]
Then there exists a constant $c>0$, depending only on the preceding fixed quantities, such that
\[
    D\vee\log n\le c\,n^{a_{\beta,d}} 
\]
implies both
\cref{eq:small-noise-final-reach-condition,eq:small-noise-denominator-resolution-condition}.
Consequently, under this fixed-geometry growth regime, the bound in
\cref{cor:w1-rate-density-lower} applies.
\end{corollary}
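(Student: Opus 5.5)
The plan is to verify the two small-noise conditions separately, each by reducing it to a power-law inequality in $H:=D\vee\log n$ and $n$, and then to quote \cref{cor:w1-rate-density-lower}. Under the fixed-geometry hypotheses, every quantity $\underline m$, $\eta_\Pi$, $\Gamma_{\gM,q_{\rm geom}}$, $B_0$, $p_{\min}$ is a constant independent of $(D,n)$. The constants $c_\star$ and $C_{\rm den,sm}$ are also fixed, since $c_\star$ was chosen from the tail bounds and $C_{\rm den,sm}$ is fixed in \cref{cor:small-noise-density-lower-approx}. So both conditions take the form $C_1 H^{\gamma}n^{-\alpha}\le 1$ for explicit exponents.

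\emph{Reach condition.} Under the regular VP schedule \eqref{eq:regular-vp-schedule}, $t_1\asymp n^{-2/(d+2\beta)}\le1$ gives $\sigma_{t_1}\le C n^{-1/(d+2\beta)}$ and $m_t\ge\underline m$ on $[0,t_1]$. Condition \cref{eq:small-noise-final-reach-condition} is therefore implied by
\[
    6c_\star C\underline m^{-1}\eta_\Pi^{-1}\, n^{-1/(d+2\beta)}H^{1/2}\le1 ,
\]
which is equivalent to $H\le c_1 n^{2/(d+2\beta)}$ with $c_1:=(\underline m\eta_\Pi/(6c_\star C))^2$. This is the sufficient condition already recorded in the statement of \cref{cor:small-noise-density-lower-approx}.

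\emph{Denominator-resolution condition.} Set $K_0:=C_{\rm den,sm}p_{\min}^{-1}\Gamma_{\gM,q_{\rm geom}}^CB_0^C$, which is a constant. Condition \cref{eq:small-noise-denominator-resolution-condition} reads $K_0 n^{-\beta/(d+2\beta)}H^{(\mathfrak p_{\beta,d}-1)/2}\le1$. Since $\mathfrak p_{\beta,d}>1$ (its definition contains $8(q_{\rm geom}+2)$), the exponent $(\mathfrak p_{\beta,d}-1)/2$ is positive. The condition is then equivalent to
\[
    H\le K_0^{-2/(\mathfrak p_{\beta,d}-1)}\,n^{2\beta/((d+2\beta)(\mathfrak p_{\beta,d}-1))}.
\]

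Take $c:=\min\{c_1,K_0^{-2/(\mathfrak p_{\beta,d}-1)}\}$. Because $n\ge1$, the hypothesis $H\le c\,n^{a_{\beta,d}}$, with $a_{\beta,d}$ the minimum of the two exponents, implies both displayed inequalities. Hence both small-noise checks hold. The growth assumption also yields $D\le n^{a_0}$ for any $a_0\ge a_{\beta,d}$, once $c\le1$ (if $c>1$, replace it by $1$). Together with the fixed positive lower bound on $p_{\min}$, this gives the polynomial-control hypothesis, and \cref{cor:w1-rate-density-lower} applies.

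There is no analytic obstacle here; the argument is bookkeeping. The one point to check carefully is that $C_{\rm den,sm}$, $c_\star$, the exponent $C$ on $\Gamma$ and $B_0$, and the schedule constant relating $\sigma_{t_1}$ to $n^{-1/(d+2\beta)}$ are all independent of $(D,n)$. This holds by the constant conventions in \cref{sec:app:notation-conventions}. It is also why $\mathfrak p_{\beta,d}$, which depends only on $(\beta,d)$ and $q_{\rm geom}$, may appear in the exponent $a_{\beta,d}$.
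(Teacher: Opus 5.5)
Your proposal is correct and follows the paper's proof essentially line by line. Both use $\sigma_{t_1}\lesssim n^{-1/(d+2\beta)}$ and the fixed constants to rewrite the reach and denominator-resolution conditions as $H\le c_i n^{\alpha_i}$, and then take $c$ to be the minimum. Your checks that $\mathfrak p_{\beta,d}>1$ and that the polynomial-control hypothesis $D\vee p_{\min}^{-1}\le n^{a_0}$ holds (for a fixed $a_0$ depending on $a_{\beta,d}$ and the lower bound on $p_{\min}$) are details the paper leaves implicit.
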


\begin{proof}
Write $H_n=D\vee\log n$.  By \eqref{eq:regular-vp-schedule} and
$t_1\asymp n^{-2/(d+2\beta)}$, we have $\sigma_{t_1}\asymp n^{-1/(d+2\beta)}$.
Hence the small-noise reach condition
\cref{eq:small-noise-final-reach-condition} follows whenever
\[
    H_n \le c_1 n^{2/(d+2\beta)}
\]
for a constant $c_1$ depending only on $\underline m$, $\eta_\Pi$, and the schedule
constants.  Likewise, after absorbing the fixed quantities
$p_{\min}^{-1}\Gamma_{\gM,q_{\rm geom}}^C B_0^C$ into a constant, the denominator
condition \cref{eq:small-noise-denominator-resolution-condition} is implied by
\[
    H_n^{(\mathfrak p_{\beta,d}-1)/2}
    \le
    c_2 n^{\beta/(d+2\beta)},
\]
equivalently,
\[
    H_n\le c_3 n^{2\beta/((d+2\beta)(\mathfrak p_{\beta,d}-1))}.
\]
Taking
$c:=\min\{c_1,c_3\}$ proves the claim.
\end{proof}

\clearpage

\section{Technical Toolkit}\label{sec:app:toolkit}

This appendix collects reusable tools that are not specific to the manifold geometry.
They can be treated as black boxes when reading the approximation and estimation proofs.
The most frequently used ingredients are:
\begin{center}
{\small
\setlength{\tabcolsep}{4pt}
\begin{tabular}{p{0.30\linewidth}|p{0.58\linewidth}}
Tool class & Representative labels \\
\hline
Basic ReLU approximation and arithmetic &
\cref{thm:approx:holder,lem:approx:m-sigma,lem:approx:mon,lem:approx:rec,lem:approx:exp} \\
Schedule, reciprocal, power, and clipping networks &
\cref{lem:approx:m-inv,lem:approx:x-over-m,lem:approx:sigma-k,lem:approx:sigma-k-inv} \\
Concentration, Bernstein, and covering arguments &
\cref{thrm:est:variance-bound,thrm:est:loss-bounded,thrm:est:Bernstein-ineq,thrm:est:covering-number} \\
Distribution-estimation support &
\cref{thrm:est:dist:early-stop,thrm:est:dist:Gau,thrm:est:dist:W1-SM}
\end{tabular}
}
\end{center}

% ----------------------------------------------------------------------
\subsection{Neural network approximation toolkit}
\label{sec:app:approx:nn-support}

\subsubsection{Imported neural network approximation lemmas}

%-----------------------------------------------------------
% Approximation of H\"older smooth functions
%-----------------------------------------------------------

\begin{theorem}[ReLU approximation of H\"older functions, after {\citep[Theorem~5]{schmidt2020nonparametric}}]\label{thm:approx:holder}
    For any function $f \in \gH^{\beta}([0, 1]^D, B)$ and any integers $m \geq 1$ and $N \geq (\beta+1)^D \vee (B+1)e^D$, there exists a network
    \begin{equation*}
        \widetilde{f} \in \nn(L, \bW, S, 1)
    \end{equation*}
    with
    \begin{align*}
        L &= 8+(m+5)(1+\ceil{\log_2(D \vee \beta)}), \\
        \|\bW\|_{\infty} &\leq 6(D \vee \ceil{\beta})N, \\
        S &\leq 141(D+\beta+1)^{D+3}N(m+6)
    \end{align*}
    such that
    \begin{equation*}
        \|\widetilde{f} - f\|_{L^{\infty}([0, 1]^D)}
        \leq 
        (2B+1)(1+D^2+\beta^2)6^DN2^{-m}
        +
        B3^{\beta}N^{-\frac{\beta}{D}}. 
    \end{equation*}
\end{theorem}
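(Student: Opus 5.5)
This is the imported theorem of Schmidt-Hieber; the plan is to reproduce its local Taylor construction with explicit constants, not to derive it from anything earlier in the paper. Take the grid $\{0,1/M,\dots,1\}^D$ with $M=\lfloor N^{1/D}\rfloor$, so there are $(M+1)^D\le N$ grid points $\ba$ (the hypothesis $N\ge(\beta+1)^D$ ensures $M\ge1$). Write $f$ as a partition-of-unity sum of local Taylor polynomials of degree $\floor{\beta}$:
\[
P^{\beta}f(\bx)=\sum_{\ba}\prod_{j=1}^D\bigl(1-M|x_j-a_j|\bigr)_+\sum_{|\balpha|\le\floor{\beta}}\frac{\partial^{\balpha}f(\ba)}{\balpha!}(\bx-\ba)^{\balpha}.
\]
The hat functions are nonnegative, sum to one, and each vanishes unless $\|\bx-\ba\|_\infty<1/M$. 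Combining this with the Hölder remainder bound $|f(\bx)-P_{\ba}f(\bx)|\le B\|\bx-\ba\|_\infty^{\beta}$ (up to combinatorial factors) gives
\[
\|f-P^{\beta}f\|_\infty\le B3^{\beta}M^{-\beta}.
\]
This is the second error term, $B3^\beta N^{-\beta/D}$.

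Next, realize $P^\beta f$ by a ReLU network. The basic gadget is Yarotsky's product network: iterated sawtooth compositions approximate $x\mapsto x^2$ on $[0,1]$ with error $2^{-m}$ using depth $O(m)$ and constant width. Polarization turns squares into products, and a binary tree of depth $\lceil\log_2(D\vee\beta)\rceil$ gives multi-products. This produces all monomials $(\bx-\ba)^{\balpha}$, multiplied by the hat functions (themselves exactly ReLU), at depth $8+(m+5)(1+\lceil\log_2(D\vee\beta)\rceil)$. The main saving is to rewrite the sum over $\ba$ as a sum of polynomials in $\bx$ whose coefficients are combinations of the Taylor data. This way only $N$ hat-times-coefficient units run in parallel, which gives width $6(D\vee\lceil\beta\rceil)N$. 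Multiplying the number of monomials by the per-product cost gives sparsity $141(D+\beta+1)^{D+3}N(m+6)$.

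Weights are bounded by $1$ by putting all coefficients on $[0,1]$ scale. The condition $N\ge(B+1)e^D$ supplies enough units to encode coefficients of size up to $B$ by scaled summation; equivalently, one approximates the rescaled function $f/(2B+1)$ and scales back at the output. Each product error of size $2^{-m}$ accumulates over at most $6^D N$ terms, with coefficient magnitudes bounded by $(2B+1)(1+D^2+\beta^2)$. This gives the first error term, $(2B+1)(1+D^2+\beta^2)6^DN2^{-m}$.

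The main obstacle is the careful bookkeeping of this arithmetic error together with the unit weight bound; the approximation-theoretic part is standard. In practice I would cite Schmidt-Hieber's Theorem 5 verbatim for those constants, since the paper only uses the theorem as stated.
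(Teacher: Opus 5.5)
The paper gives no argument of its own here; it imports the statement from Schmidt-Hieber's Theorem~5. Your fallback of citing that theorem is therefore exactly the paper's route. Your stated primary plan, however, is to reproduce the construction with explicit constants, and as written the sketch does not deliver the displayed ones.

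\emph{Grid and the factor $3^\beta$.} With $M=\lfloor N^{1/D}\rfloor$ you have $N^{1/D}<M+1$, hence $(M+1)^D>N$. The grid therefore has more points than the width budget allows.

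Schmidt-Hieber instead takes $M$ to be the largest integer with $(M+1)^D\le N$. Then $M\ge1$ because $N\ge(B+1)e^D>2^D$; the hypothesis $N\ge(\beta+1)^D$ does not give this when $\beta<1$. Its actual role is to fit the at most $(\beta+1)^D$ monomials $\bx^{\balpha}$, $|\balpha|<\beta$, into width $6\ceil{\beta}N$. With this choice, $N^{1/D}<M+2\le3M$.

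That conversion $M^{-\beta}\le3^\beta N^{-\beta/D}$ is the only source of $3^\beta$. The Taylor/partition-of-unity step itself gives $\|f-P^\beta f\|_\infty\le BM^{-\beta}$, so your $B3^\beta M^{-\beta}$ counts the factor twice.

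Moreover, $|f(\bx)-P_{\ba}f(\bx)|\le B\|\bx-\ba\|_\infty^\beta$ holds with no combinatorial factor only under Schmidt-Hieber's convention for the H\"older ball. There, the \emph{sum} of all $\|\partial^{\balpha}f\|_\infty$ with $|\balpha|<\beta$ and of the $\ell_\infty$-H\"older seminorms of the top-order derivatives is at most $B$. This gives $\sum_{|\balpha|=\floor{\beta}}|\partial^{\balpha}f(\bxi)-\partial^{\balpha}f(\ba)|/\balpha!\le B\|\bxi-\ba\|_\infty^{\beta-\floor{\beta}}$ directly. Under a max-type convention with $\ell_2$ quotients, as in the paper's chartwise definition, the argument only yields the remainder bound with an extra factor $D^{\floor{\beta}}D^{(\beta-\floor{\beta})/2}/\floor{\beta}!$. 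The displayed error bound then no longer follows.

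\emph{First error term and width.} The factor $N$ in $(2B+1)(1+D^2+\beta^2)6^DN2^{-m}$ is not $2^{-m}$ errors accumulating over $6^DN$ terms. In Schmidt-Hieber's construction:
\begin{itemize}
\item the hats are realized as $\prod_j(1/M-|x_j-a_j|)_+$, because $(1-M|x_j-a_j|)_+$ needs weights of size $M$, which the unit weight bound forbids;
\item the approximate hat network preserves supports, so at each $\bx$ at most $2^D$ products are nonzero;
\item $N$ enters as the amplification when the output is multiplied back by $M^D\le N$, together with the constant used to rescale the local Taylor polynomials into $[0,1]$.
\end{itemize}

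Finally, running the monomial block (width about $6\ceil{\beta}N$) in parallel with the hat block (width about $6DN$) gives width $6(D+\ceil{\beta})N$, which is what Schmidt-Hieber states. So neither your sketch nor a verbatim citation gives the displayed $6(D\vee\ceil{\beta})N$; they give it only up to a factor $2$. This is harmless for every use in the paper, where widths are tracked only up to constants. Still, you should cite the theorem in its original form, with $D+\ceil{\beta}$ and Schmidt-Hieber's H\"older norm, rather than claim the displayed constants verbatim.
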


%-----------------------------------------------------------
% Approximation of m_t & \sigma_t
%-----------------------------------------------------------

\begin{lemma}[ReLU approximation of VP schedules, after Lemma~B.1 of \citep{oko2023diffusion}]\label{lem:approx:m-sigma}
    Let $m_t$ and $\sigma_t$ be the schedules of the VP forward process on $[\tdown,\tup]$.
    Let $0 < \varepsilon < \frac{1}{2}$.
    Then, there exists a neural network $\phi_m(t) \in \nn(L, \bW, S, B)$ that approximates $m_t$ uniformly on $[\tdown,\tup]$, within the additive error of $\varepsilon$, where $L = \Ord(\log^2\varepsilon^{-1}), \|\bW\|_{\infty} = \Ord(\log\varepsilon^{-1}), S = \Ord(\log^2\varepsilon^{-1})$, and $B = \exp(\Ord(\log^2\varepsilon^{-1}))$.

    Also, there exists a neural network $\phi_{\sigma}(t) \in \nn(L, \bW, S, B)$ that approximates $\sigma_t$ uniformly on $[\tdown,\tup]$, within the additive error of $\varepsilon$, where $L \leq \Ord(\log^2\varepsilon^{-1}), \|\bW\|_{\infty} = \Ord(\log^3\varepsilon^{-1}), S = \Ord(\log^4\varepsilon^{-1})$, and $B = \exp(\Ord(\log^2\varepsilon^{-1}))$.
\end{lemma}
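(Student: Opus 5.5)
This lemma is imported from \citep[Lemma~B.1]{oko2023diffusion}, so the plan is to reproduce that construction adapted to the VP schedule on $[\tdown,\tup]$. Write $m_t=\exp(-\int_0^t\alpha(\tau)\od\tau)$ and $\sigma_t=\sqrt{1-m_t^2}$. For the regular (e.g.\ constant-$\alpha$ OU) schedule, the exponent $\tau(t):=\int_0^t\alpha$ is an affine or explicitly piecewise-polynomial function of $t$. Hence it is computed exactly, or to accuracy $\varepsilon$ with $\Ord(\log\varepsilon^{-1})$ size, by a small ReLU network. The argument is therefore reduced to approximating the scalar maps $u\mapsto e^{-u}$ and $v\mapsto\sqrt{v}$ on bounded intervals.

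\textbf{Step 1 ($m_t$).} First truncate: for $\tau\ge\log\varepsilon^{-1}$ we have $e^{-\tau}\le\varepsilon$, so a clipped input $\tau\wedge\log(2/\varepsilon)$ costs at most $\varepsilon/2$. On $[0,\log(2/\varepsilon)]$, split into $\Ord(\log\varepsilon^{-1})$ unit intervals. On each interval, write $e^{-\tau}=e^{-k}e^{-(\tau-k)}$ and approximate $e^{-s}$ for $s\in[0,1]$ by its Taylor polynomial of degree $\Ord(\log\varepsilon^{-1})$. Realize that polynomial with Yarotsky-type ReLU multiplication networks, each of depth $\Ord(\log\varepsilon^{-1})$, chained through Horner's scheme. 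Finally, glue the intervals with ReLU partition-of-unity ramps. Depth multiplies to $\Ord(\log^2\varepsilon^{-1})$, while width stays $\Ord(\log\varepsilon^{-1})$ and sparsity is $\Ord(\log^2\varepsilon^{-1})$. The stored constants $e^{-k}$ and the Taylor coefficients $1/j!$, together with the amplification inside the multiplication gadgets, give $\log B=\Ord(\log^2\varepsilon^{-1})$.

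\textbf{Step 2 ($\sigma_t$).} Compose the $m_t$ network with squaring to obtain $v=1-m_t^2$, accurate to within $\Ord(\varepsilon^2)$. Then apply a ReLU approximation of $\sqrt{\cdot}$ on $[0,1]$. Because $\sqrt{\cdot}$ is not Lipschitz at $0$, use a dyadic decomposition $v\in[2^{-j-1},2^{-j}]$ for $j\le 2\log_2\varepsilon^{-1}$. Below $\varepsilon^2$ we have $\sqrt v\le\varepsilon$, and we output $0$ there. On each dyadic band, rescale to $[1/2,1]$ using $\sqrt{v}=2^{-j/2}\sqrt{2^jv}$ and approximate the analytic $\sqrt{\cdot}$ by a polynomial of degree $\Ord(\log\varepsilon^{-1})$ as in Step~1. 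Since $\sqrt{\cdot}$ is $\tfrac12$-Hölder, an input error of $\varepsilon^2$ propagates to at most $\varepsilon$. The $\Ord(\log\varepsilon^{-1})$ bands, each carrying a polynomial block of width $\Ord(\log\varepsilon^{-1})$ implemented with $\Ord(\log\varepsilon^{-1})$-sized multiplications, account for the larger stated width $\Ord(\log^3\varepsilon^{-1})$ and sparsity $\Ord(\log^4\varepsilon^{-1})$.

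\textbf{Main obstacle.} The main difficulty is the singularity of $\sqrt{\cdot}$ at $0$. This matters because $\tdown$ may be as small as $n^{-\Ord(1)}$, so $\sigma_t$ becomes arbitrarily small. The dyadic banding plus thresholding is what keeps every parameter polylogarithmic in $\varepsilon^{-1}$, uniformly in $\tdown$. Everything else is the standard bookkeeping for composing ReLU networks.
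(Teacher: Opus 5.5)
The paper gives no proof of this lemma; it imports it from \citep[Lemma~B.1]{oko2023diffusion}. Your route (exact exponent $\tau(t)$, an exponential network, then squaring and a banded square-root network) is a sound reconstruction. Step~2 is fine: dyadic banding of $\sqrt{\cdot}$ with an $\varepsilon^2$ threshold, after clipping the input to $[0,1]$, stays within the $\Ord(\log^3\varepsilon^{-1})$ width and $\Ord(\log^4\varepsilon^{-1})$ sparsity budget. Assuming an affine or piecewise-polynomial exponent is also a legitimate way to supply the regularity of $\alpha$ that the statement leaves implicit. The paper only ever uses OU-type schedules, and without some such assumption no polylogarithmic-size approximation exists.

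The gap is the size count in Step~1. You place $N\asymp\log\varepsilon^{-1}$ unit-interval blocks in parallel. Each block is a Horner chain of degree $K$, where $K!\gtrsim\varepsilon^{-1}$ forces $K\asymp\log\varepsilon^{-1}/\log\log\varepsilon^{-1}$, so each block contains $K$ two-input multiplications of sparsity $\asymp\log\varepsilon^{-1}$ each by \cref{lem:approx:mon}. One block therefore already costs about $K\log\varepsilon^{-1}$ nonzero parameters, and the $N$ parallel copies give $S\asymp\log^3\varepsilon^{-1}/\log\log\varepsilon^{-1}$. That is not the $\Ord(\log^2\varepsilon^{-1})$ you claim and the lemma asserts for $\phi_m$. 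Downstream uses only need polylogarithmic size, so nothing later breaks, but the stated bound is not established by your construction.

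The fix is to evaluate a single polynomial block:
\begin{itemize}
\item Clip to $\tau\wedge a$ with $a=\log(2/\varepsilon)$, and set $y=\tau/2^J\in[0,1]$ with $2^J\ge a$.
\item Approximate $e^{-y}$ by its degree-$\Ord(\log\varepsilon^{-1})$ Taylor polynomial via Horner on $[0,1]$. There is no error amplification since $|y|\le1$.
\item Clip the result to $[0,1]$ and square it $J=\Ord(\log\log\varepsilon^{-1})$ times. Each squaring at most doubles the error, for a total factor $2^J=\Ord(\log\varepsilon^{-1})$ that is absorbed into the precision.
\end{itemize}
This gives $L,S=\Ord(\log^2\varepsilon^{-1})$ and $\|\bW\|_\infty=\Ord(1)$. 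Alternatively, compose the exact map $t\mapsto\tau(t)$ with \cref{lem:approx:exp}, whose depth, width, sparsity and weight bounds coincide exactly with those claimed for $\phi_m$.
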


%-----------------------------------------------------------
% Approximation of \prod_{k=1}^dx_k
%-----------------------------------------------------------

\begin{lemma}[ReLU multiplication network, after Lemma~F.6 of \citep{oko2023diffusion}]\label{lem:approx:mon}
    Let $d \geq 2, C \geq 1$, and $\varepsilon' \in (0, 1]$.
    For any $\varepsilon > 0$, there exists a ReLU network $\phi_{\rm multi}(x_1, \dots, x_d) \in \nn(L, \bW, S, B)$ with
    \begin{align*}
        L \lesssim {} & \log d(\log\varepsilon^{-1} + d\log C), \\
        \|\bW\|_{\infty} = {} & 48d, \\
        S \lesssim {} & d\log\varepsilon^{-1} + d\log C, \\
        B = {} & C^d, 
    \end{align*}
    such that
    \begin{equation*}
        \Bigl|
          \phi_{\rm multi}(x_1', \dots, x_d') 
          -
          \prod_{k=1}^dx_k
        \Bigr|
        \leq \varepsilon + dC^{d-1}\varepsilon',
        \quad \text{for all } \bx \in [-C, C]^d
        \text{ and } \bx' \in \R^d
        \text{ with } \|\bx - \bx'\|_{\infty} \leq \varepsilon'. 
    \end{equation*}
    Moreover, $|\phi_{\rm multi}(\bx)| \leq C^d$ for all $\bx \in \R^d$, and $\phi_{\rm multi}(x_1', \dots, x_d') = 0$ if at least one of $x_k' = 0$.
    We note that some of $x_i, x_j$ ($i \neq j$) can be shared.
    For the product $\prod_{k=1}^Ix_k^{\alpha_k}$ with $\balpha=(\alpha_1,\dots,\alpha_I)\in \N^I$ and $\sum_{k=1}^I\alpha_k = d$, there exists a neural network satisfying the same bounds as above, denoted by $\phi_{\rm multi}(\bx; \balpha)$.
\end{lemma}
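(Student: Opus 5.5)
This is a black-box import: the proof adapts Lemma~F.6 of \citep{oko2023diffusion} (itself a refinement of Yarotsky's squaring construction) and does not re-derive the arithmetic from scratch. The plan has three stages. First build a two-input multiplier, then compose it in a balanced binary tree for $d$ factors, and finally handle perturbed inputs, the sup bound, the zero property, and repeated factors.

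\textbf{Two-input multiplier.} The building block is Yarotsky's sawtooth approximation of $x\mapsto x^2$ on $[0,1]$. Composing $m$ tent maps gives error $4^{-m}$ using depth and size $\Ord(m)$ with constant width. Rescaling by $C$ and using the polarization identity
\[
  xy=\tfrac12\bigl((x+y)^2-x^2-y^2\bigr)
\]
on inputs clipped to $[-C,C]$ (clipping is exact in ReLU), we obtain $\phi_2$ on $[-C,C]^2$ with accuracy $\varepsilon_2$. Its depth and sparsity are $\Ord(\log\varepsilon_2^{-1}+\log C)$, its width is a fixed constant, and its weights are bounded by $C^2$.

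To get exact zeros, we instead take the variant $\phi_2(x,y)=\tfrac12\bigl(\widetilde{(x+y)^2}-\widetilde{(x-y)^2}\bigr)/2$, built so that the squaring approximant is even. It then satisfies $\phi_2(x,0)=\phi_2(0,y)=0$ exactly. Clipping the output to $[-C^2,C^2]$ preserves these zeros and gives the uniform bound.

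\textbf{Tree composition and error recursion.} For $d$ factors, pad to a power of two with exact constant-one inputs and arrange $\lceil\log_2 d\rceil$ levels of $\phi_2$. At level $\ell$ the multipliers act on magnitudes up to $C^{2^\ell}$, and each intermediate output is clipped to that range. This is where the depth factor $\log d$ and the $d\log C$ terms in $L$ and $S$ come from. Width stays at $48d$, since a constant number of units are used per pair.

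Error propagation is controlled by the elementary bound
\[
  |ab-a'b'|\le |a||b-b'|+|b'||a-a'|.
\]
Inducting over levels, the total deviation is at most $\sum(\text{local errors})\times C^{d-1}$ factors. Choosing each local accuracy to be $\varepsilon/(dC^{d})$ costs only $\Ord(\log\varepsilon^{-1}+d\log C)$ per node, which yields the first error term $\varepsilon$.

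The same Lipschitz estimate applied to the exact product map on $[-C,C]^d$, whose partial derivatives are bounded by $C^{d-1}$, gives the input-perturbation term $dC^{d-1}\varepsilon'$. Care is needed here because the perturbed inputs $x'$ may lie slightly outside $[-C,C]$. Clipping them first is 1-Lipschitz and keeps $\|x-\mathrm{clip}(x')\|_\infty\le\varepsilon'$, so the estimate still applies.

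\textbf{Zero property, shared factors, and the main obstacle.} The zero property propagates up the tree because each $\phi_2$ vanishes when either input vanishes, and clipping fixes $0$. Shared factors and the monomial version $\prod_k x_k^{\alpha_k}$ are handled by duplicating inputs through an exact linear layer, which reduces to the case of $d=\sum\alpha_k$ inputs.

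The main obstacle is bookkeeping the weight bound $B=C^d$ and the $d\log C$ depth simultaneously. Rescaling at each level must be done with weights no larger than $C^d$ rather than by nested large scalings. The fix is to normalize every input by $C$ at the start, so that the whole tree works on $[-1,1]$, and to multiply by $C^d$ once at the output.
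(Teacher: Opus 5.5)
The paper gives no proof of this lemma. It imports it verbatim from Lemma~F.6 of \citep{oko2023diffusion}, so your sketch has to stand on its own. Most of it does:
\begin{itemize}
\item The even squaring approximant $u\mapsto f(|u|)$ inside $\tfrac14\bigl(f(|x+y|)-f(|x-y|)\bigr)$ gives exact zeros, which survive clipping and propagate up the tree.
\item Clipping $\bx'$ to $[-C,C]^d$ before the mean-value estimate correctly yields the $dC^{d-1}\varepsilon'$ term.
\item Normalizing by $C$, clipping to $[-1,1]$, and using a single output weight $C^d$ gives both $B=C^d$ and $|\phi_{\rm multi}|\le C^d$.
\end{itemize}

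The gap is the size bookkeeping, specifically the sparsity.
\begin{itemize}
\item In your normalized tree every node works on $[-1,1]$. Its error reaches the output with a factor up to $C^d$, because the other normalized factors can all equal $\pm1$, and the $d-1$ node errors can add.
\item Hence the $d-1$ two-input multipliers must have accuracies summing to about $\varepsilon/C^d$. With an equal split, each must be accurate to order $\varepsilon/(dC^d)$.
\item For a sawtooth squaring network this means $\Theta(\log\varepsilon^{-1}+d\log C+\log d)$ composed layers, each carrying $\Theta(1)$ nonzero weights.
\end{itemize}
Summing over the nodes, your construction gives
\[
    S\lesssim d\log\varepsilon^{-1}+d^2\log C+d\log d ,
\]
not the stated $S\lesssim d\log\varepsilon^{-1}+d\log C$. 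So your sentence that the tree is where ``the $d\log C$ terms in $L$ and $S$ come from'' is correct for $L$ but wrong for $S$.

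Dropping the $\log d$ from the per-node cost is a second slip. It is not absorbed by $\log\varepsilon^{-1}+d\log C$ when $C$ is close to $1$ and $\varepsilon$ is not small. Your depth bound therefore also picks up an extra $\log^2 d$.

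Matching the stated sparsity would need an idea beyond a tree of Yarotsky-type multipliers, and your sketch does not supply one. The weaker bound you actually obtain appears adequate for how the paper uses the lemma: mostly $d=2$, or $d$ bounded in terms of $\beta$, or $C=1$ as in \cref{lem:approx:sigma-k}. It is not, however, the statement as written.
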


%-----------------------------------------------------------
% Approximation of 1/x
%-----------------------------------------------------------

\begin{lemma}[ReLU reciprocal network, after Lemma~F.7 of \citep{oko2023diffusion}]\label{lem:approx:rec}
    For any $\varepsilon \in (0, 1)$, there exists $\phi_{\rm rec} \in \nn(L, \bW, S, B)$ with
    \begin{equation*}
        L \lesssim \log^2\varepsilon^{-1}, 
        \quad
        \|\bW\|_{\infty} \lesssim \log^3\varepsilon^{-1}, 
        \quad
        S \lesssim \log^4\varepsilon^{-1}, 
        \quad
        \log B \lesssim \log\varepsilon^{-1}, 
    \end{equation*}
    such that
    \begin{equation*}
        \Bigl|
          \phi_{\rm rec}(x')
          -
          \frac{1}{x}
        \Bigr| 
        \leq 
        \varepsilon + \frac{|x' - x|}{\varepsilon}, 
        \quad \text{for all } x \in [\varepsilon, \varepsilon^{-1}]
        \text{ and } x' \in \R. 
    \end{equation*}
\end{lemma}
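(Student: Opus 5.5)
The plan is to obtain $\phi_{\rm rec}$ as a composition of three pieces, following Lemma~F.7 of \citep{oko2023diffusion}: an exact input clipping, a fixed-accuracy approximation of $1/x$ on the clipped range, and a Lipschitz estimate that accounts for the input perturbation $x'-x$. First, apply the exact ReLU clip $x'\mapsto \bar x':=\varepsilon+\ReLU(x'-\varepsilon)-\ReLU(x'-\varepsilon^{-1})$. It maps $\R$ into $[\varepsilon,\varepsilon^{-1}]$, is $1$-Lipschitz, and fixes every $x\in[\varepsilon,\varepsilon^{-1}]$. Hence $|\bar x'-x|\le|x'-x|$, and it suffices to treat $x'\in[\varepsilon,\varepsilon^{-1}]$.

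Second, approximate $1/x$ on $[\varepsilon,\varepsilon^{-1}]$ to accuracy $\varepsilon/2$. I would split the interval into the $\Ord(\log\varepsilon^{-1})$ dyadic pieces $[2^j,2^{j+1}]$ and write $1/x=2^{-j}\cdot 1/(x2^{-j})$ on each piece. On $[1,2]$, approximate $1/y$ by a truncated geometric series $\sum_{k\le K}(1-y/2)^k/2$ with $K\asymp\log\varepsilon^{-1}$, and realize the powers with \cref{lem:approx:mon}. The pieces are glued by a ReLU partition of unity in $x$, so the network is continuous and piecewise linear. Summing the piece costs gives the stated $L\lesssim\log^2\varepsilon^{-1}$, $\|\bW\|_\infty\lesssim\log^3\varepsilon^{-1}$, $S\lesssim\log^4\varepsilon^{-1}$ and $\log B\lesssim\log\varepsilon^{-1}$; this bookkeeping mirrors Oko et al.\ and is routine.

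Third, the perturbation term. The intended decomposition is
\[
|\phi_{\rm rec}(\bar x')-1/x|\le|\phi_{\rm rec}(\bar x')-1/\bar x'|+|1/\bar x'-1/x|,
\]
where the first term is at most $\varepsilon$ by the second step. The second term equals $|\bar x'-x|/(x\bar x')$, which is bounded by $|x'-x|/\varepsilon^{2}$, not $|x'-x|/\varepsilon$.

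This last step is the main obstacle, and I expect it to fail as worded. Any approximant accurate to $\varepsilon$ near $x=\varepsilon$ must have slope of order $\varepsilon^{-2}$ there, since $1/x$ drops by $1/(2\varepsilon)$ over $[\varepsilon,2\varepsilon]$. Taking $x=\varepsilon$ and $x'=2\varepsilon$ makes the left side about $1/(2\varepsilon)$ while the right side is about $2\varepsilon$. So the bound with $|x'-x|/\varepsilon$ cannot hold uniformly on $[\varepsilon,\varepsilon^{-1}]$ for any network.

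What I would actually establish is Oko et al.'s form, with the perturbation term $|x'-x|/\varepsilon^{2}$. I would then check each call site, namely \cref{thm:large-noise-hd-score-global}, \cref{thm:global-proj-NN}, \cref{lem:H-certified-chart-objective-gate} and \cref{cor:small-noise-density-lower-approx}. In each case the reciprocal is applied above an explicit floor $q$ (for example $C_{\rm den}$, $\varepsilon_\Pi^{\rm gate}$, or the de-Gaussianized floor), and the input error is already a small relative fraction of that floor. With $\varepsilon$ taken equal to that floor, the $\varepsilon^{-2}$ factor is absorbed into the relative-accuracy budgets there, with only logarithmic change in size.
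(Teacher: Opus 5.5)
You are right that the lemma is false as printed. The paper gives no argument of its own, only the citation to Lemma~F.7 of \citep{oko2023diffusion}. That lemma carries the perturbation term $|x'-x|/\varepsilon^{2}$, with $B=\Ord(\varepsilon^{-2})$, which is what the displayed $\log B\lesssim\log\varepsilon^{-1}$ reflects. Your clip--approximate--triangle-inequality argument is a sound, self-contained proof of exactly that corrected version.

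Two details in your write-up need fixing.
\begin{itemize}
\item In the counterexample, the right side at $(x,x')=(\varepsilon,2\varepsilon)$ is $\varepsilon+\varepsilon/\varepsilon=1+\varepsilon$, not ``about $2\varepsilon$''. The contradiction still holds. The claimed bound at $x=x'=2\varepsilon$ forces $\phi_{\rm rec}(2\varepsilon)\le 1/(2\varepsilon)+\varepsilon$. So the left side is at least $1/(2\varepsilon)-\varepsilon$, which exceeds $1+\varepsilon$ for every $\varepsilon\le 1/4$.
\item In the construction, the rescaling factor $2^{-j}$ can be as large as $\varepsilon^{-1}$. The bottom dyadic pieces therefore need local accuracy of order $\varepsilon^{2}$. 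This still needs only $K\asymp\log\varepsilon^{-1}$ series terms, so your size bounds are unaffected.
\end{itemize}

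Your call-site audit is aimed at the wrong places, and the absorption mechanism you describe does not close. Suppose the input error is a small relative fraction $cq$ of a floor $q$ and you take $\varepsilon=q$. Then the corrected perturbation term is $cq/q^{2}=c/q$, an absolute error on the scale of the output itself.

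What actually happens in your four listed sites is different. In \cref{thm:large-noise-hd-score-global}, \cref{thm:global-proj-NN}, \cref{lem:H-certified-chart-objective-gate} and \cref{cor:small-noise-density-lower-approx}, the reciprocal is applied to a quantity the network computes exactly. For example, $W_{\rm glob}$ is an exact affine sum, and $\widehat\gQ\vee C_{\rm den}$, $\Omega_i\vee(\tau_0\gamma/64)$ and the floor-clipped $\widehat{\bar\gQ}$ are passed in as computed. So $x'=x$ there and the perturbation term never enters. Where a comparison with a true ratio is needed, it is made analytically from the denominator floor.

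The two places that genuinely use the erroneous $1/\varepsilon$ form are \cref{lem:approx:m-inv} and \cref{lem:approx:sigma-k-inv}.
\begin{itemize}
\item In \cref{lem:approx:m-inv}, the corrected term is $\varepsilon_m/\varepsilon_{\rm rec}^{2}=1/\underline{m}$ rather than $\varepsilon/2$. The choice $\varepsilon_m=\underline{m}\varepsilon^{2}/4$ must be tightened to, e.g., $\varepsilon_m=\underline{m}^{2}\varepsilon^{3}/8$.
\item Likewise, \cref{lem:approx:sigma-k-inv} needs $\varepsilon_{\sigma,k}\lesssim\sigma_{\tdown}^{2k}\varepsilon^{3}$ in place of $\sigma_{\tdown}^{k}\varepsilon^{2}/12$.
\end{itemize}
In these two lemmas the inputs come from polylogarithmic-size schedule networks. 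Demanding input accuracy of order $\eta\,\varepsilon_{\rm rec}^{2}$ for a target accuracy $\eta$ therefore costs only logarithmic factors. Both lemmas keep their stated size bounds, and nothing downstream changes.
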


%-----------------------------------------------------------
% Approximation of exp(-x)
%-----------------------------------------------------------

\begin{lemma}[Pointwise ReLU approximation of $e^{-x}$, after Lemma~F.12 of \citep{oko2023diffusion}]\label{lem:approx:exp}
    For any $\varepsilon > 0$, there exists ReLU network $\phi_{\exp} \in \nn(L, \bW, S, B)$ such that
    \begin{equation*}
        \bigl|
          e^{-x'} - \phi_{\exp}(x)
        \bigr|
        \leq 
        \varepsilon + |x - x'|
        \qquad\text{for all }x,x'\ge0
    \end{equation*}
    holds, where
    \begin{align*}
        L \lesssim {} & \log^2\varepsilon^{-1}, \quad 
        \|\bW\|_{\infty} \lesssim \log\varepsilon^{-1}, \quad
        S \lesssim \log^2\varepsilon^{-1}, \quad
        \log B \lesssim \log^2\varepsilon^{-1}. 
    \end{align*}
    Moreover, for all $x \geq \log(3/\varepsilon)$ it holds that $|\phi_{\exp}(x)| \leq \varepsilon$.
\end{lemma}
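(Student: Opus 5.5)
The plan is to build $\phi_{\exp}$ on a truncated interval and handle the tail separately. The function $e^{-x}$ on $[0,\infty)$ is $1$-Lipschitz and bounded by $1$, and it is at most $\varepsilon/3$ once $x\ge x_\star:=\log(3/\varepsilon)$. So it suffices to approximate $e^{-x}$ well on the compact interval $[0,x_\star]$. Then precompose with the exact ReLU clipping $x\mapsto \min\{x,x_\star\}=x-\ReLU(x-x_\star)$, and finally subtract or flatten a small constant so that the output is $\le\varepsilon$ beyond $x_\star$.

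\textbf{Core approximation on $[0,x_\star]$.} The first route I would try is to write $e^{-x}=(e^{-x/M})^{M}$ with $M=2^J\asymp x_\star$. On $[0,1]$, approximate $g(y)=e^{-y}$ by a piecewise-linear interpolant or a short Taylor polynomial, realized via \cref{lem:approx:mon}, to accuracy $\varepsilon^2$-ish. Then square $J=\Ord(\log x_\star)$ times using the multiplication network with inputs in $[0,1]$. Each squaring at most doubles the error plus adds the multiplication error, so after $J$ levels the error is $2^J\cdot\mathrm{poly}(\varepsilon)$. Choosing internal accuracy $\varepsilon/(3M)$ gives total error $\le\varepsilon/3$.

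Depth is $J\cdot\Ord(\log\varepsilon^{-1})$, which is $\lesssim\log^2\varepsilon^{-1}$. Width is $\Ord(\log\varepsilon^{-1})$ from the base approximation, sparsity is $\lesssim\log^2\varepsilon^{-1}$, and the weights are polynomial in $\varepsilon^{-1}$, so $\log B\lesssim\log^2\varepsilon^{-1}$. A simpler alternative I would keep in reserve, since it gives the same bounds, is a direct piecewise-linear interpolant of $e^{-x}$ on $[0,x_\star]$ with mesh $\sqrt{\varepsilon}$. That interpolant is exactly a one-hidden-layer ReLU network, but its width is $\varepsilon^{-1/2}$, which is too large. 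This is why the squaring scheme is used.

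\textbf{Perturbation statement.} Call the clipped network $\phi_{\exp}$. For $x,x'\ge0$, write
\[
|e^{-x'}-\phi_{\exp}(x)|\le |e^{-x'}-e^{-x}|+|e^{-x}-e^{-\min\{x,x_\star\}}|+|e^{-\min\{x,x_\star\}}-\phi_{\exp}(x)|.
\]
The first term is $\le|x-x'|$ by Lipschitzness. The second is $\le\varepsilon/3$, since both exponentials lie in $[0,\varepsilon/3]$ when $x>x_\star$. The third is $\le\varepsilon/3$ by the core approximation.

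For the tail claim, $\phi_{\exp}(x)=\phi_{\exp}(x_\star)$ when $x\ge x_\star$, and this value is within $\varepsilon/3$ of $e^{-x_\star}=\varepsilon/3$. So $|\phi_{\exp}(x)|\le 2\varepsilon/3\le\varepsilon$. To keep this valid for every $x\ge\log(3/\varepsilon)$ rather than only beyond the clip point, set the clipping threshold exactly at $\log(3/\varepsilon)$, as above.

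\textbf{Main obstacle.} The hard step is the error bookkeeping in the repeated squaring: errors amplify by the factor $2$ per level. The multiplication networks must be fed inputs in $[-1,1]$, so I would clip intermediate values to $[0,1]$ with exact ReLU layers after each squaring. Since clipping is 1-Lipschitz and fixes the true values, this keeps the amplification at $2^J=\Ord(\log\varepsilon^{-1})$, which is only a logarithmic loss absorbed into the internal accuracy. Alternatively, one can simply cite the construction of Lemma~F.12 in \citep{oko2023diffusion}, which this statement imports.
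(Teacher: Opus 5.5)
The paper does not prove this lemma; it imports Lemma~F.12 of \citep{oko2023diffusion} as a black box, so your argument is a self-contained reconstruction. Its outer layer is sound. The clipping $x\mapsto x-\ReLU(x-x_\star)$ with $x_\star=\log(3/\varepsilon)$ is correct. So is the three-term triangle inequality, which uses that $e^{-x}$ is $1$-Lipschitz on $[0,\infty)$ and that $e^{-x},e^{-x_\star}\in[0,\varepsilon/3]$ for $x>x_\star$. The tail bound $|\phi_{\exp}(x)|=|\psi(x_\star)|\le 2\varepsilon/3$ also holds. Finally, the squaring recursion $e_{j+1}\le 2e_j+\eta$, obtained from \cref{lem:approx:mon} with $C=1$, $d=2$, correctly gives total error at most $2^J(\delta+\eta)$.

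The gap is the size of the base approximation of $e^{-u}$ on $[0,1]$. You assert width $\Ord(\log\varepsilon^{-1})$ and sparsity $\lesssim\log^2\varepsilon^{-1}$, but neither realization you name delivers this.

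\emph{Piecewise-linear interpolant.} This hits exactly the obstruction you yourself noted on $[0,x_\star]$. The base accuracy must be about $\varepsilon/M$ with $M\asymp\log\varepsilon^{-1}$, so the mesh is about $(\varepsilon/M)^{1/2}$ and the width is polynomial in $\varepsilon^{-1}$. Rescaling by $M\asymp x_\star$ does not remove this.

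\emph{Taylor polynomial.} It needs degree $K\asymp\log\varepsilon^{-1}/\log\log\varepsilon^{-1}$. Building each monomial $u^k$ separately by \cref{lem:approx:mon} costs width $48k$ and sparsity $\asymp k\log\varepsilon^{-1}$. In total this gives width $\asymp K^2$ (in parallel) and sparsity $\asymp K^2\log\varepsilon^{-1}$, which exceed $\log\varepsilon^{-1}$ and $\log^2\varepsilon^{-1}$ respectively.

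The repair is to share computation. Evaluate the polynomial by Horner's rule, or iterate $u^{k+1}=u\cdot u^k$. This uses $K$ serial two-input products of bounded inputs (the Horner partial values are bounded by $e$), giving width $\Ord(1)$ and depth and sparsity $\Ord(K\log\varepsilon^{-1})\lesssim\log^2\varepsilon^{-1}$.

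A simpler alternative is to take $M=2^J\ge 3x_\star^2/\varepsilon$ and the exact affine base $1-u$. Its error $u^2/2\le x_\star^2/(2M^2)$ is amplified by at most $M$, to at most $\varepsilon/6$. Then $J\asymp\log\varepsilon^{-1}$ squarings at accuracy $\eta=\varepsilon/(6M)$ give $L\lesssim\log^2\varepsilon^{-1}$, $\|\bW\|_\infty=\Ord(1)$, $S\lesssim\log^2\varepsilon^{-1}$, and trivially $\log B\lesssim\log^2\varepsilon^{-1}$. With either repair your proof is complete.
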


\subsubsection{Auxiliary schedule and arithmetic approximation lemmas}
All schedule-dependent lemmas in this subsection are consequences of \cref{lem:approx:m-sigma} and the algebraic network lemmas.

%-----------------------------------------------------------
% Approximation of 1 / m_t
%-----------------------------------------------------------
\begin{lemma}[ReLU approximation of the inverse VP mean $1/m_t$]\label{lem:approx:m-inv}
    Let $\underline{m}:=\inf_{t\in[\tdown, \tup]}m_t > 0$.
    Fix $\varepsilon \in (0,1/2]$, there exist ReLU networks $\widetilde{\phi}_{1/m} \in \nn(L, \bW, S, B)$ with
    \begin{equation*}
        L, \, \log B \lesssim \log^2\varepsilon^{-1} + \log^2\underline{m}^{-1}, 
        \quad
        \|\bW\|_{\infty} \lesssim \log^3\varepsilon^{-1} + \log^3\underline{m}^{-1}, 
        \quad
        S \lesssim \log^4\varepsilon^{-1} + \log^4\underline{m}^{-1}
    \end{equation*}
    such that
    \begin{equation*}
        \|\widetilde{\phi}_{1/m}(t) - m_t^{-1}\|_{L^{\infty}([\tdown, \tup])} 
        \leq 
        \varepsilon. 
    \end{equation*}
\end{lemma}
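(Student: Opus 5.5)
The plan is to reduce $m_t^{-1}$ to a reciprocal of the already-approximated schedule $m_t$, using \cref{lem:approx:m-sigma} followed by \cref{lem:approx:rec}, with a rescaling to place the argument of the reciprocal inside the admissible interval $[\varepsilon_{\rm rec},\varepsilon_{\rm rec}^{-1}]$.

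\emph{Step 1: place $m_t$ in the reciprocal's domain.} On $[\tdown,\tup]$ we have $\underline m\le m_t\le1$. Choose a reciprocal accuracy $\varepsilon_{\rm rec}:=\min\{\underline m/2,\ \varepsilon/2,\ 1/2\}$. Then $m_t\in[\underline m,1]\subset[\varepsilon_{\rm rec},\varepsilon_{\rm rec}^{-1}]$ for every $t$, so \cref{lem:approx:rec} applies with $x=m_t$ and no rescaling is needed.

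\emph{Step 2: approximate $m_t$ and compose.} By \cref{lem:approx:m-sigma}, take $\phi_m$ with $|\phi_m(t)-m_t|\le\varepsilon_m$ and set $\varepsilon_m:=\varepsilon_{\rm rec}\varepsilon/2$. Define $\widetilde\phi_{1/m}:=\phi_{\rm rec}\circ\phi_m$. The error bound of \cref{lem:approx:rec} gives
\[
    |\widetilde\phi_{1/m}(t)-m_t^{-1}|\le\varepsilon_{\rm rec}+\frac{|\phi_m(t)-m_t|}{\varepsilon_{\rm rec}}\le\frac{\varepsilon}{2}+\frac{\varepsilon}{2}=\varepsilon .
\]
No clipping of the input to $\phi_{\rm rec}$ is needed, since that lemma allows arbitrary $x'\in\R$.

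\emph{Step 3: size bookkeeping.} Here $\log\varepsilon_{\rm rec}^{-1}\lesssim\log\varepsilon^{-1}+\log\underline m^{-1}$ and $\log\varepsilon_m^{-1}\lesssim\log\varepsilon^{-1}+\log\underline m^{-1}$. The composition adds depths and sparsities, takes the maximum of widths, and takes the maximum of the weight bounds. Substituting into the size bounds of \cref{lem:approx:m-sigma} and \cref{lem:approx:rec}, and using $(a+b)^k\lesssim a^k+b^k$, gives $L,\log B\lesssim\log^2\varepsilon^{-1}+\log^2\underline m^{-1}$, width $\lesssim\log^3(\cdot)$, and sparsity $\lesssim\log^4(\cdot)$, which are the claimed bounds.

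The only step requiring care is Step 1: the dependence on $\underline m$ must enter only through the reciprocal's domain, and hence only logarithmically. Beyond that the argument is routine bookkeeping.
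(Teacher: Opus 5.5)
Your proposal is correct and matches the paper's proof: both compose the reciprocal network of \cref{lem:approx:rec} with the schedule network of \cref{lem:approx:m-sigma}, and both choose a reciprocal accuracy at most $\underline m$ so that $m_t\in[\underline m,1]\subset[\varepsilon_{\rm rec},\varepsilon_{\rm rec}^{-1}]$. The paper takes $\varepsilon_{\rm rec}=\underline m\varepsilon/2$ and $\varepsilon_m=\underline m\varepsilon^2/4$ where you use a minimum, which makes no difference to the logarithmic size bookkeeping.
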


\begin{proof}
Choose an auxiliary accuracy
\begin{equation*}
    \varepsilon_m := \frac{\underline{m}\varepsilon^2}{4}.
\end{equation*}
By \cref{lem:approx:m-sigma} for $m_t$, there exists a ReLU network $\widetilde{\phi}_m \in \nn(L_1, \bW_1, S_1, B_1)$ with
\begin{equation*}
    L_1 = \Ord(\log^2\varepsilon_m^{-1}), 
    \quad
    \|\bW_1\|_{\infty} = \Ord(\log\varepsilon_m^{-1}), 
    \quad
    S_1 = \Ord(\log^2\varepsilon_m^{-1}), 
    \quad
    \log B_1 = \Ord(\log^2\varepsilon_m^{-1})
\end{equation*}
such that
\begin{equation*}
    \|\widetilde{\phi}_m - m_t\|_{L^\infty([\tdown, \tup])}
    \leq  
    \varepsilon_m.
\end{equation*}
Now apply the reciprocal lemma (\cref{lem:approx:rec}) with parameter
\begin{equation*}
    \varepsilon_{\rm rec} := \frac{\underline{m}\varepsilon}{2}.
\end{equation*}
Since $m_t \in [\underline{m}, 1] \subset [\varepsilon_{\rm rec}, \varepsilon_{\rm rec}^{-1}]$, the reciprocal network $\phi_{\rm rec}$ satisfies
\begin{equation*}
    \Bigl|
      \phi_{\rm rec}(\widetilde{\phi}_m(t)) - \frac1{m_t}
    \Bigr|
    \leq
    \varepsilon_{\rm rec}
    +
    \frac{|\widetilde{\phi}_m(t)-m_t|}{\varepsilon_{\rm rec}}
    \leq
    \frac{\underline{m}\varepsilon}{2}
    +
    \frac{\varepsilon_m}{\underline{m}\varepsilon/2}
    \leq
    \frac{\varepsilon}{2}+\frac{\varepsilon}{2}
    =
    \varepsilon.
\end{equation*}
Hence we may define
\begin{equation*}
    \widetilde{\phi}_{1/m}(t) := \phi_{\rm rec}(\widetilde{\phi}_m(t)).
\end{equation*}
and we have $\widetilde{\phi}_{1/m}(t) \in \nn(L, \bW, S, B)$ with
\begin{equation*}
    L, \, \log B \lesssim \log^2\varepsilon^{-1} + \log^2\underline{m}^{-1}, 
    \quad
    \|\bW\|_{\infty} \lesssim \log^3\varepsilon^{-1} + \log^3\underline{m}^{-1}, 
    \quad
    S \lesssim \log^4\varepsilon^{-1} + \log^4\underline{m}^{-1}. 
\end{equation*}
\end{proof}

%-----------------------------------------------------------
% Approximation of x / m_t
%-----------------------------------------------------------
\begin{lemma}[ReLU approximation of the rescaled input $\bx/m_t$]
\label{lem:approx:x-over-m}
    Let $M_x\ge 1$ and assume $\|\bx\|_\infty \leq M_x$.
    For every $\varepsilon\in(0,1/2]$ there exists a vector-valued ReLU network
    \begin{equation*}
        \widetilde{\phi}(\bx, t)
        =
        \bigl(
            \widetilde{\phi}_1(\bx, t), \dots, \widetilde{\phi}_D(\bx, t)
        \bigr)
        \in 
        \nn(L, \bW, S, B)
    \end{equation*}
    with
    \begin{align*}
        L, \, \log B \lesssim {} & \log^2\varepsilon^{-1} + \log^2(M_x \vee \underline{m}^{-1}), 
        \\
        \|\bW\|_{\infty} \lesssim {} & D\bigl(\log^3\varepsilon^{-1} + \log^3(M_x \vee \underline{m}^{-1})\bigr), 
        \\
        S \lesssim {} & D\bigl(\log^4\varepsilon^{-1} + \log^4(M_x \vee \underline{m}^{-1})\bigr), 
    \end{align*}
    such that
    \begin{equation*}
        \Bigl\|
          \widetilde{\phi}(\bx, t)-\frac{\bx}{m_t}
        \Bigr\|_{L^\infty(\{ \|\bx\|_\infty \leq M_x\}\times[\tdown, \tup])}
        \leq \varepsilon.
    \end{equation*}
\end{lemma}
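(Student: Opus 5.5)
The plan is to reduce everything to the scalar reciprocal network of \cref{lem:approx:x-over-m}'s predecessor, namely \cref{lem:approx:m-inv}, followed by coordinatewise multiplication with the input. I will first invoke \cref{lem:approx:m-inv} with an auxiliary accuracy $\varepsilon_1 := \varepsilon/(4M_x)$, which gives a scalar network $\widetilde\phi_{1/m}(t)$ satisfying $|\widetilde\phi_{1/m}(t)-m_t^{-1}|\le\varepsilon_1$ on $[\tdown,\tup]$. Its depth and log-weight are of order $\log^2\varepsilon_1^{-1}+\log^2\underline m^{-1}$, its width is of order $\log^3$ of the same quantities, and its sparsity is of order $\log^4$. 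Since $\log\varepsilon_1^{-1}\lesssim\log\varepsilon^{-1}+\log M_x$, all of these fit inside the displayed bounds in terms of $\log(M_x\vee\underline m^{-1})$.

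Next, for each coordinate $j=1,\dots,D$, I will apply \cref{lem:approx:mon} with two inputs $(x_j,\,m_t^{-1})$. The true inputs lie in $[-C,C]^2$ with $C:=M_x\vee\underline m^{-1}+1$, the multiplication accuracy is $\varepsilon/2$, and the input perturbation is $\varepsilon'=\varepsilon_1$, coming only from the reciprocal factor since $x_j$ is passed exactly through identity channels ($x=\rho(x)-\rho(-x)$). The lemma yields an error of at most $\varepsilon/2+2C\varepsilon_1$. To make this at most $\varepsilon$, I will actually choose $\varepsilon_1:=\varepsilon/(4C)$ rather than $\varepsilon/(4M_x)$. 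This still only costs $\log C$ in the size bounds. Each multiplier has depth of order $\log\varepsilon^{-1}+\log C$, width $96$, sparsity of the same logarithmic order, and $\log B\lesssim\log C$.

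Finally, I will assemble the network. The shared scalar network $\widetilde\phi_{1/m}$ is computed once, the $D$ identity channels for $\bx$ are carried alongside it (padding depths to match), and the $D$ multiplier networks are placed in parallel. The depth is the sum of the two depths, and the log-weight is the maximum. Width and sparsity pick up the factor $D$ from the parallel multipliers and identity channels, plus the reciprocal network's size, which is dominated by $D(\log^3\varepsilon^{-1}+\log^3 C)$ and $D(\log^4\varepsilon^{-1}+\log^4C)$ respectively.

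No step is a serious obstacle. The only bookkeeping point is that the perturbation term $dC^{d-1}\varepsilon'$ in \cref{lem:approx:mon} scales with $M_x$, which is why the reciprocal accuracy must be taken proportional to $\varepsilon/C$. This affects the sizes only logarithmically, so the displayed size bounds follow.
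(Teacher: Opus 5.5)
Your proposal is correct and follows the paper's proof: approximate $1/m_t$ with \cref{lem:approx:m-inv} at auxiliary accuracy $\varepsilon/(4(M_x\vee\underline{m}^{-1}))$ (up to your harmless $+1$ in $C$), then apply a two-input multiplication network from \cref{lem:approx:mon} to each coordinate with accuracy $\varepsilon/2$, and stack the $D$ coordinate networks in parallel. You also spell out two points the paper leaves implicit in ``parallel stacking'': the scalar reciprocal subnetwork is computed once and shared, and identity channels carry each $x_j$ across its depth, which costs only a lower-order $D\log^2$ term in sparsity.
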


\begin{proof}
Let $\widetilde{\phi}_{1/m}$ be the network from \cref{lem:approx:m-inv}.
Choose an auxiliary accuracy
\begin{equation*}
    \varepsilon_0
    :=
    \frac{\varepsilon}{4(1 \vee M_x \vee \underline{m}^{-1})}.
\end{equation*}

Refine $\widetilde{\phi}_{1/m}$ so that
\begin{equation*}
    \|\widetilde{\phi}_{1/m} - m_t^{-1}\|_{L^\infty([\tdown, \tup])}
    \leq \varepsilon_0.
\end{equation*}

For each coordinate $j=1,\dots,D$, apply the multiplication lemma (\cref{lem:approx:mon}) on $[-M_x,M_x] \times [-\underline{m}^{-1},\underline{m}^{-1}]$ to obtain a ReLU network $\phi_{{\rm mult},j} \in \nn(L_1, \bW_1, S_1, B_1)$ satisfying
\begin{align*}
    L_1 \lesssim {} & \log\varepsilon_0^{-1} + \log(M_x \vee \underline{m}^{-1}), 
    \\
    \|\bW_1\|_{\infty} = {} & 96, 
    \\
    S_1 \lesssim {} & \log\varepsilon_0^{-1} + \log(M_x \vee \underline{m}^{-1}), 
    \\
    \log B_1 = {} & 2\log(M_x \vee \underline{m}^{-1}), 
\end{align*}
such that
\begin{equation*}
    \Bigl|
      \phi_{{\rm mult},j}\bigl(x_j,\widetilde{\phi}_{1/m}(t)\bigr)
      -
      x_jm_t^{-1}
    \Bigr|
    \leq
    \frac{\varepsilon}{2}
    +
    2(1\vee M_x\vee \underline{m}^{-1})\,\varepsilon_0
    \leq
    \varepsilon.
\end{equation*}

Define
\begin{equation*}
    \widetilde{\phi}_j(\bx, t)
    :=
    \phi_{{\rm mult},j}\bigl(x_j, \widetilde{\phi}_{1/m}(t)\bigr),
    \qquad
    j=1, \dots, D.
\end{equation*}

Parallel stacking of the $D$ coordinate subnetworks gives the stated vector-valued approximant.
\end{proof}

\begin{lemma}[ReLU approximation of positive powers of $\sigma_t$]\label{lem:approx:sigma-k}
    For any $k \in \N_+$ and any $ \varepsilon \in (0, 1/2]$ such that $\varepsilon \leq \tdown < \tup < \infty$, there exists a ReLU network $\phi_{\sigma^k} \in \nn(L, \bW, S, B)$ with
    \begin{align*}
        &
        L \lesssim \log^2\varepsilon^{-1} + \log^2k, 
        \quad
        \|\bW\|_{\infty} \lesssim \log^3\varepsilon^{-1} + k, 
        \\ &
        S \lesssim {} \log^4\varepsilon^{-1} + \log^4k + k\log\varepsilon^{-1}, 
        \quad
        \log B \lesssim \log^2\varepsilon^{-1} + \log^2k, 
    \end{align*}
    such that
    \begin{align*}
        \bigl\|
          \phi_{\sigma^k}(t) - \sigma_t^k
        \bigr\|_{L^{\infty}([\tdown, \tup])}
        \leq {} &
        3\varepsilon. 
    \end{align*}
\end{lemma}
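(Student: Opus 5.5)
The plan is to build $\phi_{\sigma^k}$ by composing the schedule network for $\sigma_t$ from \cref{lem:approx:m-sigma} with a ReLU power network from \cref{lem:approx:mon}. Since $0<\sigma_t\le1$ on $[\tdown,\tup]$, the constant $C=1$ is admissible in the multiplication lemma, so no powers of a large range appear.

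\textbf{Step 1 (base approximant).} Apply \cref{lem:approx:m-sigma} with accuracy $\varepsilon_1:=\varepsilon/k$ to get $\phi_\sigma$ with $|\phi_\sigma(t)-\sigma_t|\le\varepsilon/k$ on $[\tdown,\tup]$. Its cost is $L,\log B\lesssim\log^2(k/\varepsilon)\lesssim\log^2\varepsilon^{-1}+\log^2k$, $\|\bW\|_\infty\lesssim\log^3(k/\varepsilon)$, and $S\lesssim\log^4(k/\varepsilon)$. If $\phi_\sigma$ overshoots $[0,1]$, I compose with the exact clip $u\mapsto\ReLU(u)-\ReLU(u-1)$. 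Clipping is $1$-Lipschitz and fixes $\sigma_t\in[0,1]$, so the error bound is preserved.

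\textbf{Step 2 (power).} For $k=1$ the claim is immediate. For $k\ge2$, apply \cref{lem:approx:mon} in the shared-input form $\phi_{\rm multi}(\cdot;\balpha)$ with $d=k$ copies of the same variable, $C=1$, intrinsic accuracy $\varepsilon$, and input perturbation $\varepsilon'=\varepsilon/k$. The lemma then gives
\[
    \bigl|\phi_{\rm multi}(\phi_\sigma(t),\dots,\phi_\sigma(t))-\sigma_t^k\bigr|
    \le
    \varepsilon+k\cdot1^{k-1}\cdot\frac{\varepsilon}{k}
    =2\varepsilon\le3\varepsilon .
\]
The slack up to $3\varepsilon$ absorbs any clipping or constant bookkeeping. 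The power network has width $48k$, depth $\lesssim\log k(\log\varepsilon^{-1}+k\log 1)=\log k\log\varepsilon^{-1}$, sparsity $\lesssim k\log\varepsilon^{-1}$, and weight bound $1$. I replicate $\phi_\sigma(t)$ into $k$ coordinates with one affine layer, which costs $O(k)$ width and sparsity.

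\textbf{Step 3 (size bookkeeping).} I compose the networks. Depths add, giving $\log^2(k/\varepsilon)+\log k\log\varepsilon^{-1}\lesssim\log^2\varepsilon^{-1}+\log^2k$ by AM--GM. The width is the maximum, $\log^3(k/\varepsilon)+48k$. Here $\log^3k\lesssim k$, so this is $\lesssim\log^3\varepsilon^{-1}+k$. Sparsities add, giving $\log^4\varepsilon^{-1}+\log^4k+k\log\varepsilon^{-1}$. The weight bound is the maximum, giving $\log B\lesssim\log^2\varepsilon^{-1}+\log^2k$.

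\textbf{Main obstacle.} The main thing to check is that the hypothesis $\varepsilon\le\tdown$ suffices for \cref{lem:approx:m-sigma} to deliver $\sigma_t$ uniformly down to $\tdown$ with only $\mathrm{polylog}(\varepsilon^{-1})$ cost. Near $t=0$ we have $\sigma_t\asymp\sqrt t$, so a lower cutoff is needed. The stated hypothesis is exactly the one that ties $\log\tdown^{-1}$ to $\log\varepsilon^{-1}$, so the imported lemma's constants stay polylogarithmic. If the imported bound instead requires $\varepsilon_1\le\tdown$, I would take $\varepsilon_1=\min\{\varepsilon/k,\tdown\}$, which only changes constants because $\log\tdown^{-1}\le\log\varepsilon^{-1}$.
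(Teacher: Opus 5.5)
Your proposal is correct and matches the paper's proof essentially step for step. Both approximate $\sigma_t$ to accuracy $\varepsilon/k$ with the schedule lemma, then apply the shared-input multiplication network with $d=k$ and $C=1$ to get error $\varepsilon+k\cdot\varepsilon/k=2\varepsilon$, and add the component sizes. Your extra $[0,1]$-clipping and fan-out layer are not needed, since the multiplication lemma already accepts perturbed inputs in $\R^d$ and shared inputs, but they are harmless.
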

\begin{proof}
    By~\cref{lem:approx:m-sigma}, applied with accuracy parameter $\varepsilon/k$, there exists a neural network $\phi_{\sigma}(t) \in \nn(L_{\sigma}, \bW_{\sigma}, S_{\sigma}, B_{\sigma})$ with
    \begin{equation}
    \begin{aligned}
        &
        L \lesssim \log^2\varepsilon^{-1} + \log^2k, 
        \quad
        \|\bW\|_{\infty} \lesssim \log^3\varepsilon^{-1} + \log^3k, 
        \\
        &
        S \lesssim \log^4\varepsilon^{-1} + \log^4k, 
        \quad
        \log B \lesssim \log^2\varepsilon^{-1} + \log^2k, 
    \end{aligned}
    \label{eq:approx:size:sigma-k:1}
    \end{equation}
    such that
    \begin{equation*}
        |\phi_{\sigma}(t) - \sigma_t| \leq \varepsilon/k, 
        \quad \text{for all } t \in [\tdown,\tup]. 
    \end{equation*}

    Notice that $\sigma_t \in [0, 1]$ for all $t \in [\tdown, \tup]$.
    If $k=1$, take $\phi_{\sigma^k}:=\phi_\sigma$, and the claim follows.
    Assume now that $k\ge2$.
    Apply \cref{lem:approx:mon} with $d=k$, $C=1$, and with all $k$ true inputs equal to $\sigma_t$ and all $k$ perturbed inputs equal to $\phi_\sigma(t)$.
    The shared-input form is allowed by the final sentence of \cref{lem:approx:mon}.
    Thus there exists a ReLU network $\phi_{\rm multi} \in \nn(L_1, \bW_1, S_1, B_1)$ with
    \begin{equation}
        L_1 \lesssim \log k\,\log\varepsilon^{-1}, 
        \quad
        \|\bW_1\|_{\infty} = 48k, 
        \quad
        S_1 \lesssim k\log\varepsilon^{-1}, 
        \quad
        B_1 = 1
        \label{eq:approx:size:sigma-k:2}
    \end{equation}
    such that
    \begin{equation}
        \Bigl|
          \phi_{\rm multi}\bigl(
            \underbrace{
              \phi_{\sigma}(t),\dots,\phi_{\sigma}(t)
            }_{k\ {\rm copies}}
          \bigr)
          -
          \sigma_t^k
        \Bigr|
        \leq 
        \varepsilon + k\frac{\varepsilon}{k}
        \leq
        2\varepsilon. 
        \label{eq:approx:bound:sigma-k:3}
    \end{equation}

    For all $t \in [\tdown, \tup]$, we define
    \begin{equation*}
        \phi_{\sigma^k}(t)
        \coloneqq
        \phi_{\rm multi}\bigl(
          \underbrace{
            \phi_{\sigma}(t),\dots,\phi_{\sigma}(t)
          }_{k\ {\rm copies}}
        \bigr). 
    \end{equation*}

    Then by the configurations given in \cref{eq:approx:size:sigma-k:1,eq:approx:size:sigma-k:2} and the bound in \cref{eq:approx:bound:sigma-k:3}, we obtain that there exists a ReLU network $\phi_{\sigma^k} \in \nn(L, \bW, S, B)$ with
    \begin{align*}
        &
        L \lesssim \log^2\varepsilon^{-1} + \log^2k, 
        \quad
        \|\bW\|_{\infty} \lesssim \log^3\varepsilon^{-1} + k, 
        \\
        &
        S \lesssim {} \log^4\varepsilon^{-1} + \log^4k + k\log\varepsilon^{-1}, 
        \quad
        \log B \lesssim \log\varepsilon^{-1} + \log k, 
    \end{align*}
    such that
    \begin{align*}
        \bigl\|
          \phi_{\sigma^k}(t) - \sigma_t^k
        \bigr\|_{L^{\infty}([\tdown, \tup])}
        \leq {} &
        3\varepsilon. 
    \end{align*}
\end{proof} 

\begin{lemma}[ReLU approximation of inverse powers of $\sigma_t$]\label{lem:approx:sigma-k-inv}
    For any $k \in \N_+$ and any $ \varepsilon \in (0, 1]$, there exists a ReLU network $\phi_{\sigma^{-k}} \in \nn(L, \bW, S, B)$ with
    \begin{align*}
        L \lesssim {} & \log^2\varepsilon^{-1} + k^2\log^2\sigma_{\tdown}^{-2}, 
        \\
        \|\bW\|_{\infty} \lesssim {} & \log^3\varepsilon^{-1} + k^3\log^3\sigma_{\tdown}^{-2}, 
        \\
        S \lesssim {} & \log^4\varepsilon^{-1} + k^4\log^4\sigma_{\tdown}^{-2}
        + k\log(e/\varepsilon)+k^2\log(e\sigma_{\tdown}^{-2}), 
        \\
        \log B \lesssim {} & \log^2\varepsilon^{-1} + k\log^2\sigma_{\tdown}^{-2}, 
    \end{align*}
    such that
    \begin{align*}
        \bigl\|
          \phi_{\sigma^{-k}}(t) - \sigma_t^{-k}
        \bigr\|_{L^{\infty}([\tdown, \tup])}
        \leq {} &
        4\varepsilon. 
    \end{align*}
\end{lemma}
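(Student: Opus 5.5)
The plan is to mirror the proof of \cref{lem:approx:m-inv}: first build a network for $\sigma_t^k$, then compose it with the reciprocal network of \cref{lem:approx:rec}. On $[\tdown,\tup]$ we have $\sigma_t\in[\sigma_{\tdown},1]$, hence $\sigma_t^k\in[\sigma_{\tdown}^k,1]$. The reciprocal therefore only needs to be resolved on an interval whose left endpoint is polynomially small in $\sigma_{\tdown}$. This is the origin of the $k\log\sigma_{\tdown}^{-2}$ terms in the size bounds.

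\textbf{Step 1: positive power.} Approximate $\sigma_t$ by \cref{lem:approx:m-sigma} with accuracy $\varepsilon_\sigma/k$. Then apply \cref{lem:approx:mon} with $d=k$, $C=1$ and $k$ shared copies of the input, exactly as in \cref{lem:approx:sigma-k}. This gives $\phi_{\sigma^k}$ with $|\phi_{\sigma^k}(t)-\sigma_t^k|\le 3\varepsilon_\sigma$.

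I prefer this direct construction to invoking \cref{lem:approx:sigma-k} verbatim. That lemma carries the side condition $\varepsilon\le\tdown$, which we do not want here, and the direct construction avoids it because \cref{lem:approx:m-sigma} has no such restriction. Its cost is polylogarithmic in $\varepsilon_\sigma^{-1}$, plus a $k\log\varepsilon_\sigma^{-1}$ sparsity term from the multiplication network.

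\textbf{Step 2: reciprocal.} Set $\varepsilon_{\rm rec}:=\varepsilon\sigma_{\tdown}^{k}/2\le\sigma_{\tdown}^k$. Then $\sigma_t^k\in[\varepsilon_{\rm rec},\varepsilon_{\rm rec}^{-1}]$, so \cref{lem:approx:rec} applies and gives
\[
    |\phi_{\rm rec}(\phi_{\sigma^k}(t))-\sigma_t^{-k}|
    \le\varepsilon_{\rm rec}+\frac{3\varepsilon_\sigma}{\varepsilon_{\rm rec}} .
\]
Choose $\varepsilon_\sigma:=\varepsilon\,\varepsilon_{\rm rec}$, capped at $1/2$. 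The right side is then at most $\varepsilon/2+3\varepsilon\le 4\varepsilon$, which is the claimed accuracy. Define $\phi_{\sigma^{-k}}:=\phi_{\rm rec}\circ\phi_{\sigma^k}$.

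\textbf{Step 3: size bookkeeping.} With these choices,
\[
    \log\varepsilon_{\rm rec}^{-1}\lesssim\log\varepsilon^{-1}+k\log\sigma_{\tdown}^{-2},
    \qquad
    \log\varepsilon_\sigma^{-1}\lesssim\log\varepsilon^{-1}+k\log\sigma_{\tdown}^{-2}.
\]
The reciprocal network contributes $\log^2$, $\log^3$, $\log^4$ and $\log$ of $\varepsilon_{\rm rec}^{-1}$ to depth, width, sparsity and log-weight respectively. The schedule network contributes the analogous powers of $\log(k/\varepsilon_\sigma)$. The multiplication network contributes depth $\log k\cdot\log\varepsilon_\sigma^{-1}$, width $48k$ and sparsity $k\log\varepsilon_\sigma^{-1}$.

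Using $(a+b)^p\lesssim a^p+b^p$ converts these into terms $\log^p\varepsilon^{-1}+k^p\log^p\sigma_{\tdown}^{-2}$, which matches the stated $L$, $\|\bW\|_\infty$ and $S$. The additional $\log k$ and width-$k$ contributions are dominated by the $k^p\log^p$ terms once $\sigma_{\tdown}$ is bounded away from $1$; if it is not, one absorbs them into constants. The linear sparsity term $k\log\varepsilon_\sigma^{-1}\lesssim k\log(e/\varepsilon)+k^2\log(e\sigma_{\tdown}^{-2})$ accounts for the extra summand in the bound on $S$.

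\textbf{Main obstacle.} The delicate part is not the approximation but reconciling this accounting with the precise displayed exponents. In particular, the stated bound $\log B\lesssim\log^2\varepsilon^{-1}+k\log^2\sigma_{\tdown}^{-2}$ has only a linear factor of $k$, whereas \cref{lem:approx:m-sigma} at accuracy $\varepsilon_\sigma$ gives $\log^2\varepsilon_\sigma^{-1}\sim k^2\log^2\sigma_{\tdown}^{-2}$.

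To reach the stated bound I would avoid feeding the tiny accuracy $\varepsilon_\sigma$ into \cref{lem:approx:m-sigma}. Instead I would exploit relative accuracy: approximate $\sigma_t$ to relative error $\varepsilon/(4k)$, which costs only additive accuracy $\varepsilon\sigma_{\tdown}/(4k)$. Then $\phi_\sigma^k$ has relative error at most $\varepsilon/2$ against $\sigma_t^k$, and the reciprocal only needs relative accuracy. If this refinement fails to match the display exactly, I would accept the slightly larger power of $k$, since it is harmless downstream: $k\le\floor{\beta}+2$ is fixed throughout.
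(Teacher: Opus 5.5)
Your Steps 1--3 are essentially the paper's proof. The paper calls \cref{lem:approx:sigma-k} at accuracy $\min\{\sigma_{\tdown}^k\varepsilon^2/12,\tdown/2\}$ and composes with \cref{lem:approx:rec} at $\varepsilon_{\rm rec}=\sigma_{\tdown}^k\varepsilon/2$. Up to constants this is your choice $\varepsilon_\sigma\asymp\varepsilon^2\sigma_{\tdown}^k$. Building $\sigma_t^k$ directly from \cref{lem:approx:m-sigma,lem:approx:mon} (which, as stated in the paper, carries no side condition) only removes the $\tdown/2$ cap and the $\log\tdown^{-1}$ that the paper absorbs into constants.

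Your diagnosis of $\log B$ is correct, and it applies to the paper's proof as well. Its own intermediate bound is $\log B_1\lesssim\log^2\varepsilon^{-1}+k^2\log^2\sigma_{\tdown}^{-2}+\log^2k$. Composition takes the maximum of the weight bounds, so the argument actually yields $k^2\log^2\sigma_{\tdown}^{-2}$, not the displayed $k\log^2\sigma_{\tdown}^{-2}$.

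Drop the relative-accuracy refinement, because it cannot work.
\begin{itemize}
\item The target is an \emph{additive} error $4\varepsilon$ on $\sigma_t^{-k}$, and at $t=\tdown$ this quantity has size $\sigma_{\tdown}^{-k}$.
\item A relative error $\varepsilon/2$ on $\sigma_t^k$ therefore only gives an additive error of order $\varepsilon\sigma_{\tdown}^{-k}$ on the output.
\item The map $\sigma\mapsto\sigma^{-k}$ has slope $k\sigma_{\tdown}^{-k-1}$ at $\sigma_{\tdown}$. So any route through an additive approximation of $\sigma_t$ needs accuracy of order $\varepsilon\sigma_{\tdown}^{k+1}/k$ there.
\item \cref{lem:approx:m-sigma} is then called with $\log(\text{accuracy}^{-1})\gtrsim k\log\sigma_{\tdown}^{-1}$. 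Its weight bound $\exp(\Ord(\log^2(\cdot)))$ is again of order $k^2\log^2\sigma_{\tdown}^{-1}$.
\item The black-box guarantees of \cref{lem:approx:mon,lem:approx:rec} are additive ($\varepsilon+kC^{k-1}\varepsilon'$ and $\varepsilon_{\rm rec}+|x'-x|/\varepsilon_{\rm rec}$). They would not transmit a relative guarantee in any case.
\end{itemize}
Your fallback, $\log B\lesssim\log^2\varepsilon^{-1}+k^2\log^2\sigma_{\tdown}^{-2}$, is the right resolution. As you note, it is harmless downstream because $k\le\floor{\beta}+2$ is fixed.
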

\begin{proof}
    Set
    \begin{equation*}
        \varepsilon_{\sigma,k}
        :=
        \min\Bigl\{
          \frac{\sigma_{\tdown}^k\varepsilon^2}{12},
          \frac{\tdown}{2}
        \Bigr\}.
    \end{equation*}
    Apply~\cref{lem:approx:sigma-k} with accuracy parameter $\varepsilon_{\sigma,k}$.
    By construction $\varepsilon_{\sigma,k}\le \tdown$, so this parameter lies in the admissible range of \cref{lem:approx:sigma-k}.
    Moreover,
    \begin{equation*}
        \log(\varepsilon_{\sigma,k}^{-1})
        \le
        C\bigl(
          \log\varepsilon^{-1}
          +
          k\log\sigma_{\tdown}^{-1}
          +
          1
        \bigr),
    \end{equation*}
    where the fixed compact time interval contributes only to the constant through $\log\tdown^{-1}$.
    Then, there exists a ReLU network $\phi_{\sigma^k} \in \nn(L_1, \bW_1, S_1, B_1)$ with
    \begin{align*}
        L_1 &\lesssim \log^2\varepsilon^{-1} + k^2\log^2\sigma_{\tdown}^{-2}+\log^2k, 
        \\
        \|\bW_1\|_{\infty} &\lesssim \log^3\varepsilon^{-1} + k^3\log^3\sigma_{\tdown}^{-2}+k, 
        \\
        S_1 &\lesssim \log^4\varepsilon^{-1} + k^4\log^4\sigma_{\tdown}^{-2}
        +\log^4k+k\log(e/\varepsilon)+k^2\log(e\sigma_{\tdown}^{-2}), 
        \\
        \log B_1 &\lesssim \log^2\varepsilon^{-1} + k^2\log^2\sigma_{\tdown}^{-2}+\log^2k, 
    \end{align*}
    such that
    \begin{align*}
        \bigl\|
          \phi_{\sigma^k}(t) - \sigma_t^k
        \bigr\|_{L^{\infty}([\tdown, \tup])}
        \leq {} &
        3\varepsilon_{\sigma,k}
        \le
        \frac{\sigma_{\tdown}^k\varepsilon^2}{4}. 
    \end{align*}
 
    Note that for any $t \in [\tdown, \tup]$, we have $\sigma_{\tdown}^k \lesssim \sigma_t^k \leq 1$.
    
    Let $\varepsilon_{\rm rec}:=\sigma_{\tdown}^k\varepsilon/2$ and let $\phi_{\rm rec}$ be the network in \cref{lem:approx:rec} with parameter $\varepsilon_{\rm rec}$.
    Since $\varepsilon_{\rm rec}\le\sigma_{\tdown}^k\le\sigma_t^k$ and $\sigma_t^k\le1\le \varepsilon_{\rm rec}^{-1}$, the domain condition in \cref{lem:approx:rec} holds for every $t\in[\tdown,\tup]$.
    Then, we have $\phi_{\rm rec} \in \nn(L_{\rm rec}, \bW_{\rm rec}, S_{\rm rec}, B_{\rm rec})$ with
    \begin{align*}
        L_{\rm rec} \lesssim {} & \log^2\varepsilon^{-1} + k^2\log^2\sigma_{\tdown}^{-2}, 
        \\
        \|\bW_{\rm rec}\|_{\infty} \lesssim {} & \log^3\varepsilon^{-1} + k^3\log^3\sigma_{\tdown}^{-2}, 
        \\
        S_{\rm rec} \lesssim {} & \log^4\varepsilon^{-1} + k^4\log^4\sigma_{\tdown}^{-2}, 
        \\
        \log B_{\rm rec} \lesssim {} & \log\varepsilon^{-1} + k\log\sigma_{\tdown}^{-2}, 
    \end{align*}
    such that for all $t \in [\tdown, \tup]$, it holds that
    \begin{equation*}
        \Bigl|
          \phi_{\rm rec}(\phi_{\sigma^k}(t))
          -
          \frac{1}{\sigma_t^k}
        \Bigr| 
        \leq 
        \varepsilon_{\rm rec}
        +
        \frac{|\phi_{\sigma^k}(t) - \sigma_t^k|}{\varepsilon_{\rm rec}}
        \leq 
        \frac{\sigma_{\tdown}^k\varepsilon}{2}
        +
        \frac{\sigma_{\tdown}^k\varepsilon^2/4}{\sigma_{\tdown}^k\varepsilon/2}
        \leq
        \varepsilon.
    \end{equation*}
    Let $\phi_{\sigma^{-k}}(t) \coloneqq \phi_{\rm rec}(\phi_{\sigma^k}(t))$.
    Adding the displayed size bounds for $\phi_{\sigma^k}$ and $\phi_{\rm rec}$, and absorbing the lower-order $\log k$ terms into the stated $k$-dependent bounds, gives the claimed network size.
\end{proof}

\subsection{Support lemmas for generalization analysis}

\begin{lemma}[Unconditional Fisher bound for the smoothed data law]
\label{thrm:est:true-score-norm}
For the VP process~\cref{eq:vp:forward}, for each $t>0$,
\begin{equation*}
    \E_{\bX_t}
    \bigl[
      \|\nabla\log p_t(\bX_t)\|_2^2
    \bigr]
    \le
    D\sigma_t^{-2}.
\end{equation*}
\end{lemma}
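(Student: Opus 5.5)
The plan is to use Tweedie's formula, as in the proof of \cref{lem:bound:off-manifold:fisher-info}, and then Jensen's inequality. Write $\bX_t=m_t\bX_0+\sigma_t\bZ$ with $\bZ\sim\gN(\bzero,\bI_D)$ independent of $\bX_0$. For $t>0$ the law of $\bX_t$ is a Gaussian convolution, so $p_t$ is smooth and strictly positive on $\R^D$. We may therefore differentiate under the integral sign. This is justified because the Gaussian kernel and its gradient are dominated uniformly in $\bx$ on compacts, and $P_0$ is a probability measure on the compact set $\gM\subset[0,1]^D$. Doing so gives
\[
    \nabla\log p_t(\bx)
    =
    \frac{1}{\sigma_t^2}\E[m_t\bX_0-\bX_t\mid \bX_t=\bx]
    =
    -\frac{1}{\sigma_t}\E[\bZ\mid\bX_t=\bx].
\]
No manifold structure is needed here: the identity holds for any initial law.

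Next, squaring and taking the outer expectation, apply the conditional Jensen inequality $\|\E[\bZ\mid\bX_t]\|_2^2\le\E[\|\bZ\|_2^2\mid\bX_t]$. The tower property then gives
\[
    \E_{\bX_t}\|\nabla\log p_t(\bX_t)\|_2^2
    \le
    \sigma_t^{-2}\E\|\bZ\|_2^2
    =
    D\sigma_t^{-2}.
\]
This is exactly the claimed bound.

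The only point needing care is the first step, namely justifying the Tweedie identity. Concretely, one must check that $p_t(\bx)=\int(2\pi\sigma_t^2)^{-D/2}e^{-\|\bx-m_t\by\|_2^2/(2\sigma_t^2)}\od P_0(\by)$ may be differentiated inside the integral. It may, because $\by$ ranges over a bounded set, so the derivative of the integrand is bounded by an integrable envelope locally uniformly in $\bx$. One must also check that the ratio $\nabla p_t/p_t$ is the stated conditional expectation, which follows from Bayes' formula for the joint law of $(\bX_0,\bX_t)$. Everything after that is immediate.
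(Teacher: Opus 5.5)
Your proposal is correct and follows the same route as the paper's proof: Tweedie's formula $\nabla\log p_t(\bX_t)=-\sigma_t^{-1}\E[\bZ\mid\bX_t]$, then conditional Jensen and the tower property, giving $\sigma_t^{-2}\E\|\bZ\|_2^2=D\sigma_t^{-2}$. Your justification for differentiating under the integral sign is a harmless extra detail that the paper leaves implicit.
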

\begin{proof}
Write $\bX_t=m_t\bX_0+\sigma_t\bZ$, where
$\bZ\sim\gN(0,\bI_D)$ is independent of $\bX_0$.  Tweedie's formula gives
\[
    \nabla\log p_t(\by)
    =
    \E\left[-\frac{\bZ}{\sigma_t}\,\middle|\,\bX_t=\by\right].
\]
Therefore Jensen's inequality and the tower property imply
\[
    \E_{\bX_t}\|\nabla\log p_t(\bX_t)\|_2^2
    \le
    \E\left\|\frac{\bZ}{\sigma_t}\right\|_2^2
    =
    D\sigma_t^{-2}.
\]
This is an unconditional Fisher-information bound under the marginal law of
$\bX_t$; no conditional bound given $\bX_0$ is used.
\end{proof}

\begin{lemma}[A variant lemma of {\citep[Lemma~25]{fu2025approximation}}]\label{thrm:est:variance-bound}
    Fix any $0 < \tdown < \tup < \infty$. 
    Let $s: \R^D \times [\tdown, \tup] \to \R^D$ and $s': \R^D \times [\tdown, \tup] \to \R^D$ be any Borel functions. % such that
    % \begin{equation*}
    %     \|s(\cdot, t)\|_{L^\infty(\R^D)} \lesssim \sigma_t^{-1}\sqrt{\log n}
    %     \quad\text{ and }\quad
    %     \|s'(\cdot, t)\|_{L^\infty(\R^D)} \lesssim \sigma_t^{-1}\sqrt{\log n}.
    % \end{equation*}
    Then, for all $\bx \in \R^D$, it holds that
    \begin{align*}
        &
        \bigl(
          \ell_{\tdown,\tup}(s, \bx) - \ell_{\tdown,\tup}(s', \bx)
        \bigr)^2 
        \\
        &\leq
        3\Bigl(
          \int_{\tdown}^{\tup}
            \E_{\bX_t|\bX_0=\bx}
            \bigl[
              \|s(\bX_t, t)\|_2^2 + \|s'(\bX_t, t)\|_2^2 
            \bigr]
          \odt 
          +
          4D
          \int_{\tdown}^{\tup}\sigma_t^{-2}\odt
          % \Bigl(
          %   \tup - \tdown + \log\frac{\sigma_{\tup}}{\sigma_{\tdown}}
          % \Bigr)
        \Bigr)
        \int_{\tdown}^{\tup}
          \E_{\bX_t|\bX_0=\bx}
          \Bigl[
            \bigl\|
              s(\bX_t, t)  
              -
              s'(\bX_t, t)
            \bigr\|_2^2
          \Bigr]
        \odt. 
    \end{align*}
\end{lemma}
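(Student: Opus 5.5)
Write $\bX_t=m_t\bx+\sigma_t\bZ$ given $\bX_0=\bx$, so the conditional score target is $\nabla\log p_t(\bX_t|\bx)=-\bZ/\sigma_t$. The plan is to expand the difference of the two denoising losses algebraically and then apply Cauchy--Schwarz once. Using $\|a-c\|^2-\|b-c\|^2=\langle a-b,\,a+b-2c\rangle$ with $a=s(\bX_t,t)$, $b=s'(\bX_t,t)$, $c=-\bZ/\sigma_t$, we get
\[
    \ell_{\tdown,\tup}(s,\bx)-\ell_{\tdown,\tup}(s',\bx)
    =
    \int_{\tdown}^{\tup}
      \E_{\bX_t|\bX_0=\bx}\bigl[
        \langle s-s',\, s+s'+2\bZ/\sigma_t\rangle
      \bigr]\odt ,
\]
where $s,s'$ are evaluated at $(\bX_t,t)$. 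No regularity of $s,s'$ beyond Borel measurability is needed. If the right-hand side of the claim is infinite there is nothing to prove; otherwise every integral below is finite.

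Next, apply Cauchy--Schwarz jointly in $(t,\bZ)$, i.e. in $L^2(\odt\otimes\gN(\bzero,\bI_D))$. This bounds the squared difference by
\[
    \Bigl(\int_{\tdown}^{\tup}\E\|s+s'+2\bZ/\sigma_t\|_2^2\odt\Bigr)
    \Bigl(\int_{\tdown}^{\tup}\E\|s-s'\|_2^2\odt\Bigr).
\]
The second factor is exactly the last factor in the claim. For the first factor, use $\|a+b+c\|^2\le3(\|a\|^2+\|b\|^2+\|c\|^2)$ and $\E\|2\bZ/\sigma_t\|_2^2=4D\sigma_t^{-2}$. This gives
\[
    3\Bigl(\int\E[\|s\|_2^2+\|s'\|_2^2]\odt+4D\int_{\tdown}^{\tup}\sigma_t^{-2}\odt\Bigr),
\]
which is the stated prefactor.

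The only step needing care is justifying the algebraic expansion when the individual losses might be infinite. The expansion is applied to the difference directly, and finiteness follows from the finite right-hand side via the same Cauchy--Schwarz bound, so nothing beyond routine bookkeeping is involved.
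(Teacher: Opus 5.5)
Your proof is correct and follows the paper's argument essentially step for step: the identity $\|a-c\|_2^2-\|b-c\|_2^2=\langle a-b,\,a+b-2c\rangle$ with $c=-\bZ/\sigma_t$, Cauchy--Schwarz over $\odt\otimes\gN(\bzero,\bI_D)$, the bound $\|a+b+c\|_2^2\le 3(\|a\|_2^2+\|b\|_2^2+\|c\|_2^2)$, and $\E\|\bZ\|_2^2=D$. Your remark on finiteness of the individual losses is a small piece of bookkeeping that the paper leaves implicit; it never matters in the paper's applications because the scores there are clipped.
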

\begin{proof}
    \begin{align*}
        {} & 
        \ell_{\tdown,\tup}(s, \bx) - \ell_{\tdown,\tup}(s', \bx)
        \\
        = {} &
        \int_{\tdown}^{\tup}
          \E_{\bX_t|\bX_0=\bx}\Bigl[
            \Bigl\|
              s(\bX_t, t) + \frac{\bX_t - m_t\bx}{\sigma_t^2}
            \Bigr\|_2^2
            -
            \Bigl\|
              s'(\bX_t, t) + \frac{\bX_t - m_t\bx}{\sigma_t^2}
            \Bigr\|_2^2
          \Bigr]
        \odt
        \\
        = {} &
        \int_{\tdown}^{\tup}
          \E_{\bX_t|\bX_0=\bx}\Bigl[
            \|s(\bX_t, t)\|_2^2  
            +
            2\Bigl\langle 
              s(\bX_t, t), \frac{\bX_t - m_t\bx}{\sigma_t^2} 
            \Bigr\rangle
            -
            \|s'(\bX_t, t)\|_2^2
            -
            2\Bigl\langle 
              s'(\bX_t, t), \frac{\bX_t - m_t\bx}{\sigma_t^2} 
            \Bigr\rangle
          \Bigr]
        \odt
        \\
        = {} &
        \int_{\tdown}^{\tup}
          \E_{\bX_t|\bX_0=\bx}\Bigl[
            \bigl(
              s(\bX_t, t) - s'(\bX_t, t)
            \bigr)^\top 
            \bigl(
              s(\bX_t, t) + s'(\bX_t, t)
            \bigr)
            +
            \bigl(
              s(\bX_t, t) - s'(\bX_t, t)
            \bigr)^\top  
            \frac{2(\bX_t - m_t\bx)}{\sigma_t^2} 
          \Bigr]
        \odt
        \\
        = {} &
        \int_{\tdown}^{\tup}
          \E_{\bX_t|\bX_0=\bx}\Bigl[
            \bigl(
              s(\bX_t, t)  
              -
              s'(\bX_t, t)
            \bigr)^\top
            \Bigl(
              s(\bX_t, t) 
              + 
              s'(\bX_t, t)
              +
              \frac{2(\bX_t - m_t\bx)}{\sigma_t^2} 
            \Bigr)
          \Bigr]
        \odt. 
    \end{align*}

    Applying the Cauchy-Schwarz inequality and Young's inequality, we obtain that
    \begin{align*}
        {} &
        |\ell_{\tdown,\tup}(s, \bx) - \ell_{\tdown,\tup}(s', \bx)|
        \\
        \leq {} &
        \int_{\tdown}^{\tup}
          \E_{\bX_t|\bX_0=\bx}
          \Bigl[
            \bigl\|
              s(\bX_t, t)  
              -
              s'(\bX_t, t)
            \bigr\|
            \cdot
            \Bigl\|
              s(\bX_t, t)  
              +
              s'(\bX_t, t)
              +
              \frac{2(\bX_t - m_t\bx)}{\sigma_t^2} 
            \Bigr\|
          \Bigr]
        \odt
        \tag{by Cauchy-Schwarz inequality} \\
        \leq {} &
        \sqrt{
        \int_{\tdown}^{\tup}
          \E_{\bX_t|\bX_0=\bx}
          \Bigl[
            \bigl\|
              s(\bX_t, t)  
              -
              s'(\bX_t, t)
            \bigr\|_2^2
          \Bigr]
        \odt
        }
        \cdot
        \sqrt{
        \int_{\tdown}^{\tup}
          \E_{\bX_t|\bX_0=\bx}
          \Bigl[
            \Bigl\|
              s(\bX_t, t)  
              +
              s'(\bX_t, t)
              +
              \frac{2(\bX_t - m_t\bx)}{\sigma_t^2} 
            \Bigr\|_2^2
          \Bigr]
        \odt
        }. 
        \tag{by Young's inequality}
    \end{align*}

    For the VP process, $\bX_t=m_t\bx+\sigma_t\bZ$ under
    $\bX_t|\bX_0=\bx$, with $\bZ\sim\gN(0,\bI_D)$.  Hence
    \begin{equation*}
        \E_{\bX_t|\bX_0=\bx}
        \Bigl[
          \Bigl\|
            \frac{\bX_t - m_t\bx}{\sigma_t^2}
          \Bigr\|_2^2
        \Bigr]
        =
        \frac{D}{\sigma_t^2}. 
    \end{equation*}

    Then it holds that 
    \begin{align}
        {} &
        \sqrt{
        \int_{\tdown}^{\tup}
          \E_{\bX_t|\bX_0=\bx}
          \Bigl[
            \Bigl\|
              s(\bX_t, t)  
              +
              s'(\bX_t, t)
              +
              \frac{2(\bX_t - m_t\bx)}{\sigma_t^2} 
            \Bigr\|_2^2
          \Bigr]
        \odt
        }
        \notag \\
        \leq {} &
        \sqrt{
        3\int_{\tdown}^{\tup}
          \Bigl(
            \E_{\bX_t|\bX_0=\bx}
            \bigl[
              \|s(\bX_t, t)\|_2^2
              + 
              \|s'(\bX_t, t)\|_2^2
            \bigr]
            +
            4\E_{\bX_t|\bX_0=\bx}
            \Bigl[
              \Bigl\|
                \frac{\bX_t - m_t\bx}{\sigma_t^2}
              \Bigr\|_2^2
            \Bigr]
          \Bigr)
        \odt 
        }
        \notag \\
        \leq {} &
        \sqrt{
        3\int_{\tdown}^{\tup}
          \Bigl(
            \E_{\bX_t|\bX_0=\bx}
            \bigl[
              \|s(\bX_t, t)\|_2^2 + \|s'(\bX_t, t)\|_2^2 
            \bigr]
            +
            4D\sigma_t^{-2}
          \Bigr) 
        \odt}. 
    \end{align}

    % Notice that 
    % \begin{align}
    %     \int_{\tdown}^{\tup}\sigma_t^{-2}\odt 
    %     = 
    %     \int_{\tdown}^{\tup}
    %       \Bigl(
    %         1 + \frac{\exp(-2\int_0^t\alpha_{\tau}\od\tau)}{1-\exp(-2\int_0^t\alpha_{\tau}\od\tau)}
    %       \Bigr)
    %     \odt 
    %     = 
    %     \tup - \tdown
    %     +
    %     \log\frac{\sigma_{\tup}}{\sigma_{\tdown}} 
    %     \label{eq:est:bound:int-sigma-2}
    % \end{align}

    Hence we obtain
    \begin{align}
        {} &
        \sqrt{
        \int_{\tdown}^{\tup}
          \E_{\bX_t|\bX_0=\bx}
          \Bigl[
            \Bigl\|
              s(\bX_t, t)  
              +
              s'(\bX_t, t)
              +
              \frac{2(\bX_t - m_t\bx)}{\sigma_t^2} 
            \Bigr\|_2^2
          \Bigr]
        \odt}
        \notag \\
        \leq {} &
        \sqrt{
        3\int_{\tdown}^{\tup}
          \E_{\bX_t|\bX_0=\bx}
          \bigl[
            \|s(\bX_t, t)\|_2^2 + \|s'(\bX_t, t)\|_2^2 
          \bigr]
        \odt 
        +
        12D
        \int_{\tdown}^{\tup}
          \sigma_t^{-2}
        \odt
        }. 
        % \notag \\
        % % 
        % = {} &
        % \sqrt{
        % 3\int_{\tdown}^{\tup}
        %   \E_{\bX_t|\bX_0=\bx}
        %   \bigl[
        %     \|s(\bX_t, t)\|_2^2 + \|s'(\bX_t, t)\|_2^2 
        %   \bigr]
        % \odt 
        % +
        % 12D
        % \Bigl(
        %   \tup - \tdown + \log\frac{\sigma_{\tup}}{\sigma_{\tdown}}
        % \Bigr)
        % }. 
        \label{eq:est:range:score-match}
    \end{align}

    Thus, again by Cauchy-Schwarz inequality, we have
    \begin{align*}
        {} &
        \bigl(
          \ell_{\tdown,\tup}(s, \bx) - \ell_{\tdown,\tup}(s', \bx)
        \bigr)^2
        \\
        \leq {} &
        \int_{\tdown}^{\tup}
          \E_{\bX_t|\bX_0=\bx}
          \Bigl[
            \bigl\|
              s(\bX_t, t)  
              -
              s'(\bX_t, t)
            \bigr\|_2^2
          \Bigr]
        \odt
        \cdot
        \int_{\tdown}^{\tup}
          \E_{\bX_t|\bX_0=\bx}
          \Bigl[
            \Bigl\|
              s(\bX_t, t)  
              +
              s'(\bX_t, t)
              +
              \frac{2(\bX_t - m_t\bx)}{\sigma_t^2} 
            \Bigr\|_2^2
          \Bigr]
        \odt
        \\
        \leq {} &
        3\Bigl(
          \int_{\tdown}^{\tup}
            \E_{\bX_t|\bX_0=\bx}
            \bigl[
              \|s(\bX_t, t)\|_2^2 + \|s'(\bX_t, t)\|_2^2 
            \bigr]
          \odt 
          +
          4D
          \int_{\tdown}^{\tup}\sigma_t^{-2}\odt
          % \Bigl(\tup - \tdown + \log\frac{\sigma_{\tup}}{\sigma_{\tdown}}\Bigr)
        \Bigr)
        \int_{\tdown}^{\tup}
          \E_{\bX_t|\bX_0=\bx}
          \Bigl[
            \bigl\|
              s(\bX_t, t)  
              -
              s'(\bX_t, t)
            \bigr\|_2^2
          \Bigr]
        \odt. 
    \end{align*}
\end{proof}

\begin{lemma}[Uniform boundedness of clipped denoising losses]
\label{thrm:est:loss-bounded}
Fix $0<\tdown<\tup<\infty$.  Let $s:\R^D\times[\tdown,\tup]\to\R^D$ be Borel and assume that
\[
    \sup_{\bx\in\R^D}\|s(\bx,t)\|_\infty
    \le
    C_{\rm clip}\sigma_t^{-1}(D\vee\log n)^{1/2},
    \qquad t\in[\tdown,\tup].
\]
Then
\[
    \sup_{\bx\in\R^D}|\ell_{\tdown,\tup}(s,\bx)|
    \le
    C(1+C_{\rm clip}^2)
    (D\vee\log n)^2
    \int_{\tdown}^{\tup}\sigma_t^{-2}\odt .
\]
Consequently, if $s$ and $s'$ both satisfy the same clipped envelope, then
\[
    \sup_{\bx\in\R^D}|\ell_{\tdown,\tup}(s,\bx)-\ell_{\tdown,\tup}(s',\bx)|
    \le
    C(1+C_{\rm clip}^2)
    (D\vee\log n)^2
    \int_{\tdown}^{\tup}\sigma_t^{-2}\odt .
\]
\end{lemma}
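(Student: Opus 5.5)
The plan is to bound the loss pointwise by expanding the square in the conditional Gaussian representation. Under $\bX_0=\bx$ we write $\bX_t=m_t\bx+\sigma_t\bZ$ with $\bZ\sim\gN(\bzero,\bI_D)$. The denoising target is then $\nabla\log p_t(\bX_t|\bX_0=\bx)=-\bZ/\sigma_t$, so
\[
    \ell_{\tdown,\tup}(s,\bx)
    =
    \int_{\tdown}^{\tup}
      \E_{\bZ}\bigl\|s(m_t\bx+\sigma_t\bZ,t)+\sigma_t^{-1}\bZ\bigr\|_2^2
    \odt .
\]
The inequality $\|a+b\|_2^2\le 2\|a\|_2^2+2\|b\|_2^2$ splits the integrand into two pieces, and we bound each separately.

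For the score piece, the clipped envelope gives $\|s(\by,t)\|_2^2\le D\|s(\by,t)\|_\infty^2\le C_{\rm clip}^2D(D\vee\log n)\sigma_t^{-2}$ uniformly in $\by$. Hence this piece is at most $C_{\rm clip}^2(D\vee\log n)^2\sigma_t^{-2}$, using $D\le D\vee\log n$. For the noise piece, $\E\|\bZ\|_2^2=D$, so it contributes exactly $D\sigma_t^{-2}\le(D\vee\log n)^2\sigma_t^{-2}$. Integrating in $t$ over $[\tdown,\tup]$ gives
\[
    0\le\ell_{\tdown,\tup}(s,\bx)\le C(1+C_{\rm clip}^2)(D\vee\log n)^2\int_{\tdown}^{\tup}\sigma_t^{-2}\odt ,
\]
uniformly in $\bx$. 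Measurability of the integrand follows from Borel measurability of $s$ together with Tonelli's theorem, since the integrand is nonnegative.

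For the difference statement, both losses are nonnegative and each is bounded above by the same quantity. Therefore $|\ell(s,\bx)-\ell(s',\bx)|\le\max\{\ell(s,\bx),\ell(s',\bx)\}$, and the first bound applies with the same constant.

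There is no real obstacle in this argument. The only points needing care are the conversion from the coordinatewise $\ell_\infty$ clipping to an $\ell_2$ bound, which costs the factor $D$ and accounts for the $(D\vee\log n)^2$ power, and checking that the target-norm contribution $D\sigma_t^{-2}$ is absorbed into the same envelope.
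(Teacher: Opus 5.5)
Your proposal is correct and follows essentially the same route as the paper: you write the conditional target explicitly, bound the clipped score in $\ell_2$ at the cost of a factor $D$, use $\E\|\bZ\|_2^2=D$ for the target, split with $\|a+b\|_2^2\le2\|a\|_2^2+2\|b\|_2^2$, and integrate in $t$. The only difference is in the consequence: you use nonnegativity of the losses, $|\ell(s,\bx)-\ell(s',\bx)|\le\max\{\ell(s,\bx),\ell(s',\bx)\}$, where the paper uses the triangle inequality, which changes only the constant.
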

\begin{proof}
For the VP process,
$\bX_t\mid\bX_0=\bx\sim\gN(m_t\bx,\sigma_t^2\bI_D)$, and
\[
    \nabla\log p_t(\bX_t\mid\bX_0=\bx)
    =
    -\frac{\bX_t-m_t\bx}{\sigma_t^2}.
\]
The clipped envelope gives
\[
    \|s(\by,t)\|_2^2
    \le
    D C_{\rm clip}^2\sigma_t^{-2}(D\vee\log n)
    \le
    C C_{\rm clip}^2 (D\vee\log n)^2\sigma_t^{-2}.
\]
Also
\[
    \E_{\bX_t|\bX_0=\bx}
    \left\|
      \frac{\bX_t-m_t\bx}{\sigma_t^2}
    \right\|_2^2
    =
    D\sigma_t^{-2}
    \le
    (D\vee\log n)^2\sigma_t^{-2}.
\]
Using $\|a+b\|_2^2\le2\|a\|_2^2+2\|b\|_2^2$ inside the definition of
$\ell_{\tdown,\tup}(s,\bx)$ and integrating over $t$ proves the first display.  The second follows
from the triangle inequality applied to $\ell_{\tdown,\tup}(s,\bx)-\ell_{\tdown,\tup}(s',\bx)$.
\end{proof}

\subsubsection{Bernstein's inequality}

\begin{theorem}[Bernstein's inequality for bounded distributions~\cite{vershynin2018high}]\label{thrm:est:Bernstein-ineq}
    Let $X_1, \dots, X_n$ be independent random variables such that $|X_i| \leq K$ for all $i \in [n]$.
    Then, for every $t \geq 0$, we have
    \begin{equation*}
        \Pr\Bigl[
          \Bigl|
            \sum_{i=1}^n
              (X_i - \E[X_i])
          \Bigr|
          \geq t
        \Bigr]
        \leq 
        2\exp\Bigl(
          -\frac{t^2/2}{\sum_{i=1}^n\E[X_i^2] + Kt/3}
        \Bigr). 
    \end{equation*}

    In other words, with probability at least $1-\delta$, it holds that
    \begin{equation*}
        \Bigl|
        \frac{1}{n}
        \sum_{i=1}^n
          (X_i - \E[X_i])
        \Bigr|
        \leq 
        \frac{t}{n}
        \leq 
        \frac{2K\log(2/\delta)}{3n}
        +
        \sqrt{
        \frac{\frac{2}{n}\sum_{i=1}^n\E[X_i^2]\log(2/\delta)}{n}
        }. 
    \end{equation*}
\end{theorem}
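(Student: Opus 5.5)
This is the standard Bernstein inequality for bounded independent variables, so I will prove it by the Chernoff method and then invert the tail bound. Set $Y_i:=X_i-\E[X_i]$, so $|Y_i|\le 2K$; by a standard refinement one may work with $|Y_i|\le K$ after centering, or simply carry the constant. Write $v:=\sum_i\E[X_i^2]\ge\sum_i\E[Y_i^2]$. For $\lambda\in(0,3/K)$, expand $e^{\lambda Y_i}=1+\lambda Y_i+\sum_{k\ge2}\lambda^kY_i^k/k!$. Using $|Y_i|^{k}\le Y_i^2K^{k-2}$ and $k!\ge 2\cdot 3^{k-2}$ gives
\[
\E e^{\lambda Y_i}\le 1+\frac{\lambda^2\E Y_i^2}{2}\sum_{k\ge0}\Bigl(\frac{\lambda K}{3}\Bigr)^k\le\exp\Bigl(\frac{\lambda^2\E Y_i^2}{2(1-\lambda K/3)}\Bigr).
\]

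By independence and Markov's inequality,
\[
\Pr\Bigl[\sum_iY_i\ge t\Bigr]\le\exp\Bigl(-\lambda t+\frac{\lambda^2v}{2(1-\lambda K/3)}\Bigr).
\]
Choose $\lambda=t/(v+Kt/3)$, which lies in $(0,3/K)$. Then $1-\lambda K/3=v/(v+Kt/3)$, and the exponent becomes $-t^2/(2(v+Kt/3))$. Applying the same bound to $-Y_i$ and taking a union bound yields the factor $2$ and the first display. The only delicate point is the centering constant in the range bound. The moment bound $\E Y_i^2\le\E X_i^2$ is immediate, and the range bound on $Y_i$ is taken from the cited source's convention.

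For the high-probability form, set the right-hand side equal to $\delta$, so that $t^2/2=\log(2/\delta)(v+Kt/3)$. This is a quadratic in $t$ whose positive root satisfies
\[
t\le\frac{2K\log(2/\delta)}{3}+\sqrt{2v\log(2/\delta)},
\]
using $\sqrt{a^2+b}\le a+\sqrt b$. Dividing by $n$ gives the stated bound. The main obstacle is the constant bookkeeping in the moment-generating-function step, which is routine.
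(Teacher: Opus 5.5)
Your Chernoff scaffolding, the choice $\lambda=t/(v+Kt/3)$, and the inversion of the tail bound are all fine. The paper gives no proof of this statement; it imports it from Vershynin, whose version assumes $\E X_i=0$. The gap is in the moment-generating-function step. After centering you only know $|Y_i|\le 2K$, and the inequality $|Y_i|^k\le Y_i^2K^{k-2}$ that you then apply is false in general. There is no ``standard refinement'' that restores $|Y_i|\le K$, and appealing to the cited source's convention silently adds a mean-zero hypothesis that this statement does not impose. For instance, take $X_i\in\{-K,K\}$ with $\Pr[X_i=K]=\epsilon$. Then $Y_i$ takes the value $2K(1-\epsilon)$, and $\E|Y_i|^3/\E Y_i^2=2K\{(1-\epsilon)^2+\epsilon^2\}\to 2K$. ``Carrying the constant'' instead yields the denominator $\sum_i\mathrm{Var}(X_i)+2Kt/3$. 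That is a different bound, and it does not imply the displayed one: it is weaker whenever $Kt/3>\sum_i(\E X_i)^2$.

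The missing idea is to avoid centering before you expand. For $\lambda>0$, the inequality $\log u\le u-1$ gives
\[
\log\E e^{\lambda Y_i}=\log\E e^{\lambda X_i}-\lambda\E X_i\le\E\bigl[e^{\lambda X_i}-1-\lambda X_i\bigr]=\sum_{k\ge2}\frac{\lambda^k\E[X_i^k]}{k!}.
\]
Your estimate $|X_i|^k\le X_i^2K^{k-2}$ is now legitimate, because it is applied to the uncentered variable, for which $|X_i|\le K$ really holds. Combined with $k!\ge2\cdot3^{k-2}$, this gives $\log\E e^{\lambda Y_i}\le\lambda^2\E[X_i^2]/\{2(1-\lambda K/3)\}$ for $0<\lambda<3/K$. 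This is why the statement uses the uncentered second moment $\E[X_i^2]$ rather than the variance; it is the uncentered form of Bernstein's inequality found in Boucheron--Lugosi--Massart. The lower tail follows by applying the same bound to $-X_i$, and the rest of your argument goes through verbatim. For the paper's only use of this result, in the oracle inequality where constants are absorbed into $C$, your factor-$2$ version would suffice. It does not, however, prove the theorem as stated.
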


\subsubsection{Covering number bounds}

\begin{lemma}[{\citep[Lemma~3]{suzuki2019adaptivity}}]\label{thrm:est:covering-number}
    For any input dimension $D_{\rm in}$ and any $\tau > 0$, the covering number of $\nn(L,\bW,S,B)$ on $[0,1]^{D_{\rm in}}$ can be bounded by
    \begin{equation*}
        \log\gN(\tau, \nn(L, \bW, S, B), \|\cdot\|_{L^{\infty}([0, 1]^{D_{\rm in}})})
        \lesssim
        SL\log\bigl(
          \tau^{-1}L(\|\bW\|_{\infty}+1)(B \vee 1)
        \bigr). 
    \end{equation*}
    In the score-estimation oracle $D_{\rm in}=D+1$, and the difference between
    $D_{\rm in}$ and $D$ is absorbed into the displayed polynomial ambient factors.
\end{lemma}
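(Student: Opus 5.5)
This is the covering-number bound of \citet[Lemma~3]{suzuki2019adaptivity}. I would reprove it by a parameter-discretization argument.

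\textbf{Step 1: fix the sparsity pattern.} Every network in $\nn(L,\bW,S,B)$ is specified by a choice of at most $S$ nonzero positions among the $P:=\sum_\ell(W_\ell W_{\ell-1}+W_\ell)\le L(\|\bW\|_\infty+1)^2$ parameters, together with the values at those positions, each lying in $[-B,B]$. The number of sparsity patterns is at most $\binom{P}{S}\le P^S$. Its logarithm is therefore at most $2S\log(L(\|\bW\|_\infty+1))$, which is within the claimed bound.

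\textbf{Step 2: parameter-Lipschitz continuity on $[0,1]^{D_{\rm in}}$.} Fix a pattern and let $f_\theta$ and $f_{\theta'}$ share it, with $\|\theta-\theta'\|_\infty\le\delta$. I would first bound the layer outputs inductively: $\|\bz_\ell\|_\infty\le(\|\bW\|_\infty+1)^\ell(B\vee1)^\ell$ for inputs in $[0,1]^{D_{\rm in}}$. Next, ReLU is $1$-Lipschitz, and each layer's affine map has $\ell_\infty$ operator norm at most $(\|\bW\|_\infty+1)(B\vee1)$. Telescoping over the layer at which $\theta$ is replaced by $\theta'$ then gives
\[
\|f_\theta-f_{\theta'}\|_{L^\infty}\le \delta\, L\bigl((\|\bW\|_\infty+1)(B\vee1)\bigr)^{L}.
\]

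\textbf{Step 3: grid and count.} Take $\delta=\tau\,L^{-1}((\|\bW\|_\infty+1)(B\vee1))^{-L}$. Placing a $\delta$-grid on each of the $S$ active values gives $(2B/\delta+1)^S$ points per pattern. Taking logarithms,
\[
S\log(2B/\delta+1)\lesssim S\bigl(\log\tau^{-1}+\log L+L\log((\|\bW\|_\infty+1)(B\vee1))\bigr)\lesssim SL\log\bigl(\tau^{-1}L(\|\bW\|_\infty+1)(B\vee1)\bigr).
\]
Adding the pattern count from Step~1 proves the claim.

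The only delicate point is the exponent-$L$ growth in Step~2. The bound survives only because this growth enters through a logarithm, which is what produces the factor $L$ multiplying $S$. The remark about $D_{\rm in}=D+1$ is immediate, because $D_{\rm in}$ affects only $W_0\le\|\bW\|_\infty$.
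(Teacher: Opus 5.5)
Your proposal is correct and follows the standard argument behind the cited result: a parameter-Lipschitz perturbation bound of order $\delta L\bigl((\|\bW\|_\infty+1)(B\vee1)\bigr)^{L}$, a count of at most $P^S$ sparsity patterns, and a $\delta$-grid on the $S$ active weights; the paper does not reprove this and simply imports it from Suzuki's Lemma~3. The only implicit assumption is $\tau\lesssim 1$, which Step~3 needs in order to absorb the additive constants into $L\log(\tau^{-1}L(\|\bW\|_\infty+1)(B\vee1))$, and this is harmless because the oracle uses $\tau=n^{-\Ord(1)}$.
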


\subsection{Auxiliary lemmas}
\label{sec:app:dist-est:lemma}

We use the following distribution-estimation lemmas from \citep{oko2023diffusion} as black boxes.
For the slabwise lemma we use the standard clipped-score form with coordinatewise clipping
at a fixed polynomial scale; \cref{sec:app:score-estimation} supplies the integrated score-error,
clipping, and network-size inputs for the learned scores.
The final $\sfW_1$ corollary is therefore unconditional as a perturbation theorem
given those score inputs.

\begin{lemma}[Early stopping for the VP forward law]
\label{thrm:est:dist:early-stop}
For the forward law $P_t$ and $P_0$ supported on $[0,1]^D$,
\begin{equation}
    \sfW_1(P_0,P_{t_0})
    \le
    \sqrt D\,\{1-m_{t_0}+\sigma_{t_0}\}.
    \label{eq:aux-early-stop}
\end{equation}
Consequently, under the regular VP schedule convention
\cref{eq:regular-vp-schedule}, $\sfW_1(P_0,P_{t_0})\le C\sqrt{D t_0}$ for
$t_0\le1$.
\end{lemma}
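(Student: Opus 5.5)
We will prove the bound with a synchronous coupling of the data point with its forward-noised version. Let $\bX_0\sim P_0$ and let $\bZ\sim\gN(\bzero,\bI_D)$ be independent of it. Define $\bX_{t_0}:=m_{t_0}\bX_0+\sigma_{t_0}\bZ$. This variable has law $P_{t_0}$ by the explicit Gaussian transition of \cref{eq:vp:forward}. The pair $(\bX_0,\bX_{t_0})$ is therefore a coupling in $\Gamma(P_0,P_{t_0})$. By the definition of $\sfW_1$ as an infimum over couplings,
\[
    \sfW_1(P_0,P_{t_0})\le \E\|\bX_{t_0}-\bX_0\|_2 .
\]

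Next we split the displacement as $\bX_{t_0}-\bX_0=-(1-m_{t_0})\bX_0+\sigma_{t_0}\bZ$ and apply the triangle inequality. Since $P_0$ is supported on $[0,1]^D$, we have $\|\bX_0\|_2\le\sqrt D$ almost surely. This gives $\E\|(1-m_{t_0})\bX_0\|_2\le(1-m_{t_0})\sqrt D$, using $0<m_{t_0}\le1$. For the noise term, Jensen's inequality gives $\E\|\bZ\|_2\le(\E\|\bZ\|_2^2)^{1/2}=\sqrt D$. Adding the two pieces yields \cref{eq:aux-early-stop}.

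For the consequence, we insert the regular VP schedule bounds \cref{eq:regular-vp-schedule}. For $t_0\le1$ these give $1-m_{t_0}\le C_mt_0\le C_m\sqrt{t_0}$ and $\sigma_{t_0}\le\sqrt{C_\sigma t_0}$. Hence $\sfW_1(P_0,P_{t_0})\le(C_m+\sqrt{C_\sigma})\sqrt{Dt_0}$. There is no real obstacle in this argument. The only points needing care are recording the support bound as $\|\bX_0\|_2\le\sqrt D$ rather than a coordinatewise bound, and using $t_0\le\sqrt{t_0}$ on $[0,1]$.
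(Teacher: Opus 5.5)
Your proposal is correct and follows essentially the same route as the paper. Both use the synchronous coupling $\bX_{t_0}=m_{t_0}\bX_0+\sigma_{t_0}\bZ$, bound the displacement by $(1-m_{t_0})\sqrt D+\sigma_{t_0}\sqrt D$ via $\|\bX_0\|_2\le\sqrt D$ and $\E\|\bZ\|_2\le\sqrt D$, and then insert the schedule bounds from \cref{eq:regular-vp-schedule}. You also spell out the small steps the paper leaves implicit, namely $t_0\le\sqrt{t_0}$ and $m_{t_0}\le1$.
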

\begin{proof}
Couple $\bX_0\sim P_0$ with $\bX_{t_0}=m_{t_0}\bX_0+\sigma_{t_0}\bZ$ using an
independent $\bZ\sim\gN(0,I_D)$.  Since $\gM\subset[0,1]^D$,
$\|\bX_0\|_2\le\sqrt D$ almost surely and $\E\|\bZ\|_2\le\sqrt D$.  Therefore
\[
    \sfW_1(P_0,P_{t_0})
    \le
    \E\|\bX_{t_0}-\bX_0\|_2
    \le
    (1-m_{t_0})\sqrt D+\sigma_{t_0}\sqrt D .
\]
The final display follows from
\cref{eq:regular-vp-schedule}.
\end{proof}

\begin{lemma}[Terminal Gaussian initialization with ambient factor]
\label{thrm:est:dist:Gau}
Let $\overline P_{t_0}$ be the reverse-process output law initialized from the exact forward terminal law $P_T$, and let $\widehat P_{t_0}$ be the corresponding output law initialized from the Gaussian proxy $\widehat P_T$.
Under the regular VP schedule convention \cref{eq:regular-vp-schedule},
\begin{equation}
    \E\bigl[\sfW_1(\overline P_{t_0},\widehat P_{t_0})\bigr]
    \le
    C\TV(P_T,\widehat P_T)
    \le
    C\sqrt D\,\exp(-c_{\rm mix}T).
    \label{eq:aux-gaussian-init}
\end{equation}
\end{lemma}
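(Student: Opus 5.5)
The plan is to split the claim into a forward-process mixing estimate and a data-processing step for the reverse dynamics. Both reverse processes, the one started at $P_T$ and the one started at $\widehat P_T=\gN(\bzero,\bI_D)$, are driven by the same drift and the same Brownian motion on $[0,T-t_0]$. The map from the initial law to the output law at time $t_0$ is therefore one Markov kernel applied to two different initial laws.

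\emph{Step 1 (data processing).} Since the two reverse processes share a Markov kernel $\mathsf K$, we have $\TV(\overline P_{t_0},\widehat P_{t_0})=\TV(P_T\mathsf K,\widehat P_T\mathsf K)\le\TV(P_T,\widehat P_T)$. To pass from $\TV$ to $\sfW_1$, we use a maximal coupling of the two initial laws.
\begin{itemize}
\item On the event where the initial points agree, which has probability at least $1-\TV(P_T,\widehat P_T)$, the synchronously coupled outputs coincide.
\item On the complementary event, the $\sfW_1$ cost is the conditional expected distance between the two outputs.
\end{itemize}
To keep the constant of the form $C$, as the statement requires, we bound that conditional distance. We use the clipped drift: $\|\widehat s\|_\infty\lesssim\sigma_t^{-1}(D\vee\log n)^{1/2}$ for the learned score. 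The linear OU part contracts, and the Gaussian increments contribute $\Ord(\sqrt D)$ moments. If the ambient factor must stay outside $C$, we instead truncate at a ball and use the $\TV$ bound there, accepting a $\sqrt D$ multiplier that is later absorbed into the second inequality.

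\emph{Step 2 (mixing).} This step bounds $\TV(P_T,\gN(\bzero,\bI_D))$. The law $P_T$ is the mixture over $\bX_0\sim P_0$ of $\gN(m_T\bX_0,\sigma_T^2\bI_D)$, so convexity of $\TV$ and Pinsker's inequality give
\[
\TV\le\sup_{\bx\in[0,1]^D}\sqrt{\tfrac12\KL\bigl(\gN(m_T\bx,\sigma_T^2\bI_D)\,\|\,\gN(\bzero,\bI_D)\bigr)}.
\]
The Gaussian KL equals $\tfrac12\bigl(D\sigma_T^2-D-D\log\sigma_T^2+m_T^2\|\bx\|_2^2\bigr)$. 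Using $\sigma_T^2=1-m_T^2$, we have $-\log(1-m_T^2)-m_T^2=\Ord(m_T^4)$, and $\|\bx\|_2^2\le D$. Together these give $\KL\lesssim Dm_T^2$. The regular schedule bound \cref{eq:regular-vp-schedule}, $m_T\le C_{\rm mix}e^{-c_{\rm mix}T}$, then yields $\TV\le C\sqrt D e^{-c_{\rm mix}T}$.

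The main obstacle is Step 1: converting the $\TV$ contraction into a $\sfW_1$ bound whose constant does not depend on $D$. The hard part is controlling the output displacement on the decoupled event. If a $D$-free second-moment bound for the reverse outputs is not available, I would settle for the weaker $\sqrt D\,\TV$ form. This only changes $C\sqrt D$ into $CD$ in front of $e^{-c_{\rm mix}T}$. That term is still negligible when $T=c_T\log n$ with $c_T$ large, which is all that \cref{cor:w1-rate-density-lower} uses.
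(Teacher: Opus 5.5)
The real gap is in Step~1, the inequality $\E\,\sfW_1(\overline P_{t_0},\widehat P_{t_0})\le C\,\TV(P_T,\widehat P_T)$. It is more basic than the $D$-dependence of $C$ that you flag.

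The data-processing step for $\TV$ is fine. The conversion to $\sfW_1$, however, rests on the claim that ``the linear OU part contracts'', and that is false for the reverse dynamics. The drift is $\alpha(T-t)(\widehat{\bY}_t+2\widehat s)$, so the linear part is expansive.

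To see what this does, take two synchronously coupled copies and set $\bdelta_t:=\bY^1_t-\bY^2_t$. It satisfies $\od\bdelta_t=\alpha(T-t)\bdelta_t\,\odt+2\alpha(T-t)\{\widehat s(\bY^1_t,T-t)-\widehat s(\bY^2_t,T-t)\}\odt$. The homogeneous part amplifies distances by $\exp(\int_{t_0}^T\alpha)=m_{t_0}/m_T$.

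The clipping bound on $\widehat s$ carries no sign or Lipschitz information, so it cannot offset this amplification. Gr\"onwall then only gives a displacement of order $m_T^{-1}\{\|\bdelta_0\|+\sqrt D(D\vee\log n)^{1/2}\}$ on the decoupled event. Multiplying by the probability of that event, $\TV(P_T,\widehat P_T)\lesssim\sqrt D\,m_T$, cancels the factor $m_T$ entirely, so the bound does not decay in $T$ at all.

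The truncation fallback fails for the same reason. It presupposes that both output laws put negligible mass outside a ball of radius $\Ord(\sqrt D)$, but the only tail bound obtainable from clipping also carries the factor $m_T^{-1}$. So neither the stated inequality nor your weakened $CDe^{-c_{\rm mix}T}$ form follows from what you wrote. A genuine argument must use that the score drift cancels the expansion (for the true score at large $t$, $\by+2\nabla\log p_t(\by)\approx-\by$), and clipping alone cannot provide this.

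The paper does not derive this step either. It imports the first inequality as the terminal perturbation bound of \citep[Lemma~D.6]{oko2023diffusion}. It proves only the mixing part, exactly as in your Step~2: condition on $\bX_0$, bound the Gaussian $\KL$ by $Dm_T^2$, apply Pinsker, then use convexity over the mixture.

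Your Step~2 is missing the paper's case split. The expansion $-\log(1-m_T^2)-m_T^2=\Ord(m_T^4)$, and hence $\KL\lesssim Dm_T^2$, holds only for $m_T$ bounded away from $1$; as $m_T\to1$ we have $\sigma_T^2\to0$ and $-D\log\sigma_T^2$ blows up. For $m_T>1/2$ use instead $\TV\le1\le 2m_T$.

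If you want a self-contained version of your fallback, post-compose the sampler with the $1$-Lipschitz coordinatewise clipping $\Pi_{[0,1]^D}$. Since $P_0$ is invariant under this map, no $\sfW_1$ term in the telescoping increases. On a set of diameter $\sqrt D$ one has $\sfW_1\le\sqrt D\,\TV$, and data processing then gives a bound of $\sqrt D\,\TV(P_T,\widehat P_T)\le CDe^{-c_{\rm mix}T}$ for the clipped laws. This, however, concerns a modified estimator and does not prove the lemma as stated.
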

\begin{proof}
The first inequality is the terminal perturbation bound from
\citep[Lemma~D.6]{oko2023diffusion}.  For the second, condition on
$\bX_0=\by$.  The terminal conditional law is
$\gN(m_T\by,\sigma_T^2 I_D)$ with $\sigma_T^2=1-m_T^2$, while
$\widehat P_T=\gN(0,I_D)$.  When $m_T\le1/2$, the Gaussian KL divergence is
bounded by $C D m_T^2$ because $\|\by\|_2\le\sqrt D$ and
$\sigma_T^2=1-m_T^2$.  Pinsker's inequality gives a
$C\sqrt D\,m_T$ bound for each conditional law.  When $m_T>1/2$, the same bound
follows after increasing $C$ from the trivial estimate $\TV\le1$.  Convexity over
the mixture gives $\TV(P_T,\widehat P_T)\le C\sqrt D\,m_T$.  The regular VP terminal decay
$m_T\le C_{\rm mix}e^{-c_{\rm mix}T}$ proves the claim.
\end{proof}

\begin{lemma}[Slabwise $\sfW_1$ perturbation, after {\citep[Lemma~D.7]{oko2023diffusion}}]\label{thrm:est:dist:W1-SM}
    Fix a partition $[t_0,T]=\bigcup_{k=1}^{K_*}[t_{k-1},t_k]$.  Let
    $\overline P_{t_0}^{(k)}$ denote the reverse-process output law initialized from the
    exact terminal law $P_T$, using the learned score on the first $k$ reverse-time
    slabs and the true score on the remaining slabs.  Thus
    $\overline P_{t_0}^{(0)}=\overline P_{t_0}$ is the exact-score reverse law and
    $\overline P_{t_0}^{(K_*)}$ is the learned-score reverse law before replacing
    $P_T$ by the Gaussian proxy.  The increment
    $\sfW_1(\overline P_{t_0}^{(k-1)},\overline P_{t_0}^{(k)})$ is therefore precisely
    the perturbation caused by using the learned score on the single slab
    $[t_{k-1},t_k]$.
    Suppose that, for a fixed exponent $a_{\rm clip}$, the score used by the reverse process satisfies
    \[
        \|\widehat{s}(\cdot,t)\|_\infty
        \le
        C_{\rm clip}\sigma_t^{-1}(D\vee\log n)^{a_{\rm clip}} .
    \]
    Then, the following holds for all $k = 1, 2, \dots, K_*$:
    \begin{equation*}
        \sfW_1(\overline{P}_{t_0}^{(k-1)}, \overline{P}_{t_0}^{(k)}) 
        \le
        C(D\vee\log n)^{a_{\rm W}}
        \Biggl\{
        \sqrt{t_k\log n}
        \sqrt{
          \int_{t=t_{k-1}}^{t_k}
            \E_{\bx_t}
            \|\widehat{s}(\bx_t, t) - \nabla\log p_t(\bx_t)\|_2^2
          \odt 
        }
        +
        n^{-\frac{\beta+1}{d+2\beta}}
        \Biggr\}.
    \end{equation*}

    Therefore, we have that
    \begin{align*}
        &
        \E_{\{\bx^{(i)}\}_{i=1}^n}
        \sfW_1(\overline{P}_{t_0}^{(k-1)}, \overline{P}_{t_0}^{(k)})
        \\
        &\le
        C(D\vee\log n)^{a_{\rm W}}
        \Biggl\{
        \sqrt{t_k\log n}
        \Bigl(
        \E_{\{\bx^{(i)}\}_{i=1}^n}
          \int_{t=t_{k-1}}^{t_k}
            \E_{\bx_t}\bigl[
              \|\widehat{s}(\bx_t, t) - \nabla\log p_t(\bx_t)\|_2^2
            \bigr]
          \odt 
        \Bigr)^{1/2}
        +
        n^{-\frac{\beta+1}{d+2\beta}}
        \Biggr\}.
    \end{align*}
    The constants $C,a_{\rm W}$ may depend on the fixed clipping exponent
    $a_{\rm clip}$, but not on $n,D,t_k$, or $p_{\min}$.
\end{lemma}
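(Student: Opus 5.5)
This statement is imported essentially as a black box from \citep[Lemma~D.7]{oko2023diffusion}. The plan is to reproduce the structure of that argument in the present clipped setting and to track only the ambient factors. Fix a slab $[t_{k-1},t_k]$. The laws $\overline P_{t_0}^{(k-1)}$ and $\overline P_{t_0}^{(k)}$ both start from the exact terminal law $P_T$ and use the same score outside this slab. They can therefore be coupled synchronously, with the same Brownian path and the same initial point, so the two reverse trajectories coincide until reverse time $T-t_k$. On the slab they are driven by $\widehat s$ and $s^*$ respectively. Below the slab both use the true score $s^*$ on $[t_0,t_{k-1}]$, so the discrepancy created on the slab is then transported by the exact reverse flow.

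For the slab itself, the plan is to bound the law discrepancy at reverse time $T-t_{k-1}$ in total variation using Girsanov (or a KL chain rule). This gives
\[
\KL\lesssim\int_{t_{k-1}}^{t_k}\E_{\bX_t}\|\widehat s-s^*\|_2^2\,\odt,
\]
since $\alpha$ is bounded and the true reverse marginal at time $t$ is $P_t$. Pinsker then converts this into total variation with a square root.

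The transport step below the slab is where the factor $\sqrt{t_k\log n}$ should come from. Run the exact reverse process from time $t_{k-1}$ down to $t_0$. Its output given the state at $t_{k-1}$ is the posterior of $\bX_{t_0}$ given $\bX_{t_{k-1}}$, and this is concentrated within $\tilde O(\sigma_{t_{k-1}}\sqrt{D\vee\log n})\lesssim\sqrt{t_k}\,(D\vee\log n)^{C}$ of $m_{t_0}\bX_0$-type conditional means. I would follow Oko et al.: write $\sfW_1\le\E\|\bY-\bY'\|$ under a maximal TV coupling at time $t_{k-1}$, and split on the high-probability event that both endpoints lie in the tube $A_{t_{k-1}}$. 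On that event, the two conditional output laws are both supported near $m_{t_0}\gM\subset[0,1]^D$. The conditional $\sfW_1$ distance is then bounded by the typical displacement $\sqrt{t_k}\,\mathrm{polylog}$ from the diffusion footprint, plus the diameter contribution weighted by the TV mass. This is the step I expect to be the main obstacle. A naive bound multiplies TV by the diameter $\sqrt D$ and loses the $\sqrt{t_k}$ gain. Recovering it requires comparing the two outputs through the posterior means $\E[\bX_{t_0}\mid\bX_{t_{k-1}}]$ together with their Lipschitz behaviour in the conditioning point. I would take Oko et al.'s argument essentially verbatim here and absorb all $D$-dependence into $(D\vee\log n)^{a_{\rm W}}$.

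The remaining pieces are tails. Off-tube events and the regions where the clipped score is large have probability $n^{-A}$ by \cref{lem:off-manifold:deviation-lemma}. The clipping envelope $\|\widehat s\|_\infty\le C\sigma_t^{-1}(D\vee\log n)^{a_{\rm clip}}$ keeps the learned trajectory moments polynomially bounded on these events. Together these give the additive $n^{-(\beta+1)/(d+2\beta)}$ term, with any polynomial $n^{-A}$ being dominated by it.

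The expectation form of the bound follows from Jensen's inequality applied to $\sqrt{\cdot}$ over the training sample.
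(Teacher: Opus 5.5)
The paper does not prove this lemma; it imports \citep[Lemma~D.7]{oko2023diffusion} as a black box, so deferring to that source matches the paper. The sketch you build around the citation, however, would not reproduce the bound.

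\textbf{The hybrid runs the wrong way.} In the paper's indexing, $\overline P_{t_0}^{(k)}$ uses the learned score on $[t_0,t_k]$ and the true score on $[t_k,T]$. You can see this in how \cref{cor:w1-rate-density-lower} uses the lemma: $\overline P_{t_0}^{(1)}$ swaps in the small-noise slab $[t_0,t_1]$ first. So in the $k$-th increment both processes run $s^*$ \emph{above} the slab and the clipped $\widehat s$ \emph{below} it. This is exactly what your Girsanov step needs. The law entering the slab is then exactly $P_{t_k}$, and the reference process on the slab has marginals $P_t$, so $\KL\lesssim\int_{t_{k-1}}^{t_k}\E_{P_t}\|\widehat s-s^*\|_2^2\odt$. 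In your setup ($s^*$ below the slab, hence $\widehat s$ above it), the law entering the slab is a learned-process marginal. Your claim that the reverse marginal on the slab is $P_t$ then fails, and the KL becomes an expectation under a law for which no score-risk bound is available. Correspondingly, the transport below the slab is the learned clipped dynamics, not the forward posterior kernel. Controlling its displacement is precisely what the clipping hypothesis is for.

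\textbf{The mechanism for the $\sqrt{t_k}$ gain is missing.} Your proposed route does not work. A maximal TV coupling of the two time-$t_{k-1}$ marginals need not pair nearby points. Lipschitz control of posterior means in the conditioning point is neither available with useful constants nor sufficient to bound $\sfW_1$ of the conditional laws.

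The missing idea is to condition on the common state $\bz$ at the top of the slab, where both processes still coincide. Given $\bz$, both output laws $\mu_1(\bz),\mu_2(\bz)$ lie within $\rho\asymp\sqrt{t_k\log n}\,(D\vee\log n)^{C}$ of $\bz$, up to $n^{-A}$ tails. The contributions are as follows.
\begin{itemize}
\item The clipped drift contributes at most $C\sqrt D(D\vee\log n)^{a_{\rm clip}}\int_0^{t_k}\sigma_t^{-1}\odt$. This is of order $\sqrt{t_k\log n}$ times the envelope, since $\sigma_t^2\asymp t\wedge1$ and $t_k\le T\lesssim\log n$.
\item The Brownian and linear-drift parts are $\tilde{\Ord}(\sqrt{Dt_k})$.
\item The true-score piece on the slab moves the state by $\tilde{\Ord}(\sqrt{Dt_k})$, by the forward representation $\bX_{t_k}=m\bX_{t_{k-1}}+\sigma\bZ$.
\end{itemize}
Hence $\sfW_1(\mu_1(\bz),\mu_2(\bz))\le\int\|\bx-\bz\|\,\od|\mu_1(\bz)-\mu_2(\bz)|(\bx)\le2\rho\,\TV(\mu_1(\bz),\mu_2(\bz))$ plus tails. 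Data processing bounds this TV by the conditional TV of the slab paths, since the continuation below the slab is the same kernel for both. Pinsker, the conditional Girsanov identity, and Jensen over $\bz\sim P_{t_k}$ then give the first term of the lemma. The tails, controlled through the clipping envelope, give the additive $n^{-(\beta+1)/(d+2\beta)}$ term.
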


%%%%%%%%%%%%%%%%%%%%%%%%%%%%%%%%%%%%%%%%%%%%%%%%%%%%%%%%%%%%

\end{document}